\newcommand{\arxiv}[1]{\iftoggle{colt}{}{#1}}
\newcommand{\loose}{\looseness=-1}
\newcommand{\multiline}[1]{\parbox[t]{\dimexpr\linewidth-\algorithmicindent}{#1}}
\newcommand{\neutralize}[1]{\expandafter\let\csname c@#1\endcsname\count@}
\declaretheorem[name=Theorem,parent=section]{theorem}
\declaretheorem[name=Lemma,parent=section]{lemma}
\declaretheorem[name=Assumption, parent=section]{assumption}
\declaretheorem[name=Definition, parent=section]{definition}
\declaretheorem[name=Corollary, parent=section]{corollary}
\declaretheorem[name=Remark, parent=section]{remark}
\declaretheorem[name=Proposition, parent=section]{proposition}
\declaretheorem[name=Conjecture, parent=section]{conjecture}
    \let\Cref\crtCref
    \let\cref\crtcref
  \newcommand{\creftitle}[1]{\crtcref{#1}}
  \renewenvironment{proof}[1][Proof]%
  {%
   \par\noindent{\bfseries\upshape {#1.}\ }%
  }%
  {\qed\newline}
\newcommand{\sssref}[1]{\texorpdfstring{\hyperref[#1]{\mbox{Section \ref*{#1}}}}{Section \ref*{#1}}}
\xpatchcmd{\proof}{\itshape}{\normalfont\proofnameformat}{}{}
\newcommand{\proofnameformat}{\bfseries}
\newcommand{\pfref}[1]{Proof of \cref{#1}}
\renewcommand{\eqref}[1]{\texorpdfstring{\hyperref[#1]{(\ref*{#1})}}{(\ref*{#1})}}
\Crefname{assumption}{Assumption}{Assumptions}
\crefname{fact}{Fact}{Facts}
\DeclareDocumentCommand{\XDeclarePairedDelimiter}{mm}
 {
  \__egreg_delimiter_clear_keys: %
  \keys_set:nn { egreg/delimiters } { #2 }
  \use:x %
   {
    \exp_not:n {\NewDocumentCommand{#1}{sO{}m} }
     {
      \exp_not:n { \IfBooleanTF{##1} }
       {
        \exp_not:N \egreg_paired_delimiter_expand:nnnn
         { \exp_not:V \l_egreg_delimiter_left_tl }
         { \exp_not:V \l_egreg_delimiter_right_tl }
         { \exp_not:n { ##3 } }
         { \exp_not:V \l_egreg_delimiter_subscript_tl }
       }
       {
        \exp_not:N \egreg_paired_delimiter_fixed:nnnnn 
         { \exp_not:n { ##2 } }
         { \exp_not:V \l_egreg_delimiter_left_tl }
         { \exp_not:V \l_egreg_delimiter_right_tl }
         { \exp_not:n { ##3 } }
         { \exp_not:V \l_egreg_delimiter_subscript_tl }
       }
     }
   }
 }
\XDeclarePairedDelimiter{\supnorm}{
  left=\lVert,
  right=\rVert,
  subscript=\infty
  }
\newcommand{\x}{\bm{x}}
\DeclareFontFamily{U}{jkpmia}{}
\DeclareFontShape{U}{jkpmia}{m}{it}{<->s*jkpmia}{}
\DeclareFontShape{U}{jkpmia}{bx}{it}{<->s*jkpbmia}{}
\DeclareMathAlphabet{\mathfrak}{U}{jkpmia}{m}{it}
\SetMathAlphabet{\mathfrak}{bold}{U}{jkpmia}{bx}{it}
\DeclarePairedDelimiter{\abs}{\lvert}{\rvert} %
\DeclarePairedDelimiter{\brk}{[}{]}
\DeclarePairedDelimiter{\crl}{\{}{\}}
\DeclarePairedDelimiter{\prn}{(}{)}
\DeclarePairedDelimiter{\nrm}{\|}{\|}
\DeclarePairedDelimiter{\tri}{\langle}{\rangle}
\DeclarePairedDelimiter{\ceil}{\lceil}{\rceil}
\let\Pr\undefined
\let\P\undefined
\DeclareMathOperator{\En}{\mathbb{E}}
\DeclareMathOperator{\P}{P}
\DeclareMathOperator{\Pr}{Pr}
\DeclareMathOperator*{\argmin}{arg\,min} %
\DeclareMathOperator*{\argmax}{arg\,max}
\newcommand{\mb}[1]{\boldsymbol{#1}}
\newcommand{\wt}[1]{\widetilde{#1}}
\newcommand{\wh}[1]{\widehat{#1}}
\newcommand{\wb}[1]{\widebar{#1}}
\def\ddefloop#1{\ifx\ddefloop#1\else\ddef{#1}\expandafter\ddefloop\fi}
\def\ddef#1{\expandafter\def\csname bb#1\endcsname{\ensuremath{\mathbb{#1}}}}
\def\ddefloop#1{\ifx\ddefloop#1\else\ddef{#1}\expandafter\ddefloop\fi}
\def\ddef#1{\expandafter\def\csname b#1\endcsname{\ensuremath{\mathbf{#1}}}}
\def\ddef#1{\expandafter\def\csname sf#1\endcsname{\ensuremath{\mathsf{#1}}}}
\def\ddef#1{\expandafter\def\csname c#1\endcsname{\ensuremath{\mathcal{#1}}}}
\def\ddef#1{\expandafter\def\csname h#1\endcsname{\ensuremath{\widehat{#1}}}}
\def\ddef#1{\expandafter\def\csname hc#1\endcsname{\ensuremath{\widehat{\mathcal{#1}}}}}
\def\ddef#1{\expandafter\def\csname t#1\endcsname{\ensuremath{\widetilde{#1}}}}
\def\ddef#1{\expandafter\def\csname tc#1\endcsname{\ensuremath{\widetilde{\mathcal{#1}}}}}
\def\ddefloop#1{\ifx\ddefloop#1\else\ddef{#1}\expandafter\ddefloop\fi}
\def\ddef#1{\expandafter\def\csname scr#1\endcsname{\ensuremath{\mathscr{#1}}}}
\newcommand{\ind}{\mathbbm{1}}    %
\newcommand{\veps}{\varepsilon}
\newcommand{\ldef}{\vcentcolon=}
\newcommand{\rdef}{=\vcentcolon}
\newcommand{\cXspan}{\cX_{\texttt{span}}}
\newcommand{\Trounds}{T_{\mathsf{prompt}}}
\newcommand{\pistarb}{\pistar_{\beta}}
\newcommand{\pistarbh}{\pistar_{h,\beta}}
\newcommand{\pirefh}[1][h]{\pi_{h,\texttt{ref}}}
\newcommand{\Sigmafull}{\Sigma_{\texttt{full}}}
\newcommand{\Sigmaexp}{\Sigma_{\texttt{exp}}}
\newcommand{\pibt}{\pibar_{\theta\ind{t}}}
\newcommand{\Tdatafull}{\Tdata(\veps,\delta)}
\newcommand{\Tdatafullsq}{\Tdata^2(\veps,\delta)}
\newcommand{\Tsamplefull}{\Tsample(\veps,\delta)}
\newcommand{\Tcompfull}{\Tsamplefull}
\newcommand{\Tdata}{\ndata}
\newcommand{\Tsample}{\nsample}
\newcommand{\Tcomp}{T_{\texttt{comp}}}
\newcommand{\Tprompt}{T_{\texttt{prompt}}}
\newcommand{\nexp}{T_{\texttt{exp}}}
\newcommand{\nspan}{T_{\texttt{span}}}
\newcommand{\Tspan}{\nspan}
\newcommand{\Texp}{\nexp}
\newcommand{\Tcompa}{T^{\texttt{auto}}_{\texttt{comp}}}
\newcommand{\Tcompafull}{\Tcompa(\veps,\delta)}
\newcommand{\mtalg}{\texttt{MTSS}\xspace}%
\newcommand{\uncertsa}{\texttt{UncertainStateAction}\xspace}
\newcommand{\vepsspan}{\veps_{\texttt{span}}}
\newcommand{\vepsprompt}{\veps_{\texttt{prompt}}}
\newcommand{\pibtheta}[1][\theta]{\wb{\pi}_{#1}}
\newcommand{\pibst}[1][\thetastar]{\wb{\pi}_{#1}}
\newcommand{\phidel}{\varphi}
\newcommand{\phidelbar}{\wb{\phidel}}
\newcommand{\phidb}{\phidelbar}
\newcommand{\piref}{\pi_{\texttt{ref}}}
\newcommand{\termi}{\mathrm{I}}
\newcommand{\termii}{\mathrm{II}}
\newcommand{\termiii}{\mathrm{III}}
\newcommand{\Zgood}{\cZ_{\texttt{good}}}
\newcommand{\cZgood}{\cZ_{\texttt{good}}}
\newcommand{\Jbbeta}{\wb{J}_{\beta}}
\newcommand{\Psispan}{\Psi_{\texttt{span}}}
\newcommand{\Sigmaspan}{\Sigma_{\texttt{span}}}
\newcommand{\Sigmas}{\Sigmaspan}
\newcommand{\cDspan}{\cD_{\texttt{span}}}
\newcommand{\cDexp}{\cD_{\texttt{exp}}}
\newcommand{\pihatf}{\pihat_f}
\newcommand{\cEconc}{\cE_{\texttt{conc}}}
\newcommand{\cEspan}{\cE_{\texttt{span}}}
\newcommand{\Mrej}{M_{\texttt{rej}}}
\newcommand{\deltarej}{\delta_{\texttt{rej}}}
\newcommand{\rbar}{\wb{r}}
\newcommand{\Id}{I_{d}}
\newcommand{\pistar}{\pi^{\star}}
\newcommand{\pihat}{\wh{\pi}}
\newcommand{\spanalg}{\texttt{SpannerSampling}\xspace}
\newcommand{\rejection}{\texttt{SoftmaxSampler}\xspace}
\newcommand{\rejectionargs}{\rejection_{\beta,M,\delta}(f\midsem x, \piref)}
\newcommand{\cEaccept}{\cE_{\texttt{accept}}}
\newcommand{\Zhat}{\wh{Z}}
\newcommand{\pif}{\pi_{f}}
\newcommand{\pifstar}{\pi_{\fstar}}
\newcommand{\Cinf}{C_{\infty}}
\newcommand{\Qstarb}{Q^{\star}_{\beta}}
\newcommand{\Ccov}{C_{\texttt{cov}}}
\newcommand{\framework}{sampling oracle\xspace}
\newcommand{\Jbeta}{J_{\beta}}
\newcommand{\ndata}{T_{\texttt{data}}}
\newcommand{\nsample}{T_{\texttt{comp}}}
\newcommand{\pitheta}{\pi_{\theta}}
\newcommand{\pist}{\pi_{\theta^{\star}}}
\newcommand{\thetast}{\theta^{\star}}
\newcommand{\ball}{\bbB_2(1)}
\newcommand{\yp}{y_{+}}
\newcommand{\ym}{y_{-}}
\newcommand{\Rmax}{R_{\texttt{max}}}
\newcommand{\rstar}{r^{\star}}
\newcommand{\pref}{\texttt{pref}}
\newcommand{\dpo}{\texttt{DPO}\xspace}
\newcommand{\onlinedpo}{\texttt{OnlineDPO}\xspace}
\newcommand{\xpo}{\texttt{XPO}\xspace}
\newcommand{\Qstar}{Q^\star}
\newcommand{\Unif}{\mathsf{Unif}}
\newcommand{\pibar}{\wb{\pi}}
  \newcommand{\afrak}{\mathfrak{a}}
  \newcommand{\ba}{\mathbf{a}}
  \newcommand{\bx}{\mathbf{x}}
  \newcommand{\br}{\mathbf{r}}
\newcommand{\vepsstat}{\veps_{\texttt{stat}}}
\renewcommand{\emptyset}{\varnothing}
\newcommand{\filt}{\mathscr{F}}
\newcommand{\mainalg}{\spanalg}
\newcommand{\M}[1]{^{{\scriptscriptstyle M}}}  %
\newcommand{\fstar}{f^{\star}}
\newcommand{\thetahat}{\wh{\theta}}
\newcommand{\thetastar}{\theta^{\star}}
\newcommand{\algcommentlight}[1]{\textcolor{blue!70!black}{\transparent{0.5}\footnotesize{\texttt{\textbf{//\hspace{2pt}#1}}}}}
\newcommand{\algcommentbig}[1]{\textcolor{blue!70!black}{\transparent{0.5}\footnotesize{\texttt{\textbf{/*
          #1~*/}}}}}
\newcommand{\midsem}{\,;}
\newcommand{\trn}{\top}
\newcommand{\psdgt}{\succ}
\newcommand{\approxleq}{\lesssim}
\newcommand{\approxgeq}{\gtrsim}
\newcommand{\fhat}{\wh{f}}
\renewcommand{\ind}[1]{^{{\scriptscriptstyle#1}}}
\newcommand{\bigoh}{O}
\newcommand{\bigoht}{\wt{O}}
\newcommand{\bigom}{\Omega}
\newcommand{\indic}{\mathbb{I}}
\renewcommand{\Pr}{\bbP}
\newcommand{\poly}{\mathrm{poly}}
\newcommand{\polylog}{\mathrm{polylog}}
\newcommand{\kl}[2]{D_{\mathsf{KL}}\prn*{#1\,\|\,#2}}
\newcommand{\Dkl}[2]{D_{\mathsf{KL}}\prn*{#1\,\|\,#2}}
\newcommand{\Dhel}[2]{D_{\mathsf{H}}\prn*{#1,#2}}
\newcommand{\Dhels}[2]{D^{2}_{\mathsf{H}}\prn*{#1,#2}}
\newcommand{\Dtv}[2]{D_{\mathsf{TV}}\prn*{#1,#2}}
\newcommand{\Ber}{\mathrm{Ber}}
\newcommand{\unif}{\mathrm{unif}}
\newcommand{\mathand}{\quad\text{and}\quad}
\def\multiset#1#2{\ensuremath{\left(\kern-.3em\left(\genfrac{}{}{0pt}{}{#1}{#2}\right)\kern-.3em\right)}}
\newcommand{\iid}{i.i.d.\xspace}
\renewcommand{\emptyset}{\varnothing}
\newcommand{\MA}{\mathcal{A}}
\newcommand{\norm}[1]{\left \lVert #1 \right \rVert}
\DeclareMathOperator*{\EE}{\mathbb{E}}
\newcommand{\RR}{\mathbb{R}}
\newcommand{\NN}{\mathbb{N}}
\newcommand{\st}{\star}
\newcommand{\Alg}{\mathtt{Alg}}
\newcommand{\MX}{\mathcal{X}}
\newcommand{\MB}{\mathcal{B}}
\newcommand{\MC}{\mathcal{C}}
\DeclareMathOperator{\val}{val}
\newcommand{\valDNF}{\val_{\mathsf{DNF}}}
\newcommand{\Algtil}{\widetilde \Alg}
\newcommand{\Algbar}{\overline \Alg}
\newcommand{\epref}{\varepsilon_{\mathsf{ref}}}
\newcommand{\thetafinal}{\theta_{\mathsf{final}}}
\DeclareMathOperator{\sgn}{sgn}
\newcommand{\ystar}{{y^\st}}
\newcommand{\Cstar}{{C^\st}}
\newcommand{\Ccond}{C_{\texttt{cond}}}
\newcommand{\fraka}{\mathfrak{a}}
\newcommand{\refe}{\texttt{ref}}
\newcommand{\fitval}{\texttt{FitValue}}
\newcommand{\tnorm}[1]{\lvert\!\lvert\!\lvert #1 \rvert\!\rvert\!\rvert}
\newcommand{\Lhat}{\widehat{L}}
\newcommand{\spann}{\texttt{span}}
\newcommand{\varphibar}{\widebar{\varphi}}
\newcommand{\algcommentbiglight}[1]{\textcolor{blue!70!black}{\transparent{0.5}\footnotesize{\texttt{\textbf{/* #1~*/}}}}}
\newcommand{\ldotst}{%
	\mathinner{{\ldotp}{\ldotp}}%
}
\renewcommand{\a}{\bm{a}}
\renewcommand{\ba}{\bm{a}}
\renewcommand{\bx}{\bm{x}}
\renewcommand{\x}{\bm{x}}
\newcommand{\z}{\bm{z}}
\renewcommand{\br}{\bm{r}}
\newcommand{\inner}[2]{\langle #1,#2\rangle}
\newcommand{\reg}{\texttt{reg}}
\newcommand{\E}{\mathbb{E}}
\newcommand{\wtilde}{\widetilde}
\newcommand{\reals}{\mathbb{R}}
\newcommand{\tauind}{j}
\renewcommand{\P}{\mathbb{P}}
\newcommand{\Nspanb}{\widebar{N}_\spann}
\newcommand{\Nspann}{N_\spann}
\newcommand{\Nbar}{\widebar{N}}
\newcommand{\Nreg}{N_\texttt{reg}}
\newcommand{\softmaxsample}{\texttt{SoftmaxSamplerDensity}\xspace}
\newcommand{\uveps}{\underline{\veps}_\texttt{span}}
\newcommand{\vepslip}{\veps_\texttt{span}}
\newcommand{\opsdp}{\texttt{Optimistic}-\texttt{PSDP}}
\DeclareMathOperator{\Law}{Law}
\newcommand{\by}{\bm{y}}
\newcommand{\brho}{\bm{\rho}}
\newcommand{\tauindl}{j}
 \let\underbar\undefined
\let\save@mathaccent\mathaccent
\newcommand*\if@single[3]{%
  \setbox0\hbox{${\mathaccent"0362{#1}}^H$}%
  \setbox2\hbox{${\mathaccent"0362{\kern0pt#1}}^H$}%
  \ifdim\ht0=\ht2 #3\else #2\fi
  }
\newcommand*\rel@kern[1]{\kern#1\dimexpr\macc@kerna}
\newcommand*\widebar[1]{\@ifnextchar^{{\wide@bar{#1}{0}}}{\wide@bar{#1}{1}}}
\newcommand*\underbar[1]{\@ifnextchar_{{\under@bar{#1}{0}}}{\under@bar{#1}{1}}}
\newcommand*\wide@bar[2]{\if@single{#1}{\wide@bar@{#1}{#2}{1}}{\wide@bar@{#1}{#2}{2}}}
\newcommand*\under@bar[2]{\if@single{#1}{\under@bar@{#1}{#2}{1}}{\under@bar@{#1}{#2}{2}}}
\newcommand*\wide@bar@[3]{%
  \begingroup
  \def\mathaccent##1##2{%
    \let\mathaccent\save@mathaccent
    \if#32 \let\macc@nucleus\first@char \fi
    \setbox\z@\hbox{$\macc@style{\macc@nucleus}_{}$}%
    \setbox\tw@\hbox{$\macc@style{\macc@nucleus}{}_{}$}%
    \dimen@\wd\tw@
    \advance\dimen@-\wd\z@
    \divide\dimen@ 3
    \@tempdima\wd\tw@
    \advance\@tempdima-\scriptspace
    \divide\@tempdima 10
    \advance\dimen@-\@tempdima
    \ifdim\dimen@>\z@ \dimen@0pt\fi
    \rel@kern{0.6}\kern-\dimen@
    \if#31
      \overline{\rel@kern{-0.6}\kern\dimen@\macc@nucleus\rel@kern{0.4}\kern\dimen@}%
      \advance\dimen@0.4\dimexpr\macc@kerna
      \let\final@kern#2%
      \ifdim\dimen@<\z@ \let\final@kern1\fi
      \if\final@kern1 \kern-\dimen@\fi
    \else
      \overline{\rel@kern{-0.6}\kern\dimen@#1}%
    \fi
  }%
  \macc@depth\@ne
  \let\math@bgroup\@empty \let\math@egroup\macc@set@skewchar
  \mathsurround\z@ \frozen@everymath{\mathgroup\macc@group\relax}%
  \macc@set@skewchar\relax
  \let\mathaccentV\macc@nested@a
  \if#31
    \macc@nested@a\relax111{#1}%
  \else
    \def\gobble@till@marker##1\endmarker{}%
    \futurelet\first@char\gobble@till@marker#1\endmarker
    \ifcat\noexpand\first@char A\else
      \def\first@char{}%
    \fi
    \macc@nested@a\relax111{\first@char}%
  \fi
  \endgroup
}
\newcommand*\under@bar@[3]{%
  \begingroup
  \def\mathaccent##1##2{%
    \let\mathaccent\save@mathaccent
    \if#32 \let\macc@nucleus\first@char \fi
    \setbox\z@\hbox{$\macc@style{\macc@nucleus}_{}$}%
    \setbox\tw@\hbox{$\macc@style{\macc@nucleus}{}_{}$}%
    \dimen@\wd\tw@
    \advance\dimen@-\wd\z@
    \divide\dimen@ 3
    \@tempdima\wd\tw@
    \advance\@tempdima-\scriptspace
    \divide\@tempdima 10
    \advance\dimen@-\@tempdima
    \ifdim\dimen@>\z@ \dimen@0pt\fi
    \rel@kern{0.6}\kern-\dimen@
    \if#31
      \underline{\rel@kern{-0.6}\kern\dimen@\macc@nucleus\rel@kern{0.4}\kern\dimen@}%
      \advance\dimen@0.4\dimexpr\macc@kerna
      \let\final@kern#2%
      \ifdim\dimen@<\z@ \let\final@kern1\fi
      \if\final@kern1 \kern-\dimen@\fi
    \else
      \underline{\rel@kern{-0.6}\kern\dimen@#1}%
    \fi
  }%
  \macc@depth\@ne
  \let\math@bgroup\@empty \let\math@egroup\macc@set@skewchar
  \mathsurround\z@ \frozen@everymath{\mathgroup\macc@group\relax}%
  \macc@set@skewchar\relax
  \let\mathaccentV\macc@nested@a
  \if#31
    \macc@nested@a\relax111{#1}%
  \else
    \def\gobble@till@marker##1\endmarker{}%
    \futurelet\first@char\gobble@till@marker#1\endmarker
    \ifcat\noexpand\first@char A\else
      \def\first@char{}%
    \fi
    \macc@nested@a\relax111{\first@char}%
  \fi
  \endgroup
}
 \newcommand{\dfc}[1]{}
 \newcommand{\dhruv}[1]{}
\newcommand{\zm}[1]{}
\let\OldStatex\Statex
\renewcommand{\Statex}[1][3]{%
  \setlength\@tempdima{\algorithmicindent}%
  \OldStatex\hskip\dimexpr#1\@tempdima\relax}
\let\oldparagraph\paragraph
\newcommand{\paragraphi}[1]{\oldparagraph{\emph{#1}.}}
\renewcommand{\paragraph}[1]{\oldparagraph{#1.}}
\newcommand{\nn}{\nonumber}
        \title{\scalebox{.935}{Is a Good Foundation Necessary for Efficient Reinforcement Learning?}\\
          \scalebox{.935}{The Computational Role of the Base Model in Exploration}}
\author{
Dylan J. Foster\thanks{Email: \texttt{dylanfoster@microsoft.com}.} \\
Microsoft Research
\and
Zakaria Mhammedi\thanks{Email: \texttt{mhammedi@google.com}.} \\
Google Research
\and
Dhruv Rohatgi\thanks{Email: \texttt{drohatgi@mit.edu}. This research was partially conducted during the author's internship at Microsoft Research.} \\ MIT  
}
\date{}
\begin{document}
\maketitle
\begin{abstract}

Language model alignment (or, reinforcement learning) techniques that leverage \emph{active exploration}---deliberately encouraging the model to produce diverse, informative responses---offer the promise of super-human capabilities. However, current understanding of algorithm design primitives for \emph{computationally efficient} exploration with language models is limited. To better understand how to leverage access to powerful pre-trained generative models to improve the efficiency of exploration, we introduce a new computational framework for RL with language models, in which the learner interacts with the model through a \emph{sampling oracle}. Focusing on the \emph{linear softmax} model parameterization, we provide new results that reveal the computational-statistical tradeoffs of efficient exploration:\loose
\arxiv{\begin{enumerate}}
\item \emph{Necessity of coverage.} Coverage refers to the extent to which the pre-trained model covers near-optimal responses---a form of hidden knowledge. We show that coverage, while not necessary for data efficiency, lower bounds the \emph{runtime} of any algorithm in our framework.
\item \emph{Inference-time exploration.} We introduce a new algorithm, \emph{\spanalg}, which obtains optimal data efficiency and is computationally efficient whenever the pre-trained model enjoys sufficient coverage, matching our lower bound. \spanalg leverages inference-time computation with the pre-trained model to reduce the effective search space for exploration.
\item \emph{Insufficiency of training-time interventions.} We
  contrast\arxiv{ the result above} by showing that
  \emph{training-time} interventions\arxiv{ (e.g., exploratory
    modifications to DPO)} that produce \emph{proper} policies cannot
  achieve similar guarantees in polynomial time.\loose
\arxiv{\item \emph{Computational benefits of multi-turn exploration.} Finally, we show that
under additional representational assumptions, one can achieve improved runtime (replacing sequence-level
coverage with token-level coverage) through \emph{multi-turn}
exploration. En route, we show that any MDP where the optimal
KL-regularized value function is linear (linear-$\Qstarb$) is learnable in the
reset access model.\loose
}
\end{enumerate}
\arxiv{We view these results as a step toward a computational theory
of decision making with
generative models.\loose}

 \end{abstract}

\section{Introduction}
\label{sec:intro}
Language models are rapidly approaching human performance on a vast array
of natural language tasks
\citep{brown2020language,ouyang2022training,touvron2023llama,achiam2023gpt,anil2023palm},
but current models are constrained by the limitations of passively generated
human training data.
Domains where high-quality feedback is available (e.g., math and code)
offer the tantalizing possibility of overcoming these limitations:
By iteratively generating new proposals
and refining them with human or super-human feedback (e.g., from a
formal proof checker), a language model could eventually discover
novel, potentially super-human behaviors and capabilities.\loose

The central hurdles to achieving novel capabilities with this template are (1)
the amount of feedback---that is, the \emph{data
  efficiency}---required by alignment/post-training, and (2) the \emph{computational efficiency}. Both metrics are important, but since gathering feedback is often costly or slow (e.g., due to cost of
gathering human labels, or high computational overhead of formal proof
checkers), data is often more tightly constrained than
computation. Unfortunately, the most popular alignment techniques,
like PPO \citep{schulman2017proximal} and Online DPO
\citep{xu2023some,guo2024direct}, are data-inefficient due to their
reliance on passive exploration. These techniques treat the
pre-trained model as a \emph{policy} and iteratively update it with
reinforcement learning, but since there
is no explicit mechanism to promote novelty, they are unlikely to generate positive responses (e.g., novel and correct proofs) by chance
\citep{xie2024exploratory}.  In principle, this issue could be mitigated through \emph{active exploration}
techniques developed in the theory of reinforcement learning, which deliberately generate diverse,
informative responses \citep{jiang2017contextual,agarwal2019reinforcement,jin2021bellman,foster2021statistical,foster2023foundations}. %
However,
these techniques---while satisfactorily \emph{data-efficient}---cannot be
implemented in a \emph{computationally efficient} fashion in their most general
form
\citep{dann2018oracle,kane2022computational,golowich2024exploration}. Recent attempts to specialize active exploration to language model alignment face the same issue: such methods require either (1) enumeration over the (exponentially large) space of responses \citep{chen2022human,wang2023rlhf,ye2024theoretical,xiong2024iterative}; or (2) non-convex training objectives that are not known to be efficiently implementable in even the simplest settings \citep{xie2024exploratory,cen2024value}.\loose

\paragraph{The role of the base model} Language model alignment features unique structure not present in
general reinforcement learning---most prominently, access to a powerful pre-trained base model
(the starting point from which alignment proceeds) that encodes substantial prior knowledge (e.g, whether
  proofs or programs are at least syntactically valid, if not useful).\footnote{\arxiv{Other, more technical, features
    include (i) deterministic, known transition dynamics (rendering
    the problem statistically equivalent to contextual bandits), and (ii)
    the presence of regularization to the base model.}}
Yet, there is little understanding of what properties of the base model
  are necessary for novel behaviors to emerge through RL \citep{openai2024o1,deepseek2025r1}, or whether
  this process can be accelerated through algorithmic interventions
  (e.g., the idea of directly using the base model to reduce the effective search
  space has appeared in many empirical
  works
  \citep{liu2023statistical,hao2023reasoning,tran2023iterative,yao2024tree,xiong2024iterative,yan2024efficient}).
  Meanwhile, the previously-mentioned theoretical works (based on active
  exploration) only make superficial use of the base model, rendering the lack of computational efficiency perhaps unsurprising. 
This
  motivates the central question we explore:\loose
  \begin{center}
    \emph{How can we best leverage access to powerful pre-trained generative
        models to improve computational efficiency of exploration,
        and how should we evaluate algorithms that
        do so?}
    \end{center}
    To address this question, we introduce a new computational
    framework for language reinforcement learning in which access to the model is abstracted away
    through a \emph{sampling oracle}, and provide new algorithms and
    fundamental limits which elucidate essential properties---in
    particular, the notion of \emph{coverage}---of the
    pre-trained model for computationally efficient learning. In the process, we bring
    clarity to computational benefits of algorithmic interventions that have been
    explored empirically but are not yet fully understood,
    including (i) benefits of \emph{inference-time computation}
    \citep{brown2024large,snell2024scaling,wu2024empirical}; and (ii)
    benefits of \emph{multi-turn} techniques that explore at the per-step\arxiv{ (e.g., token or sub-sequence)}
level
\citep{lightman2023lets,qu2024recursive,kumar2024training,setlur2024rewarding,setlur2024rl,xiong2024building,kazemnejad2024vineppo}.\loose

\subsection{Background: Online Alignment from Reward-Based Feedback}\label{sec:background}
To motivate our computational framework, we begin by formally introducing the statistical problem of language
model alignment.
We adopt
 a contextual
bandit formalism \citep{rafailov2024direct,xiong2024iterative}
where the language model is
a \emph{policy} $\pi:\cX\to\Delta(\cY)$ that maps a prompt (context)
$x\in\cX$ to a response (action) $y\in\cY$ by sampling
$y\sim{}\pi(\cdot\mid{}x)$. \arxiv{We use $\rho\in\Delta(\cX)$ to denote the
distribution over prompts. }We begin with a reference policy $\piref$, which is typically
obtained through pre-training or supervised fine-tuning. From here, our alignment
protocol proceeds as follows: We receive $\Trounds$ \iid prompts
$x\ind{1},\ldots,x\ind{T}\sim\rho\in\Delta(\cX)$. For each prompt $x\ind{t}$,
we can select up to $N$ responses
$y\ind{t}_1,\ldots,y\ind{t}_N\in\cY$ (the responses may be sampled
from $\piref$ or from some alternative sampling procedure
\citep{liu2023statistical,khaki2024rs,shi2024crucial}), with which we
query a \emph{\textbf{reward oracle}} for a reward
$r_i\ind{t}\in\brk{0,\Rmax}$.
We assume that $\En\brk*{r\mid{}x,y}=\rstar(x,y)$, where
$\rstar:\cX\times\cY\to\brk*{0,\Rmax}$ is the underlying \emph{reward
  function}, which represents the feedback source (e.g, verifier or
human labeler) that the algorithm interacts with. %
All responses can be chosen adaptively based on prior feedback---this stands in contrast to traditional offline
alignment \citep{ye2024theoretical,liu2024provably,huang2024correcting}, which is the special case of our
formulation in which $y_i\ind{t} \sim \piref(x\ind{t})$ for all
$t$.\footnote{Responses need not be chosen according to the
  order $t$; the algorithm can sample $N'<N$ responses for $x\ind{t}$,
  then sample responses for another $x\ind{t'}$ before to returning
  $x\ind{t}$ and sampling more. This generality makes our lower bounds
  stronger; our algorithms use $N=2$ and proceed in order however.
}
Once data collection concludes, we produce a
final policy $\pihat$ with the aim of achieving high reward. We
define $\Tdata\leq{}N\cdot{}\Trounds$ as the total number of reward
queries used by the algorithm. Note that in general, we can
  have $\Tdata\ll N\cdot{}\Trounds$, as the algorithm can potentially
  abstain from querying the reward oracle for a given prompt. \loose %

As in prior work on
alignment \citep{xiong2024iterative,ye2024theoretical,xie2024exploratory}, we focus maximizing \emph{KL-regularized} reward. Letting
$J(\pi)\ldef{}\En_{x\sim\rho,y\sim\pi(x)}\brk{\rstar(x,y)}$ denote the
average reward and 
$\Dkl{\pi}{\piref}\ldef{}\En_{x\sim\rho}\brk*{\Dkl{\pi(x)}{\piref(x)}}$
denote KL-divergence, we define for regularization parameter $\beta>0$:\loose
\begin{align}
\label{eq:kl_reward}
J_\beta(\pi) \coloneqq &~ J(\pi) -
\beta\cdot\Dkl{\pi}{\piref}.
\end{align}
We measure the quality of the policy $\pihat$ via regret to the optimal
KL-regularized policy: we desire that\arxiv{\loose\[
J_\beta(\pistarb) - J_\beta(\pihat) \leq \veps,
\]}
where $\pistarb := \argmax_{\pi:\cX\to\Delta(\cY)} J_\beta(\pi)$ is the optimal policy, and $\veps>0$ is small. \arxiv{A bound on the regularized regret ensures
that} $\pihat$ achieves near-optimal reward, but does not drift too far
from the base policy $\piref$. We view $\beta$ as a fixed (but
potentially small) problem-dependent parameter, so as to allow 
novel responses that deviate non-trivially from $\piref$. We abbreviate
$\En_{\pi}\brk*{\cdot}\ldef{}\En_{x\sim\rho,y\sim\pi(\cdot\mid{}x)}\brk{\cdot}$.
\begin{remark}[Autoregressive models]
  \label{rem:auto}
  We focus on the abstract setting above, but our motivating example
  is autoregressive sequence models of length $H$, where $\cY=\cA^{H}$ represents the space of token
  sequences over a vocabulary $\cA$. We will return to this specific
  setting in \cref{sec:multi}.
\end{remark}

\oldparagraph{Statistical lens: How much reward data do we need?} Since the underlying reward function $\rstar$ is unknown to the algorithm designer, the total number of reward queries $\Tdata$ used by an algorithm reflects its
\emph{data efficiency}, i.e. how much data needs to be collected from
the reward oracle to learn a good policy. Collecting high-quality
reward signals can be costly or time-consuming (e.g., when
human-generated, or when reward evaluation requires computationally
intensive code execution or formal verification), so data efficiency
is critical. To give provable data efficiency guarantees, typical
alignment algorithms \citep{xiong2024iterative,xie2024exploratory}
take as input a user-specified policy class
$\Pi=\crl*{\pitheta\mid{}\theta\in\Theta}$ for a \emph{parameter space} $\Theta$, and invoke the standard statistical assumption that the optimal policy lies in $\Pi$.

\begin{assumption}[Policy realizability]\label{ass:realizable}
The policy class $\Pi$ satisfies $\pistarb \in \Pi$.
\end{assumption}

\begin{remark}[Preference-based feedback]
Our absolute reward formulation for language model alignment has been used in
prior work empirically
\citep{wang2023helpsteer,wang2024helpsteer2,wang2024interpretable,xiong2024building}
and in theory
\citep{zhao2024sharp,wang2024arithmetic,xiong2024building}. This
formulation is closely related to the widely-used theoretical model for \arxiv{reinforcement
learning with human feedback (RLHF)} where the learner receives
\emph{preference-based feedback}. Our main algorithms use $N=2$ and
readily extend to the preference-based setting, while our lower
bounds allow for general $N$. \arxiv{We discuss this connection further in} \cref{app:preference}.\loose
\end{remark}

\subsection{A Computational Framework for Online Alignment}
The response space
$\cY$ in the online alignment framework can be exponentially
large %
  (e.g., \cref{rem:auto}).
Without \arxiv{further }assumptions, there is no
hope of learning a near-optimal policy without enumerating over $\cY$\arxiv{,
rendering discussion of computational efficiency moot}. To address
this, we assume that the learning algorithm has access to a certain \emph{sampling oracle}.\loose %

Informally, we consider two different settings. \textbf{(1)} In the \textbf{\emph{strong oracle}} setting, the learner can draw conditional samples from $\pi_\theta(\cdot\mid{}x)$ for any prompt $x\in\cX$ and parameter $\theta\in\Theta$ (with the convention that $\mb{0}\in\Theta$ and $\pi_{\mb{0}}=\piref$). \textbf{(2)} In the \textbf{\emph{weak oracle}} setting, the learner can draw conditional samples from $\piref(\cdot\mid{}x)$ for any prompt $x\in\cX$. We let $\Tsample$
  denote the total number of sampling oracle queries used by the
  algorithm throughout the learning process. See \cref{sec:coverage} for formal details.\loose

Our algorithms only need the weak oracle, but our lower bounds apply even to the strong oracle. We
view access to the weak oracle as a
minimal assumption: efficient conditional sampling is arguably \emph{the}
defining property of autoregressive language models.
We use the sampling oracle complexity $\Tsample$ as an
information-theoretic proxy for the computational efficiency of
an alignment algorithm, one that parallels the role of oracle/query
complexity \citep{nemirovski1983problem,kearns1998efficient}, and is amenable to
upper and lower bounds.\arxiv{ A similar abstraction was used by
\citet{huang2024self} for the complementary problem of language
model self-improvement.}\loose

As an example, (reward-based) \onlinedpo \citep{guo2024direct}, is
\arxiv{perhaps the
simplest} online alignment algorithm: For each
$t\in\brk{\Trounds}$, the algorithm computes a parameter
$\theta\ind{t}\in\Theta$ by optimizing a \dpo objective
(\cref{eq:dpo_reward}) with its current dataset
$\cD\ind{t}$, then samples a pair of responses
$y_1\ind{t},y_2\ind{t}\sim{}\pi_{\theta\ind{t}}(\cdot\mid{}x\ind{t})$,
observes corresponding rewards $(r_1\ind{t},r_2\ind{t})$, and updates
$\cD\ind{t+1}\gets\cD\ind{t}\cup\crl{(x\ind{t},y_1\ind{t},y_2\ind{t},r_1\ind{t},r_2\ind{t})}$. This
algorithm uses two (strong) sampling oracle queries\arxiv{ to gather reward feedback}
per round, so the computational cost is no worse than the cost of
gathering feedback: $\Tcomp=\Tdata$.\footnote{Technically, $\onlinedpo$ also requires observing the log-densities of the observed responses, though this requirement simplifies for the linear softmax policy class that we consider in the sequel. See \cref{sec:coverage} for details.
}
Unfortunately, since \onlinedpo engages in purely passive exploration,
the algorithm's data efficiency itself is unsatisfactory. We make this distinction quantitative below.
\loose

\subsection{Linear Softmax Policy Parameterization}

To understand when we can hope to achieve favorable data efficiency
$\Tdata$ (e.g., through active exploration) without entirely sacrificing computational efficiency $\Tcomp$, we focus on
perhaps the simplest concrete choice of policy class: \emph{linearly parametrized softmax
  policies} \citep{xiong2024iterative,cen2024value}.\loose

\begin{definition}
  \label{def:softmax}
Let $\Theta \subset \RR^d$ be a convex parameter set and let $\phi: \cX \times\cY \to \RR^d$ be a feature embedding. The \arxiv{associated }linear-softmax policy class is $\Pi = \{\pi_\theta: \theta \in \Theta\}$, where $\pi_\theta: \cX \to \Delta(\cY)$ is defined by\loose
\begin{align}
  \label{eq:softmax}
\pi_{\theta}(y\mid{}x)\propto{}\piref(y\mid{}x)\cdot\exp\prn*{\beta^{-1}\tri*{\theta, \phi(x,y)}}.
\end{align}
\end{definition}
\noindent With this policy class, \cref{ass:realizable} becomes a
natural assumption about the expressivity of the feature embedding
$\phi$: for example, if the reward function is linear in the features,
i.e. %
\arxiv{\begin{align}
  \label{eq:linear_reward}
\rstar(x,y) = \tri*{\thetast, \phi(x,y)}
\end{align}}
for some $\thetast\in\bbR^{d}$, then the optimal \arxiv{KL-}regularized policy
$\pistarb$ is exactly $\pi_{\theta^\st}$ \citep{xie2024exploratory},
so that \cref{ass:realizable} is satisfied so long as $\theta^\st \in
\Theta$. %

\arxiv{In spite of the simplicity of the parameterization, \cref{def:softmax}
is rich enough to capture autoregressive sequence models in
which weights for all but the last layer are frozen, and there is some evidence
\citep{malladi2023kernel} that post-training methods with deep models operate in this
``lazy/kernel'' regime. We hope that by developing a sharp understanding of
computational-statistical tradeoffs for this simple setting, our
    work can serve as a useful starting point toward understanding the
    general nonlinear setting.}

  For sequence modeling (\cref{rem:auto}),
  the \emph{strong} sampling oracle
can be at odds with \cref{def:softmax}: while the feature dimension
$d$ is bounded, $\cY$ is exponentially large, and even if
$\piref$ is an autoregressive sequence model, $\pi_\theta$ may not
admit an explicit autoregressive factorization for all
$\theta$. However, the \emph{weak} sampling oracle is entirely natural
for autoregressive sequence modeling; see \cref{sec:multi}\arxiv{ for discussion}.\loose

\paragraph{Tradeoffs between data \arxiv{efficiency }and computational
  efficiency}
Let $\Tdatafull$ and
$\Tsamplefull$ denote the reward and sampling oracle queries required for an algorithm to ensure $\Jbeta(\pistarb) -
\Jbeta(\pihat) \leq \veps$ with probability at least $1-\delta$.
  Even for
linear softmax policies, all existing
algorithms are unsatisfactory with respect to $\Tdatafull$ or
$\Tsamplefull$. On one hand, \citet{xie2024exploratory} show that if we define\loose%
  \begin{equation}
    \Ccov(\pistarb)\ldef{}\sup_{x\in\cX, y\in\cY}\frac{\pistarb(y\mid{}x)}{\piref(y\mid{}x)}
    \label{eq:coverage}
  \end{equation}
  as the \emph{coverage coefficient} for $\pistarb$, then the \onlinedpo method
in the prequel, while implementable in polynomial time\arxiv{ per
iteration}, must suffer %
\arxiv{\begin{align}
  \label{eq:dpo_bad}
  \Tdatafull \approxgeq \min\crl*{\Ccov(\pistarb), \exp\prn*{\frac{\Rmax}{\beta}}},
\end{align}}
when $d=\bigoh(1)$ and $\veps,\delta=\bigom(1)$. Informally,
$\Ccov(\pistarb)$ is represents the number of responses
one must draw from $\piref$ before high reward is observed by chance
\citep{brown2024large,snell2024scaling,wu2024empirical}. This is a
form of hidden knowledge, but its presence in $\Tdata$ reflects
passive exploration.
On the other hand, \citet{xie2024exploratory} introduced a variant of
\onlinedpo called \xpo (see \cref{app:related}), which augments\arxiv{ the
\dpo training objective with a bonus \arxiv{designed }to
encourage}%
active exploration. This allows \xpo to achieve
polynomial data efficiency, irrespective of whether\arxiv{ the base policy}
$\piref$ has favorable coverage:\loose
  \begin{align}
    \label{eq:xpo_data}
    \Tdatafull \approxleq \frac{d^{2}\log(\delta^{-1})}{\veps^2}.
  \end{align}
  Note that $\Ccov(\pistarb)\gg\poly(d)$ in general, representing a benefit over passive exploration.
  Like \onlinedpo, \xpo uses the sampling oracle to generate two
  responses
  $(y\ind{t}_1,y\ind{t}_2)\sim\pi_{\theta\ind{t}}(\cdot\mid{}x\ind{t})$
  at each iteration.
Yet, while the objective \xpo uses to update the policy $\pi_{\theta\ind{t+1}}$ is amenable to gradient-based methods, \arxiv{the bonus term} introduces
non-convexity not present in the \dpo objective, and it is not
known whether it can be minimized in polynomial time (nor with
$\Tsamplefull$ polynomial) for linear softmax
policies, even when $\abs*{\cY}$
is small. Other active exploration
algorithms are similarly unsatisfactory \citep{chen2022human,ye2024theoretical,xiong2024iterative,cen2024value}.\loose

\subsection{Contributions}
We develop a sharp understanding of computational-statistical
tradeoffs for online alignment with linear softmax policies, highlighting the central role of the
base model (policy) $\piref$ in enabling computational efficiency, along with
benefits of inference-time computation and multi-step exploration.\loose

\paragraph{The (computational) necessity of coverage
  (\cref{sec:coverage})}
The coverage coefficient $\Ccov(\pistarb)$ captures the extent to which $\piref$ covers near-optimal responses---a form of
knowledge encoded in the pre-trained model
\citep{brown2024large,snell2024scaling,wu2024empirical}. While coverage is not necessary for data efficiency (e.g., \cref{eq:xpo_data}), we show that it \emph{is}
required for computational efficiency. Formally 
(\cref{thm:coverage}), for any
algorithm in the \framework framework, the number of sampling oracle
calls (and runtime) is lower bounded as\arxiv{\loose
\begin{align}
  \label{eq:coverage_lower_intro}
  \Tsamplefull \approxgeq \min\crl*{\Ccov(\pistarb), \exp\prn*{\frac{\Rmax}{\beta}}}.
\end{align}}
This serves as a skyline for algorithm design, and contributes to a
growing body of work that highlights the computational benefits of
coverage \citep{huang2024self}.\loose

\paragraph{Efficient inference-time exploration
  (\cref{sec:algorithms})}
We give a new algorithm, \spanalg, which (i) achieves near-optimal
  data efficiency $\Tdatafull \approxleq
  \poly(d,\beta^{-1},\veps^{-1},\log(\delta^{-1}))$ for both rewards
  and prompts, and (ii) runs in
  polynomial time, achieving minimal oracle efficiency as governed by
  the lower bound \arxiv{in \cref{eq:coverage_lower_intro}:\loose
  \[\Tsamplefull \approxleq
    \poly(\Ccov(\pistarb),\Tdatafull).
  \]}
\spanalg leverages 
  inference-time computation to tilt learned policies toward an
  exploratory distribution, using the base
  policy $\piref$ to reduce the effective search space for exploration
  to a manageable size.\loose

\paragraph{Insufficiency of training-time interventions
  (\cref{sec:computational})}
Active exploration algorithms based on ``training-time'' interventions
(e.g., modifications to the \dpo objective, as in \xpo) are typically
\emph{proper} in the sense that
they explore using a sequence
$\pi_{\theta\ind{1}},\ldots,\pi_{\theta\ind{T}}$ of iteratively
computed linear softmax policies and ultimately output such a policy;
meanwhile \spanalg, by invoking extra inference-time computation,
engages in \emph{improper} exploration. We show (\cref{thm:training}) that \emph{no data-efficient
  proper exploration algorithm can run in polynomial time} (\arxiv{including}%
polynomial dependence on $\Ccov(\pistarb)$ and $\exp(\Rmax/\beta)$).
This gives a separation between algorithms based
on training-time interventions and algorithms like \spanalg that
explore improperly through inference-time computation.\loose

\paragraph{Computational benefits of multi-turn exploration (\cref{sec:multi})}
The preceding results, when specialized to autoregressive modeling, engage
in exploration at the sequence-level. As a final result (\cref{thm:multi}), we show that
under the additional
representational condition that $\pistarb$ can be represented
as an autoregressive policy,
it is
possible to achieve substantially improved runtime and oracle
complexity $\Tsample$ (replacing the coverage coefficient $\Ccov(\pistarb)$
with an \emph{token-level} counterpart) by appealing to \emph{multi-turn} exploration at
the per-step (token or sub-sequence) level
\citep{lightman2023lets,qu2024recursive,kumar2024training,setlur2024rewarding,setlur2024rl,xiong2024building,kazemnejad2024vineppo}. This
is achieved as a special case of a more general result, which may be
of independent interest: any MDP where the optimal
KL-regularized value function $\Qstarb$ is linear can be efficiently learned in the
reset access model.

\loose

\arxiv{We view our results as an initial step toward a computational foundation
for language model exploration (and more broadly, efficient decision making with
generative models). To this end, we highlight several open problems
and directions for future research (\cref{sec:discussion}).
}

\arxiv{
\subsection{Notation}

  We adopt
    standard big-oh notation, and write $f=\bigoht(g)$ to denote that
    $f = \bigoh(g\cdot{}\max\crl*{1,\mathrm{polylog}(g)})$ and
    $a\approxleq{}b$ as shorthand for $a=\bigoh(b)$. We use
    $\bbB_{p}(r)$ to denote the $\ell_p$-ball of radius $r$, and
    define $\nrm*{x}_{\Sigma}^2=\tri*{x,\Sigma{}x}$ for a matrix
    $\Sigma\psdgt 0$. We use $\Id$ to denote the identity matrix in
    $d$ dimensions.

    }

\section{Sampling Oracle Framework and Necessity of Coverage}
\label{sec:coverage}

In this section, we formally introduce our sampling oracle framework\arxiv{
for linear softmax policies}, then prove that
coverage for the base policy $\piref$ is necessary for computational
efficiency in this framework.\loose\arxiv{
}
\arxiv{\paragraph{Preliminaries}}
Henceforth (until \cref{sec:multi}), we focus on the linear softmax
parameterization in \cref{def:softmax} and make
\cref{ass:realizable}. For statistical tractability, we make a (standard) norm bound assumption.\loose
\begin{assumption}
  \label{ass:norm}
  We assume all $\theta\in\Theta$ satisfy $\nrm*{\theta}\leq{}B$
  for a parameter $B>0$, and that $\nrm*{\phi(x,y)}\leq{}1$ and $\tri*{\thetastar,\phi(x,y)-\phi(x,y')}\in\brk*{-\Rmax,\Rmax}$ for all
  $x\in\cX$, $y,y'\in\cY$. Furthermore, we assume that
  $\mb{0}\in\Theta$.\loose
\end{assumption}
We assume that $\beta\leq\Rmax\leq{}B$ without loss of
generality.\footnote{If $\Rmax<\beta$, $\onlinedpo$ itself is
  statistically efficient. Our main upper bounds depend on\arxiv{ the
    parameter} $B$ only logarithmically.\loose} We do not explicitly assume that rewards are
linear (i.e., \arxiv{\cref{eq:linear_reward}}), but under
\cref{ass:realizable} we have (\cref{lem:reward_difference}):
\begin{align}
  \label{eq:reward_difference}
\rstar(x,y) - \rstar(x,y') = \tri*{\thetastar,\phi(x,y)-\phi(x,y')} \quad
     \forall{}x\in\cX, y,y'\in\cY.
\end{align}

\subsection{Sampling Oracle Framework}
We now formally define our computational framework for the linear
softmax policy parameterization. We assume the prompt space $\cX$, response space
$\cY$, and parameter space $\Theta$ are given to the alignment
protocol, but the feature embedding $\phi$ and the reference policy $\piref$
are specified only implicitly (i.e., are ``unknown'' a-priori), and must be accessed through one of the following computational oracles.\loose
\begin{definition}[Sampling oracles]
  \label{def:oracle}
\textnormal{\textbf{\underline{Setting I (strong oracle)}}:} In one query, the learner proposes a prompt $x\in\cX$ and
  parameter $\theta\in\Theta$, and receives a conditional sample
  $y\sim\pitheta(\cdot\mid{}x)$, as well as the
 corresponding feature
$\phi(x,y)$ for the sampled response (note that $\pi_{\mb{0}}=\piref$).\\
\textnormal{\textbf{\underline{Setting II (weak oracle)}}:} In one query, the learner proposes a prompt $x \in \cX$ and receives a conditional sample $y \sim \piref(\cdot\mid{}x)$, as well as the corresponding feature $\phi(x,y)$.
\end{definition}

\begin{definition}\label{def:alignment}
An \emph{online alignment algorithm} in the (strong/weak) setting is
an algorithm that, given parameters $\veps,\delta>0$, produces a
policy $\pihat$ satisfying $J_\beta(\pistarb) - J_\beta(\pihat) \leq
\veps$ with probability at least $1-\delta$. We write $\Tdatafull$ and
$\Tsamplefull$ to denote the total number of reward oracle queries and
(strong/weak) sampling oracle queries respectively.
\end{definition}

Notice, any algorithm operating in our framework (i) must invoke the sampling oracle if it wishes to query the reward oracle with some
$y\sim\pi_{\theta}(\cdot\mid{}x\ind{t})$, and (ii) only has knowledge of the features $\phi(x,y)$ that have previously
been revealed by the sampling oracle. As an example, given a dataset
$\cD\ind{t}=\crl*{(x\ind{i},y_1\ind{i},y_2\ind{i},r_1\ind{i},r_2\ind{i})}_{i<t}$
of prompt/response/reward tuples, the (reward-based) \onlinedpo update takes the form\loose
\arxiv{    \begin{align}
      \theta\ind{t}&=\argmin_{\theta\in\Theta}\sum_{i<t}
      \prn*{
                     \beta\log\frac{\pitheta(y_1\ind{i}\mid{}x\ind{i})}{\piref(y\ind{i}_1\mid{}x\ind{i})} -
        \beta\log\frac{\pitheta(y\ind{i}_2\mid{}x\ind{i})}{\piref(y\ind{i}_2\mid{}x\ind{i})}
                     -(r\ind{i}_1-r\ind{i}_2)}^2.\label{eq:dpo_reward}
           \end{align}
         }
         Since
         $\beta\log\frac{\pitheta(y_1\ind{i}\mid{}x\ind{i})}{\piref(y\ind{i}_1\mid{}x\ind{i})}
         -
         \beta\log\frac{\pitheta(y\ind{i}_2\mid{}x\ind{i})}{\piref(y\ind{i}_2\mid{}x\ind{i})}=\tri*{\theta,\phi(x\ind{i},y_1\ind{i})-\phi(x\ind{i},y_2\ind{i})}$
         and this objective only evaluates 
    $\phi(x,y)$ for previously drawn responses,
    we see that it can be implemented in the strong setting (\cref{def:alignment}), using the
    strong sampling oracle to draw
    $(y_1\ind{t},y_2\ind{t})\sim\pi_{\theta\ind{t}}(\cdot\mid{}x\ind{t})$.

As we will discuss in
        \cref{sec:multi}, algorithms that use the weak oracle have
        important consequences when we specialize our to
        autoregressive sequence modeling; our main algorithm, \spanalg
        enjoys this property.\loose
    \loose

    \arxiv{
    \begin{remark}[Log-probability queries]
      \label{rem:lob_probability}
      The reader may note that the framework in \cref{def:oracle}
      reveals the features $\phi(x,y)$ for responses $y$ sampled from
      the oracle, but does not reveal 
      the log-probabilities $\log\pitheta(y\mid{}x)$ themselves. As
      highlighted above, the observed features are closely related, as
      they can be used to evaluate $\beta\log\frac{\pitheta(y\mid{}x)}{\piref(y\mid{}x)} -
        \beta\log\frac{\pitheta(y'\mid{}x)}{\piref(y'\mid{}x)}=\tri*{\thetastar,\phi(x,y)-\phi(x,y')}$,
        but cannot be used to compute $\log\pitheta(y\mid{}x)$ itself
        in general. We adopt this formalism because it simplifies the coverage-based
        lower bounds in \cref{sec:coverage_lower}; our algorithmic results only make use of the
        features $\phi(x,y)$, and hence fall into this framework. See
        \cref{app:sampling} for discussion around nuances of
        log-probability queries beyond the linear softmax
        parameterization.
      \end{remark}
      }

\arxiv{\begin{remark}[Connection to optimization oracles]
    \label{rem:optimization}
  There is a large body of work on algorithms for linear contextual
  bandits with large response spaces $\cY$ in which the response space
  is accessed through an \emph{optimization oracle} which can solve
  $\argmax_{y\in\cY}\tri*{\theta,\phi(x,y)}$
  efficiently for any $x\in\cX$ and $\theta\in\Theta$
  \citep{dani2008stochastic,bubeck2012towards,hazan2016volumetric,chen2017nearly,
    cao2019disagreement,katz2020empirical,zhu2022contextual}. Our
  formulation in \cref{def:oracle} can be viewed as an alternative,
  sampling-based computational framework for decision making with
  large response spaces, one which may be of independent
  interest. Note that while there is a sense in which sampling and
  optimization are polynomially equivalent when the set
  $\crl*{\phi(x,y)}_{y\in\cY}$ is convex \citep{lovasz2006fast}, they are not
  equivalent in general.
\end{remark}
}

\subsection{Coverage is Necessary for Computational Efficiency}
\label{sec:coverage_lower}
We now present
the first of our main results, which shows that the coverage
coefficient $\Ccov(\pistarb)$ lower bounds the number of sampling oracle
queries (and hence runtime) of any algorithm in our framework.\loose
\begin{restatable}[Necessity of coverage]{theorem}{coveragelower}\label{thm:coverage}
  Let $\Cstar, Y\geq{}2$ be given. Let $\Alg$ be an online alignment algorithm that uses $\Tdatafull$
  reward oracle queries and $\Tsamplefull$ strong sampling oracle
  queries whenever (i) the parameter space is\arxiv{ the Euclidean
  ball} $\Theta=\ball$, (ii) \cref{ass:norm} is satisfied with
  $\Rmax=B=1$, (iii) $\Ccov(\pistarb) \leq \Cstar$, and (iv) the response space has size\arxiv{ at most} $Y=\abs{\cY}$. Then, either $\Tdatafull \geq Y/8$, or \loose
  \begin{align}
    \label{eq:coverage_lower}
    \Tcompfull \geq \Omega\prn*{\min\crl[\big]{e^{\beta^2 d/2},
    e^{\beta^{-1}/2}, \Cstar}}.
    \end{align}
\end{restatable}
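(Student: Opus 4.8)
\emph{Overview.} The plan is to prove the lower bound by reduction to multi-hypothesis testing --- a needle-in-a-haystack argument: I would build a family of instances that are statistically indistinguishable to any learner unless the sampling oracle happens to surface a single ``hidden'' response, and show that solving a random member of the family requires exactly that.

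\emph{The hard family.} I would take a single prompt, a response space $\cY$ with $\abs{\cY}=Y$, and a distinguished response $y_0$ drawn uniformly from $\cY$ (so the learner does not know it a priori); draw $\thetastar=v^\star$ uniformly from the unit sphere $\bbS^{d-1}$; and set the feature map $\phi(y_0)=v^\star$, $\phi(y)=\mathbf 0$ for $y\ne y_0$, so that $\rstar\equiv 1$ on $y_0$ and $\rstar\equiv 0$ elsewhere (consistent with \cref{eq:reward_difference}). Let $\piref$ put mass $q\ldef 1/\min\crl{e^{1/(2\beta)},\Cstar}$ on $y_0$ and spread the rest uniformly over $\cY\setminus\crl{y_0}$. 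Computing the partition function $\En_{y\sim\piref}\brk{e^{\rstar(y)/\beta}}=qe^{1/\beta}+1-q$, I would verify that (a) $\pistarb=\pi_{v^\star}$, so \cref{ass:realizable} holds; (b) \cref{ass:norm} holds with $\Rmax=B=1$; (c) $\Ccov(\pistarb)=\frac{e^{1/\beta}}{qe^{1/\beta}+1-q}\le\min\crl{1/q,\,O(e^{1/(2\beta)})}\le\Cstar$; and (d) $\Jbeta(\pistarb)\ge\beta\log(qe^{1/\beta})=1+\beta\log q\ge 1/2$ by the choice of $q$. Hence any $\pihat$ with $\pihat(y_0)\le 1/4$ has $\Jbeta(\pihat)\le\En_{\pihat}\brk{\rstar}\le 1/4$ and regret $\ge 1/4$, so for $\veps,\delta$ below a universal constant an $(\veps,\delta)$-algorithm must, with probability $\ge 1-\delta$, output $\pihat$ with $\pihat(y_0)>1/4$ --- which requires ``locating'' $y_0$, and since $y_0$ is distinguished only through its nonzero feature and its reward, this forces the sampling oracle to return $y_0$ at some point.

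\emph{The query lower bound.} Fix any algorithm $\Alg$ using $T\ldef\Tcompfull$ sampling queries and $\Tdatafull<Y/8$ reward queries, run on a draw of $(y_0,v^\star)$. Call a sampling query \emph{revealing} if it returns $y_0$; reward queries cannot help, since a response is reward-queryable only after being sampled and a sampled $y\ne y_0$ has feature $\mathbf 0$ and reward $0$ in every instance. The key claim is that, conditioned on no revealing query so far, the transcript is essentially independent of $(y_0,v^\star)$: the only instance-dependence of the sampling law $\pi_\theta(\cdot)$ restricted to $\cY\setminus\crl{y_0}$ is its normalization $qe^{\tri{\theta,v^\star}/\beta}+1-q$, which is within a constant factor of $1$ precisely on the no-reveal event (a $\theta$ well-aligned with $v^\star$ would have made $y_0$ likely to be drawn). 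Granting this, for any fixed $\theta\in\ball$ the next query is revealing with probability $\frac{qe^{\tri{\theta,v^\star}/\beta}}{qe^{\tri{\theta,v^\star}/\beta}+1-q}\le 2qe^{\tri{\theta,v^\star}/\beta}$; averaging over $v^\star\sim\bbS^{d-1}$ and using the sub-Gaussian bound $\En_{v^\star}\brk{e^{\tri{\theta,v^\star}/\beta}}\le e^{1/(2\beta^2 d)}$ for $\nrm{\theta}\le 1$ gives a per-query probability $\le 2qe^{1/(2\beta^2 d)}$, hence by a union bound $\Pr[\text{some query revealing}]\le 2Tqe^{1/(2\beta^2 d)}$. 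Setting $N\ldef\min\crl{e^{\beta^2 d/2},e^{1/(2\beta)},\Cstar}$: if $\Tdata<Y/8$ and $T<cN$ for a small constant $c$, then $2Tqe^{1/(2\beta^2 d)}<1/2$ --- the $e^{1/(2\beta^2 d)}$ factor is $O(1)$ in every regime where $N$ exceeds an absolute constant (this is where $\Cstar\ge 2$ enters: it forces $\beta^2 d=\Omega(1)$ whenever $\Cstar$ or $e^{1/(2\beta)}$ is the minimizer), while if $N=O(1)$ the target bound $T\ge\Omega(N)$ reduces to $T\ge 1$, which also holds since a zero-query algorithm is instance-oblivious and thus fails. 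Finally, absent a revealing query and with $<Y/8$ responses sampled, $y_0$ is still uniform over $\ge 7Y/8$ responses, so $\En[\pihat(y_0)]\le 8/(7Y)<1/4$. In all cases $\Alg$ outputs $\pihat$ with $\pihat(y_0)\le 1/4$ with probability $\ge 1/2$, contradicting the $(\veps,\delta)$-guarantee; hence $\Tdatafull\ge Y/8$ or $\Tcompfull\ge\Omega(N)$.

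\emph{The hard part.} The delicate step is the ``no adaptivity helps'' claim, with two ingredients: the MGF bound $\En_{v^\star}[e^{\tri{\theta,v^\star}/\beta}]\le e^{1/(2\beta^2 d)}$ for $v^\star$ uniform on $\bbS^{d-1}$ (which pins the exponent $\beta^2 d$, and in a discretized version would demand a near-orthogonal packing of cardinality $e^{\Theta(\beta^2 d)}$); and making rigorous that on the no-reveal event the learner's adaptively chosen parameters are necessarily poorly aligned with $v^\star$, so the per-instance normalizations coincide up to constants and the transcript carries no information about the hidden pair. Threading this through while simultaneously respecting the coverage cap $\Cstar$ across all regimes of $(\beta,d,\Cstar)$ is the main obstacle; everything else is bookkeeping.
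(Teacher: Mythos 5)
Your strategy is the same as the paper's: a needle-in-a-haystack family with a single hidden response whose feature is a uniformly random direction $\thetastar$ on the sphere, indistinguishability until that response is surfaced by the oracle, and spherical concentration to bound the per-query probability of surfacing it. Your MGF route $\En_{v^\star}[e^{\langle\theta,v^\star\rangle/\beta}]\leq e^{O(1/(\beta^2 d))}$ in place of the paper's cap-volume tail bound plus case split is fine modulo constants, and your case analysis on $\beta^2 d$ correctly handles the regime where the MGF bound degenerates. However, two steps have genuine gaps.

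First, your reference policy spreads the mass $1-q$ uniformly over $\cY\setminus\{y_0\}$, so every non-revealing sample is a uniform draw from $\cY\setminus\{y_0\}$ and eliminates one candidate for $y_0$. Your closing step ("$y_0$ is still uniform over $\geq 7Y/8$ responses") charges only the $\Tdatafull<Y/8$ reward queries for this elimination, but the $\Tcompfull$ sampling queries eliminate candidates too, and you have only assumed $\Tcompfull<cN$ with $N=\min\{e^{\beta^2 d/2},e^{1/(2\beta)},\Cstar\}$, a quantity with no relation to $Y$. When $Y\ll N$ an algorithm can identify $y_0$ by elimination without ever triggering a revealing query, so your contradiction fails and your instance does not witness the theorem as stated. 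The paper avoids this by putting all of the non-hidden mass on one fixed null response $0$, so every non-revealing answer is $0$ and carries no information; you need the same modification.

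Second, the crux of the argument---that conditioned on no revealing query the transcript is "essentially independent" of $(y_0,v^\star)$---is left as an assertion. Conditioning on the no-reveal event tilts the posterior of $v^\star$ away from the queried directions, and the normalization $qe^{\langle\theta,v^\star\rangle/\beta}+1-q$ makes even the law of the non-revealing answers instance-dependent, so "within a constant factor of $1$" is not strong enough to justify a union bound over $T$ adaptively chosen queries. The paper discharges this with a hybrid argument: it defines $\Algbar^{[q]}$ whose first $q$ oracle queries are answered by the null response and zero reward, observes that the $(q+1)$-th query of $\Algbar^{[q]}$ is then \emph{exactly} independent of the instance, bounds $\Pr[\pihat^{[q]}\neq\pihat^{[q+1]}]$ by the probability that this query's true answer is non-null, and telescopes. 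Your sketch needs this hybrid (or an equivalent explicit coupling) to be a proof; as written, the "no adaptivity helps" claim is precisely the step that is assumed rather than established.
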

For simplicity, consider the regime where 
$d\geq{}\beta^{-3}$. Then \cref{thm:coverage} shows that any algorithm
needs $\Tcompfull \geq
\Omega\prn[\big]{\min\crl[\big]{e^{\beta^{-1}/2}, \Ccov(\pistarb)}}$ to
achieve non-trivial data-efficiency $\Tdatafull \ll \abs{\cY}$; note
that the presence of $e^{\beta^{-1}}$ in the lower bound is
fundamental, as we always have
$\Ccov(\pistarb)\leq\exp(\nicefrac{\Rmax}{\beta})$. We emphasize that
this construction uses a single prompt.

The intuition for the
construction in \cref{thm:coverage} is as
follows. There is a single ``hidden'' response $\ystar$ that the algorithm
must discover to achieve high reward. Because the base policy places low probability on this
response, we are
unlikely to sample $\ystar\sim\pitheta(\cdot\mid{}x)$ unless $\tri*{\theta,\thetastar}\geq{}1-\beta$. This leaves the
algorithm designer with two options: (i) brute-force search over
$\theta\in\ball$ until we find
$\tri*{\theta,\thetastar}\geq{}1-\beta$, which requires an exponential
number of oracle queries in the dimension $d$, or (ii) eat the cost of
the coverage coefficient by drawing roughly $\Ccov(\pistarb)$
responses $y\sim\piref(\cdot\mid{}x)$ until we observe $\ystar$.

\section{Efficient Online Alignment via Inference-Time Exploration}
\label{sec:algorithms}

\cref{thm:coverage} serves as a skyline, showing that coverage (hidden
knowledge) for the base
policy is essential for computationally
efficient \arxiv{online }alignment; note that various works have shown
that existing pre-trained models exhibit favorable coverage for tasks of interest
\citep{brown2024large,snell2024scaling,wu2024empirical}.
We now present our main algorithm, \spanalg, which achieves the computational skyline in 
\cref{eq:coverage_lower} without sacrificing the polynomial
data-efficiency achieved by (inefficient) active exploration algorithms such as \xpo.\loose

\vspace{-5pt}

\subsection{Algorithm: \spanalg}

\arxiv{%
\newcommand{\optioni}{\textcolor{blue!70!black}{\textsc{Option I}}}
\newcommand{\optionii}{\textcolor{blue!70!black}{\textsc{Option II}}}
\newcommand{\algi}{\hspace{\algorithmicindent}}

\arxiv{\begin{algorithm}[tp]}
\caption{$\spanalg$}
\label{alg:spanner}
\begin{algorithmic}[1]
  \Statex[0]\multiline{ {\bfseries input:}
    Base policy $\piref$, KL-regularization parameter $\beta>0$,
    number of spanner rounds $\nspan\in\bbN$, number of exploration rounds $\nexp\in\bbN$,
    failure probability $\delta\in(0,1)$.
  }
  \State Define $\vepsstat\ldef
  c\cdot\sqrt{d\Rmax^2\log(B\Rmax^{-1}\delta^{-1}\nexp)}$ for abs.
  constant $c>0$.
  \State Set $\lambda\gets{}(\nicefrac{\Rmax}{B})^2$ and $\nu\ldef{}\beta/\vepsstat$.
   \hfill\algcommentlight{Spanner
    params.}
  \label{line:spanner_params}
  \State Set $\Mrej\ldef{}8e^2\cdot\Ccov(\pistarb)$ and
  $\deltarej\ldef{}\nexp^{-1}$.\hfill\algcommentlight{Rejection
    sampling params.} \label{line:rejection_params_spanner}
  \Statex[0] \algcommentbig{Spanner construction phase}
  \State Initialize dataset $\cDspan\gets\crl{\emptyset}$ and $\Psispan\gets\crl{\emptyset}$ and set
  $\Sigmaspan\gets\lambda\Id$.
    \For{iteration $t = 1,2,\dotsc,\Tprompt$}  \label{line:spanner_outer}
    \State Observe prompt $x\ind{t}\sim\rho$.
 \For{iteration $i=1,2,\ldots,\Tspan$} \label{line:spanner}
 \State Sample $(y\ind{t,i}_1,
    y\ind{t,i}_2)\sim\piref(\cdot\mid{}x\ind{t})$.
  \If{$\nrm*{\phidel(x\ind{t},y_1\ind{t,i},y_2\ind{t,i})}_{\Sigmas^{-1}}>\nu$} \hfill\algcommentlight{$\phidel(x,y_1,y_2) := \phi(x,y_1)-\phi(x,y_2).$}\label{line:spanner_if}
  \State Observe rewards $(r\ind{t}_1, r\ind{t}_2)$ for $(x\ind{t}, y_1\ind{t,i},
  y_2\ind{t,i})$.
  \State Update $\cDspan\gets\cDspan\cup\crl*{(x\ind{t}, y_1\ind{t,i},
  y_2\ind{t,i}, r_1\ind{t}, r_2\ind{t})}$ and
$\Psispan\gets\Psispan\cup\crl*{(x\ind{t},y_1\ind{t,i},y_2\ind{t,i})}$.
  \State $\Sigmas\gets\Sigmas+\phidel(x\ind{t},y_1\ind{t,i},y_2\ind{t,i})
  \phidel(x\ind{t},y_1\ind{t,i},y_2\ind{t,i})^{\trn}$.
\State \textbf{break}
\EndIf
\EndFor
  \EndFor
  \Statex[0] \algcommentbig{Exploration phase}
  \State Initialize dataset $\cDexp\ind{1}=\crl{\emptyset}$.
  \For{iteration $t = 1,2,\dotsc,\nexp$}
    \Statex[1] \algcommentbig{Estimate policy and reward model}
    \State\label{line:spanner_dpo} Fit reward model via regression:
  \begin{small}
    \begin{align}
      \label{eq:spanner_dpo}
      \theta\ind{t}\gets\argmin_{\theta\in\Theta}\sum_{(x,y_1,y_2,r_1,r_2)
        \in \cDexp\ind{t}\cup\cDspan} \prn*{
        \tri*{\theta, \phidel(x,y_1,y_2)}
      -(r_1-r_2)}^2.
      \end{align}
  \end{small}
  \Statex[1] \algcommentbig{Sample responses and update dataset}
  \State Define truncated reward function: \label{line:spanner_reward}
  \begin{align}
    r\ind{t}(x,y,y')\ldef{}
    \tri*{\theta\ind{t},\phidel(x,y,y')}\indic\crl[\big]{\nrm*{\phidel(x,y,y')}_{\Sigmas^{-1}}\leq\nu}.
  \end{align}
  \State Observe prompt $x\ind{t}\sim\rho$. Sample $y\ind{t}_2\sim\piref(\cdot\mid{}x\ind{t})$ and observe
  reward $r\ind{t}_2$.
      \Statex[1]\algcommentlight{Defines policy
    $\pihat\ind{t}(\cdot\mid{}x) \sim
    \rejection_{\beta,\Mrej,\deltarej}(r\ind{t}(x,\cdot, y')\midsem
    x, \piref)$ for $y'\sim\piref(\cdot\mid{}x)$.}
  \State\mbox{Sample $y\ind{t}_1\sim
    \rejection_{\beta,\Mrej,\deltarej}(r\ind{t}(x\ind{t},\cdot, y_2\ind{t})\midsem
    x\ind{t}, \piref)$ and observe reward
    $r\ind{t}_1$.}\label{line:spanner_sampling}
  \State Update dataset: $\cDexp\ind{t+1}\gets\cDexp\ind{t}\cup\crl{(x\ind{t}, y_1\ind{t},
    y_2\ind{t}, r_1\ind{t}, r_2\ind{t})}$.
    \EndFor
    \State \textbf{return}
        $\pihat \sim\unif\prn*{\pihat\ind{1},\ldots,\pihat\ind{\nexp}}$.
\end{algorithmic}
\end{algorithm}

}

\arxiv{%
\begin{algorithm}[ht]
\caption{$\rejectionargs$}
\label{alg:rejection}
\begin{algorithmic}[1]
  \Statex[0]\multiline{ {\bfseries input:}
    Function $f$, prompt $x$, base policy $\piref$, parameter
    $\beta>0$, rejection threshold $M>0$, failure probability $\delta\in(0,1)$.}
  \State Let $N\ldef{}4M\log(4\delta^{-1})$.
  \Statex[0] \algcommentbig{Estimate normalization constant}
  \State Sample $y_1,\ldots,y_N\sim\piref(\cdot\mid{}x)$ \iid
  \State Set $\Zhat\ldef{}\frac{1}{N}\sum_{i=1}^{n}\exp\prn*{\beta^{-1}f(x,y_i)}$.\label{line:rejection_est}
  \Statex[0] \algcommentbig{Rejection sampling}
  \For{iteration $i = 1,2,\dotsc,N$}\label{line:rejection_for}
  \State Sample $y\sim \piref(\cdot\mid{}x)$ and $\xi\sim\Ber\prn*{\nicefrac{\exp\prn*{\beta^{-1}f(x,y)}}{\Zhat{}M}}$.
  \State If $\xi=1$, \textbf{return} $y$.
    \EndFor
    \State \textbf{return}
    $y\sim\piref(\cdot\mid{}x)$.\hfill\algcommentlight{Failure event;
      occurs with low probability.}
\end{algorithmic}
\end{algorithm}

}

\arxiv{\spanalg (\cref{alg:spanner}) consists of
two phases, a \emph{spanner computation phase} and an
\emph{exploration phase}.\loose} \vspace{-5pt}
\paragraph{Spanner phase} Define \emph{relative features} via
$\phidel(x,y,y')=\phi(x,y)-\phi(x,y')$; we use these features throughout the algorithm because---per \cref{eq:reward_difference}---the
difference in rewards $\rstar(x,y)-\rstar(x,y')$ is linear under
\cref{ass:realizable}. In the first
phase, the algorithm aims to compute a \emph{spanner}: a small
collection $\Psispan$ of tuples $(x,y,y')$ such that the\arxiv{ second moment}
matrix $\Sigmas=\lambda\Id +
\sum_{(x,y,y')\in\Psispan}\phidel(x,y,y')\phidel(x,y,y')$ covers the
feature space in directions that have high probability under the
optimal KL-regularized policy $\pistarb$.%
To build the spanner, the
algorithm proceeds in $\Tprompt$ rounds, where at each round
$t\in\brk{\Tprompt}$, we sample $x\ind{t}\sim\rho$, then for each $i\in\brk{\Tspan}$
sample an independent pair
$(y_1\ind{t,i},y_2\ind{t,i})\sim\piref(\cdot\mid{}x\ind{t})$ and check if
$\nrm*{\phidel(x\ind{t},y_1\ind{t,i},y_2\ind{t,i})}_{\Sigmaspan^{-1}}\geq\nu$
for an accuracy parameter $\nu$; whenever this occurs, we query the
reward oracle for $y_1\ind{t,i}$ and $y_2\ind{t,i}$ to receive $(r_1\ind{t},r_2\ind{t})$ and add
$(x\ind{t},y_1\ind{t,i},y_2\ind{t,i},r_1\ind{t},r_2\ind{t})$ to a dataset
$\cDspan$ for use in the second phase, then proceed to the next round $t+1$. This process ensures that: (i) the matrix $\Sigmaspan$ covers $\pistarb$ well, in the sense that\loose
\begin{align}
  \label{eq:spanner_body}
\bbP_{x\sim\rho,y\sim\pistarb(\cdot\mid{}x),y'\sim\piref(\cdot\mid{}x)}\brk*{\nrm*{\phidel(x,y,y')}_{\Sigmaspan^{-1}}>\nu}\approxleq
  \frac{\poly(d,\nu^{-1})}{\Tprompt} + \frac{  \Ccov(\pistarb)}{\Tspan},
\end{align}
and (ii) the size of the spanner stays uniformly bounded as
$\abs*{\Psispan}\leq\poly(d,\nu^{-1})$. These properties imply that if
we estimate $\thetastar$ using least squares on any dataset
$\cD\supset\cDspan$, the resulting estimator will have high accuracy
on directions covered by $\pistarb$, up to the error term in
\cref{eq:spanner_body}. Critically, the size of the spanner---and
hence the number of reward queries $\Tdata$---is uniformly bounded by
$\poly(d,\nu^{-1})$, \emph{irrespective} of $\Tspan$. This means
that the second error term in \cref{eq:spanner_body} can be made arbitrarily small
by increasing inference-time computation (i.e. $\Tspan$), without
increasing the number of reward oracle queries or prompts.

\paragraph{Exploration phase} In the exploration phase, \spanalg performs on-policy exploration in
order to ``fill in'' directions that are not well-covered by the
spanner. This phase proceeds for $\Texp$ rounds, and alternates
between (i) computing an estimate $\theta\ind{t}$ in
\cref{line:spanner_dpo} via\footnote{This is
      equivalent to minimizing the \dpo loss: $\sum\arxiv{_{(x,y_1,y_2,r_1,r_2)
        \in \cDexp\ind{t}\cup\cDspan}} \prn[\big]{
        \beta\log\frac{\pitheta(y_1\mid{}x)}{\piref(y_1\mid{}x)} -
        \beta\log\frac{\pitheta(y_2\mid{}x)}{\piref(y_2\mid{}x)}
        -(r_1-r_2)}^2$.}
    \arxiv{\loose
\[
\theta\ind{t} = \argmin_{\theta\in\Theta}\sum_{(x,y_1,y_2,r_1,r_2)\in\cDexp\ind{t}\cup\cDspan}
      \prn*{
        \tri*{\theta,\phi(x,y_1)-\phi(x,y_2)}
        -(r_1-r_2)}^2
    \]}
    and (ii) updating the dataset $\cDexp\ind{t}$, by sampling a pair
    $(y_1\ind{t},y_2\ind{t})$ (given $x\ind{t}$) from a \emph{truncated
      softmax policy} parameterized by $\theta\ind{t}$ and querying the
    reward oracle for $(r_1\ind{t}, r_2\ind{t})$. The truncated
    softmax policy $\pibt$ is a new type of exploratory policy
    which---to our knowledge---is novel to this work, and induces a joint distribution over a pair
    $(y,y')\mid{}x$ via
    $\pibt(y,y'\mid{}x)=\pibt(y\mid{}x,y')\piref(y'\mid{}x)$,
    where
    \begin{align}
      \label{eq:truncated_softmax_body}
\pibt(y\mid{}x,y')\propto\piref(y\mid{}x)\cdot\exp\prn*{\beta^{-1}\tri*{\theta\ind{t},\phidel(x,y,y')}\indic\crl[\big]{\nrm*{\phidel(x,y,y')}_{\Sigmaspan^{-1}}\leq\nu}
      }.
    \end{align}
    Without the indicator in \cref{eq:truncated_softmax_body}, this
    coincides with the standard softmax policy
    $\pi_{\theta\ind{t}}(y\mid{}x)$, but the indicator ``truncates''
    the reward in directions that are uncertain according to the
    spanner. Truncation allows \spanalg to
    proceed using only a \emph{weak} sampling oracle\arxiv{ (\cref{def:oracle})}: whenever
    the spanner phase succeeds, we are guaranteed that
    $\frac{\pibt(y\mid{}x,y')}{\piref(y\mid{}x)}\approxleq\Ccov(\pistarb)$
    for ``most'' pairs $(x,y')$ (\cref{lem:truncated_density}). This means we can use
    \emph{rejection sampling} (\rejection; \cref{alg:rejection}) at
    inference-time to
    transform samples from $\piref$ into samples from
    $\pibt(y\mid{}x,y')$, with computational cost
    $\Tcomp=\bigoht(\Ccov(\pistarb))$ per round.\footnote{For a generic
      parameter $\theta$, we have
      \arxiv{$\frac{\pitheta(y\mid{}x)}{\piref(y\mid{}x)}\leq\Ccov(\pitheta)$,
      but in general, we can have}
      $\Ccov(\pitheta)\gg\Ccov(\pistarb)$. A central insight in
      our analysis is that we can control the density ratio by
      $\Ccov(\pistarb)$ even when $\theta\neq\thetastar$ by building a spanner and using it to truncate.\loose
    }
    We write $\pihat\ind{t}(y,y'\mid{}x) \approx
        \pibt(y,y'\mid{}x)$
to denote the distribution induced by rejection sampling with the
\rejection subroutine, and let
$\pihat\ind{t}(y\mid{}x)\ldef\En_{y'\sim\piref(\cdot\mid{}x)}\brk*{\pihat\ind{t}(y\mid{}x,y')}$. See
\cref{sec:rejection} for detailed background on \rejection.

    \begin{remark}[Average-case vs. uniform spanners]
      \label{rem:spanner}
Our usage of the term ``spanner'' is inspired but
  technically different from the notion of an optimal design or
  barycentric spanner, which has been widely used in the linear bandit
  literature \citep{awerbuch2008online,hazan2016volumetric,lattimore2020learning}. These notions provide a small collection of responses
  for which second moment matrix $\Sigma$ achieves \emph{uniform
    coverage} in the sense that
  $\max_{x,y,y'}\nrm*{\phidel(x,y,y')}_{\Sigma^{-1}}\leq\poly(d)$ or
  similar. For computational reasons, we cannot hope to achieve such a
  uniform guarantee, and instead settle for average-case coverage with
  respect to $\pist$.
    \end{remark}

    \arxiv{
      \begin{remark}[Anchor responses]
        \label{rem:anchor}
              \cref{alg:spanner} can be slightly simplified as follows:
      Instead of sampling $y_2\ind{t}\sim\piref(\cdot\mid{}x\ind{t})$,
      we can set $y_2\ind{t}=\mathfrak{y}\;\;\forall{}t$ for an
      arbitrary fixed ``anchor'' response $\mathfrak{y}$. This leads
      to the same guarantee, but does not fall
        into the sampling oracle framework in \cref{def:oracle}, as it
        requires observing the features $\phi(x\ind{t},\mathfrak{y})$
        for all $t$. However, we use this technique within our
        multi-turn algorithm \mtalg in \cref{sec:multi}.\loose
        
    \end{remark}
  }

\vspace{-5pt}
\subsection{Guarantee for \spanalg}

The main guarantee for \spanalg is as follows.

\begin{restatable}[Guarantee for \spanalg]{theorem}{spannermain}
  \label{thm:spanner}
    For any $\veps>0$ and $\delta\in(0,1)$, by choosing $\Tprompt$,
    $\Tspan$, and $\nexp$
  appropriately, \cref{alg:spanner} learns a policy with
  $\En_{\pihat\sim\unif\prn{\pihat\ind{1},\ldots,\pihat\ind{\nexp}}}\brk[\big]{\Jbeta(\pistarb)-\Jbeta(\pihat)}\leq\veps$ with probability at least
  $1-\delta$, and achieves the following data efficiency and oracle efficiency
  bounds:\loose
      \[
    \Tdatafull =
    \bigoht\prn*{\frac{\Rmax^2}{\beta}}\cdot\frac{d^2\log^2(\delta^{-1})}{\min\crl{\veps,\beta}},
    \mathand     \Tsamplefull  = 
    \bigoht\prn*{\Ccov(\pistarb)\cdot\frac{\Rmax^2}{\beta^2}}\cdot\Tdatafullsq.
  \]
  Moreover, (1) for any $x\in\cX$, one can generate a sample
$y\sim\pihat(\cdot\mid{}x)$ from the returned policy using at most
$\Tsample=\bigoht\prn[\big]{\Ccov(\pistarb)}$ weak sampling
oracle queries; (2) the algorithm uses  at
most $  \bigoht\prn*{\frac{\Rmax^4}{\beta^3}}\cdot
  \frac{d^2 \log^2
  \prn*{\delta^{-1}}}{\veps}$ prompts.
\end{restatable}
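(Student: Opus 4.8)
The plan is to analyze the spanner-construction and exploration phases of \cref{alg:spanner} separately and then combine them through a regularized-regret decomposition. I work throughout on the high-probability event carrying the least-squares confidence bounds: since \cref{ass:realizable} makes the regression target exactly linear with $\Rmax$-bounded noise (\cref{eq:reward_difference}, \cref{lem:reward_difference}), self-normalized concentration gives, at each exploration round $t$ and for every triple $(x,y,y')$, that $\abs*{\tri*{\theta\ind{t}-\thetastar,\phidel(x,y,y')}}\le\nrm*{\phidel(x,y,y')}_{(\Sigma\ind{t})^{-1}}\vepsstat$, where $\Sigma\ind{t}\psdgeq\Sigmaspan$ is the design matrix of $\cDexp\ind{t}\cup\cDspan$; in particular any triple passing the truncation test $\nrm*{\phidel}_{\Sigmaspan^{-1}}\le\nu$ has estimation error at most $\nu\vepsstat=\beta$.

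For the spanner phase, the first step is the bound $\abs*{\Psispan}\le\poly(d,\nu^{-1})$, crucially \emph{independent of $\Tspan$}: each appended relative feature has $\nrm*{\phidel}_{\Sigmaspan^{-1}}>\nu$, so $\log\det\Sigmaspan$ grows by a constant per append yet remains $\le d\log(\lambda+\poly(\abs*{\Psispan}))$ since features have norm at most $2$; this elliptic-potential bound is what lets us shrink the error term $\Ccov(\pistarb)/\Tspan$ in \cref{eq:spanner_body} by spending inference (larger $\Tspan$) without spending reward queries. The second step is \cref{eq:spanner_body} together with a second-moment refinement: the inner loop fails to append at round $t$ only if a fresh pair from $\piref\times\piref$ stays ``certain'' on all $\Tspan$ tries, so a Freedman/union bound across the $\Tprompt$ outer rounds bounds the population probability that such a pair is uncertain under the \emph{final} $\Sigmaspan$ by $\approxleq\log(\cdot)/\Tspan+\poly(d,\nu^{-1})/\Tprompt$; changing the first coordinate's law from $\piref$ to $\pistarb$ at cost $\Ccov(\pistarb)$ yields \cref{eq:spanner_body}.

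Next I analyze \rejection (\cref{alg:rejection}), whose correctness rests on the density-ratio bound \cref{lem:truncated_density}: conditioned on the spanner succeeding, $\pibt(y\mid x,y')/\piref(y\mid x)\approxleq\Ccov(\pistarb)$ for every $y$ and all but a small fraction --- of order the right-hand side of \cref{eq:spanner_body} --- of pairs $(x,y')$. The key insight is that the truncation in \cref{eq:truncated_softmax_body} caps the exponent: on certain triples $\tri*{\theta\ind{t},\phidel}$ lies within $\beta$ of $\rstar(x,y)-\rstar(x,y')$, so the numerator of $\pibt(y\mid x,y')$ is at most $e\exp(\beta^{-1}(\rstar(x,y)-\rstar(x,y')))$, while its normalizer is at least $e^{-1}\exp(-\beta^{-1}\rstar(x,y'))\En_{y\sim\piref}\brk*{\exp(\beta^{-1}\rstar(x,y))\indic\crl*{\text{certain}}}$, which is $\Omega(1)$ times $\exp(-\beta^{-1}\rstar(x,y'))Z^\star(x)$ with $Z^\star(x)=\En_{y\sim\piref}\brk*{\exp(\beta^{-1}\rstar(x,y))}$, provided $\pistarb(\cdot\mid x)$ puts mass at most $\tfrac12$ on the uncertain set given $(x,y')$ --- true for most $(x,y')$ by \cref{eq:spanner_body}. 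Dividing, $\pibt(y\mid x,y')/\piref(y\mid x)\approxleq\pistarb(y\mid x)/\piref(y\mid x)\le\Ccov(\pistarb)$. With $\Mrej=8e^2\Ccov(\pistarb)$ and $N=\bigoht(\Ccov(\pistarb))$, \rejection then returns a sample within total-variation distance $\deltarej$ of $\pibt(\cdot\mid x,y')$ using $\bigoht(\Ccov(\pistarb))$ weak-oracle queries; since drawing $y\sim\pihat(\cdot\mid x)$ amounts to picking a round $t$ uniformly, sampling an anchor $y'\sim\piref(\cdot\mid x)$, and running \rejection once, this gives claim (1), and a careful accounting of \rejection's output law (\cref{sec:rejection}) absorbs the $\deltarej$-failures and the exceptional pairs at cost $O(\Rmax)$ each.

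Finally I assemble the regret bound. Since $\pibt(\cdot\mid x,y')$ exactly maximizes $\pi\mapsto\En_{y\sim\pi}\brk*{r\ind{t}(x,y,y')}-\beta\Dkl{\pi}{\piref}$, using joint convexity of KL to pass from $\pibt$ to the mixture $\pihat\ind{t}$ and cancelling the $y'$-dependent offset $\rstar(x,y')$ gives $\Jbeta(\pistarb)-\Jbeta(\pihat\ind{t})\le\En_{x,\,y'\sim\piref}\brk[\big]{\En_{y\sim\pistarb}\brk*{\Delta\rstar-r\ind{t}}+\En_{y\sim\pibt}\brk*{r\ind{t}-\Delta\rstar}}$, where $\Delta\rstar(x,y,y')\ldef\rstar(x,y)-\rstar(x,y')$ and $r\ind{t}$ is the truncated reward of \cref{line:spanner_reward}. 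Splitting each inner expectation by the truncation test, the uncertain parts are controlled by \cref{eq:spanner_body} (using the density-ratio bound to convert a $\piref$-probability into a $\pibt$-probability); for the certain parts I combine this first-order bound with the identity $\Jbeta(\pistarb)-\Jbeta(\pi)=\beta\En_x\brk*{\Dkl{\pi(\cdot\mid x)}{\pistarb(\cdot\mid x)}}$ to upgrade the linear dependence on prediction error to the ``fast'' squared rate, so these errors enter as $O(\vepsstat^2/\beta)\nrm*{\phidel}_{(\Sigma\ind{t})^{-1}}^2$ --- handled for $\pistarb$ by the second-moment spanner bound and for $\pibt$ by the elliptic-potential inequality $\sum_{t\le\Texp}\nrm*{\phidel\ind{t}}_{(\Sigma\ind{t})^{-1}}^2\approxleq d\log(\cdot)$ (after martingale concentration replacing the realized pair by its $\pibt$-conditional expectation). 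Choosing $\Tprompt,\Tspan,\Texp$ as functions of $\veps$ and tallying queries --- reward: $2\abs*{\Psispan}+2\Texp$; sampling: $\Tprompt\Tspan$ in the spanner phase plus $\Texp\cdot\bigoht(\Ccov(\pistarb))$ in the exploration phase; prompts: $\Tprompt+\Texp$ (claim (2)) --- yields the stated bounds, with $\min\crl*{\veps,\beta}$ appearing because certain-pair errors never exceed $\beta$. I expect the spanner analysis to be the main obstacle: proving \cref{eq:spanner_body} and its second-moment strengthening (needed for the $1/\veps$ rather than $1/\veps^2$ rate) with $\abs*{\Psispan}$ uniformly bounded in $\Tspan$ requires carefully coupling the inner-loop stopping behavior across outer rounds; a close second is making \cref{lem:truncated_density} uniform enough that the $O(\Rmax)$ contributions from exceptional pairs sum to $o(\veps)$.
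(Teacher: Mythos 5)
Your architecture matches the paper's almost step for step: the $\Tspan$-independent bound on $\abs*{\Psispan}$ via a log-det/elliptic-potential argument, the spanner quality guarantee \cref{eq:spanner_body} via Freedman plus a union over outer rounds, the density-ratio bound for truncated policies obtained by capping the exponent with the truncation test and lower-bounding the normalizer on the event that $\pistarb$ puts mass at most $1/2$ on uncertain pairs, the rejection-sampling implementation with $\Mrej=8e^2\Ccov(\pistarb)$, and the final query tallies (reward $2\abs*{\Psispan}+2\Texp$, sampling $\Tprompt\Tspan+\Texp\cdot\bigoht(\Ccov(\pistarb))$, prompts $\Tprompt+\Texp$) are all exactly the paper's \cref{lem:spanner_size,lem:spanner_quality,lem:truncated_density} and the bookkeeping in \cref{sec:fast_rate}.

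The one step where your described route has a genuine gap is the fast-rate regret decomposition. You start from the first-order bound $\Jbeta(\pistarb)-\Jbeta(\pibt)\le\En_{y\sim\pistarb}\brk*{\Delta\rstar-r\ind{t}}+\En_{y\sim\pibt}\brk*{r\ind{t}-\Delta\rstar}$ and then ``upgrade'' to squared errors, with the comparator-side quadratic ``handled for $\pistarb$ by the second-moment spanner bound.'' That surviving term is $\beta^{-1}\vepsstat^2\,\En_{y\sim\pistarb,y'\sim\piref}\brk[\big]{\nrm*{\phidel}_{\Sigmaspan^{-1}}^2}$, and the spanner only certifies $\nrm*{\phidel}_{\Sigmaspan^{-1}}^2\le\nu^2$ on most pairs, so the bound it yields is $\beta^{-1}\vepsstat^2\nu^2=\beta$ \emph{per round} --- it does not shrink as $\Texp$ grows, and the argument as written cannot reach regret $\veps\ll\beta$. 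The paper's \cref{lem:truncated_regret} avoids any comparator-side quadratic: for each anchor $y'$ it first replaces $\pistarb$ by the truncated-optimal policy $\pibst(\cdot\mid{}x,y')$ (paying only truncation probabilities, which the spanner makes small), and then the certain-part regret is the \emph{exact} identity $\beta\Dkl{\pibt(\cdot\mid{}x,y')}{\pibst(\cdot\mid{}x,y')}$ from \cref{lem:kl_regret}; Taylor-expanding the log-partition ratio with $e^{z}\le 1+z+z^2$ (valid because truncation forces $\abs*{\tri*{\theta-\thetastar,\phidb}}\le\nu\vepsstat\le\beta$) cancels the linear term and leaves only $\beta^{-1}\En_{(y,y')\sim\pibt}\brk[\big]{\tri*{\thetastar-\theta,\phidb}^2}$, an expectation under the \emph{exploration} policy alone, which the elliptic potential then sums to $\bigoht(d)$. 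You need that on-policy-only structure --- not a $\pistarb$-side second moment --- for the $1/(\beta\veps)$ rate to close.
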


On the computational side, we observe that the number of sampling oracle queries $\Tcompfull$ is controlled by the coverage coefficient $\Ccov(\pistarb)\leq\exp\prn*{\Rmax/\beta}$,
achieving the lower bound in \cref{thm:coverage}, and the total runtime of the
algorithm scales as $\poly(d,\Ccov(\pistarb),\veps^{-1},\beta^{-1},\log(\delta^{-1}))$.\footnote{Indeed,
    outside of queries to the sampling oracle, the only runtime
    overhead in \spanalg is (i) minimizing the \dpo loss (linear least
    squares), (ii) inverting the second moment matrix, and (iii)
    evaluating various inner products.} Furthermore, the algorithm only requires a \emph{weak sampling oracle}
  (\cref{def:oracle}), and hence can be
    viewed as performing exploration purely at inference time, with an
    iteratively updated reward model.%
    \arxiv{ Whether the polynomial
dependence on problem parameters for $\Tcompfull$ can be improved is an interesting question.}\loose

On the statistical side, our bound on $\Tdatafull$ matches the
minimax rate for linear bandits in terms of dependence on $d$ and
$\Rmax$ when $\veps\leq\beta$
\citep{lattimore2020bandit}, and is \emph{independent of the coverage
coefficient}, reflecting active exploration. The number of prompts used by the algorithm is also
independent of the coverage coefficient, though it is slightly larger than
the number of reward queries. We observe that \cref{thm:spanner}
achieves a \emph{fast rate} in the sense that
$\Tdatafull\approxleq\frac{1}{\beta\veps}$ when $\veps\leq\beta$,
improving over the $\Tdatafull\approxleq{}\frac{1}{\veps^2}$ rate for
\xpo (\cref{eq:xpo_data}) and other prior
    work \citep{xiong2024iterative,xie2024exploratory,cen2024value}; this is
    a secondary benefit of working with truncated policies (see
    \cref{eq:truncated_teaser}), and is facilitated by the strong
    convexity induced by regularization; we view it as analogous to $\frac{1}{\Delta\veps}$-type rates for bandits with
gap-$\Delta$ \citep{lai1985asymptotically,lattimore2020bandit}. Concurrent work of \citet{zhao2025logarithmic} achieves
a similar fast rate, but their algorithm is not computationally
efficient in our framework.\loose

\noindent\textbf{Analysis techniques.} \arxiv{As alluded
to in the prequel, the} key algorithmic ideas to ensure that
  $\Tcompfull$---but not the number of reward queries and prompts---is controlled by $\Ccov(\pistarb)$ are: (i) even though $\Tspan$ can grow with $\Ccov(\pistarb)$,
  the size of the spanner $\Psispan$ (and number of reward queries in
  the spanner phase) is
  bounded as $\poly(d,\beta^{-1}, \Rmax)$; and (ii) the truncated
  policy construction ensures we can simulate $\pibt$ using
  rejection sampling with $\bigoht(\Ccov(\pistarb))$ draws from
  $\piref$. Our regret analysis makes use of the following decomposition for truncated softmax policies, which may be of
independent interest (see \cref{lem:truncated_regret_simple}\arxiv{ for the
full statement}): Given a parameter $\theta$, define $\vepsstat^2
\ldef \nrm*{\theta-\thetastar}_{\Sigmaspan}^2$, and for $\veps>0$, let
\[
    \cXspan(\veps) \ldef
\crl*{x\in\cX\mid{}\bbP_{(y,y')\sim\piref(x)}\brk*{\nrm*{\phidel(x,y,y')}_{\Sigma^{-1}}>\nu} \leq\veps}.
\]
Then under \cref{ass:realizable,ass:norm}, if
$\nu\leq\beta/\vepsstat$, we have that for all $\veps>0$,
\begin{align}
 \Jbeta(\pist) - \Jbeta(\pibtheta)
  &\leq
\frac{1}{\beta}\En_{(y,y')\sim\pibtheta(\cdot\mid{}x)}\brk*{\tri*{\theta-\thetastar,\phidel(x,y,y')}^2}\label{eq:truncated_teaser}\\
  &~~~~+ \bigoh\prn*{\Rmax\Ccov(\pistarb)}\cdot\veps
    + \bigoh\prn{\Rmax}\cdot \bbP_{x\sim\rho}\brk*{x\notin\cXspan(\veps)}
    .\notag
\end{align}
The first term above is controlled by the (on-policy) exploration
phase; by virtue of the square, this leads to the fast
$\frac{1}{\beta\veps}$ rate. Meanwhile, the last two terms are controlled
by the spanner construction: for
$\vepsspan\approx\frac{1}{\Tspan}$, we have that $\bbP_{x\sim\rho}\brk*{x\notin\cXspan(\vepsspan)}\approxleq\frac{1}{\Tprompt}$.

\section{Training-Time Interventions Cannot Be Computationally Efficient}
\label{sec:computational}

\spanalg (\cref{alg:spanner}) leverages
  inference-time computation with the pre-trained model to reduce the
  effective search space for exploration. As a result, the algorithm is
  \emph{improper} in the sense that it does not use linear softmax
  policies $\pitheta\in\Pi$ to draw
  the responses for which it queries
  the reward oracle; this is true for the
spanner phase (the algorithm samples
$y\sim\piref(\cdot\mid{}x)$ properly, but adaptively chooses whether
or not to query the reward oracle $y$), and for the exploration phase
(due to the use of truncation and rejection sampling). We contrast this with the notion of \emph{proper exploration}.\loose
\begin{definition}[Proper alignment algorithm]\label{def:proper-exploration}
An online alignment algorithm is \emph{proper} if, for each
$t\in\brk{\Trounds}$ and $i\in\brk{N}$, the algorithm queries the
reward oracle with
$y\ind{t}_i\sim\pi_{\theta\ind{t}_i}(\cdot\mid{}x\ind{t})$ for some
$\theta_i\ind{t}\in\Theta$.%
\footnote{The parameter $\theta_i\ind{t}$ may
  be chosen adaptively based on the previously sampled responses and rewards.}
\end{definition}
Proper algorithms are closely related to the notion of
\emph{training-time} interventions for exploration, in the sense that
any algorithm that computes exploratory policies $\pi\ind{t}$ by solving %
\arxiv{\[
\pi\ind{t} = \argmin_{\pi\in\Pi}L_{\cD}\ind{t}(\pi)
\]}
for some loss function $L_{\cD}\ind{t}(\pi)$ that depends on the dataset
$\cD$ collected so far will inevitably be proper in the sense of
\cref{def:proper-exploration}---no matter how clever we are about
designing the loss. This includes \onlinedpo \citep{guo2024direct}, 
\xpo \citep{xie2024exploratory,cen2024value}, and many others \citep{zhang2024self,liu2024provably,gao2024rebel}. We
show that under the Exponential Time Hypothesis (ETH), no such
algorithm can be simultaneously data-efficient and computationally
efficient.

\begin{restatable}[Proper \arxiv{alignment }algorithms cannot be computationally efficient]{theorem}{complower}
  \label{thm:training}
Under the Randomized \arxiv{Exponential Time Hypothesis}%
(\cref{conj:randeth}), there is no proper alignment algorithm, even
with a \emph{strong oracle}\arxiv{ (\cref{def:oracle})} and a Euclidean projection oracle for $\Theta$, that (i)
has $\Tdatafull \leq
\poly(d,\beta^{-1},\veps^{-1},\delta^{-1})$ and $\Tsamplefull \leq \poly(d, \exp\prn{\beta^{-1}},
  \veps^{-1},\delta^{-1})$ under \cref{ass:norm} (with $\Rmax=1$, $B=\sqrt{d}$),\footnote{Concretely, we use the parameter
  set
  $\Theta=\crl[\big]{\theta\in\bbR^{d}\mid{}\nrm*{\theta}_{\infty}\leq{}1}$.}
and (ii) has runtime
$\poly(d,\exp\prn{\beta^{-1}},\veps^{-1},\delta^{-1})$.\loose
\end{restatable}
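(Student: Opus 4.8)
The plan is to exhibit a family of linear--softmax alignment instances that encodes a problem hard under the Randomized ETH (concretely, $3$-SAT --- sparsified so that the hard instances have $O(n)$ clauses, and composed with the PCP theorem if a satisfiability gap turns out to be needed), and to argue that a proper, data-efficient, and runtime-efficient algorithm on these instances would solve $3$-SAT in time $2^{o(n)}$. Given a formula $\varphi$ on $n$ variables, the reduction builds an instance with parameter space $\Theta=\crl{\theta:\nrm*{\theta}_\infty\le1}\subset\RR^d$ for $d=\poly(n)$, with $\Rmax=1$, $B=\sqrt d$, and with $\beta$ chosen so that $\exp(\beta^{-1})$ is sub-exponential in $n$ (so that the admissible runtime/sampling budget $\poly(d,\exp(\beta^{-1}),\veps^{-1},\delta^{-1})$ is $2^{o(n)}$ for constant $\veps,\delta$). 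The feature embedding $\phi$, the reference policy $\piref$, and the parameter $\thetastar$ (hence $\rstar(x,y)=\tri*{\thetastar,\phi(x,y)}$) are all explicit, efficiently-evaluable functions of $\varphi$ --- crucially, \emph{without} reference to any satisfying assignment --- so that both oracles of \cref{def:oracle} can be simulated in polynomial time from $\varphi$ alone.

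The heart of the argument is the design of $\phi$, which must simultaneously guarantee: (a) \emph{the instance is easy information-theoretically and for improper algorithms}: the clauses of $\varphi$ are recoverable from a polynomial number of reward queries, $\Ccov(\pistarb)$ is sub-exponential in $n$, and hence by \cref{thm:spanner} \spanalg attains regret $\le\veps$ within the stated budgets; (b) \emph{learning $\thetastar$ requires on-policy refinement along a ``hidden'' direction}: the near-optimal policy $\pistarb$ places $\Omega(1)$ of its mass on a response (or structured family) whose feature points in a direction orthogonal to every ``cheap'' direction reachable from $\piref$, so that no estimator built only from spanner data is accurate there and $\veps$-near-optimality forces the learner to actually explore that direction; and (c) \emph{no cheaply-computable softmax policy reaches the hidden direction}: for every $\theta$ computable in time $2^{o(n)}$ without solving $\varphi$, the bare softmax policy $\pitheta$ concentrates its mass on ``decoy'' responses that are uninformative about the hidden direction, whereas any $\theta$ whose $\pitheta$ does reach it must encode a satisfying assignment of $\varphi$. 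Property (c) is exactly where properness is the obstruction: \spanalg never explores with a bare softmax policy --- its truncated policy $\pibt$ of \cref{eq:truncated_softmax_body}, simulated by rejection sampling from $\piref$, concentrates on directions that are simultaneously spanner-covered and reward-relevant \emph{regardless} of whether $\theta\ind{t}\approx\thetastar$ (this is the content of the decomposition in \cref{eq:truncated_teaser} together with \cref{lem:truncated_density}), so the troublesome decoy directions are simply clipped away; a proper algorithm, bound by \cref{def:proper-exploration} to reward-query only responses $y\sim\pitheta(\cdot\mid x)$, has no substitute for this and must ``name'' a useful direction explicitly --- i.e., solve $\varphi$.

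With (a)--(c) in place the reduction is standard: on input $\varphi$ we run the hypothesized proper algorithm with, say, constant $\veps,\delta$ on the simulated instance. With probability $\ge 1-\delta$ it returns $\pihat$ with $\Jbeta(\pistarb)-\Jbeta(\pihat)\le\veps$; by (b) this means $\pihat$ places constant mass on the ``good'' responses when $\varphi$ is satisfiable, and by (c) the algorithm can only have arrived at such a policy (or reward-queried a good response en route) after having determined a satisfying assignment, which we then extract --- either by drawing a few samples from $\pihat$, or by inspecting the finitely many parameters $\theta\ind{t}$ used in its reward queries. Running in time $2^{o(n)}$, this yields a randomized $2^{o(n)}$-time decision procedure for $3$-SAT, contradicting the Randomized ETH. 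The strengthenings in the statement cost nothing: the $\ell_\infty$-box $\Theta$ admits a trivial Euclidean projection oracle, and since the argument only exploits the algorithm's reward-query behavior it is unaffected by upgrading the weak oracle to the strong one of \cref{def:oracle}.

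I expect the main obstacle to be the construction itself --- in particular, reconciling property (a) (coverage, and hence \spanalg, small enough that the instance is genuinely solvable within the stated budgets) with properties (b) and (c) (a direction hidden from every polynomially-computable softmax policy yet reachable by truncated exploration), and choosing the scaling of $n$, $d$, and $\beta^{-1}$ so that the $\poly(d,\exp(\beta^{-1}))$ budget sits strictly below the $2^{\Omega(n)}$ ETH lower bound. Designing the decoy gadget so that one can \emph{prove} ``$\varphi$-oblivious softmax policies explore only useless directions, but truncated policies explore the useful ones'' --- ideally by reducing the off-line step ``find a softmax policy that explores a new relevant direction'' to $3$-SAT --- is the technical core.
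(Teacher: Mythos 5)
Your high-level reduction target (embed a CSP in $\phi$ so that useful exploration forces the learner to name a near-optimal assignment via $\theta$) matches the paper's, and your idea of extracting an assignment from the exploration parameters $\theta\ind{t}$ is essentially what the paper does (it takes $\sgn(\thetahat)$ for the $\thetahat$ maximizing the mass on non-null responses). But there is a genuine gap at the center of your plan: you assert that $\thetastar$, and hence the reward function $\rstar(x,y)=\tri*{\thetastar,\phi(x,y)}$, is ``explicit, efficiently-evaluable \ldots without reference to any satisfying assignment.'' This cannot be right. If the reward oracle can be simulated in $\poly$ time from $\varphi$ alone, then the instance carries no computational hardness beyond what the sampling oracle already gives away; if instead $\rstar$ encodes a maximally satisfying assignment (as it must for the embedding to work), the reduction cannot simulate reward queries. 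The paper resolves this with a win--win simulation argument that is the real content of the proof: answer every reward query with $0$; the simulated run deviates from the true run only if some exploration policy $\pi_{\theta_t}$ places mass $\gtrsim 1/\Tdata$ on non-null responses, and by construction (a Max-$k$-DNF gadget with a null response $y_0$ absorbing almost all of $\piref$'s mass) any such $\theta_t$ already yields, via $\sgn(\theta_t)$, an assignment satisfying a $\gtrsim 1/\Tdata$ fraction of clauses. Properness is used exactly here -- the exploration distributions are guaranteed to be softmax policies $\pi_{\theta_t}$, so ``reaching a non-null response'' is equivalent to ``$\theta_t$ approximately satisfies $\varphi$.'' Without this argument your reduction has no way to run the hypothesized algorithm.

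A second, consequential omission: because the extracted assignment only satisfies a $\Omega(1/\Tdata)$ fraction of clauses, the reduction needs hardness of \emph{approximating} Max-$k$-DNF with a multiplicative gap that beats $\Tdata = \poly(d,\beta^{-1},\dots)$ -- i.e., distinguishing value $\geq m/n$ from $\leq m/n^{c}$ for arbitrary constant $c$, with $k=\Theta(\log n)$. Plain 3-SAT or a constant-gap PCP does not suffice (a random assignment already achieves a $2^{-k}$ fraction for constant $k$); the paper obtains the needed regime from Max-$k$-CSP inapproximability plus serial repetition. This is why the data-efficiency hypothesis $\Tdata\leq\poly(\cdot)$ is load-bearing, a dependence your sketch does not surface. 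Finally, your properties (a)--(c) about \spanalg{} and coverage are not needed for the theorem (the statement is a pure lower bound for proper algorithms, not a separation proved by exhibiting \spanalg's success on the same family), and your handling of the possibly improper \emph{output} policy is incomplete -- the paper closes this by distilling $\pihat$ back into $\Pi$ with log-loss behavior cloning.
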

Note that \cref{def:proper-exploration} does not require the final output policy $\pihat$ to be proper; the restriction is solely on the policies used to \emph{explore}. For contrast, we recall that since $\Ccov(\pistarb)\leq\exp(\beta^{-1})$ when
$\Rmax=1$, \spanalg achieves $\Tdatafull \leq
\poly(d,\beta^{-1},\veps^{-1},\delta^{-1})$ and $\Tsamplefull \leq \poly(d, \exp\prn{\beta^{-1}},
  \veps^{-1},\delta^{-1})$ under the conditions of
  \cref{thm:training}, and does so with time complexity
  $\poly(d,\exp\prn{\beta^{-1}},\veps^{-1},\delta^{-1})$.
  \cref{thm:training} shows that no proper alignment algorithm can
  achieve polynomial runtime in a similar fashion. In particular,
  while \xpo \citep{xie2024exploratory} achieves $\Tdatafull \leq
\poly(d,\beta^{-1},\veps^{-1},\delta^{-1})$, \cref{thm:training}
implies that it cannot be implemented efficiently for linear softmax
policies. This answers a question raised by
\citet{xie2024exploratory}, and gives a separation between
\arxiv{algorithms based on }training-time interventions and algorithms like \spanalg that
use additional inference-time computation to explore 
improperly. We remark that like \cref{thm:coverage}, this lower bound
uses only a single prompt.
\arxiv{\paragraph{Proof sketch}}
To prove \cref{thm:training}, we reduce from the Max-$k$-DNF problem,
embedding a $k$-DNF formula in $\phi(x,y)$ so that responses
correspond to clauses, and embedding a maximally satisfying assignment
in the hidden parameter $\thetastar$. Our construction ensures that
any \emph{proper} exploration policy $\pi_{\theta}$ places all but a vanishing fraction of mass
on a ``null'' response $y_0=\mb{0}$ (which is useless for gathering reward
information) unless $\theta$ corresponds to an
assignment that satisfies a large fraction of clauses. Directly
finding such a $\theta$ requires (approximately) maximizing the
underlying $k$-DNF formula, and the assumptions $\Tdatafull \leq
\poly(d,\beta^{-1},\veps^{-1},\delta^{-1})$ and $\Tsamplefull \leq \poly(d, \exp\prn{\beta^{-1}},
  \veps^{-1},\delta^{-1})$ ensure that we will not sample a
  non-null response $y_0\neq\mb{0}$ by chance (which could reveal information
  about\arxiv{ the assignment} $\thetastar$); from here, the
  hardness follows. Interestingly, \arxiv{the result uses}%
  hardness of approximation for Max-$k$-DNF in a somewhat non-standard
  parameter regime\arxiv{; we establish this in \cref{sec:maxkdnf} by
  reducing from Max-$k$-CSP and appealing to gap
  amplification.\loose}

\arxiv{\section{Computational Benefits of Multi-Turn Exploration}}
\label{sec:multi}

Our motivating example (\cref{rem:auto}) is the autoregressive
setting where $\cY=\cA^{H}$ for a token space $\cA$ and
horizon $H$ (so that responses $y=(a_1,\ldots,a_H)$ correspond to
sequences of tokens), and where $\piref$ is
explicitly represented as an autoregressive policy of the form %
\arxiv{\[
\piref(y\mid{}x) = \piref(a_{1:H}\mid{}x)=\prod_{h=1}^{H}\pirefh(a_h\mid{}x,a_{1:h-1}).
\]}
In what follows, we show how to specialize \mainalg to this
setting, then derive algorithms with \emph{improved} computational efficiency
by alternatively viewing this as a reinforcement learning problem
in a \emph{token-level MDP} where actions correspond to tokens
\citep{rafailov2024r}.

\subsection{Autoregressive Softmax Policies: Representational Issues
  and \mainalg}
\label{sec:autoregressive}
\newcommand{\Piseq}{\Pi_{\texttt{seq}}}
\newcommand{\Piauto}{\Pi_{\texttt{auto}}}
\newcommand{\pithetah}[1][\theta]{\pi^{\texttt{auto}}_{h,#1}}
\newcommand{\piauto}[1][\theta]{\pi^{\texttt{auto}}_{#1}}
\newcommand{\piautoh}[1][h]{\pi^{\texttt{auto}}_{#1,\theta}}
\newcommand{\piseq}[1][\theta]{\pi^{\texttt{seq}}_{#1}}
\newcommand{\phiseq}{\phi^{\texttt{seq}}}
\newcommand{\Thetaseq}{\Theta^{\texttt{seq}}}

When the base policy $\piref$ is autoregressive, it is natural to
learn a policy with the same autoregressive structure. We
consider the class
$\Piauto\ldef{}\crl[\big]{\piauto=(\piautoh[1],\ldots,\piautoh[H])\mid{}\theta_h\in\Theta_h\;\forall{}h}$
of autoregressive linear softmax policies given by
\begin{equation}
\label{eq:auto}
  \pithetah(a_h\mid{}x,a_{1:h-1})
  = \frac{\pirefh(a_h\mid{}x,a_{1:h-1})\exp\prn*{\beta^{-1}\tri*{\theta_h,\phi_h(x,a_{1:h})}}}{\sum_{a'\in\cA}\pirefh(a'\mid{}x,a_{1:h-1})\exp\prn*{\beta^{-1}\tri*{\theta_h,\phi_h(x,a_{1:h-1},a')}}},
\end{equation}
with 
$\piauto(a_{1:H}\mid{}x)\ldef\prod_{h=1}^{H}\pithetah(a_h\mid{}x,a_{1:h-1})$;
we assume
$\theta_h\in\Theta_h\subset\bbR^{d}$ with $\nrm*{\theta_h}\leq{}B$ and
$\nrm*{\phi_h(x,a_{1:h})}\leq{}1$, where each $\Theta_h$ is convex. This
parameterization corresponds to a standard deep
autoregressive model (e.g., GPT-2 architecture) in which the weights
for all but the last layer are frozen \citep{radford2019language}. For this setting, we use
the following computational oracle, which asserts that we can
sample from each conditional policy efficiently.\loose
\begin{definition}[Autoregressive sampling oracle]
  \label{def:oracle_autoregressive} \textnormal{\textbf{\underline{Setting I (strong oracle)}}:} In one query, the learner
  proposes a prompt $x\in\cX$, layer $h\in\brk{H}$, prefix
  $a_{1:h-1}\in\cA^{h-1}$, and parameter $\theta_h\in\Theta_h$, and receives a conditional sample
  $a_h\sim\pithetah(\cdot\mid{}x,a_{1:h-1})$ and the corresponding feature
$\phi_h(x,a_{1:h})$\arxiv{ for the sampled response}.\\
\textnormal{\textbf{\underline{Setting II (weak oracle)}}:} In one query, the learner
  proposes a prompt $x\in\cX$, layer $h\in\brk{H}$, and prefix
  $a_{1:h-1}\in\cA^{h-1}$, and receives a conditional sample
  $a_h\sim\pirefh(\cdot\mid{}x,a_{1:h-1})$ and corresponding feature
  $\phi_h(x,a_{1:h})$. We let $\Tcompa$ denote the total number of autoregressive sampling
  queries\arxiv{ used by the algorithm}.\footnote{Technically, our
    algorithm results also require query access to $\phi_h$ for a
    fixed reference action---see \cref{remark:mtss-query-access}.} \loose

\end{definition}
Efficient conditional sampling is arguably the defining property of
autoregressive models, so we view this as a minimal assumption. As before, our algorithmic results only use the
weak oracle (sampling from $\piref$), but the strong oracle will be
useful for discussion. For rewards, we remain in the
setup of
\cref{sec:background}: For each prompt $x\ind{t}$ for
$t\in\brk{\Trounds}$, the algorithm can query the reward oracle for up
to $N$ responses $y\ind{t}_1,\ldots,y\ind{t}_N\in\cA^{H}$.\loose

\paragraph{Representational issues}
To
make use of the class $\Piauto$, we need to assume that $\pistarb\in\Piauto$
(\cref{ass:realizable}). Perhaps the simplest setting where we might
hope for this is when rewards are linear:\loose
\begin{equation}
  \label{eq:reward_sequence}
\rstar(x,y) = \sum_{h=1}^{H}\tri*{\thetastar_h,\phi_h(x,a_{1:h})}
\end{equation}
for some $\thetastar_h\in\Theta_h$. Here, the
optimal KL-regularized policy $\pistarb$ under \cref{eq:reward_sequence} setting
takes the form\footnote{\mbox{By the chain rule, the
  sequence-level KL-regularizer in \cref{eq:kl_reward} is a
  equivalent to a sum of per-action regularizers (\cref{eq:kl}).}}\loose
\[
\piseq[\thetastar](a_{1:H}\mid{}x)
  \ldef \frac{\piref(a_{1:H}\mid{}x)\exp\prn*{\beta^{-1}\sum_{h=1}^{H}\tri*{\thetastar_h,\phi_h(x,a_{1:h})}}}{\sum_{(a'_1,\ldots,a'_H)\in\cA^{H}}\piref(a'_{1:H}\mid{}x)\exp\prn*{\beta^{-1}\sum_{h=1}^{H}\tri*{\thetastar_h,\phi_h(x,a'_{1:h})}}}.
\]
This corresponds
to the linear softmax policy in \cref{def:softmax} with sequence-level feature map
$\phiseq(x,a_{1:H})\ldef{}(\phi_1(x,a_1),\ldots,\phi_H(x,a_{1:H}))\in\bbR^{dH}$
and parameter space
$\Thetaseq\ldef{}(\Theta_1,\ldots,\Theta_H)\subset\bbR^{dH}$ (the natural policy class is
$\Piseq\ldef{}\crl*{\piseq[\theta]\mid{}\theta\in\Thetaseq}$). Unfortunately, \emph{sequence-level} linear softmax policies of this type cannot be
represented as autoregressive linear softmax policies in general; that is, there may not exist
any $\theta=(\theta_1,\ldots,\theta_H)$ such that
$\piseq[\thetastar]=\piauto$---see \cref{prop:autoregressive}.\footnote{$\pistarb$ can always be
  represented as an autoregressive softmax policy applied to a certain
  \emph{KL-regularized value function} $\Qstar_{h,\beta}$---see
  \cref{eq:softmax_qstar}---but is not necessarily a \emph{linear}
  softmax unless $\Qstar_{h,\beta}$ itself is linear.}\loose

\paragraph{Applying \mainalg}
Even though autoregressive realizability may not hold under
\cref{eq:reward_sequence}, we can still apply \mainalg efficiently
under the sequence-level realizability assumption that
$\pistarb\in\Piseq$ (which is implied by
\cref{eq:reward_sequence}). In particular, a weak autoregressive
oracle (\cref{def:oracle_autoregressive}) immediately gives a weak
sequence-level sampling oracle (\cref{def:oracle}) with $\Tcompa\leq{}H\cdot\Tcomp$.
\begin{corollary}
  \label{prop:ss_autoregressive}
  Suppose \cref{ass:realizable} is satisfied for\arxiv{ the class} $\Piseq$
  and
  $\tri*{\thetastar,\phiseq(x,a_{1:H})}\in\brk*{0,\Rmax}$. \mainalg
learns a policy with $\En_{\pihat\sim\unif\prn{\pihat\ind{1},\ldots,\pihat\ind{\nexp}}}\brk[\big]{\Jbeta(\pistarb)-\Jbeta(\pihat)}\leq\veps$
  with probability at least
  $1-\delta$   when configured appropriately, and does so with:\loose
      \[
    \Tdatafull =
    \bigoht\prn*{\frac{\Rmax^2}{\beta}}\cdot\frac{d^2H^2\log^2(\delta^{-1})}{\min\crl{\veps,\beta}},
    \mathand     \Tcompafull  = 
    \bigoht\prn*{\Ccov(\pistarb)\cdot\frac{H\Rmax^2}{\beta^2}}\cdot\Tdatafullsq.
  \]
\end{corollary}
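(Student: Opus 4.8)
The plan is to recognize the autoregressive setting, under the hypothesis $\pistarb\in\Piseq$, as literally an instance of the sequence-level linear softmax problem of \cref{sec:coverage} with feature dimension $dH$, to apply \cref{thm:spanner} to that instance, and then to observe that each weak \emph{sequence-level} sampling oracle query can be emulated by exactly $H$ weak \emph{autoregressive} queries. The argument has no essential difficulty; the two points needing care are a harmless rescaling of features so as to fit \cref{ass:norm}, and the bookkeeping of the $H$-fold blowup in oracle complexity.

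First I would set up the reduction. The class $\Piseq=\crl{\piseq[\theta]\mid\theta\in\Thetaseq}$ is exactly the linear softmax class of \cref{def:softmax} with feature map $\phiseq(x,a_{1:H})=(\phi_1(x,a_1),\ldots,\phi_H(x,a_{1:H}))\in\bbR^{dH}$ and parameter set $\Thetaseq=\Theta_1\times\cdots\times\Theta_H$, which is convex (a product of convex sets) and contains $\mb{0}$. To enforce \cref{ass:norm} I would rescale, replacing $\phiseq$ by $\phiseq/\sqrt{H}$ and each $\theta$ by $\sqrt{H}\,\theta$: this changes neither any policy $\piseq[\theta]$ nor any inner product $\tri*{\theta,\phiseq(x,\cdot)}$, while giving $\nrm*{\phiseq(x,y)/\sqrt{H}}\leq 1$ and $\nrm*{\sqrt{H}\,\theta}\leq BH$ (using $\nrm*{\theta_h}\leq B$), so the norm bound holds with $B'=BH$; since the guarantee of \cref{thm:spanner} depends on $B$ only logarithmically, the enlargement $B\to BH$ contributes only an additive $\log H$ that is absorbed into $\bigoht$. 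The hypothesis $\tri*{\thetastar,\phiseq(x,y)}\in\brk{0,\Rmax}$ yields $\tri*{\thetastar,\phiseq(x,y)-\phiseq(x,y')}\in\brk{-\Rmax,\Rmax}$, so the reward-range condition of \cref{ass:norm} holds with $\Rmax$ unchanged (and $\beta\leq\Rmax\leq B\leq B'$). \cref{ass:realizable} for $\Piseq$ is the standing hypothesis, and via \cref{lem:reward_difference} it gives $\rstar(x,y)-\rstar(x,y')=\tri*{\thetastar,\phiseq(x,y)-\phiseq(x,y')}$, which is \cref{eq:reward_difference} at the sequence level (note this uses only $\pistarb\in\Piseq$, which is implied by, but weaker than, linearity of the reward \cref{eq:reward_sequence}). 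The coverage coefficient $\Ccov(\pistarb)$ is unchanged, as it refers to no factorization of responses.

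Then I would invoke \cref{thm:spanner} on the sequence-level instance: it outputs $\pihat$ with $\En_{\pihat\sim\unif\prn{\pihat\ind{1},\ldots,\pihat\ind{\nexp}}}\brk*{\Jbeta(\pistarb)-\Jbeta(\pihat)}\leq\veps$ with probability at least $1-\delta$ and, substituting dimension $dH$, uses $\Tdatafull=\bigoht(\Rmax^2/\beta)\cdot d^2H^2\log^2(\delta^{-1})/\min\crl{\veps,\beta}$ reward queries and $\Tsamplefull=\bigoht(\Ccov(\pistarb)\Rmax^2/\beta^2)\cdot\Tdatafullsq$ weak \emph{sequence-level} sampling queries. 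Lastly I would translate oracle calls: \spanalg (\cref{alg:spanner}) only ever samples responses from $\piref$ and only ever inspects the feature $\phiseq$ of responses it has sampled (this covers the spanner test, the least-squares fit of the reward model, and the internal draws of \rejection), so every sampling operation it performs is a single weak sequence-level query $(y\sim\piref(\cdot\mid x),\,\phiseq(x,y))$; because $\piref$ is autoregressive, such a query is implemented by $H$ calls to the weak autoregressive oracle of \cref{def:oracle_autoregressive} (sequentially draw $a_h\sim\pirefh(\cdot\mid x,a_{1:h-1})$ and record $\phi_h(x,a_{1:h})$ for $h=1,\ldots,H$, then assemble $\phiseq(x,a_{1:H})$). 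Hence $\Tcompafull\leq H\cdot\Tsamplefull=\bigoht(\Ccov(\pistarb)\cdot H\Rmax^2/\beta^2)\cdot\Tdatafullsq$, the claimed bound; the same $H$-factor gives per-sample inference cost $\bigoht(H\Ccov(\pistarb))$ and prompt count $\bigoht(\Rmax^4/\beta^3)\cdot d^2H^2\log^2(\delta^{-1})/\veps$. The only ``obstacle,'' such as it is, is to confirm that \spanalg invokes nothing beyond the weak oracle---which \cref{thm:spanner} already records---so that this $H$-to-$1$ emulation needs no modification to the algorithm.
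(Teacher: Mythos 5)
Your proposal is correct and follows essentially the same route as the paper, which treats the autoregressive setting as a sequence-level linear softmax instance in dimension $dH$, invokes \cref{thm:spanner}, and notes that one weak sequence-level query is simulated by $H$ weak autoregressive queries (giving $\Tcompa\leq H\cdot\Tcomp$). Your explicit $1/\sqrt{H}$ feature rescaling to satisfy \cref{ass:norm} (with $B'=BH$ entering only logarithmically) is a detail the paper leaves implicit, and it is handled correctly.
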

For this result, the fact that \mainalg only uses a \emph{weak} sequence-level sampling oracle (\cref{def:oracle}) is
crucial: due to aforementioned representational issues, a strong sequence-level sampling oracle (sampling from $\piseq[\theta]$ for $\theta\in\Thetaseq$) cannot necessarily be simulated by even a strong
autoregressive oracle (sampling from $\piautoh$).\loose

\subsection{Improving Computational Efficiency through Multi-Turn
  Exploration}
The guarantee in \cref{prop:ss_autoregressive} depends on the
sequence-level coverage coefficient $\Ccov(\pistarb)$ for
$\pistarb$. While various works have shown
that existing pre-trained models may exhibit favorable coverage for tasks of interest
\citep{brown2024large,snell2024scaling,wu2024empirical}, it is natural
to ask whether we can improve the computational efficiency further,
perhaps by exploiting the autoregressive structure of $\piref$. To this end, 
we will make the autoregressive realizability assumption that
$\pistarb\in\Piauto$ (i.e., there exists
$\thetastar=(\thetastar_1,\ldots,\thetastar_H)$ such that
$\piauto[\thetastar]=\pistarb$). As discussed above, this is not implied by sequence-level realizability in general, but
we will show that when it holds, we can achieve runtime guarantees
that scale with the following \emph{conditional} (or, token-level/action-level)
coverage coefficient:
\begin{align}
\label{eq:coverage_auto}
    \Ccond(\pistarb) \coloneqq \max_{h\in\brk{H}}\sup_{x \in \cX}\sup_{(a_1,\ldots,a_h)\in\cA^{h}}
        \frac{\pistarbh(a_h \mid x,a_{1:h-1})}{\pi_{h,\refe}(a_h \mid x,a_{1:h-1})}.
\end{align}
\mbox{This coefficient can exponentially improve
$\Ccov(\pistarb)$; we can have
$\Ccond(\pistarb)\leq{}2$, yet $\Ccov(\pistarb)\geq{}2^{H}$.}\loose

\paragraph{\texttt{MultiTurnSpannerSampling}}
We introduce a 
\emph{multi-turn} counterpart to \mainalg, \mtalg (\cref{alg:ops-dp}
in \cref{sec:algos_rl}). \mtalg
learns a policy in a multi-turn (dynamic programming) fashion by
fitting softmax policies for each layer $h = H, \dots, 1$, while growing \emph{core-sets} of informative sub-sequences
$(x,a_{1:h})$ for which the algorithm can confidently estimate the
parameter $\thetastar_h$ (generalizing the notion of spanner used in
\mainalg). The use of dynamic programming in the
algorithm is motivated by the fact that whenever $\pistarb$ is
autoregressive (i.e., $\pistarb\in\Piauto$), a certain
\emph{KL-regularized} state-action value function $\Qstarb(x,
a_{1:h})$ is linear up to an action-independent shift. See \cref{sec:prelim_rl,sec:algos_rl} for a detailed overview.

\begin{theorem}[Guarantee for \mtalg; special case of \cref{thm:main}]
  \label{thm:multi}
Suppose \cref{ass:realizable} is satisfied for the class
$\Piauto$. \mtalg, when configured appropriately, returns $\pihat$
such that $J_\beta(\piauto[\thetastar]) - J_\beta(\pihat_{1:H})\leq
\veps$ with probability at least $1-\delta$, and does so with
$\Tdata(\veps,\delta) \leq  \poly(d, H, B, \veps^{-1},\log(\delta^{-1}))$
reward queries and $\Tcompa(\veps,\delta) \leq \poly\left(\Ccond
  (\pistarb),\Tdata(\veps,\delta)\right)$ 
(weak) autoregressive sampling queries.\loose
\end{theorem}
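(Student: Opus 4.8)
The plan is to obtain \cref{thm:multi} by instantiating the general learnability result \cref{thm:main} for MDPs whose optimal KL-regularized value function is linear, so I will sketch the argument at the level of the token-level MDP. \textbf{Step 1 (structural reduction).} First I would record that autoregressive realizability makes the token MDP an instance of that model. Writing $M$ for the MDP with state $s_h=(x,a_{1:h-1})$ and deterministic transition $s_{h+1}=(x,a_{1:h})$, the general form of the KL-regularized optimal policy (\cref{eq:softmax_qstar}) gives $\pistarbh(a_h\mid x,a_{1:h-1})\propto\pirefh(a_h\mid x,a_{1:h-1})\exp(\beta^{-1}Q^{\star}_{h,\beta}(x,a_{1:h}))$; comparing with the assumed form $\pistarbh\propto\pirefh\exp(\beta^{-1}\tri*{\thetastar_h,\phi_h(x,a_{1:h})})$ forces $Q^{\star}_{h,\beta}(x,a_{1:h})=\tri*{\thetastar_h,\phi_h(x,a_{1:h})}+c_h(x,a_{1:h-1})$ for an action-independent shift $c_h$. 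Since $c_h$ cancels in the relative feature $\phi_h(x,a_{1:h-1},a_h)-\phi_h(x,a_{1:h-1},a_h')$, the optimal $Q$-function is linear in a known feature map up to an action-independent term, and the weak autoregressive oracle (\cref{def:oracle_autoregressive}) supplies exactly the reset/generative access the general result asks for.

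\textbf{Step 2 (per-layer guarantee, via \spanalg).} \mtalg is an optimistic policy-search-by-dynamic-programming procedure (in the spirit of \opsdp/\fitoptval): each epoch performs a backward pass $h=H,\dots,1$ of optimistic value backups and softmax-policy updates, then rolls in \emph{forward} with the resulting greedy truncated-softmax policy to grow a layer-wise core-set $\Psi_h$ at whichever layer is still under-covered. Fixing a layer $h$ with a tail policy $\pihat_{h+1:H}$ already in hand, the estimate of $\thetastar_h$ is produced by least-squares regression of (optimistically corrected) Monte-Carlo roll-out returns of $\pihat_{h+1:H}$ onto the relative features $\phi_h(x,a_{1:h-1},a_h)-\phi_h(x,a_{1:h-1},a_h')$, over the union of $\Psi_h$ and on-policy data from a truncated softmax policy at layer $h$. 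Three facts carry over from the analysis of \cref{thm:spanner}: the core-set size, hence the reward queries spent at layer $h$, is $\poly(d,\beta^{-1},B)$ independently of the inference budget; the layer-$h$ truncated softmax has density ratio $\lesssim\Ccond(\pistarb)$ relative to $\pirefh$ (here the \emph{conditional} coverage coefficient replaces the sequence-level one, because each rejection step is performed at a single token); and therefore the exploratory policy at layer $h$ can be simulated by \rejection (\cref{alg:rejection}) with $\bigoht(\Ccond(\pistarb))$ weak-oracle draws, so a forward roll-in through all $H$ layers costs only $\bigoht(H\,\Ccond(\pistarb))$ draws---polynomial in $H$, not exponential. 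Applying the truncated-regret decomposition \eqref{eq:truncated_teaser} at layer $h$ converts the regression accuracy $\nrm*{\thetahat_h-\thetastar_h}_{\Sigma_h}^2$ (on directions covered by the core-set) plus the core-set error into per-layer KL-regularized suboptimality, with a fast $\tfrac{1}{\beta\veps}$-type rate from the strong convexity of the softmax.

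\textbf{Step 3 (cross-layer composition).} I would then stitch the layers together with the KL-regularized performance-difference lemma for $M$, which telescopes $J_\beta(\piauto[\thetastar])-J_\beta(\pihat_{1:H})$ into a sum over $h$ of the advantage of $\pistarbh$ over $\pihat_h$, each evaluated under the occupancy $d^{\pistarb}_h$. Each summand is bounded by the layer-$h$ guarantee of Step~2, provided the forward roll-in used to grow $\Psi_h$ covers $d^{\pistarb}_h$ on the relevant directions; this coverage is what the optimistic value backups buy---as in optimistic LSVI/PSDP analyses, optimism forces the greedy roll-in to visit the directions that $\pistarb$ visits, up to the current core-set error, and a standard elliptical-potential argument bounds the number of epochs (hence core-set size and reward queries) by $\poly(d,H,\dots)$. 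Setting the per-layer target accuracy to be of order $\veps/H$, and the per-layer core-set and inference budgets accordingly, yields $J_\beta(\piauto[\thetastar])-J_\beta(\pihat_{1:H})\le\veps$ with probability $1-\delta$, $\Tdata(\veps,\delta)\le\poly(d,H,B,\veps^{-1},\log\delta^{-1})$, and $\Tcompa(\veps,\delta)\le\poly(\Ccond(\pistarb),\Tdata(\veps,\delta))$, with the extra $H$ factors arising from the union over layers and the telescoping.

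\textbf{Main obstacle.} The delicate part is Step~3: because the layer-$h$ regression target is a Monte-Carlo return of the \emph{imperfect} tail policy $\pihat_{h+1:H}$ rather than of the optimal tail, the regression is only well-specified up to a bias that accumulates across layers, so one must show this bias stays $O(\veps/H)$ under the chosen accuracies---this is where the KL-regularized Bellman structure of $M$ (and the fact that it is $Q^{\star}_\beta$, not $\pihat$, that is linear) does the work---while simultaneously guaranteeing, via optimism rather than an exponential-in-$H$ change of measure, that the forward roll-in covers $d^{\pistarb}_h$ using only the weak oracle and $\bigoht(H\,\Ccond(\pistarb))$ draws per trajectory. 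Once approximate well-specification and roll-in coverage are in place, the remaining estimation bounds and the combination with the spanner/truncation machinery of \cref{sec:algorithms} are routine.
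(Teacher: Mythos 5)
Your Step~1 (the reduction: autoregressive realizability forces $Q^{\star}_{h,\beta}(x,a_{1:h})=\tri*{\thetastar_h,\phi_h(x,a_{1:h})}+c_h(x,a_{1:h-1})$, so only relative features matter) matches the paper, and your overall architecture — dynamic programming over layers, layer-wise core-sets, truncated softmax policies simulated by rejection sampling at token level so that $\Ccond(\pistarb)$ replaces $\Ccov(\pistarb)$, and a KL-regularized performance-difference telescoping — is the right skeleton. But there are two substantive divergences from what actually makes the argument go through.

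First, the exploration mechanism. You attribute roll-in coverage of $d^{\pistarb}_h$ to \emph{optimistic} value backups ("optimism forces the greedy roll-in to visit the directions that $\pistarb$ visits"). \mtalg uses no optimism; the paper explicitly contrasts \fitval{} with \opsdp{} on exactly this point and notes that optimism is unnecessary (and would reintroduce the non-convex objectives the paper is avoiding). Coverage is instead driven by \emph{reset access}: \uncertsa{} resets to core-set elements at all earlier layers $\ell<h$, rolls forward with the current truncated policies, and adds the most uncertain reachable $(x_h,a_h)$ (maximizer of $\nrm{\varphi_h(\cdot,\cdot)}_{(\Sigma_h^{(t)})^{-1}}$) to $\cC_h$; an elliptical-potential argument caps the number of rounds at $\wt{O}(dH^2)$, and coverage of $\pistarb$'s occupancy on the surviving directions follows from the density-ratio bound $\Ccond(\pibar^{(t)}_\ell\mid x)\leq 2e\,\Ccond(\pistar_{\ell,\beta})$ for truncated policies (\cref{lem:coverability,lem:helperest}), not from optimism.

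Second, and more importantly, your "main obstacle" paragraph names the cross-layer bias accumulation but does not supply the idea that resolves it, and this is the crux of the proof. The regression at layer $h$ is misspecified by $b(x_h,a_h)=Q^{\pihat}_{h,\beta}-Q^{\star}_{h,\beta}$ (evaluated as a difference against the anchor action), and a naive performance-difference bound on $b$ would give a term \emph{linear} in the layer-$(h+1{:}H)$ estimation errors, which compounds across $H$ layers and breaks the induction. The paper's resolution (\cref{lem:fitval_pre,lem:main}) is that the dominant contribution to $b^2$ is $\beta^2\sum_\ell\E[\kl{\pibar_\ell^{(t)}}{\pibar^{\star}_{\ell,\beta}}^2]$, and by the second-order expansion of the KL between two softmax policies (\cref{lem:KLbound}) each such KL is at most $\beta^{-1}\nu^2\nrm{\theta_\ell-\thetastar_{\ell}}^2_{\Sigma_\ell}$ — so future-layer errors enter the layer-$h$ regression at \emph{fourth} order in $\veps_{\reg}$. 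This is what lets the backward induction $\nrm{\theta_h^{(j)}-\thetastar_{h,\beta}}^2_{\Sigma_h^{(j)}}\leq\veps_{\reg}^2$ close without amplification, and it is precisely the role the paper assigns to $\beta$ as an analogue of the action gap $\Delta$ in linear-$Q^{\star}$ with resets. Without articulating this higher-order error-propagation mechanism, the induction in your Step~3 does not close.
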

As discussed above, the action-level coverage
coefficient $\Ccond
  (\pistarb)$ in this result can be exponentially smaller than the
  sequence-level coverage coefficient. We view the assumption that
  $\pistarb\in\Piauto$ as a fairly minimal representational assumption
  for working with autoregressive policies (i.e., for \mtalg to learn
  efficiently, all we require is that the base policy $\piref$ and the
  optimal policy $\pistarb$ are autoregressive), analogous to the
  classical notion of $\Qstar$-realizability in linearly-parameterized
  RL \citep{li2021sample,yin2022efficient,
  weisz2022confident}. \arxiv{We remark that the polynomial dependence on other
problem parameters is significantly worse than that of \mainalg; we
view \cref{thm:multi} as a proof-of-concept, and it can likely
be tightened with more effort.}

We remark that while our exposition focuses on the token-level MDP,
the results above also apply to the more realistic setting where each
action $a_h$ represents a sub-sequence of tokens (e.g., a lemma in a
proof) rather than a single token (e.g., \citet{xiong2024building}). Here, the
fact that the runtime and sample complexity for \mtalg are independent
of $\abs{\cA}$ is essential.

\paragraph{Connection to reinforcement learning with linear $\Qstar$}
\mtalg can be applied beyond the
token-level MDP formulation above: Our presentation and
guarantees for the algorithm in \cref{part:multi} of the appendix
apply to \emph{any MDP} for which $\pistarb$ is an
``action-level'' linear softmax policy (a generalization of the
assumption that the optimal KL-regularized value function $\Qstarb$ is linear), provided that resetting to
previously visited states\arxiv{ (\emph{local simulator access})} is
allowed. In this regard, the algorithm can be viewed as a counterpart
to a body of work which shows that MDPs with linear $\Qstar$ and
state-action gap $\Delta$ can be learned under reset access \citep{li2021sample,yin2022efficient,
  weisz2022confident}; the regularization parameter $\beta$
plays a role analogous to the gap $\Delta$ in facilitating favorable
error propagation in our analysis.\loose

\emph{\textbf{See \cref{part:multi} of the appendix for a formal
    presentation of the \mtalg algorithm and guarantees for learning
    autoregressive linear softmax policies in general MDPs
    (generalizing \cref{thm:multi})}}

\arxiv{
\section{Discussion}
\label{sec:discussion}
Our results---via the sampling oracle framework---reveal the
computational, statistical, and representational tradeoffs inherent to
language model exploration, highlighting the fundamental role of the
base model $\piref$ in enabling computational efficiency. We view our
results as an initial step toward a computational foundation for
language model exploration, and more broadly, for efficient decision
making with generative models. To this end, some natural questions are
as follows.

\paragraph{Efficient exploration beyond linear softmax policies}
While our lower bounds are relevant beyond the linear softmax
parameterization, our algorithms are specialized to this
setting. Developing algorithms to support general, nonlinear policy
parameterizations is perhaps the most important question left by our
work. We expect that the basic principle behind our algorithms---expending inference-time computation to identify ``representative''
responses with which to explore---to be useful more broadly, but the
specific notion of spanner used in our results will need to
change.\footnote{We expect it to be fairly straightforward to extend
  our results to accommodate policy classes with bounded eluder
  dimension, but it is less clear how to address realistic classes
  based on, e.g., transformers..}
\paragraph{Better representations for exploration}
Our results in \cref{sec:computational} show that training-time
interventions that produce softmax policies (e.g., modifications to
the \dpo loss) are insufficient for computationally efficient
exploration. This raises the question of whether there exist
training-time interventions that induce different policy
representations (e.g., based on alternative forms of regularization \citep{wang2024beyond,huang2024correcting})
that more readily lend themselves to computationally efficient
exploration. Our results in \cref{sec:algorithms} show that relatively
simple modifications to the linear softmax parameterization (e.g., truncation)
have benefits for exploration, but are there more general principles beyond
the linear setting?\loose

\arxiv{
\subsection*{Acknowledgements}
We thank Qinghua Liu and Tengyang Xie for several helpful discussions
and comments.
}

}

\bibliography{refs}

\begin{thebibliography}{111}
\providecommand{\natexlab}[1]{#1}
\providecommand{\url}[1]{\texttt{#1}}
\expandafter\ifx\csname urlstyle\endcsname\relax
  \providecommand{\doi}[1]{doi: #1}\else
  \providecommand{\doi}{doi: \begingroup \urlstyle{rm}\Url}\fi

\bibitem[Abbasi-Yadkori et~al.(2011)Abbasi-Yadkori, P{\'a}l, and
  Szepesv{\'a}ri]{abbasi2011improved}
Yasin Abbasi-Yadkori, D{\'a}vid P{\'a}l, and Csaba Szepesv{\'a}ri.
\newblock Improved algorithms for linear stochastic bandits.
\newblock In \emph{Advances in Neural Information Processing Systems}, 2011.

\bibitem[Agarwal et~al.(2019)Agarwal, Jiang, Kakade, and
  Sun]{agarwal2019reinforcement}
Alekh Agarwal, Nan Jiang, Sham~M Kakade, and Wen Sun.
\newblock Reinforcement learning: Theory and algorithms.
\newblock \url{https://rltheorybook.github.io/}, 2019.
\newblock Version: January 31, 2022.

\bibitem[Awerbuch and Kleinberg(2008)]{awerbuch2008online}
Baruch Awerbuch and Robert Kleinberg.
\newblock Online linear optimization and adaptive routing.
\newblock \emph{Journal of Computer and System Sciences}, 74\penalty0
  (1):\penalty0 97--114, 2008.

\bibitem[Bagnell et~al.(2003)Bagnell, Kakade, Schneider, and
  Ng]{bagnell2003policy}
James Bagnell, Sham~M Kakade, Jeff Schneider, and Andrew Ng.
\newblock Policy search by dynamic programming.
\newblock \emph{Advances in neural information processing systems}, 16, 2003.

\bibitem[Block and Polyanskiy(2023)]{block2023sample}
Adam Block and Yury Polyanskiy.
\newblock The sample complexity of approximate rejection sampling with
  applications to smoothed online learning.
\newblock In \emph{The Thirty Sixth Annual Conference on Learning Theory},
  pages 228--273. PMLR, 2023.

\bibitem[Bose et~al.(2024)Bose, Xiong, Saha, Du, and Fazel]{bose2024hybrid}
Avinandan Bose, Zhihan Xiong, Aadirupa Saha, Simon~Shaolei Du, and Maryam
  Fazel.
\newblock Hybrid preference optimization for alignment: Provably faster
  convergence rates by combining offline preferences with online exploration.
\newblock \emph{arXiv preprint arXiv:2412.10616}, 2024.

\bibitem[Bradley and Terry(1952)]{bradley1952rank}
Ralph~Allan Bradley and Milton~E Terry.
\newblock Rank analysis of incomplete block designs: I. the method of paired
  comparisons.
\newblock \emph{Biometrika}, 39\penalty0 (3/4):\penalty0 324--345, 1952.

\bibitem[Brown et~al.(2024)Brown, Juravsky, Ehrlich, Clark, Le, R{\'e}, and
  Mirhoseini]{brown2024large}
Bradley Brown, Jordan Juravsky, Ryan Ehrlich, Ronald Clark, Quoc~V Le,
  Christopher R{\'e}, and Azalia Mirhoseini.
\newblock Large language monkeys: Scaling inference compute with repeated
  sampling.
\newblock \emph{arXiv:2407.21787}, 2024.

\bibitem[Brown et~al.(2020)Brown, Mann, Ryder, Subbiah, Kaplan, Dhariwal,
  Neelakantan, Shyam, Sastry, Askell, Agarwal, Herbert-Voss, Krueger, Henighan,
  Child, Ramesh, Ziegler, Wu, Winter, Hesse, Chen, Sigler, Litwin, Gray, Chess,
  Clark, Berner, McCandlish, Radford, Sutskever, and Amodei]{brown2020language}
Tom Brown, Benjamin Mann, Nick Ryder, Melanie Subbiah, Jared~D Kaplan, Prafulla
  Dhariwal, Arvind Neelakantan, Pranav Shyam, Girish Sastry, Amanda Askell,
  Sandhini Agarwal, Ariel Herbert-Voss, Gretchen Krueger, Tom Henighan, Rewon
  Child, Aditya Ramesh, Daniel Ziegler, Jeffrey Wu, Clemens Winter, Chris
  Hesse, Mark Chen, Eric Sigler, Mateusz Litwin, Scott Gray, Benjamin Chess,
  Jack Clark, Christopher Berner, Sam McCandlish, Alec Radford, Ilya Sutskever,
  and Dario Amodei.
\newblock Language models are few-shot learners.
\newblock In \emph{Advances in Neural Information Processing Systems}, 2020.

\bibitem[Bubeck et~al.(2012)Bubeck, Cesa-Bianchi, and
  Kakade]{bubeck2012towards}
S{\'e}bastien Bubeck, Nicolo Cesa-Bianchi, and Sham~M Kakade.
\newblock Towards minimax policies for online linear optimization with bandit
  feedback.
\newblock In \emph{Conference on Learning Theory}, pages 41--1. JMLR Workshop
  and Conference Proceedings, 2012.

\bibitem[Calabro et~al.(2008)Calabro, Impagliazzo, Kabanets, and
  Paturi]{calabro2008complexity}
Chris Calabro, Russell Impagliazzo, Valentine Kabanets, and Ramamohan Paturi.
\newblock The complexity of unique k-sat: An isolation lemma for k-cnfs.
\newblock \emph{Journal of Computer and System Sciences}, 74\penalty0
  (3):\penalty0 386--393, 2008.

\bibitem[Cao and Krishnamurthy(2019)]{cao2019disagreement}
Tongyi Cao and Akshay Krishnamurthy.
\newblock Disagreement-based combinatorial pure exploration: Sample complexity
  bounds and an efficient algorithm.
\newblock In \emph{Conference on Learning Theory}, pages 558--588. PMLR, 2019.

\bibitem[Cen et~al.(2024)Cen, Mei, Goshvadi, Dai, Yang, Yang, Schuurmans, Chi,
  and Dai]{cen2024value}
Shicong Cen, Jincheng Mei, Katayoon Goshvadi, Hanjun Dai, Tong Yang, Sherry
  Yang, Dale Schuurmans, Yuejie Chi, and Bo~Dai.
\newblock Value-incentivized preference optimization: A unified approach to
  online and offline rlhf, 2024.

\bibitem[Chan(2016)]{chan2016approximation}
Siu~On Chan.
\newblock Approximation resistance from pairwise-independent subgroups.
\newblock \emph{Journal of the ACM (JACM)}, 63\penalty0 (3):\penalty0 1--32,
  2016.

\bibitem[Chang et~al.(2024)Chang, Zhan, Oertell, Brantley, Misra, Lee, and
  Sun]{chang2024dataset}
Jonathan~D Chang, Wenhao Zhan, Owen Oertell, Kiant{\'e} Brantley, Dipendra
  Misra, Jason~D Lee, and Wen Sun.
\newblock Dataset reset policy optimization for rlhf.
\newblock \emph{arXiv preprint arXiv:2404.08495}, 2024.

\bibitem[Chen et~al.(2017)Chen, Gupta, Li, Qiao, and Wang]{chen2017nearly}
Lijie Chen, Anupam Gupta, Jian Li, Mingda Qiao, and Ruosong Wang.
\newblock Nearly optimal sampling algorithms for combinatorial pure
  exploration.
\newblock In \emph{Conference on Learning Theory}, pages 482--534. PMLR, 2017.

\bibitem[Chen et~al.(2024)Chen, Zhang, Luo, Chai, and Liu]{chen2024pad}
Ruizhe Chen, Xiaotian Zhang, Meng Luo, Wenhao Chai, and Zuozhu Liu.
\newblock Pad: Personalized alignment at decoding-time.
\newblock \emph{arXiv:2410.04070}, 2024.

\bibitem[Chen et~al.(2022)Chen, Zhong, Yang, Wang, and Wang]{chen2022human}
Xiaoyu Chen, Han Zhong, Zhuoran Yang, Zhaoran Wang, and Liwei Wang.
\newblock Human-in-the-loop: Provably efficient preference-based reinforcement
  learning with general function approximation.
\newblock In \emph{International Conference on Machine Learning}, pages
  3773--3793. PMLR, 2022.

\bibitem[Dani et~al.(2008)Dani, Hayes, and Kakade]{dani2008stochastic}
Varsha Dani, Thomas~P Hayes, and Sham~M Kakade.
\newblock Stochastic linear optimization under bandit feedback.
\newblock In \emph{Conference on Learning Theory (COLT)}, 2008.

\bibitem[Dann et~al.(2018)Dann, Jiang, Krishnamurthy, Agarwal, Langford, and
  Schapire]{dann2018oracle}
Christoph Dann, Nan Jiang, Akshay Krishnamurthy, Alekh Agarwal, John Langford,
  and Robert~E Schapire.
\newblock On oracle-efficient {PAC} {RL} with rich observations.
\newblock In \emph{Advances in neural information processing systems}, pages
  1422--1432, 2018.

\bibitem[Das et~al.(2024)Das, Chakraborty, Pacchiano, and
  Chowdhury]{das2024provably}
Nirjhar Das, Souradip Chakraborty, Aldo Pacchiano, and Sayak~Ray Chowdhury.
\newblock Provably sample efficient rlhf via active preference optimization.
\newblock \emph{arXiv preprint arXiv:2402.10500}, 2024.

\bibitem[Deep{S}eek-{AI}(2025)]{deepseek2025r1}
Deep{S}eek-{AI}.
\newblock Deep{S}eek-{R}1: Incentivizing {R}easoning {C}apability in {LLM}s via
  {R}einforcement {L}earning.
\newblock 2025.
\newblock URL
  \url{https://github.com/deepseek-ai/DeepSeek-R1/blob/main/DeepSeek_R1.pdf}.

\bibitem[Du et~al.(2020)Du, Kakade, Wang, and Yang]{du2019good}
Simon~S Du, Sham~M Kakade, Ruosong Wang, and Lin~F Yang.
\newblock Is a good representation sufficient for sample efficient
  reinforcement learning?
\newblock In \emph{International Conference on Learning Representations}, 2020.

\bibitem[Du et~al.(2024)Du, Winnicki, Dalal, Mannor, and
  Srikant]{du2024exploration}
Yihan Du, Anna Winnicki, Gal Dalal, Shie Mannor, and R~Srikant.
\newblock Exploration-driven policy optimization in {RLHF}: Theoretical
  insights on efficient data utilization.
\newblock \emph{arXiv preprint arXiv:2402.10342}, 2024.

\bibitem[Fisch et~al.(2024)Fisch, Eisenstein, Zayats, Agarwal, Beirami, Nagpal,
  Shaw, and Berant]{fisch2024robust}
Adam Fisch, Jacob Eisenstein, Vicky Zayats, Alekh Agarwal, Ahmad Beirami,
  Chirag Nagpal, Pete Shaw, and Jonathan Berant.
\newblock Robust preference optimization through reward model distillation.
\newblock \emph{arXiv:2405.19316}, 2024.

\bibitem[Foster and Rakhlin(2023)]{foster2023foundations}
Dylan~J Foster and Alexander Rakhlin.
\newblock Foundations of reinforcement learning and interactive decision
  making.
\newblock \emph{arXiv:2312.16730}, 2023.

\bibitem[Foster et~al.(2021)Foster, Kakade, Qian, and
  Rakhlin]{foster2021statistical}
Dylan~J Foster, Sham~M Kakade, Jian Qian, and Alexander Rakhlin.
\newblock The statistical complexity of interactive decision making.
\newblock \emph{arXiv:2112.13487}, 2021.

\bibitem[Gao et~al.(2024{\natexlab{a}})Gao, Chang, Zhan, Oertell, Swamy,
  Brantley, Joachims, Bagnell, Lee, and Sun]{gao2024rebel}
Zhaolin Gao, Jonathan~D Chang, Wenhao Zhan, Owen Oertell, Gokul Swamy,
  Kiant{\'e} Brantley, Thorsten Joachims, J~Andrew Bagnell, Jason~D Lee, and
  Wen Sun.
\newblock {REBEL}: Reinforcement learning via regressing relative rewards.
\newblock \emph{arXiv:2404.16767}, 2024{\natexlab{a}}.

\bibitem[Gao et~al.(2024{\natexlab{b}})Gao, Zhan, Chang, Swamy, Brantley, Lee,
  and Sun]{gao2024regressing}
Zhaolin Gao, Wenhao Zhan, Jonathan~D Chang, Gokul Swamy, Kiant{\'e} Brantley,
  Jason~D Lee, and Wen Sun.
\newblock Regressing the relative future: Efficient policy optimization for
  multi-turn rlhf.
\newblock \emph{arXiv preprint arXiv:2410.04612}, 2024{\natexlab{b}}.

\bibitem[Golowich et~al.(2024)Golowich, Moitra, and
  Rohatgi]{golowich2024exploration}
Noah Golowich, Ankur Moitra, and Dhruv Rohatgi.
\newblock Exploration is harder than prediction: Cryptographically separating
  reinforcement learning from supervised learning.
\newblock \emph{arXiv preprint arXiv:2404.03774}, 2024.

\bibitem[Google(2023)]{anil2023palm}
Google.
\newblock Palm 2 technical report.
\newblock \emph{arXiv:2305.10403}, 2023.

\bibitem[Guo et~al.(2024)Guo, Zhang, Liu, Liu, Khalman, Llinares, Rame,
  Mesnard, Zhao, Piot, Ferret, and Blondel]{guo2024direct}
Shangmin Guo, Biao Zhang, Tianlin Liu, Tianqi Liu, Misha Khalman, Felipe
  Llinares, Alexandre Rame, Thomas Mesnard, Yao Zhao, Bilal Piot, Johan Ferret,
  and Mathieu Blondel.
\newblock Direct language model alignment from online {AI} feedback.
\newblock \emph{arXiv:2402.04792}, 2024.

\bibitem[Hao et~al.(2023)Hao, Gu, Ma, Hong, Wang, Wang, and
  Hu]{hao2023reasoning}
Shibo Hao, Yi~Gu, Haodi Ma, Joshua Hong, Zhen Wang, Daisy Wang, and Zhiting Hu.
\newblock Reasoning with language model is planning with world model.
\newblock In \emph{Proceedings of the 2023 Conference on Empirical Methods in
  Natural Language Processing}, pages 8154--8173, 2023.

\bibitem[Hazan and Karnin(2016)]{hazan2016volumetric}
Elad Hazan and Zohar Karnin.
\newblock Volumetric spanners: An efficient exploration basis for learning.
\newblock \emph{The Journal of Machine Learning Research}, 17\penalty0
  (1):\penalty0 4062--4095, 2016.

\bibitem[Huang et~al.(2024{\natexlab{a}})Huang, Block, Foster, Rohatgi, Zhang,
  Simchowitz, Ash, and Krishnamurthy]{huang2024self}
Audrey Huang, Adam Block, Dylan~J Foster, Dhruv Rohatgi, Cyril Zhang, Max
  Simchowitz, Jordan~T Ash, and Akshay Krishnamurthy.
\newblock Self-improvement in language models: The sharpening mechanism.
\newblock \emph{arXiv preprint arXiv:2412.01951}, 2024{\natexlab{a}}.

\bibitem[Huang et~al.(2024{\natexlab{b}})Huang, Zhan, Xie, Lee, Sun,
  Krishnamurthy, and Foster]{huang2024correcting}
Audrey Huang, Wenhao Zhan, Tengyang Xie, Jason~D Lee, Wen Sun, Akshay
  Krishnamurthy, and Dylan~J Foster.
\newblock Correcting the mythos of {KL}-regularization: Direct alignment
  without overoptimization via {C}hi-squared {P}reference {O}ptimization.
\newblock \emph{arXiv:2407.13399}, 2024{\natexlab{b}}.

\bibitem[Ji et~al.(2024)Ji, Kulkarni, Wang, and Xie]{ji2024selfplay}
Xiang Ji, Sanjeev Kulkarni, Mengdi Wang, and Tengyang Xie.
\newblock Self-play with adversarial critic: Provable and scalable offline
  alignment for language models.
\newblock \emph{arXiv:2406.04274}, 2024.

\bibitem[Jiang et~al.(2017)Jiang, Krishnamurthy, Agarwal, Langford, and
  Schapire]{jiang2017contextual}
Nan Jiang, Akshay Krishnamurthy, Alekh Agarwal, John Langford, and Robert~E
  Schapire.
\newblock Contextual decision processes with low {Bellman} rank are
  {PAC}-learnable.
\newblock In \emph{International Conference on Machine Learning}, 2017.

\bibitem[Jin et~al.(2021)Jin, Liu, and Miryoosefi]{jin2021bellman}
Chi Jin, Qinghua Liu, and Sobhan Miryoosefi.
\newblock Bellman {E}luder dimension: New rich classes of {RL} problems, and
  sample-efficient algorithms.
\newblock \emph{Advances in Neural Information Processing Systems}, 2021.

\bibitem[Jinnai et~al.(2024)Jinnai, Morimura, Ariu, and
  Abe]{jinnai2024regularized}
Yuu Jinnai, Tetsuro Morimura, Kaito Ariu, and Kenshi Abe.
\newblock Regularized best-of-n sampling to mitigate reward hacking for
  language model alignment.
\newblock \emph{arXiv:2404.01054}, 2024.

\bibitem[Kane et~al.(2022)Kane, Liu, Lovett, and
  Mahajan]{kane2022computational}
Daniel Kane, Sihan Liu, Shachar Lovett, and Gaurav Mahajan.
\newblock Computational-statistical gap in reinforcement learning.
\newblock In \emph{Conference on Learning Theory}, pages 1282--1302. PMLR,
  2022.

\bibitem[Katz-Samuels et~al.(2020)Katz-Samuels, Jain, Jamieson,
  et~al.]{katz2020empirical}
Julian Katz-Samuels, Lalit Jain, Kevin~G Jamieson, et~al.
\newblock An empirical process approach to the union bound: Practical
  algorithms for combinatorial and linear bandits.
\newblock \emph{Advances in Neural Information Processing Systems}, 33, 2020.

\bibitem[Kazemnejad et~al.(2024)Kazemnejad, Aghajohari, Portelance, Sordoni,
  Reddy, Courville, and Roux]{kazemnejad2024vineppo}
Amirhossein Kazemnejad, Milad Aghajohari, Eva Portelance, Alessandro Sordoni,
  Siva Reddy, Aaron Courville, and Nicolas~Le Roux.
\newblock Vineppo: Unlocking rl potential for llm reasoning through refined
  credit assignment.
\newblock \emph{arXiv preprint arXiv:2410.01679}, 2024.

\bibitem[Kearns(1998)]{kearns1998efficient}
Michael Kearns.
\newblock Efficient noise-tolerant learning from statistical queries.
\newblock \emph{Journal of the ACM}, 1998.

\bibitem[Khaki et~al.(2024)Khaki, Li, Ma, Yang, and Ramachandra]{khaki2024rs}
Saeed Khaki, JinJin Li, Lan Ma, Liu Yang, and Prathap Ramachandra.
\newblock Rs-dpo: A hybrid rejection sampling and direct preference
  optimization method for alignment of large language models.
\newblock \emph{arXiv preprint arXiv:2402.10038}, 2024.

\bibitem[Khanov et~al.(2024)Khanov, Burapacheep, and Li]{khanov2024args}
Maxim Khanov, Jirayu Burapacheep, and Yixuan Li.
\newblock Args: Alignment as reward-guided search.
\newblock \emph{arXiv:2402.01694}, 2024.

\bibitem[Kumar et~al.(2024)Kumar, Zhuang, Agarwal, Su, Co-Reyes, Singh, Baumli,
  Iqbal, Bishop, Roelofs, et~al.]{kumar2024training}
Aviral Kumar, Vincent Zhuang, Rishabh Agarwal, Yi~Su, John~D Co-Reyes, Avi
  Singh, Kate Baumli, Shariq Iqbal, Colton Bishop, Rebecca Roelofs, et~al.
\newblock Training language models to self-correct via reinforcement learning.
\newblock \emph{arXiv preprint arXiv:2409.12917}, 2024.

\bibitem[Lai and Robbins(1985)]{lai1985asymptotically}
Tze~Leung Lai and Herbert Robbins.
\newblock Asymptotically efficient adaptive allocation rules.
\newblock \emph{Advances in Applied Mathematics}, 6\penalty0 (1):\penalty0
  4--22, 1985.

\bibitem[Lattimore and Szepesv{\'a}ri(2020)]{lattimore2020bandit}
Tor Lattimore and Csaba Szepesv{\'a}ri.
\newblock \emph{Bandit algorithms}.
\newblock Cambridge University Press, 2020.

\bibitem[Lattimore et~al.(2020)Lattimore, Szepesvari, and
  Weisz]{lattimore2020learning}
Tor Lattimore, Csaba Szepesvari, and Gellert Weisz.
\newblock Learning with good feature representations in bandits and in rl with
  a generative model.
\newblock In \emph{International Conference on Machine Learning}, pages
  5662--5670. PMLR, 2020.

\bibitem[Li et~al.(2021)Li, Chen, Chi, Gu, and Wei]{li2021sample}
Gen Li, Yuxin Chen, Yuejie Chi, Yuantao Gu, and Yuting Wei.
\newblock Sample-efficient reinforcement learning is feasible for linearly
  realizable mdps with limited revisiting.
\newblock \emph{Advances in Neural Information Processing Systems}, 2021.

\bibitem[Li et~al.(2023)Li, Yang, and Wang]{li2023reinforcement}
Zihao Li, Zhuoran Yang, and Mengdi Wang.
\newblock Reinforcement learning with human feedback: Learning dynamic choices
  via pessimism.
\newblock \emph{arXiv:2305.18438}, 2023.

\bibitem[Lightman et~al.(2023)Lightman, Kosaraju, Burda, Edwards, Baker, Lee,
  Leike, Schulman, Sutskever, and Cobbe]{lightman2023lets}
Hunter Lightman, Vineet Kosaraju, Yura Burda, Harri Edwards, Bowen Baker, Teddy
  Lee, Jan Leike, John Schulman, Ilya Sutskever, and Karl Cobbe.
\newblock Let's verify step by step.
\newblock \emph{arXiv preprint arXiv:2305.20050}, 2023.

\bibitem[Liu et~al.(2024{\natexlab{a}})Liu, Guo, Bianco, Calandriello, Berthet,
  Llinares, Hoffmann, Dixon, Valko, and Blondel]{liu2024decoding}
Tianlin Liu, Shangmin Guo, Leonardo Bianco, Daniele Calandriello, Quentin
  Berthet, Felipe Llinares, Jessica Hoffmann, Lucas Dixon, Michal Valko, and
  Mathieu Blondel.
\newblock Decoding-time realignment of language models.
\newblock \emph{arXiv:2402.02992}, 2024{\natexlab{a}}.

\bibitem[Liu et~al.(2023)Liu, Zhao, Joshi, Khalman, Saleh, Liu, and
  Liu]{liu2023statistical}
Tianqi Liu, Yao Zhao, Rishabh Joshi, Misha Khalman, Mohammad Saleh, Peter~J
  Liu, and Jialu Liu.
\newblock Statistical rejection sampling improves preference optimization.
\newblock \emph{arXiv preprint arXiv:2309.06657}, 2023.

\bibitem[Liu et~al.(2024{\natexlab{b}})Liu, Lu, Zhang, Liu, Guo, Yang,
  Blanchet, and Wang]{liu2024provably}
Zhihan Liu, Miao Lu, Shenao Zhang, Boyi Liu, Hongyi Guo, Yingxiang Yang, Jose
  Blanchet, and Zhaoran Wang.
\newblock Provably mitigating overoptimization in {RLHF}: Your {SFT} loss is
  implicitly an adversarial regularizer.
\newblock \emph{arXiv:2405.16436}, 2024{\natexlab{b}}.

\bibitem[Lov{\'a}sz and Vempala(2006)]{lovasz2006fast}
L{\'a}szl{\'o} Lov{\'a}sz and Santosh Vempala.
\newblock Fast algorithms for logconcave functions: Sampling, rounding,
  integration and optimization.
\newblock In \emph{Symposium on Foundations of Computer Science}, 2006.

\bibitem[Malladi et~al.(2023)Malladi, Wettig, Yu, Chen, and
  Arora]{malladi2023kernel}
Sadhika Malladi, Alexander Wettig, Dingli Yu, Danqi Chen, and Sanjeev Arora.
\newblock A kernel-based view of language model fine-tuning.
\newblock In \emph{International Conference on Machine Learning}, pages
  23610--23641. PMLR, 2023.

\bibitem[Mhammedi(2024)]{mhammedi2024sampleoracleefficientreinforcement}
Zakaria Mhammedi.
\newblock Sample and oracle efficient reinforcement learning for mdps with
  linearly-realizable value functions, 2024.
\newblock URL \url{https://arxiv.org/abs/2409.04840}.

\bibitem[Mhammedi et~al.(2024)Mhammedi, Foster, and Rakhlin]{mhammedi2024power}
Zakaria Mhammedi, Dylan~J Foster, and Alexander Rakhlin.
\newblock The power of resets in online reinforcement learning.
\newblock \emph{arXiv preprint arXiv:2404.15417}, 2024.

\bibitem[Nemirovski et~al.(1983)Nemirovski, Yudin, and
  Dawson]{nemirovski1983problem}
Arkadii Nemirovski, David~Borisovich Yudin, and Edgar~Ronald Dawson.
\newblock \emph{Problem complexity and method efficiency in optimization}.
\newblock Wiley, 1983.

\bibitem[Novoseller et~al.(2020)Novoseller, Wei, Sui, Yue, and
  Burdick]{novoseller2020dueling}
Ellen Novoseller, Yibing Wei, Yanan Sui, Yisong Yue, and Joel Burdick.
\newblock Dueling posterior sampling for preference-based reinforcement
  learning.
\newblock In \emph{Conference on Uncertainty in Artificial Intelligence}, pages
  1029--1038. PMLR, 2020.

\bibitem[OpenAI(2023)]{achiam2023gpt}
OpenAI.
\newblock {GPT}-4 technical report.
\newblock \emph{arXiv:2303.08774}, 2023.

\bibitem[OpenAI(2024)]{openai2024o1}
OpenAI.
\newblock Introducing openai o1.
\newblock \emph{Blog}, 2024.
\newblock URL \url{https://openai.com/o1/}.

\bibitem[Ouyang et~al.(2022)Ouyang, Wu, Jiang, Almeida, Wainwright, Mishkin,
  Zhang, Agarwal, Slama, Ray, Schulman, Hilton, Kelton, Miller, Simens, Askell,
  Welinder, Christiano, Leike, and Lowe]{ouyang2022training}
Long Ouyang, Jeffrey Wu, Xu~Jiang, Diogo Almeida, Carroll Wainwright, Pamela
  Mishkin, Chong Zhang, Sandhini Agarwal, Katarina Slama, Alex Ray, John
  Schulman, Jacob Hilton, Fraser Kelton, Luke Miller, Maddie Simens, Amanda
  Askell, Peter Welinder, Paul Christiano, Jan Leike, and Ryan Lowe.
\newblock Training language models to follow instructions with human feedback.
\newblock \emph{Advances in Neural Information Processing Systems}, 2022.

\bibitem[Pacchiano et~al.(2021)Pacchiano, Saha, and Lee]{pacchiano2021dueling}
Aldo Pacchiano, Aadirupa Saha, and Jonathan Lee.
\newblock Dueling {RL}: reinforcement learning with trajectory preferences.
\newblock \emph{arXiv preprint arXiv:2111.04850}, 2021.

\bibitem[Polyanskiy and Wu(2025)]{Polyanskiy_Wu_2025}
Yury Polyanskiy and Yihong Wu.
\newblock \emph{Information Theory: From Coding to Learning}.
\newblock Cambridge University Press, 2025.

\bibitem[Qu et~al.(2024)Qu, Zhang, Garg, and Kumar]{qu2024recursive}
Yuxiao Qu, Tianjun Zhang, Naman Garg, and Aviral Kumar.
\newblock Recursive introspection: Teaching language model agents how to
  self-improve.
\newblock \emph{arXiv preprint arXiv:2407.18219}, 2024.

\bibitem[Radford et~al.(2019)Radford, Wu, Child, Luan, Amodei, Sutskever,
  et~al.]{radford2019language}
Alec Radford, Jeffrey Wu, Rewon Child, David Luan, Dario Amodei, Ilya
  Sutskever, et~al.
\newblock Language models are unsupervised multitask learners.
\newblock \emph{OpenAI blog}, 1\penalty0 (8):\penalty0 9, 2019.

\bibitem[Rafailov et~al.(2023)Rafailov, Sharma, Mitchell, Manning, Ermon, and
  Finn]{rafailov2024direct}
Rafael Rafailov, Archit Sharma, Eric Mitchell, Christopher~D Manning, Stefano
  Ermon, and Chelsea Finn.
\newblock Direct preference optimization: Your language model is secretly a
  reward model.
\newblock \emph{Advances in Neural Information Processing Systems}, 2023.

\bibitem[Rafailov et~al.(2024)Rafailov, Hejna, Park, and Finn]{rafailov2024r}
Rafael Rafailov, Joey Hejna, Ryan Park, and Chelsea Finn.
\newblock {From $r$ to $Q^{\star}$}: Your language model is secretly a
  {Q}-function.
\newblock \emph{arXiv:2404.12358}, 2024.

\bibitem[Rohatgi et~al.(2025)Rohatgi, Block, Huang, Krishnamurthy, and
  Foster]{rohatgi2025computational}
Dhruv Rohatgi, Adam Block, Audrey Huang, Akshay Krishnamurthy, and Dylan~J.
  Foster.
\newblock Computational-statistical tradeoffs at the next-token prediction
  barrier: Autoregressive and imitation learning under misspecification.
\newblock \emph{arXiv preprint arXiv:2502.12465}, 2025.

\bibitem[Schulman et~al.(2017)Schulman, Wolski, Dhariwal, Radford, and
  Klimov]{schulman2017proximal}
John Schulman, Filip Wolski, Prafulla Dhariwal, Alec Radford, and Oleg Klimov.
\newblock Proximal policy optimization algorithms.
\newblock \emph{arXiv:1707.06347}, 2017.

\bibitem[Setlur et~al.(2024{\natexlab{a}})Setlur, Garg, Geng, Garg, Smith, and
  Kumar]{setlur2024rl}
Amrith Setlur, Saurabh Garg, Xinyang Geng, Naman Garg, Virginia Smith, and
  Aviral Kumar.
\newblock Rl on incorrect synthetic data scales the efficiency of llm math
  reasoning by eight-fold.
\newblock \emph{arXiv preprint arXiv:2406.14532}, 2024{\natexlab{a}}.

\bibitem[Setlur et~al.(2024{\natexlab{b}})Setlur, Nagpal, Fisch, Geng,
  Eisenstein, Agarwal, Agarwal, Berant, and Kumar]{setlur2024rewarding}
Amrith Setlur, Chirag Nagpal, Adam Fisch, Xinyang Geng, Jacob Eisenstein,
  Rishabh Agarwal, Alekh Agarwal, Jonathan Berant, and Aviral Kumar.
\newblock Rewarding progress: Scaling automated process verifiers for llm
  reasoning.
\newblock \emph{arXiv preprint arXiv:2410.08146}, 2024{\natexlab{b}}.

\bibitem[Shi et~al.(2024{\natexlab{a}})Shi, Chen, Hu, Liu, Smith, Hajishirzi,
  and Du]{shi2024decoding}
Ruizhe Shi, Yifang Chen, Yushi Hu, ALisa Liu, Noah Smith, Hannaneh Hajishirzi,
  and Simon Du.
\newblock Decoding-time language model alignment with multiple objectives.
\newblock \emph{arXiv:2406.18853}, 2024{\natexlab{a}}.

\bibitem[Shi et~al.(2024{\natexlab{b}})Shi, Zhou, and Du]{shi2024crucial}
Ruizhe Shi, Runlong Zhou, and Simon~S Du.
\newblock The crucial role of samplers in online direct preference
  optimization.
\newblock \emph{arXiv preprint arXiv:2409.19605}, 2024{\natexlab{b}}.

\bibitem[Snell et~al.(2024)Snell, Lee, Xu, and Kumar]{snell2024scaling}
Charlie Snell, Jaehoon Lee, Kelvin Xu, and Aviral Kumar.
\newblock Scaling {LLM} test-time compute optimally can be more effective than
  scaling model parameters.
\newblock \emph{arXiv:2408.03314}, 2024.

\bibitem[Tiapkin et~al.(2024)Tiapkin, Belomestny, Calandriello, Moulines,
  Naumov, Perrault, Valko, and Menard]{tiapkin2024regularized}
Daniil Tiapkin, Denis Belomestny, Daniele Calandriello, Eric Moulines, Alexey
  Naumov, Pierre Perrault, Michal Valko, and Pierre Menard.
\newblock Demonstration-regularized rl.
\newblock In \emph{The Twelfth International Conference on Learning
  Representations}, 2024.

\bibitem[Tkocz(2012)]{tkocz2012upper}
Tomasz Tkocz.
\newblock An upper bound for spherical caps.
\newblock \emph{The American Mathematical Monthly}, 119\penalty0 (7):\penalty0
  606--607, 2012.

\bibitem[Touvron et~al.(2023)Touvron, Martin, Stone, Albert, Almahairi, Babaei,
  Bashlykov, Batra, Bhargava, Bhosale, Bikel, Blecher, Ferrer, Chen, Cucurull,
  Esiobu, Fernandes, Fu, Fu, Fuller, Gao, Goswami, Goyal, Hartshorn, Hosseini,
  Hou, Inan, Kardas, Kerkez, Khabsa, Kloumann, Korenev, Koura, Lachaux, Lavril,
  Lee, Liskovich, Lu, Mao, Martinet, Mihaylov, Mishra, Molybog, Nie, Poulton,
  Reizenstein, Rungta, Saladi, Schelten, Silva, Smith, Subramanian, Tan, Tang,
  Taylor, Williams, Kuan, Xu, Yan, Zarov, Zhang, Fan, Kambadur, Narang,
  Rodriguez, Stojnic, Edunov, and Scialom]{touvron2023llama}
Hugo Touvron, Louis Martin, Kevin Stone, Peter Albert, Amjad Almahairi, Yasmine
  Babaei, Nikolay Bashlykov, Soumya Batra, Prajjwal Bhargava, Shruti Bhosale,
  Dan Bikel, Lukas Blecher, Cristian~Canton Ferrer, Moya Chen, Guillem
  Cucurull, David Esiobu, Jude Fernandes, Jeremy Fu, Wenyin Fu, Brian Fuller,
  Cynthia Gao, Vedanuj Goswami, Naman Goyal, Anthony Hartshorn, Saghar
  Hosseini, Rui Hou, Hakan Inan, Marcin Kardas, Viktor Kerkez, Madian Khabsa,
  Isabel Kloumann, Artem Korenev, Punit~Singh Koura, Marie-Anne Lachaux,
  Thibaut Lavril, Jenya Lee, Diana Liskovich, Yinghai Lu, Yuning Mao, Xavier
  Martinet, Todor Mihaylov, Pushkar Mishra, Igor Molybog, Yixin Nie, Andrew
  Poulton, Jeremy Reizenstein, Rashi Rungta, Kalyan Saladi, Alan Schelten, Ruan
  Silva, Eric~Michael Smith, Ranjan Subramanian, Xiaoqing~Ellen Tan, Binh Tang,
  Ross Taylor, Adina Williams, Jian~Xiang Kuan, Puxin Xu, Zheng Yan, Iliyan
  Zarov, Yuchen Zhang, Angela Fan, Melanie Kambadur, Sharan Narang, Aurelien
  Rodriguez, Robert Stojnic, Sergey Edunov, and Thomas Scialom.
\newblock Llama 2: Open foundation and fine-tuned chat models.
\newblock \emph{arXiv:2307.09288}, 2023.

\bibitem[Tran et~al.(2023)Tran, Glaze, and Hancock]{tran2023iterative}
Hoang Tran, Chris Glaze, and Braden Hancock.
\newblock Iterative dpo alignment.
\newblock \emph{Technical report}, 2023.
\newblock URL
  \url{https://huggingface.co/snorkelai/Snorkel-Mistral-PairRM-DPO}.

\bibitem[Wang et~al.(2024{\natexlab{a}})Wang, Jiang, Yang, Liu, and
  Chen]{wang2024beyond}
Chaoqi Wang, Yibo Jiang, Chenghao Yang, Han Liu, and Yuxin Chen.
\newblock Beyond reverse kl: Generalizing direct preference optimization with
  diverse divergence constraints.
\newblock In \emph{The Twelfth International Conference on Learning
  Representations}, 2024{\natexlab{a}}.

\bibitem[Wang et~al.(2024{\natexlab{b}})Wang, Lin, Xiong, Yang, Diao, Qiu,
  Zhao, and Zhang]{wang2024arithmetic}
Haoxiang Wang, Yong Lin, Wei Xiong, Rui Yang, Shizhe Diao, Shuang Qiu, Han
  Zhao, and Tong Zhang.
\newblock Arithmetic control of llms for diverse user preferences: Directional
  preference alignment with multi-objective rewards.
\newblock \emph{arXiv preprint arXiv:2402.18571}, 2024{\natexlab{b}}.

\bibitem[Wang et~al.(2024{\natexlab{c}})Wang, Xiong, Xie, Zhao, and
  Zhang]{wang2024interpretable}
Haoxiang Wang, Wei Xiong, Tengyang Xie, Han Zhao, and Tong Zhang.
\newblock Interpretable preferences via multi-objective reward modeling and
  mixture-of-experts.
\newblock \emph{arXiv preprint arXiv:2406.12845}, 2024{\natexlab{c}}.

\bibitem[Wang et~al.(2021)Wang, Wang, and Kakade]{wang2021exponential}
Yuanhao Wang, Ruosong Wang, and Sham~M Kakade.
\newblock An exponential lower bound for linearly-realizable {MDPs} with
  constant suboptimality gap.
\newblock \emph{Neural Information Processing Systems (NeurIPS)}, 2021.

\bibitem[Wang et~al.(2023{\natexlab{a}})Wang, Liu, and Jin]{wang2023rlhf}
Yuanhao Wang, Qinghua Liu, and Chi Jin.
\newblock Is {RLHF} more difficult than standard {RL}?
\newblock \emph{arXiv preprint arXiv:2306.14111}, 2023{\natexlab{a}}.

\bibitem[Wang et~al.(2023{\natexlab{b}})Wang, Dong, Zeng, Adams, Sreedhar,
  Egert, Delalleau, Scowcroft, Kant, Swope, et~al.]{wang2023helpsteer}
Zhilin Wang, Yi~Dong, Jiaqi Zeng, Virginia Adams, Makesh~Narsimhan Sreedhar,
  Daniel Egert, Olivier Delalleau, Jane~Polak Scowcroft, Neel Kant, Aidan
  Swope, et~al.
\newblock Helpsteer: Multi-attribute helpfulness dataset for steerlm.
\newblock \emph{arXiv preprint arXiv:2311.09528}, 2023{\natexlab{b}}.

\bibitem[Wang et~al.(2024{\natexlab{d}})Wang, Dong, Delalleau, Zeng, Shen,
  Egert, Zhang, Sreedhar, and Kuchaiev]{wang2024helpsteer2}
Zhilin Wang, Yi~Dong, Olivier Delalleau, Jiaqi Zeng, Gerald Shen, Daniel Egert,
  Jimmy~J Zhang, Makesh~Narsimhan Sreedhar, and Oleksii Kuchaiev.
\newblock Helpsteer2: Open-source dataset for training top-performing reward
  models.
\newblock \emph{arXiv preprint arXiv:2406.08673}, 2024{\natexlab{d}}.

\bibitem[Weisz et~al.(2021)Weisz, Amortila, Janzer, Abbasi-Yadkori, Jiang, and
  Szepesv{\'a}ri]{weisz2021query}
Gellert Weisz, Philip Amortila, Barnab{\'a}s Janzer, Yasin Abbasi-Yadkori, Nan
  Jiang, and Csaba Szepesv{\'a}ri.
\newblock On query-efficient planning in mdps under linear realizability of the
  optimal state-value function.
\newblock In \emph{Conference on Learning Theory}, pages 4355--4385. PMLR,
  2021.

\bibitem[Weisz et~al.(2022)Weisz, Gy{\"o}rgy, Kozuno, and
  Szepesv{\'a}ri]{weisz2022confident}
Gell{\'e}rt Weisz, Andr{\'a}s Gy{\"o}rgy, Tadashi Kozuno, and Csaba
  Szepesv{\'a}ri.
\newblock Confident approximate policy iteration for efficient local planning
  in $q^{\pi}$-realizable mdps.
\newblock \emph{Advances in Neural Information Processing Systems},
  35:\penalty0 25547--25559, 2022.

\bibitem[Wu and Sun(2023)]{wu2023making}
Runzhe Wu and Wen Sun.
\newblock Making {RL} with preference-based feedback efficient via
  randomization.
\newblock \emph{arXiv preprint arXiv:2310.14554}, 2023.

\bibitem[Wu et~al.(2024)Wu, Sun, Li, Welleck, and Yang]{wu2024empirical}
Yangzhen Wu, Zhiqing Sun, Shanda Li, Sean Welleck, and Yiming Yang.
\newblock An empirical analysis of compute-optimal inference for
  problem-solving with language models.
\newblock \emph{arXiv:2408.00724}, 2024.

\bibitem[Xie et~al.(2024)Xie, Foster, Krishnamurthy, Rosset, Awadallah, and
  Rakhlin]{xie2024exploratory}
Tengyang Xie, Dylan~J Foster, Akshay Krishnamurthy, Corby Rosset, Ahmed
  Awadallah, and Alexander Rakhlin.
\newblock Exploratory preference optimization: Harnessing implicit
  {Q}*-approximation for sample-efficient {RLHF}.
\newblock \emph{arXiv:2405.21046}, 2024.

\bibitem[Xiong et~al.(2024{\natexlab{a}})Xiong, Dong, Ye, Zhong, Jiang, and
  Zhang]{xiong2024iterative}
Wei Xiong, Hanze Dong, Chenlu Ye, Han Zhong, Nan Jiang, and Tong Zhang.
\newblock Iterative preference learning from human feedback: Bridging theory
  and practice for {RLHF} under {KL}-constraint.
\newblock \emph{International Conference on Machine Learning (ICML)},
  2024{\natexlab{a}}.

\bibitem[Xiong et~al.(2024{\natexlab{b}})Xiong, Shi, Shen, Rosenberg, Qin,
  Calandriello, Khalman, Joshi, Piot, Saleh, et~al.]{xiong2024building}
Wei Xiong, Chengshuai Shi, Jiaming Shen, Aviv Rosenberg, Zhen Qin, Daniele
  Calandriello, Misha Khalman, Rishabh Joshi, Bilal Piot, Mohammad Saleh,
  et~al.
\newblock Building math agents with multi-turn iterative preference learning.
\newblock \emph{arXiv preprint arXiv:2409.02392}, 2024{\natexlab{b}}.

\bibitem[Xu et~al.(2023)Xu, Lee, Sukhbaatar, and Weston]{xu2023some}
Jing Xu, Andrew Lee, Sainbayar Sukhbaatar, and Jason Weston.
\newblock Some things are more cringe than others: Preference optimization with
  the pairwise cringe loss.
\newblock \emph{arXiv preprint arXiv:2312.16682}, 2023.

\bibitem[Xu et~al.(2020)Xu, Wang, Yang, Singh, and Dubrawski]{xu2020preference}
Yichong Xu, Ruosong Wang, Lin Yang, Aarti Singh, and Artur Dubrawski.
\newblock Preference-based reinforcement learning with finite-time guarantees.
\newblock \emph{Advances in Neural Information Processing Systems},
  33:\penalty0 18784--18794, 2020.

\bibitem[Yan et~al.(2024)Yan, Song, Feng, Yang, Zhang, Ammar, and
  Wang]{yan2024efficient}
Xue Yan, Yan Song, Xidong Feng, Mengyue Yang, Haifeng Zhang, Haitham~Bou Ammar,
  and Jun Wang.
\newblock Efficient reinforcement learning with large language model priors.
\newblock \emph{arXiv preprint arXiv:2410.07927}, 2024.

\bibitem[Yang and Barron(1998)]{yang1998asymptotic}
Yuhong Yang and Andrew~R Barron.
\newblock An asymptotic property of model selection criteria.
\newblock \emph{IEEE Transactions on Information Theory}, 44\penalty0
  (1):\penalty0 95--116, 1998.

\bibitem[Yao et~al.(2024)Yao, Yu, Zhao, Shafran, Griffiths, Cao, and
  Narasimhan]{yao2024tree}
Shunyu Yao, Dian Yu, Jeffrey Zhao, Izhak Shafran, Tom Griffiths, Yuan Cao, and
  Karthik Narasimhan.
\newblock Tree of thoughts: Deliberate problem solving with large language
  models.
\newblock \emph{Advances in Neural Information Processing Systems}, 36, 2024.

\bibitem[Ye et~al.(2024)Ye, Xiong, Zhang, Jiang, and Zhang]{ye2024theoretical}
Chenlu Ye, Wei Xiong, Yuheng Zhang, Nan Jiang, and Tong Zhang.
\newblock A theoretical analysis of {Nash} learning from human feedback under
  general {KL}-regularized preference.
\newblock \emph{Neural Information Processing Systems (NeurIPS)}, 2024.

\bibitem[Yin et~al.(2022)Yin, Hao, Abbasi-Yadkori, Lazi{\'c}, and
  Szepesv{\'a}ri]{yin2022efficient}
Dong Yin, Botao Hao, Yasin Abbasi-Yadkori, Nevena Lazi{\'c}, and Csaba
  Szepesv{\'a}ri.
\newblock Efficient local planning with linear function approximation.
\newblock In \emph{International Conference on Algorithmic Learning Theory},
  2022.

\bibitem[Yin et~al.(2023)Yin, Thiagarajan, Lazic, Rajaraman, Hao, and
  Szepesvari]{yin2023sample}
Dong Yin, Sridhar Thiagarajan, Nevena Lazic, Nived Rajaraman, Botao Hao, and
  Csaba Szepesvari.
\newblock Sample efficient deep reinforcement learning via local planning.
\newblock \emph{arXiv preprint arXiv:2301.12579}, 2023.

\bibitem[Zhan et~al.(2023)Zhan, Uehara, Sun, and Lee]{zhan2023query}
Wenhao Zhan, Masatoshi Uehara, Wen Sun, and Jason~D Lee.
\newblock Provable reward-agnostic preference-based reinforcement learning.
\newblock \emph{arXiv preprint arXiv:2305.18505}, 2023.

\bibitem[Zhang et~al.(2024)Zhang, Yu, Sharma, Yang, Wang, Hassan, and
  Wang]{zhang2024self}
Shenao Zhang, Donghan Yu, Hiteshi Sharma, Ziyi Yang, Shuohang Wang, Hany
  Hassan, and Zhaoran Wang.
\newblock Self-exploring language models: Active preference elicitation for
  online alignment, 2024.

\bibitem[Zhao et~al.(2024)Zhao, Ye, Gu, and Zhang]{zhao2024sharp}
Heyang Zhao, Chenlu Ye, Quanquan Gu, and Tong Zhang.
\newblock Sharp analysis for kl-regularized contextual bandits and rlhf.
\newblock \emph{arXiv preprint arXiv:2411.04625}, 2024.

\bibitem[Zhao et~al.(2025)Zhao, Ye, Xiong, Gu, and Zhang]{zhao2025logarithmic}
Heyang Zhao, Chenlu Ye, Wei Xiong, Quanquan Gu, and Tong Zhang.
\newblock Logarithmic regret for online kl-regularized reinforcement learning.
\newblock \emph{arXiv preprint arXiv:2502.07460}, 2025.

\bibitem[Zhou et~al.(2024)Zhou, Zanette, Pan, Levine, and
  Kumar]{zhou2024archer}
Yifei Zhou, Andrea Zanette, Jiayi Pan, Sergey Levine, and Aviral Kumar.
\newblock Archer: Training language model agents via hierarchical multi-turn
  rl.
\newblock In \emph{International Conference on Machine Learning}, pages
  62178--62209. PMLR, 2024.

\bibitem[Zhu et~al.(2023)Zhu, Jordan, and Jiao]{zhu2023principled}
Banghua Zhu, Michael Jordan, and Jiantao Jiao.
\newblock Principled reinforcement learning with human feedback from pairwise
  or k-wise comparisons.
\newblock In \emph{International Conference on Machine Learning}, 2023.

\bibitem[Zhu et~al.(2022)Zhu, Foster, Langford, and Mineiro]{zhu2022contextual}
Yinglun Zhu, Dylan~J Foster, John Langford, and Paul Mineiro.
\newblock Contextual bandits with large action spaces: Made practical.
\newblock In \emph{International Conference on Machine Learning}, pages
  27428--27453. PMLR, 2022.

\end{thebibliography}

\clearpage

\appendix

\renewcommand{\contentsname}{Contents of Appendix}
\addtocontents{toc}{\protect\setcounter{tocdepth}{2}}
{
  \hypersetup{hidelinks}
  \tableofcontents
}

\clearpage

\part{Additional Results and Discussion}

\section{Additional Related Work}
\label{app:related}

In this section we discuss related work not already covered in detail.

\paragraph{Theoretical algorithms for online alignment}
There is a large body of work on theoretical algorithms for
exploration in online alignment (as well as the more abstract problem
of preference-based contextual bandits and RL), but most prior
algorithms are not computationally efficient when the response space
$\cY$ is large
\citep{xu2020preference,novoseller2020dueling,pacchiano2021dueling,wu2023making,zhan2023query,du2024exploration,das2024provably,chen2022human,wang2023rlhf,ye2024theoretical,xiong2024iterative}. As
discussed earlier, the \xpo algorithm of \citet{xie2024exploratory}
(see also \citet{cen2024value,zhang2024self})\footnote{\citet{cen2024value,zhang2024self} concurrently proposed
  similar algorithms to \xpo, but did not provide non-trivial
  theoretical guarantees (e.g., guarantees that indicate benefits over
  purely passive exploration).} is perhaps the
closest to a satisfactory solution from prior
work, as it achieves
optimal data efficiency and only accesses the response space through
sampling from policies $\pitheta$. However, our results in
\cref{sec:computational} show that the \xpo objective cannot be
implemented efficiently in general. More broadly, even if the base model $\piref$ has favorable properties
such as coverage (in the sense of \cref{eq:coverage}), none of the
aforementioned algorithms can take advantage of it for improved computational
efficiency. In this regard, we view them as making somewhat
superficial use of the base policy (i.e., it does not play a role in
algorithm design outside of being used to define the KL-regularized RL objective).

Many works consider the complementary problem of alignment in
\emph{offline} or \emph{hybrid} settings
\citep{zhu2023principled,li2023reinforcement,xiong2024iterative,gao2024rebel,chang2024dataset,
  liu2024provably,cen2024value,fisch2024robust,ji2024selfplay,huang2024correcting,zhao2024sharp}. These
works pay for coverage coefficients similar to $\Ccov(\pistarb)$
\emph{statistically} (i.e., $\Tdatafull=\bigom(\Ccov(\pistarb))$), and
hence are not data-efficient by our definition. One relevant work here
is \citet{bose2024hybrid}, who give a hybrid variant of \xpo which
obtains statistical rates tighter than purely offline or online
methods, but is still computationally inefficient.

Algorithms that use additional inference-time computation for exploration
(e.g., via rejection sampling)
\citep{khanov2024args,chen2024pad,shi2024decoding,liu2024decoding,jinnai2024regularized,shi2024crucial} 
or \emph{multi-turn} techniques that
proceed at the per-step (e.g., token or sub-sequence) level
\citep{lightman2023lets,qu2024recursive,kumar2024training,setlur2024rewarding,setlur2024rl,xiong2024building,kazemnejad2024vineppo,zhou2024archer}
have been explored empirically, but most results we aware of do not
enjoy sample complexity guarantees. \citet{shi2024crucial} explore the role of various sampling schemes on top of \onlinedpo,
but do not give sample complexity guarantees for our
setting. \citet{gao2024regressing} provide a multi-turn algorithm with
sample complexity guarantees, but it engages in passive exploration
and pays for coverage statistically.

\paragraph{Fast rates for regularized regret}
Our algorithm \mainalg achieves a \emph{fast rate} in the sense that
$\Tdatafull\approxleq\frac{1}{\beta\veps}$ when $\veps\leq\beta$,
improving over the $\Tdatafull\approxleq{}\frac{1}{\veps^2}$ rates found in prior
    work \citep{xiong2024iterative,xie2024exploratory,cen2024value} by
    exploiting strong convexity of the KL-regularized regret. Recent work of \citet{zhao2024sharp} achieves a
similar fast rate, but requires access to an offline dataset satisfying a
stringent uniform coverage assumption (and pays for the coverage
coefficient statistically), while concurrent work of
\citet{zhao2025logarithmic} achieves fast rates in the purely online
setting, but is not computationally efficient in our framework.
Also related is the work of
  \citet{tiapkin2024regularized}, which achieves fast rates for
  regularized regret in tabular and linear MDPs, but is not efficient
  when the action space is large.

\paragraph{Algorithms for reinforcement learning with linear-$\Qstar$}
Our multi-turn algorithm, \mtalg, can be viewed as a counterpart
to a body of work which shows that MDPs with linear $\Qstar$ and
state-action gap $\Delta$ can be learned under reset access \citep{li2021sample,yin2022efficient,
  weisz2022confident,mhammedi2024power,mhammedi2024sampleoracleefficientreinforcement}. In
  particular, prior work has shown that RL with linear-$Q^{\star}$ and
  an action gap $\Delta$ is statistically intractable in the episodic
  RL protocol, but is tractable under reset access
  \citep{weisz2021query,li2021sample}. Our results show that the regularization parameter $\beta$
  plays a similar role to the action gap $\Delta$ in enabling
  favorable error propagation, leading to tractability under reset
  access. While \mtalg draws inspiration from the works
  above---particularly
  \citet{mhammedi2024power,mhammedi2024sampleoracleefficientreinforcement}---it
  requires fairly substantial modifications, both in design and
  analysis---to (i) leverage KL regularization, and (ii) achieve
  computational efficiency in the sampling oracle framework.\loose

\subsection{Comparison to Preference-Based Feedback}
\label{app:preference}
Much of prior work on online alignment focuses on
\emph{preference-based feedback}. Here,
the protocol is as follows. At each round $t\in\brk{\Trounds}$, we receive a prompt
$x\ind{t}$ and sample two responses %
$(y_1\ind{t},y_2\ind{t})\sim{}\pi\ind{t}(\cdot\mid{}x\ind{t})$, where $\pi\ind{t}$
denotes the \emph{exploration policy} for round $t$; the exploration
policy may be represented as a language model, or may
correspond to an alternative sampler
\citep{liu2023statistical,khaki2024rs,shi2024crucial}. The responses are then labeled as 
$(\yp\ind{t},\ym\ind{t})$ based on a binary preference
$b\ind{t}\sim{}\bbP(y_1\ind{t}\psdgt{}y_2\ind{t}\mid{}x\ind{t})$, and added to the preference dataset via
$\cD\ind{t+1}\gets\cD\ind{t}\cup\crl{(x\ind{t},\yp\ind{t},\ym\ind{t})}$,
which can then be used to compute an updated policy
$\pi\ind{t+1}$. The preference distribution
$\bbP(y_1\psdgt{}y_2\mid{}x)$ represents the underlying verifier or
oracle of interest; it is typically assumed that preferences follow the Bradley-Terry
model \citep{bradley1952rank}, i.e.
\begin{align}
\label{eq:bt}
\bbP(y_1\psdgt{}y_2\mid{}x) = \frac{\exp\prn*{\rstar(x,y_1)}}{\exp\prn*{\rstar(x,y_1)} + \exp\prn*{\rstar(x,y_2)}},
\end{align}
for an underlying \emph{reward function} 
$\rstar:\cX\times\cY\to\bbR$. As with our setting, the goal is to  use
the collected data $\cD_{\pref}$ to produce a final policy $\pihat$
with high KL-regularized reward $\Jbeta(\pihat)$.

When $N=2$, our absolute reward formulation in \cref{sec:background} is very
closely related to this formulation, and algorithms for one
setting can easily be adapted to the other (typically the only change
is in the objective used to estimate the reward model). We use the absolute reward formulation and general $N$
(as described \cref{sec:background}) because (i) allowing for $N>2$ makes our lower
bounds/impossibility results stronger, even though our algorithms
themselves only use $N=2$; and (ii) the absolute reward
formulation---which has been used in prior work empirically
\citep{wang2023helpsteer,wang2024helpsteer2,wang2024interpretable,xiong2024building}
and in theory
\citep{zhao2024sharp,wang2024arithmetic,xiong2024building}---is a more
realistic model for the motivating problem of learning from a strong
oracle/verifier such as a proof checker.

\paragraph{Adapting preference-based algorithms to reward-based
  feedback}
\onlinedpo
\citep{guo2024direct} proceeds iteratively for $t\in\brk{\Trounds}$ as
follows:
\begin{enumerate}
\item Compute $\pi\ind{t} := \pi_{\theta\ind{t}}$ by solving the \dpo{} objective:
  \begin{small}
  \begin{align}
  \label{eq:dpo}
\theta\ind{t}\gets\argmin_{\theta\in\Theta}
\sum_{(x,\yp,\ym) \in \cD\ind{t}} - \log\left[\sigma\left( \beta\log\frac{\pitheta(\yp\mid{}x)}{\piref(\yp\mid{}x)} - \beta\log\frac{\pitheta(\ym\mid{}x)}{\piref(\ym\mid{}x)} \right) \right],
  \end{align}\end{small}%
  where $\sigma(z)\ldef{}\frac{\exp(z)}{1+\exp(z)}$ is the sigmoid function.
\item Sample
  $y_1\ind{t},y_2\ind{t}\sim{}\pi_{\theta\ind{t}}(\cdot\mid{}x\ind{t})$, then
  label as $(\yp\ind{t},\ym\ind{t})$ and update  $\cD\ind{t+1}\gets\cD\ind{t}\cup\crl{(x\ind{t},\yp\ind{t},\ym\ind{t})}$.
\end{enumerate}
For our reward-based setting, we change \cref{eq:dpo} to
\begin{small}
\begin{align}
\theta\ind{t}\gets\argmin_{\theta\in\Theta}
  \sum_{(x,y_1,y_2,r_1,r_2) \in \cD\ind{t}} \left(
  \beta\log\frac{\pitheta(y_1\mid{}x)}{\piref(y_1\mid{}x)} -
  \beta\log\frac{\pitheta(y_2\mid{}x)}{\piref(y_2\mid{}x)}  - (r_1 - r_2)\right)^2.
\end{align}\end{small}%
It is possible to show that this algorithm obtains
$\Tdatafull=\poly(d,\Rmax,\Ccov(\pistarb),\veps^{-1},\log(\delta^{-1}))$
for our linear softmax setting through standard arguments.

Similarly, at each step $t$, \xpo \citep{xie2024exploratory} minimizes the objective
\begin{small}
  \begin{align*}
    \theta\ind{t}\gets\argmin_{\theta\in\Theta}
    \crl*{\alpha\sum_{i<t}\log\pi_{\theta}(y_2\ind{i}\mid{}x\ind{i})
    +
    \sum_{(x,y_1,y_2,r_1,r_2) \in \cD\ind{t}} - \log\left[\sigma\left(
    \beta\log\frac{\pitheta(\yp\mid{}x)}{\piref(\yp\mid{}x)} -
    \beta\log\frac{\pitheta(\ym\mid{}x)}{\piref(\ym\mid{}x)} \right) \right]
    },
  \end{align*}
\end{small}
for an optimism parameter $\alpha>0$, then samples 
  $y\ind{t}\sim{}\pi_{\theta\ind{t}}(\cdot\mid{}x\ind{t})$ and
  $y_2\ind{t}\sim\piref(\cdot\mid{}x\ind{t})$ and updates
  $\cD\ind{t+1}\gets\cD\ind{t}\cup\crl{(x\ind{t},\yp\ind{t},\ym\ind{t})}$. To
  adapt \xpo to the reward-based setting, we analogously change the
  objective to
  \begin{small}
    \begin{align}
      \label{eq:xpo_square}
      \theta\ind{t}\gets\argmin_{\theta\in\Theta}
      \crl*{\alpha\sum_{i<t}\log\pi_{\theta}(y_2\ind{i}\mid{}x\ind{i})
      +\sum_{(x,y_1,y_2,r_1,r_2) \in \cD\ind{t}} \left(
      \beta\log\frac{\pitheta(y_1\mid{}x)}{\piref(y_1\mid{}x)} -
      \beta\log\frac{\pitheta(y_2\mid{}x)}{\piref(y_2\mid{}x)}  - (r_1 - r_2)\right)^2
      }.
    \end{align}
  \end{small}
  The sample complexity bound %
  \arxiv{\[
    \Tdatafull \approxleq \poly(\Rmax)\cdot\frac{d^{2}\log(\delta^{-1})}{\veps^2}
  \]}
  claimed in \cref{eq:xpo_data} follows by (i) specializing the
  \texttt{SEC}-based bound in \citet{xie2024exploratory} to the linear
  softmax policy class, and (ii) noting that the $\exp(\Rmax)$ factor
  in the sample complexity guarantee in \citet{xie2024exploratory} can be
  removed under reward-based feedback (as it arises due to converting
  between the logistic loss for the Bradley-Terry model and the square
  loss); these calculations can be found in Theorem J.1 and Lemmas J.4 and J.5 of \citet{huang2024self}.

\paragraph{Adapting \spanalg to preference-based feedback}
To adapt \spanalg to preference-based feedback, the only change
required is to switch the reward estimation step in
\cref{eq:spanner_dpo} to the following \dpo-like objective: %
\arxiv{\begin{align}
        \theta\ind{t}\gets\argmin_{\theta\in\Theta}\sum_{(x,y_{+},y_{-})
        \in \cDexp\ind{t}\cup\cDspan} -\log\sigma\prn*{
        \tri*{\theta, \phidel(x,y_+,y_-)}}.
       \end{align}
       }
This leads to identical guarantees, except that the sample complexity
will pay for a $\exp(\Rmax)$ factor due to conversion from logistic
loss to square loss (e.g., Lemma C.8 in \citet{xie2024exploratory}).

  \section{Sampling Oracles: Beyond Linear Policies}
  \label{app:sampling}
We expect that our sampling oracle abstraction for the language model
alignment problem will be of use beyond the linear softmax policy
parameterization we focus on. In this section, we briefly discuss possibilities
for extending \cref{def:oracle} beyond the linear setting, as well as
challenges this entails.

\paragraph{Features versus log-probabilities}

Recall that the sampling oracle in \cref{def:oracle} reveals the features $\phi(x,y)$ for responses $y$ sampled from
      the oracle, but does not reveal 
      the log-probabilities $\log\pitheta(y\mid{}x)$ themselves. As
      highlighted in \cref{sec:coverage}, the observed features are closely related, as
      they can be used to evaluate $\beta\log\frac{\pitheta(y\mid{}x)}{\piref(y\mid{}x)} -
        \beta\log\frac{\pitheta(y'\mid{}x)}{\piref(y'\mid{}x)}=\tri*{\theta,\phi(x,y)-\phi(x,y')}$,
        but they cannot be used to compute $\log\pitheta(y\mid{}x)$ itself
        in general. We adopt this formalism because it simplifies the coverage-based
        lower bounds in \cref{sec:coverage_lower}; our algorithmic results only make use of the
        features $\phi(x,y)$, and hence fall into the framework of \cref{def:oracle}. But to move beyond linear policies, it is more natural to
        directly allow the learner to query the log-probabilities.\loose
        \begin{definition}[Generalized sampling oracle framework]
          \label{def:oracle_generalized}
          In one query, the learner proposes a prompt $x\in\cX$ and
          parameter $\theta\in\Theta$, and receives a conditional sample
          $y\sim\pitheta(\cdot\mid{}x)$, as well as the
          corresponding log-probability $\log\pitheta(y\mid{}x)$ for
          the sampled response (note that $\pi_{\mb{0}}=\piref$).
          
        \end{definition}
There is a technical subtlety here (and in \cref{def:oracle}) as far as the learner's a-priori knowledge. In the linear softmax setting, if $\phi$ is known a-priori, then ruling out algorithms that enumerate over the response space $\cY$ requires additionally assuming that each query $y\ind{t}_i$ made to the reward oracle is a response that has previously been revealed by the sampling oracle. It seems more natural to consider the features (and, in general, the parametrization $\theta\mapsto\pi_\theta$) to be unknown a-priori. %
        
        Proving lower bounds like \cref{thm:coverage} may be more
        challenging in the framework in \cref{def:oracle_generalized}, as the log-probabilities can
        potentially provide more information than the features
        themselves. On the other hand, more assumptions are likely
        required to derive efficient algorithms. For example, it is not clear
        that one can efficiently minimize the \dpo objective under
        \cref{def:oracle_generalized}, and so it might be necessary to
        assume an additional oracle for minimizing the objective. We
        leave a detailed understanding for future work.

\newpage
\part{Proofs from Sections \ref*{sec:coverage} through \ref*{sec:computational}}

\section{Technical Tools}
\label{app:technical}

    For a pair of probability measures $\bbP$ and $\bbQ$, we define
    the total variation distance as
    $\Dtv{\bbP}{\bbQ}=\frac{1}{2}\int\abs{\mathrm{d}\bbP-\mathrm{d}\bbQ}$,
    and define Hellinger distance by
    $\Dhels{\bbP}{\bbQ}=\int(\sqrt{\mathrm{d}\bbQ}-\sqrt{\mathrm{d}\bbQ})^2$. We
    define KL divergence by
    $\Dkl{\bbP}{\bbQ}=\int{}\mathrm{d}\bbP\log\prn[\big]{\frac{\mathrm{d}\bbP}{\mathrm{d}\bbQ}}$ if
    $\bbP\ll\bbQ$ and $\Dkl{\bbP}{\bbQ}=+\infty$ otherwise.

\subsection{Tail Bounds}

\begin{lemma}[Azuma-Hoeffding]
  \label{lem:hoeffding}
    Let $(X_t)_{t\leq{T}}$ be a sequence of real-valued random
    variables adapted to a filtration $\prn{\filt_t}_{t\leq{}T}$. If
    $\abs*{X_t}\leq{}R$ almost surely, then with probability at least
    $1-\delta$, for all $T'\leq{}T$,
    \[
      \abs*{\sum_{t=1}^{T'}X_t - \En_{t-1}\brk*{X_t}} \leq{} R\cdot\sqrt{8T\log(2\delta^{-1})}.
    \]
\end{lemma}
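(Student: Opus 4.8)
The plan is to run the standard exponential-supermartingale (Chernoff) argument for bounded martingale differences, upgraded to a maximal inequality so that the conclusion holds simultaneously for every $T'\leq{}T$.

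First I would center the summands: set $Y_t\ldef{}X_t-\En_{t-1}\brk*{X_t}$ and $S_{T'}\ldef{}\sum_{t=1}^{T'}Y_t$. Since $(X_t)$ is adapted and $\abs*{X_t}\leq{}R$ almost surely, we have $\abs*{\En_{t-1}\brk*{X_t}}\leq{}R$, so $(Y_t)$ is a martingale difference sequence for $(\filt_t)$ with $\En_{t-1}\brk*{Y_t}=0$ and $\abs*{Y_t}\leq{}2R$ a.s.; the claim is precisely that $\sup_{T'\leq{}T}\abs*{S_{T'}}\leq{}R\sqrt{8T\log(2\delta^{-1})}$ with probability at least $1-\delta$.

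Next, for a fixed $\eta>0$, I would invoke the subgaussian moment generating function bound for bounded centered random variables, $\En_{t-1}\brk*{e^{\eta Y_t}}\leq{}e^{2\eta^2R^2}$ (Hoeffding's lemma, applied conditionally on $\filt_{t-1}$), which shows that $W_{T'}\ldef{}\exp\prn*{\eta S_{T'}-2\eta^2R^2T'}$ is a nonnegative supermartingale with $W_0=1$. Applying Ville's maximal inequality (equivalently, Doob's maximal inequality for the nonnegative supermartingale $(W_{T'})_{T'\leq{}T}$) together with the bound $2\eta^2R^2T'\leq{}2\eta^2R^2T$ valid on the relevant event gives
\[
\Pr\brk*{\exists\,T'\leq{}T:\,S_{T'}\geq{}\lambda}\leq{}e^{-\eta\lambda+2\eta^2R^2T}
\]
for every $\lambda>0$. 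Optimizing over $\eta$ (taking $\eta=\lambda/(4R^2T)$) yields the tail bound $e^{-\lambda^2/(8R^2T)}$.

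Finally, I would apply the identical argument to $(-Y_t)$ and union bound the two one-sided events, obtaining $\Pr\brk*{\exists\,T'\leq{}T:\,\abs*{S_{T'}}\geq{}\lambda}\leq{}2e^{-\lambda^2/(8R^2T)}$; setting this equal to $\delta$ and solving for $\lambda$ gives $\lambda=R\sqrt{8T\log(2\delta^{-1})}$, which is exactly the stated bound. The only step that goes beyond the textbook one-shot Azuma--Hoeffding inequality is making the estimate uniform over $T'\leq{}T$, which is precisely what the supermartingale maximal inequality supplies and the reason for phrasing the argument through $(W_{T'})$ rather than directly Chernoff-bounding $S_T$; I do not anticipate a genuine obstacle here. (The constant $8$ is not optimal---using the sharper interval-length form of Hoeffding's lemma would improve it---but it is all that is needed in the sequel.)
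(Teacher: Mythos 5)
Your proof is correct: the centering, the conditional Hoeffding's lemma with $|Y_t|\leq 2R$ giving $\En_{t-1}[e^{\eta Y_t}]\leq e^{2\eta^2R^2}$, the Ville/Doob maximal inequality for the supermartingale $W_{T'}$, the optimization $\eta=\lambda/(4R^2T)$, and the two-sided union bound all check out and yield exactly the stated constant $R\sqrt{8T\log(2\delta^{-1})}$. The paper states this lemma as a standard technical tool without proof, so there is nothing to compare against; your argument is the standard one and is a valid derivation of the claim.
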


\begin{lemma}[Freedman's inequality]
  \label{lem:freedman}
  Let $(X_t)_{t\leq{T}}$ be a real-valued martingale difference
  sequence adapted to a filtration $\prn{\filt_t}_{t\leq{}T}$. If
  $\abs*{X_t}\leq{}R$ almost surely, then for any $\eta\in(0,1/R)$,
  with probability at least $1-\delta$, for all $T'\leq{}T$,
    \[
      \sum_{t=1}^{T'}X_t \leq{} \eta\sum_{t=1}^{T'}\En_{t-1}\brk*{X_t^{2}} + \frac{\log(\delta^{-1})}{\eta}.
    \]
  \end{lemma}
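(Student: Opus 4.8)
The plan is to use the classical exponential-supermartingale (Chernoff) method. The first step is a one-step conditional moment-generating-function bound. Fix $t$ and set $u \ldef \eta X_t$; since $\abs{X_t}\le R$ and $\eta<1/R$ we have $\abs{u}\le 1$, so the elementary inequality $e^{u}\le 1+u+u^2$ (valid for all $u\le 1$) applies and gives $e^{\eta X_t}\le 1+\eta X_t+\eta^2 X_t^2$. Taking $\En_{t-1}[\cdot]$, using $\En_{t-1}[X_t]=0$ (martingale difference), and then $1+v\le e^{v}$, we obtain
\[
\En_{t-1}\brk*{e^{\eta X_t}}\;\le\;1+\eta^2\,\En_{t-1}\brk*{X_t^2}\;\le\;\exp\prn*{\eta^2\,\En_{t-1}\brk*{X_t^2}}.
\]

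Next I would package this into a supermartingale. Define $S_{T'}\ldef\sum_{t=1}^{T'}X_t$ and $V_{T'}\ldef\sum_{t=1}^{T'}\En_{t-1}\brk*{X_t^2}$ for $0\le T'\le T$, and set $Z_{T'}\ldef\exp\prn*{\eta S_{T'}-\eta^2 V_{T'}}$, so $Z_0=1$. Each $Z_{T'}$ is nonnegative and $\filt_{T'}$-measurable, and since $S_{T'-1}$, $V_{T'-1}$, and $\En_{T'-1}\brk*{X_{T'}^2}$ are $\filt_{T'-1}$-measurable, the display above yields
\[
\En_{T'-1}\brk*{Z_{T'}}\;=\;Z_{T'-1}\cdot\exp\prn*{-\eta^2\En_{T'-1}\brk*{X_{T'}^2}}\cdot\En_{T'-1}\brk*{e^{\eta X_{T'}}}\;\le\;Z_{T'-1}.
\]
Hence $(Z_{T'})_{0\le T'\le T}$ is a nonnegative supermartingale with $\En\brk*{Z_0}=1$.

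The last step is a maximal inequality. Applying Ville's inequality for nonnegative supermartingales with threshold $\delta^{-1}$ (equivalently, stopping $Z$ at $\tau\ldef\min\crl{T'\le T:Z_{T'}\ge\delta^{-1}}\wedge T$ and invoking Markov's inequality on $Z_\tau$, using $\En\brk*{Z_\tau}\le\En\brk*{Z_0}=1$) gives $\Pr\brk*{\exists\,T'\le T:\;Z_{T'}\ge\delta^{-1}}\le\delta$. On the complementary event---of probability at least $1-\delta$---we have $Z_{T'}<\delta^{-1}$ for every $T'\le T$; taking logarithms gives $\eta S_{T'}-\eta^2 V_{T'}<\log(\delta^{-1})$, and dividing by $\eta>0$ yields $S_{T'}\le\eta V_{T'}+\frac{\log(\delta^{-1})}{\eta}$ simultaneously for all $T'\le T$, which is exactly the claim.

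This is a classical estimate, so there is no substantive obstacle. The two points that need care are (i) checking that $e^{u}\le 1+u+u^2$ holds on the full range $u\le 1$ allowed by $\eta<1/R$---it does, since $u\mapsto 1+u+u^2-e^u$ decreases on $(-\infty,0]$ from $+\infty$ to $0$ and stays nonnegative on $[0,1.79]$---which is what lets us carry $\eta^2\En_{t-1}[X_t^2]$ inside the supermartingale while the final bound has only a single factor $\eta$ in front of $V_{T'}$; and (ii) using the supermartingale maximal inequality rather than a pointwise Markov bound, which is what delivers the uniform-in-$T'$ conclusion.
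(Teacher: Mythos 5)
Your proof is correct. The paper states this lemma as a standard tail bound (Freedman's inequality) without providing a proof, and your argument is the standard one: the one-step bound $e^{u}\le 1+u+u^2$ for $u\le 1$ (valid here since $\eta\abs{X_t}\le\eta R<1$), the nonnegative supermartingale $Z_{T'}=\exp(\eta S_{T'}-\eta^2 V_{T'})$, and Ville's maximal inequality, which correctly yields the bound simultaneously for all $T'\le T$ as claimed.
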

  The next result is a standard consequence of \cref{lem:freedman}
  (e.g., \citet{foster2021statistical}).
        \begin{lemma}
          \label{lem:mult_freedman}
            Let $(X_t)_{t\leq{T}}$ be a sequence of random
      variables adapted to a filtration $\prn{\filt_{t}}_{t\leq{}T}$. If
  $0\leq{}X_t\leq{}R$ almost surely, then with probability at least
  $1-\delta$, for all $T'\leq{}T$,
  \begin{align}
    &\sum_{t=1}^{T'}X_t \leq{}
                        \frac{3}{2}\sum_{t=1}^{T'}\En_{t-1}\brk*{X_t} +
                        4R\log(2\delta^{-1}),
    \intertext{and}
      &\sum_{t=1}^{T'}\En_{t-1}\brk*{X_t} \leq{} 2\sum_{t=1}^{T'}X_t + 8R\log(2\delta^{-1}).
  \end{align}
\end{lemma}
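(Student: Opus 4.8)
The plan is to derive both inequalities as ``multiplicative'' corollaries of Freedman's inequality (\cref{lem:freedman}), applied in each direction to the centered sequence $Y_t \ldef X_t - \En_{t-1}\brk*{X_t}$. First I would record the two structural facts that make this work: since $0 \le X_t \le R$ almost surely, $(Y_t)_{t\le T}$ is a martingale difference sequence adapted to $(\filt_t)_{t\le T}$ with $\abs*{Y_t}\le R$, and its conditional second moment is controlled by its conditional mean,
\[
\En_{t-1}\brk*{Y_t^2}\le \En_{t-1}\brk*{X_t^2}\le R\cdot\En_{t-1}\brk*{X_t},
\]
where the last step uses $X_t \le R$. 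The same two bounds hold verbatim with $Y_t$ replaced by $-Y_t$.

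For the first inequality I would apply \cref{lem:freedman} to $(Y_t)$ with $\eta = \tfrac{1}{2R}$ (which lies in the required range $(0,1/R)$) and failure probability $\delta/2$: with probability at least $1-\delta/2$, for all $T'\le T$,
\[
\sum_{t=1}^{T'} Y_t \le \frac{1}{2R}\sum_{t=1}^{T'}\En_{t-1}\brk*{Y_t^2} + 2R\log(2\delta^{-1}) \le \frac12\sum_{t=1}^{T'}\En_{t-1}\brk*{X_t} + 2R\log(2\delta^{-1}),
\]
and rearranging $\sum_{t\le T'} Y_t = \sum_{t\le T'} X_t - \sum_{t\le T'}\En_{t-1}\brk*{X_t}$ gives $\sum_{t\le T'} X_t \le \tfrac32\sum_{t\le T'}\En_{t-1}\brk*{X_t} + 2R\log(2\delta^{-1})$, which is stronger than claimed. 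For the second inequality I would apply \cref{lem:freedman} to $(-Y_t)$ with the same $\eta = \tfrac{1}{2R}$ and failure probability $\delta/2$, obtaining with probability at least $1-\delta/2$, for all $T'\le T$,
\[
\sum_{t=1}^{T'}\En_{t-1}\brk*{X_t} - \sum_{t=1}^{T'} X_t \le \frac12\sum_{t=1}^{T'}\En_{t-1}\brk*{X_t} + 2R\log(2\delta^{-1}),
\]
so that $\tfrac12\sum_{t\le T'}\En_{t-1}\brk*{X_t}\le \sum_{t\le T'} X_t + 2R\log(2\delta^{-1})$, i.e.\ $\sum_{t\le T'}\En_{t-1}\brk*{X_t}\le 2\sum_{t\le T'} X_t + 4R\log(2\delta^{-1})$. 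A union bound over the two events (each of probability at least $1-\delta/2$) makes both hold simultaneously with probability at least $1-\delta$, and since $2R\log(2\delta^{-1})\le 4R\log(2\delta^{-1})$ and $4R\log(2\delta^{-1})\le 8R\log(2\delta^{-1})$, both displayed inequalities in the statement follow.

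There is essentially no serious obstacle here; this is a routine computation. The only points needing care are (i) checking $\eta = 1/(2R)\in(0,1/R)$ so that \cref{lem:freedman} applies, (ii) the variance-to-mean bound $\En_{t-1}\brk*{X_t^2}\le R\,\En_{t-1}\brk*{X_t}$, which is exactly what upgrades the additive Freedman bound to a multiplicative one, and (iii) splitting the failure probability as $\delta/2$ across the two directions, which is what produces $\log(2\delta^{-1})$ rather than $\log(\delta^{-1})$ and lets the union bound close. The stated constants ($4R$ and $8R$) are deliberately loose, so matching them is immediate.
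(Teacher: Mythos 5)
Your proof is correct, and it is exactly the standard derivation the paper has in mind: the paper gives no proof of this lemma, simply citing it as "a standard consequence of \cref{lem:freedman}," and your argument — centering to get a martingale difference sequence, bounding $\En_{t-1}\brk*{Y_t^2}\le R\,\En_{t-1}\brk*{X_t}$, applying Freedman in each direction with $\eta=1/(2R)$ and failure probability $\delta/2$, then union bounding — is precisely that standard route, with constants that come out even slightly sharper than stated.
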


\subsection{Elliptic Potential}

\begin{lemma}[e.g. Lemma 19.4 in \citet{lattimore2020bandit}]
\label{lem:elliptic_potential}
Let $v_1,\ldots,v_T\in \bbR^d$ satisfy $\norm{v_t}_2\leq 1$
        for all $t\in[T]$. Fix $\lambda>0$, and let $V_t = \lambda{}I
        + \sum_{i<t} v_i v_i^\trn$. Then
	\begin{align}
          \sum_{t=1}^{T}\norm{v_t}^2_{V_{t}^{-1}}\wedge{}1 \leq
          2\sum_{t=1}^{T}\log(1+\norm{v_t}^2_{V_{t}^{-1}}) \leq
          2d \log \prn*{1+\lambda^{-1}T/d}.
	\end{align}
        As a consequence, we have
        \begin{align}
          \sum_{t=1}^{T}\norm{v_t}_{V_{t}^{-1}}\wedge{}1 \leq
          \sqrt{2dT\log \prn*{1+\lambda^{-1}T/d}}.
        \end{align}
\end{lemma}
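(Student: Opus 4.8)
The plan is to follow the classical determinant--telescoping argument. First I would record the elementary scalar inequality $x\wedge 1 \le 2\log(1+x)$, valid for all $x\ge 0$: for $x\ge 1$ it reads $1\le 2\log 2$, and for $x\in[0,1]$ it follows since $g(x)\ldef 2\log(1+x)-x$ has $g(0)=0$ and $g'(x)=\frac{1-x}{1+x}\ge 0$ on $[0,1]$. Applying this termwise with $x=\norm{v_t}^2_{V_t^{-1}}$ and summing over $t\in[T]$ immediately gives the first inequality $\sum_{t=1}^T \norm{v_t}^2_{V_t^{-1}}\wedge 1 \le 2\sum_{t=1}^T \log\prn{1+\norm{v_t}^2_{V_t^{-1}}}$.

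For the second inequality, the key tool is the rank-one determinant update. Writing $V_{t+1}=V_t+v_tv_t^\trn$ (so $V_1=\lambda I$ from the empty sum), the matrix determinant lemma—equivalently, factoring out $V_t^{1/2}$ and using $\det(I+uu^\trn)=1+\norm{u}_2^2$—gives $\det(V_{t+1})=\det(V_t)\cdot\prn{1+\norm{v_t}^2_{V_t^{-1}}}$. Taking the product over $t=1,\dots,T$ telescopes to $\prod_{t=1}^T\prn{1+\norm{v_t}^2_{V_t^{-1}}}=\det(V_{T+1})/\lambda^d$, and taking logarithms yields $\sum_{t=1}^T\log\prn{1+\norm{v_t}^2_{V_t^{-1}}}=\log\det(V_{T+1})-d\log\lambda$.

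It then remains to bound $\det(V_{T+1})$ from above. I would use that $V_{T+1}\psdgt 0$ with $\mathrm{tr}(V_{T+1})=\lambda d+\sum_{t=1}^T\norm{v_t}_2^2\le \lambda d+T$ by the assumption $\norm{v_t}_2\le 1$, and then apply the AM--GM inequality to the (positive) eigenvalues of $V_{T+1}$ to get $\det(V_{T+1})\le\prn{\mathrm{tr}(V_{T+1})/d}^d\le\prn{\lambda+T/d}^d$. Substituting, $\log\det(V_{T+1})-d\log\lambda\le d\log\prn{1+\lambda^{-1}T/d}$, and combining with the factor of two from the first step closes the chain of inequalities. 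The only slightly non-routine points here are the determinant update identity and the determinant-to-trace passage via AM--GM; neither poses a real obstacle.

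Finally, for the ``as a consequence'' statement, I would note that $\norm{v_t}_{V_t^{-1}}\wedge 1=\sqrt{\norm{v_t}^2_{V_t^{-1}}\wedge 1}$ (checking the two cases $\norm{v_t}_{V_t^{-1}}\le 1$ and $>1$ separately), and then apply Cauchy--Schwarz: $\sum_{t=1}^T\norm{v_t}_{V_t^{-1}}\wedge 1\le\sqrt{T\sum_{t=1}^T\prn{\norm{v_t}^2_{V_t^{-1}}\wedge 1}}\le\sqrt{2dT\log\prn{1+\lambda^{-1}T/d}}$, invoking the bound just established. This completes the proof.
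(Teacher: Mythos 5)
Your proof is correct and is exactly the standard elliptic-potential argument that the paper invokes by citation (Lemma 19.4 of Lattimore--Szepesv\'ari): the scalar bound $x\wedge 1\le 2\log(1+x)$, the rank-one determinant telescoping, the trace/AM--GM bound on $\det(V_{T+1})$, and Cauchy--Schwarz for the final consequence. Nothing to flag.
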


\subsection{Miscellaneous Lemmas}

\begin{lemma}[Sequential union bound]
  \label{lem:unionbound}
  Let $T,H\in \mathbb{N}$ and $\delta \in (0,1)$ be given.
  Further, let $\cB_1$ be an algorithm that runs in $T\in\mathbb{N}$ iterations. At each iteration, $\cB_1$ makes a sequence of $H$ calls to a subroutine $\cB_2$. Let $\mathfrak{S}$ denote the state space of algorithm $\cB_1$; the space capturing the values of all the internal variables of $\cB_1$. Let $\mathbf{S}\ind{t}_{h,-}\in \mathfrak{S}$ denote the random state of $\cB_1$ immediately before the $h$th call to $\cB_2$ during the $t$th iteration; further, let $\mathbf{S}\ind{t}_{h,+} \in \mathfrak{S}$ denote the random state of $\cB_1$ immediately after this call to $\cB_2$. Suppose that for any $S\ind{t}_{h,-}\in \mathfrak{S}$, there is an event $\cE\ind{t}_{h}(S\ind{t}_{h,-}) \subset \mathfrak{S}$ such that $\P[\mathbf{S}\ind{t}_{h,+}\in \cE\ind{t}_{h}(S\ind{t}_{h,-})]\geq 1-\delta$. Then, with probability at least $1-\delta H T$, for all $t\in[T]$ and $h\in[H]$, we have $\mathbf{S}\ind{t}_{h,+}\in \cE\ind{t}_{h}(\mathbf{S}\ind{t}_{h,-})$.   
  \end{lemma}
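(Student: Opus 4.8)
The plan is a straightforward union bound over the $HT$ subroutine calls; the only point requiring care is the conditioning. For each $t\in[T]$ and $h\in[H]$, I would define the bad event $\cN\ind{t}_h \ldef \crl*{\mathbf{S}\ind{t}_{h,+}\notin\cE\ind{t}_h(\mathbf{S}\ind{t}_{h,-})}$, and observe that the conclusion of the lemma fails precisely on the event $\bigcup_{t\in[T],\,h\in[H]}\cN\ind{t}_h$. By the union bound it therefore suffices to show $\P\brk*{\cN\ind{t}_h}\leq\delta$ for each individual pair $(t,h)$; summing over the $HT$ pairs then gives $\P\brk*{\bigcup_{t,h}\cN\ind{t}_h}\leq\delta HT$, which is the claim.

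To bound $\P\brk*{\cN\ind{t}_h}$ for a fixed $(t,h)$, the key step is to make precise the (standard) fact that the $h$th call to $\cB_2$ during iteration $t$ acts as a randomized map on the state space $\mathfrak{S}$: the post-call state $\mathbf{S}\ind{t}_{h,+}$ is a measurable function of the pre-call state $\mathbf{S}\ind{t}_{h,-}$ and of fresh randomness internal to that call, which is independent of everything that happened before. Under this reading, the hypothesis that $\P\brk*{\mathbf{S}\ind{t}_{h,+}\in\cE\ind{t}_h(S)}\geq 1-\delta$ for every $S\in\mathfrak{S}$ is exactly the statement that the conditional probability given the pre-call state satisfies $\P\brk*{\mathbf{S}\ind{t}_{h,+}\notin\cE\ind{t}_h(\mathbf{S}\ind{t}_{h,-})\mid \mathbf{S}\ind{t}_{h,-}=S}\leq\delta$ for all $S$, that is, $\P\brk*{\cN\ind{t}_h\mid \mathbf{S}\ind{t}_{h,-}}\leq\delta$ almost surely. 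Taking expectations over $\mathbf{S}\ind{t}_{h,-}$ and using the tower property then yields $\P\brk*{\cN\ind{t}_h}=\En\brk*{\P\brk*{\cN\ind{t}_h\mid \mathbf{S}\ind{t}_{h,-}}}\leq\delta$, as needed.

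I do not anticipate a genuine obstacle, since this is a bookkeeping lemma. The one subtlety, and thus the hard part insofar as there is one, is the interpretation above: reading the per-call hypothesis as a bound on the conditional probability of the bad event given the pre-call state, so that the bound holds no matter how $\mathbf{S}\ind{t}_{h,-}$ was produced (in particular, regardless of correlations introduced by the adaptive choices $\cB_1$ makes across earlier iterations). Once that is granted, no martingale concentration or filtration machinery is required beyond the tower property, and the final union bound over the $HT$ calls incurs exactly the stated $HT$ factor.
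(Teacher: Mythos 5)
Your proposal is correct and is essentially the same argument as the paper's: both hinge on reading the per-call hypothesis as a bound on the conditional probability of the bad event given the pre-call state, and then combining over the $HT$ calls. The only difference is presentational — you union-bound the complements using the tower property, whereas the paper multiplies the conditional good-event probabilities and invokes $\prod_i(1-x_i)\geq 1-\sum_i x_i$; your phrasing arguably sidesteps the slightly informal "chain rule" identity the paper writes down for $\P[\cE]$.
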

  \begin{proof}
    Let $\cE$ be the event defined by 
    \begin{align}
      \cE \coloneqq \left\{\prod_{t=1}^T \prod_{h=1}^H \mathbb{I}\{\mathbf{S}\ind{t}_{h,+} \in  \cE\ind{t}_{h}(\mathbf{S}\ind{t}_{h,-}) \}=1 \right\}.
    \end{align}  
    We need to show that $\P[\cE]\geq 1-\delta H T$. To this end, we note that by the chain rule of probability, we have
    \begin{align}
     \P[\cE] & = \prod_{t=1}^T \prod_{h=1}^H \E\left[ \P[\mathbf{S}\ind{t}_{h,+}\in \cE\ind{t}_{h}(\mathbf{S}\ind{t}_{h,-})\mid \mathbf{S}\ind{t}_{h,-}]\right],\nn\\
     & \geq \prod_{t=1}^T\prod_{h=1}^H \left(1-\delta\right),  \label{eq:penul} \\
     & \geq 1- TH \delta,\nn 
  \end{align}
  where \eqref{eq:penul} follows by the fact that $\P[\mathbf{S}\ind{t}_{h,+}\in \cE\ind{t}_{h}(S\ind{t}_{h,-})]\geq 1-\delta$ for all $S\ind{t}_{h,-}\in \mathfrak{S}$, and the last inequality follows by the fact that for any sequence $x_{1},\dots,x_{T}\in (0,1)$, $\prod_{i\in[T]} (1- x_i)\geq 1- \sum_{i\in[T]}x_i$. 
  
  \end{proof}

\begin{lemma}
  \label{lem:log_bound}
  If $x\geq{}1$ satisfies $x\leq{}a\log(1+bx)$ for $a,b\geq{}3$, then $x\leq{}2a\log(1+ab)$.
\end{lemma}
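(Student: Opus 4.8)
The plan is a standard log-bootstrap: crudely linearize the logarithm, solve the resulting linear inequality in $x$, and check that the hypotheses $a,b\ge 3$ make the constants work out.

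First I would strip the $+1$ inside the logarithm. Since $b\ge 3$ and $x\ge 1$ we have $bx\ge 3$, hence $1+bx\le \tfrac43 bx$, so the hypothesis gives $x\le a\log(\tfrac43 bx)=a\log(\tfrac43 b)+a\log x$. Next I would linearize $\log x$ via the elementary tangent-line bound $\log x\le \tfrac{x}{s}+\log s-1$, valid for all $x,s>0$ by concavity of $\log$ (the right-hand side is the tangent to $\log$ at $x=s$). Taking $s=2a$ gives $a\log x\le \tfrac{x}{2}+a\log(2a)-a$, and substituting back yields $x\le a\log(\tfrac43 b)+\tfrac{x}{2}+a\log(2a)-a$, i.e.\ $\tfrac{x}{2}\le a\log(\tfrac83 ab)-a$, so $x\le 2a\log(\tfrac83 ab)-2a=2a\log\!\big(\tfrac{8ab}{3e}\big)$.

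Finally I would note that $\tfrac{8}{3e}<1$, hence $\tfrac{8ab}{3e}<ab<1+ab$ and therefore $x\le 2a\log(1+ab)$, as claimed. There is no real obstacle here; the only thing requiring attention is the constant bookkeeping — in particular that $a,b\ge 3$ forces $bx\ge 3$, so the first step uses the factor $\tfrac43$ rather than $2$, which is exactly what makes $\tfrac{8}{3e}<1$, and the choice $s=2a$, which is what produces the coefficient $2a$. (An alternative proof avoids the tangent-line trick entirely: set $f(t)\ldef t-a\log(1+bt)$, check that $f'(t)=1-\tfrac{ab}{1+bt}\ge 0$ for $t\ge a-\tfrac1b$ — which covers $[2a\log(1+ab),\infty)$ since $ab\ge 9$ implies $2a\log(1+ab)\ge 2a\log 10>4a>a-\tfrac1b$ — and argue by contradiction: if $x>2a\log(1+ab)$ then $0\ge f(x)\ge f(2a\log(1+ab))$, which after dividing by $a$ and exponentiating reduces to $(1+ab)^2\le 1+2ab\log(1+ab)$, i.e.\ $2+ab\le 2\log(1+ab)$, false for $ab\ge 9$.)
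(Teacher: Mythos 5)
Your proof is correct, and both of your arguments differ from the one in the paper. The paper first reparameterizes ($x\gets bx$, $c\gets ab$) to reduce to the one-parameter statement ``$x\le c\log(1+x)$ with $x\ge 1$, $c\ge 3$ implies $x\le 2c\log(1+c)$,'' and then argues by monotonicity: for $x\ge c$ the identity map grows faster than $c\log(1+x)$, so it suffices to check the single point $x_0=2c\log(1+c)$, where $c\log(1+x_0)\le c\log(1+c^2)\le 2c\log(1+c)=x_0$ using $2c\log(1+c)\le c^2$. Your parenthetical alternative is essentially this same monotonicity-plus-single-point-evaluation scheme, just carried out in the original variables $(a,b)$ and phrased as a contradiction. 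Your main argument is genuinely different: rather than comparing growth rates, you linearize $\log$ via the tangent-line bound $\log x\le x/s+\log s-1$ at $s=2a$, which absorbs the self-referential $a\log x$ term into $x/2$ and lets you solve the resulting linear inequality directly. I checked the bookkeeping: $bx\ge 3$ does give $1+bx\le\tfrac43 bx$, the substitution yields $x\le 2a\log\bigl(\tfrac{8ab}{3e}\bigr)$, and $\tfrac{8}{3e}<1$ closes the gap to $2a\log(1+ab)$. The tangent-line route is arguably cleaner (no case analysis on where the monotonicity kicks in, and no need to verify the inequality at a specific point), at the cost of having to choose the linearization point $s=2a$ and track the constant $\tfrac{8}{3e}$; the paper's route is more robust to the exact form of the bound since it only needs a one-point verification.
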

\begin{proof}[\pfref{lem:log_bound}] First, note that by reparameterizing $x\gets{}bx$ and $c\gets{}ab$, it suffices to show that
$x\leq{}c\log(1+x)$ for $x\geq{}1,c\geq{}3$ implies
$x\leq{}2c\log(1+c)$. Toward proving the latter statement, we first note that if $x\geq{}c$,
then $x\mapsto{}x$ increases faster than $x\mapsto{}c\log(1+x)$, so any point
$x\geq{}c$ for which $x > 2c\log(1+x)$ gives a valid upper bound. Let
us choose $x=2c\log(1+c)$. Then we have
$c\log(1+x)\leq{}
c\log(1+2c\log(1+c))<c\log(1+c^2)\leq{}c\log((1+c)^2)\leq2c\log(1+c) =
x$ as desired, where the strict inequality uses that $2c\log(1+c)\leq c^2$ for $c\geq{}3$.
  
\end{proof}

\newpage
\section{Proofs from \creftitle{sec:coverage}}
\label{app:coverage}

In this section we restate and prove \cref{thm:coverage}.

\coveragelower*

\begin{proof}[\pfref{thm:coverage}]
  If $\beta > 1/2$ then the lower bound on $\Tcompfull$ is vacuously true, so we may assume henceforth that $\beta \leq 1/2$. Similarly, we may assume without loss of generality that $Y \geq 9$. Let $\cS$ be an arbitrary set of size $Y-1$. We take prompt space $\cX = \{\perp\}$ (and henceforth omit all dependences on the prompt $\perp$). We take response space $\cY = \{0\}\cup\cS$. We take parameter space $\Theta = \ball$. 

  For each $\thetastar \in \ball$ and $\ystar \in \cS$, we define an instance $\cI^{\thetastar,\ystar}$ of the online alignment problem (with linear softmax policy class) as follows:
\begin{itemize} 
  \item The reference policy is $\piref^{\ystar} \in \Delta(\cY)$ defined by $\piref^{\ystar}(0) = 1-\epref$ and $\piref^{\ystar}(\ystar) = \epref$ where $\epref := \max\{1/C^\st, e^{-\beta^{-1}/2}\}$.
  \item The feature mapping $\phi^{\thetastar,\ystar}: \cY \to \RR^d$ is defined by
\[\phi^{\thetastar,\ystar}(y) = \begin{cases} \thetastar & \text{ if } y = \ystar \\ 0 & \text{ if } y \neq \ystar \end{cases}.\]
\item The reward function $r^{\thetastar,\ystar}: \cY \to [0,1]$ is defined by $r^{\thetastar,\ystar}(y) = \langle \thetastar, \phi^{\thetastar,\ystar}(y)\rangle = \mathbbm{1}[y=\ystar]$. 
\end{itemize}
Note that $\cX$, $\cY$, and $\Theta$ are fixed, and do not depend on the choice of $(\thetastar,\ystar)$.

We make the following observations about $\cI^{\thetastar,\ystar}$. Since $r^{\thetastar,y^\st}(y) = \langle \thetastar,\phi^{\thetastar,y^\st}(y)\rangle$ and $\thetastar \in \ball = \Theta$, \cref{ass:realizable} is satisfied. In particular, the optimal KL-regularized policy is $\pi_{\thetastar}(y)\propto\piref(y)\exp\prn*{\beta^{-1}\tri*{\thetastar,\phi^{\thetastar,\ystar}(y)}}$. It is straightforward to check that \cref{ass:norm} is satisfied with $\Rmax = B = 1$. From \cref{eq:coverage}, we have 
\[\Ccov(\pi_{\theta^\st}) =\Ccov(\pi_{\thetastar})\leq \max_{y \in \{0, \ystar\}} \frac{1}{\piref(y)} \leq \frac{1}{\epref} \leq C^\st\]
where the second inequality uses the fact that $1-\epref \geq \epref$ (which holds since $C^\st \geq 2$ and $\beta \leq 1/2$). Finally, $|\cY| = Y$ by construction. From the theorem assumptions, we conclude that for all instances $\cI^{\thetastar,\ystar}$, $\Alg$ uses $\Tdatafull$ queries to the reward oracle and $\Tcompfull$ queries to the strong sampling oracle, and with probability at least $1-\delta$ returns a policy $\pihat$ satisfying $J_\beta(\pi_{\thetastar}) - J_\beta(\pihat) \leq \veps$. Assume, for the sake of contradiction, that $\Tdata := \Tdata(1/4,1/4) < Y/8$ and $\Tcomp := \Tcomp(1/4,1/4) < c_0 \cdot\min\{e^{\beta^2 d/2}, e^{\beta^{-1}/2}, C^\st\}$ for a universal constant $c_0>0$ to be determined.

Now, consider the distribution over problem instances induced by sampling $\thetastar$ uniformly from the unit sphere in $\RR^d$, and independently sampling $\ystar \sim \Unif(\cS)$. Then execute $\Alg$ on instance $\cI^{\thetastar,\ystar}$ with error tolerance $\veps = 1/4$ and failure probability $\delta = 1/4$. For the purposes of analysis, for each $q\geq{}0$, let $\Algbar^{[q]}$ denote a modified version of algorithm where the first $q$ oracle queries are answered with $0 \in \cY$ (for the sampling oracle) or $0 \in \RR$ (for the reward oracle), and the algorithm is run unmodified for subsequent steps. Let $\pihat^{[q]}$ denote the output of $\Algbar^{[q]}$. On the one hand, since $\Algbar^{[0]} = \Alg$, we know that
\begin{equation} \Pr[J_\beta(\pi_{\thetastar}) - J_\beta(\pihat^{[0]}) \leq \veps] \geq 1-\delta\label{eq:pihat0}\end{equation}
where the probability is over the random choice of $(\thetastar,\ystar)$ and the randomness of $\Algbar^{[0]}$ (and its oracle calls). On the other hand, since the problem parameters $(\cX,\cY,\Theta)$ are independent of $(\thetastar,\ystar)$ and all queries made by $\Algbar^{[\Tcomp+\Tdata]}$ are answered independently of $(\thetastar,\ystar)$, we have that $\pihat^{[\Tcomp+\Tdata]}$ is independent of $(\thetastar,\ystar)$. We use this to prove the following lower bound on the regret for $\pihat^{[\Tcomp+\Tdata]}$.

\begin{lemma}
  \label{lem:pihatfinal}
  For $\veps=\delta=1/4$, it holds that
  \begin{equation}\Pr[J_\beta(\pi_{\thetastar}) - J_\beta(\pihat^{[\Tcomp+\Tdata]}) \leq \veps] \leq 1/2.\label{eq:pihatfinal}\end{equation}
\end{lemma}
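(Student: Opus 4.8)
The plan is to leverage the observation established in the paragraph preceding the lemma: the output $\pihat\ldef\pihat^{[\Tcomp+\Tdata]}$ of $\Algbar^{[\Tcomp+\Tdata]}$ has a law that is independent of the instance parameters $(\thetastar,\ystar)$. Granting this, proving \cref{eq:pihatfinal} reduces to showing that a policy produced with no knowledge of which response $\ystar$ carries the reward cannot, with probability exceeding $1/2$, come close to the optimal KL-regularized policy $\pi_{\thetastar}$.

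The first step is to lower bound $J_\beta(\pi_{\thetastar})$ on the instance $\cI^{\thetastar,\ystar}$. Here $\pi_{\thetastar}$ is supported on $\crl{0,\ystar}$, so a direct computation (equivalently, the standard closed form $J_\beta(\pi_{\thetastar})=\beta\log\En_{y\sim\piref^{\ystar}}\brk{\exp(\beta^{-1}r^{\thetastar,\ystar}(y))}$, using $r^{\thetastar,\ystar}(y)=\indic\crl{y=\ystar}$ and $\piref^{\ystar}(\ystar)=\epref$) gives $J_\beta(\pi_{\thetastar})=\beta\log\prn{1-\epref+\epref e^{1/\beta}}$. Since $\epref\ge e^{-\beta^{-1}/2}$ and $\epref<1$, this yields $1-\epref+\epref e^{1/\beta}>\epref e^{1/\beta}\ge e^{\beta^{-1}/2}$, hence $J_\beta(\pi_{\thetastar})>1/2$. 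In the other direction, for \emph{any} policy $\pihat$ I would use nonnegativity of KL-divergence together with $r^{\thetastar,\ystar}(y)=\indic\crl{y=\ystar}$ to write $J_\beta(\pihat)=\pihat(\ystar)-\beta\Dkl{\pihat}{\piref^{\ystar}}\le\pihat(\ystar)$.

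Combining the two bounds, whenever $J_\beta(\pi_{\thetastar})-J_\beta(\pihat)\le 1/4$ one must have $\pihat(\ystar)\ge J_\beta(\pihat)>J_\beta(\pi_{\thetastar})-1/4>1/4$, so $\crl{J_\beta(\pi_{\thetastar})-J_\beta(\pihat)\le 1/4}\subseteq\crl{\pihat(\ystar)>1/4}$, and it remains to bound the probability of the latter event. Since $\pihat\in\Delta(\cY)$, at most three elements $y$ can satisfy $\pihat(y)>1/4$ (four would force $\sum_y\pihat(y)>1$); in particular $\abs{\crl{y\in\cS:\pihat(y)>1/4}}\le 3$. Conditioning on $\pihat$, the independence noted above means $\ystar$ is still uniform on $\cS$ with $\abs{\cS}=Y-1\ge 8$, so $\Pr\brk{\pihat(\ystar)>1/4\mid\pihat}\le 3/(Y-1)\le 3/8$; taking expectation over $\pihat$ (and $\thetastar$, which plays no role) gives $\Pr\brk{J_\beta(\pi_{\thetastar})-J_\beta(\pihat)\le 1/4}\le 3/8<1/2$, which is \cref{eq:pihatfinal}.

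The only step requiring genuine care is the lower bound $J_\beta(\pi_{\thetastar})>1/2$: this is precisely where the choice $\epref=\max\crl{1/\Cstar,e^{-\beta^{-1}/2}}$ is used, and it is what makes the regret gap bounded \emph{away} from zero rather than merely nonnegative (the strictness also matters, so that at most three—not four—elements can carry mass exceeding $1/4$). Everything else is the elementary counting bound on $\pihat$ and the already-established independence of $\pihat^{[\Tcomp+\Tdata]}$ from the instance.
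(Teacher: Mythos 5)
Your proof is correct and follows essentially the same route as the paper's: lower bound $J_\beta(\pi_{\thetastar})\geq 1/2$ via the explicit formula and the choice of $\epref$, upper bound $J_\beta(\pihat)\leq\pihat(\ystar)$ by non-negativity of KL, and then exploit the independence of $\pihat^{[\Tcomp+\Tdata]}$ from the uniformly random $\ystar$. The only (immaterial) difference is that the paper bounds $\Pr[\pihat(\ystar)\geq 1/4]$ by Markov's inequality applied to $\EE[\pihat(\ystar)]\leq 1/|\cS|$, whereas you use the equivalent direct counting argument that at most three responses can carry mass exceeding $1/4$.
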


\begin{proof}[\pfref{lem:pihatfinal}]
For any fixed $(\thetastar,\ystar)$, in instance $\cI^{\thetastar,\ystar}$, we have
\begin{align*}
J_\beta(\pi_{\thetastar}) 
&= \pi_{\thetastar}(\ystar) - \beta \Dkl{\pi_{\thetastar}}{\piref} \\
&= \beta \log(1-\epref+\epref e^{\beta^{-1}})  \\ 
&\geq \beta \log(\epref e^{\beta^{-1}}) \\
&= 1 - \beta \log(1/\epref) \\
&\geq 1/2,
\end{align*}
where the first equality is by definition of $J_\beta$ and the reward function $r^{\thetastar,\ystar}$; the second equality is by explicit calculation; and the final inequality is $\epref \geq \frac{1}{2}e^{-\beta^{-1}/2}$. But we also have \[J_\beta(\pihat^{[\Tcomp+\Tdata]}) \leq \pihat^{[\Tcomp+\Tdata]}(\ystar)\]
by definition of $J_\beta$ and non-negativity of KL-divergence. Since $\pihat^{[\Tcomp+\Tdata]}$ is independent of $\ystar$ and $\ystar$ is uniformly distributed in $\cS$, we know that \[\EE[\pihat^{[\Tcomp+\Tdata]}(\ystar)] \leq 1/|\cS|,\] and so by Markov's inequality and the fact that $|\cS| = Y-1 \geq 8$, we have \[\Pr[\pihat^{[\Tcomp+\Tdata]}(\ystar) \geq 1/4] \leq 1/2.\]
Recalling that $\veps = 1/4$, it follows that
\begin{equation*}\Pr[J_\beta(\pi_a) - J_\beta(\pihat^{[\Tcomp+\Tdata]}) \leq \veps] \leq \Pr[J_\beta(\pi_a) - \pihat^{[\Tcomp+\Tdata]}(\ystar) \leq \veps] \leq 1/2.\end{equation*}
\end{proof}
From here, we proceed by relating the regret of $\pihat^{[0]}$ to that of $\pihat^{[\Tcomp+\Tdata]}$. Fix $0 \leq q < \Tcomp+\Tdata$. The probability that $\Algbar^{[q]}$ deviates from $\Algbar^{[q+1]}$ is at most the probability that the response to the $(q+1)$-th oracle query by $\Algbar^{[q]}$ is non-zero. Since all previous oracle queries by $\Algbar^{[q]}$ were answered independently of $(\thetastar,\ystar)$, the $(q+1)$-th \emph{query} (though not its answer) is independent as well. Condition on this query; we distinguish two cases.
\begin{enumerate}
\item If it is a sampling oracle query $\theta \in \Theta$, then the probability that the execution of $\Algbar^{[q+1]}$ deviates from $\Algbar^{[q]}$ (in the optimal coupling of their executions) is precisely the probability that the sampling oracle $y\sim\pi_{\theta}$ yields a non-zero answer $y\neq\ystar$, which is precisely $\pi_\theta(\ystar)$. Moreover, we can bound the expectation (over all randomness) of this probability:
\begin{align*} 
\EE[\pi_\theta(\ystar)] 
&= \EE\left[\frac{\epref \exp(\beta^{-1} \langle \theta, \phi^{\thetastar,\ystar}(\ystar)\rangle)}{(1-\epref) \exp(\beta^{-1}\langle \theta, \phi^{\thetastar,\ystar}(0)\rangle) + \epref \exp(\beta^{-1} \langle \theta, \phi^{\thetastar,\ystar}(\ystar)\rangle)}\right]\\
&= \EE\left[\frac{\epref \exp(\beta^{-1} \langle \theta, \thetastar\rangle)}{1-\epref + \epref \exp(\beta^{-1} \langle \theta, \thetastar\rangle)}\right] \\ 
&\leq \EE\left[\frac{\epref \exp(\beta^{-1} \max(\thetastar_1,0))}{1-\epref + \epref \exp(\beta^{-1} \max(\thetastar_1,0))}\right] 
\leq O(\epref + \exp(-\beta^2 d/2)),
\end{align*}
where the first inequality uses the fact that $\theta$ is independent of $\thetastar$ and hence $\langle \theta,\thetastar\rangle$ is stochastically dominated by $\max(\thetastar_1,0)$, and the final inequality uses \cref{lemma:bounded-cap-bound} (stated and proven in the sequel). %

\loose

\item If it is a reward query $y \in \cY$, then the probability that the execution of $\Algbar^{[q+1]}$ deviates from $\Algbar^{[q]}$ is precisely the probability that the reward oracle yields a non-zero answer, which is $r^{\thetastar,\ystar}(y) = \mathbbm{1}[y=\ystar]$. Since $y$ is independent of $\ystar$, we have $\Pr[r^{\thetastar,\ystar}(y) \neq 0] \leq 1/|\cS|$.
\end{enumerate} 
Therefore by the data processing inequality, %
\begin{align} 
\Pr[\pihat^{[q]} \neq \pihat^{[q+1]}] 
&\leq \Dtv{\Law(\Algbar^{[q]})}{\Law(\Algbar^{[q+1]})} \nonumber\\ 
&\leq O(\epref + \exp(-\beta^2 d/2)) \cdot \Pr[\text{$(q+1)$-th query by $\Alg^{[q]}$ is sampling}] \nonumber\\ 
&\qquad+ \frac{1}{|\cS|} \cdot \Pr[\text{$(q+1)$-th query by $\Alg^{[q]}$ is reward}] \nonumber\\ 
&= O(\epref + \exp(-\beta^2 d/2)) \cdot \Pr[\text{$(q+1)$-th query by $\Alg^{[\Tcomp+\Tdata]}$ is sampling}] \nonumber\\ 
&+ \frac{1}{|\cS|} \cdot \Pr[\text{$(q+1)$-th query by $\Alg^{[\Tcomp+\Tdata]}$ is reward}]\label{eq:pihathybrid}
\end{align}
where the equality uses the fact that the executions of $\Alg^{[q]}$ and $\Alg^{[\Tcomp+\Tdata]}$ are identically distributed up to and including the $(q+1)$-th query. We conclude that
\begin{align*}
\Pr[J_\beta(\pi_{\thetastar}) - J_\beta(\pihat^{[0]}) \leq \epref] 
&\leq \Pr[J_\beta(\pi_{\thetastar}) - J_\beta(\pihat^{[\Tcomp+\Tdata]}) \leq \epref] + \sum_{q=0}^{\Tcomp+\Tdata-1} \Pr[\pihat^{[q]} \neq \pihat^{[q+1]}]  \\ 
&\leq \frac{1}{2} + O(\epref + \exp(-\beta^2 d/2)) \Tcomp + \frac{1}{|\cS|} \Tdata 
< 3/4,
\end{align*}
where the second inequality is by \cref{eq:pihatfinal,eq:pihathybrid}, and the third inequality is by the assumed bounds on $\Tcomp,\Tdata$ and holds so long as $c_0>0$ is a sufficiently small constant. This contradicts \cref{eq:pihat0}, so it must be that either $\Tdata \geq Y/8$ or $\Tcomp \geq c_0 \cdot \min\{e^{\beta^2 d/2},e^{-\beta^{-1}/2},C^\st\}$.
\end{proof}

\begin{lemma}\label{lemma:bounded-cap-bound}
Fix $\epsilon \in (0,1/2)$, $\beta>0$, and $d \in \NN$. Let $X \sim \Unif(\mathbb{S}^{d-1})$ where $\mathbb{S}^{d-1}$ is the unit sphere in $\RR^d$. Then
\[\EE\left[\frac{\epsilon \cdot e^{\beta^{-1} \max(X_1,0)}}{1-\epsilon + \epsilon\cdot e^{\beta^{-1} \max(X_1,0)}}\right] \lesssim \epsilon + e^{-\beta^2 d/2}.\]
\end{lemma}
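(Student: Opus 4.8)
The plan is to reduce the bound to two elementary ingredients: a pointwise bound on the integrand, and a standard concentration inequality for the first coordinate of a uniform point on the sphere. Set $g(t) \coloneqq \frac{\epsilon e^{t/\beta}}{1-\epsilon+\epsilon e^{t/\beta}}$, so the quantity of interest is $\EE[g(\max(X_1,0))]$. First I would record the relevant properties of $g$ on $[0,\infty)$: writing $u = \epsilon e^{t/\beta}$ gives $g = u/(1-\epsilon+u)$, which is increasing in $u$ and hence nondecreasing in $t$; moreover $g(t) \le 1$ always, and since $\epsilon < 1/2$ forces $1-\epsilon > 1/2$, also $g(t) \le \frac{\epsilon e^{t/\beta}}{1-\epsilon} \le 2\epsilon e^{t/\beta}$. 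All of these transfer verbatim to $g(\max(X_1,0))$.

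Next I would split the expectation at the threshold $\tau = \beta$:
\[
\EE[g(\max(X_1,0))] = \EE\big[g(\max(X_1,0))\,\indic\{X_1 \le \beta\}\big] + \EE\big[g(\max(X_1,0))\,\indic\{X_1 > \beta\}\big].
\]
On the event $\{X_1 \le \beta\}$ we have $\max(X_1,0) \le \beta$, so by monotonicity $g(\max(X_1,0)) \le g(\beta) \le 2\epsilon e^{1} = 2e\epsilon$ (using the bound $g(t)\le 2\epsilon e^{t/\beta}$ at $t=\beta$); this handles the first term. On $\{X_1 > \beta\}$ I would simply use $g(\max(X_1,0)) \le 1$, bounding the second term by $\Pr[X_1 > \beta]$. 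By a standard concentration bound on the sphere, $\Pr[X_1 \ge s] \le e^{-ds^2/2}$ for $s \in [0,1]$ (and $\Pr[X_1 \ge s] = 0$ for $s > 1$), so in all cases $\Pr[X_1 > \beta] \le e^{-d\beta^2/2}$. Adding the two contributions gives $\EE[g(\max(X_1,0))] \le 2e\epsilon + e^{-d\beta^2/2} \lesssim \epsilon + e^{-\beta^2 d/2}$, as desired.

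The only nontrivial input is the tail bound for $X_1$, which is classical — it follows from the closed-form density of $X_1$ (proportional to $(1-t^2)^{(d-3)/2}$), or from representing $X_1$ in distribution as $Z_1/\lVert Z\rVert_2$ for a standard Gaussian $Z$ — and I do not expect it to be the main obstacle; the rest is bookkeeping. The argument is also robust to the precise constant in the tail bound: with any estimate of the form $\Pr[X_1 \ge s] \le C e^{-c d s^2}$, one simply takes the threshold $\tau = \beta/\sqrt{2c}$, which changes the first term by a harmless constant factor $e^{1/\sqrt{2c}}$ and, when $\tau \le 1$, still yields the exponent $-d\beta^2/2$ (and when $\tau > 1$ the tail probability vanishes outright). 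The essential point is that $\tau$ must be taken proportional to $\beta$ so that \emph{both} halves are small at once: attempting to integrate $g$ against the density directly blows up in the regime $d \ll \beta^{-2}$, which is exactly where the crude bound $g \le 1$ on the tail is needed.
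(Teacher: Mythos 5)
Your proposal is correct and follows essentially the same route as the paper's proof: bound the integrand by $O(\epsilon)$ on the event $\{X_1 \le \beta\}$ and by $1$ on its complement, then invoke the standard spherical-cap tail bound $\Pr[X_1 > \beta] \le e^{-\beta^2 d/2}$. The paper's version is just terser; your extra bookkeeping (the explicit bound $g(t) \le 2\epsilon e^{t/\beta}$ using $1-\epsilon > 1/2$) fills in exactly the step the paper leaves implicit.
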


\begin{proof}[\pfref{lemma:bounded-cap-bound}]
The quantity inside the expectation is always at most $1$. Moreover, if $X_1 \leq \beta$, then $\frac{\epsilon \cdot e^{\beta^{-1} \max(X_1,0)}}{1-\epsilon + \epsilon\cdot e^{\beta^{-1} \max(X_1,0)}} \lesssim \epsilon$. It follows that
\begin{align*} \EE\left[\frac{\epsilon \cdot e^{\beta^{-1} \max(X_1,0)}}{1-\epsilon + \epsilon\cdot e^{\beta^{-1} \max(X_1,0)}}\right] 
\lesssim \epsilon + \Pr[X_1 > \beta] 
\leq \epsilon + e^{-\beta^2 d/2}
\end{align*}
by a standard bound on the volume of a spherical cap \citep{tkocz2012upper}.
\end{proof}

\newpage
\section{\rejection Algorithm and Guarantees}
\label{sec:rejection}

\begin{algorithm}[tp]
    \caption{$\softmaxsample$}
    \label{alg:rejection_density}
    \begin{algorithmic}[1]
      \Statex[0]\multiline{ {\bfseries input:}
        Function $f:\cX\times\cY\to\bbR$, prompt $x$, base policy $\piref:\cX\to\Delta(\cY)$, parameter
        $\beta>0$, rejection threshold $M>0$, failure probability $\delta\in(0,1)$.}
      \State Let $N\ldef{}4M\log(4\delta^{-1})$.
      \Statex[0] \algcommentbig{Estimate normalization constant}
      \State Sample $y_1,\ldots,y_N\sim\piref(\cdot\mid{}x)$ \iid
      \State Set $\Zhat\ldef{}\frac{1}{N}\sum_{i=1}^{n}\exp\prn*{\beta^{-1}f(x,y_i)}$.\label{line:rejection_est_density}
      \Statex[0] \algcommentbig{Rejection sampling}
\For{iteration $i = 1,2,\dotsc,N$}
\label{line:rejection_for_density}
      \State Sample $y\sim \piref(\cdot\mid{}x)$ and $\xi\sim\Ber\prn*{\frac{\exp\prn*{\beta^{-1}f(x,y)}}{\Zhat M}}$.
      \State Set $\rho \gets \frac{\exp\prn{\beta^{-1}f(x,y)}}{\Zhat}$.  \hfill \algcommentlight{$\rho \approx \frac{\exp\prn{\beta^{-1} f(x,y)}}{\E_{y\sim \pi_\refe(\cdot\mid x)}\left[ \exp\prn{\beta^{-1} f(x,y)}\right]}$.}
      \State If $\xi=1$, \textbf{return} $(y, \rho)$.
        \EndFor
        \State Sample $y\sim\pi_\refe(\cdot\mid{}x)$ and set $\rho \gets \frac{\exp\prn{\beta^{-1}f(x,y)}}{\Zhat}$. \hfill %
        \State \textbf{return}
        $(y, \rho)$.\hfill\algcommentlight{Failure event;
          occurs with low probability.}
    \end{algorithmic}
    \end{algorithm}

  In this section, we give self-contained guarantees for the
  \rejection algorithm (\cref{alg:rejection}) used within \spanalg, as
  well as a slightly more general version of the algorithm,
  \softmaxsample{} (\cref{alg:rejection_density}), which is used
  within \mtalg (\cref{sec:algos_rl}). Both algorithms take as input a
  function $f(x,y)$ and use rejection
sampling to generate samples from the softmax policy
\begin{align}
  \pif(y\mid{}x)\propto\piref(y\mid{}x)\exp\prn*{\beta^{-1}f(x,y)} \label{eq:loads}
\end{align}
given sample access to $\piref$. \softmaxsample{} only differs from
\rejection in that, in addition to using rejection sampling
to generate samples from \eqref{eq:loads}, it also returns
an estimate for density ratio $\frac{\pif(y\mid{}x)}{\piref(y\mid{}x)}$
for the sampled response; since the \rejection algorithm already estimates the normalization constant for the target policy, which is the only non-trivial part of the density ratio to compute, this requires no change outside of explicitly
  returning the density ratio estimate.

\paragraph{Algorithm overview}
Let us briefly describe the
algorithm. \cref{line:rejection_for_density} of \rejection and \softmaxsample{} applies vanilla rejection
sampling to generate samples from $\pi_f$, sampling multiple responses
from $\piref$ and using the density ratio to decide whether to accept
each response. The only subtlety is that
the density ratio
\[\frac{\pi_f(y\mid{}x)}{\piref(y\mid{}x)}=\frac{\exp\prn*{\beta^{-1}f(x,y)}}{\En_{y'\sim\piref}\brk*{\exp(\beta^{-1}f(x,y'))}}
  ,
  \]
  depends on the normalization constant
  $Z(x)\ldef\En_{y'\sim\piref}\brk*{\exp(\beta^{-1}f(x,y'))}$, which is
  unknown. To address this, \cref{line:rejection_est_density} estimates the
  normalization constant via sampling from $\piref$ and computing the
  empirical mean. The estimated normalization constant is then used to
  set the rejection threshold.

  The main guarantee for \softmaxsample{} is as follows.

\begin{theorem}[Guarantee for \softmaxsample]
  \label{thm:rejection_density}
  Let $f:\cX\times\cY\to\bbR$, $x\in\cX$, and $\beta>0$ be given, and define
  \begin{align}
    \label{eq:cinf0}
  \pif(\cdot \mid{}x)\propto\piref(\cdot\mid{}x)\exp\prn*{\beta^{-1}f(x,\cdot)},\mathand    \Cinf\ldef{} 1 \vee \nrm*{\frac{\pif(\cdot\mid{}x)}{\piref(\cdot\mid{}x)}}_{\infty}.
  \end{align}
  Fix $\delta\in(0,1)$, and suppose that
  $M\geq{}4\Cinf$. There is an event $\cEaccept$ with
  $\bbP(\cEaccept)\geq{}1-\delta$ under which the output
  $(y,\rho)$ of $\softmaxsample_{\beta, M, \delta}(f;x,\piref)$ satisfies
\begin{gather}
\bbP(y=\cdot\mid{}\cEaccept) = \pif(\cdot\mid{}x) 
\shortintertext{and} 
\mathbb{I}\{M \geq 4 \Cinf^2\} \cdot \left|\log \rho - \log \frac{\pi_f(y \mid x)}{\piref(y\mid x)}\right| \leq C_\infty \cdot \sqrt{\frac{2}{M}}. \label{eq:whole}
\end{gather}
Furthermore, if $|f(\cdot,\cdot)|\leq \Rmax$, then $\rho \in [e^{-2\Rmax/\beta}, e^{2\Rmax/\beta}]$ with probability 1.
The total number of sampling queries $y\sim\piref(\cdot\mid{}x)$ used
by the algorithm is at most $\Tsample=8 M \log(4\delta^{-1})+1$.
\end{theorem}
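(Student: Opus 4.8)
The plan is to analyze the two phases of $\softmaxsample$ (\cref{alg:rejection_density}) in turn. Write $Z(x)\ldef\En_{y\sim\piref(\cdot\mid{}x)}\brk*{\exp(\beta^{-1}f(x,y))}$ for the true normalizing constant, so that $\frac{\pif(y\mid{}x)}{\piref(y\mid{}x)}=\frac{\exp(\beta^{-1}f(x,y))}{Z(x)}$ and, by the definition of $\Cinf$ in \cref{eq:cinf0}, $\exp(\beta^{-1}f(x,y))\leq\Cinf\,Z(x)$ for every $y$ in the support of $\piref(\cdot\mid{}x)$. First I would establish concentration of $\Zhat$: since $\Zhat$ is the empirical mean of $N=4M\log(4\delta^{-1})$ i.i.d.\ copies of a nonnegative variable with mean $Z(x)$ and range $[0,\Cinf Z(x)]$, a multiplicative Chernoff bound gives, under $M\geq4\Cinf$, that $\Zhat\in\brk*{\tfrac12 Z(x),\,2Z(x)}$ on an event $\cE_{Z}$ with $\bbP(\cE_{Z})\geq1-\delta/2$; applying Hoeffding's inequality to the same $N$ samples additionally gives, on $\cE_{Z}$, the relative-error bound $\abs*{\Zhat/Z(x)-1}\leq\Cinf/(2\sqrt{2M})$, which is the piece needed for \cref{eq:whole} (and which is nontrivial precisely when $M=\Omega(\Cinf^2)$).

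Next I would verify correctness of the rejection loop. On $\cE_{Z}$ and under $M\geq4\Cinf$, the acceptance parameter satisfies $\frac{\exp(\beta^{-1}f(x,y))}{\Zhat M}\leq\frac{\Cinf Z(x)}{(Z(x)/2)\cdot4\Cinf}=\tfrac12\leq1$, so each iteration is a genuine Bernoulli trial; the usual rejection-sampling identity then shows that, conditioned on the first acceptance occurring, the returned $y$ has law proportional to $\piref(y\mid{}x)\exp(\beta^{-1}f(x,y))$, i.e.\ \emph{exactly} $\pif(\cdot\mid{}x)$ — the data-dependent factor $\Zhat M$ cancels in the conditional law and only affects the acceptance rate. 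The per-iteration acceptance probability equals $Z(x)/(\Zhat M)\geq1/(4M)$ on $\cE_{Z}$, so the probability that no iteration accepts is at most $(1-\tfrac{1}{4M})^N\leq e^{-N/(4M)}\leq\delta/2$. Defining $\cEaccept\ldef\cE_{Z}\cap\crl*{\text{the loop accepts at least once}}$, a union bound yields $\bbP(\cEaccept)\geq1-\delta$; on $\cEaccept$ the algorithm never reaches the fallback line, so its output $y$ is distributed exactly as $\pif(\cdot\mid{}x)$, which is the first claim.

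For the density-ratio bound, note that whenever the output comes from the loop, $\rho=\exp(\beta^{-1}f(x,y))/\Zhat$, hence $\log\rho-\log\frac{\pif(y\mid{}x)}{\piref(y\mid{}x)}=\log Z(x)-\log\Zhat=-\log(\Zhat/Z(x))$. On $\cEaccept$ with $M\geq4\Cinf^2$, the relative-error bound from the first step gives $\abs*{\Zhat/Z(x)-1}\leq\Cinf/(2\sqrt{2M})\leq\tfrac12$, and combining with $\abs*{\log(1+u)}\leq2\abs*{u}$ for $\abs*{u}\leq\tfrac12$ yields $\abs*{\log\rho-\log\frac{\pif(y\mid{}x)}{\piref(y\mid{}x)}}\leq\Cinf/\sqrt{2M}\leq\Cinf\sqrt{2/M}$, establishing \cref{eq:whole}. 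For the deterministic claims: if $\abs*{f(\cdot,\cdot)}\leq\Rmax$ then every summand $\exp(\beta^{-1}f(x,y_i))$ lies in $\brk*{e^{-\Rmax/\beta},e^{\Rmax/\beta}}$, so the average $\Zhat$ does too, and since $\rho=\exp(\beta^{-1}f(x,y))/\Zhat$ for \emph{every} returned response (loop or fallback), $\rho\in\brk*{e^{-2\Rmax/\beta},e^{2\Rmax/\beta}}$ surely; and the algorithm uses $N$ draws to form $\Zhat$, at most $N$ draws in the loop, and at most one on the fallback line, for a total of $2N+1=8M\log(4\delta^{-1})+1$ samples from $\piref(\cdot\mid{}x)$.

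The main obstacle I anticipate is the concentration step, where the analysis of $\Zhat$ must simultaneously deliver (i) two-sided constant-factor control valid already under $M\geq4\Cinf$ — needed so that the rejection probabilities are legitimate and acceptance is likely — and (ii) the sharper $O(\Cinf/\sqrt M)$ relative error that only needs to hold under $M\geq4\Cinf^2$ — needed for \cref{eq:whole} — all within the allotted $\delta/2$ failure budget; one must also argue carefully that conditioning on $\cEaccept$ reproduces $\pif$ \emph{exactly} rather than approximately, which hinges on the lower bound on $\Zhat$ making the acceptance parameter a bona fide probability and on the independence of the estimation samples from the loop samples.
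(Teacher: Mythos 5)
Your proposal is correct and takes essentially the same route as the paper's proof: concentrate $\Zhat$ to within a constant factor of $Z(x)$ so that the rejection step is well-defined and accepts with probability $\Omega(1/M)$ per iteration, invoke the exact rejection-sampling identity for the conditional law, and use Hoeffding on the same estimator to get the $O(\Cinf/\sqrt{M})$ bound on $\log(\Zhat/Z)$ for \cref{eq:whole}. The only cosmetic differences are that you analyze the loop from first principles rather than via the paper's rejection-sampling lemma with effective threshold $M'=M\Zhat/Z$, and you should formally split the $\delta/2$ budget for $\cE_Z$ between the constant-factor and Hoeffding events (which only perturbs constants).
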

  
  We now state the guarantee for \rejection{}, which follows immediately from \cref{thm:rejection_density}.
\begin{theorem}[Guarantee for \rejection]
  \label{thm:rejection}
  Let $f:\cX\times\cY\to\bbR$, $x\in\cX$, and $\beta>0$ be given, and define
  \begin{align}
    \label{eq:cinf}
  \pif(y\mid{}x)\propto\piref(y\mid{}x)\exp\prn*{\beta^{-1}f(x,y)},\mathand    \Cinf\ldef{}\nrm*{\frac{\pif(\cdot\mid{}x)}{\piref(\cdot\mid{}x)}}_{\infty}.
  \end{align}
  Fix $\delta\in(0,1)$, and suppose that
  $M\geq{}4\Cinf$. There is an event $\cEaccept$ with
  $\bbP(\cEaccept)\geq{}1-\delta$ such that the response
  $y\sim{}\rejectionargs$ satisfies
  \begin{align}
    \bbP(y=\cdot\mid{}\cEaccept) = \pif(\cdot\mid{}x).
  \end{align}
The total number of sampling queries $y\sim\piref(\cdot\mid{}x)$ used
by the algorithm is at most $\Tsample = 8M\log(4\delta^{-1})+1$.
\end{theorem}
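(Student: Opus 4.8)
The statement is an immediate corollary of the guarantee for \softmaxsample{} (\cref{thm:rejection_density}): \rejection{} is exactly \softmaxsample{} with the returned density ratio $\rho$ discarded, and because $\pif(\cdot\mid x)$ and $\piref(\cdot\mid x)$ are both probability measures on $\cY$ one automatically has $\nrm*{\pif(\cdot\mid x)/\piref(\cdot\mid x)}_\infty\ge 1$, so the two definitions of $\Cinf$ coincide and the hypothesis $M\ge 4\Cinf$ lines up. Hence the sampling conclusion $\bbP(y=\cdot\mid\cEaccept)=\pif(\cdot\mid x)$, the event $\cEaccept$ with $\bbP(\cEaccept)\ge 1-\delta$, and the query bound $8M\log(4\delta^{-1})+1$ all carry over verbatim. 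So the real work is in \cref{thm:rejection_density}, and I sketch that proof.

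The crux is the concentration of the empirical normalizer $\Zhat=\frac1N\sum_{i=1}^N W_i$, where $W_i\ldef\exp(\beta^{-1}f(x,y_i))$ are the i.i.d.\ draws used in \cref{line:rejection_est_density} and $N=4M\log(4\delta^{-1})$, around the true normalizer $Z(x)\ldef\En_{y\sim\piref(\cdot\mid x)}[\exp(\beta^{-1}f(x,y))]$. Since $\pif(y\mid x)/\piref(y\mid x)=\exp(\beta^{-1}f(x,y))/Z(x)\le\Cinf$, each summand obeys $0\le W_i\le\Cinf Z(x)$ with $\En[W_i]=Z(x)$. A two-sided multiplicative Chernoff bound (which exploits $W_i\ge0$) then shows that $\cE_1\ldef\{Z(x)/4\le\Zhat\le 2Z(x)\}$ fails with probability at most $\delta/4$, using $N\approxgeq\Cinf\log(\delta^{-1})$, which holds since $M\ge 4\Cinf$. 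When additionally $M\ge 4\Cinf^2$, Hoeffding's inequality (summands in $[0,\Cinf Z(x)]$, sample size $N$) gives that $\cE_2\ldef\{\,|\Zhat/Z(x)-1|\le\Cinf/\sqrt{2M}\,\}$ fails with probability at most $\delta/4$; since $M\ge 4\Cinf^2$ forces $\Cinf/\sqrt{2M}\le 1/2$, on $\cE_2$ we get $|\log(\Zhat/Z(x))|\le 2|\Zhat/Z(x)-1|\le\Cinf\sqrt{2/M}$.

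The remaining steps are standard rejection sampling. On $\cE_1$ the per-$y$ acceptance probability $\exp(\beta^{-1}f(x,y))/(\Zhat M)\le\Cinf Z(x)/((Z(x)/4)M)\le 1$ since $M\ge 4\Cinf$, so each pass of the loop in \cref{line:rejection_for_density} is a valid accept/reject trial; conditioning on the first $N$ samples (which are independent of the loop's fresh samples) with $\Zhat$ in this range, the classical identity gives that, conditioned on acceptance, the returned $y$ has law $\piref(y\mid x)\exp(\beta^{-1}f(x,y))/Z(x)=\pif(y\mid x)$, independently of the particular value of $\Zhat$. Moreover the per-iteration acceptance probability is at least $\tfrac{1}{2M}$ on $\cE_1$ (using $\Zhat\le 2Z(x)$), so the loop fails to accept with probability at most $(1-\tfrac{1}{2M})^N\le e^{-N/(2M)}=(4\delta^{-1})^{-2}\le\delta/4$. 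Taking $\cEaccept\ldef\cE_1\cap\{\text{loop accepts}\}$ (intersected also with $\cE_2$ when $M\ge 4\Cinf^2$), a union bound yields $\bbP(\cEaccept)\ge 1-\delta$, and the conditional law of $y$ given $\cEaccept$ is $\pif(\cdot\mid x)$ as required. For the density-ratio estimate, observe that $\log\rho-\log\frac{\pif(y\mid x)}{\piref(y\mid x)}=\log(Z(x)/\Zhat)$ \emph{exactly} (the $\exp(\beta^{-1}f(x,y))$ factors cancel), so on $\cEaccept$ with $M\ge 4\Cinf^2$ this is at most $\Cinf\sqrt{2/M}$ in absolute value, which is \cref{eq:whole}; when $M<4\Cinf^2$ the indicator kills the left side. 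The ``furthermore'' claim holds deterministically: if $|f|\le\Rmax$ then $\exp(\beta^{-1}f(x,y))$ and the average $\Zhat$ both lie in $[e^{-\Rmax/\beta},e^{\Rmax/\beta}]$ (also at the fall-through line), so $\rho\in[e^{-2\Rmax/\beta},e^{2\Rmax/\beta}]$; and the number of sampling queries is deterministically $N$ (for $\Zhat$) plus at most $N$ (loop) plus one (fall-through), i.e.\ $2N+1=8M\log(4\delta^{-1})+1$.

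The main obstacle is getting the right $\Cinf$-dependence in the concentration: plain Hoeffding applied to the normalized summands $W_i/Z(x)\in[0,\Cinf]$ is too lossy for the lower-tail control needed to keep the acceptance probabilities $\le 1$ (with failure probability $\le\delta$) when $\Cinf$ is large, so one must use the multiplicative Chernoff form; and converting the additive deviation $|\Zhat/Z(x)-1|$ into the logarithmic deviation requires $\Cinf/\sqrt M\approxleq 1$, which is precisely what the stronger hypothesis $M\ge 4\Cinf^2$ buys in the density-ratio part. Everything else is bookkeeping with the failure-probability budget.
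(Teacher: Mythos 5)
Your proposal is correct and follows essentially the same route as the paper: the paper likewise proves \cref{thm:rejection} as an immediate corollary of \cref{thm:rejection_density}, whose proof concentrates $\Zhat$ two-sidedly in the multiplicative sense (the paper uses \cref{lem:mult_freedman} where you invoke multiplicative Chernoff, and cites a standard rejection-sampling lemma where you re-derive it), establishes $\tfrac14 Z\leq\Zhat\leq 2Z$ so the effective threshold lies in $[M/4,2M]$, and handles the log-ratio bound via Hoeffding exactly as you do. Your observation that $\nrm*{\pif/\piref}_{\infty}\geq 1$ reconciles the two definitions of $\Cinf$ and is a valid (if implicit in the paper) point.
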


\paragraph{Further guarantees}
We now state some additional results, both of which are fairly
straightforward consequences of \cref{thm:rejection_density,thm:rejection}.

\begin{lemma}
  \label{lem:rejection_kl}
  Let $\pihatf(\cdot\mid{}x)$ denote the distribution over
  $y\sim\rejectionargs$. Suppose that $\abs*{f(x,y)}\leq\Rmax$. Then
  under the conditions of \cref{thm:rejection}, it holds that
  \[
    \Dtv{\pihatf(x)}{\pif(x)}\leq\delta,\quad
    \Dhels{\pihatf(x)}{\pif(x)}\leq2\delta,\mathand
    \Dkl{\pihatf(x)}{\pif(x)}
    \leq{} 4\prn*{\frac{\Rmax}{\beta} +\log{}N}\delta.
  \]
In addition, 
  \[
\Dkl{\pihatf(x)}{\piref(x)}
-     \Dkl{\pif(x)}{\piref(x)}
\leq{} 6\prn*{\frac{\Rmax}{\beta} +\log{}N}\delta
\]
and \[
  \frac{\pihatf(y\mid{}x)}{\pif(y\mid{}x)}
  \leq{} \exp(\Rmax/\beta)\cdot{}N.
  \]
\end{lemma}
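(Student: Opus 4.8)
The plan is to exploit the fact that, by \cref{thm:rejection}, the output of \cref{alg:rejection} is an \emph{exact} mixture: on the event $\cEaccept$ (of probability at least $1-\delta$) the returned response is distributed as $\pif(\cdot\mid x)$, and on the complement it follows some other law $q$. Writing $p\ldef\bbP(\cEaccept^c)\le\delta$, the law of total probability gives the decomposition $\pihatf(\cdot\mid x)=(1-p)\,\pif(\cdot\mid x)+p\,q$. From this, the first two bounds are immediate: $\Dtv{\pihatf(x)}{\pif(x)}=p\,\Dtv{q}{\pif(x)}\le p\le\delta$, and the standard comparison $\Dhels{P}{Q}\le 2\Dtv{P}{Q}$ upgrades this to $\Dhels{\pihatf(x)}{\pif(x)}\le 2\delta$.

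The crux of the remaining bounds is a pointwise bound on the density ratio, which I would obtain by inspecting the return probabilities of \cref{alg:rejection} directly. For a fixed $y$, each of the (at most) $N$ iterations of the rejection loop returns $y$ with probability at most $\piref(y\mid x)\exp(\beta^{-1}f(x,y))/(\Zhat M)$, and the terminal fallback returns $y$ with probability at most $\piref(y\mid x)$; summing these contributions and dividing by $\pif(y\mid x)=\piref(y\mid x)\exp(\beta^{-1}f(x,y))/Z(x)$, where $Z(x)=\En_{y'\sim\piref(x)}[\exp(\beta^{-1}f(x,y'))]$, gives
\[
\frac{\pihatf(y\mid x)}{\pif(y\mid x)}\le\frac{N\,Z(x)}{\Zhat M}+\frac{Z(x)}{\exp(\beta^{-1}f(x,y))}.
\]
Using $|f|\le\Rmax$ (so $Z(x)\le e^{\Rmax/\beta}$, $\Zhat\ge e^{-\Rmax/\beta}$, $\exp(\beta^{-1}f(x,y))\ge e^{-\Rmax/\beta}$) together with $M\ge 2$ and $N/M\ge 1$ then yields the claimed bound $\pihatf(y\mid x)/\pif(y\mid x)\le\exp(\Rmax/\beta)\,N$ (up to the precise constant in the exponent, which can be sharpened by tracking $Z(x)/\Zhat$ on the concentration event inside $\cEaccept$). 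Since $q$ is likewise the law of a response returned by \cref{alg:rejection}, i.e.\ a rejection-filtered sample from $\piref$, the same bookkeeping gives $\|q/\pif(\cdot\mid x)\|_\infty\le\exp(O(\Rmax/\beta))\,N$.

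For the two KL statements I would use joint convexity of KL in its first argument: the mixture decomposition gives $\Dkl{\pihatf(x)}{\pif(x)}\le p\,\Dkl{q}{\pif(x)}$, and then $\Dkl{q}{\pif(x)}=\En_q[\log(q/\pif(\cdot\mid x))]\le\log\|q/\pif(\cdot\mid x)\|_\infty\le O(\Rmax/\beta)+\log N$, so $\Dkl{\pihatf(x)}{\pif(x)}\le 4(\Rmax/\beta+\log N)\delta$ for a suitable absolute constant (using $\Rmax\ge\beta$ to absorb it). For the difference $\Dkl{\pihatf(x)}{\piref(x)}-\Dkl{\pif(x)}{\piref(x)}$, I would write $\log\tfrac{\pihatf}{\piref}=\log\tfrac{\pihatf}{\pif}+\log\tfrac{\pif}{\piref}$ to get
\[
\Dkl{\pihatf(x)}{\piref(x)}-\Dkl{\pif(x)}{\piref(x)}=\Dkl{\pihatf(x)}{\pif(x)}+\prn*{\En_{\pihatf(x)}-\En_{\pif(x)}}\brk*{\log\tfrac{\pif(\cdot\mid x)}{\piref(\cdot\mid x)}},
\]
bound the first term as above, and note that $\log\tfrac{\pif(y\mid x)}{\piref(y\mid x)}=\beta^{-1}f(x,y)-\log Z(x)\in[-2\Rmax/\beta,2\Rmax/\beta]$, so the second term is at most $(2\Rmax/\beta)\cdot 2\Dtv{\pihatf(x)}{\pif(x)}\le 4(\Rmax/\beta)\delta$; adding the two gives a bound of the claimed form $6(\Rmax/\beta+\log N)\delta$. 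The final displayed inequality of the lemma is exactly the density-ratio bound from the previous paragraph.

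The step I expect to be the main obstacle is getting the \emph{logarithmic} (rather than polynomial) dependence on $N$ in the KL bounds. The naive route---bounding KL by the $\chi^2$-divergence and then inserting the density-ratio bound---only gives $\Dkl{\pihatf(x)}{\pif(x)}\lesssim e^{\Rmax/\beta}N\delta$, which is far too lossy. It is the exact mixture decomposition together with convexity of KL (so that only $\log$ of the density ratio appears) that recovers the sharp rate; the one place requiring genuine care is making the ``$q$ is a rejection-filtered $\piref$ sample'' estimate precise---branching on whether the loop succeeds or the fallback fires---so that $\log\|q/\pif(\cdot\mid x)\|_\infty$ carries no hidden $p^{-1}$ factor.
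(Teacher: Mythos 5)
Your proposal is correct, and the TV/Hellinger bounds, the density-ratio bound, and the final decomposition of $\Dkl{\pihatf(x)}{\piref(x)}-\Dkl{\pif(x)}{\piref(x)}$ all match the paper's argument (the paper also writes $\log\tfrac{\pihatf}{\piref}=\log\tfrac{\pihatf}{\pif}+\log\tfrac{\pif}{\piref}$ and controls the cross term via $\|\log(\pif/\piref)\|_\infty$ and the TV bound, and it obtains $\pihatf(y\mid x)\le N\piref(y\mid x)$ by exactly the union bound over the $N$ candidate draws that you describe). Where you genuinely diverge is the bound on $\Dkl{\pihatf(x)}{\pif(x)}$: the paper goes through Lemma~A.10 of \citet{foster2021statistical}, which bounds KL by $(\text{const}+\log\supnorm{\mathrm{d}\pihatf/\mathrm{d}\pif})\cdot\Dhels{\pihatf(x)}{\pif(x)}$ and then plugs in the Hellinger bound $2\delta$; you instead use the exact mixture decomposition $\pihatf=(1-p)\pif+pq$ together with joint convexity of KL, reducing everything to $p\cdot\log\supnorm{q/\pif}$. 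Your route is more elementary (no external lemma), but it shifts the burden onto showing that the \emph{conditional} law $q=\Law(y\mid\cEaccept^{c})$ itself satisfies $q\le (N+1)\,\piref(\cdot\mid x)$ with no hidden $p^{-1}$; you correctly flag this as the delicate step, and it does go through, because $\cEaccept^{c}=\cE^{c}\cup(\cE\cap\{\text{no accept}\})$, the event $\cE^{c}$ depends only on the normalization samples (independent of the candidate draws, so the union bound survives the conditioning), and on no-accept the output is a fresh $\piref$ draw. The only blemishes are constant-factor: your direct computation gives $\supnorm{\pihatf/\pif}\le e^{2\Rmax/\beta}N$ rather than $e^{\Rmax/\beta}N$ (the extra factor comes from lower-bounding $e^{\beta^{-1}f(x,y)}/Z$ when $f$ ranges over $[-\Rmax,\Rmax]$ rather than $[0,\Rmax]$; the paper's own one-line derivation has the same slack), and correspondingly your bound on the cross term is $4(\Rmax/\beta)\delta$ rather than $2(\Rmax/\beta)\delta$, so to land exactly on the stated constant $6$ you need the slightly sharper KL bound $\delta(2\Rmax/\beta+\log(N+1))$ that your convexity argument actually yields, rather than the quoted $4(\Rmax/\beta+\log N)\delta$.
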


\begin{lemma}
    \label{cor:rejection_density}
    Let $f:\cX\times\cY\to\bbR$, $x\in\cX$, and $\beta>0$ be given, and define
    \begin{align}
      \label{eq:cinf02}
    \pif(\cdot \mid{}x)\propto\piref(\cdot\mid{}x)\exp\prn*{\beta^{-1}f(x,\cdot)},\mathand    \Cinf\ldef{}\nrm*{\frac{\pif(\cdot\mid{}x)}{\piref(\cdot\mid{}x)}}_{\infty}.
\end{align}
    Fix $\delta\in(0,1)$, and suppose that
    $M\geq{}4\Cinf^2$. Consider a call to \arxiv{\[\softmaxsample_{\beta, M, \delta}(f;x,\piref).\]} Let $(y,\rho)$ denote its random output and let $\pihat_f(\cdot \mid x)$ denote the probability distribution of $y$. Then, we have 
\begin{align}
 \left|\E\left[\log \rho\right] - \E_{y'\sim \pihat_f(\cdot\mid x)}\left[\log \frac{\pihat_f(y' \mid x)}{\piref(y'\mid x)}\right] \right| \leq C_\infty \cdot \sqrt{\frac{2}{M}} + 4\prn*{\frac{2B}{\beta} +\log (4 M \log (4 \delta^{-1}))}\delta. \label{eq:whole2}
\end{align}
\end{lemma}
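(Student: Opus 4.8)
The plan is to obtain \cref{cor:rejection_density} by combining the accuracy guarantee for \softmaxsample{} (\cref{thm:rejection_density}, invoked with $M \ge 4\Cinf^2$ so that both \eqref{eq:whole} and the almost-sure bound on $\rho$ are in force) with the KL perturbation bounds of \cref{lem:rejection_kl}. Since $B$ appears on the right-hand side of \eqref{eq:whole2}, the lemma is implicitly applied to functions $f$ with $\abs{f(x,\cdot)} \le B$ (as holds in every use within the paper, where $f$ is of the form $\tri*{\theta,\phi(x,\cdot)}$ with $\nrm*{\theta}\le B$ and $\nrm*{\phi}\le 1$ under \cref{ass:norm}); this gives $Z(x) := \En_{y\sim\piref}[\exp(\beta^{-1}f(x,y))] \in [e^{-B/\beta},e^{B/\beta}]$, hence the two-sided bounds $\abs{\log\tfrac{\pif(y\mid x)}{\piref(y\mid x)}} \le 2B/\beta$ for all $y$ and $\abs{\log\rho} \le 2B/\beta$ with probability one.

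First I would fix the event $\cEaccept$ from \cref{thm:rejection_density}, so that $\bbP(\cEaccept)\ge 1-\delta$, the conditional law of $y$ given $\cEaccept$ equals $\pif(\cdot\mid x)$, and $\abs{\log\rho - \log\tfrac{\pif(y\mid x)}{\piref(y\mid x)}} \le \Cinf\sqrt{2/M}$ pointwise on $\cEaccept$. Writing $\Dkl{\pif(x)}{\piref(x)} = \En_{y\sim\pif(\cdot\mid x)}[\log\tfrac{\pif(y\mid x)}{\piref(y\mid x)}]$ as an intermediate quantity, I split the target error by the triangle inequality into (i) $\abs{\En[\log\rho] - \Dkl{\pif(x)}{\piref(x)}}$ and (ii) $\abs{\Dkl{\pif(x)}{\piref(x)} - \Dkl{\pihatf(x)}{\piref(x)}}$. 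For (i): on $\cEaccept$ we have $y\sim\pif$, so by the pointwise bound and Jensen, $\abs{\En[\log\rho\mid\cEaccept] - \Dkl{\pif(x)}{\piref(x)}} \le \Cinf\sqrt{2/M}$; and since $\abs{\En[\log\rho] - \En[\log\rho\mid\cEaccept]} = \bbP(\cEaccept^{c})\,\abs{\En[\log\rho\mid\cEaccept^{c}] - \En[\log\rho\mid\cEaccept]} \le \tfrac{4B}{\beta}\delta$ by the a.s.\ bound on $\abs{\log\rho}$, we get $\abs{\En[\log\rho] - \Dkl{\pif(x)}{\piref(x)}} \le \Cinf\sqrt{2/M} + \tfrac{4B}{\beta}\delta$.

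For (ii), I need a \emph{two-sided} comparison $\abs{\Dkl{\pif(x)}{\piref(x)} - \Dkl{\pihatf(x)}{\piref(x)}} \lesssim \big(\tfrac{B}{\beta} + \log N\big)\delta$ with $N = 4M\log(4\delta^{-1})$. The upper direction is precisely the second displayed inequality of \cref{lem:rejection_kl} (with $\Rmax$ replaced by $B$). For the lower direction I would use the mixture structure $\pihatf(\cdot\mid x) = \bbP(\cEaccept)\,\pif(\cdot\mid x) + \bbP(\cEaccept^{c})\,q(\cdot\mid x) \ge (1-\delta)\,\pif(\cdot\mid x)$ pointwise, which gives $\log\tfrac{\pihatf(y\mid x)}{\piref(y\mid x)} \ge \log(1-\delta) + \log\tfrac{\pif(y\mid x)}{\piref(y\mid x)}$; taking $\En_{y\sim\pihatf}$ and using $\log(1-\delta)\ge -2\delta$ together with $\abs{\En_{\pihatf}[\log\tfrac{\pif}{\piref}] - \Dkl{\pif(x)}{\piref(x)}} \le \tfrac{2B}{\beta}\Dtv{\pihatf(x)}{\pif(x)} \le \tfrac{2B}{\beta}\delta$ (the TV bound from \cref{lem:rejection_kl}) yields the reverse bound. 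Combining (i) and (ii) by the triangle inequality and collecting the additive terms (with the precise constants from \cref{lem:rejection_kl}) gives \eqref{eq:whole2}.

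The main obstacle is exactly item (ii): the reported ratio $\rho$ estimates $\tfrac{\pif}{\piref}$, whereas the right-hand side of \eqref{eq:whole2} involves $\En_{y'\sim\pihatf}[\log\tfrac{\pihatf}{\piref}]$, the KL divergence of the \emph{actual} output law $\pihatf$, and a careless bound on $\Dkl{\pihatf(x)}{\piref(x)} - \Dkl{\pif(x)}{\piref(x)}$ loses a factor that scales with $\Cinf$. The fix is to exploit that even on the failure event---whether the normalization estimate $\Zhat$ is poor or the rejection loop exhausts all $N$ samples---any response \emph{accepted} within the loop is still distributed $\propto \exp(\beta^{-1}f(x,\cdot))$, so $\pihatf$ is a $(1-\delta)$-mixture of $\pif$ with a residual $q$ whose density ratio to $\piref$ is bounded by $e^{2B/\beta}$; this is what makes the one-sided estimate from \cref{lem:rejection_kl} plus the mixture lower bound above suffice, and everything else is routine additive bookkeeping of the three error sources ($\Cinf\sqrt{2/M}$, the failure probability, and the KL perturbation).
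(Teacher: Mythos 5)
Your proof is essentially correct, but the second half takes a genuinely different route from the paper, and the difference is instructive. You decompose the target through the intermediate quantity $\Dkl{\pif(x)}{\piref(x)}$, which forces you to establish a \emph{two-sided} comparison between $\Dkl{\pihatf(x)}{\piref(x)}$ and $\Dkl{\pif(x)}{\piref(x)}$ — hence the extra mixture argument $\pihatf \geq (1-\delta)\pif$ for the lower direction. The paper avoids this entirely: since the actual output $y$ is distributed as $\pihatf$, it writes the exact identity
\begin{align*}
\E_{y\sim\pihatf(\cdot\mid x)}\left[\log\frac{\pif(y\mid x)}{\piref(y\mid x)}\right]
= \E_{y\sim\pihatf(\cdot\mid x)}\left[\log\frac{\pihatf(y\mid x)}{\piref(y\mid x)}\right] - \Dkl{\pihatf(x)}{\pif(x)},
\end{align*}
so the only correction term is $\Dkl{\pihatf(x)}{\pif(x)}$, which is bounded by $4(\Rmax/\beta+\log N)\delta$ via \cref{lem:rejection_kl}; no two-sided KL comparison and no mixture argument is needed. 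Your first step is the same as the paper's (pointwise bound on $\cEaccept$ from \cref{thm:rejection_density} plus the a.s.\ range $[e^{-2B/\beta},e^{2B/\beta}]$ on the complement, contributing $\Cinf\sqrt{2/M}+\tfrac{4B}{\beta}\delta$), and you correctly flag the implicit assumption $\abs{f(x,\cdot)}\leq B$, which the paper's proof also relies on. The one caveat is constants: your route accumulates the $6(\Rmax/\beta+\log N)\delta$ from the second display of \cref{lem:rejection_kl} plus the $\tfrac{4B}{\beta}\delta$ from step (i) plus the $2\delta+\tfrac{4B}{\beta}\delta$ from the mixture lower bound, which overshoots the stated $4\bigl(\tfrac{2B}{\beta}+\log N\bigr)\delta$; the paper's identity-based route lands exactly on the stated constant. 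This is cosmetic given the paper's general looseness with constants, but as written your argument proves the lemma only with a slightly larger constant in front of $\delta$.
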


  Finally, we have the following change-of-measure guarantee.
  \begin{lemma}
    \label{lem:rejection_tv_average_new}
    For any function $g(x,y)\in\brk{0,1}$, let
    $\pihat(x)\ldef{}\softmaxsample_{\beta, M, \delta}(f;x,\piref)$
    denote the distribution over responses induced by
    \cref{alg:rejection_density}. Then for any $\rho \in \Delta(\cX)$,
    \begin{align*}
      \abs*{\En_{x\sim\rho,y\sim\pi_f(x)}\brk*{g(x,y)}-
      \En_{x\sim\rho,y\sim\pihat(x)}\brk*{g(x,y)}}%
      &\leq{} \delta +  \bbP_{x\sim\rho}\brk*{M < 4 \Ccond(\pi_f\mid x)},
    \end{align*}
    where $\Ccond(\pif\mid{}x)\ldef{}\sup_{y\in\cY}\frac{\pif(y\mid{}x)}{\piref(y\mid{}x)}$.
  \end{lemma}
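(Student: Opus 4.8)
The plan is to reduce the statement to the pointwise guarantee for \softmaxsample{} in \cref{thm:rejection_density}, applied separately for each prompt $x$, and then to split the outer expectation over $x\sim\rho$ into the prompts where the rejection threshold $M$ is large enough (so that \cref{thm:rejection_density} yields a $\delta$-accurate coupling with $\pif(\cdot\mid{}x)$) and the remaining prompts, which I would bound crudely using only $g\in[0,1]$.

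Concretely, I would first set $\cX_{\mathrm{good}}\ldef\crl{x\in\cX:M\geq 4\Ccond(\pif\mid{}x)}$. Since $\pif(\cdot\mid{}x)$ and $\piref(\cdot\mid{}x)$ are both probability distributions on $\cY$, the density ratio has supremum at least $1$, so $\Ccond(\pif\mid{}x)$ equals the quantity $\Cinf$ appearing in \cref{thm:rejection_density}; hence for every $x\in\cX_{\mathrm{good}}$ the hypothesis $M\geq 4\Cinf$ of that theorem is met. Fixing such an $x$, let $\cEaccept=\cEaccept(x)$ be the associated event, which satisfies $\bbP(\cEaccept)\geq 1-\delta$ and, conditioned on which, the response $y$ returned by $\softmaxsample_{\beta,M,\delta}(f;x,\piref)$ is distributed exactly as $\pif(\cdot\mid{}x)$. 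Then, for any $g(x,\cdot)\in[0,1]$,
\[
\En_{y\sim\pihat(x)}\brk*{g(x,y)}-\En_{y\sim\pif(x)}\brk*{g(x,y)}
= \bbP(\cEaccept^{c})\prn*{\En\brk*{g(x,y)\mid{}\cEaccept^{c}}-\En_{y\sim\pif(x)}\brk*{g(x,y)}},
\]
and since the parenthesized term lies in $[-1,1]$ and $\bbP(\cEaccept^{c})\leq\delta$, this difference has absolute value at most $\delta$; equivalently $\Dtv{\pihat(x)}{\pif(x)}\leq\delta$ for all $x\in\cX_{\mathrm{good}}$. For $x\notin\cX_{\mathrm{good}}$ the same difference is trivially at most $1$ in absolute value, using only $g\in[0,1]$.

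Finally, I would move the absolute value inside the outer expectation, apply the triangle inequality, and combine the two pointwise bounds:
\begin{align*}
\abs*{\En_{x\sim\rho,y\sim\pif(x)}\brk*{g(x,y)}-\En_{x\sim\rho,y\sim\pihat(x)}\brk*{g(x,y)}}
&\leq \En_{x\sim\rho}\abs*{\En_{y\sim\pihat(x)}\brk*{g(x,y)}-\En_{y\sim\pif(x)}\brk*{g(x,y)}}\\
&\leq \delta\cdot\bbP_{x\sim\rho}\brk*{x\in\cX_{\mathrm{good}}}+\bbP_{x\sim\rho}\brk*{x\notin\cX_{\mathrm{good}}}\\
&\leq \delta+\bbP_{x\sim\rho}\brk*{M<4\Ccond(\pif\mid{}x)},
\end{align*}
which is exactly the claimed inequality. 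This argument is routine once \cref{thm:rejection_density} is available; the only points requiring care are the identification $\Ccond(\pif\mid{}x)=\Cinf$, which makes the theorem's hypothesis coincide with membership in $\cX_{\mathrm{good}}$, and the short conditioning computation that turns the high-probability equality in distribution into a pointwise total-variation bound. I do not anticipate a genuine obstacle.
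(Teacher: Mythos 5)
Your proposal is correct and follows essentially the same route as the paper's proof: split the outer expectation over $x\sim\rho$ according to whether $M\geq 4\Ccond(\pif\mid x)$, invoke the \softmaxsample{} guarantee to get $\Dtv{\pihat(\cdot\mid x)}{\pif(\cdot\mid x)}\leq\delta$ on the good set, and bound the bad set trivially using $g\in[0,1]$. The only cosmetic difference is that you derive the total-variation bound directly from the conditioning statement in \cref{thm:rejection_density}, whereas the paper cites it as an already-established consequence of \cref{thm:rejection}; your observation that $\Ccond(\pif\mid x)=\Cinf$ (since the density ratio's supremum is at least $1$) is also correct.
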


\label{sec:rejection_density}

\subsection{Proofs}

\begin{proof}[\pfref{thm:rejection_density}]
    Our starting point is the
    following standard guarantee for rejection sampling.\loose
    \begin{lemma}[Rejection sampling \citep{block2023sample}]
      \label{lem:rejection}
      Let $\mu\in\Delta(\cY)$ be a proposal distribution, and let $\nu$
      denote a target distribution that we wish to sample from. Suppose that
      $\nrm*{\frac{\mathrm{d}\nu}{\mathrm{d\mu}}}_{\infty}\leq{}M$. Consider the
    algorithm which, for $i=1,2,\ldots,N$, samples $X_i\sim\mu$, then samples
    $\xi_i\in\crl{0,1}$ such that
    $\bbP(\xi_i=1\mid{}X_i)=\frac{1}{M}\frac{\mathrm{d}\nu}{\mathrm{d\mu}}$,
    and returns $X_i$ if $\xi_i=1$; we return $\perp$ if $\xi_i=0$ for all
    $i$. If $N\geq{}M\log(\delta^{-1}))$, then
    $\xi_i=1$ for some $i$ with probability at least $1-\delta$, and we
    have
    \[
    \bbP(X_i\in{}A\mid{}\xi_i=1)=\nu(A).
    \]
    \end{lemma}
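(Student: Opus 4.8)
The plan is to prove both assertions — the $1-\delta$ lower bound on the acceptance probability and the exactness of the conditional law — by a direct computation, exploiting that the pairs $(X_i,\xi_i)$ are i.i.d.\ across $i$ and that $\xi_i$, conditionally on $X_i$, is Bernoulli with mean $\tfrac{1}{M}\tfrac{\mathrm{d}\nu}{\mathrm{d}\mu}(X_i)$. (Alternatively one can simply cite \citet{block2023sample}, but the argument is short enough to include.)

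First I would compute the single-round acceptance probability. By the tower rule, $\bbP(\xi_i=1)=\En_{X_i\sim\mu}\brk*{\bbP(\xi_i=1\mid X_i)}=\tfrac{1}{M}\En_{X_i\sim\mu}\brk*{\tfrac{\mathrm{d}\nu}{\mathrm{d}\mu}(X_i)}=\tfrac{1}{M}\int \tfrac{\mathrm{d}\nu}{\mathrm{d}\mu}\,\mathrm{d}\mu=\tfrac{1}{M}$, where the penultimate step uses that $\nu$ is a probability measure; the hypothesis $\nrm*{\tfrac{\mathrm{d}\nu}{\mathrm{d}\mu}}_\infty\leq M$ is exactly what makes $\tfrac{1}{M}\tfrac{\mathrm{d}\nu}{\mathrm{d}\mu}(X_i)\in[0,1]$ almost surely, so that $\xi_i$ is well-defined. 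Since the rounds are independent, the probability that all $N$ rounds reject is $(1-1/M)^N\leq e^{-N/M}\leq \delta$ whenever $N\geq M\log(\delta^{-1})$, which gives the first claim.

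For the conditional law, let $\tau=\min\crl{i:\xi_i=1}$ (with $\tau=\infty$ on the reject-all event), so the output on $\crl{\tau\leq N}$ is $X_\tau$. Fixing $i\leq N$ and a measurable $A\subseteq\cY$, I would compute $\bbP(X_i\in A,\ \xi_i=1)=\En_{X_i\sim\mu}\brk*{\indic\crl{X_i\in A}\cdot\tfrac{1}{M}\tfrac{\mathrm{d}\nu}{\mathrm{d}\mu}(X_i)}=\tfrac{1}{M}\int_A \tfrac{\mathrm{d}\nu}{\mathrm{d}\mu}\,\mathrm{d}\mu=\tfrac{1}{M}\nu(A)$, so dividing by $\bbP(\xi_i=1)=1/M$ yields $\bbP(X_i\in A\mid \xi_i=1)=\nu(A)$. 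Since $X_i$ is independent of $(\xi_1,\dots,\xi_{i-1})$ — those depend only on $(X_1,\dots,X_{i-1})$ — we get $\bbP(X_i\in A\mid \tau=i)=\bbP(X_i\in A\mid \xi_1=\cdots=\xi_{i-1}=0,\ \xi_i=1)=\bbP(X_i\in A\mid \xi_i=1)=\nu(A)$ for every $i$; averaging over $\crl{\tau=i}$, $i\leq N$, shows the returned sample, conditioned on acceptance, has law $\nu$, which is the stated identity. The whole argument is routine; the only point requiring a bit of care is the measure-theoretic bookkeeping (that the uniform density-ratio bound, not mere integrability, is needed to define $\xi_i$, and that conditioning on ``first acceptance'' does not bias $X_\tau$ because $X_i\perp(\xi_1,\dots,\xi_{i-1})$), and there is no genuine obstacle.
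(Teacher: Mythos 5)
Your proof is correct. Note, however, that the paper does not prove this lemma at all: it is imported verbatim from \citet{block2023sample} as a standard rejection-sampling guarantee, so there is no in-paper argument to compare against. Your two computations are exactly the standard ones that justify the citation: the tower-rule calculation $\bbP(\xi_i=1)=\tfrac1M$ together with independence across rounds gives the $(1-1/M)^N\le e^{-N/M}\le\delta$ failure bound (using implicitly that $M\ge\nrm*{\mathrm{d}\nu/\mathrm{d}\mu}_\infty\ge 1$, so the Bernoulli parameter is valid and $1-1/M\ge 0$), and the identity $\bbP(X_i\in A,\xi_i=1)=\nu(A)/M$ gives the exact conditional law. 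Your extra step about the first-acceptance time $\tau$ — that $X_i\perp(\xi_1,\dots,\xi_{i-1})$ implies $\bbP(X_i\in A\mid\tau=i)=\nu(A)$, hence the algorithm's output conditioned on acceptance has law exactly $\nu$ — is slightly stronger than the literal per-round statement of the lemma and is in fact the form in which the lemma is used downstream (in the proof of the \softmaxsample{} guarantee), so it is a welcome addition rather than a deviation.
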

In what follows, we omit dependence on $x$. Let
$Z\ldef{}\En_{y\sim\piref}\brk*{\exp(\beta^{-1}f(y))}$ denote the
``true'' normalization constant for $\pif$, and observe that we have
\begin{align}
  \label{eq:cinf_ratio0}
  \Cinf = \frac{\max_{y\in\cY}\exp(\beta^{-1}f(y))}{Z}.
\end{align}
We begin by giving a guarantee for the estimated normalization constant
$\Zhat$. We observe that by \cref{lem:mult_freedman}, there is an event $\cE$ of probability
at least $1-\delta/2$, under which
\begin{align*}
\Zhat
\leq{}
\frac{3}{2}Z +
      \frac{4 \max_{y\in\cY}\exp(\beta^{-1}f(y))\log(4\delta^{-1})}{N},
\shortintertext{and}
      Z\leq{} 2\Zhat + \frac{8 \max_{y\in\cY}\exp(\beta^{-1}f(y))\log(4\delta^{-1})}{N}.
\end{align*}
  Using \cref{eq:cinf_ratio0}, we can equivalently write this as
    \begin{align*}
    \Zhat\leq{}
                        \frac{3}{2}Z +
      \frac{4 \Cinf\log(4\delta^{-1})}{N}\cdot{}Z,
\mathand
      Z\leq{} 2\Zhat + \frac{8 \Cinf\log(4\delta^{-1})}{N}\cdot{}Z.
    \end{align*}
    It follows that as long as $N\geq{}16\Cinf\log(4\delta^{-1})$ (or
    $N\geq{}4M\log(4\delta^{-1})$ if $M\geq{}4\Cinf$), we
    have that
    \begin{align}
      \label{eq:zhat0}
      \frac{1}{4}Z \leq \Zhat \leq 2Z.
    \end{align}
    Let us condition on $\cE$ until otherwise stated. To proceed, we
    observe that the for loop in \cref{line:rejection_for_density} can be
    interpreted as applying the rejection sampling algorithm in
    \cref{lem:rejection} with $\mu=\piref(\cdot)$, $\nu=\pif(\cdot)$,
    and threshold
    \begin{align}
      M' \ldef{} M\cdot\frac{\Zhat}{Z}.
    \end{align}  
Hence, as long as $M'\geq{}
\nrm*{\frac{\pif(\cdot)}{\piref(\cdot)}}_{\infty}=\Cinf$ and
$N\geq{}M'\log(2\delta^{-1})$, with probability at least $1-\delta/2$,
there will be some $i$ such that $\xi_i=1$ and
$\bbP(y=\cdot\mid{}\xi_i=1)=\pif(\cdot)$. Note that under \cref{eq:zhat0},
we have
\[
M' = M\cdot\frac{\Zhat}{Z} \in\brk*{\frac{M}{4}, 2M}.
\]
so setting $M\geq{}4\Cinf$ and $N\geq{}2M\log(2\delta^{-1})$ suffices to prove that 
\begin{align}
\P(y = \cdot \mid \cE_\texttt{accept}) = \pi_f(\cdot \mid x),
\end{align}
under $\cE$. We now prove the second claim on the approximation of the density ratio.

\paragraphi{Estimating the density ratio} 
We no longer condition on $\cE$. First, note that 
\begin{align}
\frac{\pi_f(\cdot)}{\piref(\cdot)} = \frac{e^{\beta^{-1}f(\cdot)}}{Z}.
\end{align}
On the other hand, the output $\rho$ of \softmaxsample{} satisfies $\rho =\frac{e^{\beta^{-1}f(y)}}{\Zhat}$. Thus, to get our desired bound on $ 
\left|\log \rho - \log \frac{\pi_f(y)}{\piref(y)} \right|
$, it suffices to show that 
\begin{align}
   \left| \log \frac{\Zhat}{Z}\right| \leq 4 C_\infty \cdot \sqrt{\frac{2\log (4\delta^{-1})}{N}}.
\end{align}
 By \cref{lem:hoeffding} (Hoeffding inequality), there is an event $\cE'$ of probability
at least $1-\delta/2$, under which
\begin{align}
\left| \Zhat-  Z\right|
&\leq{} \max_{y\in\cY}\exp(\beta^{-1}f(y))\cdot\sqrt{\frac{8\log(4\delta^{-1})}{N}},\nn \\
& =  C_\infty Z\cdot\sqrt{\frac{8\log(4\delta^{-1})}{N}}. \label{eq:toarrange}
\end{align}
We now condition on $\cE'$.
Rearraning \eqref{eq:toarrange} and dividing by $Z$, we get that
\begin{align}
  \frac{\Zhat}{Z}  \leq 1 + C_\infty\cdot \sqrt{\frac{8 \log (4 \delta^{-1})}{N}},
  \mathand
  \frac{\Zhat}{Z}  \geq 1 - C_\infty\cdot  \sqrt{\frac{8 \log (4 \delta^{-1})}{N}}.
\end{align}
Therefore, using that $N \geq 16 C_\infty^2 \log (4 \delta^{-1})$ together with the facts that $\log (1+x)\leq x$ and $\log(1-x)\geq 1 - 2x$, for $x\in [0,1/2]$, we get 
\begin{align}
 - 2C_\infty \cdot \sqrt{\frac{8 \log (4 \delta^{-1})}{N}} \leq  \log \frac{\Zhat}{Z} \leq C_\infty \cdot \sqrt{\frac{8 \log (4 \delta^{-1})}{N}}.
\end{align}
This shows the desired bound on the log ratio $\log \frac{\Zhat}{Z}$ after plugging-in the choice $N = 4 M \log (4 \delta^{-1})$. Now, by the union bound, the probability of the event $\cE\cap \cE'$ is at least $1-\delta$. Thus, the event $\cEaccept= \cE \cap \cE'$ satisfies the desired properties.

Now, we no longer condition on $\cE'$. Note that $\rho$ is of the form
\begin{align}
  \frac{e^{\beta^{-1} f(y)}}{\frac{1}{N} \sum_{i=1}^N e^{\beta^{-1} f(y_i)}}.
\end{align}
Thus, when $|f(\cdot,\cdot)|\leq \Rmax$, we immediately have that
$\rho \in [e^{-2\Rmax/\beta},e^{2\Rmax/\beta}]$ as desired.
\end{proof}

\begin{proof}[\pfref{lem:rejection_kl}]
It is an immediate consequence of \cref{thm:rejection} that
$\Dtv{\pihatf(x)}{\pif(x)}\leq\delta$ and $\Dhels{\pihatf(x)}{\pif(x)}\leq2\delta$.
  We begin by writing
  \begin{align*}
    &\Dkl{\pihatf(x)}{\piref(x)}
    -     \Dkl{\pif(x)}{\piref(x)}\\
    &=     \En_{y\sim\pihatf(x)}\brk*{\log(\pif(y\mid{}x)/\piref(y\mid{}x))}
    - \En_{y\sim\pif(x)}\brk*{\log(\pif(y\mid{}x)/\piref(y\mid{}x))}\\
&~~~~    +
      \En_{y\sim\pihatf(x)}\brk*{\log(\pihatf(y\mid{}x)/\pif(y\mid{}x))}\\
    &\leq \frac{2\Rmax\delta}{\beta}
    + \En_{y\sim\pihatf(x)}\brk*{\log(\pihatf(y\mid{}x)/\pif(y\mid{}x))},
  \end{align*}
  where the inequality uses that
  $\abs*{\log(\pif(y\mid{}x)/\piref(y\mid{}x))}\leq\Rmax/\beta$. To
  proceed, we bound
  \begin{align*}
    \Dkl{\pihatf(x)}{\pif(x)} &=
                                \En_{y\sim\pihatf(x)}\brk*{\log(\pihatf(y\mid{}x)/\pif(y\mid{}x))}
  \end{align*}
  We first note that %
  \begin{align*}
    \frac{\pihatf(y\mid{}x)}{\pif(y\mid{}x)}
    \leq\exp(\Rmax/\beta)
    \cdot\frac{\pihatf(y\mid{}x)}{\piref(y\mid{}x)}
    \leq{} \exp(\Rmax/\beta)\cdot{}N.
  \end{align*}
  The latter inequality is a standard property of rejection sampling:
  If we let $\cY_N$ denote the set of responses the algorithm
  considers accepting, then we have $\pihatf(y\mid{}x)
  = \En_{\cY_N}\brk*{\pihatf(y\mid{}x,\cY_N)}
  \leq{} \En_{\cY_N}\brk*{\indic\crl*{y\in\cY_N}}\leq{}
  N\cdot\piref(y\mid{}x)$. From here, it follows from Lemma A.10 of
  \citet{foster2021statistical} that
  $    \Dkl{\pihatf(x)}{\pif(x)}\leq{} 2(\frac{\Rmax}{\beta} +\log{}N)
  \Dhels{\pihatf(x)}{\pif(x)}
  \leq{} 4(\frac{\Rmax}{\beta} +\log{}N)\delta$.

\end{proof}

\begin{proof}[\pfref{cor:rejection_density}]
Let $(y,\rho)$ be the random output of \softmaxsample.
    First, by Jensen's inequality, we have that
    \begin{align}
\left|\E\left[\log \rho\right] - \E\left[\log \frac{\pi_f(y \mid x)}{\piref(y\mid x)}\right] \right| 
    & \leq \E\left[\left|\log \rho-\log \frac{\pi_f(y\mid x)}{\piref(y\mid x)}\right|\right], \nn \\
\intertext{and so, by letting $\cEaccept$ be the event $\cEaccept$ in \cref{thm:rejection_density}, we have}
& = \E\left[ \mathbb{I}\{\cEaccept\} \cdot \left|\log \rho-\log \frac{\pi_f(y\mid x)}{\piref(y\mid x)}\right|\right]\nn \\
& \quad + \E\left[(1-\mathbb{I}\{\cEaccept\})\cdot \left|\log \rho-\log \frac{\pi_f(y\mid x)}{\piref(y\mid x)}\right|\right], \nn \\
& \leq C_\infty \cdot \sqrt{\frac{2}{M}} + \frac{4 B \delta}{\beta}, \label{eq:needle}
\end{align}
where the last inequality follows from \cref{thm:rejection_density} and that $\rho$ and $\frac{\pi_f(y\mid x)}{\piref(y\mid x)}$ are in $[e^{2 B/\beta}, e^{-2 B/\beta}]$. 

On the other hand, we have that 
\begin{align}
    \E\left[\log \frac{\pi_f(y \mid x)}{\piref(y\mid x)}\right] & = \E_{y \sim \pihat_f(\cdot\mid x)}\left[\log \frac{\pi_f(y \mid x)}{\piref(y\mid x)}\right]\\
  &= \E_{y \sim \pihat_f(\cdot\mid x)}\left[\log \frac{\pihat_f(y \mid x)}{\piref(y\mid x)}\right] - \kl{\pihat_f(\cdot \mid x)}{\pi_f(\cdot \mid x)}.
\end{align}
Now, by \cref{lem:rejection_kl}, we have that $\kl{\pihat_f(\cdot \mid x)}{\pi_f(\cdot \mid x)} \leq 4\prn*{\frac{\Rmax}{\beta} +\log{}N}\delta$. Combining this with \eqref{eq:needle} and the triangle inequality, we get the desired result.

\end{proof}

    \begin{proof}[\pfref{lem:rejection_tv_average_new}]
      By Jensen's inequality and the triangle inequality, we have that
      \begin{align}
     &   \abs*{\En_{x\sim\rho,y\sim\pi_f}\brk*{g(x,y)}-
        \En_{x\sim\rho,y\sim\pihat}\brk*{g(x,y)}} \nn \\
       & \leq  \En_{x\sim\rho}\brk*{\abs*{\En_{y\sim\pi_f(\cdot\mid{}x)}\brk*{g(x,y)}-
        \En_{y\sim\pihat(\cdot\mid{}x)}\brk*{g(x,y)}}\cdot \indic\crl*{M \geq  4 \Ccond(\pi_f\mid x)}} \nn \\
        & \quad + \P_{x\sim \rho}\left[M < 4 \Ccond(\pi_f\mid x)\right],\nn \\
        & \leq \delta + \P_{x\sim \rho}\left[M < 4 \Ccond(\pi_f\mid x)\right],
      \end{align}
      where the last inequality follows from the fact that 
  \begin{align}
    \Dtv{\pihat(\cdot\mid{}x)}{\pi_f(\cdot\mid{}x)}\leq\delta,\nn 
  \end{align}
  for all $x$ such that $M \geq 4 \Ccond(\pi_f\mid x)$,
  thanks to \cref{thm:rejection}. This completes the proof.
    \end{proof}

\newpage
\arxiv{\section{Proofs from \creftitle{sec:algorithms}}}
\label{app:algorithms}

\arxiv{This section is dedicated to proving the main guarantee for
  \spanalg, \cref{thm:spanner}. \cref{sec:spanner_lemmas} presents standard
  technical lemmas, and \cref{sec:spanner_regret} presents our central
  regret decomposition for truncated softmax policies. Finally, in
  \cref{sec:spanner_proof} we combine these results to prove \cref{thm:spanner}.
  }

  \subsection{Technical Lemmas}
  \label{sec:spanner_lemmas}

\subsubsection{Basic Results}
  
\begin{lemma}[Differences in rewards are linear]
  \label{lem:reward_difference}
  If \cref{ass:realizable} holds, then for all $x\in\cX$ and $y,y'\in\cY$, 
\begin{align}
  \rstar(x,y) - \rstar(x,y') = \tri*{\thetastar,\phi(x,y) - \phi(x,y')}.
\end{align}
\end{lemma}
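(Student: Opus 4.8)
The plan is to deduce the identity from \cref{ass:realizable} by way of the closed-form characterization of the optimal KL-regularized policy: realizability forces $\rstar(x,\cdot)$ to agree with $\tri*{\thetastar,\phi(x,\cdot)}$ up to an additive constant depending only on $x$, and that constant cancels when we pass to differences.

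Concretely, I would first recall that since $J_\beta(\pi)=\En_{x\sim\rho}\brk*{\En_{y\sim\pi(\cdot\mid{}x)}\brk{\rstar(x,y)}-\beta\Dkl{\pi(\cdot\mid x)}{\piref(\cdot\mid x)}}$ decomposes over prompts, the maximizer $\pistarb=\argmax_{\pi:\cX\to\Delta(\cY)}J_\beta(\pi)$ is, for each $x$, the Gibbs distribution $\pistarb(y\mid x)\propto\piref(y\mid x)\exp\prn*{\beta^{-1}\rstar(x,y)}$ --- this is the standard variational (Donsker--Varadhan) formula for KL-regularized reward maximization, and the normalizer $\En_{y\sim\piref(\cdot\mid x)}\brk*{\exp(\beta^{-1}\rstar(x,y))}$ is finite because $\rstar(x,\cdot)\in\brk{0,\Rmax}$. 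By \cref{ass:realizable} we have $\pistarb=\pi_{\thetastar}$ for some $\thetastar\in\Theta$, and by \cref{def:softmax}, $\pi_{\thetastar}(y\mid x)\propto\piref(y\mid x)\exp\prn*{\beta^{-1}\tri*{\thetastar,\phi(x,y)}}$.

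Next I would equate these two expressions for $\pistarb(\cdot\mid x)=\pi_{\thetastar}(\cdot\mid x)$. Fixing $x$ and any $y$ with $\piref(y\mid x)>0$, I can cancel the common factor $\piref(y\mid x)$ and take logarithms to get $\beta^{-1}\rstar(x,y)=\beta^{-1}\tri*{\thetastar,\phi(x,y)}+c(x)$, where $c(x)$ collects the two (finite) log-normalization constants and does not depend on $y$. Rearranging, $\rstar(x,y)-\tri*{\thetastar,\phi(x,y)}=\beta\,c(x)$ for every $y\in\supp\piref(\cdot\mid x)$; subtracting this identity at $y$ and at $y'$ eliminates $c(x)$ and yields $\rstar(x,y)-\rstar(x,y')=\tri*{\thetastar,\phi(x,y)-\phi(x,y')}$.

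There is no real obstacle here; the only technical point worth flagging is the quantification over the response space. The argument as written pins down the identity on $\supp\piref(\cdot\mid x)$, and it extends to all $y,y'\in\cY$ under the standing convention that $\piref$ has full support (as is the case for the autoregressive models we have in mind); absent that, the identity --- and hence \cref{eq:reward_difference} --- is to be read on the support of $\piref(\cdot\mid x)$, which is all that is needed for the downstream usage (e.g.\ the reward-regression objectives, which only ever evaluate $\phi$ at sampled responses).
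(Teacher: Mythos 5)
Your proof is correct and follows essentially the same route as the paper: both identify $\pistarb$ with the Gibbs tilt of $\piref$ by $\beta^{-1}\rstar$, equate it with $\pi_{\thetastar}$ under \cref{ass:realizable}, take logarithms so that $\rstar(x,\cdot)$ and $\tri*{\thetastar,\phi(x,\cdot)}$ differ only by log-normalizers depending on $x$, and cancel these by differencing. Your added remark about the support of $\piref(\cdot\mid x)$ is a fair technical caveat that the paper leaves implicit.
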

\begin{proof}[\pfref{lem:reward_difference}]
  If \cref{ass:realizable} holds, then for all $x\in\cX$,
  $\pistarb(y\mid{}x)=\pist(y\mid{}x)$, where
  $\pistarb(y\mid{}x)\propto\piref(y\mid{}x)\exp\prn*{\beta^{-1}\rstar(x,y)}$
  is the optimal KL-regularized policy. Taking logarithms, this
  implies that for all $x\in\cX$, $y\in{}\cY$, 
  \[
    \beta\log\frac{\pistarb(y\mid{}x)}{\piref(y\mid{}x)}
    = \rstar(x,y) - \log Z_{\rstar}(x)
    =     \beta\log\frac{\pist(y\mid{}x)}{\piref(y\mid{}x)}
    = \tri*{\thetastar,\phi(x,y)} - \log Z_{\thetastar}(x),
  \]
  where $Z_{\rstar}(x)\ldef
  \En_{y\sim\piref(x)}\brk*{\exp\prn*{\beta^{-1}\rstar(x,y)}}$ and
  $Z_{\thetastar}(x)\ldef
  \En_{y\sim\piref(x)}\brk*{\exp\prn*{\beta^{-1}\tri*{\thetastar,\phi(x,y)}}}$. Picking
  any $y,y'\in\cY$ and take the difference then implies that
  \[
  \rstar(x,y) - \rstar(x,y') = \tri*{\thetastar,\phi(x,y) - \phi(x,y')}.
  \]
as claimed.
\end{proof}

\begin{lemma}[Density ratio bound for softmax policies]
  \label{lem:softmax_rmax}
  For a function $f:\cX\times\cY\to\bbR$, let
\begin{align}
\pi_f(y\mid{}x)\propto\piref(y\mid{}x)\exp(\beta^{-1}f(x,y)),
\end{align}
Then for all $x\in\cX$, it holds that
\begin{align}
  \nrm*{\frac{\pif(\cdot\mid{}x)}{\piref(\cdot\mid{}x)}}_{\infty}
  \leq{} \exp\prn*{\beta^{-1}\prn*{\max_{y\in\cY}f(x,y)
  - \En_{y\sim\piref(x)}\brk*{f(x,y)}}
  }.
\end{align}
  
\end{lemma}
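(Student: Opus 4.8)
The plan is to compute the density ratio explicitly and bound its numerator and denominator separately. First I would name the normalization constant $Z(x)\ldef\En_{y\sim\piref(\cdot\mid{}x)}\brk*{\exp(\beta^{-1}f(x,y))}$, so that for every $y$ in the support of $\piref(\cdot\mid{}x)$ we have the exact identity
\[
\frac{\pif(y\mid{}x)}{\piref(y\mid{}x)} = \frac{\exp\prn*{\beta^{-1}f(x,y)}}{Z(x)}.
\]
The numerator is at most $\exp\prn*{\beta^{-1}\max_{y\in\cY}f(x,y)}$. For the denominator, I would invoke Jensen's inequality applied to the convex function $z\mapsto e^{z}$, which gives $Z(x)\geq \exp\prn*{\En_{y\sim\piref(\cdot\mid{}x)}\brk*{\beta^{-1}f(x,y)}}$. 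Dividing the upper bound on the numerator by the lower bound on the denominator yields
\[
\frac{\pif(y\mid{}x)}{\piref(y\mid{}x)} \leq \exp\prn*{\beta^{-1}\prn*{\max_{y\in\cY}f(x,y) - \En_{y\sim\piref(\cdot\mid{}x)}\brk*{f(x,y)}}},
\]
and taking the supremum over $y$ (equivalently, over the support of $\piref(\cdot\mid{}x)$, where the ratio is defined) gives the claim.

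There is no substantive obstacle here; the only minor care point is that the $\ell_\infty$ norm of the density ratio is implicitly a supremum over the support of $\piref(\cdot\mid{}x)$, on which the ratio $\exp(\beta^{-1}f(x,y))/Z(x)$ is well-defined, and that $\max_{y\in\cY}f(x,y)$ dominates $f(x,y)$ at any such $y$. I would remark that this bound is exactly what is needed downstream to certify the rejection-sampling threshold $M$ in \rejection and \softmaxsample{}, since it controls $\Cinf$ in \cref{thm:rejection,thm:rejection_density} in terms of the gap between the maximum of $f$ and its average under $\piref$.
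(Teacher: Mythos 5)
Your proof is correct and follows essentially the same route as the paper's: both arguments reduce to lower-bounding the normalization constant $Z(x)=\En_{y\sim\piref(x)}\brk*{\exp(\beta^{-1}f(x,y))}$ via Jensen's inequality and upper-bounding the numerator by $\exp(\beta^{-1}\max_y f(x,y))$. The only cosmetic difference is that the paper centers $f$ by its mean inside both numerator and denominator before applying Jensen, which yields the identical bound.
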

\begin{proof}[\pfref{lem:softmax_rmax}]
  For any $y\in\cY$, we can use Jensen's inequality to bound
  \begin{align*}
    \frac{\pif(y\mid{}x)}{\piref(y\mid{}x)}
    &=
      \frac{\exp\prn*{\beta^{-1}f(x,y)}}{\En_{y'\sim\piref(x)}\brk*{\exp\prn*{\beta^{-1}f(x,y')}}}\\
    &=
      \frac{\exp\prn*{\beta^{-1}(f(x,y)-\En_{y''\sim\piref(x)}\brk*{f(x,y'')}}}{\En_{y'\sim\piref(x)}\brk*{\exp\prn*{\beta^{-1}(f(x,y')-\En_{y''\sim\piref(x)}\brk*{f(x,y')}}}}\\
        &\leq{}
          \frac{\exp\prn*{\beta^{-1}(f(x,y)-\En_{y''\sim\piref(x)}\brk*{f(x,y'')}}}{\exp\prn*{\beta^{-1}(\En_{y'\sim\piref(x)}\brk*{
          f(x,y')}-\En_{y''\sim\piref(x)}\brk*{f(x,y')}}}\\
            &=\exp\prn*{\beta^{-1}(f(x,y)-\En_{y''\sim\piref(x)}\brk*{f(x,y'')}}.
  \end{align*}
as claimed.
\end{proof}

\subsubsection{Guarantees for Least Squares}
The following result presents a standard guarantee for least
squares with dependent data.
\begin{lemma}
  \label{lem:least_squares}
  Consider a sequentially generated dataset $\crl*{(x\ind{t},
    y_1\ind{t}, y_2\ind{t}, r_1\ind{t}, r_2\ind{t})}_{t\in\brk{T}}$ in
  which for all $t$,
  \[
    \En\brk*{r_1\ind{t}- r_2\ind{t}\mid{} x\ind{t}, y_1\ind{t},
      y_2\ind{t}, \filt\ind{t-1}}
    = \tri*{\thetastar,\phi(x\ind{t}, y_1\ind{t})-\phi(x\ind{t}, y_2\ind{t})},
  \]
  where $\filt\ind{t-1}\ldef{}\sigma\prn*{\crl*{(x\ind{i},
    y_1\ind{i}, y_2\ind{i}, r_1\ind{i}, r_2\ind{i})}_{i<t}}$.
  Define the least-squares estimator
  \[
      \theta\ind{t}=\argmin_{\theta\in\Theta}\sum_{i<t} \prn*{
        \tri*{\phi(x\ind{i}, y_1\ind{i})-\phi(x\ind{i}, y_2\ind{i}),\theta}
        -(r\ind{i}_1-r\ind{i}_2)}^2,
    \]
    and let
    $\Sigma\ind{t}\ldef{}\sum_{i<t}(\phi(x\ind{i},y\ind{i}_1)-\phi(x\ind{i},
    y\ind{i}_2)) (\phi(x\ind{i},y\ind{i}_1)-\phi(x\ind{i},
    y\ind{i}_2))^{\trn}
    $.
Assume that $r_1\ind{t}, r_2\ind{t}\in\brk{0, \Rmax}$ almost surely,
that $\thetastar\in\Theta$, and that \cref{ass:norm} holds with
parameter $B$. Define $\lambda=\frac{\Rmax^2}{B^2}$. Then with probability at least $1-\delta$, for all
$t\in\brk{T}$,
\begin{align*}
  \nrm*{\theta\ind{t}-\thetastar}_{\Sigma\ind{t}+\lambda{}\Id}^2
  \leq{} \bigoh(d\Rmax^2\log(B\Rmax^{-1}\delta^{-1}T)).
\end{align*}
\end{lemma}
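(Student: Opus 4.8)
The plan is to reduce the statement to a standard self-normalized martingale concentration bound for online least squares. Set $z_i \ldef \phi(x\ind{i},y_1\ind{i})-\phi(x\ind{i},y_2\ind{i})$, $u_i \ldef r_1\ind{i}-r_2\ind{i}$, and $\eta_i \ldef u_i - \tri*{\thetastar,z_i}$. By \cref{ass:norm} we have $\nrm*{z_i}\leq 2$ and $\tri*{\thetastar,z_i}\in[-\Rmax,\Rmax]$, and since $u_i\in[-\Rmax,\Rmax]$ this gives $\abs*{\eta_i}\leq 2\Rmax$; moreover the hypothesis on the conditional mean of $r_1\ind{t}-r_2\ind{t}$ says precisely that $\En\brk*{\eta_i\mid z_i,\filt\ind{i-1}}=0$. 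Thus $(\eta_i)$ is a $\bigoh(\Rmax)$-sub-Gaussian martingale difference sequence with $z_i$ predictable --- exactly the setting of the self-normalized bound. Throughout write $V\ind{t}\ldef\Sigma\ind{t}+\lambda\Id$ and $S\ind{t}\ldef\sum_{i<t}\eta_i z_i$.

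The first step is to extract the ``basic inequality'' from optimality of the constrained least-squares solution. Since $\thetastar\in\Theta$ and $\theta\ind{t}$ minimizes $\theta\mapsto\sum_{i<t}(\tri*{z_i,\theta}-u_i)^2$ over $\Theta$, we have $\sum_{i<t}(\tri*{z_i,\theta\ind{t}}-u_i)^2\leq\sum_{i<t}(\tri*{z_i,\thetastar}-u_i)^2=\sum_{i<t}\eta_i^2$; expanding the left side as $\sum_{i<t}(\tri*{z_i,\theta\ind{t}-\thetastar}-\eta_i)^2$ and cancelling the common $\sum_{i<t}\eta_i^2$ term yields
\[
\nrm*{\theta\ind{t}-\thetastar}_{\Sigma\ind{t}}^2 \leq 2\tri*{\theta\ind{t}-\thetastar,\; S\ind{t}}.
\]
Adding $\lambda\nrm*{\theta\ind{t}-\thetastar}^2\leq 4\lambda B^2=4\Rmax^2$ (using $\nrm*{\theta\ind{t}},\nrm*{\thetastar}\leq B$ and $\lambda=\Rmax^2/B^2$) to both sides and applying Cauchy--Schwarz in the $V\ind{t}$-inner product gives $a^2\leq 2ab+4\Rmax^2$ with $a\ldef\nrm*{\theta\ind{t}-\thetastar}_{V\ind{t}}$ and $b\ldef\nrm*{S\ind{t}}_{(V\ind{t})^{-1}}$. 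Solving this quadratic inequality gives $a\leq 2b+2\Rmax$, hence $\nrm*{\theta\ind{t}-\thetastar}_{V\ind{t}}^2\leq 8b^2+8\Rmax^2$, so it remains only to control $b$ uniformly in $t\in\brk{T}$.

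The main (and only probabilistic) step is the self-normalized tail inequality for vector-valued martingales --- e.g., the argument underlying Theorem~20.4 of \citet{lattimore2020bandit}: with probability at least $1-\delta$, simultaneously for all $t$,
\[
\nrm*{S\ind{t}}_{(V\ind{t})^{-1}}^2 \leq \bigoh(\Rmax^2)\cdot\prn*{\log\frac{\det(V\ind{t})}{\det(\lambda\Id)}+\log(\delta^{-1})}.
\]
I would then bound the log-determinant ratio volumetrically: since $\nrm*{z_i}^2\leq 4$ we have $\mathrm{tr}(\Sigma\ind{t})\leq 4t$, so by AM--GM on the eigenvalues $\det(\Id+\lambda^{-1}\Sigma\ind{t})\leq(1+4t/(\lambda d))^d$ and hence $\log\frac{\det(V\ind{t})}{\det(\lambda\Id)}\leq d\log(1+4t/(\lambda d))$ (this is also immediate from the telescoping identity behind \cref{lem:elliptic_potential}). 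Plugging in $\lambda=\Rmax^2/B^2$ and using $\Rmax\leq B$ and $t\leq T$, this is $\bigoh(d\log(B\Rmax^{-1}T))$; combining with $\log(\delta^{-1})\leq\log(B\Rmax^{-1}\delta^{-1}T)$ gives $b^2\leq\bigoh(d\Rmax^2\log(B\Rmax^{-1}\delta^{-1}T))$. Substituting into $\nrm*{\theta\ind{t}-\thetastar}_{\Sigma\ind{t}+\lambda\Id}^2\leq 8b^2+8\Rmax^2$ gives the claim on the $(1-\delta)$-probability event. I do not expect a genuine obstacle beyond correctly invoking the self-normalized bound; everything else is deterministic bookkeeping with constants and logarithmic factors.
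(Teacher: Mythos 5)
Your proof is correct, but it takes a genuinely different route from the paper's. The paper disposes of the lemma in two lines: it invokes a standard concentration guarantee for well-specified least-squares regression over a general function class (Lemma 39 of Jin et al., 2021) — which is proved by Freedman's inequality plus a union bound over a cover of the parameter ball, and which directly yields $\nrm*{\theta\ind{t}-\thetastar}_{\Sigma\ind{t}}^2 \leq \bigoh(d\Rmax^2\log(B\Rmax^{-1}\delta^{-1}T))$ — and then adds the crude bound $\lambda\nrm*{\theta\ind{t}-\thetastar}^2\leq 4\lambda B^2 = 4\Rmax^2$ exactly as you do at the end. You instead run the linear-bandit machinery: the basic inequality from constrained ERM optimality, the Abbasi-Yadkori-style self-normalized martingale bound, and the log-determinant/elliptic-potential estimate. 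Both are standard and give the same rate; your route exploits the linear structure explicitly (the $d\log(\cdot)$ factor arises from $\log\det$ rather than from a covering number), while the paper's cited lemma is agnostic to linearity and so generalizes beyond this setting. The only points worth being careful about in your write-up are (i) that the self-normalized bound is applied with the filtration augmented so that $z_i$ is measurable with respect to the conditioning $\sigma$-algebra at step $i$ (the hypothesis $\En\brk{r_1\ind{i}-r_2\ind{i}\mid x\ind{i},y_1\ind{i},y_2\ind{i},\filt\ind{i-1}}=\tri{\thetastar,z_i}$ is exactly conditional mean-zero noise for that augmented filtration, and boundedness gives conditional sub-Gaussianity via Hoeffding's lemma), and (ii) that the bound you invoke is an anytime bound, so uniformity over $t\in\brk{T}$ comes for free; both points are handled correctly, if implicitly, in your argument.
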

\begin{proof}[\pfref{lem:least_squares}]
  By a standard concentration result for well-specified regression (e.g., Lemma 39 in
  \citet{jin2021bellman}), we have that with probability at least
  $1-\delta$, for all $t\in\brk{T}$,
  \begin{align*}
\nrm*{\theta\ind{t}-\thetastar}_{\Sigma\ind{t}}^2 =
    \sum_{i<t}\tri*{\theta\ind{t}-\thetastar, \phi(x\ind{i},y_1\ind{i})-\phi(x\ind{i},y_2\ind{i})}^2
    \approxleq\bigoh(d\Rmax^2\log(B\Rmax^{-1}\delta^{-1}T)).
  \end{align*}
  By \cref{ass:norm} and choice of $\lambda$, we have
  \begin{align*}
    \nrm*{\theta\ind{t}-\thetastar}_{\Sigma\ind{t}+\lambda\Id}^2
    = \nrm*{\theta\ind{t}-\thetastar}_{\Sigma\ind{t}}^2
    + \lambda\nrm*{\theta\ind{t}-\thetastar}^2
    \leq \nrm*{\theta\ind{t}-\thetastar}_{\Sigma\ind{t}}^2
    + 4\lambda{}B^2
    \leq \nrm*{\theta\ind{t}-\thetastar}_{\Sigma\ind{t}}^2
    + 4\Rmax^2,
  \end{align*} 
  and combining with the preceding bound completes the proof.
\end{proof}

\subsubsection{Elementary Properties of KL-Regularized Regret}
We now state some generic properties of the
KL-regularized regret. Suppose the true reward is $\fstar(x,y)$ for an arbitrary
function $\fstar$, and let
\[
  \Jbeta(\pi\midsem{}\fstar, x) = \En_{y\sim{}\pi(x)}\brk*{\fstar(x,y)}
  -\beta\Dkl{\pi(x)}{\piref(x)}.
\]
For a function $f$, let
\begin{align*}
\pi_f(y\mid{}x)\propto\piref(y\mid{}x)\exp(\beta^{-1}f(x,y)),
\end{align*}
and let
$Z_f(x)\ldef{}\En_{y\sim\piref(\cdot\mid{}x)}\brk*{\exp\prn*{\beta^{-1}f(x,y)}}$
denote the normalization constant. The following result follows from elementary manipulations.
\begin{lemma}
  \label{lem:kl_regret}
  For all $f:\cX\times\cY\to\bbR$ and $x\in\cX$, it holds that
  \begin{align*}
\notag    &J_\beta(\pi_{\fstar}\midsem{}\fstar, x) -
            J_\beta(\pi_f\midsem{}\fstar, x) \\&=
    \beta\cdot\Dkl{\pi_f(x)}{\pi_{\fstar}(x)}
    \\
    &= \beta\log(Z_{\fstar}(x)/Z_{f}(x)) +
      \En_{y\sim\pif(x)}\brk*{f(x,y)-\fstar(x,y)}.\\
                                                             &=
                                                               \beta\log\prn*{\En_{y\sim\pif(x)}\exp\prn*{\beta^{-1}(\fstar(x,y)-f(x,y))}}
                                                               +
                                                               \En_{y\sim\pif(x)}\brk*{f(x,y)-\fstar(x,y)}.
\end{align*}
  
\end{lemma}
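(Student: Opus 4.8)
The plan is to prove the three displayed equalities in turn, each by a one-line substitution of the softmax parameterization; since the statement is entirely a matter of elementary manipulation, the only "hard part" is bookkeeping of the two normalization constants $Z_f(x)$ and $Z_{\fstar}(x)$. I would begin with the Gibbs variational identity: for an arbitrary reward function $g$ and an arbitrary policy $\pi$,
\[
J_\beta(\pi\midsem g, x) = \beta\log Z_g(x) - \beta\Dkl{\pi(x)}{\pi_g(x)}.
\]
To see this, use the definition of $\pi_g$ to write $g(x,y) = \beta\log\tfrac{\pi_g(y\mid x)}{\piref(y\mid x)} + \beta\log Z_g(x)$, substitute this into $J_\beta(\pi\midsem g,x) = \En_{y\sim\pi(x)}\brk*{g(x,y) - \beta\log\tfrac{\pi(y\mid x)}{\piref(y\mid x)}}$, and collect the logarithmic terms. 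Taking $\pi = \pi_g$ makes the KL term vanish, so $J_\beta(\pi_g\midsem g,x) = \beta\log Z_g(x)$; subtracting gives $J_\beta(\pi_g\midsem g,x) - J_\beta(\pi\midsem g,x) = \beta\Dkl{\pi(x)}{\pi_g(x)}$. Specializing to $g = \fstar$ and $\pi = \pi_f$ is precisely the first equality of the lemma.

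For the second equality, I would compute the likelihood ratio directly from the two softmax definitions:
\[
\frac{\pi_f(y\mid x)}{\pi_{\fstar}(y\mid x)} = \frac{Z_{\fstar}(x)}{Z_f(x)}\cdot\exp\prn*{\beta^{-1}\prn*{f(x,y) - \fstar(x,y)}}.
\]
Taking $\log$, multiplying by $\beta$, and averaging over $y\sim\pi_f(x)$ yields $\beta\Dkl{\pi_f(x)}{\pi_{\fstar}(x)} = \beta\log\prn*{Z_{\fstar}(x)/Z_f(x)} + \En_{y\sim\pi_f(x)}\brk*{f(x,y) - \fstar(x,y)}$. For the third equality it then suffices to rewrite the log-partition ratio: substituting $\pi_f(y\mid x) = \piref(y\mid x)\exp(\beta^{-1}f(x,y))/Z_f(x)$ into $\En_{y\sim\pi_f(x)}\brk*{\exp\prn*{\beta^{-1}(\fstar(x,y) - f(x,y))}}$, the $\exp(\beta^{-1}f)$ factors cancel, leaving $Z_f(x)^{-1}\En_{y\sim\piref(x)}\brk*{\exp(\beta^{-1}\fstar(x,y))} = Z_{\fstar}(x)/Z_f(x)$. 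Applying $\beta\log(\cdot)$ converts the second equality into the third.

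I do not expect any genuine obstacle here — every step is a direct substitution and no inequality or probabilistic argument is involved. The only points worth a word of care are that all expectations are finite (immediate, since $f$ and $\fstar$ are real-valued and, in every application, bounded, so $Z_f(x), Z_{\fstar}(x) \in (0,\infty)$) and that when $\cY$ is infinite one replaces sums by integrals, with no change to the manipulations above.
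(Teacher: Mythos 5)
Your proof is correct and takes essentially the same approach as the paper: both rest on the softmax normalization identities $J_\beta(\pi_g\midsem g,x)=\beta\log Z_g(x)$, the likelihood-ratio computation for $\pi_f/\pi_{\fstar}$, and the cancellation $\En_{y\sim\pi_f(x)}\brk*{\exp(\beta^{-1}(\fstar-f))}=Z_{\fstar}(x)/Z_f(x)$. The only difference is order of presentation (you derive the first equality from the Gibbs variational identity and then the second, whereas the paper computes the KL/partition-ratio identity first), which is immaterial.
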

\begin{proof}[\pfref{lem:kl_regret}]
    We can directly calculate that
  \begin{align*}
    \label{eq:first_step}
    \beta\Dkl{\pif}{\pifstar}
    = \beta\En\brk*{\log(Z_{\fstar}(x)/Z_{f}(x))} + \En_{y\sim\pif(x)}\brk*{f(x,y)-\fstar(x,y)},
  \end{align*}
  where
  $Z_f(x)\ldef{}\En_{y\sim\piref(x)}\brk*{\exp(\beta^{-1}f(x,y))}$.
  The first identity now follows by noting that for any $f$,
  \begin{align*}
    J_\beta(\pi_{f}\midsem{}\fstar, x)
    = \En_{y\sim\pif(x)}\brk*{\fstar(x,y)}
    - \En_{y\sim\pif(x)}\brk*{f(x,y)}
     + \beta\log(Z_f(x)).
  \end{align*}
  We
  finally observe that 
  \begin{align*}
    \log(Z_{\fstar}(x)/Z_{f}(x))
    = \log\prn*{\En_{y\sim\pif(x)}\brk*{\exp(\beta^{-1}(\fstar(x,y)-f(x,y)))}}
  \end{align*}
which completes the proof.
\end{proof}

\subsection{KL-Regularized Regret Decomposition for Truncated
  Softmax Policies}
\label{sec:spanner_regret}

This section gives tight bounds on the KL-regularized
regret for truncated softmax policies of the type used in \spanalg and
formally defined below. The main 
results, \cref{lem:truncated_density,lem:truncated_regret_simple},
allow for fast $1/\veps$-type rates by exploiting regularization, as
well as efficient rejection sampling.

\paragraph{Truncated softmax policies} Let $\Sigma\psdgt{}0$ be a given matrix and
$\nu>0$ be a parameter. Recalling that $\phidel(x,y,y') := \phi(x,y)-\phi(x,y')$, define a \emph{truncated} feature map by
\[
  \phidb(x,y,y')=\phidel(x,y,y')\indic\crl*{\nrm*{\phidb(x,y,y')}_{\Sigma^{-1}}\leq\nu}.
\]
For a parameter $\theta\in\bbR^{d}$ we will define a \emph{truncated
  softmax policy}
$\pibtheta(y,y'\mid{}x)$ as follows. Fixing $x\in\cX$, first define
$\pibtheta(y'\mid{}x)=\piref(y'\mid{}x)$. Next, define
\[
\pibtheta(y\mid{}x,y') \propto \piref(y\mid{}x)\exp\prn*{\beta^{-1}\tri*{\theta,\phidb(x,y,y')}}.
\]
We use $\pibtheta(y\mid{}x)=\En_{y'\sim\piref(\cdot\mid{}x)}\brk*{\pibtheta(y\mid{}x,y')}$
to denote the marginal over $y$ given $x$. We will overload notation
slightly and use $\Jbeta(\pibtheta)$ to denote the KL-regularized
regret of the marginalized policy $\pibtheta(y\mid{}x)$. We also define
\[
   \Jbbeta(\pibtheta)
   = \En_{x\sim\rho,y'\sim\piref(\cdot\mid{}x), y\sim\pibtheta(\cdot\mid{}x,y')}\brk*{
     \rstar(x,y)
     -\beta\Dkl{\pibtheta(\cdot\mid{}x,y')}{\piref(\cdot\mid{}x)}
     }
   \]
   and, for any $x\in\cX$,
   \[
   \Jbbeta(\pibtheta; x)
   = \En_{y'\sim\piref(\cdot\mid{}x), y\sim\pibtheta(\cdot\mid{}x,y')}\brk*{
     \rstar(x,y)
     -\beta\Dkl{\pibtheta(\cdot\mid{}x,y')}{\piref(\cdot\mid{}x)}
     }.
   \]

We first give a bound on the regret of the truncated softmax policies
that scales with (i) the squared estimation error (allowing for fast
$1/\veps$-type rates), and (ii) truncation probability under responses
drawn from $\piref$ and $\pist$.\loose
\begin{lemma}[Basic regret decomposition for truncated softmax policies]
  \label{lem:truncated_regret}
  Fix $x\in\cX$ and define $  \vepsstat^2 \ldef \nrm*{\theta-\thetastar}_{\Sigma}^2$.
Then under \cref{ass:realizable,ass:norm}, if $\nu\leq\beta/\vepsstat$, we have
\begin{align*}
  \Jbeta(\pist; x) - \Jbeta(\pibtheta; x)
  &\leq\Jbeta(\pist; x) - \Jbbeta(\pibtheta; x)\\
  &\leq\beta^{-1}\En_{(y,y')\sim\pibtheta(x)}\brk*{\tri*{\thetastar-\theta,\phidb(x,y,y')}^2}\\
  &~~~~+ \Rmax\Big(\bbP_{y\sim\pist(x),y'\sim\piref(x)}\brk*{\nrm*{\phidel(x,y,y')}_{\Sigma^{-1}}>\nu}
    \\
    &\qquad\qquad+ \bbP_{(y,y')\sim\pibtheta(x)}\brk*{\nrm*{\phidel(x,y,y')}_{\Sigma^{-1}}>\nu}\Big).
\end{align*}
\end{lemma}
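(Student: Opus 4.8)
Fix $x\in\cX$ throughout. The two inequalities are handled separately.

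\emph{First inequality.} Since $\pibtheta(y\mid{}x)=\En_{y'\sim\piref(\cdot\mid{}x)}\brk*{\pibtheta(y\mid{}x,y')}$ is a $\piref(\cdot\mid{}x)$-mixture of the conditional laws $\pibtheta(\cdot\mid{}x,y')$, linearity of expectation gives $\En_{y\sim\pibtheta(\cdot\mid{}x)}\brk*{\rstar(x,y)}=\En_{y'\sim\piref(\cdot\mid{}x)}\En_{y\sim\pibtheta(\cdot\mid{}x,y')}\brk*{\rstar(x,y)}$, while convexity of $\mu\mapsto\Dkl{\mu}{\piref(\cdot\mid{}x)}$ (Jensen) gives $\Dkl{\pibtheta(\cdot\mid{}x)}{\piref(\cdot\mid{}x)}\leq\En_{y'\sim\piref(\cdot\mid{}x)}\brk*{\Dkl{\pibtheta(\cdot\mid{}x,y')}{\piref(\cdot\mid{}x)}}$. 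Adding the two shows $\Jbeta(\pibtheta;x)\geq\Jbbeta(\pibtheta;x)$, i.e. the first inequality.

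\emph{Reduction to a per-anchor bound.} It suffices to bound $J_\beta(\pist\midsem\rstar,x)-J_\beta(\pibtheta(\cdot\mid{}x,y')\midsem\rstar,x)$ for each fixed $y'$ and then average over $y'\sim\piref(\cdot\mid{}x)$. By \cref{lem:reward_difference}, $\rstar(x,y)-\rstar(x,y')=\tri*{\thetastar,\phidel(x,y,y')}=:g(x,y)$; since $J_\beta(\cdot\midsem\rstar,x)$ and $J_\beta(\cdot\midsem g,x)$ differ only by the $y$-independent constant $\rstar(x,y')$, regret differences are unchanged, and moreover $\pist=\pi_g$ (the Gibbs/optimal KL-regularized policy for $g$) and $\pibtheta(\cdot\mid{}x,y')=\pi_f$ with $f(x,y)=\tri*{\theta,\phidb(x,y,y')}$. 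Thus the per-anchor quantity equals $J_\beta(\pi_g\midsem g,x)-J_\beta(\pi_f\midsem g,x)$, which by \cref{lem:kl_regret} equals $\beta\log\En_{y\sim\pi_f(x)}\brk*{\exp(\beta^{-1}(g-f)(x,y))}-\En_{y\sim\pi_f(x)}\brk*{(g-f)(x,y)}$.

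\emph{Splitting on the uncertain set.} Let $\cY_{\mathsf{u}}=\crl*{y:\nrm*{\phidel(x,y,y')}_{\Sigma^{-1}}>\nu}$. On its complement, $\phidb=\phidel$, so $(g-f)(x,y)=\tri*{\thetastar-\theta,\phidb(x,y,y')}$, and by Cauchy--Schwarz its magnitude is $\leq\nrm*{\thetastar-\theta}_{\Sigma}\nu=\vepsstat\nu\leq\beta$; on $\cY_{\mathsf{u}}$, $\phidb=0$, so $f=0$ and $(g-f)(x,y)=g(x,y)\in\brk*{-\Rmax,\Rmax}$ by \cref{ass:norm}. For the complement's contribution to the log-term, the elementary inequality $e^u\leq 1+u+u^2$ for $u\in\brk*{-1,1}$ together with $\log(1+t)\leq t$ converts it (together with the matching linear term) into the quadratic error $\beta^{-1}\En_{y\sim\pi_f(x)}\brk*{\tri*{\thetastar-\theta,\phidb(x,y,y')}^2}$; the truncation indicator is free to drop since $\phidb$ vanishes on $\cY_{\mathsf{u}}$. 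The linear term restricted to $\cY_{\mathsf{u}}$ is at most $\Rmax\cdot\bbP_{y\sim\pi_f(x)}\brk*{y\in\cY_{\mathsf{u}}}$ in magnitude. Finally, the $\cY_{\mathsf{u}}$-contribution to the log-term is handled by a change-of-measure identity: since $\pi_f(y\mid{}x,y')\propto\piref(y\mid{}x)$ on $\cY_{\mathsf{u}}$ and $g(x,y)=\rstar(x,y)-\rstar(x,y')$ there, writing $\pistarb=\pi_{\rstar}$ (\cref{ass:realizable}) yields $\En_{y\sim\pi_f(x)}\brk*{e^{\beta^{-1}g(x,y)}\indic\crl*{y\in\cY_{\mathsf{u}}}}=\frac{Z_g(x)}{Z_f(x)}\cdot\bbP_{y\sim\pistarb(\cdot\mid{}x)}\brk*{y\in\cY_{\mathsf{u}}}$, where $Z_g,Z_f$ are the normalizers and $\frac{Z_g(x)}{Z_f(x)}=\En_{y\sim\pi_f(x)}\brk*{e^{\beta^{-1}(g-f)(x,y)}}$.

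\emph{Combining and averaging.} Substituting the split into $\beta\log\En_{y\sim\pi_f(x)}\brk*{e^{\beta^{-1}(g-f)}}$ and using $\log(1+t)\leq t$ yields a relation for the normalization ratio $Z_g(x)/Z_f(x)$ that can be solved for, giving the per-anchor bound $J_\beta(\pi_g\midsem g,x)-J_\beta(\pi_f\midsem g,x)\leq\beta^{-1}\En_{y\sim\pi_f(x)}\brk*{\tri*{\thetastar-\theta,\phidb(x,y,y')}^2}+\Rmax\big(\bbP_{y\sim\pistarb(\cdot\mid{}x)}\brk*{y\in\cY_{\mathsf{u}}}+\bbP_{y\sim\pi_f(x)}\brk*{y\in\cY_{\mathsf{u}}}\big)$ (in the regime where the $\pistarb$-mass of $\cY_{\mathsf{u}}$ is close to $1$ one instead appeals to the crude bound $J_\beta(\pi_g\midsem g,x)-J_\beta(\pi_f\midsem g,x)=\bigoh(\Rmax)$, which follows from $\abs*{g-f}\leq\Rmax$ pointwise, together with $\bbP_{y\sim\pistarb}\brk*{y\in\cY_{\mathsf u}}$ being bounded below). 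Averaging over $y'\sim\piref(\cdot\mid{}x)$ and recalling $\pist=\pistarb$ turns the three terms into $\beta^{-1}\En_{(y,y')\sim\pibtheta(x)}\brk*{\tri*{\thetastar-\theta,\phidb(x,y,y')}^2}$, $\Rmax\,\bbP_{y\sim\pist(x),y'\sim\piref(x)}\brk*{\nrm*{\phidel(x,y,y')}_{\Sigma^{-1}}>\nu}$, and $\Rmax\,\bbP_{(y,y')\sim\pibtheta(x)}\brk*{\nrm*{\phidel(x,y,y')}_{\Sigma^{-1}}>\nu}$, which is the claim. The main obstacle is exactly the last step: controlling $\beta\log(Z_g(x)/Z_f(x))$ without incurring an $e^{\Rmax/\beta}$ blowup from the untruncated responses in $\cY_{\mathsf{u}}$. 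The change-of-measure identity is what makes this possible---it re-expresses the exponential weight of $\cY_{\mathsf{u}}$ under $\pi_f$ as honest $\pistarb$-probability mass---and the bookkeeping must use $\nu\leq\beta/\vepsstat$ (to keep the truncated exponent in $\brk*{-1,1}$, where the quadratic expansion is valid) and $\beta\leq\Rmax$ (to absorb $\beta$-scale residuals into the $\Rmax$-scale probability terms).
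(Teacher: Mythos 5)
Your proposal is correct in structure but takes a genuinely different route from the paper, and as written it only recovers the stated bound up to absolute constant factors on the two probability terms. The paper first replaces $\tri*{\thetastar,\phidel}$ by $\tri*{\thetastar,\phidb}$ inside \emph{both} reward expectations (paying exactly $\Rmax$ times each truncation probability), and then---this is the step you do not take---switches the comparator from $\pist$ to $\pibst(\cdot\mid{}y')$, the optimal KL-regularized policy for the \emph{truncated} reward $\tri*{\thetastar,\phidb(\cdot,\cdot,y')}$, via a free ``max over policies'' upper bound. After this switch, \cref{lem:kl_regret} is applied to a well-specified problem in which the exponent $\tri*{\thetastar-\theta,\phidb}$ is uniformly bounded by $\nu\vepsstat\leq\beta$, so the expansion $e^z\leq 1+z+z^2$ applies globally, no change of measure is needed, and the constants come out exactly as stated. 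You instead keep the untruncated comparator $\pi_g=\pist$ and repair the resulting $e^{\Rmax/\beta}$-sized exponent on $\cY_{\mathsf{u}}$ with the identity $\En_{y\sim\pi_f}\brk*{e^{\beta^{-1}g}\indic\crl*{y\in\cY_{\mathsf{u}}}}=(Z_g/Z_f)\cdot\pistarb(\cY_{\mathsf{u}}\mid{}x)$; solving for $Z_g/Z_f$ introduces a factor $(1-p^\star)^{-1}$ with $p^\star\ldef\pistarb(\cY_{\mathsf{u}}\mid{}x)$, whence $-\beta\log(1-p^\star)\geq\beta p^\star$ with a constant strictly larger than $1$, and the regime $p^\star>1/2$ must be handled separately by the crude $O(\Rmax)$ bound, which costs a further constant. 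Both mechanisms are sound (your change-of-measure identity is essentially the one the paper uses in \cref{lem:truncated_density} to lower bound the truncated normalizer), but your route yields coefficients like $2\Rmax$---and worse in the large-$p^\star$ case---rather than the exact $\Rmax$ of the statement. This is harmless for the downstream use in \cref{lem:truncated_regret_simple}, which tolerates absolute constants, but to obtain the lemma verbatim you should adopt the comparator switch to the truncated optimal policy $\pibst(\cdot\mid{}y')$.
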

Next, we show that the density ratio between $\pibar_{\theta}$ and
$\piref$ can be bounded by the optimal density ratio
$\Ccov(\pist)$ for $\pist$.
\begin{lemma}[Density ratio bound for truncated softmax policies]
  \label{lem:truncated_density}
  Fix $x\in\cX$ and $y'\in\cY$. Define
\[
  \vepsspan(x,y')\ldef{} \bbP_{y\sim\pist(\cdot\mid{}x)}\brk*{\nrm*{\phidel(x,y,y')}_{\Sigma^{-1}}>\nu},\mathand
  \vepsstat^2 \ldef \nrm*{\theta-\thetastar}_{\Sigma}^2.
\]
Suppose
  $\nu\leq{}\beta/\vepsstat$ and
  $\vepsspan(x,y')\leq{}1/2$. Then for all $y\in\cY$,
  \[
    \pibar_{\theta}(y\mid{}x,y')
    \leq{}2e^2\Ccov(\pist)\cdot\piref(y\mid{}x).
  \]
\end{lemma}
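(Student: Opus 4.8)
The plan is to bound the density ratio $\pibtheta(y \mid x,y')/\piref(y\mid x)$ by controlling its numerator and denominator separately against quantities attached to the optimal policy $\pist$. Write $Z_\theta(x,y') \ldef \En_{y\sim\piref(\cdot\mid x)}\brk*{\exp\prn*{\beta^{-1}\tri*{\theta,\phidb(x,y,y')}}}$ for the normalizing constant of the truncated softmax policy, so that $\pibtheta(y\mid x,y')/\piref(y\mid x) = \exp\prn*{\beta^{-1}\tri*{\theta,\phidb(x,y,y')}}/Z_\theta(x,y')$. Under \cref{ass:realizable}, \cref{lem:reward_difference} gives $\rstar(x,y) - \rstar(x,y') = \tri*{\thetastar,\phidel(x,y,y')}$, hence $\pist(y\mid x) \propto \piref(y\mid x)\exp\prn*{\beta^{-1}\tri*{\thetastar,\phidel(x,y,y')}}$; I will abbreviate $Z^{\star}(x,y') \ldef \En_{y\sim\piref(\cdot\mid x)}\brk*{\exp\prn*{\beta^{-1}\tri*{\thetastar,\phidel(x,y,y')}}}$ for its normalizing constant. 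Two bookkeeping facts about $Z^{\star}$ will be used repeatedly: since $\pist(y\mid x)/\piref(y\mid x) \leq \Ccov(\pist)$ for every $y$, we have $\exp\prn*{\beta^{-1}\tri*{\thetastar,\phidel(x,y,y')}} = Z^{\star}(x,y')\cdot\pist(y\mid x)/\piref(y\mid x) \leq Z^{\star}(x,y')\Ccov(\pist)$; and taking $y = y'$ (so that $\phidel(x,y',y') = 0$) gives $Z^{\star}(x,y')\Ccov(\pist) \geq 1$.

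The numerator is handled via Cauchy--Schwarz in the $\Sigma$-geometry: whenever $\nrm*{\phidel(x,y,y')}_{\Sigma^{-1}} \leq \nu$ we get $\abs*{\tri*{\theta - \thetastar,\phidel(x,y,y')}} \leq \nrm*{\theta - \thetastar}_{\Sigma}\nrm*{\phidel(x,y,y')}_{\Sigma^{-1}} \leq \vepsstat\nu \leq \beta$, using the hypothesis $\nu \leq \beta/\vepsstat$. On the truncation event $\phidb = \phidel$, so the numerator is at most $e\exp\prn*{\beta^{-1}\tri*{\thetastar,\phidel(x,y,y')}} \leq e\,Z^{\star}(x,y')\Ccov(\pist)$; off the truncation event $\phidb = 0$, so the numerator equals $1 \leq e\,Z^{\star}(x,y')\Ccov(\pist)$. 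For the denominator, I drop the nonnegative contribution of the off-event region and apply the same Cauchy--Schwarz bound in reverse to obtain
\[
  Z_\theta(x,y') \geq e^{-1}\En_{y\sim\piref(\cdot\mid x)}\brk*{\exp\prn*{\beta^{-1}\tri*{\thetastar,\phidel(x,y,y')}}\indic\crl*{\nrm*{\phidel(x,y,y')}_{\Sigma^{-1}} \leq \nu}}.
\]
A change of measure via $\pist(y\mid x)/\piref(y\mid x) = \exp\prn*{\beta^{-1}\tri*{\thetastar,\phidel(x,y,y')}}/Z^{\star}(x,y')$ rewrites the right-hand side as $e^{-1}Z^{\star}(x,y')\cdot\bbP_{y\sim\pist(\cdot\mid x)}\brk*{\nrm*{\phidel(x,y,y')}_{\Sigma^{-1}} \leq \nu} = e^{-1}Z^{\star}(x,y')\prn*{1 - \vepsspan(x,y')} \geq \frac{1}{2e}Z^{\star}(x,y')$, where the last step uses $\vepsspan(x,y') \leq 1/2$.

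Combining the two bounds yields $\pibtheta(y\mid x,y')/\piref(y\mid x) \leq e\,Z^{\star}(x,y')\Ccov(\pist)\big/\big(\frac{1}{2e}Z^{\star}(x,y')\big) = 2e^2\Ccov(\pist)$, which is the claim. The argument is essentially just Cauchy--Schwarz plus two normalization identities; the one place that requires a little thought is the denominator step, where $Z_\theta$ can only be compared to $Z^{\star}$ after restricting to the low-uncertainty region, and it is precisely the hypothesis $\vepsspan(x,y') \leq 1/2$ that guarantees this region still carries a constant fraction of the relevant mass (the lost factor being absorbed harmlessly into the universal constant). I do not anticipate a substantive obstacle beyond this bookkeeping.
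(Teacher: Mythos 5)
Your proof is correct and follows essentially the same route as the paper's: Cauchy--Schwarz in the $\Sigma$-geometry to swap $\theta$ for $\thetastar$ at cost $e^{\pm 1}$, a lower bound on the normalization constant restricted to the low-uncertainty region, and a change of measure to $\pist$ so that $\vepsspan(x,y')\leq 1/2$ yields the factor $2$. The only difference is bookkeeping: by working throughout with the relative-feature normalization constant $Z^{\star}(x,y')$ (which absorbs the shift $\exp(-\beta^{-1}\tri{\thetastar,\phi(x,y')})$), you avoid the paper's concluding case analysis on the sign of $\tri{\thetastar,\phidel(x,y,y')}$, handling the off-truncation numerator instead via the clean observation $Z^{\star}(x,y')\Ccov(\pist)\geq 1$.
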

Finally \cref{lem:truncated_regret} and \cref{lem:truncated_density}
gives our main regret decomposition for truncated softmax policies.
\begin{lemma}[Main regret decomposition for truncated softmax policies]
  \label{lem:truncated_regret_simple}
  Define $  \vepsstat^2 \ldef \nrm*{\theta-\thetastar}_{\Sigma}^2$, and for any $\veps>0$, define
  \[
  \cXspan(\veps) \ldef
\crl*{x\in\cX\mid{}\bbP_{(y,y')\sim\piref(x)}\brk*{\nrm*{\phidel(x,y,y')}_{\Sigma^{-1}}>\nu} \leq\veps}.
  \]
Then under \cref{ass:realizable,ass:norm}, if $\nu\leq\beta/\vepsstat$, \arxiv{we have}\loose
\begin{align*}
  &\Jbeta(\pist) - \Jbeta(\pibtheta)
  \leq\Jbeta(\pist) - \Jbbeta(\pibtheta)\\
  &\leq
    \beta^{-1}\En_{(y,y')\sim\pibtheta(x)}\brk*{\tri*{\thetastar-\theta,\phidb(x,y,y')}^2}
        + 18\Rmax\Ccov(\pist)\cdot\veps
        + 2\Rmax\bbP\brk*{x\notin\cXspan(\veps)}.
\end{align*}
  
\end{lemma}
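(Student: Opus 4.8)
The plan is to prove the two inequalities in the statement separately. For the first inequality $\Jbeta(\pist) - \Jbeta(\pibtheta) \le \Jbeta(\pist) - \Jbbeta(\pibtheta)$, I would note that the reward parts of $\Jbeta(\pibtheta)$ and $\Jbbeta(\pibtheta)$ coincide, because $\pibtheta(\cdot\mid{}x) = \En_{y'\sim\piref(\cdot\mid{}x)}\brk*{\pibtheta(\cdot\mid{}x,y')}$ by definition, while for the KL regularizers joint convexity of $\Dkl{\cdot}{\cdot}$ in its first argument gives $\Dkl{\pibtheta(\cdot\mid{}x)}{\piref(\cdot\mid{}x)} \le \En_{y'\sim\piref(\cdot\mid{}x)}\Dkl{\pibtheta(\cdot\mid{}x,y')}{\piref(\cdot\mid{}x)}$. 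Combining these two facts yields $\Jbeta(\pibtheta) \ge \Jbbeta(\pibtheta)$, i.e.\ the first inequality.

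For the second inequality I would apply \cref{lem:truncated_regret} at each fixed $x\in\cX$ and then take $\En_{x\sim\rho}$. The squared-error term carries over verbatim, so it remains to bound $\Rmax\cdot\En_{x\sim\rho}\brk*{P_1(x)+P_2(x)}$ where $P_1(x) := \bbP_{y\sim\pist(x),\,y'\sim\piref(x)}\brk*{\nrm*{\phidel(x,y,y')}_{\Sigma^{-1}}>\nu}$ and $P_2(x) := \bbP_{(y,y')\sim\pibtheta(x)}\brk*{\nrm*{\phidel(x,y,y')}_{\Sigma^{-1}}>\nu}$. The main device is a change of measure from $\pist$ and from $\pibtheta$ to the reference policy $\piref$, which expresses everything through $p(x) := \bbP_{(y,y')\sim\piref(x)}\brk*{\nrm*{\phidel(x,y,y')}_{\Sigma^{-1}}>\nu}$, and $p(x)\le\veps$ precisely when $x\in\cXspan(\veps)$ (and $p(x)\le1$ always).

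Concretely, for $P_1$ I would use $\pist(y\mid{}x)\le\Ccov(\pist)\piref(y\mid{}x)$ to obtain $P_1(x)\le\Ccov(\pist)\,p(x)$, hence $\En_{x\sim\rho}\brk*{P_1(x)}\le\Ccov(\pist)\veps + \bbP\brk*{x\notin\cXspan(\veps)}$. For $P_2$, since $\pibtheta(y'\mid{}x)=\piref(y'\mid{}x)$, I would split the outer draw $y'$ according to whether $\vepsspan(x,y') := \bbP_{y\sim\pist(\cdot\mid{}x)}\brk*{\nrm*{\phidel(x,y,y')}_{\Sigma^{-1}}>\nu}$ is at most $1/2$. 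On the event $\vepsspan(x,y')\le1/2$, \cref{lem:truncated_density} (applicable since $\nu\le\beta/\vepsstat$) gives $\pibtheta(y\mid{}x,y')\le2e^2\Ccov(\pist)\piref(y\mid{}x)$, so the inner probability is $\le 2e^2\Ccov(\pist)\,\bbP_{y\sim\piref(x)}\brk*{\nrm*{\phidel(x,y,y')}_{\Sigma^{-1}}>\nu}$, whose $\piref$-expectation over $y'$ is $2e^2\Ccov(\pist)\,p(x)$; on the complementary event I would bound the inner probability by $1$ and control its mass by Markov applied to $\vepsspan(x,y')$ together with the same change of measure, $\bbP_{y'\sim\piref(x)}\brk*{\vepsspan(x,y')>1/2}\le2\En_{y'\sim\piref(x)}\brk*{\vepsspan(x,y')} = 2P_1(x)\le2\Ccov(\pist)\,p(x)$. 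This gives $P_2(x)\le(2e^2+2)\Ccov(\pist)\,p(x)$, hence $\En_{x\sim\rho}\brk*{P_2(x)}\le(2e^2+2)\Ccov(\pist)\veps+\bbP\brk*{x\notin\cXspan(\veps)}$. Adding the two bounds and using $2e^2+3\le18$ yields $\Rmax\En_{x\sim\rho}\brk*{P_1(x)+P_2(x)}\le18\Rmax\Ccov(\pist)\veps + 2\Rmax\bbP\brk*{x\notin\cXspan(\veps)}$, which is exactly the stated estimate.

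The step I expect to require the most care is the control of $P_2$: one must cleanly separate the responses $y'$ on which \cref{lem:truncated_density} applies from those on which it fails, and then check that the leftover ``bad''-$y'$ mass is itself governed by $p(x)$ rather than by an uncontrolled quantity, so that all the change-of-measure factors aggregate into the single constant $18$. Everything else reduces to Jensen's inequality, Markov's inequality, and bookkeeping.
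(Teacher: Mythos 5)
Your proposal is correct and follows essentially the same route as the paper: apply \cref{lem:truncated_regret} pointwise in $x$, bound the $\pist$-truncation term by a change of measure to $\piref$ via $\Ccov(\pist)$, and bound the $\pibtheta$-truncation term by splitting $y'$ according to whether $\vepsspan(x,y')\leq 1/2$ (the paper's set $\Zgood$), using \cref{lem:truncated_density} on the good event and Markov's inequality on the bad one; the resulting constant $2e^2+3\leq 18$ matches. The only cosmetic difference is that you first derive the pointwise bound $P_2(x)\leq(2e^2+2)\Ccov(\pist)p(x)$ and then split on $\cXspan(\veps)$, whereas the paper carries the indicator $\indic\{x\in\cXspan\}$ through the intermediate steps — these are equivalent.
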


\subsubsection{Proof of Lemmas \ref*{lem:truncated_regret} through \ref*{lem:truncated_regret_simple}}

\begin{proof}[\pfref{lem:truncated_regret}]
To keep notation compact, in this proof we omit all dependence on the fixed
$x\in\cX$ (so that below, $J_\beta(\pist)$ refers to $J_\beta(\pist;x)$, $\pist$ refers to $\pist(x)$, etc.). We have
  \begin{align*}
    &\Jbeta(\pist) - \Jbeta(\pibtheta)\\
      &= \En_{y\sim\pist}\brk*{\rstar(y)}
    - \En_{y\sim\pibtheta}\brk*{\rstar(y)}
        - \beta\Dkl{\pist}{\piref} + \beta\Dkl{\pibtheta}{\piref}\\
    &\leq{} \En_{y'\sim\piref}\brk*{\En_{y\sim\pist}\brk*{\rstar(y)}
    - \En_{y\sim\pibtheta(\cdot\mid{}y')}\brk*{\rstar(y)}
        - \beta\Dkl{\pist}{\piref} +
      \beta\Dkl{\pibtheta(\cdot\mid{}y')}{\piref}},\\
    &=\Jbeta(\pist) - \Jbbeta(\pibtheta)
  \end{align*}
since, by convexity of KL-divergence,
\[\Dkl{\pibtheta}{\piref}=\Dkl{\En_{y'\sim\piref}\brk*{\pibtheta(\cdot\mid{}y')}}{\piref}
\leq{}
\En_{y'\sim\piref}\brk*{\Dkl{\pibtheta(\cdot\mid{}y')}{\piref}}.\] Under
\cref{ass:realizable}, we
can further write the quantity above as
\begin{align}
&  \En_{y'\sim\piref}\brk*{\En_{y\sim\pist}\brk*{\rstar(y)}
    - \En_{y\sim\pibtheta(\cdot\mid{}y')}\brk*{\rstar(y)}
        - \beta\Dkl{\pist}{\piref} +
                 \beta\Dkl{\pibtheta(\cdot\mid{}y')}{\piref}}\notag\\
  &= \En_{y'\sim\piref}\brk*{\En_{y\sim\pist}\brk*{\rstar(y)-\rstar(y')}
    - \En_{y\sim\pibtheta(\cdot\mid{}y')}\brk*{\rstar(y)-\rstar(y')}
        - \beta\Dkl{\pist}{\piref} +
    \beta\Dkl{\pibtheta(\cdot\mid{}y')}{\piref}}\notag\\
    &= \En_{y'\sim\piref}\brk*{\En_{y\sim\pist}\brk*{\tri*{\thetast,\phidel(y,y')}}
    - \En_{y\sim\pibtheta(\cdot\mid{}y')}\brk*{\tri*{\thetast,\phidel(y,y')}}
        - \beta\Dkl{\pist}{\piref} +
      \beta\Dkl{\pibtheta(\cdot\mid{}y')}{\piref}}\notag\\
      &\leq \En_{y'\sim\piref}\brk*{\En_{y\sim\pist}\brk*{\tri*{\thetast,\phidb(y,y')}}
    - \En_{y\sim\pibtheta(\cdot\mid{}y')}\brk*{\tri*{\thetast,\phidb(y,y')}}
        - \beta\Dkl{\pist}{\piref} +
        \beta\Dkl{\pibtheta(\cdot\mid{}y')}{\piref}}\notag\\
  &~~~~+\Rmax\prn*{\bbP_{y\sim\pist,y'\sim\piref}\brk*{\nrm*{\phidel(y,y')}_{\Sigma^{-1}}>\nu}
    + \bbP_{(y,y')\sim\pibtheta}\brk*{\nrm*{\phidel(y,y')}_{\Sigma^{-1}}>\nu}} \notag\\ 
    &\leq \En_{y'\sim\piref}\big[\En_{y\sim\pibst(\cdot\mid{}y')}\brk*{\tri*{\thetast,\phidb(y,y')}}
    - \En_{y\sim\pibtheta(\cdot\mid{}y')}\brk*{\tri*{\thetast,\phidb(y,y')}}
        \notag\\
        &\qquad\qquad- \beta\Dkl{\pibst(\cdot\mid{}y')}{\piref} +
        \beta\Dkl{\pibtheta(\cdot\mid{}y')}{\piref}\big]\notag\\
  &~~~~+\Rmax\prn*{\bbP_{y\sim\pist,y'\sim\piref}\brk*{\nrm*{\phidel(y,y')}_{\Sigma^{-1}}>\nu}
    + \bbP_{(y,y')\sim\pibtheta}\brk*{\nrm*{\phidel(y,y')}_{\Sigma^{-1}}>\nu}}, \label{eq:trunc_step0}
\end{align}
where the first inequality is by definition of $\phidb$, and the second is because for any fixed $y'$, we have
\begin{align*}
&  \En_{y\sim\pist}\brk*{\tri*{\thetast,\phidb(y,y')}}
                 - \beta\Dkl{\pist}{\piref}\\
  &\leq{} \max_{\pi:\cX\to\Delta(\cY)}\crl*{\En_{y\sim\pi}\brk*{\tri*{\thetast,\phidb(y,y')}}
    - \beta\Dkl{\pi}{\piref}}\\
  &=  \En_{y\sim\pibst(\cdot\mid{}y')}\brk*{\tri*{\thetast,\phidb(y,y')}}
                 - \beta\Dkl{\pibst(\cdot\mid{}y')}{\piref}.
\end{align*}
With this upper bound, for any fixed $y'$, we can interpret the
quantity 
\begin{align*}
  \En_{y\sim\pibst(\cdot\mid{}y')}\brk*{\tri*{\thetast,\phidb(y,y')}}
    - \En_{y\sim\pibtheta(\cdot\mid{}y')}\brk*{\tri*{\thetast,\phidb(y,y')}}
        - \beta\Dkl{\pibst(\cdot\mid{}y')}{\piref} +
        \beta\Dkl{\pibtheta(\cdot\mid{}y')}{\piref}
\end{align*}
in \cref{eq:trunc_step0} as the KL-regularized regret of $\pibtheta(\cdot\mid{}y')$ to
$\pibst(\cdot\mid{}y')$ under the reward $\tri*{\thetastar,\phidel(y,
  y')}$. Consequently, \cref{lem:kl_regret} allows us to bound this regret by
\begin{align}
  \beta\log\prn*{\En_{y\sim\pibtheta(\cdot\mid{}y')}\exp\prn*{\beta^{-1}\tri*{\thetastar-\theta,\phidb(y,y')}}}
                                                               +
                                                               \En_{y\sim\pibtheta(\cdot\mid{}y')}\brk*{\tri*{\theta-\thetastar,\phidb(y,y')}}.
\label{eq:beta-exp-lin}
\end{align}
  Note
  that for all $y,y'\in\cY$, under the condition on $\nu$ in the
  lemma statement,
  \begin{align}
    \label{eq:dot_bound}
    \abs*{\tri*{\theta-\thetastar,\phidb(y,y')}}
    \leq{} \nrm*{\phidb(y,y')}_{\Sigma^{-1}}\nrm*{\theta-\thetastar}_{\Sigma}\leq\nu\vepsstat\leq\beta,
  \end{align}
  since $\phidb(y,y')=\mb{0}$ if
  $\nrm*{\phidel(y,y')}_{\Sigma^{-1}}\leq\nu$ does not hold. Hence, using that $e^{z}\leq{}1+z+z^2$ for all $z\leq{}1$, we have
  \begin{align*}
&\beta\log\prn*{\En_{y\sim\pibtheta(\cdot\mid{}y')}\brk*{\exp\prn*{\beta^{-1}\tri*{\thetastar-\theta,\phidb(y,y')}}}}
    \\
    &\leq{}\beta\log\prn*{1 +
      \beta^{-1}\En_{y\sim\pibtheta(\cdot\mid{}y')}\brk*{\tri*{\thetastar-\theta,\phidb(y,y')}}+\beta^{-2}\En_{y\sim\pibtheta(\cdot\mid{}y')}\brk*{\tri*{\thetastar-\theta,\phidb(y,y')}^2}}\\
&\leq{}\En_{y\sim\pibtheta(\cdot\mid{}y')}\brk*{\tri*{\thetastar-\theta,\phidb(y,y')}}+\beta^{-1}\En_{y\sim\pibtheta(\cdot\mid{}y')}\brk*{\tri*{\thetastar-\theta,\phidb(y,y')}^2},
  \end{align*}
  which we can substitute into the bound from \cref{eq:beta-exp-lin} (cancelling out the linear term) to get the bound
  \begin{align*}
      &\En_{y\sim\pibst(\cdot\mid{}y')}\brk*{\tri*{\thetast,\phidb(y,y')}}
    - \En_{y\sim\pibtheta(\cdot\mid{}y')}\brk*{\tri*{\thetast,\phidb(y,y')}}
        - \beta\Dkl{\pibst(\cdot\mid{}y')}{\piref} +
        \beta\Dkl{\pibtheta(\cdot\mid{}y')}{\piref}\\
      &\leq \beta^{-1}\En_{y\sim\pibtheta(\cdot\mid{}y')}\brk*{\tri*{\thetastar-\theta,\phidb(y,y')}^2}.
  \end{align*}
  Since this holds uniformly for all $y'\in\cY$, returning to \cref{eq:trunc_step0}, we conclude that
  \begin{align*}
    \Jbeta(\pist) - \Jbeta(\pibtheta)
    &\leq{}
    \beta^{-1}\En_{(y,y')\sim\pibtheta}\brk*{\tri*{\thetastar-\theta,\phidb(y,y')}^2}\\
    &~~~~+ \Rmax\prn*{\bbP_{y\sim\pist,y'\sim\piref}\brk*{\nrm*{\phidel(y,y')}_{\Sigma^{-1}}>\nu}
    + \bbP_{(y,y')\sim\pibtheta}\brk*{\nrm*{\phidel(y,y')}_{\Sigma^{-1}}>\nu}}
  \end{align*}
  as claimed.
\end{proof}

\begin{proof}[\pfref{lem:truncated_density}]
  To keep notation compact, we again omit all dependence on $x$. Fix
  $y'\in\cY$. Then we have
  \begin{align*}
    \frac{\pibar_{\theta}(y\mid{}y')}{\piref(y)}
    = \frac{\exp\prn*{\beta^{-1}\tri*{\theta,\phidb(y,y')}}}{Z_{\pibtheta}(y')},
  \end{align*}
  where
  $Z_{\pibtheta}(y')\ldef{}\En_{y\sim\piref}\brk*{\exp\prn*{\beta^{-1}\tri*{\theta,\phidb(y,y')}}}$. Note
  that for all $y,y'\in\cY$, under the conditions in the lemma
  statement, we have
  \begin{align}
    \label{eq:dot_bound-2}
    \abs*{\tri*{\theta-\thetastar,\phidb(y,y')}}
    \leq{} \nrm*{\phidb(y,y')}_{\Sigma^{-1}}\nrm*{\theta-\thetastar}_{\Sigma}\leq\nu\vepsstat\leq\beta.
  \end{align}
We begin by giving a lower bound on the normalization constant
$Z_{\pibtheta}(y')$. Observe that
  \begin{align*}
    Z_{\pibtheta}(y')&\ldef{}\En_{y\sim\piref}\brk*{\exp\prn*{\beta^{-1}\tri*{\theta,\phidb(y,y')}}}\\
                 &\geq{}\En_{y\sim\piref}\brk*{\exp\prn*{\beta^{-1}\tri*{\theta,\phidb(y,y')}}\indic\crl*{\nrm*{\phidel(y,y')}_{\Sigma^{-1}}\leq\nu}}\\
                 &\geq{}e^{-1}\En_{y\sim\piref}\brk*{\exp\prn*{\beta^{-1}\tri*{\thetastar,\phidb(y,y')}}\indic\crl*{\nrm*{\phidel(y,y')}_{\Sigma^{-1}}\leq\nu}}\\
                     &=e^{-1}\En_{y\sim\piref}\brk*{\exp\prn*{\beta^{-1}\tri*{\thetastar,\phidel(y,y')}}\indic\crl*{\nrm*{\phidel(y,y')}_{\Sigma^{-1}}\leq\nu}},\\
&=e^{-1}\En_{y\sim\piref}\brk*{\exp\prn*{\beta^{-1}\tri*{\thetastar,\phi(y)}}\indic\crl*{\nrm*{\phidel(y,y')}_{\Sigma^{-1}}\leq\nu}}
                   \cdot\exp\prn*{-\beta^{-1}\tri*{\thetastar,\phi(y')}},
  \end{align*}
  where the second inequality uses \cref{eq:dot_bound-2} and the second-to-last
  equality uses the definition of the indicator. Now, define
  \[
    Z_{\pi_{\thetastar}}=\En_{y\sim\piref}\brk*{\exp\prn*{\beta^{-1}\tri*{\thetastar,\phi(y)}}}
  \]
  as the normalization constant for $\pi_{\thetastar}$. We can write
  \begin{align*}
    &\En_{y\sim\piref}\brk*{\exp\prn*{\beta^{-1}\tri*{\thetastar,\phi(y)}}\indic\crl*{\nrm*{\phidel(y,y')}_{\Sigma^{-1}}\leq\nu}}\\
    &=    Z_{\pi_{\thetastar}} - \En_{y\sim\piref}\brk*{\exp\prn*{\beta^{-1}\tri*{\thetastar,\phi(y)}}\indic\crl*{\nrm*{\phidel(y,y')}_{\Sigma^{-1}}>\nu}}.
  \end{align*}
  We can further bound
  \begin{align*}
    &\En_{y\sim\piref}\brk*{\exp\prn*{\beta^{-1}\tri*{\thetastar,\phi(y)}}\indic\crl*{\nrm*{\phidel(y,y')}_{\Sigma^{-1}}>\nu}}\\
    &=\En_{y\sim\piref}\brk*{\frac{\exp\prn*{\beta^{-1}\tri*{\thetastar,\phi(y)}}}{
      Z_{\pi_{\thetastar}}}\indic\crl*{\nrm*{\phidel(y,y')}_{\Sigma^{-1}}>\nu}}\cdot{}
      Z_{\pi_{\thetastar}}\\
    &=\En_{y\sim\pi_{\thetastar}}\brk*{\indic\crl*{\nrm*{\phidel(y,y')}_{\Sigma^{-1}}>\nu}}\cdot{}    Z_{\pi_{\thetastar}}.
  \end{align*}
  It follows that as long as
  $\vepsspan(x,y')\ldef{}\En_{y\sim\pi_{\thetastar}}\brk*{\indic\crl*{\nrm*{\phidel(y,y')}_{\Sigma^{-1}}>\nu}}\leq1/2$,
  we have
  \[
\En_{y\sim\piref}\brk*{\exp\prn*{\beta^{-1}\tri*{\thetastar\phi(y)}}\indic\crl*{\nrm*{\phidel(y,y')}_{\Sigma^{-1}}\leq\nu}}\geq{}\frac{1}{2}Z_{\pi_{\thetastar}}.
\]
Combining this with the preceding steps gives
\begin{align*}
      \frac{\pibar_{\theta}(y\mid{}y')}{\piref(y)}
  &\leq{}
  2e\cdot\frac{\exp\prn*{\beta^{-1}(\tri*{\theta,\phidb(y,y')}+\tri*{\thetastar,\phi(y')})}}{Z_{\pist}}\\
  &\leq{} 2e^2\cdot\frac{\exp\prn*{\beta^{-1}(\tri*{\thetastar,\phidb(y,y')}+\tri*{\thetastar,\phi(y')})}}{Z_{\pist}}
\end{align*}
where the second inequality is by \cref{eq:dot_bound-2}. To proceed, we consider two cases. First, if
$\tri*{\thetastar,\phidel(y,y')}\geq{}0$, then
\[
  \tri*{\thetastar,\phidb(y,y')}+\tri*{\thetastar,\phi(y')}
  \leq{} \tri*{\thetastar,\phidel(y,y')}+\tri*{\thetastar,\phi(y')}
  = \tri*{\thetastar,\phi(y)}.
\]
Otherwise,
\[
  \tri*{\thetastar,\phidb(y,y')}+\tri*{\thetastar,\phi(y')}
  \leq{} \tri*{\thetastar,\phi(y')}.
\]
Combining these cases gives
\begin{align*}
  \frac{\exp\prn*{\beta^{-1}(\tri*{\thetastar,\phidb(y,y')}+\tri*{\thetastar,\phi(y')})}}{Z_{\pist}}
  &\leq{}
    \max\crl*{\frac{\exp\prn*{\beta^{-1}\tri*{\thetastar,\phi(y)}}}{Z_{\pist}},
\frac{\exp\prn*{\beta^{-1}\tri*{\thetastar,\phi(y')}}}{Z_{\pist}}
    }\\
    &=
    \max\crl*{\frac{\pist(y)}{\piref(y)}, \frac{\pist(y')}{\piref(y')}
    }\\
&\leq{} \Ccov(\pist)
\end{align*}
which completes the proof.
\end{proof}

  \begin{proof}[\pfref{lem:truncated_regret_simple}]
  By \cref{lem:truncated_regret}, taking expectation over $x\sim\rho$, we have
  \begin{align*}
    \Jbeta(\pist) - \Jbeta(\pibtheta)
    &\leq\Jbeta(\pist) - \Jbbeta(\pibtheta)\\
  &\leq
    \beta^{-1}\En_{x\sim\rho, (y,y')\sim\pibtheta(x)}\brk*{\tri*{\thetastar-\theta,\phidb(x,y,y')}^2}\\
    &~~~~+
      \Rmax\bbP_{x\sim\rho,y\sim\pist(x),y'\sim\piref(x)}\brk*{\nrm*{\phidel(x,y,y')}_{\Sigma^{-1}}>\nu} \\
    &~~~~+ \Rmax\bbP_{x\sim\rho, (y,y')\sim\pibtheta(x)}\brk*{\nrm*{\phidel(x,y,y')}_{\Sigma^{-1}}>\nu}.
  \end{align*}
  We need to bound the second and third terms. Let $\veps>0$ be
  fixed, and let us abbreviate
  \[
\cXspan \equiv \cXspan(\veps) = 
\crl*{x\in\cX\mid{}\bbP_{(y,y')\sim\piref(x)}\brk*{\nrm*{\phidel(x,y,y')}_{\Sigma^{-1}}>\nu} \leq\veps}.
  \]
  For the second term, it is immediate that
  \begin{align*}
    &\bbP_{x\sim\rho,y\sim\pist(x),y'\sim\piref(x)}\brk*{\nrm*{\phidel(x,y,y')}_{\Sigma^{-1}}>\nu}\\
    &\leq{}
\En_{x\sim\rho}\brk*{\bbP_{y\sim\pist(x),y'\sim\piref(x)}\brk*{\nrm*{\phidel(x,y,y')}_{\Sigma^{-1}}>\nu}\indic\crl{x\in\cXspan}}
      + \bbP_{x\sim\rho}\brk*{x\notin\cXspan}\\
        &\leq{}
\Ccov(\pist)\cdot\En_{x\sim\rho}\brk*{\bbP_{(y,y')\sim\piref(x)}\brk*{\nrm*{\phidel(x,y,y')}_{\Sigma^{-1}}>\nu}\indic\crl{x\in\cXspan}}
      + \bbP_{x\sim\rho}\brk*{x\notin\cXspan}\\
    &\leq{} \Ccov(\pist)\cdot\veps + \bbP_{x\sim\rho}\brk*{x\notin\cXspan}.
  \end{align*}
  
  To handle the third term, define
  \[
    \Zgood\ldef{}\crl*{(x,y')\in\cX\times\cY\mid{}
      \bbP_{y\sim\pist(\cdot\mid{}x)}\brk*{\nrm*{\phidel(x,y,y')}_{\Sigma^{-1}}>\nu}\leq\frac{1}{2}}.
  \]
  We can bound
  \begin{align*}
    &\bbP_{x\sim\rho,
    (y,y')\sim\pibtheta(x)}\brk*{\nrm*{\phidel(x,y,y')}_{\Sigma^{-1}}>\nu}\\
    &= \bbP_{x\sim\rho,
      y'\sim\piref(\cdot\mid{}x),y'\sim\pibtheta(\cdot\mid{}x,y')}\brk*{\nrm*{\phidel(x,y,y')}_{\Sigma^{-1}}>\nu}\\
    &\leq \En_{x\sim\rho,
      y'\sim\piref(\cdot\mid{}x)}\brk*{\bbP_{y'\sim\pibtheta(\cdot\mid{}x,y')}\brk*{\nrm*{\phidel(x,y,y')}_{\Sigma^{-1}}>\nu}\indic\crl{(x,y')\in\Zgood,x\in\cXspan}} \\
&~~~~   + \En_{x\sim\rho,
     y'\sim\piref(\cdot\mid{}x)}\brk*{\indic\crl{(x,y')\notin\Zgood,
                                                                                                                                                                           x\in\cXspan}}
                                                                                                                                                                           + \bbP_{x\sim\rho}\brk*{x\notin\cXspan}.
  \end{align*}
  For the first term, \cref{lem:truncated_density} implies that when
  $(x,y')\in\Zgood$, $\pibtheta(y\mid{}x,y')\leq
  2e^2\Ccov(\pist)\cdot\piref(y\mid{}x)$ for all $y\in\cY$, so we can bound
  \begin{align*}
&    \En_{x\sim\rho,
    y'\sim\piref(\cdot\mid{}x)}\brk*{\bbP_{y'\sim\pibtheta(\cdot\mid{}x,y')}\brk*{\nrm*{\phidel(x,y,y')}_{\Sigma^{-1}}>\nu}\indic\crl{(x,y')\in\Zgood,
                   x\in\cXspan}}\\
    &\leq{}
      2e^2\Ccov(\pist)\cdot\En_{x\sim\rho}\brk*{\bbP_{(y,y')\sim\piref(x)}\brk*{\nrm*{\phidel(x,y,y')}_{\Sigma^{-1}}>\nu}\indic\crl*{x\in\cXspan}}\\
    &\leq{} 2e^2\Ccov(\pist)\cdot\veps.
  \end{align*}
  For the second term, we can use Markov's inequality to bound
  \begin{align*}
    \En_{x\sim\rho,
    y'\sim\piref(\cdot\mid{}x)}\brk*{\indic\crl{(x,y')\notin\Zgood,x\in\cXspan}}
    &\leq{} 2 \En_{x\sim\rho}\brk*{\bbP_{y\sim\pist(\cdot\mid{}x),y'\sim\piref(\cdot\mid{}x)}\brk*{\nrm*{\phidel(x,y,y')}_{\Sigma^{-1}}>\nu}\indic\crl*{x\in\cXspan}}\\
    &\leq{}
      2\Ccov(\pist)\cdot\En_{x\sim\rho}\brk*{\bbP_{(y,y')\sim\piref(x)}\brk*{\nrm*{\phidel(x,y,y')}_{\Sigma^{-1}}>\nu}\indic\crl*{x\in\cXspan}}\\
    &\leq{} 2\Ccov(\pist)\cdot\veps.
  \end{align*}
Combining the preceding bounds completes the proof.
\end{proof}

\subsection{Proof of \creftitle{thm:spanner} (Guarantee for \spanalg)}
\label{sec:spanner_proof}

In this section we prove \cref{thm:spanner}, restated below.

\spannermain*

Note that the high-probability bound is over the randomness of the policies $\pihat\ind{1},\dots,\pihat^{\Texp}$, but $\pihat$ is chosen uniformly from these; a true high-probability bound on $J_\beta(\pistarb)-J_\beta(\pihat)$ could be obtained by estimating each $J_\beta(\pihat\ind{t})$ and choosing $\pihat$ as the minimizer over $t\in[\Texp]$ (as we do in $\mtalg$), but we omit this extra complication here. We begin by proving a number of intermediate results. We then use these
results to prove \cref{thm:spanner} in \sssref{sec:fast_rate}.\loose

\subsubsection{Intermediate Guarantee for Spanner Construction}
In this section we give self-contained guarantees for
\cref{line:spanner_outer} of \cref{alg:spanner}, which aims to construct a \emph{spanner}: a
collection $\Psispan$ of tuples $(x,y,y')$ for which
$\sum_{(x,y,y')\in\Psispan}\phidel(x,y,y')\phidel(x,y,y')$ covers the
feature space as least as well as $(y,y')\sim\piref(\cdot\mid{}x)$.

Concretely, let $\Psispan$ denote the collection of all tuples $(x\ind{t},
y_1\ind{t,i}, y_1\ind{t,i})$ for which the if statement in
\cref{line:spanner_if} is triggered, so that
$\Sigmas=\lambda\Id + \sum_{(x,y,y')\in\Psispan}\phidel(x,y,y')\phidel(x,y,y')$
when the outer for loop completes. Our first lemma gives a bound on the size
of $\Psispan$.

\begin{lemma}
  \label{lem:spanner_size}
  Suppose that $\nu,\lambda \leq 1$. With probability $1$, we have
  \[
    |\Psispan| \lesssim 
\frac{d\log(1+\nu^{-1}\lambda^{-1})}{\nu^2}.
    \]
\end{lemma}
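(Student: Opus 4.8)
The bound on $|\Psispan|$ is a classic elliptic-potential argument. The key observation is that a tuple $(x\ind{t}, y_1\ind{t,i}, y_2\ind{t,i})$ gets added to $\Psispan$ only when the test $\nrm*{\phidel(x\ind{t},y_1\ind{t,i},y_2\ind{t,i})}_{\Sigmaspan^{-1}}>\nu$ in \cref{line:spanner_if} succeeds, and \emph{immediately after} adding it we update $\Sigmaspan\gets\Sigmaspan+\phidel\phidel^{\trn}$. So if we enumerate the elements of $\Psispan$ in the order they were added, say $v_1,v_2,\ldots,v_K$ with $K = |\Psispan|$, and let $V_k = \lambda\Id + \sum_{j<k} v_j v_j^{\trn}$ denote the matrix $\Sigmaspan$ just before $v_k$ is added, then by construction $\nrm*{v_k}_{V_k^{-1}} > \nu$ for every $k$, equivalently $\nrm*{v_k}_{V_k^{-1}}^2 \wedge 1 > \nu^2$ (using $\nu\leq 1$). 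Note also $\nrm*{v_k}_2 \leq \nrm*{\phi(x,y_1)}_2 + \nrm*{\phi(x,y_2)}_2 \leq 2$ by \cref{ass:norm}; we rescale $v_k \gets v_k/2$ and $\lambda\gets\lambda/4$ to meet the hypothesis $\nrm*{v_k}_2\leq 1$ of \cref{lem:elliptic_potential} (this only changes the threshold by a constant factor, absorbed into $\lesssim$).

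First I would invoke \cref{lem:elliptic_potential} with the (rescaled) vectors $v_1,\ldots,v_K$ and regularizer $\lambda$, which gives
\[
K\nu^2 \;<\; \sum_{k=1}^{K}\nrm*{v_k}_{V_k^{-1}}^2\wedge 1 \;\leq\; 2d\log\bigl(1+\lambda^{-1}K/d\bigr).
\]
Then I would solve this transcendental inequality for $K$. Rearranging gives $K \leq \frac{2d}{\nu^2}\log(1+\lambda^{-1}K/d)$, which is of the form $x \leq a\log(1+bx)$ with $x = K$, $a = 2d/\nu^2$, $b = \lambda^{-1}/d$. Applying \cref{lem:log_bound} (after checking $a,b\geq 3$, which holds for $d$ large enough and $\nu,\lambda$ small enough — in the regime of interest; otherwise the conclusion is vacuous or one handles the $O(1)$ case separately) yields $K \leq 4a\log(1+ab) = \frac{8d}{\nu^2}\log\bigl(1+\frac{2}{\nu^2\lambda}\bigr) \lesssim \frac{d\log(1+\nu^{-1}\lambda^{-1})}{\nu^2}$, using $\log(1+2/(\nu^2\lambda)) \lesssim \log(1+1/(\nu\lambda))$ up to constants since $\nu\leq 1$.

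The only subtlety — and the part requiring a little care — is the bookkeeping around the fact that $\Psispan$ grows \emph{across} the outer loop over $t$ and the inner loop over $i$, with at most one addition per value of $t$ (because of the \textbf{break}), but that does not matter for the argument: the elliptic potential bound only cares about the ordered sequence of vectors actually inserted and the fact that each insertion immediately updates $\Sigmaspan$. Since the statement is deterministic (``with probability $1$'') — the bound holds for any realization of the sampled responses and prompts — no concentration is needed; the proof is purely deterministic given the algorithm's update rule. The main obstacle is thus merely ensuring the constant-factor rescalings and the hypotheses of \cref{lem:log_bound} are handled cleanly, together with a remark that when $d$ is too small for $a,b\geq 3$ the bound $|\Psispan|\lesssim d\nu^{-2}\log(1+\nu^{-1}\lambda^{-1})$ can be recovered directly from the cruder estimate $K\nu^2 \leq 2d\log(1+\lambda^{-1}K/d)$ by noting $\log(1+z)\leq z$ fails to help but a two-step bound does.
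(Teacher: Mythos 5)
Your proposal is correct and follows essentially the same route as the paper: a deterministic elliptic-potential argument on the ordered sequence of inserted feature differences (each of which exceeds the threshold $\nu$ against the matrix just before insertion), followed by solving the resulting transcendental inequality via \cref{lem:log_bound}. The only cosmetic difference is that the paper re-derives the potential bound directly through the $\log\det$ telescoping (using $\nrm{\phidel}\leq 2$ explicitly), whereas you invoke \cref{lem:elliptic_potential} after a harmless rescaling; both yield the same inequality $K\nu^2 \lesssim d\log(1+\lambda^{-1}K/d)$ and the same conclusion.
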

\begin{proof}[\pfref{lem:spanner_size}] Order $\Psispan$ as  $\Psispan = \{(x\ind{1},
  y_1\ind{1},y_2\ind{1})\dots,(x\ind{k},y_1\ind{k},y_2\ind{k})\}$ and
  let \[\Gamma_j = \lambda \Id + \sum_{i=1}^j \phidel(x\ind{i},
  y_1\ind{i},y_2\ind{i})\phidel(x\ind{i},
  y_1\ind{i},y_2\ind{i})^\trn.\] We will bound $k\in\bbN$. From the standard elliptic potential lemma argument, we have
  \[
    \log\det \Gamma_k - \log\det \Gamma_0 \geq \sum_{j=1}^k \log\left(1 + \phidel(x\ind{j},
  y_1\ind{j},y_2\ind{j})^\trn (\Gamma_{j-1})^{-1} \phidel(x\ind{j},
  y_1\ind{j},y_2\ind{j})\right) \geq k\log(1 + \nu^2) \geq
\frac{k\nu^2}{2}.
\]
Moreover, $\nrm*{\phidel}\leq{}2$, $\log\det \Gamma_k \leq
d\log(\lambda+4k/d)$ (e.g., Lemma 10 of \citet{abbasi2011improved}), whereas $\log\det\Gamma_0 = d\log\lambda$. Hence, we have
\[k \leq \frac{2d\log(1+4k/(d\lambda))}{\nu^2},\]
and \cref{lem:log_bound} further implies that $k\approxleq{}
\frac{d\log(1+\nu^{-1}\lambda^{-1})}{\nu^2}$ as claimed.
  
\end{proof}

Our second lemma gives a guarantee on the quality of the spanner.

\begin{lemma}
  \label{lem:spanner_quality}
  Let $\delta\in(0,1)$ be fixed, and define
  \[
\vepsspan\ldef{}\frac{8\log(4\Tprompt\delta^{-1})}{\Tspan}.
  \]
  With probability at least $1-\delta$, \cref{alg:spanner} satisfies
  \begin{align*}
        \bbP_{x\sim\rho}\brk*{   \bbP_{
    (y,y')\sim\piref(\cdot\mid{}x)}\brk*{\nrm*{\phidel(x,y,y')}_{\Sigmas^{-1}}>\nu}
    >\vepsspan}
    \approxleq{}
\frac{d\log(1+\nu^{-1}\lambda^{-1})}{\Tprompt\nu^2} + \frac{\log(\delta^{-1})}{\Tprompt}.
  \end{align*}
  
\end{lemma}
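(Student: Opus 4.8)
The plan is to analyze the inner loop of the spanner construction phase (\cref{line:spanner} of \cref{alg:spanner}) as a sequence of Bernoulli trials, one per prompt round $t$. Fix a round $t$ with prompt $x\ind{t}$, and condition on the current matrix $\Sigmaspan$ at the start of that round (which is $\filt\ind{t-1}$-measurable). Define $p\ind{t} \ldef \bbP_{(y,y')\sim\piref(\cdot\mid{}x\ind{t})}\brk*{\nrm*{\phidel(x\ind{t},y,y')}_{\Sigmaspan^{-1}}>\nu}$. The key observation is: if the event $\crl*{\bbP_{(y,y')\sim\piref(\cdot\mid{}x)}\brk*{\nrm*{\phidel(x,y,y')}_{\Sigmaspan^{-1}}>\nu} >\vepsspan}$ holds for $x = x\ind{t}$ (with $\Sigmaspan$ the matrix at the \emph{end} of the run, which only grows, so using the matrix at the start is a valid upper bound on this probability), then $p\ind{t} > \vepsspan$, and hence with the $\Tspan$ independent draws $(y_1\ind{t,i},y_2\ind{t,i})\sim\piref(\cdot\mid{}x\ind{t})$ in \cref{line:spanner}, the probability that \emph{none} of them triggers the if-statement in \cref{line:spanner_if} is at most $(1-\vepsspan)^{\Tspan}\leq e^{-\vepsspan\Tspan} \leq \delta/(4\Tprompt)$ by the choice $\vepsspan = \frac{8\log(4\Tprompt\delta^{-1})}{\Tspan}$. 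So, by a union bound over $t\in\brk{\Tprompt}$, with probability at least $1-\delta/4$ we have: for every round $t$ such that $p\ind{t} > \vepsspan$ (evaluated at the start-of-round matrix), the if-statement triggers and a tuple is added to $\Psispan$, increasing $\eigmax$-type potential. Call this event $\cE_1$.

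Next I would control, on $\cE_1$, how many rounds $t$ can have $p\ind{t}>\vepsspan$. Each such round adds exactly one tuple $\phidel(x\ind{t},y_1\ind{t,i},y_2\ind{t,i})$ with $\nrm*{\phidel(x\ind{t},y_1\ind{t,i},y_2\ind{t,i})}_{\Sigmaspan^{-1}}>\nu$ to $\Sigmaspan$. By \cref{lem:spanner_size}, the total number of tuples in $\Psispan$ is at most $O\prn*{\frac{d\log(1+\nu^{-1}\lambda^{-1})}{\nu^2}}$ with probability $1$; in particular the number of "bad" rounds (rounds with $p\ind{t}>\vepsspan$) is bounded by $N_0 \ldef O\prn*{\frac{d\log(1+\nu^{-1}\lambda^{-1})}{\nu^2}}$ deterministically. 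Now I want to turn this count into a statement about $\bbP_{x\sim\rho}$. The prompts $x\ind{1},\dots,x\ind{\Tprompt}$ are i.i.d.\ from $\rho$, but the matrix $\Sigmaspan$ at round $t$ depends on the past, so "$p\ind{t}>\vepsspan$" is a data-dependent predicate. To handle this cleanly, I would note the monotonicity $\Sigmaspan^{(1)} \psdleq \Sigmaspan^{(2)} \psdleq \cdots$ (the matrix only grows), so $\nrm*{\phidel}_{\Sigmaspan^{-1}}$ only decreases over time; hence for the \emph{final} matrix $\Sigmaspan = \Sigmaspan^{(\mathrm{final})}$, the set $\cXspan^{c} \ldef \crl*{x : \bbP_{(y,y')\sim\piref(x)}\brk*{\nrm*{\phidel(x,y,y')}_{\Sigmaspan^{-1}}>\nu}>\vepsspan}$ is a fixed (data-dependent) set, and I want to bound $\bbP_{x\sim\rho}\brk*{x\in\cXspan^c}$.

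The cleanest route for the last step is a stopping-time / martingale argument: consider the filtration generated by the rounds, and define a "freeze" time after which, if $\Sigmaspan$ never changes again, all remaining prompts with $p>\vepsspan$ would contribute to $\Psispan$—but $|\Psispan|$ is bounded, contradiction if too many arrive. Concretely, I would argue: on $\cE_1$, suppose for contradiction that $\bbP_{x\sim\rho}\brk*{x\in\cXspan^c} > c\cdot\prn*{\frac{d\log(1+\nu^{-1}\lambda^{-1})}{\Tprompt\nu^2} + \frac{\log(\delta^{-1})}{\Tprompt}}$ for an appropriate constant; then among the $\Tprompt$ i.i.d.\ prompts, by a Chernoff/multiplicative-Freedman bound (\cref{lem:mult_freedman}) at least $\Omega\prn*{\frac{d\log(1+\nu^{-1}\lambda^{-1})}{\nu^2} + \log(\delta^{-1})}$ of them land in $\cXspan^c$ with probability $\geq 1-\delta/4$; call this event $\cE_2$. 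But each such prompt $x\ind{t}\in\cXspan^c$ has $p\ind{t}$ evaluated at the \emph{final} matrix exceeding $\vepsspan$, hence at the \emph{start-of-round} matrix also exceeding $\vepsspan$ (by monotonicity), so on $\cE_1$ it triggers the if-statement and adds a fresh tuple to $\Psispan$; this forces $|\Psispan| \gtrsim \frac{d\log(1+\nu^{-1}\lambda^{-1})}{\nu^2} + \log(\delta^{-1})$, and choosing the constant in the statement large enough relative to the constant in \cref{lem:spanner_size} yields a contradiction. Taking a union bound over $\cE_1$ and $\cE_2$ gives the claim with probability at least $1-\delta/2\geq 1-\delta$; adjusting constants handles the $\delta$ vs.\ $\delta/2$ bookkeeping.

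The main obstacle I anticipate is the measurability/adaptivity issue in the last step: the predicate "$x\ind{t}\in\cXspan^c$" depends on the \emph{final} matrix $\Sigmaspan$, which is not known at round $t$, so I cannot directly apply an i.i.d.\ concentration bound to the indicators $\indic\crl*{x\ind{t}\in\cXspan^c}$. The monotonicity of $\Sigmaspan$ is the crucial structural fact that lets me replace the final-matrix predicate with the (smaller, adapted) start-of-round predicate $\indic\crl*{p\ind{t}>\vepsspan}$ for the purpose of the triggering argument, while keeping the final-matrix predicate only inside the probability $\bbP_{x\sim\rho}$ that I ultimately want to bound. I would need to be careful to phrase the contradiction argument so that the concentration bound (\cref{lem:mult_freedman}) is applied to the right sequence—either to $\indic\crl*{x\ind{t}\in\cXspan^c}$ against a \emph{fixed post-hoc} set (valid since the $x\ind{t}$ are i.i.d.\ and independent of nothing problematic once we condition on the final $\Sigmaspan$—actually this requires care, since $\Sigmaspan$ depends on the $x\ind{t}$'s), or more safely to the adapted indicators $\indic\crl*{p\ind{t}>\vepsspan}$ via Freedman. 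The safe version is: lower-bound $\sum_t \indic\crl*{p\ind{t}>\vepsspan}$ (adapted, so Freedman applies) by relating $\En_{t-1}\brk*{\indic\crl*{p\ind{t}>\vepsspan}} = \bbP_{x\sim\rho}\brk*{p\text{ at start-of-round}>\vepsspan} \geq \bbP_{x\sim\rho}\brk*{x\in\cXspan^c}$, using monotonicity once more. I expect this to go through with the elliptic-potential bound from \cref{lem:spanner_size} as the workhorse for the deterministic cap on $|\Psispan|$.
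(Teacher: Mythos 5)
Your proposal is correct and follows essentially the same route as the paper's proof: the same use of monotonicity of $\Sigmaspan$ to compare the final-matrix predicate with the start-of-round one, the same deterministic cap on the number of ``bad'' rounds via $|\Psispan|$ and \cref{lem:spanner_size}, and the same final application of \cref{lem:mult_freedman} to the adapted indicators $\indic\crl{p\ind{t}>\vepsspan}$ over the prompt sequence. The only difference is cosmetic: you establish the per-round claim ``start-of-round failure probability above $\vepsspan$ implies the if-statement triggers'' by a direct geometric bound $(1-\vepsspan)^{\Tspan}\leq e^{-\vepsspan\Tspan}\leq\delta/(4\Tprompt)$, whereas the paper obtains the same implication by running \cref{lem:mult_freedman} over the inner loop up to the stopping time $i_t$; both are valid.
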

\begin{proof}[\pfref{lem:spanner_quality}]
    Let $\Sigmas\ind{t}$ and $\Psispan\ind{t}$ denote the value of
  $\Sigmas$ and $\Psispan$ at the beginning
  of the iteration $t$ of the for loop in
  \cref{line:spanner_outer}. For each $t\in\brk{\Tprompt}$, let
  $i_t$ denote the first index $i$ such that the if statement in
  \cref{line:spanner_if} is triggered, and let $i_t=\Tspan$
  otherwise. Using \cref{lem:mult_freedman} and a union bound, we have that with
  probability at least $1-\delta/2$, for all $t\in\brk{\Tprompt}$,
  \begin{align*}
    \bbP_{
    (y,y')\sim\piref(\cdot\mid{}x\ind{t})}\brk*{\nrm*{\phidel(x,y,y')}_{(\Sigmas\ind{t})^{-1}}>\nu}
    &\leq{}
\frac{2}{i_t}\sum_{i=1}^{i_t}\indic\crl*{\nrm*{\phidel(x\ind{t},y\ind{t,i},y\ind{t,i})}_{(\Sigmas\ind{t})^{-1}}>\nu}
      + \frac{8\log(4\Tprompt\delta^{-1})}{i_t}\\
        &\leq{}
2\indic\crl[\Big]{\exists i : \nrm*{\phidel(x\ind{t},y\ind{t,i},y\ind{t,i})}_{(\Sigmas\ind{t})^{-1}}>\nu}
          + \frac{8\log(4\Tprompt\delta^{-1})}{i_t}.
  \end{align*}
If
$\nrm*{\phidel(x\ind{t},y\ind{t,i},y\ind{t,i})}_{(\Sigmas\ind{t})^{-1}}\leq\nu$
for all $i$, then $i_t=\Tspan$, and consequently the right-hand side above is
bounded by
$\vepsspan\ldef{}\frac{8\log(4\Tprompt\delta^{-1})}{\Tspan}$. It
follows that under the concentration event above, we have that for all $t\in\brk{\Tprompt}$,
  \begin{align*}
\indic\crl*{    \bbP_{
    (y,y')\sim\piref(\cdot\mid{}x\ind{t})}\brk*{\nrm*{\phidel(x,y,y')}_{(\Sigmas\ind{t})^{-1}}>\nu}
    >\vepsspan}
    \leq{} \indic\crl[\Big]{\exists i : \nrm*{\phidel(x\ind{t},y\ind{t,i},y\ind{t,i})}_{(\Sigmas\ind{t})^{-1}}>\nu}.
  \end{align*}
  Now, define
  \[
    p\ind{t} = \bbP_{x\sim\rho}\brk*{   \bbP_{
    (y,y')\sim\piref(\cdot\mid{}x)}\brk*{\nrm*{\phidel(x,y,y')}_{(\Sigmas\ind{t})^{-1}}>\nu}
    >\vepsspan}.
\]
  Then $p\ind{\Tprompt+1}\leq{}p\ind{\Tprompt}\leq{}\ldots{}p\ind{1}$, so
  \[
    \bbP_{x\sim\rho}\brk*{   \bbP_{
    (y,y')\sim\piref(\cdot\mid{}x)}\brk*{\nrm*{\phidel(x,y,y')}_{\Sigmas^{-1}}>\nu}
    >\vepsspan}
= p\ind{\Tprompt+1}
    \leq\frac{1}{\Tprompt}\sum_{t=1}^{\Tprompt}p\ind{t}.
  \]
  Since $\Sigmaspan\ind{t}$ does not depend on $x\ind{t}$, \cref{lem:mult_freedman} implies that with probability at least
  $1-\delta/2$,
  \begin{align*}
    \sum_{t=1}^{\Tprompt}p\ind{t}
    &\leq{} 2\sum_{t=1}^{\Tprompt}\indic\crl*{    \bbP_{
    (y,y')\sim\piref(\cdot\mid{}x\ind{t})}\brk*{\nrm*{\phidel(x,y,y')}_{(\Sigmas\ind{t})^{-1}}>\nu}
    >\vepsspan}
      + 8\log(4\delta^{-1})\\
        &\leq{} 2\sum_{t=1}^{\Tprompt}\indic\crl[\Big]{\exists i : \nrm*{\phidel(x\ind{t},y\ind{t,i},y\ind{t,i})}_{(\Sigmas\ind{t})^{-1}}>\nu}
          + 8\log(4\delta^{-1})\\
        &\leq{} 2\abs*{\Psispan\ind{\Tprompt+1}}
    + 8\log(4\delta^{-1}).
  \end{align*}
    From here, the result follows from \cref{lem:spanner_size}.

\end{proof}

\subsubsection{Proof of \creftitle{thm:spanner}}
\label{sec:fast_rate}

\begin{proof}[Proof of \cref{thm:spanner}]
  Recall that we define $\vepsstat\ldef
  c\cdot\sqrt{d\Rmax^2\log(B\Rmax^{-1}\delta^{-1}\nexp)}$ for a
  sufficiently large absolute constant $c>0$, and use the parameter settings
  $\lambda\gets{}(\nicefrac{\Rmax}{B})^2$, 
  $\nu\ldef{}\beta/\vepsstat$, $\Mrej\ldef{}8e^2\cdot\Ccov(\pist)$, and
  $\deltarej\ldef{}\nexp^{-1}$. We will show that under these
  settings, for any choice
  of $\Tprompt$, $\nspan$, and $\Texp$, we have
    that with probability at least $1-\delta$, 
    \[      
    \En_{t\sim\unif\prn*{\brk{\nexp}}}\brk*{\Jbeta(\pist) - \Jbeta(\pihat\ind{t})}
    \approxleq{}
    \frac{\Rmax^2}{\beta}\cdot\bigoht\prn*{\frac{d^2 \log^2
    \prn*{\delta^{-1}}}{\nexp} + 
  \frac{
    d^2\Rmax^2\log^2(\delta^{-1})}{\beta^2\Tprompt}
  + \frac{\Ccov(\pist)\cdot\log(\delta^{-1})}{\Tspan}}.
  \]
  We will use this to give bounds on $\Tdatafull$ and $\Tsamplefull$
  at the end of the proof.

  \paragraph{Preliminaries: Least squares}
  We begin with some preliminary
  observations. First, for each $t\in\brk{\nexp}$, define
  \[
    \Sigmafull\ind{t}=\lambda\Id + \sum_{(x,y_1,y_2)\in\Psispan}\phidel(x\ind{t},y_1,y_2) \phidel(x\ind{t},y_1,y_2)^{\trn}
  + \sum_{i<t}\phidel(x\ind{t},y_1\ind{t},y_2\ind{t})
  \phidel(x\ind{t},y_1\ind{t},y_2\ind{t})^{\trn}
\]
and $\Sigmaexp\ind{t}=\lambda\Id + \sum_{i<t}\phidel(x\ind{t},y_1\ind{t},y_2\ind{t})   \phidel(x\ind{t},y_1\ind{t},y_2\ind{t})^{\trn}$.

  We invoke \cref{lem:least_squares}, which implies that for
  the choice of $\lambda$ in \cref{line:spanner_params}, we are guaranteed
  that with probability at least $1-\delta/3$, for all $t\in\brk{\nexp}$,
  \begin{align}
    \nrm*{\theta\ind{t}-\thetastar}_{\Sigmafull\ind{t}}^2
  \leq{} \underbrace{c\cdot{}d\Rmax^2\log(B\Rmax^{-1}\delta^{-1}\nexp)}_{\rdef\vepsstat^2}.
  \end{align}
  for an absolute constant $c>0$. We denote this event by
  $\cEconc$ and condition on it going forward. In particular, under
  this event, we have
  \begin{align}
    \label{eq:spanner_ls}
    \nrm*{\theta\ind{t}-\thetastar}_{\Sigmas}^2 \leq \vepsstat^2,\mathand
        \nrm*{\theta\ind{t}-\thetastar}_{\Sigmaexp\ind{t}}^2 \leq \vepsstat^2.
  \end{align}

  \paragraph{Preliminaries: Truncated policies}
  Next, recall that we define
  \[
        r\ind{t}(x,y,y')\ldef{}
        \tri*{\theta\ind{t},\phidel(x,y,y')}\indic\crl[\big]{\nrm*{\phidel(x,y,y')}_{\Sigmas^{-1}}\leq\nu}
        =\tri*{\theta\ind{t},\phidb(x,y,y')}
  \]
  in \cref{line:spanner_reward}, where
  $\phidb(x,y,y')\ldef\phidel(x,y,y')\indic\crl[\big]{\nrm*{\phidel(x,y,y')}_{\Sigmas^{-1}}\leq\nu}$. It
    will be helpful to define some
    intermediate policies. 
    First, define 
  \[
    \pi\ind{t}(y\mid{}x,
    y')\propto\piref(y\mid{}x)\exp\prn*{\beta^{-1}r\ind{t}(x,y, y')}
    = \piref(y\mid{}x)\exp\prn*{\beta^{-1}\tri*{\theta\ind{t},\phidb(x,y, y')}}
  \]
  be the softmax policy induced by
  $r\ind{t}(\cdot,\cdot,y')$. Clearly, we have
  \[
    \pi\ind{t}(y\mid{}x,y') = \pibar_{\theta\ind{t}}(y\mid{}x,y'),
  \]
  where $\pibar_{\theta}(y\mid{}x,y')$ is the truncated softmax
  policy defined in \cref{sec:spanner_regret} for parameters $\Sigmaspan$
  and $\nu$. We further define
  \begin{align*}
    \pi\ind{t}(y, y'\mid{}x)\ldef \pibar_{\theta\ind{t}}(y,y'\mid{}x)
    =
    \pibar_{\theta\ind{t}}(y\mid{}x, y')\cdot \piref(y'\mid{}x)
  \end{align*}
  as the joint distribution over $(y,y')$ induced by sampling
  $y'\sim\piref(\cdot\mid{}x)$ and
  $y\sim\pi\ind{t}(\cdot\mid{}x,y')$, and define
  $\pi\ind{t}(y\mid{}x)=\pibar_{\theta\ind{t}}(y\mid{}x)\ldef\En_{y'\sim\piref(\cdot\mid{}x)}\brk{\pi\ind{t}(y\mid{}x,y')}$
  as the induced ``marginal'' policy over $y$.

  Note that by definition of $\nu$, whenever $\cEconc$ holds, we have
  \begin{align}
    \abs*{r\ind{t}(x,y,y')}
    &\leq{}
    \nrm*{\theta\ind{t}-\thetastar}_{\Sigmaspan}\nrm*{\phidb(x,y,y')}_{\Sigmaspan^{-1}}
    + \abs*{\tri*{\thetastar,\phidel(x,y,y')}}\notag\\
    &\leq{} \nu\vepsstat + \Rmax
    \leq{} \beta + \Rmax \leq 2\Rmax.     \label{eq:r_bound}
  \end{align}

  \paragraph{Preliminaries: Spanner construction}
  Define
  \[
\vepsspan\ldef{}\frac{8\log(12\Tprompt\delta^{-1})}{\Tspan}.
  \]
  \cref{lem:spanner_quality} implies that with probability at least
  $1-\delta/3$,
  \begin{align}
    \label{eq:spanner_bound}
       \bbP_{x\sim\rho}\brk*{   \bbP_{
    (y,y')\sim\piref(\cdot\mid{}x)}\brk*{\nrm*{\phidel(x,y,y')}_{\Sigmas^{-1}}>\nu}
    >\vepsspan} \leq \vepsprompt
  \end{align}
  for
  \[
\vepsprompt
    \approxleq{}
\frac{d\log(1+\nu^{-1}\lambda^{-1})}{\Tprompt\nu^2} + \frac{\log(\delta^{-1})}{\Tprompt}.
  \]
We denote this event by $\cEspan$ and condition on it going
forward. It will be convenient to define
\[
  \cXspan \ldef{} \crl*{x\in\cX \mid \bbP_{
    (y,y')\sim\piref(\cdot\mid{}x)}\brk*{\nrm*{\phidel(x,y,y')}_{\Sigmas^{-1}}>\nu}
  \leq\vepsspan}
\]
so that \cref{eq:spanner_bound} can be equivalently written as
$\bbP_{x\sim\rho}\brk*{x\notin\cXspan}\leq\vepsprompt$ under this event.

  \paragraph{Preliminaries: Rejection sampling}
We define
  \[
    \pihat\ind{t}(\cdot\mid{}x, y')
    \ldef{}
    \rejection_{\beta,\Mrej,\deltarej}(r\ind{t}(\cdot,\cdot, y')\midsem x, \piref)
  \]
  denote the distribution over $y_1\ind{t}$ in
  \cref{line:spanner_sampling} (given $x\ind{t}=x$ and $y_2\ind{t}=y'$), which aims to
  approximate $\pibar_{\theta\ind{t}}(\cdot\mid{}x,y')$, and define
  \[
\pihat\ind{t}(y,y'\mid{}x)
\]
as the law of  $y'\sim\piref(\cdot\mid{}x)$ and
$y\sim\rejection_{\beta,\Mrej,\deltarej}(\rbar\ind{t}(\cdot,\cdot,
y')\midsem x, \piref)$, using $\pihat\ind{t}(y\mid{}x)=\En_{y'\sim\piref(\cdot\mid{}x)}\brk*{\pihat\ind{t}(y\mid{}x,y')}$ to denote the
marginal. 

Define 
  \[
    \cZgood\ldef{}\crl*{(x,y')\in\cX\times\cY\mid{}
      \bbP_{y\sim\pist(\cdot\mid{}x)}\brk*{\nrm*{\phidel(x,y,y')}_{\Sigmaspan^{-1}}>\nu}\leq\frac{1}{2}}.
  \]
By \cref{lem:truncated_density}, we
  have that under $\cEconc$,
  \begin{align}
    \Ccov(\pibar_{\theta\ind{t}}(\cdot\mid{}x,y'))\leq{}2e^2\Ccov(\pist)
    \quad\text{for all $(x,y')\in\cZgood$,}\label{eq:zgood}
  \end{align}
  so \cref{thm:rejection}
  implies that for the choice for $\Mrej$ in
  \cref{line:rejection_params_spanner}, we have
  \begin{align}
    \label{eq:rejection_tv}
    \Dtv{\pihat\ind{t}(\cdot\mid{}x,y')}{\pi\ind{t}(\cdot\mid{}x,y')}\leq\deltarej
  \end{align}
  for all $(x,y')\in\cZgood$. We can further derive the following
  consequence.
  \begin{lemma}
    \label{lem:rejection_tv_average}
    Under the event $\cEconc$, for any function $f(x,y,y')\in\brk{0,1}$,
    \begin{align*}
      \abs*{\En_{x\sim\rho,(y,y')\sim\pi\ind{t}}\brk*{f(x,y,y')}-
      \En_{x\sim\rho,(y,y')\sim\pihat\ind{t}}\brk*{f(x,y,y')}}%
            &\leq{} \deltarej + 2\Ccov(\pist)\cdot\vepsspan + \vepsprompt.
    \end{align*}
    
  \end{lemma}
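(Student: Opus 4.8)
The plan is to reduce the claim to the conditional rejection-sampling guarantee by viewing the pair $(x,y')$ as an enlarged ``context.'' Fix $t\in[\nexp]$ and work on the events $\cEconc$ and $\cEspan$. Recall that $\pi\ind{t}(y,y'\mid x)=\pibar_{\theta\ind{t}}(y\mid x,y')\,\piref(y'\mid x)$ and $\pihat\ind{t}(y,y'\mid x)$ is the law of $y'\sim\piref(\cdot\mid x)$ followed by $y\sim\rejection_{\beta,\Mrej,\deltarej}(r\ind{t}(\cdot,\cdot,y')\midsem x,\piref)$. Hence both expectations in the statement can be rewritten over the joint law of $(x,y')$ with $x\sim\rho$, $y'\sim\piref(\cdot\mid x)$, after which what remains is exactly the setting of \cref{lem:rejection_tv_average_new} (or, equivalently, a direct application of \cref{thm:rejection}), with ``context'' $(x,y')$, target $\pi_f(\cdot\mid x,y')=\pibar_{\theta\ind{t}}(\cdot\mid x,y')$ corresponding to $f(x,\cdot)\leftarrow r\ind{t}(x,\cdot,y')$, and approximation $\pihat\ind{t}(\cdot\mid x,y')$ produced by $\rejection$ with threshold $\Mrej$ and failure probability $\deltarej$. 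Note $\Ccond(\pibar_{\theta\ind{t}}(\cdot\mid x,y')\mid x,y')=\sup_{y}\frac{\pibar_{\theta\ind{t}}(y\mid x,y')}{\piref(y\mid x)}=\Ccov(\pibar_{\theta\ind{t}}(\cdot\mid x,y'))$, so this step gives
\[
\abs*{\En_{x\sim\rho,(y,y')\sim\pi\ind{t}}\brk*{f(x,y,y')}-\En_{x\sim\rho,(y,y')\sim\pihat\ind{t}}\brk*{f(x,y,y')}}\leq\deltarej+\bbP_{x\sim\rho,y'\sim\piref(\cdot\mid x)}\brk*{\Mrej<4\,\Ccov(\pibar_{\theta\ind{t}}(\cdot\mid x,y'))}.
\]

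Next I would bound the probability on the right. By \cref{eq:zgood}, under $\cEconc$ we have $\Ccov(\pibar_{\theta\ind{t}}(\cdot\mid x,y'))\leq 2e^2\Ccov(\pist)$ for every $(x,y')\in\cZgood$; since $\Mrej=8e^2\Ccov(\pist)$, the event $\{\Mrej<4\Ccov(\pibar_{\theta\ind{t}}(\cdot\mid x,y'))\}$ cannot occur on $\cZgood$, so it is contained in $\{(x,y')\notin\cZgood\}$. It therefore suffices to show $\bbP_{x\sim\rho,y'\sim\piref(\cdot\mid x)}\brk*{(x,y')\notin\cZgood}\leq 2\Ccov(\pist)\cdot\vepsspan+\vepsprompt$. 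For a fixed $x\in\cXspan$, Markov's inequality gives $\bbP_{y'\sim\piref(\cdot\mid x)}\brk*{(x,y')\notin\cZgood}\leq 2\,\bbP_{y\sim\pist(\cdot\mid x),y'\sim\piref(\cdot\mid x)}\brk*{\nrm*{\phidel(x,y,y')}_{\Sigmas^{-1}}>\nu}$, and a change of measure from $\pist$ to $\piref$ (bounding the density ratio by $\Ccov(\pist)$, exactly as in the ``second term'' computation inside the proof of \cref{lem:truncated_regret_simple}) bounds this by $2\Ccov(\pist)\cdot\bbP_{(y,y')\sim\piref(x)}\brk*{\nrm*{\phidel(x,y,y')}_{\Sigmas^{-1}}>\nu}\leq 2\Ccov(\pist)\cdot\vepsspan$, where the last step uses the definition of $\cXspan$. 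Taking the trivial bound of $1$ for $x\notin\cXspan$ and using $\bbP_{x\sim\rho}\brk*{x\notin\cXspan}\leq\vepsprompt$ (the event $\cEspan$, \cref{eq:spanner_bound}) yields the desired bound.

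Combining the two displays proves the lemma: $\deltarej+2\Ccov(\pist)\cdot\vepsspan+\vepsprompt$. The only delicate point is the bookkeeping around the nested conditional sampling — one must be careful that the rejection-sampling accuracy guarantee of \cref{thm:rejection} only holds conditionally on $(x,y')\in\cZgood$ (where the relevant density ratio is controlled by $\Ccov(\pist)$ rather than the generically much larger $\Ccov(\pibar_{\theta\ind{t}})$), so the remaining mass has to be absorbed into the $\cZgood$-complement term; everything else is a routine application of results already established ($\cref{lem:rejection_tv_average_new}$/$\cref{thm:rejection}$, $\cref{lem:truncated_density}$ via $\cref{eq:zgood}$, and $\cref{lem:spanner_quality}$ via $\cref{eq:spanner_bound}$).
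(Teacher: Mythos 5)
Your proposal is correct and follows essentially the same route as the paper's proof: the paper decomposes the difference over the events $\{(x,y')\in\cZgood, x\in\cXspan\}$ and its complements, uses the TV bound \eqref{eq:rejection_tv} (equivalently, the rejection-sampling guarantee you invoke via \cref{lem:rejection_tv_average_new}) on the good event to get $\deltarej$, and then bounds $\bbP[(x,y')\notin\cZgood]$ by exactly your Markov-plus-change-of-measure argument, with $\bbP_{x\sim\rho}[x\notin\cXspan]\leq\vepsprompt$ absorbing the rest. The only cosmetic difference is that you route the first step through the generic change-of-measure lemma and the $\Mrej<4\Ccov(\pibar_{\theta\ind{t}}(\cdot\mid x,y'))$ event rather than citing \eqref{eq:rejection_tv} directly; the substance is identical.
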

  \begin{proof}[\pfref{lem:rejection_tv_average}]
By \cref{eq:rejection_tv}, we can bound
    \begin{align*}
      &\abs*{\En_{x\sim\rho,(y,y')\sim\pi\ind{t}}\brk*{f(x,y,y')}-
      \En_{x\sim\rho,(y,y')\sim\pihat\ind{t}}\brk*{f(x,y,y')}}\\
      &\leq{}
        \En_{x\sim\rho,y'\sim\piref(\cdot\mid{}x)}\brk*{\abs*{\En_{y\sim\pi\ind{t}(\cdot\mid{}x,y')}\brk*{f(x,y,y')}-
        \En_{y\sim\pihat\ind{t}(\cdot\mid{}x,y')}\brk*{f(x,y,y')}}\indic\crl*{(x,y')\in\cZgood,x\in\cXspan}}\\
      &~~~~+
        \bbP_{x\sim\rho,y'\sim\piref(\cdot\mid{}x)}\brk*{\indic\crl*{(x,y')\notin\cZgood}}
      + \bbP_{x\sim\rho}\brk*{x\notin\cXspan}\\
      &\leq{}\deltarej +
        \En_{x\sim\rho,y'\sim\piref(\cdot\mid{}x)}\brk*{\indic\crl*{(x,y')\notin\cZgood,
        x\in\cXspan}} + \vepsprompt.
    \end{align*}
    Using Markov's inequality, we can further bound
    \begin{align*}
      \En_{x\sim\rho,y'\sim\piref(\cdot\mid{}x)}\brk*{\indic\crl*{(x,y')\notin\cZgood,
      x\in\cXspan}} 
      &\leq{} 2
        \En_{x\sim\rho}\brk*{\bbP_{y'\sim\piref(\cdot\mid{}x),y\sim\pist(x)}\brk*{\nrm*{\phidel(x,y,y')}_{\Sigmaspan^{-1}}>\nu}\indic\crl*{x\in\cXspan}}\\
            &\leq{} 2\Ccov(\pist)
\En_{x\sim\rho}\brk*{\bbP_{(y,y')\sim\piref(\cdot\mid{}x)}\brk*{\nrm*{\phidel(x,y,y')}_{\Sigmaspan^{-1}}>\nu}\indic\crl*{x\in\cXspan}}\\
      &\leq{} 2\Ccov(\pist) \cdot\vepsspan,
    \end{align*}
    where the final inequality follows from the definition of $\cXspan$.
  \end{proof}

  \paragraph{Moving to idealized softmax policies}
  Our aim is to bound the regret
  \begin{align}
    \En_{t\sim\unif\prn*{\brk{\nexp}}}\brk*{\Jbeta(\pist) -
    \Jbeta(\pihat\ind{t})} =     \frac{1}{\nexp}
        \sum_{t=1}^{\nexp}\Jbeta(\pist) - \Jbeta(\pihat\ind{t}).\notag
  \end{align}
 Define
 \[
   \Jbbeta(\pibtheta)
   = \En_{x\sim\rho,y'\sim\piref(\cdot\mid{}x), y\sim\pibtheta(\cdot\mid{}x,y')}\brk*{
     \rstar(x,y)
     -\beta\Dkl{\pibtheta(\cdot\mid{}x,y')}{\piref(\cdot\mid{}x)}
     }.
 \]
  We invoke \cref{lem:spanner_rejection} below (proven in the sequel)
  to bound
  \begin{align}
\frac{1}{\nexp}
\sum_{t=1}^{\nexp}\Jbeta(\pist) - \Jbeta(\pihat\ind{t})
\approxleq{} \frac{1}{\nexp}
\sum_{t=1}^{\nexp}\Jbeta(\pist) - \Jbbeta(\pibt)
+ \Rmax \log\log(\nexp)\cdot\prn*{\deltarej +
    \Ccov(\pist)\cdot\vepsspan+\vepsprompt}. \label{eq:idealized_softmax}
  \end{align}
  \begin{lemma}
    \label{lem:spanner_rejection}
    Under the event $\cEconc$, for any $\deltarej\in(0,1)$, we have
    \begin{align*}
      \Jbbeta(\pibt)
      - \Jbeta(\pihat\ind{t})
      &\approxleq
        \bigoh(\Rmax
        \log\log(\deltarej^{-1})\cdot\prn*{\deltarej + \Ccov(\pist)\cdot\vepsspan + \vepsprompt}.
      \end{align*}
  \end{lemma}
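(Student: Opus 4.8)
The plan is to split $\Jbbeta(\pibt)-\Jbeta(\pihat\ind{t})$ into a reward part and a KL part and control each using the rejection-sampling guarantees of \cref{sec:rejection}. Marginalizing out $y'$ in the definitions, the reward part equals $\En_{x\sim\rho,(y,y')\sim\pibt}\brk*{\rstar(x,y)} - \En_{x\sim\rho,(y,y')\sim\pihat\ind{t}}\brk*{\rstar(x,y)}$; since $\pibt = \pi\ind{t}$ for $\theta=\theta\ind{t}$ and $\pihat\ind{t}$ is its rejection-sampling counterpart, I would bound this directly by $\Rmax\prn*{\deltarej + 2\Ccov(\pist)\vepsspan + \vepsprompt}$ via \cref{lem:rejection_tv_average} applied to the $[0,1]$-valued test function $(x,y,y')\mapsto\rstar(x,y)/\Rmax$.

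For the KL part, $\beta\prn[\big]{\En_{x\sim\rho}\brk*{\Dkl{\pihat\ind{t}(\cdot\mid x)}{\piref(\cdot\mid x)}} - \En_{x\sim\rho,y'\sim\piref(\cdot\mid x)}\brk*{\Dkl{\pibt(\cdot\mid x,y')}{\piref(\cdot\mid x)}}}$, I would first use convexity of KL-divergence (in its first argument) to pass from the marginal $\pihat\ind{t}(\cdot\mid x)=\En_{y'\sim\piref(\cdot\mid x)}\brk*{\pihat\ind{t}(\cdot\mid x,y')}$ to the conditional, reducing to $\beta\En_{x\sim\rho,y'\sim\piref(\cdot\mid x)}\brk*{\Dkl{\pihat\ind{t}(\cdot\mid x,y')}{\piref(\cdot\mid x)} - \Dkl{\pibt(\cdot\mid x,y')}{\piref(\cdot\mid x)}}$, a per-$(x,y')$ comparison between the exact softmax policy associated to the truncated reward $r\ind{t}(x,\cdot,y')$ (which is bounded by $2\Rmax$ by \cref{eq:r_bound}) and its rejection-sampled approximation returned by $\rejection_{\beta,\Mrej,\deltarej}$, with $N\ldef 4\Mrej\log(4\deltarej^{-1})$ internal draws. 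I would then split the expectation over $(x,y')$ according to whether $x\in\cXspan$ and $(x,y')\in\cZgood$. On this ``good'' region, \cref{eq:zgood} (a consequence of \cref{lem:truncated_density}, valid under $\cEconc$) gives $\nrm*{\pibt(\cdot\mid x,y')/\piref(\cdot\mid x)}_\infty\le 2e^2\Ccov(\pist)=\Mrej/4$, so the hypothesis $\Mrej\ge 4\Cinf$ of \cref{lem:rejection_kl} is satisfied and that lemma bounds the per-context KL gap by $6\prn*{2\Rmax/\beta + \log N}\deltarej$. On the complementary ``bad'' region I would fall back on the deterministic bound $\pihat\ind{t}(y\mid x,y')\le (N+1)\,\piref(y\mid x)$ that holds for any rejection-sampled distribution (the output is a mixture of ``accept-on-round-$i$'' events, each dominated by $\piref$, plus the $\piref$ fallback), which gives a per-context gap of at most $\log(N+1)$, together with a bound on the probability of the bad region: $\bbP_{x\sim\rho}\brk*{x\notin\cXspan}\le\vepsprompt$ under the spanner event $\cEspan$ of \cref{lem:spanner_quality}, and $\bbP_{x\sim\rho,y'\sim\piref(\cdot\mid x)}\brk*{(x,y')\notin\cZgood,\,x\in\cXspan}\le 2\Ccov(\pist)\vepsspan$ by the Markov-plus-change-of-measure argument used in the proof of \cref{lem:rejection_tv_average}.

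Combining the reward and KL parts, multiplying the KL terms by $\beta$, and using $\beta\le\Rmax$ together with $\log N = \bigoh\prn*{\log\Ccov(\pist) + \log\log(\deltarej^{-1})}$ (absorbing the $\log\Ccov(\pist)$ factor into the polylogarithmic overhead via $\Ccov(\pist)\le\exp(\Rmax/\beta)$), yields $\Jbbeta(\pibt)-\Jbeta(\pihat\ind{t})\approxleq\Rmax\log\log(\deltarej^{-1})\prn*{\deltarej + \Ccov(\pist)\vepsspan + \vepsprompt}$. The step I expect to be the main obstacle is the treatment of the bad contexts in the KL term: when $(x,y')\notin\cZgood$ the rejection threshold $\Mrej$ need not dominate the true density ratio $\Ccov(\pibt(\cdot\mid x,y'))$ (which can be as large as $e^{2\Rmax/\beta}$), so rejection sampling offers no TV or KL control there, and the argument only closes because the spanner construction makes such contexts rare and the crude bound $\Dkl{\pihat\ind{t}(\cdot\mid x,y')}{\piref(\cdot\mid x)}\le\log(N+1)$ is weak enough to survive being multiplied against a probability of order $\Ccov(\pist)\vepsspan + \vepsprompt$.
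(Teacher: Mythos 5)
Your proposal is correct and follows essentially the same route as the paper: the paper likewise reduces to per-$(x,y')$ comparisons via convexity of KL, splits according to $\cZgood$ and $\cXspan$, invokes \cref{lem:rejection_kl} on the good region, and on the bad region multiplies a crude $\bigoh(\Rmax+\beta\log N)$ bound on $\beta\Dkl{\pihat\ind{t}(\cdot\mid x,y')}{\piref(\cdot\mid x)}$ against the probability $2\Ccov(\pist)\vepsspan+\vepsprompt$ obtained by the same Markov/change-of-measure argument. The only cosmetic difference is that you dispatch the reward term by citing \cref{lem:rejection_tv_average} as a black box, whereas the paper folds reward and KL into a single three-term decomposition ($\termi,\termii,\termiii$); the bounds obtained are identical.
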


  \paragraph{Regret bound for truncated softmax policy}
  For the next step, we note that for the choice of
  $\nu=\beta/\vepsstat$, under $\cEconc$,
  our central regret decomposition for truncated softmax policies (\cref{lem:truncated_regret_simple}) implies that \loose
  \begin{align*}
&\frac{1}{\nexp}
\sum_{t=1}^{\nexp}\Jbeta(\pist) - \Jbbeta(\pibt)\\
&\leq{}
    \frac{1}{\beta\nexp}
\sum_{t=1}^{\nexp}\En_{x\sim\rho,(y,y')\sim\pibt(x)}\brk*{\tri*{\theta\ind{t}-\thetastar,\phidb(x,y,y')}^2}
                                                     + \bigoh(\Rmax)\cdot \prn*{\Ccov(\pist)\cdot\vepsspan+\vepsprompt}.
  \end{align*}
  Using \cref{eq:spanner_ls} and \cref{eq:r_bound}, we can bound
  \begin{align*}
    \sum_{t=1}^{\nexp}\En_{x\sim\rho, (y,y')\sim\pibt(x)}\brk*{\tri*{\theta\ind{t}-\thetastar,\phidb(x,y,y')}^2}%
    &\leq{}4\sum_{t=1}^{\nexp}\En_{x\sim\rho, (y,y')\sim\pibt(x)}\brk*{\tri*{\theta\ind{t}-\thetastar,\phidb(x,y,y')}^2\wedge\Rmax^2}\\
    &\leq{}4\sum_{t=1}^{\nexp}\En_{x\sim\rho, (y,y')\sim\pibt(x)}\brk*{\vepsstat^2\nrm*{\phidel(x,y,y')}_{(\Sigmaexp\ind{t})^{-1}}^2\wedge\Rmax^2}.
  \end{align*}
  We can
  further use \cref{lem:rejection_tv_average} to bound
  \begin{align*}
    &\sum_{t=1}^{\nexp}\En_{x\sim\rho,
    (y,y')\sim\pibt(x)}\brk*{\vepsstat^2\nrm*{\phidel(x,y,y')}_{(\Sigmaexp\ind{t})^{-1}}^2\wedge{}\Rmax^2}\\
    &\leq{} \sum_{t=1}^{\nexp}\En_{x\sim\rho,
    (y,y')\sim\pihat\ind{t}(x)}\brk*{\vepsstat^2\nrm*{\phidel(x,y,y')}_{(\Sigmaexp\ind{t})^{-1}}^2\wedge{}\Rmax^2}
    + \bigoh(\Rmax^2\nexp(\deltarej + \Ccov(\pist)\cdot\vepsspan+\vepsprompt))\\
    &\leq{} \vepsstat^2\sum_{t=1}^{\nexp}\En_{x\sim\rho,
    (y,y')\sim\pihat\ind{t}(x)}\brk*{\nrm*{\phidel(x,y,y')}_{(\Sigmaexp\ind{t})^{-1}}^2\wedge{}1}
    + \bigoh(\Rmax^2\nexp(\deltarej + \Ccov(\pist)\cdot\vepsspan+\vepsprompt)),
  \end{align*}
  where the last inequality uses that $\vepsstat\geq{}\Rmax$.
  Now, by \cref{lem:mult_freedman}, we are
guaranteed that with probability at least $1-\delta/3$,
\begin{align*}
  \sum_{t=1}^{\nexp}\En_{x\sim\rho,(y,y')\sim\pihat\ind{t}(x)}\brk*{\min\crl*{\nrm*{\phidel(x,y,y')}^2_{(\Sigmaexp\ind{t})^{-1}},
  1}}
  \leq{}
  \frac{3}{2}\sum_{t=1}^{\nexp}\min\crl*{\nrm*{\phidel(x\ind{t},y_1\ind{t},y_2\ind{t})}^2_{(\Sigmaexp\ind{t})^{-1}},
  1}
  + 4\log(6\delta^{-1}).
\end{align*}
Finally, since $\Sigmaexp\ind{t}=\lambda\Id +
\sum_{i<t}\phidel(x\ind{i},y_1\ind{i},y_2\ind{i})
\phidel(x\ind{i},y_1\ind{i},y_2\ind{i})^{\trn}$,
\cref{lem:elliptic_potential} implies that
\[
\sum_{t=1}^{\nexp}\min\crl*{\nrm*{\phidel(x\ind{t},y_1\ind{t},y_2\ind{t})}^2_{(\Sigma\ind{t})^{-1}},
  1}
\leq 2d \log \prn*{1+\lambda^{-1}\nexp/d}.
\]

\paragraph{Putting everything together: Final bounds on $\Tdata$ and
  $\Tsample$}
Combining all of the preceding inequalities and simplifying (using
that $\vepsstat\geq\Rmax\geq\beta$ and $\deltarej=\nexp^{-1}$), we conclude that with
probability at least $1-\delta$,
\begin{align*}
&
\En_{t\sim\unif\prn*{\brk{\nexp}}}\brk*{\Jbeta(\pist) -
    \Jbeta(\pihat\ind{t})}
                 \\
  &\approxleq \frac{\vepsstat^2\cdot{}d \log \prn*{\lambda^{-1}\nexp\delta^{-1}}}{\beta\nexp}
  + 
  \frac{\Rmax^2
    \log\log(\nexp)}{\beta}\cdot\prn*{\frac{1}{\Texp} +
    \Ccov(\pist)\cdot\vepsspan+\vepsprompt}\\
    &\approxleq \frac{\vepsstat^2\cdot{}d \log \prn*{B\Rmax^{-1}\nexp\delta^{-1}}}{\beta\nexp}
  +
      \frac{d\Rmax^2\log(B\Rmax^{-1}\nu^{-1}\delta^{-1})\log\log(\Texp)}{\beta\nu^2\Tprompt}
      +
      \frac{\Rmax^2\log(\delta^{-1})\log\log(\Texp)\cdot\Ccov(\pist)}{\beta\Tspan}\\
      &\approxleq \frac{\vepsstat^2\cdot{}d \log \prn*{B\Rmax^{-1}\nexp\delta^{-1}}}{\beta\nexp}
  +
      \frac{\vepsstat^2\cdot{}d\Rmax^2\log(B\Rmax^{-1}\nu^{-1}\delta^{-1})\log\log(\Texp)}{\beta^3\Tprompt}
      + \frac{\Rmax^2\log(\delta^{-1})\log\log(\Texp)\cdot\Ccov(\pist)}{\beta\Tspan},
\end{align*}
where the second inequality uses \cref{eq:spanner_bound} and the third
inequality uses that $\nu\ldef{}\beta/\vepsstat$. Choosing
\[
  \Tprompt = \wt{\Theta}\prn*{\frac{\Rmax^2}{\beta^2}\cdot\Texp},\mathand
\Tspan = \wt{\Theta}\prn*{\Ccov(\pist)\cdot\Texp}
\]
suffices to give
\begin{align*}
  \En_{t\sim\unif\prn*{\brk{\nexp}}}\brk*{\Jbeta(\pist) -
    \Jbeta(\pihat\ind{t})}
  \leq \bigoht\prn*{\frac{\vepsstat^2\cdot{}d \log
    \prn*{\delta^{-1}}}{\beta\nexp}
  } = \bigoht\prn*{\frac{d^2 \Rmax^2\log^2
  \prn*{\delta^{-1}}}{\beta\nexp}
  }.
\end{align*}
so that setting
\arxiv{\begin{align*}
  \Texp = \wt{\Theta}\prn*{\frac{d^2 \Rmax^2\log^2
  \prn*{\delta^{-1}}}{\beta\veps}
  }
\end{align*}}
suffices to achieve $\En_{t\sim\unif\prn*{\brk{\nexp}}}\brk*{\Jbeta(\pist) -
    \Jbeta(\pihat\ind{t})}\leq\veps$. We now bound the number of reward/prompt
queries and sampling oracle queries. First, note that during the
spanner construction phase, the algorithm queries the reward oracle
twice whenever it expands $\Psispan$, and does not query it
otherwise. Meanwhile, it queries the reward oracle twice at each round
of the exploration phase. Consequently, by \cref{lem:spanner_size}, we have
\begin{align*}
  \Tdatafull
  \leq 2\prn*{\abs*{\Psispan} + \Texp}
  \leq\bigoht\prn*{
  \frac{d}{\nu^2}
  + \Texp
    }
  \leq\bigoht\prn*{
  \frac{d^2\Rmax^2\log(\delta^{-1})}{\beta^2}
  + \frac{d^2 \Rmax^2\log^2
  \prn*{\delta^{-1}}}{\beta\veps}
  }.
\end{align*}
where we have used that $\nu=\beta/\vepsstat$. We also observe that
the number of prompts used by the algorithm is
\begin{align*}
  \Tprompt + \Texp = \bigoht\prn*{\frac{\Rmax^2}{\beta^2}\cdot\Texp + \Texp}
=
  \bigoht\prn*{
  \frac{d^2 \Rmax^4\log^2
  \prn*{\delta^{-1}}}{\beta^3\veps}
  }.
\end{align*}
To bound the number of
sampling oracle queries, we note that the algorithm queries the sampling
oracle twice during each inner loop iteration of the spanner construction phase, and
calls it $\bigoh(\Mrej\log(\deltarej^{-1}))=\bigoht(\Ccov(\pist))$ times during each round
of the exploration phase (through the invocation of \rejection
(\cref{alg:rejection})). We can thus bound
\begin{align*}
  \Tsamplefull
  \leq \bigoht\prn*{\Tprompt\Tspan
  + \Ccov(\pist)\cdot{}\Texp
  }
  \leq \bigoht\prn*{\Ccov(\pist)\cdot\frac{\Rmax^2}{\beta^2}\cdot\Texp^2}.
\end{align*}

\end{proof}

\subsubsection{Proofs for Supporting Lemmas}

  \begin{proof}[\pfref{lem:spanner_rejection}]
    We begin by writing
        \begin{align*}
          \Jbbeta(\pibt)
      - \Jbeta(\pihat\ind{t})
      &= \En_{x\sim\rho,y'\sim\piref(\cdot\mid{}x),y\sim\pibt(\cdot\mid{}x,y')}\brk*{
     \rstar(x,y)
        -\beta\Dkl{\pibt(\cdot\mid{}x,y')}{\piref(\cdot\mid{}x)}}\\
          &~~~~- \En_{x\sim\rho,y'\sim\piref(\cdot\mid{}x),y\sim\pihat\ind{t}(\cdot\mid{}x,y')}\brk*{
            \rstar(x,y)}
            + \beta\En_{x\sim\rho}\brk*{
            \Dkl{\pihat\ind{t}(\cdot\mid{}x)}{\piref(\cdot\mid{}x)}
            }\\
          &\leq{} \En_{x\sim\rho,y'\sim\piref(\cdot\mid{}x),y\sim\pibt(\cdot\mid{}x,y')}\brk*{
     \rstar(x,y)
        -\beta\Dkl{\pibt(\cdot\mid{}x,y')}{\piref(\cdot\mid{}x)}}\\
          &~~~~- \En_{x\sim\rho,y'\sim\piref(\cdot\mid{}x),y\sim\pihat\ind{t}(\cdot\mid{}x,y')}\brk*{
            \rstar(x,y)- \beta
            \Dkl{\pihat\ind{t}(\cdot\mid{}x,y')}{\piref(\cdot\mid{}x)}
            },
        \end{align*}
        where the inequality uses convexity of KL-divergence. We can
        further bound this by
        \begin{align*}
          &\underbrace{\En_{x\sim\rho,y'\sim\piref(\cdot\mid{}x)}
          \brk*{(\En_{y\sim\pibt(\cdot\mid{}x,y')}\brk*{\rstar(x,y)}
          -
          \En_{y\sim\pihat\ind{t}(\cdot\mid{}x,y')}\brk*{\rstar(x,y)})\indic\crl*{(x,y')\in\cZgood}}}_{\termi}\\
          &+          \underbrace{\beta\En_{x\sim\rho,y'\sim\piref(\cdot\mid{}x)}
            \brk*{(\Dkl{\pihat\ind{t}(\cdot\mid{}x,y')}{\piref(\cdot\mid{}x)}
            - \Dkl{\pibt(\cdot\mid{}x,y')}{\piref(\cdot\mid{}x)})
            \indic\crl*{(x,y')\in\cZgood}}}_{\termii}\\
          &+\underbrace{\En_{x\sim\rho,y'\sim\piref(\cdot\mid{}x)}
          \brk*{(\Rmax + \beta \Dkl{\pihat\ind{t}(\cdot\mid{}x,y')}{\piref(\cdot\mid{}x)})\indic\crl*{(x,y')\notin\cZgood}}}_{\termiii}.
        \end{align*}
For the first two terms above, our choice for $\Mrej$ and
\cref{eq:zgood} imply that
whenever $(x,y')\in\cZgood$, the conditions of \cref{lem:rejection_kl}
apply, so we have
\begin{align*}
  \termi \leq{} 2\Rmax\deltarej,\mathand
  \termii \leq{} \beta\cdot\bigoh\prn*{\frac{\Rmax}{\beta} +
  \log(\Ccov(\pist)\log(\deltarej^{-1}))}\cdot\deltarej
  \leq\bigoh(\Rmax\deltarej\log\log(\deltarej^{-1}))
\end{align*}
as long as $\beta\leq\Rmax$. Meanwhile, \cref{lem:rejection_kl} also implies that for all $(x,y')$,
\[
\beta\Dkl{\pihat\ind{t}(\cdot\mid{}x,y')}{\piref(\cdot\mid{}x)}\leq\bigoh(\Rmax+\beta\log(\Ccov(\pist)\log(\deltarej^{-1}))\leq\bigoh(\Rmax\log\log(\deltarej^{-1})),
\]
and hence
\begin{align*}
  \termiii
  &\leq \bigoh(\Rmax\log\log(\deltarej^{-1}))\cdot
    \En_{x\sim\rho,y'\sim\piref(\cdot\mid{}x)}\brk*{\indic\crl*{(x,y')\notin\Zgood}}.
\end{align*}
To conclude, we use the definition of $\cXspan$ to bound
\begin{align*}    
\En_{x\sim\rho,y'\sim\piref(\cdot\mid{}x)}\brk*{\indic\crl*{(x,y')\notin\Zgood}}    &\leq   \En_{x\sim\rho,y'\sim\piref(\cdot\mid{}x)}\brk*{\indic\crl*{(x,y')\notin\Zgood,x\in\cXspan}}
      + \vepsprompt\\
  &\leq 
2\En_{x\sim\rho}\brk*{\bbP_{y'\sim\piref(\cdot\mid{}x),y\sim\pist(\cdot\mid{}x)}\brk*{\nrm*{\phidel(x,y,y')}_{\Sigmaspan^{-1}}>\nu}\indic\crl*{x\in\cXspan}}
    + \vepsprompt\\
    &\leq 
2\Ccov(\pist)\En_{x\sim\rho}\brk*{\bbP_{(y,y')\sim\piref(\cdot\mid{}x)}\brk*{\nrm*{\phidel(x,y,y')}_{\Sigmaspan^{-1}}>\nu}\indic\crl*{x\in\cXspan}}
  + \vepsprompt\\
    &\leq 2\Ccov(\pist)\cdot\vepsspan + \vepsprompt.
\end{align*}
where the second inequality above is Markov's inequality.
  \end{proof}

\newpage
\section{Proofs from \creftitle{sec:computational}}
\label{app:computational}

In this section we prove \cref{thm:training}; we formally state the Randomized Exponential Time Hypothesis and restate the theorem below.
\begin{conjecture}[Randomized Exponential Time Hypothesis \citep{calabro2008complexity}]\label{conj:randeth}
There is no randomized algorithm with time complexity $2^{o(n)}$ that, given a \textsc{3-SAT} formula $\varphi$ with $n$ clauses, has the following guarantee:
\begin{itemize}
\item If $\varphi$ is satisfiable, then the output is \textsc{Yes} with probability at least $1/2$.
\item If $\varphi$ is unsatisfiable, then the output is \textsc{No}.
\end{itemize}
\end{conjecture}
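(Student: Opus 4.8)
This statement is the \emph{Randomized Exponential Time Hypothesis} (rETH), a standard hardness assumption in fine-grained complexity that we import verbatim from \citet{calabro2008complexity}; it is invoked only as a hypothesis in \cref{thm:training}, and is not something we prove. So the ``plan'' here is not to derive rETH---an unconditional proof is out of reach with current techniques, since any deterministic algorithm is in particular a randomized one, so rETH already implies the ordinary Exponential Time Hypothesis, which implies $\mathsf{P}\neq\mathsf{NP}$ (and, in the randomized form, that $\mathsf{NP}$ is not contained in randomized subexponential time)---but rather to record the correct attribution and to pin down precisely which form of the hypothesis is being used downstream.

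First I would recall the usual hierarchy. The ordinary ETH asserts that \textsc{3-SAT} on $n$ \emph{variables} admits no $2^{o(n)}$-time algorithm; by the Sparsification Lemma this is equivalent to the same statement with $n$ taken to be the number of \emph{clauses}, which is the form stated above and the form convenient for the Max-$k$-DNF reduction in \cref{sec:maxkdnf} (there it is clauses, not variables, that get embedded in the feature map $\phi$). Next I would note that rETH is obtained from ETH by replacing ``deterministic $2^{o(n)}$-time algorithm'' with ``randomized $2^{o(n)}$-time algorithm with one-sided (RP-type) error'': always correct on unsatisfiable instances, and correct with probability at least $1/2$ on satisfiable ones. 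By standard error reduction the constant $1/2$ can be boosted arbitrarily close to $1$ without weakening the hypothesis. Trivially rETH implies ETH; it is not known whether ETH implies rETH, so rETH is formally at least as strong, but it is equally widely believed, for the same heuristic reasons---no known randomized algorithm decides \textsc{3-SAT} in time $2^{cn}$ for any constant $c<1$.

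The main (and essentially only) obstacle is the one just flagged: an actual proof would have to separate $\mathsf{NP}$ from randomized subexponential time, which is far beyond current techniques---this is exactly why the statement is phrased as a \emph{conjecture} and used purely as an assumption. Accordingly, the sole ``step'' I would carry out is to cite \citet{calabro2008complexity} for the statement and, for completeness, to remark on the Sparsification-Lemma equivalence between the clause-counting and variable-counting formulations, so that the reader sees the hypothesis is being invoked in its standard strength before it is applied in \cref{thm:training}.
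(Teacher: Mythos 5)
You are right that there is nothing to prove here: the paper, like your proposal, simply states the Randomized Exponential Time Hypothesis as a conjecture imported from \citet{calabro2008complexity} and uses it purely as an assumption in \cref{thm:training}, with no derivation attempted. Your attribution and surrounding context match the paper's treatment.
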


\complower*

Recall the definition of a proper alignment algorithm from \cref{def:proper-exploration}. We note in passing that our proof shows that \cref{thm:training} holds in a stronger sampling oracle model where the algorithm directly observes the log-probability $\log\pitheta(y\mid{}x)$ for each response sampled from the oracle.\loose

\paragraph{Preliminaries} We say that $\Alg$ is an online alignment algorithm for linear softmax policies with parameter set $\Theta$ if, for any given $d \in \NN$, $\beta > 0$, and $\veps,\delta>0$, $\Alg$ solves any $d$-dimensional instance with regularization parameter $\beta$, regret $\veps$, and failure probability $\delta$. In order to be explicit, we write $\Tdata(d,\beta,\veps,\delta)$ to denote the number of reward oracle queries used by $\Alg$, and $\Tcomp(d,\beta,\veps,\delta)$ to denote the number of strong sampling oracle queries.\loose

\subsection{Overview of Proof}

Note that \cref{thm:training} does not require the \emph{output} of the alignment algorithm to itself be proper, i.e. lie in the policy class $\Pi$; per the definition of a proper alignment algorithm (\cref{def:proper-exploration}), it only requires the exploratory policies to be proper. To prove \cref{thm:training}, the primary building block is the following weaker result, \cref{thm:training-proper}, which gives hardness under the additional assumption that the output policy is required to lie in $\Pi$. We deduce \cref{thm:training} from this result by showing that one can use imitation learning to efficiently convert any improper output policy into a proper one (\cref{lemma:bc-distill}); this leverages the fact that behavior cloning with the log-loss is computationally efficient for linearly parametrized softmax policies \citep{rohatgi2025computational}.

\begin{theorem}
  \label{thm:training-proper}
Under the Randomized Exponential Time Hypothesis
(\cref{conj:randeth}), there is no proper alignment algorithm, even
with a \emph{strong oracle}\arxiv{ (\cref{def:oracle})} and a Euclidean projection oracle for $\Theta$, that (i)
has $\Tdatafull \leq
\poly\prn*{d,\beta^{-1},\veps^{-1},\delta^{-1}, \log \frac{1}{\min_{x,y} \piref(y\mid{}x)}}$ and \[\Tsamplefull \leq \poly\prn*{d, \exp\prn{\beta^{-1}},
  \veps^{-1},\delta^{-1}, |\cY|, \log \frac{1}{\min_{x,y} \piref(y\mid{}x)}}\] under \cref{ass:norm} (with $\Rmax=1$, $B=\sqrt{d}$), (ii) has runtime
$\poly\prn[\Big]{d,\exp\prn{\beta^{-1}},\veps^{-1},\delta^{-1},|\cY|,\log \frac{1}{\min_{x,y} \piref(y\mid{}x)}}$, and (iii) has output $\pihat \in \Pi$.\loose
\end{theorem}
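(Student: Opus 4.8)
The plan is to reduce from a gap version of Max-$k$-DNF and derive a contradiction with the Randomized Exponential Time Hypothesis (\cref{conj:randeth}). The hardness I would use---that for $k$-DNF formulas with a super-constant clause width $k$, polynomially many terms, and a constant value gap it is Randomized-ETH-hard to distinguish value $\ge\alpha$ from value $\le\alpha'$---is itself non-standard, and I would establish it (this is the content of \cref{sec:maxkdnf}) by starting from ETH-hardness of approximating Max-$k$-CSP and applying gap amplification. Granting this, the goal is to show that any \emph{proper} alignment algorithm $\Alg$ whose output lies in $\Pi$ and which has the claimed efficiency can be turned, in time polynomial in $\Alg$'s runtime, into a randomized distinguisher for this Max-$k$-DNF gap problem.

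First I would build the hard instance. Given a $k$-DNF formula $\varphi$ on $n$ variables with $m$ terms (each a conjunction of $k$ literals), drawn from a suitably randomized hard distribution---e.g.\ a random renaming/negation of a fixed gap instance, so that the value is preserved while every \emph{fixed} assignment satisfies only a $2^{-\Omega(k)}$ fraction of terms in expectation---I would construct a single-prompt alignment instance with: response set $\cY=\{y_0\}\cup\{y_c:c\in\varphi\}$; feature dimension $d=\Theta(n)$; feature map $\phi(y_0)=\mathbf{0}$ and $\phi(y_c)$ a normalized encoding of the literals of term $c$, so that $\langle\theta,\phi(y_c)\rangle$ attains its maximum exactly when $\theta$ satisfies $c$; hidden parameter $\thetastar$ an \emph{optimal} assignment of $\varphi$, so that (by \cref{ass:realizable}) $\rstar=\langle\thetastar,\phi(\cdot)\rangle\in[0,\Rmax]$ with $\Rmax=1$, $\pistarb=\pi_{\thetastar}$, and the null response $y_0$ carries a constant (hence uninformative) reward; and reference policy $\piref$ putting mass $1-\epref$ on $y_0$ and $\epref/m$ on each $y_c$. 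One checks \cref{ass:norm} holds with $\Rmax=1$, $B=\sqrt{d}$, and that a Euclidean projection oracle for $\Theta=\{\theta:\|\theta\|_\infty\le1\}$ is useless here. The parameters $k,\beta,\epref,m$ would be tuned so that (i) $\exp(\beta^{-1})$, and hence the allowed budget $\Tsamplefull$, stays sub-exponential in $n$; (ii) $\Tdatafull$ stays polynomial in $n$; and (iii) the blindness estimate below survives a union bound over all queries.

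The argument then rests on two lemmas. The \emph{regret-to-assignment} lemma: since the output is a softmax policy $\pi_{\hat\theta}$, \cref{lem:kl_regret} gives $\Jbeta(\pistarb)-\Jbeta(\pi_{\hat\theta})=\beta\,\Dkl{\pi_{\hat\theta}}{\pistarb}$, so regret $\le\veps$ with $\veps=\Theta(\beta)$ forces $\Dkl{\pi_{\hat\theta}}{\pistarb}=O(1)$; applying the data-processing inequality to the partition $\{y_0\}\cup\{y_c:\thetastar\text{ sat }c\}\cup\{y_c:\thetastar\text{ unsat }c\}$ and estimating the softmax normalizers explicitly (using that unsatisfied terms are exponentially suppressed relative to satisfied ones) shows $\pi_{\hat\theta}$ must place an $\Omega(\val(\varphi))$ fraction of its mass on the $\thetastar$-satisfied terms, which forces $\langle\hat\theta,\phi(y_c)\rangle$ to be near its maximum on a constant fraction of them; rounding $\hat\theta$ coordinatewise by sign then satisfies each such term, so $\mathrm{round}(\hat\theta)$ satisfies $\ge\Omega(\val(\varphi))\cdot m$ terms. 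The \emph{blindness} lemma: on this instance, with probability $\ge 1-o(1)$ over $\varphi$ and $\Alg$'s coins, no sampling-oracle query ever returns a non-null response---because, conditioned on the transcript so far consisting only of $y_0$ (constant reward, feature $\mathbf{0}$), the next proposed parameter $\theta$ has a distribution \emph{independent of $\varphi$}, so $\E_\varphi[\text{fraction of terms almost-satisfied by }\theta]\le2^{-\Omega(k)}$ and $\pi_\theta$ places correspondingly negligible mass on $\cY\setminus\{y_0\}$; a union bound over the polynomially many queries finishes it. This is exactly where properness is essential: an improper algorithm may sample from $\piref$ and only then decide whether to call the reward oracle on the resulting term, thereby reading off $\thetastar$ on the satisfied terms' variables (this is how \spanalg circumvents the obstruction), whereas a proper algorithm must explore with policies chosen before any information about $\varphi$ is available.

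Assembling the reduction: given $\varphi$, the distinguisher $\MB$ runs $\Alg$ on the instance above, answering feature queries from $\varphi$, simulating sampling queries honestly (it knows $\phi$ and $\piref$), and answering \emph{every} reward query with the null value; it reports ``value large'' iff $\mathrm{round}(\hat\theta)$ satisfies at least $\Omega(\alpha)m$ terms. By the blindness lemma, with probability $\ge1-o(1)$ this simulation coincides---under the obvious coupling, since the two processes agree until a non-null response appears---with the true run of $\Alg$ on $(\varphi,\thetastar)$, on which $\Alg$ achieves regret $\le\veps$ with probability $\ge1-\delta$; the regret-to-assignment lemma then makes $\mathrm{round}(\hat\theta)$ good whenever $\val(\varphi)\ge\alpha$, whereas $\mathrm{round}(\hat\theta)$ trivially satisfies at most $\val(\varphi)m<\Omega(\alpha)m$ terms when $\val(\varphi)$ is small, so $\MB$ distinguishes the gap with the required probability and in time $\poly$ of $\Alg$'s sub-exponential runtime, contradicting Randomized ETH. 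I expect the hard part to be the blindness lemma together with the parameter choices: one must engineer the feature geometry and $(k,\beta,\epref,m)$ so that, within the \emph{full} budget $\Tsamplefull=\poly(d,\exp(\beta^{-1}),\dots)$ that the theorem permits, a $\varphi$-oblivious exploration policy still yields no usable reward signal about $\thetastar$---and, upstream of this, to prove the Max-$k$-DNF gap hardness in the unusual regime required (growing width, polynomially many terms, constant gap) via an ETH-preserving reduction.
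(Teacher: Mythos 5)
Your construction (single prompt, one response per clause plus a null response, features encoding the clause literals scaled by $1/k$, $\thetastar$ a maximizing assignment, $\piref$ concentrated on the null response), your use of \cref{lem:kl_regret} to convert low regularized regret of the output $\pi_{\hat\theta}$ into a good rounded assignment $\sgn(\hat\theta)$, and your route to the Max-$k$-DNF gap hardness (Max-$k$-CSP plus gap amplification) all match the paper. The genuine gap is your \emph{blindness lemma}, which the paper deliberately avoids. First, as stated it is false in the parameter regime the theorem permits: for the reduction to be sound you need $e^{-1/(\beta k)}$ (the non-null mass of $\pi_\theta$ when $\sgn(\theta)$ satisfies no clause, since a single disagreeing literal only costs $1/k$ in the inner product) to beat a union bound over $\Tsamplefull = \poly(\exp(\beta^{-1}),\dots)$ sampling queries, and $\exp(C/\beta)\cdot e^{-1/(\beta k)}$ is never $o(1)$ for $k\ge 2$; an adversarial proper algorithm can simply make $\Theta(m\,e^{2/(\beta k)})$ sampling queries and will see non-null responses. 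Second, your conditional-independence claim breaks down anyway: since your distinguisher answers sampling queries honestly from $\pi_\theta$ (it knows $\varphi$), non-null sampled responses reveal $\varphi$-dependent features, after which subsequent parameters $\theta_t$ are no longer independent of $\varphi$ and the ``a random $\theta$ satisfies only $2^{-\Omega(k)}$ of terms'' bound no longer applies. Ruling out that the algorithm then finds a clause-satisfying $\theta_t$ is exactly the computational hardness you are trying to prove, so blindness cannot be established outright.

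The paper's fix is a win--win argument that you are missing. Blindness is only needed for the \emph{reward} queries (the sampling oracle is perfectly simulable; only $\rstar$ depends on the unknown $\thetastar$), and there are only $\Tdatafull \le \poly(d,\beta^{-1},\dots)$ of those --- polynomially many, with no $\exp(\beta^{-1})$. The paper then splits into cases via a hybrid argument: either every exploration parameter $\theta_t$ satisfies $\sum_i \pi_{\theta_t}(i) \le 1/(4\Tdata)$, in which case the zero-reward simulation is faithful with probability $3/4$ and the \emph{output} yields a good assignment; or some $\theta_t$ already has non-null mass exceeding $1/(4\Tdata)$, in which case \cref{lemma:vis-val-bounds} shows $\sgn(\theta_t)$ itself satisfies $\gtrsim m/\Tdata$ clauses and the distinguisher succeeds by checking the sign vectors of \emph{all} queried parameters, not just the output. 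This is why the soundness threshold in \cref{thm:rlhf-to-dnf} is $\delta m/(16\,\Tdata)$ rather than a fixed constant gap, why the data-efficiency hypothesis $\Tdata \le 2^k$ is load-bearing, and why the Max-$k$-DNF hardness must be proved with a polynomially large gap ($m/n$ versus $m/n^c$) rather than the constant gap your plan assumes. Your instance randomization (random renaming) is a symptom of trying to prove blindness unconditionally and is not needed once the win--win is in place. Separately, note that \cref{thm:training-proper} only assumes the output lies in $\Pi$; the full \cref{thm:training} removes even that via the log-loss distillation step of \cref{lemma:bc-distill}.
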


We prove this hardness
result in the simpler fixed-prompt setting (i.e. $\cX =
\{\perp\}$). For notational convenience, we henceforth omit all
dependencies on $\perp$, i.e. we write $\pi(y) := \pi(y\mid{}\perp)$
and $\rstar(y)\ldef\rstar(\perp,y) $ for any response $y \in
\cY$. We prove the result for parameter set $\Theta :=
\{\theta \in \RR^d: \norm{\theta}_\infty \leq 1\}$, which is indeed contained in the Euclidean ball of radius $B=\sqrt{d}$, and admits an efficient Euclidean projection oracle. The proof is based on a reduction from the NP-hard \emph{Max-$k$-DNF}
problem.%

\begin{definition}[Max-$k$-DNF formula]\label{def:dnf}
Fix $n,m,k \in \NN$. A \emph{Max-$k$-DNF formula with $n$ variables and $m$ clauses} is a tuple $\varphi = (\MC_1,\dots,\MC_m)$, where each \emph{clause} $\MC_i$ consists of a subset $S_i \subseteq [n]$ of size $|S_i| \leq k$, and a partial assignment $f_i: S_i \to \{-1,1\}$. The \emph{value} of $\varphi$ is 
\[\valDNF(\varphi) := \max_{x \in \{-1,1\}^n} \valDNF(\varphi;x),\] 
where 
\[\valDNF(\varphi;x) := \sum_{i=1}^m \mathbbm{1}[\forall j \in S_i: x_j = f_i(j)].\]
\end{definition}

The Max-$k$-DNF problem is to compute $\valDNF(\varphi)$ for a given
formula $\varphi$. Under the randomized Exponential Time Hypothesis
(\cref{conj:randeth}), even \emph{approximating} this value is
computationally hard \--- see \cref{thm:boosted-dnf-hardness} in
\cref{sec:maxkdnf} for the precise statement that we will need. This motivates the following reduction, which shows that any proper online alignment algorithm for the linear softmax policy class gives an approximation algorithm for Max-$k$-DNF.

\begin{lemma}\label{thm:rlhf-to-dnf}
Let $\Alg$ be a proper (\cref{def:proper-exploration}) online alignment algorithm for linear softmax policies, in the strong oracle setting, with parameter set $\Theta$, which uses $\Tdata(\cdot)$ reward oracle queries and has time complexity bounded by $\Tcomp(\cdot)$. Suppose also that the output of $\Alg$ lies in $\Pi$. %
Define
\[
\beta(k,\delta) := \frac{1}{k^2 \log(16/\delta)}, \mathand \veps(k,\delta) := \frac{\delta^2}{16k^2 \log(16/\delta)}.
\]
Then there is an algorithm $\Alg'$ for Max-$k$-DNF with the following guarantee: given any parameter $\delta>0$ and Max-$k$-DNF formula $\varphi$ with $d$ variables and $m$ clauses,
\begin{itemize}[leftmargin=*]
    \item If $\valDNF(\varphi) \geq \delta m$ and $\Tdata(d,\beta(k,\delta),\veps(k,\delta),1/4) \leq 2^k$, then $\Alg'$ outputs \textsc{Yes} with probability at least $1/4$.\loose
    \item If $\valDNF(\varphi) \leq \frac{\delta m}{16 \cdot \Tdata(d, \beta(k,\delta), \veps(k,\delta),1/4)}$, then $\Alg'$ outputs \textsc{No}.
\end{itemize}
Moreover, the time complexity of $\Alg'$ is $\poly(d,m) \cdot \Tcomp(d,\beta(k,\delta),\veps(k,\delta),1/4)$.
\end{lemma}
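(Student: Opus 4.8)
The plan is to construct the reduction $\Alg'$ from a Max-$k$-DNF formula $\varphi = (\mathcal{C}_1,\dots,\mathcal{C}_m)$ with $d$ variables to an instance of online alignment with linear softmax policies, in the fixed-prompt setting with parameter set $\Theta = \{\theta : \norm{\theta}_\infty \leq 1\}$. The response space $\cY$ will consist of a ``null'' response $y_0$ together with one response $y_i$ per clause $\mathcal{C}_i$. The feature embedding $\phi$ encodes each clause: for clause $\mathcal{C}_i$ with support $S_i$ and partial assignment $f_i$, I would set $\phi(y_i) = \frac{1}{|S_i|}\sum_{j \in S_i} f_i(j) e_j \in \mathbb{R}^d$ (so $\norm{\phi(y_i)}_\infty \leq 1$ and $\langle \theta, \phi(y_i)\rangle$ measures the fraction of literals of $\mathcal{C}_i$ satisfied by the sign pattern of $\theta$), and $\phi(y_0) = \mathbf{0}$. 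The reference policy $\piref$ places the overwhelming majority of its mass on $y_0$ and only an exponentially small mass (roughly $e^{-\beta^{-1}}/m$ spread over the clauses) on each $y_i$, so that $\Ccov$ stays bounded and, crucially, the strong sampling oracle applied to any $\pi_\theta$ still returns $y_0$ with probability $1 - o(1)$ unless $\theta$ is (approximately) a good assignment. The hidden parameter $\thetastar \in \{-1,1\}^d \subset \Theta$ is a maximally satisfying assignment of $\varphi$, and by construction (using \cref{eq:reward_difference} / \cref{lem:reward_difference}) the reward differences are linear with this $\thetastar$; realizability (\cref{ass:realizable}) holds, and \cref{ass:norm} holds with $\Rmax = 1$, $B = \sqrt d$.

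The key calculation is to relate $J_\beta(\pistarb)$ and the regret threshold $\veps$ to $\valDNF(\varphi)$. With $\beta = \beta(k,\delta)$ as in the statement, I would show: (i) if $\valDNF(\varphi) \geq \delta m$ then $\pistarb$ puts non-trivial mass on satisfied-clause responses, so that $J_\beta(\pistarb)$ is bounded below by a quantity of order $\veps(k,\delta)$ times a constant — hence any $\pihat \in \Pi$ with $J_\beta(\pistarb) - J_\beta(\pihat) \leq \veps(k,\delta)$ must itself correspond to a parameter $\widehat\theta$ that $\delta$-approximately satisfies $\varphi$ in the appropriate sense, which $\Alg'$ can then verify directly in $\poly(d,m)$ time by reading off $\sgn(\widehat\theta)$ and counting satisfied clauses; and (ii) if $\valDNF(\varphi) \leq \frac{\delta m}{16 \Tdata}$, then for \emph{every} $\theta \in \Theta$, the sampling oracle $y \sim \pi_\theta$ returns a non-null response with probability at most, say, $\frac{1}{8\Tdata}$, because the softmax tilt can boost the total non-null mass by at most $\exp(\beta^{-1} \cdot \valDNF(\varphi)/m) \cdot (\text{total non-null }\piref\text{ mass})$, which is still $\ll 1/\Tdata$. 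By a union bound over the $\Tdata$ reward queries (each of which must be preceded by a sampling-oracle draw, since the algorithm is proper and can only query rewards on properly-sampled responses), with probability $\geq 3/4$ the algorithm never sees any response other than $y_0$, and hence never obtains any information about $\varphi$; its output distribution is then (up to total variation $\leq 1/4$) independent of which formula was given, so on a formula with $\valDNF(\varphi) \leq \frac{\delta m}{16\Tdata}$ the output $\pihat$ cannot $\delta$-approximately satisfy $\varphi$ (one needs a separate, elementary argument that a ``blind'' output fails to satisfy a hard formula), and $\Alg'$ outputs \textsc{No}. The condition $\Tdata(d,\beta,\veps,1/4) \leq 2^k$ is used to ensure the soundness case gap $\frac{1}{16\Tdata}$ is not too small relative to the hardness-of-approximation parameters available for Max-$k$-DNF.

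Concretely, $\Alg'$ would: build the instance above from $\varphi$, simulate $\Alg$ with parameters $(d, \beta(k,\delta), \veps(k,\delta), 1/4)$ — answering each sampling-oracle query by sampling from the explicitly-known distribution $\pi_\theta$ (doable in $\poly(d,m)$ time, and noting $\pi_{\mathbf 0} = \piref$), and answering each reward-oracle query on response $y$ with a $\Ber(\rstar(y))$ sample where $\rstar(y_i) = \langle \thetastar, \phi(y_i)\rangle \in [0,1]$ and $\rstar(y_0)=0$ — obtain the output $\pihat = \pi_{\widehat\theta} \in \Pi$, and then output \textsc{Yes} iff $\sgn(\widehat\theta)$ satisfies at least (say) $\frac{\delta m}{2}$ clauses of $\varphi$. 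The time complexity is $\poly(d,m) \cdot \Tcomp(d,\beta(k,\delta),\veps(k,\delta),1/4)$ since each of the $\Tcomp$ steps of $\Alg$ costs $\poly(d,m)$ to simulate.

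The main obstacle I anticipate is the quantitative bookkeeping in the completeness direction: pinning down exactly how the KL-regularization constant $\beta(k,\delta)$ and the regret tolerance $\veps(k,\delta)$ must be tuned so that (a) $J_\beta(\pistarb)$ exceeds $\veps$ by a definite margin when $\valDNF(\varphi) \geq \delta m$ — this requires a careful lower bound on $\log\big(\sum_y \piref(y) e^{\beta^{-1} r^\star(y)}\big)$, the log-partition-function expression for $J_\beta(\pistarb)$ coming from \cref{lem:kl_regret} — while (b) simultaneously $\veps$ is small enough that an $\veps$-optimal \emph{proper} policy $\pi_{\widehat\theta}$ is forced to have $\langle \widehat\theta, \phi(y_i)\rangle$ large for enough clauses, using strong convexity of KL (as in the $1/(\beta\veps)$-type arguments elsewhere in the paper) to pass from a value gap to a parameter/assignment statement; and (c) the soundness gap $\frac{1}{16\Tdata}$ and the condition $\Tdata \le 2^k$ line up with the Max-$k$-DNF hardness-of-approximation statement (\cref{thm:boosted-dnf-hardness}) that will be invoked in the proof of \cref{thm:training-proper}. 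The reduction structure itself is straightforward; it is this three-way balancing of $\beta$, $\veps$, and the null-response probability bound that needs care.
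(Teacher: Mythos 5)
Your embedding is essentially the paper's (null response plus one response per clause, reference policy concentrated on the null response, a maximally satisfying assignment as $\thetastar$, and properness exploited via the fact that $\pi_\theta$ puts negligible mass on non-null responses unless $\sgn(\theta)$ satisfies many clauses). But there is a fatal gap in how you simulate the interaction: you propose that $\Alg'$ answer each reward query on $y_i$ with a sample of mean $\rstar(y_i)=\langle\thetastar,\phi(y_i)\rangle$. The vector $\thetastar$ is a \emph{maximally satisfying assignment} of $\varphi$ --- the very NP-hard object the reduction is supposed to produce --- so $\Alg'$ cannot evaluate $\rstar$ in polynomial time. Your completeness argument relies on a faithful simulation of $\Alg$ (so that the regret guarantee $J_\beta(\pistarb)-J_\beta(\pihat)\le\veps$ applies to the simulated run), and faithful simulation is exactly what is unavailable here. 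In the soundness case the issue is invisible because non-null responses are never sampled, but in the completeness case a successful algorithm \emph{must} eventually sample non-null responses and query their rewards.

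The paper closes this gap with a hybrid (win-win) argument that your proposal is missing: $\Alg'$ always answers reward queries with $0$, and one compares this to the idealized run with true rewards. Either the two executions coincide up to total variation $1/4$ --- which happens precisely when every exploration policy $\pi_{\theta_t}$ puts mass at most $\frac{1}{4\Tdata}$ on non-null responses, in which case the final $\thetafinal$ inherits the regret guarantee and $\sgn(\thetafinal)$ satisfies $\gtrsim\delta m$ clauses --- or some exploration policy $\pi_{\theta_t}$ already puts mass $>\frac{1}{4\Tdata}$ on non-null responses, in which case \cref{lemma:vis-val-bounds} shows $\sgn(\theta_t)$ satisfies $\gtrsim \frac{m}{\Tdata}$ clauses (here $\Tdata\le 2^k$ is used to absorb the $e^{-1/(\beta k)}$ slack). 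This dichotomy is the sole reason the soundness threshold in the lemma is $\frac{\delta m}{16\,\Tdata}$ rather than your $\frac{\delta m}{2}$; your acceptance threshold of $\delta m/2$ would not match the statement, and the fact that your argument seems to yield a $\Tdata$-independent gap is a symptom of the missing idea. To repair the proof you must (i) replace the reward oracle by the constant-$0$ oracle, (ii) run the coupling argument across the $\Tdata$ data-collection rounds, and (iii) have $\Alg'$ output the best of $\sgn(\theta_1),\dots,\sgn(\theta_q),\sgn(\thetafinal)$ (as measured by $\sum_i\pi_\theta(i)$) rather than only $\sgn(\widehat\theta)$ from the final policy.
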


\paragraph{Organization of appendix} In \cref{sec:rlhf-to-dnf_proof}, we prove \cref{thm:rlhf-to-dnf}. In \cref{sec:training_proper_proof}, we use this result to prove \cref{thm:training-proper}: in particular, if the proper alignment algorithm hypothesized in \cref{thm:training-proper} exists, then \cref{thm:rlhf-to-dnf} gives an algorithm for approximating Max-$k$-DNF that, by \cref{thm:boosted-dnf-hardness}, violates the randomized Exponential Time Hypothesis (\cref{conj:randeth}). Note that in \cref{thm:rlhf-to-dnf}, the approximation factor for $\Alg'$ depends on the algorithm's sample complexity $\Tdata(d, \beta(k,\delta), \veps(k,\delta),1/4)$, and consequently the assumption that the algorithm is data-efficient (i.e., $\Tdata$ does not scale with $\exp(\beta^{-1})$ or $\Ccov(\pistarb)$) is essential for the argument to hold. Finally, in \cref{sec:training_proof} we complete the proof of \cref{thm:training} by showing that properness of the output policy is essentially without loss of generality from a computational perspective. The hardness of approximation result for Max-$k$-DNF is deferred to \cref{sec:maxkdnf}.

\subsection{Proof of \creftitle{thm:rlhf-to-dnf}}
\label{sec:rlhf-to-dnf_proof}

Before proceeding to the proof of \cref{thm:training}, we prove \cref{thm:rlhf-to-dnf} by introducing a method for
embedding a Max-$k$-DNF formula into an instance of the online
alignment problem satisfying\arxiv{ \cref{eq:linear_reward}}.

\paragraph{Embedding a DNF formula} Given a Max-$k$-DNF formula $\varphi = (\MC_1,\dots,\MC_m)$ with $d$ variables and $m$ clauses, and some $\beta>0$, we define an instance $\cI(\varphi)$ of the online alignment problem (\cref{sec:background}) as follows. As discussed above, we set prompt space $\cX := \{\perp\}$ and omit dependences on $\perp$ henceforth. We set response space $\cY := \{0\} \cup [m]$. We set the regularization parameter to be $\beta$. The reference policy $\piref \in \Delta(\cY)$ is defined by $\piref(0) := 1-\epref$ and $\piref(i) := \epref/m$ for all $i \in [m]$, where we write $\epref := e^{-1/\beta}$. We consider the linear softmax policy class $\Pi=\{\pi_\theta:\theta\in\Theta\}$ with $\Theta := \{\theta\in\RR^d: \norm{\theta}_\infty \leq 1\}$, and with feature mapping $\phi: \cY \to \RR^d$ defined by $\phi(0) := 0$ and 
\[\phi(i)_j := \begin{cases} \frac{f_i(j)}{k} & \text{ if } j \in S_i \\ 0 & \text{ otherwise } \end{cases}\]
for each $i \in [m]$ and $j \in [d]$ (recall from \cref{def:dnf} that $S_i$ and $f_i$ are the variable set and partial assignment, respectively, corresponding to clause $\MC_i$). Finally, the reward function $\rstar: \cY \to [-1,1]$ is defined by $\rstar(y) = \langle \phi(y), \thetastar \rangle$ where $\thetastar \in \{-1,1\}^d \subset \Theta$ is any vector satisfying $\valDNF(\varphi;\thetastar) = \valDNF(\varphi)$. Since the reward is linear in $\thetastar$, \cref{ass:realizable} is satisfied, and since $\norm{\phi(y)}_2 \leq \norm{\phi(y)}_1 \leq 1$ for all $y\in\cY$ and $\Theta$ is contained in the Euclidean ball of radius $\sqrt{d}$, we see that \cref{ass:norm} is satisfied with $\Rmax := 1$ and $B := \sqrt{d}$.\footnote{Technically, \cref{ass:norm} requires the rewards to lie in $[0,1]$. This is straightforward to fix by adding a constant feature $\phi(i)_{d+1} := 1/2$, scaling all other features by a factor of $1/2$, and setting $\thetastar_{d+1} = 1$. With this modification, \cref{lemma:vis-val-bounds} still holds except the additive term $e^{-1/(\beta k)}$ becomes $e^{-1/(2\beta k)}$, and the proof of \cref{thm:rlhf-to-dnf} goes through unchanged so long as $k \geq 2$.}\loose

The following lemma relates the value of $\varphi$ to the maximum likelihood (over all policies in the policy class $\Pi$) of observing some non-zero response. More precisely, it shows that if $\valDNF(\varphi)$ is large, then $\pi_{\thetastar}$ (the optimal KL-regularized policy) puts non-trivial mass on non-zero responses; conversely, if $\valDNF(\varphi)$ is small and $\beta$ is sufficiently small, then \emph{no} policy puts non-trivial mass on non-zero responses.

\begin{lemma}\label{lemma:vis-val-bounds}
Suppose that $\beta \leq 1/\log(2)$. It holds that $\sum_{i=1}^m \pi_{\thetastar}(i) \geq \frac{\valDNF(\varphi)}{2m}$ and, for any $\theta \in \Theta$,
\[\sum_{i=1}^m \pi_\theta(i) \leq 2\left(\frac{\valDNF(\varphi;\sgn(\theta))}{m} + e^{-1/(\beta k)}\right).\]
\end{lemma}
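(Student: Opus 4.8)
The plan is to compute everything directly from the explicit forms of $\piref$, $\phi$, and the softmax parametrization, using essentially no machinery. Write $Z_\theta := (1-\epref) + \frac{\epref}{m}\sum_{i=1}^m \exp(\beta^{-1}\langle\theta,\phi(i)\rangle)$ for the normalization constant of $\pi_\theta$ (recall $\phi(0)=0$), so that $\sum_{i=1}^m \pi_\theta(i) = \frac{\epref}{m Z_\theta}\sum_{i=1}^m \exp(\beta^{-1}\langle\theta,\phi(i)\rangle)$. First I would record two elementary bounds on $Z_\theta$: since $\beta \le 1/\log 2$ we have $\epref = e^{-1/\beta} \le 1/2$, hence $Z_\theta \ge 1-\epref \ge 1/2$; and since $\langle\theta,\phi(i)\rangle = \frac1k\sum_{j\in S_i}\theta_j f_i(j) \le \frac{|S_i|}{k} \le 1$ for every $\theta \in \Theta = [-1,1]^d$ and every clause $i$, each summand is at most $e^{1/\beta} = \epref^{-1}$, and therefore $Z_\theta \le (1-\epref)+1 \le 2$.

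For the lower bound I would take $\theta = \thetastar \in \{-1,1\}^d$ and discard every clause not satisfied by $\thetastar$. For a satisfied clause each literal evaluates to $+1$, so — using that we may assume $|S_i| = k$ for all $i$ (discussed below) — $\langle\thetastar,\phi(i)\rangle = |S_i|/k = 1$, giving $\pi_{\thetastar}(i) = \frac{\epref\, e^{1/\beta}}{m Z_{\thetastar}} = \frac{1}{m Z_{\thetastar}} \ge \frac{1}{2m}$. Since $\thetastar$ satisfies exactly $\valDNF(\varphi;\thetastar) = \valDNF(\varphi)$ clauses, summing over them yields $\sum_{i=1}^m \pi_{\thetastar}(i) \ge \valDNF(\varphi)/(2m)$.

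For the upper bound I would split $[m]$ according to whether clause $i$ is satisfied by the rounded assignment $\sgn(\theta)$. A satisfied clause is handled by the bound $\langle\theta,\phi(i)\rangle \le 1$ already noted. For an unsatisfied clause there is some literal $j_0 \in S_i$ with $\sgn(\theta_{j_0}) \ne f_i(j_0)$, so $\theta_{j_0} f_i(j_0) \le 0$ while the other $|S_i|-1$ literals each contribute at most $1$; hence $\langle\theta,\phi(i)\rangle \le \frac{|S_i|-1}{k} \le 1 - \frac1k$, and the corresponding summand is at most $e^{(1-1/k)/\beta} = \epref^{-1} e^{-1/(\beta k)}$. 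Plugging both estimates into $\sum_{i=1}^m \pi_\theta(i) = \frac{\epref}{m Z_\theta}\sum_{i=1}^m \exp(\beta^{-1}\langle\theta,\phi(i)\rangle)$ and using $Z_\theta \ge 1-\epref$ gives $\sum_{i=1}^m \pi_\theta(i) \le \frac{1}{1-\epref}\bigl(\frac{\valDNF(\varphi;\sgn\theta)}{m} + e^{-1/(\beta k)}\bigr) \le 2\bigl(\frac{\valDNF(\varphi;\sgn\theta)}{m} + e^{-1/(\beta k)}\bigr)$, which is the stated inequality.

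The only step requiring care — and the main obstacle if left implicit — is the use of $|S_i| = k$ in the lower bound: if a satisfiable clause had $|S_i| < k$, then $\langle\thetastar,\phi(i)\rangle = |S_i|/k < 1$ and its contribution $\frac{\epref}{m}e^{\beta^{-1}|S_i|/k}$ would be exponentially small in $\beta^{-1}$, so the bound $\valDNF(\varphi)/(2m)$ genuinely fails. I would address this by noting that the Max-$k$-DNF hardness result used in \cref{thm:rlhf-to-dnf} (established in \cref{sec:maxkdnf} via a reduction from Max-$k$-CSP and gap amplification) can be taken to output formulas in which every clause has exactly $k$ literals — equivalently, any clause $C$ on a variable set $S$ with $|S| < k$ may be replaced by the two clauses $C \wedge x_j$ and $C \wedge \neg x_j$ for a fresh variable $x_j$, which preserves, for every assignment, whether $C$ is satisfied (and through which replacement clause), and hence preserves $\valDNF(\varphi;\cdot)$. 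The upper bound, by contrast, only ever uses $|S_i| \le k$ and needs no such normalization.
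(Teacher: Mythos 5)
Your proof is correct and follows essentially the same route as the paper's: both compute $\sum_{i=1}^m \pi_\theta(i)$ directly from the explicit normalization constant, bound satisfied clauses via $\langle\theta,\phi(i)\rangle \le 1$ and unsatisfied ones via the missing literal giving $\langle\theta,\phi(i)\rangle \le (k-1)/k$, and use $\epref \le 1/2$ (equivalently $\piref(0)\ge 1/2$) to control the denominator. The one place you go beyond the paper is the $|S_i|=k$ issue in the lower bound: the paper's proof writes $e^{\frac{1}{k\beta}\sum_{j\in S_i} f_i(j)\thetastar_j} = e^{1/\beta}$ for satisfied clauses, which silently assumes every clause has exactly $k$ literals even though \cref{def:dnf} only requires $|S_i|\le k$ (and the clauses produced by \cref{lemma:dnf-embedding,lemma:serial-repetition} can indeed have fewer than $k$ distinct variables); you are right that the lower bound genuinely fails otherwise in the relevant regime $1/(\beta k)\gg 1$, and your padding normalization is a sound fix for the lemma itself, though one should note it multiplies the clause count by up to $2^{k-|S_i|}$ per deficient clause, so the gap parameters used downstream in \cref{thm:rlhf-to-dnf} would need minor re-tuning.
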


\begin{proof}[\pfref{lemma:vis-val-bounds}]
For each $i \in [m]$ such that the assignment $\thetastar$ satisfies clause $\MC_i$, we have by definition that $\thetastar_j = f_i(j)$ for all $j \in S_i$; hence, by definition of $\phi(i)$, \[e^{\frac{1}{\beta}\langle \thetastar, \phi(i)\rangle} = e^{\frac{1}{k\beta} \sum_{j \in S_i} f_i(j) \thetastar_j} = e^{1/\beta}.\]
Since $\thetastar$ satisfies $\valDNF(\varphi;\thetastar)$ clauses, and $\piref(i) = e^{-1/\beta}/m$ for all $i \in [m]$, we get
\[\sum_{i=1}^m \pi_{\thetastar}(i) = \frac{\sum_{i=1}^m \piref(i) e^{\frac{1}{\beta} \langle \thetastar, \phi(i)\rangle}}{\piref(0) + \sum_{i=1}^m \piref(i) e^{\frac{1}{\beta} \langle \thetastar, \phi(i)\rangle}} \geq \frac{\valDNF(\varphi;\thetastar)/m}{\piref(0) + \valDNF(\varphi;\thetastar)/m} \geq \frac{\valDNF(\varphi;\thetastar)}{2m}\]
where the first inequality uses monotonicity of $z \mapsto \frac{z}{\piref(0)+z}$, and the final inequality uses that $\piref(0) \leq 1$ and $\valDNF(\varphi;\thetastar) \leq m$.

Next, for any $\theta \in \Theta$, set $x := \sgn(\theta)$. For each $i \in [m]$, we have the bound $e^{\frac{1}{\beta}\langle \theta, \phi(i)\rangle} \leq e^{\frac{1}{\beta} \norm{\theta}_\infty \norm{\phi(i)}_1} \leq e^{1/\beta}$. Additionally, if the assignment $x$ does not satisfy clause $\MC_i$, then there is some $j^\star \in S_i$ such that $\phi(i)_{j^\star} \theta_{j^\star} \leq 0$, and thus 
\[e^{\frac{1}{\beta}\langle \theta,\phi(i)\rangle} \leq e^{\frac{1}{\beta k}\sum_{j \in S_i\setminus\{j^\star\}} f_i(j) \theta_j} \leq e^{\frac{k-1}{k\beta}},\]
where the final inequality uses that $|f_i(j)| \leq 1$ and $|\theta_j| \leq 1$ for all $j \in S_i \setminus \{j^\star\}$. Recalling that $\piref(i)=\frac{1}{e^{1/\beta}m}$ for $i\in\brk{m}$, it follows that
\[\sum_{i=1}^m \piref(i) e^{\frac{1}{\beta}\langle \theta,\phi(i)\rangle} \leq \frac{1}{e^{1/\beta}m}\left(me^{\frac{k-1}{k\beta}} + \valDNF(\varphi;x) e^{1/\beta}\right) = \frac{\valDNF(\varphi;x)}{m} + e^{-1/(\beta k)}.\]
Thus,
\[\sum_{i=1}^m \pi_\theta(i) = \frac{\sum_{i=1}^m \piref(i) e^{\frac{1}{\beta}\langle \theta,\phi(i)\rangle}}{\piref(0) + \sum_{i=1}^m \piref(i) e^{\frac{1}{\beta}\langle \theta,\phi(i)\rangle}} \leq 2\sum_{i=1}^m \piref(i) e^{\frac{1}{\beta}\langle \theta,\phi(i)\rangle} \leq 2 \left(\frac{\valDNF(\varphi;x)}{m} + e^{-1/(\beta k)}\right)\]
where the first inequality uses the bound $\piref(0) = 1-e^{-1/\beta} \geq 1/2$.
\end{proof}

The following lemma implies that approximately maximizing $\sum_{i=1}^m \pi_\theta(i)$ is necessary in order to obtain low (KL-regularized) regret.

\begin{lemma}\label{lemma:regret-vis-lb}
For any $\pi \in \Delta(\cY)$,
\[J_\beta(\pi_{\thetastar}) - J_\beta(\pi) = \beta \Dkl{\pi}{\pi_{\thetastar}} \geq \beta \left( \sum_{i=1}^m \pi(i) - \sum_{i=1}^m \pi_{\thetastar}(i)\right)^2.\]
\end{lemma}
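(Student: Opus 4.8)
The plan is to establish the two assertions separately: first the exact identity $J_\beta(\pi_{\thetastar}) - J_\beta(\pi) = \beta\Dkl{\pi}{\pi_{\thetastar}}$, and then the quadratic lower bound on $\Dkl{\pi}{\pi_{\thetastar}}$.

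For the identity, I would invoke the Gibbs variational principle, which in our notation is exactly \cref{lem:kl_regret}. Since the reward is $\rstar(y) = \tri*{\thetastar,\phi(y)}$, the optimal KL-regularized policy is $\pi_{\thetastar}(y) \propto \piref(y)\exp(\beta^{-1}\rstar(y))$, i.e. $\pi_{\thetastar} = \pi_{\rstar}$ in the sense of \cref{lem:kl_regret}. Because $\piref$ has full support on the finite set $\cY = \{0\}\cup[m]$, any $\pi\in\Delta(\cY)$ can be written as $\pi_g$ for the bounded function $g(y) := \beta\log(\pi(y)/\piref(y))$; applying \cref{lem:kl_regret} with $f = g$ and $\fstar = \rstar$ then yields $J_\beta(\pi_{\thetastar}) - J_\beta(\pi) = \beta\Dkl{\pi}{\pi_{\thetastar}}$ directly. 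Equivalently, one can substitute $\rstar(y) = \beta\log(\pi_{\thetastar}(y)/\piref(y)) + \beta\log Z_{\thetastar}$ into the definition of $J_\beta(\pi)$, where $Z_{\thetastar}$ is the normalization constant of $\pi_{\thetastar}$, to obtain $J_\beta(\pi) = \beta\log Z_{\thetastar} - \beta\Dkl{\pi}{\pi_{\thetastar}}$, and then read off the claim by also taking $\pi = \pi_{\thetastar}$.

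For the lower bound, the key observation is that $\sum_{i=1}^m \pi(i) = \pi(\cY\setminus\{0\})$ and similarly $\sum_{i=1}^m \pi_{\thetastar}(i) = \pi_{\thetastar}(\cY\setminus\{0\})$, so the quantity being squared is the difference of the two measures on a single event. Since total variation distance dominates this discrepancy, $\abs*{\sum_{i=1}^m \pi(i) - \sum_{i=1}^m \pi_{\thetastar}(i)} \leq \Dtv{\pi}{\pi_{\thetastar}}$. Squaring and applying Pinsker's inequality gives $\bigl(\sum_{i=1}^m \pi(i) - \sum_{i=1}^m \pi_{\thetastar}(i)\bigr)^2 \leq \Dtv{\pi}{\pi_{\thetastar}}^2 \leq \tfrac{1}{2}\Dkl{\pi}{\pi_{\thetastar}} \leq \Dkl{\pi}{\pi_{\thetastar}}$, which completes the argument after combining with the identity from the previous paragraph.

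I do not anticipate a genuine obstacle: both steps are standard manipulations. The only point requiring a remark is the degenerate case where $\pi$ is not absolutely continuous with respect to $\pi_{\thetastar}$ and hence $\Dkl{\pi}{\pi_{\thetastar}} = +\infty$; but this cannot arise here because $\pi_{\thetastar}$ inherits full support from $\piref$, and in any case the inequality is vacuous when the divergence is infinite.
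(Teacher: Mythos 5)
Your proposal is correct and follows essentially the same route as the paper: the equality is obtained by substituting $\rstar(y) = \beta\log(\pi_{\thetastar}(y)/\piref(y)) + \beta\log Z_{\thetastar}$ (equivalently, the constancy of $\langle\thetastar,\phi(y)\rangle - \beta\log\frac{\pi_{\thetastar}(y)}{\piref(y)}$ in $y$), and the inequality follows from bounding the discrepancy on the event $\cY\setminus\{0\}$ by total variation and applying Pinsker. The only cosmetic caveat is that your first route via \cref{lem:kl_regret} requires $\pi$ to have full support so that $g=\beta\log(\pi/\piref)$ is finite, but your "equivalently" direct computation (which is the paper's argument) covers general $\pi$.
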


\begin{proof}[\pfref{lemma:regret-vis-lb}]
The inequality is a consequence of Pinsker's inequality; the equality is via the following standard manipulation. Define $Z_{\thetastar} := \sum_{y \in \cY} \piref(y) e^{\frac{1}{\beta} \langle \thetastar,\phi(y)\rangle}$. For any $y \in \cY$, we have by definition of $\pi_\thetastar(y)$ that
\[\langle \thetastar,\phi(y)\rangle - \beta \log \frac{\pi_{\thetastar}(y)}{\piref(y)} = \beta \log Z_{\thetastar}\]
which is notably independent of $y$. Thus,
\begin{align*}
J_\beta(\pi_{\thetastar}) - J_\beta(\pi)
&= \EE_{y \sim \pi_{\thetastar}}\left[\langle {\thetastar},\phi(y)\rangle - \beta \log \frac{\pi_{\thetastar}(y)}{\piref(y)}\right] - \EE_{y \sim \pi}\left[\langle \thetastar,\phi(y)\rangle - \beta \log \frac{\pi(y)}{\piref(y)}\right] \\ 
&= \EE_{y \sim \pi_{\thetastar}}\left[\langle \thetastar,\phi(y)\rangle - \beta \log \frac{\pi_{\thetastar}(y)}{\piref(y)}\right] - \EE_{y \sim \pi}\left[\langle \thetastar,\phi(y)\rangle - \beta \log \frac{\pi_{\thetastar}(y)}{\piref(y)}\right] \\ 
&\qquad + \EE_{y \sim \pi}\left[\beta \log \frac{\pi(y)}{\pi_{\thetastar}(y)}\right] \\ 
&= \beta\Dkl{\pi}{\pi_{\thetastar}}.
\end{align*}
This completes the proof.
\end{proof}

Together, \cref{lemma:vis-val-bounds,lemma:regret-vis-lb} suggest that
for appropriate parameter choices, solving the constructed online
alignment problem necessitates solving the original Max-$k$-DNF
problem. The remaining (important!) subtlety is that it is impossible to efficiently simulate the data collection when only given $\varphi$, since the reward function $\rstar$ depends on the maximally satisfying assignment $\thetastar$. Instead, we \emph{approximately} simulate the data collection by always producing reward $0$. This is only incorrect on non-zero responses, so in round $t$, it is unlikely to be incorrect unless $\sum_{i=1}^m \pi^t(i)$ is already non-negligible. In this event, the simulation may fail, but since $\Alg$ was assumed to be a proper-exploration algorithm, we can apply \cref{lemma:vis-val-bounds} to $\pi^t = \pi_{\theta_t}$ to show that $\valDNF(\varphi;\sgn(\theta_t))$ is a decent approximation for $\valDNF(\varphi)$. Thus, we get a win-win reduction, where the approximation factor scales with the number of data collection rounds. We make this argument formal below, proving \cref{thm:rlhf-to-dnf}.\loose
\vspace{1em}
\begin{proof}[Proof of \cref{thm:rlhf-to-dnf}]
  Given a proper online alignment algorithm $\Alg$ and a
  Max-$k$-DNF formula $\varphi$ with $d$ variables and $m$ clauses, and a parameter $\delta \in (0,1)$, we define $\Alg'$ to have the below behavior. Throughout the remainder of the proof, we abbreviate $\beta \ldef \beta(k,\delta)$ and $\veps \ldef \veps(k,\delta)$ for simplicity:
\begin{enumerate} 
\item Simulate $\Alg$ on the online alignment instance $\cI(\varphi)$ defined above (with regularization parameter $\beta$, error tolerance $\veps$, and failure probability $1/4$), but use reward function $\wh r(y) := 0$ instead of $\rstar$. In particular:\loose
\begin{itemize}
\item When $\Alg$ queries the sampling oracle with $\theta \in \Theta$, $\Alg'$ computes $\pi_\theta \in \Delta(\cY)$, samples $y \sim \pi_\theta$, and passes response $y$ to $\Alg$.
\item When $\Alg$ initiates data collection round $t$ with exploration policy $\pi^t = \pi_{\theta_t}$, $\Alg'$ computes $\pi_{\theta_t}$, samples $y \sim \pi_{\theta_t}$, and passes response $y$ and reward $0$ to $\Alg$. 
\end{itemize}
Let $q$ denote the number of data collection rounds. Let $\wh\pi = \pi_{\thetafinal}$ denote the final policy output by $\Alg$. %
\item Compute
\[\thetahat := \argmax_{\theta \in\{\theta_1,\dots,\theta_q,\thetafinal\}} \sum_{i=1}^m \pi_\theta(i).\]
\item Set $\wh x := \sgn(\thetahat) \in \{-1,1\}^d$. $\Alg'$ outputs \textsc{Yes} if $\valDNF(\varphi;\wh x) > \frac{\delta m}{16 \cdot S(d,\beta(k,\delta),\veps(k,\delta))}$ and \textsc{No} otherwise.
\end{enumerate}

We now analyze $\Alg'$. Suppose that $\val(\varphi) \geq \delta m$ and $\Tdata(d,\beta,\veps,1/4) \leq 2^k$. Let $\Algtil'$ denote the idealized modification of $\Alg'$ that simulates $\Alg$ with the true reward function $\rstar$ (which is computationally hard to implement). Further, let $\Algbar'$ denote the idealized modification of $\Alg'$ that simulates $\Alg$ with the true reward function only on queries $\theta_t$ with $\sum_{i=1}^m \pi_{\theta_t}(i) > \frac{1}{4 \Tdata(d,\beta,\veps,1/4)}$ (and with reward $0$ otherwise). We can couple the executions of $\Algtil'(\varphi,\delta)$ and $\Algbar'(\varphi,\delta)$ with the execution of $\Alg'(\varphi,\delta)$. Observe that the execution of $\Algtil'$ only deviates from the execution of $\Algbar'$ if there is some round $t$ where $\sum_{i=1}^m \pi_{\theta_t}(i) \leq \frac{1}{4 \Tdata(d,\beta,\veps,1/4)}$ and yet the sample $y \sim \pi_{\theta_t}$ is non-zero. For any fixed $t$, this occurs with probability at most $\frac{1}{4 \Tdata(d,\beta,\veps,1/4)}$, so by a union bound and the assumption that $\Alg$ uses at most $\Tdata(d,\beta,\veps,1/4)$ rounds, the two algorithms deviate with probability at most $1/4$. Next, the execution of $\Algbar'$ deviates from the execution of $\Alg'$ only if there is some round $t$ such that $\sum_{i=1}^m \pi_{\theta_t}(i) > \frac{1}{4\Tdata(d,\beta,\veps,1/4)}$. Thus,  %
\begin{align*} 
\Pr[\Algtil'(\varphi,\delta) \neq \Alg'(\varphi,\delta)]
&\leq \Pr[\Algtil'(\varphi,\delta) \neq \Algbar'(\varphi,\delta)] + \Pr[\Algbar'(\varphi,\delta) \neq \Alg'(\varphi,\delta)] \\ 
&\leq \frac{1}{4} + \underbrace{\Pr^{\Alg'}\left[\max_{1 \leq j \leq q} \sum_{i=1}^m \pi_{\theta_j}(i) > \frac{1}{4 \Tdata(d,\beta,\veps,1/4)}\right]}_{\text{\textdagger}}.
\end{align*}

We distinguish two cases based on the value of \textdagger. %
\begin{itemize}
    \item In the first case, if \textdagger{} is at most $1/4$, then 
    \[\Pr[\Algtil'(\varphi,\delta) \neq \Alg'(\varphi,\delta)] \leq 1/2.\]
    By the guarantee of $\Alg$, it holds with probability at least $3/4$ over the execution of $\Algtil'$ that the output $\thetafinal$ of the simulated $\Alg$ satisfies $J_\beta(\pi_\thetastar) - J_\beta(\pi_{\thetafinal}) \leq \veps$. Hence, the same bound holds with probability at least $1/4$ over the execution of $\Alg'$. Condition on this event. Then
    \[\sum_{i=1}^m \pi_{\thetahat}(i) \geq \sum_{i=1}^m \pi_{\thetafinal}(i) \geq \sum_{i=1}^m \pi_\thetastar(i) - \sqrt{\frac{\veps}{\beta}} \geq \frac{\valDNF(\varphi)}{2m} - \sqrt{\frac{\veps}{\beta}} \geq \frac{\delta}{4}\]
    where the first inequality is by definition of $\wh\theta$, the second inequality is by \cref{lemma:regret-vis-lb}, the third inequality is by \cref{lemma:vis-val-bounds}, and the fourth inequality is by assumption that $\valDNF(\varphi) \geq \delta m$ and choice of $\veps = \veps(k,\delta)$. It follows that
    \[\frac{\valDNF(\varphi;\sgn(\thetahat))}{m} \geq \frac{1}{2}\sum_{i=1}^m \pi_{\thetahat}(i) - e^{-1/(\beta k)} 
      \geq\frac{1}{2}\prn*{\frac{\delta}{4}-\prn*{\frac{\delta}{16}}^{k}} \geq \frac{\delta}{16},\]
    where the first inequality is by \cref{lemma:vis-val-bounds}, and the second inequality is by choice of $\beta = \beta(k,\delta)$.\loose
    \item In the second case, \[\text{\textdagger} = \Pr^{\Alg'}\left[\max_{1 \leq j \leq q} \sum_{i=1}^m \pi_{\theta_j}(i) > \frac{1}{4 \Tdata(d,\beta,\veps,1/4)}\right] > \frac{1}{4}.\] Condition on the event within the probability occurring. In this event, which occurs with probability at least $1/4$, we have $\sum_{i=1}^m \pi_{\thetahat}(i) \geq \frac{1}{4 \cdot \Tdata(d,\beta,\veps,1/4)}$ by definition of $\wh\theta$, and hence 
    \begin{align*} \frac{\valDNF(\varphi;\sgn(\thetahat))}{m} 
    &\geq \frac{1}{2}\sum_{i=1}^m \pi_{\thetahat}(i) - e^{-1/(\beta k)} \\
    &\geq \frac{1}{8 \cdot \Tdata(d,\beta,\veps,1/4)} - \left(\frac{\delta}{16}\right)^k \\
    &> \frac{\delta}{16 \cdot \Tdata(d,\beta,\veps,1/4)}
    \end{align*}
    where the first inequality is by \cref{lemma:vis-val-bounds}, the second is by the conditioning and the choice of $\beta$, and the third inequality is by the theorem assumption that $\Tdata(d,\beta,\veps,1/4) \leq 2^k$ and $\delta\in(0,1)$.
\end{itemize}
In both cases, the output of $\Alg'$ is therefore \textsc{Yes} with probability at least $1/4$. On the other hand, if $\valDNF(\varphi) \leq \frac{\delta m}{16 \cdot \Tdata(d, \beta(k,\delta), \veps(k,\delta),1/4)}$, then it is immediate that $\Alg'$ outputs \textsc{No}. Finally, the time complexity of $\Alg'$ is
\[\poly(d,m) \cdot \Tcomp\left(d, \frac{1}{k^2 \log(16/\delta)}, \frac{\delta^2}{16k^2 \log(16/\delta)}, 1/4\right)\]
by the assumed time complexity bound for $\Alg$ and the fact that for any given $\theta \in \Theta$, the distribution $\pi_\theta$ can be explicitly computed from $\varphi$ in time $\poly(d,m)$.
\end{proof}

\subsection{Proof of \creftitle{thm:training-proper}}
\label{sec:training_proper_proof}

We now prove \cref{thm:training-proper} by combining \cref{thm:rlhf-to-dnf} with a hardness of approximation result for Max-$k$-DNF (\cref{thm:boosted-dnf-hardness}, stated and proven in \cref{sec:maxkdnf}).
\vspace{1em}

\begin{proof}[Proof of \cref{thm:training-proper}]
  Suppose that there is a proper alignment algorithm $\Alg$ with proper output, using a strong sampling oracle and a Euclidean projection oracle for $\Theta$, that has $\Tdata(d,\beta,\veps,\delta) \leq \poly(d,\beta^{-1},\veps^{-1},\delta^{-1},\log \frac{1}{\min_{x,y} \piref(y\mid{}x)})$ and has runtime bounded by \[\Tcomp(d,\beta,\veps,\delta) =
   \ poly\prn*{d,\exp\prn{\beta^{-1}},\veps^{-1},\delta^{-1},|\cY|,\log \frac{1}{\min_{x,y} \piref(y\mid{}x)}}.\] 
The construction in \cref{thm:rlhf-to-dnf} uses $\min_{x,y}\piref(y\mid{}x) = 1/(e^{1/\beta}m)$ and, without loss of generality, $m \leq 2^d$ (in the regime $m > 2^d$, the Max-$k$-DNF problem can be solved in $\poly(m)$ time unconditionally). Thus $\log(1/\min_{x,y}\piref(y\mid{}x)) \leq \poly(\beta^{-1}, d)$. It follows from the assumed bound on $\Tdata$ that for the problem instances constructed in the proof of \cref{thm:rlhf-to-dnf}, there is a constant $c_1>0$ such that $\Tdata(d,\beta,\veps,1/4) \leq (d/(\beta \veps))^{c_1}$. Set $\delta = \delta(d) := 1/d$ and $k = k(d) := C_{\ref{thm:boosted-dnf-hardness}} \cdot (4c_1)^2 \log(d)$. Then, recalling the definitions of $\beta(k,\delta)$ and $\veps(k,\delta)$ from \cref{thm:rlhf-to-dnf},
\[\Tdata(d,\beta(k,\delta),\veps(k,\delta),1/4) \leq \left(\frac{16d k^4 \log^2(16/\delta)}{\delta^2}\right)^{c_1} \leq O(c_1^{c_1}) \cdot (d^3 \log^6(16d))^{c_1} \leq \frac{d^{4c_1}}{16} \leq 2^k\]
where the third inequality holds for all sufficiently large $d$. By \cref{thm:rlhf-to-dnf}, there is an algorithm $\Alg'$ with the following guarantees on an input $k(d)$-DNF formula $\varphi$ with $d$ variables and $m$ clauses:
\begin{itemize}
\item If $\valDNF(\varphi) \geq \delta m = m/d$, then $\Alg'$ outputs \textsc{Yes} with probability at least $1/4$.
\item If $\valDNF(\varphi) \leq m/d^{4c_1}$, then since $16 \cdot \Tdata(d,\beta(k,\delta),\veps(k,\delta),1/4) \leq d^{4c_1}$ for sufficiently large $d$, we have $\valDNF(\varphi) \leq m/(16 \cdot \Tdata(d,\beta(k,\delta),\veps(k,\delta)),1/4)$, so $\Alg'$ outputs \textsc{No}.
\end{itemize}
Next we analyze the time complexity of $\Alg'$. The construction in \cref{thm:rlhf-to-dnf} uses $|\cY| = m$ and $\min_{x,y}\piref(y\mid{}x) = 1/(e^{1/\beta}m)$. Thus, $\Tcomp(d,\beta(k,\delta),\veps(k,\delta),1/4) \leq \poly(d,m,\exp\prn*{1/\beta(k,\delta)},1/\veps(k,\delta)) \leq \poly(2^{\bigoh(\log^3(d))},m)$. So from \cref{thm:rlhf-to-dnf}, we get that the time complexity of $\Alg'$ is $\poly(2^{\bigoh(\log^3(d))},m)$.
Hence, applying \cref{thm:boosted-dnf-hardness} with parameter $c := 4c_1$ and using that $k(d) \geq C_{\ref{thm:boosted-dnf-hardness}} (4c_1)^2 \log(d)$, we get that the randomized Exponential Time Hypothesis is false (concretely, \cref{thm:boosted-dnf-hardness} rules out the possibility of time complexity $\poly(2^{\bigoh(d^{\eta})},m)$ for a constant $\eta=\eta(4c_1)$).
\end{proof}

\subsection{Proof of \creftitle{thm:training}}\label{sec:training_proof}

We now prove \cref{thm:training} by combining \cref{thm:training-proper} with the following result, which shows that any proper alignment algorithm with improper output can be efficiently bootstrapped to a proper alignment algorithm with proper output. The proof of \cref{lemma:bc-distill} follows from an analysis of log-loss behavior cloning due to \cite{rohatgi2025computational}.\loose

\begin{lemma}\label{lemma:bc-distill}
  Let $\Alg$ be a proper alignment algorithm with a strong sampling oracle and a Euclidean projection oracle for $\Theta$, that uses $\Tdata(\cdot)$ reward oracle queries and has time complexity $\Tcomp(\cdot)$, under \cref{ass:norm} with $\Rmax := 1$ and $B := \sqrt{d}$. Suppose that the output of $\Alg$ is sampleable in time $T$ per prompt. Then, in the same setting, there is a proper alignment algorithm $\Alg'$ \emph{with proper output $\pihat \in \Pi$}, that uses $\Tdata'(d,\beta, \epsilon,\delta)=\Tdata(d,\beta, \epsilon_0,\delta/2)$ reward oracle queries and has sampling oracle complexity \[
    \Tsample'(d,\beta, \epsilon,\delta) = \Tsample(d,\beta, \epsilon_0,\delta/2) + \poly\prn*{d,|\cY|, B, T,\epsilon^{-1}, \log(\delta^{-1}), \log \frac{1}{\min_{x,y} \piref(y\mid{}x)}},\]
where $\epsilon_0 := \frac{c\epsilon}{B\log(1/\min_{x,y}\piref(y\mid{}x))}$ for a universal constant $c>0$, and we assume that $\beta \in (0,1)$. Furthermore, the time complexity of $\Alg'$ is polynomial in the time complexity of $\Alg$ with parameters $(d,\beta, \epsilon_0,\delta/2)$ and $\poly\prn*{d,|\cY|, B, T,\epsilon^{-1}, \log(\delta^{-1}), \log \frac{1}{\min_{x,y} \piref(y\mid{}x)}}$.
\end{lemma}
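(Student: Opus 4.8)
\textbf{Proof plan for \cref{lemma:bc-distill}.}
The plan is to reduce the problem to a single invocation of $\Alg$ followed by a behavior-cloning step that ``distills'' the (possibly improper) output policy $\pi$ of $\Alg$ into a proper policy $\pi_{\thetahat}\in\Pi$, losing only a controlled amount of KL-regularized regret. The key structural fact is that for linear softmax policies, behavior cloning with the log-loss is both computationally efficient (it reduces to a convex program, since $\theta\mapsto\log\pi_\theta(y\mid x)$ is concave after accounting for the log-partition function) and statistically well-behaved, by the guarantee of \citet{rohatgi2025computational}. Concretely, I would first run $\Alg$ with accuracy parameter $\epsilon_0:=\tfrac{c\epsilon}{B\log(1/\min_{x,y}\piref(y\mid x))}$ and failure probability $\delta/2$, obtaining a policy $\pi$ (sampleable in time $T$ per prompt) with $J_\beta(\pistarb)-J_\beta(\pi)\le\epsilon_0$ with probability $\ge 1-\delta/2$. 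Since $J_\beta(\pistarb)-J_\beta(\pi)=\beta\,\Dkl{\pi}{\pistarb}$ by the identity in \cref{lemma:regret-vis-lb} (which holds for arbitrary $\pi$, not just proper $\pi$, by the same manipulation using that $\pistarb=\pi_{\thetastar}$), this gives $\Dkl{\pi}{\pistarb}\le\epsilon_0/\beta$.

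Next I would invoke the log-loss behavior cloning procedure of \citet{rohatgi2025computational} on samples drawn from $\pi$: draw $n=\poly(d,|\cY|,B,\epsilon^{-1},\log(\delta^{-1}),\log\tfrac{1}{\min\piref})$ prompts $x\sim\rho$ and responses $y\sim\pi(\cdot\mid x)$ (each sample costing $T$ time and one weak sampling-oracle query, plus the sampling-oracle queries internal to generating $\pi$), then solve the convex program $\thetahat=\argmin_{\theta\in\Theta}\tfrac1n\sum_i -\log\pi_\theta(y_i\mid x_i)$ using the Euclidean projection oracle for $\Theta$. The guarantee of \citet{rohatgi2025computational} yields, with probability $\ge 1-\delta/2$, a bound of the form $\Dkl{\pi_{\thetahat}(\cdot\mid x)}{\pi(\cdot\mid x)}$ in expectation over $x\sim\rho$ that is $\lesssim \epsilon/(B\log(1/\min\piref))$ plus the approximation error $\min_{\theta}\En_{x}\Dkl{\pi(\cdot\mid x)}{\pi_\theta(\cdot\mid x)}$; the latter is at most $\Dkl{\pi}{\pistarb}$ (taking $\theta=\thetastar$), which we already bounded. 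One should be mildly careful about the direction of KL in the behavior-cloning guarantee versus what we need, but either direction can be converted at the cost of the density-ratio factor $\log(1/\min_{x,y}\piref(y\mid x))$, which is why this factor appears in $\epsilon_0$.

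Finally I would assemble the regret bound: $J_\beta(\pistarb)-J_\beta(\pi_{\thetahat})=\beta\,\Dkl{\pi_{\thetahat}}{\pistarb}$, and by the triangle-like inequality for KL (or directly, $\Dkl{\pi_{\thetahat}}{\pistarb}\le 2\Dkl{\pi_{\thetahat}}{\pi}+2\Dkl{\pi}{\pistarb}$ up to the density-ratio factor, using boundedness of the log-likelihood ratios which are all $O(B/\beta + \log(1/\min\piref))$ under \cref{ass:norm}), we get $J_\beta(\pistarb)-J_\beta(\pi_{\thetahat})\lesssim \epsilon$ after choosing the constant $c$ in $\epsilon_0$ appropriately and absorbing the $\beta$ factors (using $\beta\in(0,1)$). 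A union bound over the two failure events gives total failure probability $\delta$. The sampling-oracle and time complexity bounds follow by adding the cost of running $\Alg$ at parameters $(d,\beta,\epsilon_0,\delta/2)$ to the $\poly(d,|\cY|,B,T,\epsilon^{-1},\log(\delta^{-1}),\log\tfrac1{\min\piref})$ cost of drawing the behavior-cloning dataset and solving the convex program. The main obstacle I anticipate is bookkeeping the conversion between the two directions of KL divergence and tracking exactly how the density-ratio parameter $\log(1/\min_{x,y}\piref(y\mid x))$ and the norm bound $B$ enter the behavior-cloning error bound, so as to justify the precise form of $\epsilon_0$; this is where one must carefully cite and instantiate the guarantee of \citet{rohatgi2025computational} rather than re-deriving it.
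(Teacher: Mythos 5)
Your proposal follows the paper's proof almost exactly: run $\Alg$ once at refined accuracy $\epsilon_0$ and confidence $\delta/2$, use \cref{lem:kl_regret} to convert its regret guarantee into $\Dkl{\pi}{\pi_{\thetastar}}\leq\epsilon_0/\beta$, behavior-clone the output with the log-loss procedure of \citet{rohatgi2025computational} (efficient because the log-likelihood is concave in $\theta$), bound the approximation error of the BC step by realizability via $\thetastar$, and assemble with a union bound. Two points where your write-up glosses over the actual mechanism: first, the cited guarantee (Proposition~D.2 of \citet{rohatgi2025computational}) is stated in \emph{squared Hellinger distance}, not KL, with the misspecification term $\log(W)\cdot\min_\theta\Dhels{\pi_\theta}{\pi}$ where $W=e^{2B}/\min_{x,y}\piref(y\mid x)$; second, your ``triangle-like inequality for KL'' does not exist as such, and the paper's final chain is precisely the device that replaces it — convert $\Dkl{\pihat}{\pi_{\thetastar}}$ to $\Dhels{\pihat}{\pi_{\thetastar}}$ at cost $\log(W)$ (Yang--Barron), apply the genuine triangle inequality for the Hellinger \emph{metric}, and convert back using $\Dhels{\cdot}{\cdot}\leq\Dkl{\cdot}{\cdot}$. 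You flag the KL-direction issue and your claimed cost ($\polylog$ of the density ratio) is correct, so the plan is sound, but to make it a proof you would need to route the argument through Hellinger exactly as the paper does rather than asserting an approximate KL triangle inequality; working symmetrically in Hellinger is also what dissolves your worry about which direction of KL the BC guarantee controls.
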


\begin{proof}[Proof of \cref{lemma:bc-distill}]
For any $\theta \in \Theta$, $x\in\cX$, and $y \in \cY$, observe that
$\pi_\theta(y\mid{}x) \geq e^{-2B} \piref(y\mid{}x)$. It follows that for any $\pi:\cX\to\Delta(\cY)$,
\[\max_{x,y} \max_{\theta\in\Theta} \frac{\pi(y\mid{}x)}{\pi_\theta(y\mid{}x)} \leq W := \frac{e^{2B}}{\min_{x,y} \piref(y\mid{}x)}.\]
We will use a slight generalization of Proposition~D.2 in \cite{rohatgi2025computational}, which shows that there is an algorithm that takes an integer $n$, the norm bound $B$ from \cref{ass:norm}, some $\delta \in (0,1)$, and $n$ i.i.d. samples $(x\ind{i},y\ind{i})_{i=1}^n$ from $\pi$, and produces $\thetahat \in\Theta$ satisfying, with probability at least $1-\delta$,
\begin{align}
  \label{eq:bc}\Dhels{\pi_{\thetahat}}{\pi} \lesssim \frac{(d\log(Bn)+\log(W))\log(1/\delta)}{n} + \log(W) \cdot \min_{\theta\in\Theta} \Dhels{\pi_\theta}{\pi}.\end{align}
Additionally, the time complexity is $\poly(n,d,|\cY|, B, \log(1/\delta))$ (to be precise, Proposition~D.2 is specialized to the case where $\piref$ is uniform on $\cY$, so that $W = e^{2B}/|\cY|$, but this generalization is immediate from inspecting the proof; also, in our setting the horizon parameter $H$ from the proposition is equal to one). The algorithm is essentially gradient ascent on the log-likelihood for a set of \iid samples from $\pi$, which is concave in $\theta$ \citep{rohatgi2025computational}.

This motivates defining $\Alg'$ as follows. Execute $\Alg$ with parameters $\epsilon_0$ and $\delta/2$, and let $\pi$ be the output. Then, for a parameter $n$ to be determined, draw $n$ samples $(x\ind{i})_{i=1}^n$ from the prompt distribution $\rho$, and for each draw $y\ind{i} \sim \pi(\cdot\mid{}x\ind{i})$. Execute the algorithm described above with failure probability $\delta/2$, to compute $\thetahat \in \Theta$, and output $\pihat := \pi_{\thetahat}$.

We now analyze $\Alg'$. With probability at least $1-\delta/2$, it holds that $\beta \Dkl{\pi}{\pi_{\thetastar}} = J_\beta(\pi_{\thetastar}) - J_\beta(\pi) \leq \epsilon_0$, where the equality is by \cref{lem:kl_regret}. Also, by \cref{eq:bc}, with probability at least $1-\delta/2$ it holds that 
\begin{equation} 
\Dhels{\pihat}{\pi} \lesssim \epsilon_0 + \log(W) \cdot \Dhels{\pi_{\thetastar}}{\pi},
\label{eq:distill-hels}
\end{equation}
so long as $n \geq \poly(d, \epsilon_0^{-1}, \log(BW/\delta))$. Condition on the event that both of these bounds hold. Then
\begin{align*}
J_\beta(\pi_{\thetastar}) - J_\beta(\pihat) 
&= \beta \Dkl{\pihat}{\pi_{\thetastar}} \\ 
&\lesssim \beta \log(W) \cdot \Dhels{\pihat}{\pi_{\thetastar}} \\ 
&\lesssim \beta \log(W) \cdot (\Dhels{\pihat}{\pi} + \Dhels{\pi}{\pi_{\thetastar}}) \\ 
&\lesssim \beta\log(W) \cdot \epsilon_0 + \beta \log^2(W) \Dhels{\pi}{\pi_{\thetastar}} \\ 
&\leq \log(W) \cdot \epsilon_0 + \beta \log^2(W) \Dkl{\pi}{\pi_{\thetastar}} \\ 
&\lesssim \log^2(W) \cdot \epsilon_0
\end{align*}
where the equality is by \cref{lem:kl_regret}, the first inequality is by Lemma~4 of \cite{yang1998asymptotic}, the second inequality uses the fact that $\Dhel{\cdot}{\cdot}$ is a metric, the third inequality is by \cref{eq:distill-hels}, and the fourth inequality uses e.g. (7.33) in \cite{Polyanskiy_Wu_2025} as well as the bound $\beta \leq 1$. By definition of $W$ and $\epsilon_0$ (so long as the constant $c>0$ is sufficiently small), we can bound the above by $\epsilon$, so $\Alg'$ satisfies the correctness desideratum of a proper alignment algorithm. 

Since the second phase of $\Alg'$ uses no reward oracle queries, the claimed bound on total reward oracle queries is immediate from the parameter choices in the call to $\Alg$. Additionally, the claimed time complexity bound follows from the guarantee from \cite{rohatgi2025computational}, and the assumption that the output of $\Alg$ is sampleable in time $T$, so long as we choose $n = \poly(d, \epsilon_0^{-1}, \log(BW/\delta))$.
\end{proof}

\begin{proof}[Proof of \cref{thm:training}]
Suppose that there is a proper alignment algorithm with access to a strong sampling oracle and a Euclidean projection oracle for $\Theta$, that uses at most $\poly(d,\beta^{-1},\veps^{-1},\delta^{-1})$ reward oracle queries and has time complexity at most $\poly(d,\exp(\beta^{-1}),\veps^{-1},\delta^{-1})$, under \cref{ass:norm} with $\Rmax:= 1$ and $B := \sqrt{d}$. Without loss of generality (assuming that the output policy is represented by a circuit), the output policy is sampleable in time $T := \poly(d,\exp(\beta^{-1}),\veps^{-1},\delta^{-1})$. Thus, by \cref{lemma:bc-distill} and choice of $B$, in the same setting, there is a proper alignment algorithm with proper output that uses at most $\poly(d,\beta^{-1},\veps^{-1},\delta^{-1},\log(1/\min_{x,y}\piref(y\mid{}x)))$ reward oracle queries and has time complexity at most $\poly(d,\exp(\beta^{-1}),\veps^{-1},\delta^{-1},|\cY|, \log(1/\min_{x,y}\piref(y\mid{}x)))$. Note that the time complexity also bounds the number of sampling oracle queries. It follows from \cref{thm:training-proper} that the Randomized Exponential Time Hypothesis is false.
\end{proof}

\subsection{Hardness of Approximation for Max-$k$-DNF}
\label{sec:maxkdnf}

In this section we prove the following hardness of approximation result\arxiv{, which is needed in the proof of} \cref{thm:training}.\loose

\begin{theorem}\label{thm:boosted-dnf-hardness}
Fix any $c > 1$ and function $k: \NN \to \NN$. Suppose that $k(n) \geq C_{\ref{thm:boosted-dnf-hardness}}c^2\log(n)$ for a sufficiently large universal constant $C_{\ref{thm:boosted-dnf-hardness}}$. There is $\eta = \eta(c) > 0$ with the following property. Suppose that there is a time-$\poly(2^{n^\eta}, m)$ algorithm $\Alg$ that, given a $k(n)$-DNF formula $\varphi$ with $n$ variables and $m$ clauses, has the following behavior:
\begin{enumerate}
\item If $\valDNF(\varphi) \geq m/n$, then $\Alg$ outputs \textsc{Yes} with probability at least $1/4$.
\item If $\valDNF(\varphi) \leq m/n^c$, then $\Alg$ outputs \textsc{No}.
\end{enumerate}
Then the randomized Exponential Time Hypothesis is false.
\end{theorem}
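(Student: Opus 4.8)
The goal is to reduce \textsc{3-SAT}---which is $2^{\Omega(\nu)}$-hard for randomized algorithms on instances with $\nu$ variables under \cref{conj:randeth}---to the promise version of Max-$k$-DNF in the statement, in randomized time $2^{o(\nu)}$, contradicting \cref{conj:randeth}. I would build the reduction as the composition of two pieces: (1) a \emph{strong probabilistically checkable proof} for \textsc{3-SAT} with perfect completeness, polynomially small soundness, polynomially bounded proof length/randomness, and only \emph{logarithmically} many queries relative to the inverse soundness; and (2) a value-preserving encoding of such a verifier as a bounded-width DNF. Piece (2) is a syntactic transformation (essentially the min-term expansion of the associated Max-$k$-CSP, as the text indicates), and piece (1) is where essentially all of the work and the parameter subtlety lies.

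For piece (2): given a verifier that, on random string $R$, reads a window of $q$ proof bits and accepts iff those bits lie in a set $\cP_R$ of ``good'' configurations, introduce one DNF variable per proof bit, and for every pair $(R,\sigma)$ with $\sigma\in\cP_R$ create a clause equal to the conjunction of the $q$ literals asserting ``window $R$ reads $\sigma$''. Given a proof $\Pi$, for each $R$ exactly one clause in its group can be satisfied, and it is satisfied iff the verifier accepts $\Pi$ on randomness $R$; hence $\valDNF(\varphi) = (\#\text{random strings})\cdot\max_\Pi\Pr_R[\text{accept}]$. Writing $n$ for the number of DNF variables ($=$ proof length) and $m$ for the number of clauses, and bounding $m\le(\#\text{random strings})\cdot\max_R|\cP_R|$, perfect completeness yields the YES bound $\valDNF(\varphi)\ge m/n$ provided the proof length exceeds $\max_R|\cP_R|$, and soundness $s$ yields the NO bound $\valDNF(\varphi)\le m/n^c$ provided $s\le n^{-c}$; the DNF has width $q$, so we need $q\le C_{\ref{thm:boosted-dnf-hardness}}c^2\log n$. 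The reduction runs in time $\poly(m)$, which is $2^{o(\nu)}$ as soon as the PCP has $\poly(\nu)$ proof length, $O(\log\nu)$ randomness, and $q=O(\log\nu)$; choosing $\eta=\eta(c)$ to be any positive constant smaller than the reciprocal of the PCP's polynomial blowup exponent then makes $\poly(2^{n^\eta},m)=2^{o(\nu)}$.

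For piece (1), I would take a \textsc{3-SAT} PCP with perfect completeness, soundness $\delta$, binary alphabet, query complexity $q=\Theta(\log(1/\delta))$, proof length $\poly(\nu)$, and $O(\log\nu+\log(1/\delta))$ random bits, obtained from a constant-query, constant-soundness, quasilinear-size PCP (the PCP theorem) by gap amplification; then set $\delta=n^{-\Theta(c)}$ so that $s=\delta\le n^{-c}$ while $q=\Theta(c\log n)\le C_{\ref{thm:boosted-dnf-hardness}}c^2\log n$---this is exactly where the \emph{quadratic} slack $k\ge C_{\ref{thm:boosted-dnf-hardness}}c^2\log n$ (rather than merely $c\log n$) is consumed, to absorb both the query count and the per-window blowup $\max_R|\cP_R|$. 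Finally, the one-sided randomized guarantee of $\Alg$ (output \textsc{Yes} with probability $\ge1/4$ in the YES case, \textsc{No} with probability $1$ in the NO case) composes with the one-sided, perfect-completeness PCP chain to give a randomized \textsc{3-SAT} decider that is always correct on unsatisfiable instances and correct with probability $\ge1/4$ on satisfiable ones; $O(1)$ independent repetitions (output \textsc{Yes} if any run does) boost this to $\ge1/2$, the exact form ruled out by \cref{conj:randeth}.

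The main obstacle is piece (1): realizing piece (2)'s requirements---\emph{polynomially small} soundness, \emph{polynomial} proof-length and randomness blowup, and query complexity (hence DNF width) only \emph{logarithmic} in $1/\delta$ while $\max_R|\cP_R|$ stays below the proof length---simultaneously. The naive routes fail: plain parallel repetition amplifies the gap but blows the instance up quasi-polynomially (killing the $2^{o(\nu)}$ budget), whereas simple sequential query-repetition leaves the proof length unchanged and so cannot push the soundness below a fixed polynomial in the proof length without making $q$ and $\max_R|\cP_R|$ too large relative to it. Threading this needle---and doing the bookkeeping so that the promise lands exactly in the $\valDNF\ge m/n$ versus $\valDNF\le m/n^c$, width-$\le C_{\ref{thm:boosted-dnf-hardness}}c^2\log n$ window, with $\eta$ depending only on $c$---is the delicate technical point, and is presumably handled in the referenced section via a carefully chosen amplification primitive (e.g.\ sub-constant-error PCPs or derandomized parallel repetition of structured PCPs).
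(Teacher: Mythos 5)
Your overall architecture (reduce from \textsc{3-SAT} via a CSP/PCP hardness result, encode accepting window-configurations as DNF clauses via min-term expansion, and amplify the gap) matches the paper's shape, and your piece (2) is essentially the paper's \cref{lemma:dnf-embedding}. However, the proposal has a genuine gap: you leave the entire amplification step --- which is where the theorem actually lives --- unresolved, deferring it to an unspecified ``carefully chosen amplification primitive (e.g.\ sub-constant-error PCPs or derandomized parallel repetition).'' No such machinery is needed, and the paper does not use it.

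Worse, you explicitly dismiss the route that works. You claim sequential query-repetition ``cannot push the soundness below a fixed polynomial in the proof length without making $q$ and $\max_R|\cP_R|$ too large relative to it.'' That is false once the base hardness is for a \emph{sparse} predicate. The paper starts from \citet{chan2016approximation}: the Hadamard predicate $P$ on $k_0$ bits has only $2k_0$ accepting assignments yet soundness $2k_0/2^{k_0}+\veps_0$, i.e.\ per-window soundness exponentially small in the number of accepting assignments. Converting to DNF gives $2k_0 m$ clauses with YES-fraction $\approx (1-\veps_0)/(2k_0)$ and NO-fraction $\approx 2^{-k_0}$. Serial repetition at the DNF level (ANDing $t$-tuples of clauses, \cref{lemma:serial-repetition}) --- which is exactly sequential repetition of the verifier --- raises both fractions to the $t$-th power. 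With $t = \lfloor \log n/\log\frac{2k_0}{1-\veps_0}\rfloor$ the YES value stays at $\geq m'/n$ (your constraint $\max_R|\cP_R|\le n$ becomes $(2k_0)^t\le n$, which holds by this choice of $t$), while the NO value drops to $\leq m'/n^c$ precisely because $2^{-k_0}+\veps_0 \leq ((1-\veps_0)/(2k_0))^{2c}$ for $k_0 = \Theta(c^2)$ (\cref{eq:k0-choice}); the width is $k_0 t = O(c^2\log n)$ and the clause count $(2k_0m)^t$ is absorbed by the $\poly(2^{n^\eta},m)$ time budget since the variable count $n$ never changes. The missing idea in your proposal is thus the choice of starting point --- approximation resistance of a sparse predicate --- which is exactly what makes the elementary repetition you rejected go through.
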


Essentially, the above result states that for a Max-$k(n)$-DNF formula with $n$ variables, for any constant $c>1$ and so long as $k(n)$ is sufficiently large, it is computationally hard to distinguish between the case that $a := 1/n$ fraction of clauses are satisfiable versus $b := 1/n^c$ fraction of clauses are satisfiable. For the application to \cref{thm:training}, it is critical that the approximation gap $a/b$ is larger than $1/a$, since the gap generated by the reduction \cref{thm:rlhf-to-dnf} scales with $\Tdata(n,\beta(k,\delta),\veps(k,\delta),1/4)$, which in turn scales polynomially with $1/\delta = 1/a$. Additionally, we remark that it is important for $k$ to grow with $n$, since sampling a random assignment gives an efficient $2^k$-approximation algorithm.

To prove \cref{thm:boosted-dnf-hardness}, we use a result by \cite{chan2016approximation}, which states that for any constant $k$, there is a sparse $k$-predicate $P$ so that Max-$P$ (i.e. Max-$k$-CSP with predicate $P$) is hard to approximate to any factor better than $2^k/(2k)$. We then reduce Max-$P$ to Max-$k$-DNF (using sparsity of $P$ to control the decay in satisfiability thresholds) and then apply serial repetition to boost the gap. To make this approach formal, we start with the following definition.

\begin{definition}[Max-$P$ formula]\label{def:max-p}
Fix $n,m,k \in \NN$ and any $P: \{-1,1\}^k \to \{0,1\}$. A \emph{Max-$P$ formula with $n$ variables and $m$ clauses} is a tuple $\varphi = (\MC_1,\dots,\MC_m)$, where each \emph{clause} $\MC_i$ is a tuple $(v_i,b_i)$ where $v_i \in [n]^k$ and $b_i \in \{-1,1\}^k$. The \emph{value} of $\varphi$ is 
\[\val_P(\varphi) := \max_{x \in \{-1,1\}^n} \val_P(\varphi;x)\] 
where 
\[\val_P(\varphi;x) := \sum_{i=1}^m \mathbbm{1}[P(b_{i1}x_{v_{i1}},\dots,b_{ik}x_{v_{ik}}) = 1].\]
We say that the set of \emph{accepting assignments} of $P$ is $P^{-1}(1)$.
\end{definition}

 Note, for example, that \cref{def:dnf} corresponds to the predicate $P(x_1,\dots,x_k) = \mathbbm{1}[x_1=\dots=x_k=1]$. The following hardness result is essentially due to \cite{chan2016approximation}.

\begin{theorem}\label{thm:max-csp-hardness}
Let $k \in \NN$ and $\veps>0$. There is a predicate $P: \{-1,1\}^k \to \{0,1\}$ with at most $2k$ accepting assignments, and a real number $\gamma = \gamma(k,\veps) \in (0,1)$ such that the following property holds: suppose that there is a time-$O(2^{n^\gamma})$ algorithm that, given a Max-$P$ formula $\varphi$ with at most $n$ variables and $n$ clauses, has the following behavior:
\begin{enumerate}
\item If $\varphi$ is at least $(1-\veps)$-satisfiable, then it outputs \textsc{Yes} with probability at least $1/2$.
\item If $\varphi$ is at most $(2k/2^k + \veps)$-satisfiable, then it outputs \textsc{No}.
\end{enumerate}
Then the Randomized Exponential Time Hypothesis (\cref{conj:randeth}) is false.
\end{theorem}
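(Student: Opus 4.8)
Modulo replacing $\NP$-hardness by a sub-exponential running-time lower bound under randomized ETH, this is exactly the approximation-resistance theorem of \citet{chan2016approximation}. That work constructs, for every $k$ (at least a universal constant) and every $\veps>0$, a predicate $P=P_{k,\veps}\colon\{-1,1\}^k\to\{0,1\}$ whose accepting set $P^{-1}(1)$ is the support of a pairwise-independent distribution over $\{-1,1\}^k$; instantiating the construction appropriately (rounding $k$ up to a convenient value and padding with dummy coordinates when needed) one may take $|P^{-1}(1)|\le 2k$. Since a uniformly random assignment satisfies a single Max-$P$ clause with probability $|P^{-1}(1)|/2^k\le 2k/2^k$, the ``random-assignment'' value of any Max-$P$ instance is at most $2k/2^k$, and Chan shows no polynomial-time algorithm does essentially better: there is a polynomial-time reduction from standard Label Cover (with target soundness $s_0=s_0(k,\veps)$) to Max-$P$ that is $(1-\veps)$-complete and maps $s_0$-satisfiable Label Cover instances to Max-$P$ instances of value at most $2k/2^k+\veps$. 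The only thing the present statement adds is a careful accounting of the polynomial size blowups along the reduction pipeline, so that a $2^{M^{\gamma}}$-time algorithm for gap-Max-$P$ on instances of size $M$ collapses to a $2^{o(n)}$-time algorithm for \textsc{3-SAT} once $\gamma$ is taken small enough in terms of $k$ and $\veps$.

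Concretely, I would chain three standard reductions, all running in polynomial time with polynomial size blowup. (i) The PCP theorem (e.g., Dinur's gap-amplification proof) reduces a \textsc{3-SAT} formula with $n$ clauses to a Label Cover instance of size $n^{a_1}$ for an absolute constant $a_1$, perfectly satisfiable when $\varphi$ is and of value at most $1-\delta_0$ otherwise, $\delta_0>0$ universal. (ii) Applying Raz's parallel repetition theorem $O(\log(1/s_0(k,\veps)))$ times drives the soundness below $s_0(k,\veps)$; this is a polynomial-time reduction with size blowup $N\mapsto N^{a_2(k,\veps)}$, the exponent depending only on $k,\veps$ through the target soundness. (iii) Chan's long-code-based reduction maps this Label Cover instance to a Max-$P_{k,\veps}$ instance with size blowup $N\mapsto N^{a_3(k,\veps)}$. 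Composing, and padding with dummy variables if necessary, \textsc{3-SAT} with $n$ clauses is mapped in polynomial time to a Max-$P$ formula with at most $M:=n^{C(k,\veps)}$ clauses and at most $M$ variables, where $C(k,\veps):=a_1\,a_2(k,\veps)\,a_3(k,\veps)$, and the composed instance has value $\ge 1-\veps$ if $\varphi$ is satisfiable and value $\le 2k/2^k+\veps$ if $\varphi$ is unsatisfiable.

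It then remains to set $\gamma(k,\veps):=1/(2\,C(k,\veps))\in(0,1)$ and derive the contradiction. If $\Alg$ were the hypothesized $O(2^{M^{\gamma}})$-time randomized algorithm for gap-Max-$P$, then running the composed reduction on a \textsc{3-SAT} formula $\varphi$ with $n$ clauses and feeding the output to $\Alg$ decides satisfiability of $\varphi$ with one-sided error: if $\varphi$ is satisfiable the Max-$P$ instance is $(1-\veps)$-satisfiable and $\Alg$ answers \textsc{Yes} with probability $\ge 1/2$; if $\varphi$ is unsatisfiable the instance has value $\le 2k/2^k+\veps$ and $\Alg$ answers \textsc{No} for certain --- precisely the guarantee in \cref{conj:randeth}. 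The total running time is $\poly(n)+O(2^{M^{\gamma}})=O(2^{n^{C(k,\veps)\gamma(k,\veps)}})=O(2^{n^{1/2}})=2^{o(n)}$, contradicting the randomized Exponential Time Hypothesis. (The hypothesis that $k(n)$ grows like $\log n$ in \cref{thm:boosted-dnf-hardness} plays no role here, since $k$ is a fixed integer in the present theorem; it enters only when \cref{thm:max-csp-hardness} is subsequently serially repeated.)

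\textbf{Main obstacle.} There is no new idea; the content is bookkeeping on top of \citet{chan2016approximation}, and the two points needing care are: (a) confirming that Chan's reduction is indeed from \emph{standard} Label Cover (hence unconditionally $\NP$-hard via the PCP theorem and parallel repetition) with a polynomial size blowup whose exponent depends only on $k$ and $\veps$ --- so that composing with a polynomial-blowup PCP yields a genuine sub-exponential lower bound and not merely an $\NP$-hardness statement; and (b) pinning down an instantiation of the pairwise-independent-subgroup construction with accepting set of size \emph{at most} $2k$ (the generic bound is $O(k)$), since the downstream arguments in \cref{thm:boosted-dnf-hardness} and \cref{thm:training} are quantitatively sensitive both to the value $|P^{-1}(1)|\le 2k$ and to $\gamma$ being a fixed positive constant.
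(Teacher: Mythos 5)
Your proposal is correct and follows essentially the same route as the paper: the paper simply cites Corollary~1.2 of Chan (2016) for the Hadamard predicate (which has at most $2k$ accepting assignments), treats the resulting NP-hardness as a black-box polynomial-time reduction from \textsc{3-SAT} with blowup $n^{C(k,\veps)}$, sets $\gamma=1/(2C(k,\veps))$, and derives the contradiction with randomized ETH exactly as you do. The extra detail you give about the PCP/parallel-repetition/long-code pipeline inside Chan's result is bookkeeping the paper leaves to the citation.
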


\begin{proof}[\pfref{thm:max-csp-hardness}]Corollary~1.2 of \cite{chan2016approximation} states that for any positive integer $k$ and any $\veps>0$, for the Hadamard predicate $P: \{-1,1\}^k \to \{0,1\}$, which has at most $2k$ accepting assignments, distinguishing between $(1-\veps)$-satisfiability and $(2k/2^k+\veps)$-satisfiability of a Max-$P$ formula is NP-hard.\footnote{In fact, \cite{chan2016approximation} shows that one can even take $\veps = o(1)$, but for our purposes a constant suffices.} Thus, there is a polynomial-time reduction $\cR$ that takes as input any 3SAT formula $\tau$ with $n$ variables and $n$ clauses, and produces a Max-$P$ formula $\varphi$ with $m \leq \poly(n)$ clauses such that $\val_P(\varphi) \geq (1-\veps)m$ if $\tau$ is satisfiable, and $\val_P(\varphi) \leq (2k/2^k+\veps)m$ otherwise. There is a constant $C = C(k,\veps)>0$ such that $\cR$ has time complexity $O(n^{C(k,\veps)})$. Now let $\gamma := 1/(2 C(k,\veps))$ and suppose that there is indeed a time-$O(2^{n^\gamma})$ randomized algorithm $\Alg$ with the behavior specified in the theorem statement. Then on input $\tau$ with $n$ variables and $n$ clauses, it follows from the time complexity bound on $\cR$ that the output $\varphi$ of $\cR$ has at most $O(n^{C(k,\veps)})$ variables and clauses, so $\Alg(\varphi)$ runs in time $2^{O(n^{\gamma \cdot C(k,\veps)})} = 2^{o(n)}$ by choice of $\gamma$. If $\tau$ is satisfiable, then it outputs \textsc{Yes} with probability at least $1/2$, and otherwise it outputs \textsc{No}. This contradicts \cref{conj:randeth}.
\end{proof}

\begin{lemma}[DNF embedding]\label{lemma:dnf-embedding}
Fix $k \in \NN$ and let $P: \{-1,1\}^k \to \{0,1\}$ be a predicate with $\ell$ accepting assignments. Then there is a $\poly(n,m)$-time algorithm that takes as input a $P$-formula $\varphi$ with $n$ variables and $m$ clauses, and outputs a Max-$k$-DNF formula $\varphi'$ with $n$ variables and $\ell m$ clauses, satisfying $\val_P(\varphi) = \valDNF(\varphi')$.\loose
\end{lemma}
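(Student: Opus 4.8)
The plan is to give an explicit per-clause gadget. First I would list the accepting assignments $P^{-1}(1) = \{z^{(1)}, \dots, z^{(\ell)}\} \subseteq \{-1,1\}^k$ (this takes time polynomial in a description of the fixed predicate $P$, and is $\poly(n)$ in the $k = O(\log n)$ regime relevant to \cref{thm:boosted-dnf-hardness}). Then, for each clause $\MC_i = (v_i, b_i)$ of the input $P$-formula $\varphi$ and each accepting assignment $z^{(t)}$, I would emit one Max-$k$-DNF clause $\MC'_{i,t}$ designed so that an assignment $x \in \{-1,1\}^n$ satisfies $\MC'_{i,t}$ precisely when $(b_{i1}x_{v_{i1}}, \dots, b_{ik}x_{v_{ik}}) = z^{(t)}$. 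Because $b_{ij} \in \{-1,1\}$, this condition rearranges to ``$x_{v_{ij}} = b_{ij} z^{(t)}_j$ for all $j \in [k]$'', which is exactly a conjunction of literals, i.e. a Max-$k$-DNF clause in the sense of \cref{def:dnf} with variable set $S_{i,t} = \{v_{i1}, \dots, v_{ik}\}$ and partial assignment $f_{i,t}(v_{ij}) = b_{ij}z^{(t)}_j$. The output $\varphi'$ is the list of all $\ell m$ such clauses; it uses the same $n$ variables.

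For correctness, I would fix an arbitrary $x \in \{-1,1\}^n$ and compare $\val_P(\varphi; x)$ with $\valDNF(\varphi'; x)$ clause block by clause block (indexed by $i$). The $i$-th clause of $\varphi$ contributes $1$ to $\val_P(\varphi;x)$ iff $(b_{i1}x_{v_{i1}},\dots,b_{ik}x_{v_{ik}}) \in P^{-1}(1)$. Since the $z^{(t)}$ are pairwise distinct, this tuple can equal \emph{at most one} $z^{(t)}$, so that contribution equals $\sum_{t=1}^\ell \mathbbm{1}[(b_{i1}x_{v_{i1}},\dots,b_{ik}x_{v_{ik}}) = z^{(t)}] = \sum_{t=1}^\ell \mathbbm{1}[x \text{ satisfies } \MC'_{i,t}]$ by construction of $\MC'_{i,t}$, which is precisely the contribution of the block $\MC'_{i,1},\dots,\MC'_{i,\ell}$ to $\valDNF(\varphi';x)$. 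Summing over $i$ gives $\val_P(\varphi;x) = \valDNF(\varphi';x)$ for every $x$, and taking the maximum over $x$ yields $\val_P(\varphi) = \valDNF(\varphi')$. The construction clearly runs in $\poly(n,m)$ time (plus the one-time enumeration of $P^{-1}(1)$).

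The one point that needs care---and the only place I expect any friction---is the degenerate possibility that a coordinate is repeated inside a clause, i.e. $v_{ij} = v_{ij'}$ with $j \ne j'$; then the ``partial assignment'' $f_{i,t}$ above may be ill-defined, since it would demand $b_{ij}z^{(t)}_j = b_{ij'}z^{(t)}_{j'}$. This is harmless: if the two demanded values disagree, the event $(b_{i1}x_{v_{i1}},\dots,b_{ik}x_{v_{ik}}) = z^{(t)}$ is unsatisfiable, so I would simply omit $\MC'_{i,t}$; if they agree, I would keep the (now consistent) restriction of $f_{i,t}$ to the distinct variables in $\{v_{i1},\dots,v_{ik}\}$, which still has size $\le k$. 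The clause-by-clause identity above is unaffected, so $\val_P(\varphi) = \valDNF(\varphi')$ still holds; the only cost is that $\varphi'$ may then have strictly fewer than $\ell m$ clauses, which is irrelevant for the use of this lemma in \cref{thm:boosted-dnf-hardness} (and can be avoided entirely by observing that the hard Max-$P$ instances produced by \cref{thm:max-csp-hardness} may be taken to have distinct variables within each clause). Beyond this bookkeeping, the core of the proof is the immediate observation that the accepting assignments of $P$ partition the satisfied configurations of each clause into disjoint events, which is exactly what lets a $P$-clause be replaced by $\ell$ DNF terms with no double-counting.
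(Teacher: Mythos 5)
Your proof is correct and is essentially identical to the paper's: the same per-clause gadget (one DNF term per accepting assignment of $P$, with $f_{i,y}(v_{ij}) = b_{ij}y_j$) and the same observation that distinctness of the accepting assignments makes the corresponding events disjoint, so the clause-by-clause values agree for every $x$. Your extra remark about repeated variable indices within a clause is a reasonable piece of bookkeeping that the paper silently elides, but it does not change the argument.
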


\begin{proof}[\pfref{lemma:dnf-embedding}]
Given $\varphi = (\MC_1,\dots,\MC_m)$, we define $\varphi'$ on the same variable set as follows. For each clause $\MC_i = (v_i, b_i)$, for each $y \in P^{-1}(1)$, we add to $\varphi'$ the clause $(S_{i,y},f_{i,y})$ defined by $S_{i,y} = \{v_{i1},\dots,v_{ik}\}$ and $f_{i,y}(v_{ij}) = b_{ij}y_{ij}$. 

Since $P$ has $\ell$ accepting assignments, $\varphi'$ has $\ell m$ clauses. Moreover, fix any assignment $x \in \{-1,1\}^n$ and clause $\MC_i$. If $x$ satisfies $\MC_i$ in $\varphi$, then there is exactly one $y \in P^{-1}(1)$ such that $(b_{i1}x_{v_{i1}},\dots,b_{ik}x_{v_{ik}}) = y$. Equivalently, there is exactly one $y \in P^{-1}(1)$ such that $x_a = f_{i,y}(a)$ for all $a \in S_{i,y}$. On the other hand, if $x$ does not satisfy $\MC_i$, then there is no such $y$. Thus, $\val_P(\varphi;x) = \valDNF(\varphi';x)$. Since this holds for all $x$, we get $\val_P(\varphi) = \valDNF(\varphi')$. Finally, it's clear that the reduction is polynomial-time in the input size.
\end{proof}

\begin{lemma}[Serial repetition]\label{lemma:serial-repetition}
There is an algorithm that takes as input a Max-$k$-DNF formula $\varphi$ with $n$ variables and $m$ clauses, and a parameter $t \in \NN$, and outputs a $kt$-DNF formula $\varphi'$ with $n$ variables and $m^t$ clauses, that has value $\valDNF(\varphi') = (\valDNF(\varphi))^t$. Moreover, the time complexity of the algorithm is $\poly(n, k, m^t)$.
\end{lemma}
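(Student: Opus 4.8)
The plan is to use a standard direct-product (``serial repetition'') construction and then check that it exactly multiplies the value. Write $\varphi = (\mathcal{C}_1,\dots,\mathcal{C}_m)$ with $\mathcal{C}_i = (S_i, f_i)$. I would define $\varphi'$ over the same variable set $[n]$, with one clause for each tuple $\mathbf{i} = (i_1,\dots,i_t) \in [m]^t$. For such a tuple, set $S_{\mathbf{i}} := S_{i_1}\cup\dots\cup S_{i_t}$, which has size at most $kt$, and attempt to define a partial assignment $f_{\mathbf{i}}: S_{\mathbf{i}} \to \{-1,1\}$ as the common extension of $f_{i_1},\dots,f_{i_t}$. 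If these are pairwise consistent (they agree on every overlap $S_{i_a}\cap S_{i_b}$), then $f_{\mathbf{i}}$ is well defined and the clause is $\mathcal{C}'_{\mathbf{i}} := (S_{\mathbf{i}}, f_{\mathbf{i}})$; otherwise $\mathcal{C}'_{\mathbf{i}}$ is taken to be the constant-false clause (adopting the harmless convention that a clause may impose contradictory constraints on a variable, in which case it is never satisfied). This keeps the clause count at exactly $m^t$ and the clause width at $kt$. Enumerating all $m^t$ tuples and performing each merge/consistency check in time $O(kt)$, and observing that $t$ is at most the output size when $m \ge 2$ (the case $m=1$ being trivial), gives the claimed $\poly(n,k,m^t)$ running time.

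The heart of the argument is verifying $\valDNF(\varphi') = \valDNF(\varphi)^t$. I would fix an assignment $x \in \{-1,1\}^n$ and write $s_i(x) := \mathbbm{1}[x \text{ satisfies } \mathcal{C}_i]$, so $\valDNF(\varphi;x) = \sum_{i=1}^m s_i(x)$. For a tuple $\mathbf{i}$ in the consistent case, $x$ satisfies $\mathcal{C}'_{\mathbf{i}}$ iff $x_j = f_{\mathbf{i}}(j)$ for all $j \in S_{\mathbf{i}}$, iff $x_j = f_{i_a}(j)$ for every $a$ and every $j \in S_{i_a}$, iff $x$ satisfies every $\mathcal{C}_{i_a}$, i.e. iff $\prod_{a=1}^t s_{i_a}(x) = 1$; in the inconsistent case no $x$ satisfies $\mathcal{C}'_{\mathbf{i}}$, and correspondingly two of the clauses $\mathcal{C}_{i_a}$ cannot hold simultaneously, so $\prod_{a=1}^t s_{i_a}(x) = 0$ as well. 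Hence $\mathbbm{1}[x \text{ satisfies } \mathcal{C}'_{\mathbf{i}}] = \prod_{a=1}^t s_{i_a}(x)$ in all cases, and summing over tuples gives
\[
\valDNF(\varphi';x) \;=\; \sum_{\mathbf{i}\in[m]^t} \prod_{a=1}^t s_{i_a}(x) \;=\; \Big(\sum_{i=1}^m s_i(x)\Big)^{t} \;=\; \valDNF(\varphi;x)^{t}.
\]
Since $\valDNF(\varphi;x) \ge 0$ for all $x$, the map $z \mapsto z^t$ is monotone on the relevant range, so taking the maximum over $x \in \{-1,1\}^n$ yields $\valDNF(\varphi') = \big(\max_x \valDNF(\varphi;x)\big)^t = \valDNF(\varphi)^t$.

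There is no serious obstacle here --- the construction is routine --- so the main thing to be careful about is the bookkeeping for tuples whose partial assignments conflict. Simply dropping such tuples would be tempting, but it changes the clause count and can only inflate the satisfiable fraction, which is the wrong direction for the soundness side when the lemma is later combined with \cref{lemma:dnf-embedding} in the proof of \cref{thm:boosted-dnf-hardness}; retaining them as constant-false clauses is what preserves both the exact count $m^t$ and the clean identity $\valDNF(\varphi') = \valDNF(\varphi)^t$. With this identity, one then chains the constant approximation gap from the Max-$P$ hardness (via \cref{lemma:dnf-embedding}) through $t$-fold repetition to obtain the polynomial gap required by \cref{thm:boosted-dnf-hardness}.
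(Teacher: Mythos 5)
Your proposal is correct and matches the paper's proof: both form one clause per ordered tuple $(i_1,\dots,i_t)\in[m]^t$ by conjoining $\MC_{i_1},\dots,\MC_{i_t}$ and observe that $x$ satisfies the tuple-clause iff it satisfies every constituent, giving $\valDNF(\varphi';x)=\valDNF(\varphi;x)^t$ pointwise. Your explicit treatment of tuples with conflicting partial assignments (keeping them as never-satisfied clauses so the count stays exactly $m^t$) is a detail the paper glosses over but handles the same way implicitly.
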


\begin{proof}[\pfref{lemma:serial-repetition}]
Given $\varphi = (\MC_1,\dots,\MC_m)$, we define $\varphi'$ on the same variable set as $\varphi$, with clauses defined as follows. For each ordered tuple $(i_1,\dots,i_t) \in [m]^t$, we introduce the clause $\MC_{i_1,\dots,i_t} := \MC_{i_1} \land \dots \land \MC_{i_t}$ to $\varphi'$. Then $\varphi'$ has exactly $m^t$ clauses. Moreover, for any assignment $x \in \{-1,1\}^n$ which satisfies some subset $\{\MC_i:i \in S\}$ of the original clauses, we have that $x$ satisfies $\MC_{i_1,\dots,i_t}$ if and only if $(i_1,\dots,i_t) \in S^t$. Thus $\valDNF(\varphi';x) = \valDNF(\varphi;x)^t$ and so, since $x$ was arbitrary, $\valDNF(\varphi') = \valDNF(\varphi)^t$. Finally, it's clear that the reduction is polynomial-time in the output size.
\end{proof}

\begin{proof}[Proof of \cref{thm:boosted-dnf-hardness}]
Fix $k_0 = k_0(c) \in \NN$ sufficiently large and $\veps_0 = \veps_0(c) \in (0,1)$ sufficiently small that the following inequality holds: %
\begin{equation} 
2^{-k_0} + \veps_0 \leq \left(\frac{1-\veps_0}{2k_0}\right)^{2c}.\label{eq:k0-choice}
\end{equation}
In particular, there is a universal constant $C_{\ref{thm:boosted-dnf-hardness}}$ so that we can always take $k_0 \leq C_{\ref{thm:boosted-dnf-hardness}}c^2$ and $\veps_0 = 2^{-C_{\ref{thm:boosted-dnf-hardness}}c^2}$. Set $\eta := \gamma(k_0,\veps_0)/2$, where $\gamma(k_0,\veps_0) \in (0,1)$ is the parameter guaranteed by \cref{thm:max-csp-hardness}. Next, let $P: \{0,1\}^{k_0} \to \{0,1\}$ be the predicate guaranteed by \cref{thm:max-csp-hardness}. We define an algorithm $\Alg'$ for Max-$P$ that does the following, given a $P$-formula with $n$ variables and $m$ clauses:
\begin{enumerate}
\item Using \cref{lemma:dnf-embedding} and the fact that $P$ has only $2k_0$ accepting assignments, construct a $k_0$-DNF formula $\varphi'$ with $n$ variables, $2k_0m$ clauses, and with $\valDNF(\varphi') = \val_P(\varphi)$.
\item Set $t := \left\lfloor \frac{\log n}{\log \frac{2k_0}{1-\veps_0}} \right\rfloor$. Using \cref{lemma:serial-repetition}, construct a $k_0t$-DNF formula $\varphi''$ with $n$ variables, $(2k_0 m)^t$ clauses, and with $\valDNF(\varphi'') = \valDNF(\varphi')^t = \val_P(\varphi)^t$.
\item Apply $\Alg$ to $\varphi''$ and output whatever $\Alg$ outputs.
\end{enumerate}
Notice that $k_0 t \leq C_{\ref{thm:boosted-dnf-hardness}}c^2 \log(n) \leq k(n)$, so $\varphi''$ is a Max-$k$-DNF formula and hence the application of $\Alg$ to $\varphi''$ is well-defined. If $\val_P(\varphi) \geq (1-\veps_0)m$, then 
\[\valDNF(\varphi'') \geq (1-\veps_0)^t m^t = (2k_0m)^t \left(\frac{1-\veps_0}{2k_0}\right)^t \geq \frac{(2k_0 m)^t}{n}\]
since $t \leq \frac{\log n}{\log \frac{2k_0}{1-\veps_0}}$. Thus, $\Alg'$ outputs \textsc{Yes} in this case, with probability at least $1/4$; we can boost this probability to $1/2$ by running the algorithm independently $O(1)$ times. On the other hand, if $\val_P(\varphi) \leq (2k_0/2^{k_0}+\veps_0)m$, then 
\begin{align*} 
\valDNF(\varphi'') 
&\leq (2k_0/2^{k_0} + \veps_0)^t m^t \\
&\leq (2^{-k_0} + \veps_0)^t (2k_0 m)^t \\
&\leq \left(\frac{1-\veps_0}{2k_0}\right)^{2ct} (2k_0 m)^t \\
&\leq \left(\frac{1-\veps_0}{2k_0}\right)^{2c\left(\frac{\log n}{\log \frac{2k_0}{1-\veps_0}} - 1\right)} (2k_0 m)^t \arxiv{\\ }
\arxiv{&}= \left(\frac{2k_0}{1-\veps_0}\right)^{2c} \frac{(2k_0 m)^t}{n^{2c}} \leq \frac{(2k_0 m)^t}{n^c}
\end{align*}
where the third inequality is by \cref{eq:k0-choice}, the fourth inequality is by definition of $t$, and the final inequality holds so long as $n \geq (2k_0/(1-\veps_0))^2$. Thus, $\Alg'$ outputs \textsc{No} with probability $1$ in this case.

The time complexity of $\Alg'$, dominated by the invocation of $\Alg$ on $\varphi''$, is $\poly(2^{n^\eta}, (2k_0 m)^t)$. For $m \leq n$, this is dominated by $2^{O(n^\eta)}$ for sufficiently large $n$, since $t \leq \log(n)$. Since $\eta = \gamma(k_0,\veps_0)/2$, we conclude that $\Alg'$ has time complexity at most $O(2^{n^{\gamma(k_0)}})$, so by \cref{thm:max-csp-hardness}, the Randomized Exponential Time Hypothesis is false. 
\end{proof}

\newpage
\part{Multi-Turn Exploration: Learning Autoregressive Softmax Policies}
\label{part:multi}

This section of the appendix is dedicated to presenting and analyzing the \mtalg algorithm (\cref{alg:ops-dp}). \mtalg learns a near-optimal policy for any Markov Decision Process in which the optimal KL-regularized policy has autoregressive linear softmax structure (a generalization of linear-$\Qstarb$ realizability) under reset access. This formulation subsumes the setting in \cref{sec:multi}, encompassing general Markov Decision Processes (MDPs) that extend well beyond the token-level MDP; we prove \cref{thm:multi}, our main result for the multi-turn exploration setting in \cref{sec:multi} as a special case. We adopt this formulation because (i) it makes the essential ingredients in our algorithm and analysis as clear as possible; and (ii) we believe the results are likely to be of interest more broadly, beyond language modeling. This section is organized as follows:
\begin{itemize}
\item \cref{sec:prelim_rl}: We introduce the general reinforcement learning setting and statistical assumptions.\loose
    \item \cref{sec:algos_rl}: We present and describe the multi-turn algorithm, \texttt{MultiTurnSpannerSampling} (\mtalg; \cref{alg:ops-dp}), and state its main guarantee, \cref{thm:main} (generalizing \cref{thm:multi}).\loose
    \item \cref{sec:rejection_density,sec:proof_uncertain,sec:proof_fitvalue}: We provide the main guarantees for the subroutines used within \mtalg.
    \item \cref{sec:proof_main}: We combine the preceding results to prove the main guarantee for \mtalg (\cref{thm:main}).
    \item \cref{sec:technical_rl}: Supporting technical lemmas for the proofs of the results above.
    \end{itemize}

 \section{Preliminaries for Multi-Turn Exploration}
\label{sec:prelim_rl}

In this section, we introduce formally introduce the setting and assumptions for our general multi-turn exploration results.
Recall that the language model alignment problem in \cref{sec:multi} is a special case of episodic reinforcement learning in a specific (``token-level'') Markov Decision Process, where \emph{actions} are tokens or sub-sequences of tokens, and the \emph{state} consists of the prompt and all of the tokens generated so far, with the \emph{reward}  determined by the alignment objective. This is a rather simple MDP, as the transition dynamics are deterministic, and are known a-priori.

Our results in this section encompass a more general setting where the MDP transitions are unknown and stochastic, but where the agent has the ability to reset to previously observed states during the learning process (in addition to standard episodic access); this setting is also known as reinforcement learning with \emph{local simulator access} \citep{li2021sample, yin2022efficient, weisz2022confident,mhammedi2024power}. In the context of language model alignment, the assumption of a local simulator is without loss of generality because the MDP dynamics are known; resetting the state simply involves feeding the prompt and partial prefix of tokens back into the policy \citep{chang2024dataset}. We present our results for \emph{any stochastic MDP}, provided that local simulator access is available. To achieve statistical and computational efficiency, we make statistical and computational assumptions that generalize \cref{sec:multi}, focusing on KL-regularized regret, and assuming that the optimal regularized policy has linear softmax structure that generalizes \cref{eq:auto}.

\paragraph{A remark on notation} Throughout \cref{part:multi} of the appendix, we use boldface notation (e.g., $\x_h$, $\by_h$, and $\a_h$) to denote realizations of random variables. This allows certain arguments that require conditioning to be presented in the clearest way possible.

\subsection{MDP Setting and Multi-Turn Reinforcement Learning Framework}

 A Markov Decision Process (MDP) is a tuple $\cM^\star = ( \cX, \cA, H, P, \rstar,)$, where
$\cX$ is a (large or potentially infinite) state space, $\cA$ is the
action space (we abbreviate $A=\abs{\cA}$), $H \in \bbN$ is the horizon, $\rstar = \{\rstar_h\}_{h=1}^H$ is
the reward function (where $\rstar_h : \cX \times \cA \rightarrow [0,1]$)
and $P = \{P_h\}_{h=0}^{H}$ is the transition distribution (where $P_h
: \cX \times \cA \rightarrow \Delta(\cX)$), with the convention that
$P_0(\cdot\mid\emptyset)$ is the initial state distribution. A
policy is a sequence of functions $\pi = \{\pi_h: \cX
\rightarrow \Delta(\cA)\}_{h=1}^H$. When a policy $\pi$ is executed, it
generates a trajectory $(\bx_1,\ba_1,\br_1), \dots, (\bx_H, \ba_H,
\br_h)$ via the process $\ba_h \sim \pi_h(\bx_h), \br_h \sim
\rstar_h(\bx_h,\ba_h), \bx_{h+1} \sim P_h(\cdot\mid\bx_h,\ba_h)$,
initialized from $\bx_1 \sim P_0(\cdot\mid\emptyset)$ (we use
$\bx_{H+1}$ to denote a terminal state with zero
reward). We write $\bbP_{\pi}\brk*{\cdot}$ and $\En_{\pi}\brk*{\cdot}$
to denote the law and expectation under this process. We assume that $\br_h\in\brk*{0,\Rmax}$ for all $h$.

As discussed above, the \emph{token-level MDP} formulation of language model reinforcement learning (e.g., \citet{rafailov2024r}) corresponds to the case where $\bx_1$ is the prompt, $\ba_h$ is the next token to predict, and $\bx_h=(\bx_1, \ba_1,\ldots,\ba_{h-1})$ is the concatenation of the prompt with all of the tokens so far; the final response is $\by=(\ba_1,\ldots,\ba_H)$. For the formulation in \cref{sec:multi}, we have $\rstar_h=0$ for all $h<H$, and $\rstar_H$ represents the reward for the complete response. A slightly more general formulation (e.g., \citet{xiong2024building}) which our setup also encompasses, is where each $\ba_h$ represents a sub-sequence of tokens rather than a single token (e.g., corresponding to a portion of a proof).

\paragraph{Online reinforcement learning framework}
In the standard online reinforcement learning framework, the learner
repeatedly interacts with an unknown stochastic MDP (where the transition distribution is not known) by executing a policy and
observing the resulting trajectory, with the goal of maximizing the
total reward. We begin with a base policy $\piref=\crl*{\pirefh}_{h=1}^{H}$. Formally, for each episode $t\in\brk{\Trounds}$, the learner
selects a policy $\pi\ind{t} = \{\pi_h\ind{t}\}_{h=1}^H$, executes it
in the underlying MDP $\cM^\star$ and observes the trajectory
$\{(\bx_h\ind{t},\ba_h\ind{t},\br_h\ind{t})\}_{h=1}^H$. After all $\Trounds$
episodes conclude, the goal of the learner is to produce a policy $\pihat$ such that 
\begin{align}
  \sup_{\pi} J_\beta(\pi) - J_\beta(\pihat) \leq \veps, \label{eq:goal}
\end{align}
for some $\veps,\beta>0$, where $J_\beta(\pi) \coloneqq \bbE_\pi\brk[\big]{\sum_{h=1}^H \br_h} - \beta \kl{\pi}{\piref}$ denotes the \emph{regularized} cumulative reward, with
  \begin{align}
    \label{eq:kl}
\kl{\pi}{\piref}\ldef\En_{\pi}\brk*{\sum_{h=1}^{H}\kl{\pi_h(\bx_h)}{\pirefh(\bx_h)}}.
  \end{align}
We define $\pistarb=\crl[\big]{\pistarbh}_{h=1}^{H}$ as the optimal KL-regularized policy: $\pistarb=\argmax_{\pi}\Jbeta(\pi)$.

\paragraph{Online RL with resets (local simulator access)} We focus on online RL with state resetting (local simulator access) \citep{weisz2021query,li2021sample,yin2022efficient,weisz2022confident,yin2023sample,mhammedi2024power}, which augments the online RL protocol as follows: At each episode $t \in \brk{\Trounds}$, instead of starting from a random initial state $\bx_1 \sim P_0(\cdot \mid \emptyset)$, the agent can \emph{reset} the MDP to any layer $h \in \brk{H}$ and any state $\bx_h$ previously encountered, and proceed with a new episode starting from this point. As in the online RL protocol, the goal is to produce a policy $\pihat \in \Pi$ that satisfies \eqref{eq:goal} with as few episodes of interaction as possible. Note that when $\cM^{\star}$ is the token-level MDP, this formulation precisely corresponds to the setting in \cref{sec:multi}.\footnote{In particular, the KL-divergence in \cref{eq:kl} coincides with the sequence-level KL used in \cref{sec:multi} via the chain rule.}

\paragraph{Linear softmax policies} We consider the class of linear softmax policies given by
\[
  \Pi = \crl*{\pi_{\theta}=\crl*{\pi_{h,\theta}}_{h=1}^{H}\mid{} \theta_1,\ldots,\theta_H\in\mathbb{B}_d(B)}
\]
for a parameter $B\geq\Rmax$, where
\[
  \pi_{h,\theta}(a\mid{}x)\propto\pirefh(a\mid{}x)
  \cdot\exp\prn{\beta^{-1}\inner{\theta_{h}}{\phi_h(x,a)}}.
\]
We assume that the feature map $\phi$ satisfies $\sup_{h,\in[H],(x,a)\in \cX \times \cA}\|\phi_h(x,a)\|\leq 1$. Our main assumption is that the optimal regularized policy is itself softmax-linear.
\begin{assumption}[Linear $\pistar_{\beta}$] 
	\label{assum:linear}
        We assume that for all $h\in[H]$, there exists $\theta_{h,\beta}^\star\in \bbB_d(B)$ such that
        \begin{align}\forall (x,a)\in \cX\times \cA,\quad  \pistar_{h,\beta}(a\mid x) \propto \pi_{h,\refe}(a \mid x) \cdot \exp\prn{\beta^{-1}\inner{\thetastar_{h,\beta}}{\phi_h(x,a)}}.\end{align}
      \end{assumption}

      \paragraph{KL-regularized dynamic programming}
      The KL-regularized RL formulation admits regularized counterparts to the standard $Q$- and $V$-value functions, defined as follows (e.g., \citet{rafailov2024r,xie2024exploratory}).\loose
\begin{definition}[State-action value function]
	\label{def:qfunc}
	For any $\pi_{1:H}\colon \cX \rightarrow \Delta(\cA)$ and $(x,a)\in \cX\times \cA$, define	
	\arxiv{
\begin{align}
	Q_{h,\beta}^{\pi}(x,a) &\coloneqq \rstar_h(x,a)+ \E_{\pi}\left[\sum_{\ell=h+1}^H \rstar_\ell(\x_\ell,\a_\ell)-\beta \sum_{\ell=h+1}^H \log \frac{\pi_\ell(\a_\ell \mid \x_\ell)}{\pi_{\ell,\refe}(\a_\ell \mid \x_\ell)} \mid \x_h = x,\a_h = a\right]. \label{eq:qpi}
\end{align}
	}
\end{definition}

\begin{definition}[KL-regularized value function ($Q^\star_{h,\beta}$)]
	\label{def:inition}
We define the optimal regularized state-action value functions $(Q^\star_{h,\beta})_{h\in[H]}$ via backward induction over $h\in[H]$ as follows: for all $(x,a)\in \cX \times \cA$, $Q_{H+1,\beta}^\star(x,a)=0$ and for $h=H,\dots,1$:
\arxiv{
\begin{gather}
Q_{h,\beta}^{\star}(x,a) = \rstar_h(x,a) + \cT_{h,\beta}[Q_{h+1,\beta}^\star](x,a),
\shortintertext{where}
\cT_{h,\beta}[f](x,a)  \coloneqq  \E_{\pi_\refe}\left[ V_{f}(\x_{h+1}) \mid \x_h = x,\a_h = a\right] \quad \text{and}\quad V_f(x) \coloneqq \beta \log \sum_{a\in \cA} \pi_\refe(a\mid x) \cdot e^{f(x,a)/\beta}.
\end{gather}
}
\end{definition}
We show in \cref{lem:wonky} that the optimal KL-regularized policy $\pistarb$ satisfies
\begin{align}
\label{eq:softmax_qstar}\pistar_{h,\beta}(\cdot \mid x)\propto \pi_{h,\refe}(\cdot\mid x) \cdot \exp\prn*{\beta^{-1}Q^{\star}_{h,\beta}(x,\cdot)}\end{align}
and $\pistar_{h+1:H,\beta} \in \argmax_{\pi_{h+1:H}\in \Pi} Q^{\pi}_{h,\beta}(x,a)$, for all $(x,a)\in \cX\times \cA$ and $h\in[H]$. Consequently, \cref{assum:linear} is equivalent to asserting that for all $h$,
\begin{align}
  \label{eq:value_difference}
  Q_{h,\beta}^{\star}(x,a)
  -Q_{h,\beta}^{\star}(x,a')
  =\tri*{\theta^{\star}_{h,\beta},\phi_h(x,a)-\phi_h(x,a')}.
\end{align}
Thus, \cref{assum:linear} may be viewed as a KL-regularized analogue of the so-called \emph{linear-$Q^{\star}$} assumption explored throughout prior work \citep{du2019good,wang2021exponential,weisz2021query,li2021sample,yin2022efficient,weisz2022confident,yin2023sample,mhammedi2024power}. In particular, prior work has shown that RL with linear-$Q^{\star}$ and an action gap $\Delta$ is statistically intractable in the episodic RL protocol, but is tractable under reset access \citep{weisz2021query,li2021sample}, motivating this access model. Our results show that the regularization parameter $\beta$ plays a similar role to the action gap $\Delta$ in enabling tractability. 

\paragraph{Non-triviality of autoregressive realizability}
The following result, as mentioned in \cref{sec:multi}, shows that \cref{assum:linear} is a non-trivial assumption, in the sense that it may not be satisfied even if the rewards themselves are linear.
\begin{proposition}
  \label{prop:autoregressive}
  Consider the token-level MDP. Let $\MX = \{\perp\}$, $\cA = [2]$, $H = d = 2$, and $\beta=1$. For any
  $\delta\in(0,1/2)$, there exist feature maps $\phi_h\in\bbR^{d}$
  with $\nrm*{\phi_h(x,y_{1:h})}\leq{}1$ and parameters
  $\thetastar_h\in\bbR^{d}$ with
  $\nrm*{\thetastar_h}\leq{}B\ldef{}\log(3/\delta)$, so that (i) the
  optimal KL-regularized policy $\pistarb$ for rewards $\rstar(x,y) =
  \sum_{h=1}^{2}\tri*{\thetastar_h,\phi_h(x,a_{1:h})}$ satisfies
  \[
    \pistarb(x, y_{1:2})=\piseq[\thetastar](x,y_{1:2}) \geq 1-\delta > \frac{1}{2}
  \]
  for some $y_{1:2}\in\cY$, yet (ii) for all $\theta_h\in\bbR^{d}$, $\piauto(x, y_{1:2}) \leq \frac{1}{2}$.
  In particular, this means there are no $\theta_h\in\bbR^{d}$ such that $\piauto=\piseq[\thetastar]$.
\end{proposition}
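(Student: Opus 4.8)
The plan is to exhibit an explicit $2$-token, $2$-dimensional instance where the sequence-level softmax policy concentrates on a particular sequence $y_{1:2}=(a_1,a_2)$, but no autoregressive linear softmax policy can do so, because the autoregressive factorization forces the second-step weighting $\theta_2$ to "commit" to a choice of $a_2$ \emph{before} conditioning on $a_1$, and the feature geometry will be arranged so that no single $\theta_2$ simultaneously favors the right $a_2$ under both prefixes. Concretely, I would set $\cA=\{1,2\}$, $\piref$ uniform at both steps (so $\pirefh(a\mid\cdot)=1/2$), and choose $\phi_1(x,a_1)$ and $\phi_2(x,a_{1:2})$ so that $\rstar(x,(1,1))$ is large (say $\approx 2B$ after taking inner products with $\thetastar_1,\thetastar_2$) while $\rstar$ on the other three sequences is $0$ or small. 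Since rewards are linear, \cref{eq:reward_sequence} holds, so by the discussion preceding the proposition $\pistarb=\piseq[\thetastar]$, and a direct two-line computation gives $\piseq[\thetastar](x,(1,1))=\frac{e^{\beta^{-1}\rstar(x,(1,1))}}{\sum_{a_{1:2}}e^{\beta^{-1}\rstar(x,a_{1:2})}}\ge \frac{e^{2B}}{e^{2B}+3}=1-\frac{3}{e^{2B}+3}$, which is $\ge 1-\delta$ once $B\ge\log(3/\delta)$; this handles part (i).

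For part (ii), the key observation is that any $\piauto$ factorizes as $\piauto(x,(1,1))=\pi^{\texttt{auto}}_{1,\theta}(1\mid x)\cdot\pi^{\texttt{auto}}_{2,\theta}(1\mid x,1)$, and each factor is a softmax over $\cA=\{1,2\}$, hence of the form $\sigma(\beta^{-1}\langle\theta_h,\phi_h(\cdot,1)-\phi_h(\cdot,2)\rangle)$ where $\sigma$ is the logistic function. I want to choose $\phi_2$ so that $\phi_2(x,1,1)-\phi_2(x,1,2)$ and $\phi_2(x,2,1)-\phi_2(x,2,2)$ point in \emph{opposite} directions in $\bbR^2$ — e.g.\ $\phi_2(x,a_1,a_2)=(-1)^{a_1}e_2\cdot\indic\{a_2=1\}$ (suitably scaled), so that $\phi_2(x,1,1)-\phi_2(x,1,2)=-e_2$ but $\phi_2(x,2,1)-\phi_2(x,2,2)=+e_2$. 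Then for \emph{any} $\theta_2$, the two second-step factors $\pi^{\texttt{auto}}_{2,\theta}(1\mid x,1)$ and $\pi^{\texttt{auto}}_{2,\theta}(1\mid x,2)$ are $\sigma(-\beta^{-1}\theta_{2,2})$ and $\sigma(+\beta^{-1}\theta_{2,2})$, which sum to exactly $1$; in particular at least one of them is $\le 1/2$. Meanwhile I will use $\phi_1$ only to make $\rstar(x,(1,\cdot))$ large relative to $\rstar(x,(2,\cdot))$ in part (i), and it plays no constraining role here. Since $\pi^{\texttt{auto}}_{2,\theta}(1\mid x,1)\le 1/2$ (choosing the sign of the construction so this is the relevant prefix), we get $\piauto(x,(1,1))\le \pi^{\texttt{auto}}_{1,\theta}(1\mid x)\cdot\tfrac12\le\tfrac12$, as claimed. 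The final sentence of the proposition is then immediate: if some $\theta$ had $\piauto=\piseq[\thetastar]$, then $\piauto(x,(1,1))=\piseq[\thetastar](x,(1,1))\ge 1-\delta>1/2$, contradicting (ii).

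I expect the only real care needed is in (a) verifying the norm bounds $\nrm*{\phi_h}\le 1$ and $\nrm*{\thetastar_h}\le B=\log(3/\delta)$ simultaneously with the requirement $\rstar(x,(1,1))\approx 2B$ — this is fine since with $\beta=1$ and feature coordinates of unit norm one can take $\thetastar_1,\thetastar_2$ of norm $\le B$ each contributing $\approx B$ to the reward via $\langle\thetastar_h,\phi_h\rangle$, so $\rstar(x,(1,1))\le 2B$ and the bound $\piseq[\thetastar](x,(1,1))\ge 1-3e^{-2B}\ge 1-\delta$ only needs $B\ge\log(3/\delta)$; and (b) making sure the reward on the "bad" sequences $(1,2),(2,1),(2,2)$ is exactly $0$ (or nonpositive), which I arrange by letting $\phi_1(x,a_1)\propto \indic\{a_1=1\}e_1$ and $\phi_2(x,a_{1:2})\propto\indic\{a_1=1,a_2=1\}e_2$ for part (i)'s purposes, while for the \emph{autoregressive} direction-flipping property I instead need $\phi_2$ to depend on $a_1$ as $(-1)^{a_1}$; reconciling these two desiderata on a single $\phi_2$ is the one genuinely fiddly point, and I would resolve it by enlarging to the two coordinates carefully (using $e_1$ for the "is the sequence $(1,1)$" signal and $e_2$ for the sign-flip), checking that with these choices $\rstar$ still vanishes off $(1,1)$ and the autoregressive obstruction still holds. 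This is a short, self-contained computation and I do not anticipate a conceptual obstacle beyond this bookkeeping.
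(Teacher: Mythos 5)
Your part (i) is essentially fine and matches the paper's approach in spirit: concentrate the linear reward on a single sequence so that the sequence-level softmax puts mass at least $1-\delta$ on it once $B\geq\log(3/\delta)$ (the paper's instance gives reward $2B$ to the target sequence and $B$ to the other three, yielding $1/(1+3e^{-B})=1/(1+\delta)\geq 1-\delta$).

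Part (ii) has a genuine gap. Your obstruction lives at layer $2$: you arrange the features so that $\pi^{\texttt{auto}}_{2,\theta}(1\mid x,1)=\sigma(-\theta_{2,2})$ and $\pi^{\texttt{auto}}_{2,\theta}(1\mid x,2)=\sigma(\theta_{2,2})$ sum to $1$, and conclude that at least one of them is $\leq 1/2$. But the proposition fixes the target sequence $y_{1:2}=(1,1)$ in advance, so what you need is that the \emph{specific} factor $\pi^{\texttt{auto}}_{2,\theta}(1\mid x,1)$ is at most $1/2$ for \emph{every} $\theta_2$ --- and it is not: taking $\theta_{2,2}\to-\infty$ drives $\sigma(-\theta_{2,2})\to 1$. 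The sum-to-one coupling only forces the policy to sacrifice the other prefix $a_1=2$, which never appears in the target sequence and costs it nothing. Since you also let $\phi_1$ distinguish the two first tokens (you use it to make the reward of $(1,\cdot)$ large), the policy can drive $\pi^{\texttt{auto}}_{1,\theta}(1\mid x)$ toward $1$ as well, so $\piauto(x,(1,1))$ can be made arbitrarily close to $1$, contradicting your claim. Your argument would suffice to show that no $\theta$ achieves $\piauto=\piseq[\thetastar]$ \emph{exactly} (matching both conditionals simultaneously is impossible), but not the quantitative bound (ii) that the proposition asserts.

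The fix --- and what the paper does --- is to place the obstruction at layer $1$: take $\phi_1(x,1)=\phi_1(x,2)=e_1$, so that for every $\theta_1$ the first-step softmax weights are identical across actions and hence $\pi^{\texttt{auto}}_{1,\theta}(a_1\mid x)=\pi_{1,\refe}(a_1\mid x)=1/2$ exactly. Then $\piauto(x,y_{1:2})\leq\pi^{\texttt{auto}}_{1,\theta}(y_1\mid x)=1/2$ for every $\theta$, while the sequence-level policy can still concentrate on the target sequence because $\phi_2$ (hence the reward) is allowed to depend on $y_1$. This gives a constraint on the single relevant marginal that your layer-$2$ coupling does not provide.
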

\begin{proof}[\pfref{prop:autoregressive}]
We adapt the proof of Proposition E.2 in
\citet{huang2024self}. Throughout, we omit the dependence on the
prompt $\perp$. Let $\pirefh\ldef\unif(\cA)$. %
Define $\phi_1$ by
$\phi_1(i) = e_1$,
and define $\phi_2$ by:
\[\phi_2(i,j) = \begin{cases} e_1 & \text { if } i = 2, j = 1 \\ e_2 &
    \text { o.w. } \end{cases}.
\]
For $h\in\crl{1,2}$, define $\rstar_h(y_{1:h}) =
\tri*{\thetastar_h,\phi_h(y_{1:h})}$, where 
 $\thetastar_1=\thetastar_2=B\cdot{}e_i$ for $B\geq{}\log{}(3/\delta)$. Then
  we have
  \[
    \pistarb(y_1=2, y_2=1) = \frac{e^{2B}}{e^{2B} + 3e^{B}}
    = \frac{1}{1 + 3e^{-B}} = \frac{1}{1+\delta} \geq 1-\delta.
  \]
On the other hand, since $\phi_1(1)=\phi_1(2)=B\cdot{}e_1$, all $\theta\in\bbR^d$ have %
\arxiv{\[
\piauto(y_1=2, y_2=1)\leq{}\piauto(y_1=2) = \frac{1}{2}.
\]}
\end{proof}

\subsection{Sample Complexity, Computational Oracles, and Coverage}
\label{sec:mtss_oracle}
Recall that our goal is to design an algorithm that achieves the objective in \eqref{eq:goal} with as few episodes of interaction with the environment as possible. We measure the sample efficiency of an algorithm in terms of the total number $T_\texttt{data}$ of reward, transition, and local simulator (reset) queries required to achieve \eqref{eq:goal} for $\veps>0$. To allow for computationally efficient algorithms, we assume access to the following sampling oracle for $\piref$, generalizing \cref{def:oracle_autoregressive}.
  \begin{definition}[Policy sampling oracle (weak version)]
    \label{def:oracle_policy}
    In one query, the learner can
  propose a state $x\in\cX$ and layer $h\in\brk{H}$, and receive a conditional sample
  $\a_h\sim\pirefh(\cdot\mid{}x)$ as well as the corresponding feature
  $\phi_h(x,\a_h)$. Additionally, in one query the learner can propose a state $x\in\cX$, action $a\in\cA$, and layer $h\in\brk{H}$, and receive $\phi_h(x,a)$. We let $\Tcomp$ denote the total number of policy sampling
  queries used by the algorithm.
\end{definition}

For technical reasons, we require explicit query access to $\phi_h(x,a)$ in this framework, whereas in our original framework (\cref{def:oracle}) we only required observing $\phi(x,y)$ for sampled $(x,y)$ pairs. This is because our multi-turn algorithm $\mtalg$ uses an ``anchor action'' to normalize features across all states; it may be possible to this avoid with a method similar to what is used by $\spanalg$---see \cref{remark:mtss-query-access}.\loose

This technical point aside, note that \cref{def:oracle_policy} is a generalization of the \emph{weak} autoregressive sampling oracle in \cref{def:oracle_autoregressive}, which instantiates \cref{def:oracle_policy} in the token-level MDP. An analogue of the \emph{strong} oracle in \cref{def:oracle_autoregressive} would be to allow sampling $\a_h\sim\pi_{h,\theta}(\cdot\mid{}x)$ in unit time for any $\theta$, but the weak oracle is all that is required by our main algorithm, \mtalg.\loose

Finally, our results depend on the following coverage coefficient, generalizing \cref{eq:coverage_auto}.
\begin{definition}[Conditional Coverage]
	\label{def:condcoverability}
	For any policy $\pi =\{\pi_h\}_{h=1}^H$ and state $x \in \cX$, the conditional coverage of $\pi_\refe =\{\pi_{h,\refe}\}_{h=1}^H$ relative to a reference policy $\pi_\refe = $ at $x$ is defined as:
	\begin{align}
		\Ccond(\pi \mid x) \coloneqq \sup_{a \in \cA} \max_{h\in [H]}\frac{\pi_{h}(a \mid x)}{\pi_{h,\refe}(a \mid x)}. \label{eq:condx}
	\end{align}
	Similarly, the conditional coverage of $\pi$ relative to $\pi_\refe$ is defined as:
	\begin{align}
		\Ccond(\pi) \coloneqq \sup_{(x, a) \in \cX \times \cA} \max_{h\in[H]}\frac{\pi_h(a \mid x)}{\pi_{h,\refe}(a \mid x)}. \label{eq:condcond}
  \end{align}
  
For $h\in[H]$, we occasionally overload notation and write $\Ccond(\pi_h)$ and $\Ccond(\pi_h\mid x)$ to indicate the quantities $\sup_{a \in \cA} \frac{\pi_{h}(a \mid x)}{\pi_{h,\refe}(a \mid x)} $ and $\sup_{(x, a) \in \cX \times \cA}\frac{\pi_h(a \mid x)}{\pi_{h,\refe}(a \mid x)}$, respectively; that is, the quantities in \eqref{eq:condx} and \eqref{eq:condcond} without the max over $h$.
  
\end{definition}
  
Our goal is to ensure $T_\texttt{data} \leq \poly(d, B, H, \beta^{-1}, \veps^{-1},\log(\delta^{-1}))$ and $T_\texttt{comp} \leq \poly(\Ccond(\pistar_\beta), T_\texttt{data})$, where $d$ is the dimension of the feature map $\phi$ in \cref{assum:linear}, $B$ is the bound on the parameter norm, and $\Ccond(\pistar_\beta)$ is the coverage number of the optimal regularized policy.\loose

  \paragraph{Additional notation}
    For any $m,n \in\mathbb{N}$, we denote by $[m\ldotst{}n]$
 the integer interval $\{m,\dots, n\}$. We also let
 $[n]\coloneqq [1\ldotst{}n]$. %
For any $(x,a)\in \cX \times \cA$, we use the convention that
\begin{align}\E[\cdot \mid \x_0=x, \a_0=a] \equiv \E[\cdot] \quad \text{and} \quad  \P[\cdot \mid \x_0=x, \a_0=a] \equiv \P[\cdot].\end{align}

\newpage
\section{\mtalg Algorithm and Guarantees}
\label{sec:algos_rl}
In this section, we formally introduce our algorithm, \mtalg, present some intuition behind its design, and state its guarantee (\cref{thm:main}).

\begin{algorithm}[ht]
    \caption{\mtalg: Multi-Turn Spanner Sampling.
	}
	\label{alg:ops-dp}
	\begin{algorithmic}[1]
        \setstretch{1.2}
		\Statex[0] 	{\bfseries input:} Base policy $\pi_\refe$. Parameters $\beta, \delta,\veps\in(0,1)$ and $B>0$.
        \Statex[0] {\bfseries initialize:}
        Set $\tauindl \gets 1$, and $\Sigma_{h}\ind{1} \gets \lambda I$, $\forall h\in[H]$.
        \State Set $\Trounds$, $\Nreg$ $\Nspann$, $\Nspanb$, $\Mrej$, $\deltarej$, $\lambda, \nu>0$ as in \sssref{app:params}. 
        \State Set $\cC\ind{1}_{h}\gets \emptyset$, for all $h\in[0 \ldotst H]$, and set $\cC_0\ind{1} \gets \cC_0\ind{1}\cup \{(x_0,a_0)\}$ for arbitrary $(x_0,a_0)\in \cX\times \cA$.
        \For{$t= 1, \dots, T$}
        \Statex[1] \algcommentbiglight{Fit state-action value function in a dynamic programming fashion.}
        \For{$h=H, \dots, 1$} \label{line:bigbang}
        \State Update $\theta_h\ind{t} \gets \fitval_{h}(\cC\ind{t}_h, \theta\ind{t}_{h+1:H}, \Sigma\ind{t}_{h+1:H};B,\fraka,\Nreg, \Mrej,\deltarej, \piref)$. %
        \State \multiline{ For all $x\in \cX$, set
        \vspace{-.2cm}
        \begin{align}
        \pihat\ind{t}_h(\cdot \mid x) \propto   \rejection_{\beta,\Mrej,\deltarej}(\inner{\varphibar_h(x,\cdot)}{\theta\ind{t}_h} \midsem x, \piref),\nn 
        \end{align}
        where $\varphibar\ind{t}_h(\cdot,\cdot)= \varphi_h(\cdot,\cdot) \cdot \mathbb{I}\left\{\|\varphi_h(\cdot,\cdot)\|^2_{(\Sigma_h\ind{t})^{-1}}\leq \nu^2\right\}$ and $\varphi_h(\cdot, \cdot)\coloneqq \phi_h(\cdot,\cdot)- \phi_h(\cdot, \fraka)$.} \label{line:truncated}
        \EndFor
        \vspace{-.1cm}
        \Statex[1] \algcommentbiglight{Add uncertain state action pairs to the core set.}
        \State Set $\cC_{1:H}\ind{t+1}\gets \cC\ind{t}_{1:H}$ and $\Sigma\ind{t+1}_{1:H}\gets \Sigma\ind{t}_{1:H}$.
        \For{$h=1,\dots, H$} \label{line:setinit}
        \State $(x\ind{t}_h,a_{h}\ind{t})\gets \uncertsa_h(\cC\ind{t}_{0:h-1}, \pihat\ind{t}_{1:h}, \Sigma_h\ind{t}; \Nspann, \Nspanb)$. \hfill \algcommentlight{\cref{alg:fitbonus_p}}  \label{line:designdir}         \State  Update $\cC_{h}\ind{t+1} \gets \cC_h\ind{t+1} \cup \{(x_h\ind{t},a_{h}\ind{t}),(x_h\ind{t}, \fraka)\}$.\hfill  \algcommentlight{$\cC\ind{t+1}_h$ is a multiset.}\label{line:updatePsi}
        \State Update $\Sigma_h\ind{t+1} \gets \Sigma_h\ind{t+1} +  \varphi_h(x_h\ind{t},a_h\ind{t})\varphi_h(x_h\ind{t},a_h\ind{t})^\top$.
        \EndFor
       \Statex[1] \algcommentbiglight{If $(x\ind{t}_h,a\ind{t}_h)$ is not too uncertain, $\pihat\ind{t}$ is a good candidate policy to return.}
       \If{$\max_{h\in[H]}\|\varphi_h(x_h\ind{t},a\ind{t}_h)\|^2_{(\Sigma\ind{t}_h)^{-1}}\leq\nu^2/4$}
       \State $\tauindl \gets t$.
       \EndIf
       \EndFor
		\State \textbf{return} $\pihat_{1:H} =\pihat_{1:H}\ind{\tauindl}$.
\end{algorithmic}
\end{algorithm}

\subsection{\mtalg Pseudocode and Overview}
Our main algorithm, \mtalg (\cref{alg:ops-dp}),  learns a policy in a dynamic programming fashion by fitting value functions for each layer $h = H, \dots, 1$, while maintaining a growing \emph{core-set} of informative state-action pairs. This core-set guides exploration, and is closely related to recent works on linearly parameterized RL with local simulators \citep{li2021sample, yin2022efficient, weisz2022confident,mhammedi2024power,mhammedi2024sampleoracleefficientreinforcement}; of the works, the structure of \mtalg{} is most closely aligned with the \texttt{Optimistic-PSDP} algorithm introduced by \cite{mhammedi2024sampleoracleefficientreinforcement} for the linearly-$Q^\pi$ realizable RL setting, which itself builds on ideas from the classical Policy Search by Dynamic Programming (\texttt{PSDP}) algorithm (see, e.g., \cite{bagnell2003policy}) and the Recursive Value Function Search (\texttt{RVFS}) algorithm of \citet{mhammedi2024power}. We combine this with the spanner technique and truncated softmax policies from \mainalg, which are critical to achieve computational efficiency under the sampling oracle model in \cref{def:oracle_policy}. At various points in the section, we will comment on key similarities and differences between these approaches.

\mtalg{} comprises three main subroutines: \texttt{FitValue} (\cref{alg:fitval}) and \uncertsa (\cref{alg:fitbonus_p}). Before getting into the details of \mtalg, we first provide an overview of the key variables used in \cref{alg:ops-dp}.

\subsubsection{Key Variables in \mtalg (\cref{alg:ops-dp})} \mtalg takes a fixed reference policy $\piref$ as input (e.g., a pre-trained language model) and runs for $\Trounds = \wtilde{O}(d)$ iterations. At each iteration $t \in [\Trounds]$, the algorithm maintains the following key variables:
\begin{itemize}[leftmargin=*]
    \item $\cC\ind{t}_{h}$: A core-set of $t$ state-action pairs at layer $h$. The subroutine \texttt{FitValue}$_h$ uses these state-action pairs as starting points to generate rollout trajectories, which are then used to fit the parameters of the optimal policy via regression. The core-set $\cC\ind{t}_h$---generalizing the notion of spanner in \mainalg---aims at provide sufficient coverage of the state-action space at layer $h$\arxiv{, as we explain in the sequel.} Note that $\cC_h\ind{t}$ is a \emph{multiset}, allowing for multiple instances of the same state-action pair.
    \item $\fraka$: A fixed, arbitrarily chosen ``anchor'' action used in the regression problem solved by \texttt{FitValue}.\loose
    \item $\Sigma\ind{t}_h$: A ``design matrix'' for layer $h$, defined as the sum of outer products of feature differences $\varphi_h(x_h, a_h) \coloneqq \phi_h(x_h, a_h) - \phi_h(x_h, \fraka)$ for the core-set states-action pairs $(x_h, a_h) \in \cC\ind{t}_h$:
    \begin{align}
        \Sigma\ind{t}_h = \lambda I + \sum_{(x_h, a_h) \in \cC\ind{t}_h} \varphi_h(x_h, a_h) \varphi_h(x_h, a_h)^\top, \label{eq:SigmaMat}
    \end{align}
    where  $\lambda > 0$ is a regularization parameter defined in \cref{alg:ops-dp}.
    
    \item $\theta\ind{t}_h$: An estimate of the parameter vector $\thetastar_{h, \beta}$ associated with\arxiv{ the optimal policy} $\pistar_{h, \beta}$ (\cref{assum:linear}).
    
    \item $\pihat\ind{t}_h$: The policy used to generate actions at layer $h$ during iteration $t$. This policy is computed using the \softmaxsample{} subroutine (\cref{alg:rejection_density} in \cref{sec:rejection}) and approximates the ``truncated'' policy:
    \begin{align}
        \pibar\ind{t}_h(\cdot \mid x) \propto \piref(\cdot \mid x) \cdot e^{\beta^{-1} \cdot \inner{\varphibar\ind{t}_h(x, \cdot)}{\theta\ind{t}_h}}, \quad\text{where}\quad
        \varphibar\ind{t}_h(\cdot, \cdot) = \varphi_h(\cdot, \cdot) \cdot \mathbb{I}\left\{\|\varphi_h(\cdot, \cdot)\|^2_{(\Sigma_h\ind{t})^{-1}} \leq \nu^2\right\};
    \end{align}
    this notion generalizes the truncated softmax policies used in \mainalg.
     As $t$ increases, $\theta\ind{t}_h$ converges to $\thetastar_{h, \beta}$, and $\|\varphi_h(x, a)\|^2_{(\Sigma\ind{t}_h)^{-1}}$ decreases for all $(x, a)$ pairs. Eventually, $\|\varphi_h(x, a)\|^2_{(\Sigma\ind{t}_h)^{-1}}$ becomes smaller than $\nu^2$ for ``most'' state-action pairs $(x,a)$, ensuring that $\pibar\ind{t}_h$ (and thus $\pihat\ind{t}_h$) approximates the optimal policy $\pistar_{h, \beta}$.
\end{itemize}

In each iteration $t \in [\Trounds]$, \mtalg computes the estimates $\theta\ind{t}_{1:H}$ in a dynamic programming fashion by fitting the difference of value functions $Q^{\pihat\ind{t}}_{h, \beta}(\cdot, \cdot) - Q^{\pihat\ind{t}}_{h, \beta}(\cdot, \fraka)$ at each layer $h$ (motivated by \cref{eq:value_difference}). In what follows, we describe the \texttt{FitValue} subroutine responsible for this step.

\subsubsection{\texttt{FitValue} (\cref{alg:fitval})}
\label{sec:fitvalue}
\arxiv{\begin{algorithm}[ht]}
\caption{$\texttt{FitValue}_{h}$: Estimate KL-regularized value function using rollouts.}
\label{alg:fitval}
\begin{algorithmic}[1]
\setstretch{1.2}
\Statex[0]{\bfseries input:} Layer $h$, core set $\cC_h$, $\theta_{h+1:H}$, $\Sigma_{h+1:H}$, $B>0$, fixed action $\fraka$, and $N$, $M$, $\tilde\delta$, $\piref$.
\State \multiline{For all $\ell\in[h+1 \ldotst H]$ and $x\in\cX$, define
\[\varphibar_\ell(\cdot,\cdot)= \varphi_\ell(\cdot,\cdot) \cdot \mathbb{I}\left\{\|\varphi_\ell(\cdot,\cdot)\|^2_{\Sigma_\ell^{-1}}\leq \nu^2\right\},
\]  
where $\varphi_\ell(\cdot, \cdot)\coloneqq \phi_\ell(\cdot,\cdot)- \phi_\ell(\cdot, \fraka)$.}
\Statex[0] \algcommentbiglight{Gather trajectory data.}
\For{$(x_h,a_h)\in \cC_h$}
\State Set $\cD(x_h,a_h) \gets \emptyset$.
\State Set $\x'_h = \x''_h= x_h$, $\a'_h = a_h$, and $\a''_h = \afrak$.
\State Set $\br'_h\sim \rstar_h(\x'_h,\a'_h)$ and $\br_h''\sim \rstar_h(\x''_h,\a''_h)$.
\For{$N$ times} 
\For{$\ell=h+1, \dots, H$}
\State Sample $\x'_{\ell}\sim \P[\cdot \mid \x_{\ell-1}= \x'_{\ell-1}, \a_{\ell-1}= \a'_{\ell-1}]$.
\State Sample $\x_\ell''\sim \P[\cdot \mid \x_{\ell-1}= \x''_{\ell-1}, \a_{\ell-1}= \a''_{\ell-1}]$.
\State Set $(\a'_\ell,\brho'_\ell)\gets
\softmaxsample_{\beta,M,\tilde\delta}(\inner{\varphibar_\ell(\x'_\ell,\cdot)}{\theta_\ell}
\midsem \x'_\ell,\piref)$. \hfill\algcommentlight{\cref{alg:rejection_density}.}
\State  Set $(\a_\ell'',\brho''_\ell)\gets \softmaxsample_{\beta,M,\tilde\delta}(\inner{\varphibar_\ell(\x_\ell'',\cdot)}{\theta_\ell}\midsem \x''_\ell,\piref)$.
\State Set $\br'_\ell\sim \rstar_\ell(\x'_\ell,\a'_\ell)$ and $\br_\ell''\sim \rstar_\ell(\x''_\ell,\a''_\ell)$.
\EndFor
 \State Compute $\by'_h \gets \br'_h  + \sum_{\ell=h+1}^H \left(\br'_\ell - \beta \log \brho'_\ell \right)$.\hfill \algcommentlight{$\brho'_\ell \approx \frac{\pibar_{\ell,\theta}(\by'_\ell \mid \x'_\ell)}{\pi_{\ell,\refe}(\by'_\ell\mid \x'_\ell)}$.}
 \State  Compute $\by''_h \gets \br_h'' +\sum_{\ell=h+1}^H \left(\br''_\ell - \beta \log \brho''_\ell\right)$. \hfill \algcommentlight{$\brho''_\ell \approx \frac{\pibar_{\ell,\theta}(\by_\ell'' \mid \x''_\ell)}{\pi_{\ell,\refe}(\by_\ell''\mid \x''_\ell)}$.}
 \State Set $\z_h \gets \by'_h - \by''_h$.
\State Update $\cD(x_h,a_h) \gets \cD(x_h, a_h) \cup \{\z_{h}\}$. \hfill \algcommentlight{$\cD$ is a multiset.}
\EndFor
\EndFor
\Statex[0] \algcommentbiglight{Fit value function.}
\State \textbf{if} $\cD\not= \emptyset$ \textbf{then} $\thetahat_{h} \gets \argmin_{\tilde\theta\in \bbB(B)}\sum_{(x_h,a_h)\in \cC_h}\sum_{z_h\in \cD(x_{h},a_{h})}\left( \inner{\varphi_h(x_{h}, a_h)}{\tilde\theta}- z_h\right)^2$, \textbf{else} $\thetahat_{h} \gets 0$.
\State \textbf{return} $\thetahat_h$.
\end{algorithmic}
\end{algorithm}
 \label{subsec:fitoptv} In each iteration $t \in [\Trounds]$, starting from $h = H$ and progressing down to $h = 1$, \mtalg invokes $\texttt{FitValue}_h$ with the input $(\cC\ind{t}_{h}, \theta\ind{t}_{h+1:H})$. This subroutine returns the vector $\theta\ind{t}_h$, an estimate of the parameter vector $\thetastar_{h,\beta}$ for the optimal policy $\pistar_{h,\beta}$ (see \cref{assum:linear}).  To compute the estimate $\theta\ind{t}_h$, for each state-action pair $(x_h, a_h) \in \cC\ind{t}_h$, \texttt{FitValue} generates multiple i.i.d.~regression targets $\z_h$ by sampling two trajectories $(\bx'_h,\a'_h,\br'_h,\brho'_h,\dots, \bx'_{H},\a'_{H},\br'_H, \brho'_H)$ and $(\bx''_h,\a''_h,\br''_h,\dots, \bx''_{H},\a''_{H},\br''_H, \brho''_H)$ via the following process. Initialize $\x'_h = \x''_h= x_h$, $\a'_h = a_h$, and $\a''_h = \afrak$ (recall that $\fraka$ is a fixed, arbitrary action defined in \cref{alg:ops-dp}), and sample $\br'_h\sim \rstar_h(\x'_h,\a'_h)$ and $\br_h''\sim \rstar_h(\x''_h,\a''_h)$. Then, for $\ell=h+1,\dots, H$, use \softmaxsample{} (\cref{alg:rejection_density} in \cref{sec:rejection}) tao approximately sample from the policy $\pibar\ind{t}$ as follows:
\begin{itemize} 
\item Sample $\x'_{\ell}\sim \P[\cdot \mid \x_{\ell-1}= \x'_{\ell-1}, \a_{\ell-1}= \a'_{\ell-1}]$ and $\x_\ell''\sim \P[\cdot \mid \x_{\ell-1}= \x''_{\ell-1}, \a_{\ell-1}= \a''_{\ell-1}]$;
\item Set $(\a'_\ell,\brho'_\ell)\gets \softmaxsample_{\beta,\Mrej,\deltarej}(\inner{\varphibar\ind{t}_\ell(\x'_\ell,\cdot)}{\theta\ind{t}_\ell} \midsem \x'_\ell,\piref)$;
\item Set $(\a_\ell'',\brho''_\ell)\gets \softmaxsample_{\beta,\Mrej,\deltarej}(\inner{\varphibar\ind{t}_\ell(\x_\ell'',\cdot)}{\theta\ind{t}_\ell}\midsem \x_\ell'',\piref)$;
\item Sample rewards $\br'_\ell\sim \rstar_\ell(\x'_\ell,\a'_\ell)$ and $\br_\ell''\sim \rstar_\ell(\x''_\ell,\a''_\ell)$;
    \end{itemize}
	then, finally, set
	\begin{align}
		\z_h = \br'_h  + \sum_{\ell=h+1}^H \left(\br'_\ell - \beta \log \brho'_\ell\right) -\br''_h -\sum_{\ell=h+1}^H \left(\br''_\ell - \beta \log \brho_\ell'' \right).\nn %
	\end{align}
        Here, $\brho'_\ell$ and $\brho''_\ell$ approximate $\frac{\pibar\ind{t}_{\ell}(\by'_\ell \mid \x'_\ell)}{\pi_{\ell,\refe}(\by'_\ell\mid \x'_\ell)}$ and $\frac{\pibar\ind{t}_{\ell}(\by''_\ell \mid \x''_\ell)}{\pi_{\ell,\refe}(\by''_\ell\mid \x''_\ell)}$, respectively, and the expected value of $\z_h$ (up to small approximation error) corresponds to the difference
        \[
          Q^{\pihat\ind{t}}_{h,\beta}(x_h, a_h) - Q^{\pihat\ind{t}}_{h,\beta}(x_h, \fraka).
        \]
        We regress onto $\z_h$ with least squares, setting the vector $\theta\ind{t}_h$ as the minimizer of the sum of squared errors across all $(x_h, a_h) \in \cC\ind{t}_h$. 

        Note that if we could somehow ensure $\pihat\ind{t}=\pistar$, the difference $Q^{\pihat\ind{t}}_{h,\beta}(x_h, a_h) - Q^{\pihat\ind{t}}_{h,\beta}(x_h, \fraka)$ would be linear in $\varphi_h(x_h, a_h)$ by \cref{eq:value_difference}, but it is not guaranteed to be linear in general; 
        this poses challenges for deriving a regression guarantee, as the regression problem may not realizable or even approximately realizable. Fortunately, the components of \mtalg{} (in particular \uncertsa) ensure that for sufficiently large $\Trounds$, there exists an iteration $t \in [\Trounds]$ such that for all $h \in [H]$ and state-action pairs $(x_h,a_h)$ satisfying $\|\varphi_h(x_h,a_h)\|^2_{(\Sigma_h\ind{t})^{-1}}\leq O(1)$, the following quantity is small:
\begin{align}
  \label{eq:q_misspecification}
    \left|\Qstar_{h,\beta}(x_h, a_h) - \Qstar_{h,\beta}(x_h, \fraka) 
    - Q^{\pihat\ind{t}}_h(x_h, a_h) + Q^{\pihat\ind{t}}_h(x_h, \fraka)\right|.
\end{align}
Consequently, for such an iteration $t$, the regression problem becomes approximately linearly realizable. This allows us to establish that for sufficiently large $\Trounds$, there exists an iteration $t \in [\Trounds]$ such that the subroutine $\texttt{FitValue}_h$ returns $\theta\ind{t}_h$ satisfying:
\begin{align}
    \|\theta\ind{t}_h - \thetastar_{h,\beta}\|^2_{\Sigma_h\ind{t}} = \lambda I + \sum_{(x_h, a_h) \in \cC\ind{t}_h} \inner{\varphi_h(x_h, a_h)}{\theta\ind{t}_h - \thetastar_{h,\beta}}^2 \leq \veps_\reg^2, \label{eq:linreg}
\end{align}
with high probability, for some small $\veps_\reg > 0$.

\begin{remark}[Fitting the difference]
    The reason \texttt{FitValue} targets the difference $\Qstar_{h,\beta}(\cdot, \cdot) - \Qstar_{h,\beta}(\cdot, \fraka)$ rather than $\Qstar_{h,\beta}(\cdot, \cdot)$ directly is that the former (but not the latter) is linear in $\phi_h(\cdot, \cdot) - \phi_h(\cdot, \fraka)$, as guaranteed by the softmax-linear assumption in \cref{assum:linear} and \cref{lem:reward_difference}. Without additional assumptions, the same would not hold for the un-centered value $\Qstar_{h,\beta}(\cdot, \cdot)$. 
\end{remark}
The \texttt{Optimistic}-\texttt{PSDP} algorithm of \citet{mhammedi2024sampleoracleefficientreinforcement} uses a subroutine similar to \texttt{FitValue}. The difference is that \texttt{Optimistic}-\texttt{PSDP} fits \emph{optimistic} value functions, whereas \texttt{FitValue} does not. Optimism is not needed in our setting because we can effectively drive exploration using the core-sets under reset/local simulator access.

\subsubsection{\uncertsa (\cref{alg:fitbonus_p})}
\label{sec:uncertain}
\begin{algorithm}[ht]
    \caption{
	$\uncertsa_{h}$: Identify uncertain state-action pair.
	}
	\label{alg:fitbonus_p}
	\begin{algorithmic}[1]
		\setstretch{1.2}
\Statex[0]{\bfseries input:}  \multiline{$h$, $\cC_{0:h-1}$, $\pihat_{1:h}$, $\Sigma_h$, $\fraka$, $N$, $\Nbar$.}
\Statex[0] \algcommentbiglight{Gathering candidate state action pairs.}
\State Set $\kappa \gets 0$ and define $\varphi_h(\cdot,\cdot)= \phi_h(\cdot, \cdot)- \phi_h(\cdot, \fraka)$.
    \For{$\ell=0,\dots, h-1$}
    \For{$(x_\ell,a_\ell)\in \cC_{\ell}$} 
	\State Initialize $\cD_{\ell}(x_\ell,a_\ell) \gets \emptyset$.  \hfill \algcommentlight{$\cD_\ell$ is a multiset (only used in the analysis).}
	\For{$N$ times}
	\State Sample $(\x_{\ell+1},\a_{\ell+1}, \dots, \x_h, \a_h) \sim \P_{\pihat_{\ell+1:h}}[\cdot \mid \x_\ell=x_\ell, \a_{\ell}=a_{\ell}]$.
	\Statex[3] \algcommentlight{Above, we use the convention that $\P_{\pihat_{\ell+1:h}}[\cdot \mid \x_0=x_0, \a_{0}=a_0]\equiv \P_{\pihat_{\ell+1:h}}[\cdot]$.}
	\State Update $\cD_\ell(x_\ell,a_\ell) \gets \cD_\ell(x_\ell,a_\ell) \cup \{(\x_h,\a_h)\}$.
	\If{$\|\varphi_h(\x_h,\a_h)\|_{\Sigma_h^{-1}}>\kappa$}
	\State Set $\kappa \gets \|\varphi_h(\x_h,\a_h)\|_{\Sigma_h^{-1}}$. \arxiv{\hfill\algcommentlight{$\kappa$ captures the maximum increase in the elliptical objective.}}
	\State Set $(\hat x_h,\hat a_{h}) \gets (\x_h,\a_{h})$. 
	\EndIf
	\Statex[3] \algcommentbiglight{Testing $\Nbar$ samples from $\pi_\refe$ given $\x_h$}
	\State Initialize $\widebar\cD_\ell(\x_{h})$. \hfill \algcommentlight{$\widebar\cD_\ell$ is a multiset (only used in the analysis).}
	\For{$\Nbar$ times}
	\State Sample $\bar\a_h \sim \pi_{h,\refe}(\cdot \mid \x_h)$.
	\State Update $\widebar\cD_\ell(\x_h) \gets \widebar\cD_\ell(\x_h) \cup \{\bar\a_h\}$.
	\If{$\|\varphi_h(\x_h,\bar\a_h)\|_{\Sigma_h^{-1}}>\kappa$}
	\State Set $\kappa \gets \|\varphi_h(\x_h,\bar\a_h)\|_{\Sigma_h^{-1}}$. 
	\State Set $(\hat x_h,\hat a_{h}) \gets (\x_h,\bar\a_{h})$.
	\EndIf
	\EndFor
	\EndFor
\EndFor
\EndFor
\State \textbf{return} $(\hat x_h,\hat a_{h})$.
\end{algorithmic}
\end{algorithm}
 
 \mtalg uses the subroutine \uncertsa to update the core-sets $(\cC\ind{t}_h)$. When \mtalg invokes it with the input $(\cC\ind{t}_{0:h-1}, \pihat\ind{t}_{1:h}, \Sigma\ind{t}_{1:h})$, $\uncertsa_h$ uses $\pihat\ind{t}_{1:h}$ to generate multiple partial trajectories starting from the state-action pairs $(x_\ell, a_\ell) \in \cC\ind{t}_\ell$ at all layers $\ell \in [0 \ldotst h-1]$ and terminating at layer $h$. Among all state-action pairs reached at layer $h$ through this process, \uncertsa selects the pair $(x\ind{t}_h, a\ind{t}_h)$ that maximizes the elliptical objective $\|\varphi_h(\cdot, \cdot)\|_{(\Sigma\ind{t}_h)^{-1}}$. As a result, the output $(x_h\ind{t}, a_h\ind{t})$ of \uncertsa satisfies the following property: with high probability, for all $\ell \in [0 \ldotst h-1]$ and $(x_\ell, a_\ell) \in \cC\ind{t}_\ell$,
\begin{align}
    \P_{\pihat\ind{t}_{\ell+1:h}}\left[ \|\varphi_h(\x_h, \a_h)\|^2_{(\Sigma\ind{t}_h)^{-1}} > \nu^2 \vee \left(2\|\varphi_h(x\ind{t}_h, a\ind{t}_h)\|^2_{(\Sigma_h\ind{t})^{-1}}\right) \mid \x_\ell = x_\ell, \a_\ell = a_\ell\right] \leq \vepslip, \label{eq:uncertsapre}
\end{align}
for some small $\vepslip > 0$; this generalizes the spanner property used in \mainalg.

 Using the definition of $\Sigma_h\ind{t}$ in \eqref{eq:SigmaMat}, a standard elliptical potential argument (see the proof of \cref{lem:test}) implies that for sufficiently large $t = \Omega(d)$ in \mtalg, the tuple $(x_h\ind{t},a_h\ind{t})$ returned by \uncertsa{} must satisfy $\|\varphi_h(x\ind{t}_h, a\ind{t}_h)\|_{(\Sigma\ind{t}_h)^{-1}} \leq \nu^2$. Substituting this into \eqref{eq:uncertsapre} ensures that for such $t$, the following holds for all $\ell \in [0 \ldotst h-1]$ and $(x_\ell, a_\ell) \in \cC\ind{t}_\ell$:
\begin{align}
    \P_{\pihat\ind{t}_{\ell+1:h}}\left[ \|\varphi_h(\x_h, \a_h)\|^2_{(\Sigma\ind{t}_h)^{-1}} > \nu^2 \mid \x_\ell = x_\ell, \a_\ell = a_\ell\right] \leq \vepslip. \label{eq:uncertsapre_2}
\end{align}
This property is crucial in the analysis of $\texttt{FitValue}_\ell$, as it allows us to bound the misspecification error for the regression problems solved in \texttt{FitValue} (cf. \cref{eq:q_misspecification}) at future layers $h \in [\ell+1 \ldotst H]$. Concretely, using H\"older's inequality, we have for all $(x_\ell, a_\ell) \in \cC\ind{t}_\ell$,
\begin{align}
  & \E_{\pihat\ind{t}_{\ell+1:h}}\left[ \inner{\varphi_h(\x_h, \a_h)}{\theta\ind{t}_h - \thetastar_{h,\beta}}^2 \mid \x_\ell = x_\ell, \a_\ell = a_\ell\right] \nn \\
  & \leq  \E_{\pihat\ind{t}_{\ell+1:h}}\left[ \|\varphi_h(\x_h, \a_h)\|^2_{(\Sigma_h\ind{t})^{-1}} \mid \x_\ell = x_\ell, \a_\ell = a_\ell\right] \cdot \|\theta\ind{t}_h - \thetastar_{h,\beta}\|^2_{\Sigma\ind{t}_h}, \nn \\
  & \leq \left( \nu^2 + \lambda^{-1} \P_{\pihat\ind{t}_{\ell+1:h}}\left[ \|\varphi_h(\x_h, \a_h)\|^2_{(\Sigma_h\ind{t})^{-1}} \mid \x_\ell = x_\ell, \a_\ell = a_\ell\right]\right) \cdot \|\theta\ind{t}_h - \thetastar_{h,\beta}\|^2_{\Sigma\ind{t}_h}, \nn \\
  & \leq \left( \nu^2 + \lambda^{-1} \vepslip\right) \cdot \|\theta\ind{t}_h - \thetastar_{h,\beta}\|^2_{\Sigma\ind{t}_h}.
  \label{eq:estimation_error_bound}
\end{align}
where the last inequality follows from $\sigma_{\min}(\Sigma\ind{t}_h)\geq \lambda$ and \cref{assum:linear}. This bound is particularly useful as it allows us to control the ``on-policy'' error:
\[
\E_{\pihat\ind{t}_{\ell+1:h}}\left[\inner{\varphi_h(\x_h, \a_h)}{\theta\ind{t}_h - \thetastar_{h,\beta}}^2 \mid \x_\ell = x_\ell, \a_\ell = a_\ell\right],
\]
for all state-action pairs $(x_\ell, a_\ell) \in \cC_\ell\ind{t}$, in terms of the regression error at layer $h$. 
\begin{remark}
\opsdp{} \citep{mhammedi2024sampleoracleefficientreinforcement} features a subroutine similar to \uncertsa. The key difference is that \opsdp{} uses a core-set of policies rather than state-action pairs. Consequently, its corresponding subroutine greedily selects the policy that maximizes the elliptical objective \(\|\phi_h(\x_h, \a_h)\|_{(\Sigma\ind{t}_h)^{-1}}\), evaluated in expectation over the core-set of policies and the algorithm's current policy \(\pihat\ind{t}_h\).\loose
\end{remark}

\subsubsection{Parameter Choices for \mtalg}
\label{app:params}
\newcommand{\frakc}{\mathfrak{c}}
For $\frakc = \polylog(d,\Ccond(\pistar_\beta),1/\delta, 1/\veps, H,
B)$ sufficiently large, we set the parameters in \mtalg as:
\begin{gather}
   \veps_\reg^2 = \veps, \ \ \nu = 1/2,\ \ 
    \Trounds = d H^2 \frakc,\ \
    \lambda  = \frac{\veps_\reg^2}{\frakc B^2},\ \
    \Mrej  = \frac{\Ccond(\pistar_\beta)^2 \Trounds^2 H^2 B^4 \frakc}{\veps_\reg^2},\nn \\
    \deltarej  = \frac{\veps^2_\reg}{H^2 B^3 \Trounds \frakc},  \ \
    \Nbar_\spann  = \frac{\Trounds H^4 B^4 \Ccond(\pistar_\beta)\frakc}{\veps_\reg^2}, \ \  N_\spann  = \frac{\Trounds H^2 B^4\frakc}{\veps_\reg^2},\nn \\ 
    N_\reg = \frac{H^2 B^4 d \Trounds \frakc}{\veps_\reg^2}.\nn
\end{gather}

\subsection{Main Guarantee for \mtalg{} (Generalization of \creftitle{thm:multi})}

Building on the intuition in the prequel, the main guarantee for \mtalg is as follows.
  
\begin{theorem}[Main guarantee for \mtalg]
\label{thm:main}
Let $\beta, \veps,\delta\in(0,1)$, $B>0$, and $\piref$ be such that $\veps\leq \beta^2/4$ and suppose \cref{assum:linear} holds with $B>0$. Then, the policy $\pihat$ returned by $\mtalg(\beta, \delta, \veps, B,\piref)$ (\cref{alg:ops-dp}) satisfies 
\begin{align}
J_\beta(\pistar_\beta) - J_\beta(\pihat)\leq \veps.\nn 
\end{align}
Furthermore, the algorithm requires $\Tdata(\veps,\delta) \leq
\poly(d, B, H,\beta^{-1},\veps^{-1} \log(1/\delta))$ reward, transition, and local simulator queries and $\Tcomp(\veps,\delta) \leq \poly\left(\Ccond (\pistarb),\Tdata(\veps,\delta)\right)$ runtime and sampling oracle queries.
\end{theorem}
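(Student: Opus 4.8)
¡**The plan is to** prove \cref{thm:main} by combining the three ingredients that the algorithm is built from: (i) a spanner/core-set construction that, via an elliptical-potential argument, guarantees the existence of a ``good'' iteration $\tauindl$ in which all the on-policy directions visited at layer $h$ are already covered by $\Sigma_h\ind{\tauindl}$; (ii) a least-squares regression guarantee for $\texttt{FitValue}_h$ that, on the good iteration, is (approximately) well-specified and hence yields $\nrm*{\theta_h\ind{\tauindl}-\thetastar_{h,\beta}}^2_{\Sigma_h\ind{\tauindl}}\le\veps_\reg^2$; and (iii) a KL-regularized performance-difference lemma that propagates these per-layer estimation errors into a bound on $\Jbeta(\pistarb)-\Jbeta(\pihat)$. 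The accounting of sampling-oracle and reward-query complexity then follows by bounding core-set sizes (elliptical potential), the per-step cost of \softmaxsample{}/\rejection{} (\cref{thm:rejection_density,thm:rejection}), and the parameter choices in \sssref{app:params}.

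First I would set up the high-probability event. Using \cref{lem:unionbound} (sequential union bound) over the $\bigoh(\Trounds H)$ calls to \softmaxsample{}, \uncertsa, and $\texttt{FitValue}$, together with \cref{thm:rejection_density}, I would obtain an event $\cE$ of probability $\ge 1-\delta$ under which: (a) every invocation of \softmaxsample{} succeeds, so its output density-ratio estimate $\brho$ approximates $\frac{\pibar_{\ell,\theta}(\cdot\mid\cdot)}{\pi_{\ell,\refe}(\cdot\mid\cdot)}$ up to $\bigoh(\Cinf/\sqrt{\Mrej})$ (note $M_\texttt{rej}\ge 4\Ccond(\pistarb)^2$ by the parameter choices, so $\Cinf$ is controlled via \cref{lem:truncated_density}'s analogue for the autoregressive truncated policy $\pibar_h\ind{t}$); (b) the uncertain state-action pairs returned by \uncertsa satisfy \eqref{eq:uncertsapre}; and (c) each regression in $\texttt{FitValue}_h$ concentrates around its conditional mean. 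On this event, the key structural claim is \cref{eq:uncertsapre_2}: a standard elliptical-potential argument (à la \cref{lem:elliptic_potential}, applied to the $\varphi_h(x_h\ind{t},a_h\ind{t})$ appended to $\Sigma_h$) shows that because $\Trounds=\widetilde\Theta(dH^2)$, there must exist an iteration $\tauindl$ for which $\max_{h}\nrm*{\varphi_h(x_h\ind{\tauindl},a_h\ind{\tauindl})}^2_{(\Sigma_h\ind{\tauindl})^{-1}}\le\nu^2/4$, which is exactly the ``if'' condition that sets $\tauindl$; this is the iteration whose policy is returned.

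Next I would establish the regression guarantee. Fix the good iteration $\tauindl$ and argue by backward induction on $h=H,\dots,1$. The inductive hypothesis is that $\pihat_{h+1:H}\ind{\tauindl}$ is a near-optimal continuation, quantified via $\nrm*{\theta_\ell\ind{\tauindl}-\thetastar_{\ell,\beta}}^2_{\Sigma_\ell\ind{\tauindl}}\le\veps_\reg^2$ for all $\ell>h$ and via the on-policy bound \cref{eq:estimation_error_bound}. Under this hypothesis, the rollout target $\z_h$ produced in $\texttt{FitValue}_h$ has conditional mean $Q^{\pihat\ind{\tauindl}}_{h,\beta}(x_h,a_h)-Q^{\pihat\ind{\tauindl}}_{h,\beta}(x_h,\fraka)$ up to (i) the \softmaxsample{} density-ratio error (a bias of order $\beta\cdot\bigoh(\Cinf/\sqrt{\Mrej})$ per layer, summed over $H-h$ layers) and (ii) the misspecification \cref{eq:q_misspecification} coming from $\pihat\ind{\tauindl}\neq\pistarb$, which \cref{eq:uncertsapre_2} controls for the core-set state-action pairs. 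Since $\Qstar_{h,\beta}(\cdot,\cdot)-\Qstar_{h,\beta}(\cdot,\fraka)=\tri*{\thetastar_{h,\beta},\varphi_h(\cdot,\cdot)}$ exactly (\cref{eq:value_difference}), the regression is approximately linearly realizable on $\cC_h\ind{\tauindl}$, and a standard least-squares-with-dependent-data bound (\cref{lem:least_squares}, with the misspecification and bias terms tracked through) gives $\nrm*{\theta_h\ind{\tauindl}-\thetastar_{h,\beta}}^2_{\Sigma_h\ind{\tauindl}}\le\veps_\reg^2$ once $N_\reg$ is as chosen. Finally, chaining the per-layer errors via a KL-regularized performance-difference / telescoping argument (using \cref{lem:kl_regret} layerwise, together with $\veps\le\beta^2/4$ to convert the $\ell_2$ errors $\tri*{\theta_h\ind{\tauindl}-\thetastar_{h,\beta},\varphi_h}^2/\beta$ into a regret bound, as in \cref{lem:truncated_regret_simple}) yields $\Jbeta(\pistarb)-\Jbeta(\pihat)\le H\cdot\bigoh(\veps_\reg^2/\beta+\text{bias})\le\veps$ after plugging in the parameters; the TV error from \softmaxsample{} is absorbed via \cref{lem:rejection_tv_average_new} and the choice $\deltarej=\widetilde\Theta(\veps_\reg^2/(H^2 B^3\Trounds))$. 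The complexity bounds then come from: core-set size $\abs*{\cC_h\ind{t}}\le\bigoh(t)\le\bigoh(\Trounds)$ (and total reward queries $\bigoh(\Trounds H\cdot N_\reg)$ plus \uncertsa costs, all $\poly(d,H,B,\beta^{-1},\veps^{-1},\log\delta^{-1})$), while each \softmaxsample{} call costs $\bigoh(\Mrej\log\deltarej^{-1})=\poly(\Ccond(\pistarb),\Trounds,H,B,\veps^{-1})$ queries, giving $\Tcomp\le\poly(\Ccond(\pistarb),\Tdata)$.

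\textbf{The main obstacle} I expect is controlling error propagation across layers in the presence of \emph{both} the misspecification in \cref{eq:q_misspecification} (since $\pihat\ind{\tauindl}$ is only near-optimal, not exactly $\pistarb$, the regression target at layer $h$ is not exactly linear) \emph{and} the density-ratio estimation bias from \softmaxsample{}; one has to show these errors do not compound geometrically in $H$. The resolution is that $\beta$-regularization plays the role of a $\Delta$-gap: the KL-regularized value functions are ``contractive'' under the softmax Bellman operator $\cT_{h,\beta}$ in a way that keeps the error propagation additive rather than multiplicative (analogous to $\frac{1}{\Delta}$-type error amplification in linear-$\Qstar$ RL with a gap), and the core-set coverage guarantee \cref{eq:uncertsapre_2} ensures the misspecification is small precisely on the directions that matter. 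Making this precise — in particular the careful bookkeeping of how \cref{eq:estimation_error_bound} feeds into the realizability of the next regression and how the $\beta^{-1}$ factors from \cref{lem:truncated_regret_simple}-style bounds interact with $\veps\le\beta^2/4$ — is where the bulk of the technical work lies, but it is routine given the lemmas already established.
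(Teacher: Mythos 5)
Your proposal follows essentially the same route as the paper: an elliptical-potential argument guaranteeing a good iteration $\tauindl$ (the paper's \cref{lem:test}), a backward induction establishing $\nrm{\theta_h\ind{\tauindl}-\thetastar_{h,\beta}}^2_{\Sigma_h\ind{\tauindl}}\le\veps_\reg^2$ layer by layer (\cref{lem:main}), and a KL-regularized performance-difference argument to convert parameter error into regret. The complexity accounting also matches.

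The one place where your plan diverges from what actually makes the argument work is the resolution of the obstacle you correctly identified. You attribute the benign error propagation to ``contractivity'' of the softmax Bellman operator $\cT_{h,\beta}$; that is not the mechanism, and I do not believe it can be made to work as stated. What the paper actually does is: the misspecification $b(x_h,a_h)$ in the layer-$h$ regression is bounded by $\beta\sum_{\ell>h}\kl{\pibar_\ell\ind{\tauindl}}{\pibar^{\star}_{\ell,\beta}}$ plus distribution-shift terms; since $b$ enters the least-squares bound \emph{squared}, this contributes $\beta^2\cdot\mathrm{KL}^2$; and \cref{lem:KLbound} bounds the KL between the two truncated softmax policies by $\beta^{-1}\nrm{\varphibar_\ell}^2_{\Sigma_\ell^{-1}}\nrm{\theta_\ell-\thetastar_{\ell,\beta}}^2_{\Sigma_\ell}$, so the layer-$\ell$ error feeds into layer $h$ as $\nu^4\nrm{\theta_\ell-\thetastar_{\ell,\beta}}^4_{\Sigma_\ell}/\beta^2\le\nu^4\veps_\reg^4/\beta^2$. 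The hypothesis $\veps_\reg^2=\veps\le\beta^2/4$ then makes this a strict contraction of the \emph{squared} error ($\veps_\reg^4/\beta^2\le\veps_\reg^2/4$), which is why the induction closes without amplification. So the gap-$\Delta$ analogy you draw is the right intuition, but the concrete device is the fourth-power dependence of the downstream error via the squared-KL misspecification bound, not operator contractivity; without it, the per-layer errors you track would compound. Everything else in your outline (the role of \cref{eq:uncertsapre_2} in controlling misspecification only on covered directions, the \softmaxsample{} bias of order $\Cinf/\sqrt{\Mrej}$, the final assembly via the performance-difference lemma) matches the paper's proof.
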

Critically, we see that the sample complexity
$\Tcomp(\veps,\delta)$ is polynomial in all of the relevant problem
parameters, and the runtime and oracle complexity
$\Tcomp(\veps,\delta)$ scales with the action-level coverage
coefficient $\max_{h\in[H]}\Ccond
(\pistar_{h,\beta})$. \cref{thm:multi} is an immediate corollary.
Overall, the polynomial dependence on other
problem parameters is significantly worse than that of \mainalg; this can likely
be tightened with more effort, but we leave this for future
work.\loose

  The remainder of \cref{part:multi} is dedicated to proving \cref{thm:main}. The main idea behind the proof is to show
that after a sufficient number of iterations, the linear regression
problems solved by \texttt{FitValue} become approximately
realizabile, in the sense that the error
\[
    \left|\Qstar_{h,\beta}(x_h, a_h) - \Qstar_{h,\beta}(x_h, \fraka) 
    - Q^{\pihat\ind{t}}_h(x_h, a_h) + Q^{\pihat\ind{t}}_h(x_h, \fraka)\right|
\]
is small on average. For this, the key challenge is to show that
misspecification errors propagate favorably across the layers $h\in\brk{H}$, avoiding
the dreaded \emph{error amplification} phenomenon
\citep{wang2021exponential} in which misspecification errors compound
exponentially. For this, our main insight is that the regularization
parameter $\beta$ allows for benign error propagation, with errors at
layer $h+1,\ldots,H$ only having a higher-order impact on the
misspecification at layer $h$. This shows that regularization enables
statistical tractability in a similar way to the assumption of an
action gap $\Delta$ found in prior work on linearly-realizable
$Q^{\star}$ \citep{weisz2021query,li2021sample,mhammedi2024power}, an
observation which we expect to find broader use.

We emphasize that while \mtalg draws inspiration from prior work on
the linear-$\Qstar$ problem and relatives---particularly
  \citet{mhammedi2024power,mhammedi2024sampleoracleefficientreinforcement}---it
  requires fairly substantial modifications, both in design and
  analysis---to (i) leverage KL regularization, and (ii) achieve
  computational efficiency in the sampling oracle framework.

\begin{remark}[Action-level coverage]
  One can always choose $\pirefh(\cdot\mid{}x)=\unif(\cA)$,
  so $\max_{h\in[H]}\Ccond (\pistar_{h,\beta})\leq\abs{\cA}$. For
  the token-level MDP formulation described in
  \cref{sec:prelim_rl}, where actions correspond to tokens, this bound may be reasonable (though likely pessimistic). However, for the multi-turn language modeling
  formulation (\citet{xiong2024building}) where each action
  $\ba_h$ represents a sub-sequence of tokens rather than a single
  token (e.g., corresponding to a portion of a proof), paying for
  $\abs{\cA}$ is unacceptable, and access to a base policy with good
  coverage is crucial.\loose
\end{remark}

\begin{remark}[On the anchor action]\label{remark:mtss-query-access}
As discussed in \cref{sec:mtss_oracle}, the sampling oracle used in \mtalg (\cref{def:oracle_policy}) goes slightly
outside of the sampling oracle definition in \cref{def:oracle_autoregressive}
by assuming that we can query the features $\phi_h(\x_h,\mathfrak{a})$
at an arbitrary fixed \emph{anchor action} $\mathfrak{a}$ for all of the states
$\x_h$ encountered by the algorithm. We use the anchor action
$\mathfrak{a}$ to regress onto differences in regularized rewards,
which is motivated by the fact that the difference $\Qstar_{h,\beta}(x,a)-\Qstar_{h,\beta}(x,\mathfrak{a})$ is
linear (per \cref{eq:value_difference}), while $\Qstar_{h,\beta}(x,a)$
itself may not be. This assumption of access to $\phi_h(\x_h,\mathfrak{a})$ can be
avoided by incorporating ``pairwise'' truncated policies
$\pibar_{\theta}$ of the type used in \spanalg (see
\cref{eq:truncated_softmax_body}), which can be thought of as sampling
a fresh anchor action $\a_h\sim\pirefh(\cdot\mid\x_h)$ for each state
the algorithm encounters. We opt to use the
fixed anchor approach---in spite of requiring a slightly stronger oracle---to keep presentation as simple as
possible, as the analysis of \mtalg is already quite involved. We mention in passing that the anchor action assumption can also be
removed if we directly assume that $\Qstar_{h,\beta}(x,a)$ is linear,
by regressing onto absolute rewards.
\end{remark}

\newpage
\section{Guarantee for \uncertsa}

\label{sec:guarantee_uncertain}
In this section, we present the main guarantee of \texttt{UncertainStateAction} (\cref{alg:fitbonus_p}) as a standalone algorithm; see \cref{lem:unctraj}. Then, in \cref{lem:uncertaintraj}, we provide its guarantee when used as a subroutine within \mtalg (\cref{alg:ops-dp}). For a discussion of the motivation for these results, we refer back to \sssref{sec:uncertain}.
\begin{lemma}
	\label{lem:unctraj}
	Consider a call to $\uncertsa_h(\cC_{0:h-1},\pihat_{1:h},\Sigma_h;\fraka, N,\Nbar)$ (\cref{alg:fitbonus_p}) for some given $h, \cC_{0:h-1},\pihat_{1:h},\Sigma_h$, $\fraka\in \cA$, $N$, and $\Nbar$ such that $\sigma_{\min}(\Sigma_h)\geq \lambda$, for some $\lambda \in(0,1)$. Then, for any $\delta' \in (0,1)$ and $\zeta \in (0,1/2)$, with probability at least $1-\delta'$, the output $(\hat{x}_h,\hat{a}_{h})$ of $\uncertsa$ satisfies:
    \begin{itemize}[leftmargin=*]
        \item For all $\ell\in[0 \ldotst h-1]$ and $(x_\ell,a_\ell)\in \cC_{\ell}$,
    \arxiv{
	\begin{align}
	\P_{\pihat_{\ell+1:h}}\left[ \|\varphi_h(\x_h,\a_h)\|^2_{\Sigma_h^{-1}}> 2 \left( \zeta \vee \|\varphi_h(\hat x_h,\hat a_h)\|^2_{\Sigma_h^{-1}}\right) \mid \x_\ell=x_\ell, \a_\ell=a_\ell\right]& \leq \max_{h\in[H]}\frac{4\log \left(\frac{16H |\cC_h|}{\lambda \delta'\zeta}\right)}{N},\label{eq:firstone}
	\end{align}
    }
	where $\varphi_h(\cdot, \cdot)\coloneqq \phi_h(\cdot, \cdot) - \phi_h(\cdot,\fraka)$. 
    \item Furthermore, there exists $\cX_{h,\spann} \subseteq \cX$ such that for all $\ell\in[0\ldotst h-1]$ and $(x_\ell,a_\ell) \in \cC_{\ell}$, $\P_{\pihat_{\ell+1:h-1}}[\x_h\in \cX_{h,\spann} \mid \x_\ell=x_\ell, \a_\ell=a_\ell]\geq 1 - \max_{h\in[H]}\frac{4}{N}\log \frac{32 H N |\cC_h|}{\lambda \delta'\zeta}$ and %
	\begin{align}
	\arxiv{\forall x_h \in \cX_{h,\spann}, \quad} \P_{\a \sim \pi_{h,\refe}(\cdot\mid  x_h)}\left[\|\varphi_h(x_h,\a)\|^2_{\Sigma_h^{-1}}> 2 \left( \zeta \vee \|\varphi_h(\hat x_h,\hat a_h)\|^2_{\Sigma_h^{-1}}\right)\right] &\leq \max_{h\in[H]}\frac{4\log \left(\frac{16H |\cC_h|}{\lambda \delta'\zeta}\right)}{\Nbar}. \label{eq:secondone}
	\end{align}
\end{itemize}
\end{lemma}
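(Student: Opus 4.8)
\textbf{Proof plan for \cref{lem:unctraj}.}

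The plan is to analyze the two nested sampling loops in \cref{alg:fitbonus_p} separately and then combine them via \cref{lem:unionbound}. First I would set up the high-probability event. For each $\ell \in [0\ldotst h-1]$ and each $(x_\ell,a_\ell) \in \cC_\ell$, the algorithm draws $N$ i.i.d.\ samples $(\x_h,\a_h) \sim \P_{\pihat_{\ell+1:h}}[\cdot\mid\x_\ell=x_\ell,\a_\ell=a_\ell]$, and (crucially) running-maximizes $\kappa = \|\varphi_h(\hat x_h,\hat a_h)\|_{\Sigma_h^{-1}}$ over all of them (across all $\ell$ and all $(x_\ell,a_\ell)$). The key quantitative claim is: if the true tail probability $p := \P_{\pihat_{\ell+1:h}}[\|\varphi_h(\x_h,\a_h)\|^2_{\Sigma_h^{-1}} > \kappa_\star^2 \mid \cdots]$ were large (say exceeds some threshold $\tau$), where $\kappa_\star^2 := \zeta \vee \|\varphi_h(\hat x_h,\hat a_h)\|^2_{\Sigma_h^{-1}}$, then among $N$ draws it is extremely likely that at least one sample exceeds $\kappa_\star^2$, contradicting the maximality of $\hat\kappa$. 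Concretely I would argue: conditioned on the final value $\kappa_\star$, for any fixed $(\ell, x_\ell, a_\ell)$, if $p > \frac{4}{N}\log(\cdots)$ then by a union-over-$N$-Bernoulli-trials bound the probability that \emph{none} of the $N$ samples from that batch exceeds $\kappa_\star^2$ is at most $(1-p)^N \le e^{-pN}$, which is $\le$ the target failure budget. A slight care point: $\kappa_\star$ is itself random and determined by the samples, so I would instead quantify over a discretized grid of threshold values $\kappa^2 \in \{\zeta, 2\zeta, 4\zeta, \dots, 4/\lambda\}$ (using $\|\varphi_h\|^2 \le 4$ and $\sigma_{\min}(\Sigma_h)\ge\lambda$ to get the upper endpoint, hence $O(\log(1/(\lambda\zeta)))$ grid points), union-bound over the grid, and then specialize to the realized $\kappa_\star$. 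This explains the $\log(16H|\cC_h|/(\lambda\delta'\zeta))$ form and the factor-of-$2$ slack in \cref{eq:firstone}: the realized $\kappa_\star^2$ lies within a factor $2$ of some grid point.

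Next I would handle the second loop. Here the subtlety is that the inner loop draws $\Nbar$ samples $\bar\a_h \sim \pi_{h,\refe}(\cdot\mid\x_h)$ for \emph{each} sampled state $\x_h$ from the outer loop, and again running-maximizes $\kappa$ over all of these. To get a statement about the conditional-on-$\x_h$ reference-policy tail, I would define $\cX_{h,\spann}$ to be the set of states $x_h$ for which $\P_{\bar\a\sim\pi_{h,\refe}(\cdot\mid x_h)}[\|\varphi_h(x_h,\bar\a)\|^2_{\Sigma_h^{-1}} > 2\kappa_\star^2] \le \tau'$ for the appropriate threshold $\tau'$. The bound \cref{eq:secondone} then holds by definition of $\cX_{h,\spann}$ (once we have shown $\kappa_\star$ is the same quantity as in the first part, which it is since both loops contribute to the same running maximum). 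For the "furthermore" claim that $\P_{\pihat_{\ell+1:h-1}}[\x_h \in \cX_{h,\spann}\mid\cdots]$ is large: this is again a running-maximum argument. If $\x_h \notin \cX_{h,\spann}$ is hit with non-trivial probability by the outer loop, then with high probability (over the $N$ outer draws) some such $\x_h$ is sampled, and then with high probability (over the $\Nbar$ reference draws conditioned on that $\x_h$) some $\bar\a_h$ with $\|\varphi_h(\x_h,\bar\a_h)\|^2_{\Sigma_h^{-1}} > 2\kappa_\star^2$ is drawn, contradicting maximality. The nesting of the two loops is why the final bound carries $\log(32HN|\cC_h|/(\lambda\delta'\zeta))$ (an extra factor of $N$ inside the log, and a factor-$2$ worse constant) — the union bound now runs over $N$ outer samples as well as the grid.

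For the failure-probability accounting, I would invoke \cref{lem:unionbound} (the sequential union bound) to handle the fact that the conditioning state-action pairs $\cC_\ell$ and policies $\pihat_{\ell+1:h}$ are themselves produced by the calling algorithm; treating $\uncertsa$ as a standalone routine, the inputs are fixed, so a plain union bound over $\ell \in [0\ldotst h-1]$, over $(x_\ell,a_\ell)\in\cC_\ell$, and over the discretization grid suffices, with each elementary event failing with probability at most $\delta'/(\text{number of events})$; this yields the stated $\delta'$. The main obstacle I anticipate is getting the bookkeeping exactly right for the running-maximum coupling: since $\hat\kappa$ is updated adaptively as samples arrive, one must be careful that the event "$p > \tau$ yet no sample exceeds the \emph{final} $\kappa_\star$" is correctly bounded — the cleanest route is to note that $\kappa_\star \ge$ (the maximum over just the relevant batch), and then the batch's samples alone suffice to rule out a large tail at level $\kappa_\star^2$, since each batch sample that exceeds the batch-max certainly exceeds nothing, but a batch sample exceeding $\kappa_\star$ would force $\kappa_\star$ higher — so one argues via the level $\kappa_\star^2/2$ lying at or below a grid point and uses monotonicity of tails. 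The rest is routine concentration (Bernoulli tail / \cref{lem:hoeffding}-style bounds) and the $\|\varphi_h(\cdot,\cdot)\| \le 2$, $\sigma_{\min}(\Sigma_h)\ge\lambda$ estimates to bound the grid size.
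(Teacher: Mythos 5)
Your proposal is correct and follows essentially the same route as the paper's proof: discretize the threshold levels into a grid of size $O(1/(\lambda\zeta))$ (the paper uses an arithmetic grid with spacing $\zeta$; your geometric grid works equally well and yields the same factor-$2$ slack), union-bound over the grid and over $\cC_\ell$, observe that the empirical count of samples exceeding the realized running maximum is zero so the population tail is bounded by the concentration slack alone, and run the nested version of the same argument (with the extra factor of $N$ inside the log) for the reference-policy inner loop and the definition of $\cX_{h,\spann}$. The paper instantiates the concentration step with the multiplicative Freedman inequality rather than your direct Bernoulli tail bound, but since the relevant empirical frequency is zero the two are interchangeable here.
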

\begin{proof}[\pfref{lem:unctraj}]
	Fix $\delta' \in (0,1)$ and $\zeta \in(0,1/2)$, and let $\Gamma  \coloneqq \{\zeta, 2 \zeta , \dots,   \ceil{\frac{4}{\zeta \lambda}}\zeta\}$. Further, for $\ell \in[0\ldotst h-1]$ and $(x_\ell,a_\ell)\in \cC_\ell$, let $\cD_\ell(x_\ell,a_\ell)$ be the dataset in \cref{alg:fitbonus_p} when the algorithm returns. Note that $\cD_\ell(x_\ell,a_\ell)$ consists of $N$ i.i.d.~pairs sampled from $\P_{\pihat_{\ell+1:h}}[(\x_h,\a_h)=\cdot\mid \x_\ell=x_\ell, \a_{\ell}=a_{\ell}]$. Thus, by Freedman's inequality (\cref{lem:freedman}) and the union bound over $\ell\in[0\ldotst h-1]$, $(x_\ell,a_\ell)\in \cC_\ell$, and $\gamma \in \Gamma$, there is an event $\cE$ of probability at least $1-\delta'/2$ under which, %
	\begin{align}
	\arxiv{\forall \ell \in [0\ldotst h-1], \forall (x_\ell,a_\ell)\in \cC_\ell, \forall \gamma\in \Gamma, \quad} &\P_{\pihat_{\ell+1:h}}\left[\|\varphi_h(\x_h,\a_h)\|^2_{\Sigma^{-1}_h} \geq \gamma \mid \x_\ell=x_\ell, \a_{\ell}=a_{\ell}\right] \nn \\
	& \leq \frac{2}{N}\sum_{(x,a)\in \cD_\ell(x_\ell,a_\ell)} \mathbb{I}\left\{\|\varphi_h(x,a)\|^2_{\Sigma^{-1}_h} > \gamma \right\} + \frac{4 \log (2H |\cC_\ell| |\Gamma|/\delta')}{N}, \nn \\
	& \leq \frac{2}{N}\sum_{(x,a)\in \cD_\ell(x_\ell,a_\ell)} \mathbb{I}\left\{\|\varphi_h(x,a)\|^2_{\Sigma^{-1}_h} > \gamma \right\} + \frac{4\log \left(\frac{16H |\cC_\ell|}{\lambda \delta'\zeta }\right)}{N}, \label{eq:probbound}
	\end{align}
	where the last step follows by the facts that $|\Gamma|\leq \ceil{\frac{4}{\zeta \lambda}}$, $\lambda\in(0,1)$, and $\zeta \in (0,1/2)$. Now, since $\sigma_{\min}(\Sigma_h)\geq \lambda$ and $\sup_{(x,a)\in \cX\times \cA}\|\phi_h(\cdot, \cdot)\| \leq 1$ (\cref{assum:linear}), we have that \arxiv{$\sup_{(x,a)\in \cX\times \cA}\|\varphi_h(x,a)\|^2_{\Sigma_h^{-1}}\leq \frac{4}{\lambda}$.} Therefore, by the definition of $\Gamma$, we have that for all $\ell \in [0\ldotst h-1]$ and $(x_\ell,a_\ell)\in \cC_\ell$, there exists $\gamma_\ell(x_\ell,a_\ell)\in \Gamma$ such that 
	\begin{align}
		\max_{(x,a)\in \cD_\ell(x_\ell,a_\ell)}\|\varphi_h(x,a)\|^{2}_{\Sigma_h^{-1}}  & \leq \gamma_\ell(x_{\ell},a_{\ell}), \nn\\  &  \leq   \max_{(x,a)\in \cD_\ell(x_\ell,a_\ell)}\|\varphi_h(x,a)\|^{2}_{\Sigma_h^{-1}} + \zeta,\nn \\& \leq 2 \left(\zeta \vee \max_{(x,a)\in \cD_\ell(x_\ell,a_\ell)}\|\varphi_h(x,a)\|^{2}_{\Sigma_h^{-1}}\right),\nn \\& \leq 2 \left(\zeta \vee \|\varphi_h(\hat{x}_h,\hat{a}_h)\|^{2}_{\Sigma_h^{-1}}\right),\label{eq:sandwitch2}
	\end{align}
	where the last inequality follows by the fact that 
\[ \max_{\ell\in[0\ldotst h-1]}\max_{(x,a)\in \cD_\ell(x_\ell,a_\ell)}\|\varphi_h(x,a)\|^{2}_{\Sigma_h^{-1}}\leq  \|\varphi_h(\hat{x}_h,\hat{a}_h)\|^{2}_{\Sigma_h^{-1}}\]
by definition of $(\hat{x}_h,\hat{a}_h)$ (see \cref{alg:fitbonus_p}).
Substituting $\gamma_\ell(x_\ell,a_\ell)$ for $\gamma$ in \eqref{eq:probbound} and using \eqref{eq:sandwitch2}, we get that under $\cE$, for all $\ell\in[0\ldotst h-1]$ and $(x_\ell,a_\ell)\in \cC_\ell$:
\begin{align}
  \P_{\pihat_{\ell+1:h}}\left[\|\varphi_h(\x_h,\a_h)\|^2_{\Sigma^{-1}_h} > 2 \left(\zeta \vee \|\varphi_h(\hat{x}_h,\hat{a}_h)\|^{2}_{\Sigma_h^{-1}}\right) \mid \x_\ell=x_\ell, \a_{\ell}=a_{\ell}\right] 
 \leq \frac{4\log \left(\frac{16H |\cC_\ell|}{\lambda \delta'\zeta }\right)}{N}.\nn 
\end{align}
This shows that there is an event of probability at least $1-\delta'/2$ under which \eqref{eq:firstone} holds. 

\paragraphi{Second claim} We now prove the second claim of the lemma. Let $(\cD_\ell')$ be the datasets in \cref{alg:fitbonus_p} when the algorithm returns. Note that for $\ell \in[0\ldotst h-1]$, $(x_\ell,a_\ell)\in \cC_\ell$, and $(x_h,a_h)\in \cD_\ell(x_\ell,a_\ell)$, the dataset $\widebar{\cD}_\ell(x_h)$ consists of $\Nbar$ i.i.d.~actions sampled from $\pi_{h,\refe}(\cdot \mid x_h)$. Thus, by Freedman's inequality (\cref{lem:freedman}) and a union bound over $\ell\in[h- 1]$, $(x_\ell,a_\ell)\in \cC_\ell$, $(x_h,a_h)\in \cD_\ell(x_\ell,a_\ell)$, and $\gamma\in \Gamma$, there is an event $\cE'$ of probability at least $1-\delta'/4$ under which for all $\ell \in [0\ldotst h-1]$, $(x_\ell,a_\ell)\in \cC_\ell$, $(x,a)\in \cD(x_\ell,a_\ell)$, and  $\gamma\in \Gamma=\{\zeta, 2 \zeta , \dots,   \ceil{\frac{4}{\zeta \lambda}}\zeta\}$:
\arxiv{
\begin{align}
 \P_{\pi_{\refe}}\left[\|\varphi_h(\x_h,\a_h)\|^2_{\Sigma^{-1}_h} > \gamma \mid \x_h=x\right]
	& \leq \frac{2}{\Nbar}\sum_{a'\in \widebar{\cD}_\ell(x)} \mathbb{I}\left\{\|\varphi_h(x,a')\|^2_{\Sigma^{-1}_h} > \gamma \right\} + \frac{4\log \left(\frac{32 H N |\cC_\ell|}{\lambda \delta'\zeta }\right)}{\Nbar}.\label{eq:probbound2}
	\end{align} 
}
Substituting $\gamma_\ell(x_\ell,a_\ell)$ for $\gamma$ in \eqref{eq:probbound2} and using \eqref{eq:sandwitch2}, we get that under $\cE'$, for all $\ell\in[0\ldotst h-1]$, $(x_\ell,a_\ell)\in \cC_\ell$, and $(x,a)\in \cD_\ell(x_\ell,a_\ell)$:
\begin{align}
    \P_{\pi_{\refe}}\left[\|\varphi_h(\x_h,\a_h)\|^2_{\Sigma^{-1}_h} > 2 \left(\zeta \vee \|\varphi_h(\hat{x}_h,\hat{a}_h)\|^{2}_{\Sigma_h^{-1}}\right) \mid \x_h=x\right]
       & \leq \frac{4\log \left(\frac{32H N |\cC_\ell|}{\lambda \delta'\zeta }\right)}{\Nbar}.\label{eq:neat}
       \end{align}  
	On the other hand, since for all $(x_\ell,a_\ell)\in \cC_\ell$, $\cD_\ell(x_\ell,a_\ell)$ consists of $N$ i.i.d.~pairs sampled from $\P_{\pihat_{\ell+1:h}}[(\x_h,\a_h)=\cdot\mid \x_\ell=x_\ell, \a_{\ell}=a_{\ell}]$, Freedman's inequality (\cref{lem:freedman}) and a union bound over $\ell \in[0\ldotst h-1]$, $(x_\ell,a_\ell)\in \cC_\ell$, and $\gamma \in \Gamma$ implies that there is an event $\cE''$ of probability at least $1-\delta'/4$  under which for all $\ell\in[0\ldotst h-1]$, $(x_\ell,a_\ell)\in \cC_\ell$, and $\gamma\in \Gamma$ we have that
    \arxiv{
	\begin{align}
	& \P_{\pihat_{\ell+1:h-1}} \left[  \P_{\pi_{\refe}}\left[\|\varphi_h(\x_h,\a_h)\|^2_{\Sigma^{-1}_h} > \gamma \mid \x_h\right]
	 >  \frac{4\log \left(\frac{32H N |\cC_\ell|}{\lambda \delta'\zeta }\right)}{\Nbar} \mid \x_\ell=x_\ell, \a_{\ell}=a_{\ell} \right]\nn \\
	& \leq \frac{2}{N} \sum_{(x,a)\in \cD_\ell(x_\ell,a_\ell)}  \mathbb{I}\left\{\P_{\pi_{\refe}}\left[\|\varphi_h(\x_h,\a_h)\|^2_{\Sigma^{-1}_h} > \gamma \mid \x_h=x\right]
	>   \frac{4\log \left(\frac{32H N |\cC_\ell|}{\lambda \delta'\zeta }\right)}{\Nbar}\right\} \nn \\
	& \qquad + \frac{4 \log \left(\frac{16 H  |\cC_\ell|}{\lambda \delta' \zeta} \right)}{N}. \label{eq:tosub}
	\end{align}
    }
	Substituting $\gamma_\ell(x_\ell,a_\ell)$ for $\gamma$ in \eqref{eq:tosub} and using \eqref{eq:neat}, we get that under $\cE' \cap\cE''$, for all $\ell\in[0\ldotst h-1]$ and $(x_\ell,a_\ell)\in \cC_\ell$: 
    \arxiv{
	\begin{align}
		& \P_{\pihat_{\ell+1:h-1}} \left[\P_{\pi_{\refe}}\left[\|\varphi_h(\x_h,\a_h)\|^2_{\Sigma^{-1}_h} > 2 \left(\zeta \vee \|\varphi_h(\hat{x}_h,\hat{a}_h)\|^{2}_{\Sigma_h^{-1}}\right) \mid \x_h\right]
		> \frac{4\log \left(\frac{32H N |\cC_\ell|}{\lambda \delta'\zeta }\right)}{\Nbar} \mid \x_\ell=x_\ell, \a_{\ell}=a_{\ell} \right]\nn \\
		& \leq \frac{4 \log \left(\frac{16 H  |\cC_\ell|}{\lambda \delta' \zeta} \right)}{N}. \label{eq:postunion}
	\end{align}
    }
	We define 
	\begin{align}
		\cX_{h,\spann} \coloneqq \left\{x \in \cX : \begin{array}{l} \P_{\pi_{\refe}}\left[ \|\varphi_h(\x_h,\a_h)\|^2_{\Sigma^{-1}_h} > 2 \left(\zeta \vee \|\varphi_h(\hat{x}_h,\hat{a}_h)\|^{2}_{\Sigma_h^{-1}}\right) \mid \x_h=x\right]\\ 
		\quad > \frac{4\log \left(\frac{32H N |\cC_\ell|}{\lambda \delta' \zeta} \right)}{\Nbar} \end{array} \right\}.\nn 
	\end{align}
By the union bound, $\P[\cE \cap \cE' \cap \cE'']\geq 1 - \delta'$, which combined with \eqref{eq:postunion} completes the proof.\loose
\end{proof} 
\begin{lemma}[Guarantee of \uncertsa for \mtalg] 
\label{lem:uncertaintraj}
Let $\beta, \delta, \veps \in (0,1)$, $B>0$, and $\piref$ be given and consider a call to $\mtalg(\beta, \delta, \veps,B, \piref)$ (\cref{alg:ops-dp}). Let $\lambda, \nu\in(0,1)$ and $\Trounds$ be as in \cref{alg:ops-dp}. Then, there is an event $\cE^\spann$ of probability at least $1-\delta/2$ under which for all $t\in [\Trounds]$ and $h\in[H]$, the output $(x\ind{t}_h,a_{h}\ind{t})$ of $\uncertsa_h$ in \cref{line:designdir} satisfies
\begin{itemize}[leftmargin=*]
    \item  For all $\ell\in[0\ldotst h-1]$ and $(x_\ell,a_\ell)\in \cC\ind{t}_\ell$,\arxiv{
	\begin{align}
		\P_{\pihat\ind{t}_{\ell+1:h}}\left[ \|\varphi_h(\x_h,\a_h)\|^2_{(\Sigma\ind{t}_h)^{-1}}>  \nu^2 \vee \left(2\|\varphi_h(x\ind{t}_h,a\ind{t}_h)\|^2_{(\Sigma_h\ind{t})^{-1}}\right) \mid \x_\ell=x_\ell, \a_\ell=a_\ell\right] \leq \vepslip \coloneqq  \frac{8\log \left(\frac{32 H \Trounds }{\lambda \delta \nu^2}\right)}{\Nspann},
	\end{align}
}
	where $\varphi_h(\cdot, \cdot)\coloneqq \phi_h(\cdot, \cdot)- \phi_h(\cdot, \fraka)$ ($\fraka$ as in \cref{alg:ops-dp}) and $\pihat\ind{t}$ is as in \cref{alg:ops-dp}.
    \item  
	Furthermore, there exists $\cX\ind{t}_{h,\spann}\subseteq \cX$ such that for all $\ell\in[0\ldotst h-1]$ and $(x_\ell,a_\ell)\in \cC\ind{t}_\ell$, $\P_{\pihat\ind{t}}[\x_h\in \cX\ind{t}_{h,\spann} \mid \x_\ell=x_\ell, \a_\ell=a_\ell]\geq 1 -\veps_\spann$ and 
	\begin{align}
	 \forall x_h\in \cX\ind{t}_{h,\spann}, \ \  \P_{\a\sim \pi_{h,\refe}(\cdot \mid x_h)}\left[ \|\varphi_h(x_h,\a)\|^2_{(\Sigma\ind{t}_h)^{-1}}>  \nu^2 \vee \left(2\|\varphi_h(x\ind{t}_h,a\ind{t}_h)\|^2_{(\Sigma_h\ind{t})^{-1}}\right)\right] \leq \uveps,\nn
	\end{align}
where $\uveps\coloneqq \frac{8 }{\Nspanb}\log \frac{16 H \Trounds}{\lambda \delta\nu^2}$.
\end{itemize}
\end{lemma}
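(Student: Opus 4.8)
The plan is to derive \cref{lem:uncertaintraj} directly from the standalone guarantee \cref{lem:unctraj}, applied to each of the $H\Trounds$ invocations of $\uncertsa$ made by $\mtalg$, and then stitch the per-call guarantees together via the sequential union bound \cref{lem:unionbound}. Two structural facts make this possible. First, by \eqref{eq:SigmaMat}, $\Sigma_h\ind{t}=\lambda I+\sum_{(x,a)\in\cC_h\ind{t}}\varphi_h(x,a)\varphi_h(x,a)^\top$ is $\lambda I$ plus a positive semidefinite matrix, so $\sigma_{\min}(\Sigma_h\ind{t})\geq\lambda$ holds deterministically---this is exactly the hypothesis required to invoke \cref{lem:unctraj} at every call. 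Second, by inspection of \cref{line:updatePsi}, the multiset $\cC_h\ind{t}$ gains exactly two elements per iteration and starts empty for $h\geq1$ (with $|\cC_0\ind{t}|=1$), so $|\cC_h\ind{t}|\leq2\Trounds$ for all $h\in[0\ldotst H]$ and $t\in[\Trounds]$.

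Next I would fix $t\in[\Trounds]$ and $h\in[H]$ and condition on all randomness generated by $\mtalg$ up to just before $\uncertsa_h$ is invoked at \cref{line:designdir} of iteration $t$, so that the inputs $\cC_{0:h-1}\ind{t}$, $\pihat_{1:h}\ind{t}$, and $\Sigma_h\ind{t}$ are determined. I then apply \cref{lem:unctraj} with $N=\Nspann$, $\Nbar=\Nspanb$, failure probability $\delta'\ldef\delta/(2H\Trounds)$, and $\zeta\ldef\nu^2/2\in(0,1/2)$. The key identity is $2\prn*{\zeta\vee c}=2\prn*{\tfrac{\nu^2}{2}\vee c}=\nu^2\vee2c$ for all $c\geq0$, so the two events controlled in \eqref{eq:firstone} and \eqref{eq:secondone} coincide \emph{exactly} with the two events in \cref{lem:uncertaintraj} once we identify the output $(\hat x_h,\hat a_h)$ with $(x_h\ind{t},a_h\ind{t})$ and $\cX_{h,\spann}$ with $\cX_{h,\spann}\ind{t}$ (the rollout-law notations $\P_{\pihat\ind{t}_{\ell+1:h}}$, $\P_{\pihat\ind{t}}$ in \cref{lem:uncertaintraj} are, after conditioning on $\x_\ell=x_\ell,\a_\ell=a_\ell$, just the laws $\P_{\pihat_{\ell+1:h}}$, $\P_{\pihat_{\ell+1:h-1}}$ of \cref{lem:unctraj}). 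Plugging $|\cC_\ell|\leq2\Trounds$, the chosen $\delta'$, and $\zeta=\nu^2/2$ into the right-hand sides of \eqref{eq:firstone}--\eqref{eq:secondone} and absorbing absolute constants into the logarithms (using $\lambda,\delta,\nu<1$) bounds them by $\vepslip$ and $\uveps$, and bounds the failure probability for $\x_h\in\cX_{h,\spann}\ind{t}$ by $\veps_\spann$, of the same $\Nspann^{-1}\polylog$ form; hence, conditionally, with probability at least $1-\delta'$ this call's output satisfies every conclusion of \cref{lem:uncertaintraj}.

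To finish, I would apply \cref{lem:unionbound} with $\cB_1=\mtalg$ (running $\Trounds$ iterations), $\cB_2=\uncertsa$ (called once per $h\in[H]$ in the loop at \cref{line:setinit}), and the good event $\cE\ind{t}_h$ from the previous paragraph, whose occurrence is a deterministic function of the algorithm's state just after the call (it only involves the determined quantities $\pihat_{1:h}\ind{t}$, $\Sigma_h\ind{t}$, $x_h\ind{t}$, $a_h\ind{t}$, $\cC\ind{t}$, $\cX_{h,\spann}\ind{t}$ and the hypothetical rollout distributions built from them). Since there are $H\Trounds$ such calls, each failing conditionally with probability at most $\delta'=\delta/(2H\Trounds)$, \cref{lem:unionbound} yields an event $\cE^\spann$ with $\P[\cE^\spann]\geq1-H\Trounds\cdot\delta'=1-\delta/2$ on which all conclusions hold simultaneously for every $t\in[\Trounds]$ and $h\in[H]$---which is precisely the lemma.

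The main thing to get right is the conditioning. The inputs $(\cC_{0:h-1}\ind{t},\pihat_{1:h}\ind{t},\Sigma_h\ind{t})$ handed to $\uncertsa_h$ are themselves random---they depend on the rollout data collected by \texttt{FitValue} and by \uncertsa in this and all prior iterations---so \cref{lem:unctraj} can only be invoked \emph{conditionally} on them, and it is essential that its structural hypothesis $\sigma_{\min}(\Sigma_h\ind{t})\geq\lambda$ holds deterministically rather than merely on a good event; this is what lets \cref{lem:unionbound} close the recursion without circularity. Everything else is routine: matching parameter names and swallowing constants inside the logarithmic factors.
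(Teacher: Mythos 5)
Your proposal is correct and matches the paper's own proof, which is exactly the two-step argument you describe: invoke \cref{lem:unctraj} with $(\delta',\zeta,N,\Nbar)=(\delta/(2H\Trounds),\nu^2/2,\Nspann,\Nspanb)$ together with the bound $|\cC_h\ind{t}|\leq 2\Trounds$, then combine the $H\Trounds$ conditional guarantees via \cref{lem:unionbound}. You actually spell out more of the details (the identity $2(\nu^2/2\vee c)=\nu^2\vee 2c$, the deterministic lower bound $\sigma_{\min}(\Sigma_h\ind{t})\geq\lambda$, and the conditioning needed to make the sequential union bound legitimate) than the paper does.
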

\begin{proof}[\pfref{lem:uncertaintraj}]
	The result follows from \cref{lem:unctraj} with parameters \[(\Sigma_h, \delta',\zeta, N, \Nbar)= (\Sigma\ind{t}_h,\delta/(2H\Trounds), \nu^2/2, \Nspann,\Nspanb),\] and \cref{lem:unionbound} (essentially the union bound over $t\in[\Trounds]$ and $h\in [H]$), and the fact that $|\cC\ind{t}_h|\leq 2 \Trounds$, for all $t\in[\Trounds]$ and $h\in[H]$. 
\end{proof}

 \label{sec:proof_uncertain}
\newpage
\section{Guarantee for \texttt{FitValue}}
\label{sec:proof_fitvalue}

\label{proof:guarantee_fitvalue}
In this section, we present the main guarantee of \texttt{FitValue} (\cref{alg:fitval}) as a standalone algorithm, as shown in \cref{lem:fitval_pre}. In \cref{sec:helpfit}, we state and prove supporting lemmas for \cref{lem:fitval_pre}. Then, in \cref{sec:guaranteefit}, we describe the guarantee of \texttt{FitValue} when used as a subroutine within \mtalg (\cref{alg:ops-dp}). For a discussion of the significance of these results and their implications, refer to \sssref{sec:fitvalue}. 

To state the result, we recall that
  \begin{align}
    \pibar_{h,\theta}(\cdot\mid x) \propto \pi_{h,\refe}(\cdot\mid x) \cdot e^{\varphibar_h(x,\cdot)^\top \theta/\beta};\quad  \text{and} \quad \pibar^\star_{h,\beta}(\cdot\mid x) \propto \pi_{h,\refe}(\cdot\mid x) \cdot e^{\varphibar_h(x,\cdot)^\top \theta_{h,\beta}^\star/\beta}, \label{eq:truncatedpolicies}
\end{align}
with $\varphibar_{h}(x,\cdot) \coloneqq \varphi_h(x,\cdot) \cdot \mathbb{I}\left\{ \|\varphi_h(x,\cdot)\|^2_{\Sigma^{-1}_h}  \leq \nu^2\right\}$ (with $\nu$ as in \cref{alg:ops-dp}). Further, we recall that $\theta^\star_{h,\beta}$ and $\pi^\star_{h,\beta}$ denote the optimal KL-regularized policy and corresponding parameter,  as in \cref{assum:linear} and \cref{def:inition}.\loose

\begin{lemma}
	\label{lem:fitval_pre}
Let $h\in[H]$, $\cC_h$, $\theta_{h+1:H}\subset \bbB(B)$, $\Sigma_{h+1:H}$, $\lambda$, $\fraka$, $N$, $M,\tilde\delta$, and $\piref$ be given and suppose \cref{assum:linear} holds with $B>0$. Further, suppose that $\cC_h$ is a multiset of the form $\cC_h = \bigcup_{i\in{n}} \{(x_i,a_i),(x_i,\afrak)\}$ for some sequence $(x_i,a_i)_i\subset\cX \times \cA$ and $n\geq 1$.
Consider a call to $\texttt{FitValue}_h(\cC_h, \theta_{h+1:H}, \Sigma_{h+1:H}; \fraka, N, M, \tilde\delta, \piref)$ (\cref{alg:fitval}) and let $\cD$ be the dataset in the algorithm when it returns. Further, define $\varphi_h(\cdot,\cdot)\coloneqq \phi_h(\cdot,\cdot)- \phi_h(\cdot,\fraka)$ and $\Sigma_h \coloneqq \lambda I + \frac{1}{N}\sum_{(x_{h:H},a_{h:H},r_{h:H})\in \cD} \varphi_h(x_h,a_h)\varphi_h(x_h,a_h)^\top$. Then, for any $\delta'\in (0,1)$, with probability at least $1 - \delta'$, the output $\thetahat_h$ of \texttt{FitValue} satisfies: 
\begin{align}
& \|\thetahat_h -\theta^\star_{h,\beta}\|^2_{\Sigma_h}\nn \\
& \leq 4 \lambda B^2 + \frac{C_1}{N}  +  C_2 \tilde \delta  + C_3\sum_{(x_h,a_h)\in \cC_h} \sum_{\ell=h+1}^H \P_{\pihat_\theta}\left[ M < 16\Ccond(\pibar_{\ell,\theta}\mid \x_\ell)^2 \mid \x_h =x_h, \a_h =a_h\right]\nn \\
& \quad + 2304  H B \beta \sum_{(x_h,a_h)\in \cC_h}\sum_{\ell=h+1}^H \E_{\pihat_{\theta}}\left[\min \left(1,\Ccond(\pibar_{\ell,\theta}\mid \x_\ell) \cdot  \sqrt{\frac{2}{M}}\right)\mid  \x_h =x_h, \a_h=a_h \right] \nn \\
& \quad + 19200  H\beta^2\sum_{(x_h,a_h)\in \cC_h}\sum_{\ell=h+1}^H \E_{\pihat_{\theta}}\left[\kl{\pibar_{\ell,\theta}(\cdot \mid \x_\ell)}{\pibar^\star_{\ell,\beta}(\cdot \mid \x_\ell)}^2 \mid \x_h=x_h,\a_h = a_h \right] \nn \\
& \quad + 7680  H \Rmax^2 \sum_{(x_h,a_h)\in \cC_h}\sum_{\ell=h+1}^H \E_{\pihat_{\theta}}\left[ B_{\ell,\theta}(\x_\ell)^2\mid \x_h=x_h,\a_h=a_h \right], \label{eq:fitvalue_bound}
\end{align}
where 
\begin{gather}
\arxiv{ 
    C_1\coloneqq 6400 B^2   H^2 d\log(3 N/\delta')+  3840  H^2 B^2 |\cC_h|, \nn \\
C_2 \coloneqq  19200 |\cC_h| \cdot \big(12\Rmax^2+ 48 \beta^2 \log(4 M \log(4 \tilde\delta^{-1}))^2 + 85 H^2 B^2\big) + 3072 |\cC_h|   H^2 B\beta\log (4 M \log (4 \tilde\delta^{-1})),\nn \\
	C_3 \coloneqq 16 H \cdot \big(8 \Rmax^2 \log(4 M \log(4 \tilde\delta^{-1}))^2 + 200 H^2B^2 \big)  + 3072   H B^2, \nn\\ 
    \shortintertext{and for $x\in \cX$:}
    B_{\ell,\theta}(x) \coloneqq \min \left( 1, \max_{\pi \in \{\pibar_{\ell,\theta},\pibar^\star_{\ell,\beta}, \pi^\star_{\ell,\beta}\}} \Ccond(\pi\mid x) \cdot \P_{\a \sim \pi_{\ell,\refe}(\cdot \mid x)}\left[\|\varphi_\ell(x,\a)\|^2_{\Sigma_\ell^{-1}}>\nu^2\right] \right).\nn
}
  \end{gather}
\end{lemma}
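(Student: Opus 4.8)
The plan is to treat $\thetahat_h$ as the minimizer of a linear least-squares problem with a deterministic (given $\cC_h$) design, and i.i.d.\ but biased and noisy targets $\z_h$, and to bound the estimation error by combining (i) a standard least-squares concentration bound with (ii) a careful decomposition of the conditional bias $b_h(x_h,a_h) \coloneqq \E\bigl[\z_h\mid \x_h = x_h,\a_h=a_h,\text{history}\bigr] - \inner{\varphi_h(x_h,a_h)}{\theta^\star_{h,\beta}}$. Concretely, the pairs $(x_h,\fraka)\in\cC_h$ contribute design vectors $\varphi_h(x_h,\fraka)=0$ and are irrelevant to $\thetahat_h$; for the pairs $(x_h,a_h)$ each target is bounded by $O\bigl(H(\Rmax+B)\bigr)$ (using $|\br_\ell|\le\Rmax$ and, since $|\inner{\varphibar_\ell(\cdot,\cdot)}{\theta_\ell}|\le 2B$, the fact from \cref{thm:rejection_density} that $\brho'_\ell,\brho''_\ell\in[e^{-4B/\beta},e^{4B/\beta}]$, hence $|\beta\log\brho'_\ell|\le 4B$). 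Writing $\z_h = \inner{\varphi_h(x_h,a_h)}{\theta^\star_{h,\beta}} + \xi_h + b_h(x_h,a_h)$ with $\xi_h$ conditionally mean-zero, an adaptation of \cref{lem:least_squares} to misspecified regression (averaging $N$ i.i.d.\ targets per pair, with \cref{lem:mult_freedman} used to control the relevant empirical second moments) gives
\[
\|\thetahat_h - \theta^\star_{h,\beta}\|^2_{\Sigma_h} \lesssim \lambda B^2 + \frac{H^2(\Rmax+B)^2\bigl(d\log(N/\delta') + |\cC_h|\bigr)}{N} + \sum_{(x_h,a_h)\in\cC_h} b_h(x_h,a_h)^2,
\]
which accounts for the $4\lambda B^2$ and $C_1/N$ terms. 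The remaining work is to bound each $b_h(x_h,a_h)^2$ by the corresponding per-pair sum over $\ell>h$ in \cref{eq:fitvalue_bound}.

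To decompose the bias, let $\pihat_{\theta}=\{\pihat_{\ell,\theta}\}_{\ell>h}$ be the rejection-sampled policy used along the rollouts and $\pibar_{\theta}=\{\pibar_{\ell,\theta}\}_{\ell>h}$ its idealized target. Since $\z_h = \br'_h + \sum_{\ell>h}(\br'_\ell - \beta\log\brho'_\ell) - \br''_h - \sum_{\ell>h}(\br''_\ell - \beta\log\brho''_\ell)$ with $\brho'_\ell$ estimating $\pibar_{\ell,\theta}(\a'_\ell\mid\x'_\ell)/\pi_{\ell,\refe}(\a'_\ell\mid\x'_\ell)$, I would first replace the estimated log-ratios by the exact ones: on the event $M\ge 4\Ccond(\pibar_{\ell,\theta}\mid\x_\ell)^2$ this costs the $\min\bigl(1,\Ccond(\pibar_{\ell,\theta}\mid\x_\ell)\sqrt{2/M}\bigr)$ term (via \cref{thm:rejection_density} and \cref{cor:rejection_density}); the rejection-sampling failure event and the complementary event $M<16\Ccond(\pibar_{\ell,\theta}\mid\x_\ell)^2$ contribute the $\tilde\delta$ term (using $|\beta\log\brho|\le 4B$ and $N=4M\log(4\tilde\delta^{-1})$) and the $\P[M<16\Ccond(\pibar_{\ell,\theta}\mid\x_\ell)^2]$ term (via \cref{lem:rejection_tv_average_new}), respectively. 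After this replacement, and after using \cref{lem:rejection_kl} to pass between the realized trajectory law (under $\pihat_{\theta}$) and the target law (under $\pibar_{\theta}$) inside the KL-regularized returns, $\E[\z_h\mid\x_h=x_h,\a_h=a_h]$ equals $Q^{\pibar_{\theta}}_{h,\beta}(x_h,a_h)-Q^{\pibar_{\theta}}_{h,\beta}(x_h,\fraka)$ up to exactly the rejection-sampling error terms just collected.

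It then remains to bound the ``ideal'' bias $\bigl[Q^{\pibar_{\theta}}_{h,\beta}(x_h,a_h) - Q^{\star}_{h,\beta}(x_h,a_h)\bigr] - \bigl[Q^{\pibar_{\theta}}_{h,\beta}(x_h,\fraka) - Q^{\star}_{h,\beta}(x_h,\fraka)\bigr]$, using $Q^{\star}_{h,\beta}(x_h,a_h) - Q^{\star}_{h,\beta}(x_h,\fraka) = \inner{\varphi_h(x_h,a_h)}{\theta^\star_{h,\beta}}$ from \cref{eq:value_difference}. By a KL-regularized performance-difference identity, each bracket telescopes as
\[
Q^{\star}_{h,\beta}(x,a) - Q^{\pibar_{\theta}}_{h,\beta}(x,a) = \sum_{\ell=h+1}^{H}\E_{\pibar_{\theta}}\bigl[g_\ell(\x_\ell)\mid\x_h=x,\a_h=a\bigr],
\]
where $g_\ell(x)$ is the per-state KL-regularized regret of $\pibar_{\ell,\theta}$ at $x$ under the value function $Q^{\pibar_{\theta}}_{\ell,\beta}(x,\cdot)$. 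I would split $g_\ell(x)$ into (a) the regret of $\pibar_{\ell,\theta}$ against $\pibar^\star_{\ell,\beta}$ (the truncated softmax with the true parameter), which by \cref{lem:kl_regret} and the second-order bound $e^z\le 1+z+z^2$---exactly the manipulation in the proof of \cref{lem:truncated_regret}---is controlled by a \emph{fast-rate} quantity $\asymp\beta^2\kl{\pibar_{\ell,\theta}(\cdot\mid x)}{\pibar^\star_{\ell,\beta}(\cdot\mid x)}^2$; and (b) the truncation error relating $\pibar^\star_{\ell,\beta}$, $\pibar_{\ell,\theta}$, $\pi^\star_{\ell,\beta}$ and the softmax induced by $Q^{\pibar_{\theta}}_{\ell,\beta}$, handled using the density-ratio bounds of \cref{lem:softmax_rmax} and \cref{lem:truncated_density} (so truncated policies have density ratio at most $O(\Ccond(\pi))$ of $\pi_{\ell,\refe}$ on the good event) together with Markov's inequality, producing the $\Rmax^2\,B_{\ell,\theta}(x)^2$ term. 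Converting the $\E_{\pibar_{\theta}}$ expectations in the display to the $\E_{\pihat_{\theta}}$ expectations of \cref{eq:fitvalue_bound} costs one more total-variation term, again absorbed into the $\min(1,\cdot)$ and $\P[M<\cdot]$ terms via \cref{lem:rejection_tv_average_new}; summing the per-layer bounds over $\ell>h$ and over $(x_h,a_h)\in\cC_h$, applying $(u+v)^2\le 2u^2+2v^2$ liberally, and tracking constants yields \cref{eq:fitvalue_bound}.

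The main obstacle is the last step: showing that the errors from the fixed future parameters $\theta_{h+1:H}$ and from truncation enter the layer-$h$ regression bound only \emph{additively} across layers (a single sum over $\ell>h$) rather than being amplified recursively through the $Q$-function. This is where the KL regularization is essential---the performance-difference identity telescopes cleanly, and the parameter $\beta$ converts each per-step regret into a second-order (squared) quantity, so that misspecification at layers $h+1,\dots,H$ has only a higher-order effect at layer $h$ and no error-amplification factor accumulates. A secondary, more tedious obstacle is the precise attribution of the several distinct rejection-sampling error contributions (the $\tilde\delta$, $\min(1,\Ccond\sqrt{2/M})$, and $\P[M<16\Ccond^2]$ terms) to the correct events, which is what pins down the exact form of $C_1$, $C_2$, and $C_3$.
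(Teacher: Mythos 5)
Your proposal follows essentially the same route as the paper's proof: the least-squares basic inequality splits the error into a Freedman-controlled noise term (with a covering argument over $\bbB(B)$) and a sum of squared biases; the bias is decomposed via the KL-regularized performance-difference identity into per-layer terms; each per-layer term is bounded (as in the paper's \cref{lem:truncation}) by $\beta\,\kl{\pibar_{\ell,\theta}(\cdot\mid x)}{\pibar^\star_{\ell,\beta}(\cdot\mid x)}$ plus an $\Rmax\,B_{\ell,\theta}(x)$ truncation error; and the rejection-sampling errors are attributed to the $\tilde\delta$, $\Ccond\sqrt{2/M}$, and $\P[M<16\,\Ccond^2]$ events exactly as in the paper's \cref{lem:xibound,lem:innerKL_first}. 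One correction to an intermediate claim: the per-state regret of $\pibar_{\ell,\theta}$ against $\pibar^\star_{\ell,\beta}$ is \emph{not} of order $\beta^2\,\kl{\cdot}{\cdot}^2$ --- by \cref{lem:kl_regret} it equals $\beta\,\kl{\pibar_{\ell,\theta}(\cdot\mid x)}{\pibar^\star_{\ell,\beta}(\cdot\mid x)}$ to the first power, and the second-order expansion $e^z\le 1+z+z^2$ from \cref{lem:truncated_regret} plays no role at this step; the squared KL in \cref{eq:fitvalue_bound} arises only because the bias $b(x_h,a_h)$ is squared in the least-squares decomposition and Jensen is applied over the $H$ layers, which is consistent with the rest of your outline and with the final constants you are targeting.
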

We remark that for our application of this result within \mtalg, the fact that the right-hand side of \cref{eq:fitvalue_bound} scales with the squared KL divergence $\kl{\pibar_{\ell,\theta}(\cdot \mid \x_\ell)}{\pibar^\star_{\ell,\beta}(\cdot \mid \x_\ell)}^2$ is crucial in enabling favorable error propagation across layers $h$.

\begin{proof}[\pfref{lem:fitval_pre}]
Fix $\delta' \in (0,1)$. Throughout this proof, for any $h\in[H]$ and $x\in \cX$, we let $\pihat_{h,\theta}(\cdot \mid x)$ denote the distribution of $\a$, where $(\a,\brho) = \softmaxsample_{\beta,M,\tilde\delta}(\inner{\varphibar_h(x,\cdot)}{\theta_h} \midsem x, \piref)$. For $(x_h,a_h)\in \cC_h$, let $\cD(x_h,a_h)$ be as in \cref{alg:fitval} when the algorithm returns. Note that $\cD(x_h,a_h)$ consists of $N$ i.i.d.~points $\z_h$ which are obtained by first sampling two trajectories $(\bx'_h,\a'_h,\br'_h,\brho'_h,\dots, \bx'_{H},\a'_{H},\br'_H, \brho'_H)$ and $(\bx''_h,\a''_h,\br''_h,\dots, \bx''_{H},\a''_{H},\br''_H, \brho''_H)$ via the following process. Initialize $\x'_h = \x''_h= x_h$, $\a'_h = a_h$, and $\a''_h = \afrak$, and sample $\br'_h\sim \rstar_h(\x'_h,\a'_h)$ and $\br_h''\sim \rstar_h(\x''_h,\a''_h)$. Then, for $\ell=h+1,\dots, H$,
\begin{itemize} 
\item Sample $\x'_{\ell}\sim \P[\cdot \mid \x_{\ell-1}= \x'_{\ell-1}, \a_{\ell-1}= \a'_{\ell-1}]$ and $\x_\ell''\sim \P[\cdot \mid \x_{\ell-1}= \x''_{\ell-1}, \a_{\ell-1}= \a''_{\ell-1}]$;
\item Set $(\a'_\ell,\brho'_\ell)\gets \softmaxsample_{\beta,M,\tilde\delta}(\inner{\varphibar_\ell(\x'_\ell,\cdot)}{\theta_\ell} \midsem \x'_\ell,\piref)$;
\item Set $(\a_\ell'',\brho''_\ell)\gets \softmaxsample_{\beta,M,\tilde\delta}(\inner{\varphibar_\ell(\x_\ell'',\cdot)}{\theta_\ell}\midsem \x_\ell'',\piref)$;
\item Sample rewards $\br'_\ell\sim \rstar_\ell(\x'_\ell,\a'_\ell)$ and $\br_\ell''\sim \rstar_\ell(\x''_\ell,\a''_\ell)$;
    \end{itemize}
	then, finally, set
	\begin{align}
		\z_h(x_h,a_h) = \br'_h  + \sum_{\ell=h+1}^H \left(\br'_\ell - \beta \log \brho'_\ell\right) -\br''_h -\sum_{\ell=h+1}^H \left(\br''_\ell - \beta \log \brho_\ell'' \right).\label{eq:zh} %
	\end{align}
	For the rest of this proof, for any $(x_{h},a_{h},z_{h})\in \cX \times \cA \times \reals$, we define
\begin{align}
\fhat(x_h,a_h) & \coloneqq \varphi_h(x_h,a_h)^\top \thetahat_h, \nn \\	
f_\star(x_h,a_h) & \coloneqq  Q^\star_{h,\beta}(x_h,a_h)-Q^\star_{h,\beta}(x_h,\fraka), \nn \\
b(x_h,a_h) &\coloneqq   Q^{\pihat_\theta}_{h,\beta}(x_h,a_h) - Q^{\pihat_\theta}_{h,\beta}(x_h,\fraka) - Q^\star_{h,\beta}(x_h,a_h)-Q^\star_{h,\beta}(x_h,\fraka), \nn \\
  \xi(x_h,a_h,z_h)  & \coloneqq   z_h - Q^{\pihat_{\theta}}_{h,\beta}(x_h, a_h) + Q^{\pihat_{\theta}}_{h,\beta}(x_h, \fraka). \label{eq:xi}
\end{align}
Further, for all $f,g: \cX \times \cA \rightarrow \reals$, define
\begin{align}
  \Lhat(f) & = \sum_{(x_h,a_h)\in \cC_h}\sum_{z_h\in \cD(x_h,a_h)  } (f(x_h,a_h)- z_h)^2, \nn \\
\tnorm{f - g}^2 & \coloneqq   N\sum_{(x_h,a_h)\in \cC_h} (f(x_h,a_h)- g(x_h,a_h))^2.\nn 
\end{align} 

\paragraphi{Basic least squares analysis}
We begin with a standard analysis of least squares. If $\theta_\ell=\thetastar_\ell$ for all $\ell>h$, then the conditional mean of $\z_h$ is equal to $\fstar(x_h,a_h)=Q^\star_{h,\beta}(x_h,a_h)-Q^\star_{h,\beta}(x_h,\fraka)$ up to negligible error caused by approximate sampling via \softmaxsample{}, and hence $\theta_h$ is solving (nearly) well-specified linear regression. The crux of the proof that follows will be to bound the misspecification corresponding to the term $b(x_h,a_h)$, which reflects the fact that $Q^{\pihat_{\theta}}_{h,\beta}(x_h,a_h)-Q^{\pihat_{\theta}}_{h,\beta}(x_h,\fraka)$ may not be linear in general.

To begin, note that with the notation introduced so far, $\thetahat_h$ in \cref{alg:fitval} satisfies
\begin{align}
	\thetahat_h \in \argmin_{\tilde\theta \in \bbB(B)} \Lhat(\inner{\varphi_h(\cdot,\cdot)}{\tilde\theta}). \label{eq:ERM}
\end{align}
This, together with the facts that $f_\star(x,a)=\varphi(x,a)^\top \theta^\star_{h,\beta}$ (by \cref{assum:linear} and \cref{lem:reward_difference}) and $\theta^\star_{h,\beta}\in \bbB(B)$ implies that
\begin{align}
0 &\geq \Lhat_n(\fhat)- \Lhat_n(f_\star),\nn \\ &= 2  \sum_{(x_h,a_h)\in \cC_h}\sum_{z_h\in \cD(x_h,a_h)  } (f_\star(x_h,a_h)- z_h) \cdot (\fhat(x_h,a_h)- f_\star(x_h,a_h)) + \tnorm{\fhat - f_\star}^2.\nn 
\end{align}
Rearranging, we get that 
\begin{align}
& \tnorm{\fhat - f_\star}^2 \nn \\
& \leq  4 \sum_{(x_h,a_h)\in \cC_h}\sum_{z_h\in \cD(x_h,a_h)  } (z_h - f_\star(x_h,a_h))\cdot  (\fhat(x_h,a_h)- f_\star(x_h,a_h))- \tnorm{\fhat - f_\star}^2, \nn \\
& \leq 4 \sum_{(x_h,a_h)\in \cC_h}\sum_{z_h\in \cD(x_h,a_h)  } (\xi(x_h,a_h,z_h) + b(x_h,a_h))\cdot  (\fhat(x_h,a_h)- f_\star(x_h,a_h))\nn \\
& \quad - \tnorm{\fhat - f_\star}^2, \nn \\
& \leq 4\sum_{(x_h,a_h)\in \cC_h}\sum_{z_h\in \cD(x_h,a_h)  } \xi(x_h,a_h,z_h) \cdot (\fhat(x_h,a_h)- f_\star(x_h,a_h))- \tnorm{\fhat - f_\star}^2 \nn \\
& \quad  +4 \sum_{(x_h,a_h)\in \cC_h}\sum_{z_h\in \cD(x_h,a_h)  }  b(x_h,a_h)\cdot  (\fhat(x_h,a_h)- f_\star(x_h,a_h)),\nn \\
& \leq 4\sum_{(x_h,a_h)\in \cC_h}\sum_{z_h\in \cD(x_h,a_h)  } \xi(x_h,a_h,z_h) \cdot (\fhat(x_h,a_h)- f_\star(x_h,a_h))- \tnorm{\fhat - f_\star}^2 \nn \\
& \quad +8 \sum_{(x_h,a_h)\in \cC_h}\sum_{z_h\in \cD(x_h,a_h) }  b(x_h,a_h)^2 + \frac{1}{2}\tnorm{\fhat- f_\star}^2,
\label{eq:twoterms_pre}
\end{align}
where the last inequality follows by AM-GM. Rearranging \eqref{eq:twoterms_pre}, we get that
\begin{align}
	\tnorm{\fhat - f_\star}^2 
	& \leq  \underbrace{8\sum_{(x_h,a_h)\in \cC_h}\sum_{z_h\in \cD(x_h,a_h)  } \xi(x_h,a_h,z_h) \cdot (\fhat(x_h,a_h)- f_\star(x_h,a_h))- 2\tnorm{\fhat - f_\star}^2}_{\text{I}} \nn \\
	& \quad +\underbrace{16 \sum_{(x_h,a_h)\in \cC_h}\sum_{z_h\in \cD(x_h,a_h) }  b(x_h,a_h)^2}_{\text{II}}.\label{eq:twoterms}
\end{align}
\paragraphi{Bounding Term I} We first bound Term I, which reflects the (nearly) mean-zero noise in the regression targets. Concretely, by \cref{lem:xibound} (stated and proven in the sequel), we have for all $(x_h,a_h)\in \cC_h$:  \begin{align}
	& \left|\E[ \xi(x_{h},a_{h},\z_h(x_h,a_h))]\right|\nn \\
	& \leq   \beta\sum_{\ell=h+1}^H\sum_{a\in \{a_h,\afrak\}} \E_{\pihat_{\theta}}\left[\mathbb{I}\{M \geq  4 \Ccond(\pibar_{\ell,\theta} \mid \x_\ell)^2 \} \cdot\Ccond(\pibar_{\ell,\theta}\mid \x_\ell) \cdot  \sqrt{\frac{2}{M}}\mid  \x_h =x_h, \a_h=a \right] \nn \\
	 & \quad + 16 H B \tilde\delta +8 H\beta\log (4 M \log (4 \tilde\delta^{-1})) \tilde\delta \nn \\
                                                                                                                                                                                                                                         & \quad + 8  B \sum_{\ell=h+1}^H\sum_{a\in \{a_h,\afrak\}}\P_{\pihat_\theta}\left[M < 4\Ccond(\pibar_{\ell,\theta}\mid \x_\ell)^2\mid\x_h=x_h,\a_h=a\right],\label{eq:concrete} \end{align}
where $\En\brk*{\cdot}$ denotes the expectation over $\z_h(x_h,a_h)$ under the process in \cref{eq:twoterms_pre} (beginning from $(x_h,a_h)$). Given this, we apply Freedman's inequality (\cref{lem:freedman}), using
\begin{itemize}[leftmargin=*]
\item The union bound over an $\frac{1}{N}$-net of $\bbB(B)$ in $\|\cdot\|$-distance; 
\item The fact that \begin{align}& \max(\|\fhat\|_{\infty},\|f^\star\|_{\infty}, \|\xi\|_{\infty})\nn \\&\leq 2 BH + 2\beta H\max_{\ell\in[h+1\ldotst H]} \sup_{(x,a)\in \cX \times \cA}\max \left( \left|\log \frac{\pibar_{\ell,\theta}(a \mid x)}{\piref(a\mid x)}\right|,  
	\left|\log \frac{\pistar_{\ell,\beta}(a \mid x)}{\piref(a\mid x)}\right|, 
	\left|\log \brho_\ell'\right|,  
	\left|\log \brho_\ell''\right|
 \right) ,\nn \\
	&  \leq 6 HB,\label{eq:inftybound}\end{align}
 since $e^{-2B/\beta}\leq \frac{\pihat_{\ell,\theta} (\cdot\mid \cdot)}{\piref(\cdot \mid \cdot)}\wedge  \frac{\pistar_{\ell,\beta} (\cdot\mid \cdot)}{\piref(\cdot \mid \cdot)}\leq\frac{\pihat_{\ell,\theta} (\cdot\mid \cdot)}{\piref(\cdot \mid \cdot)}\vee  \frac{\pistar_{\ell,\beta} (\cdot\mid \cdot)}{\piref(\cdot \mid \cdot)}\leq e^{2B/\beta}$ for all $\theta_\ell\in \bbB(B)$, and for all $\ell\in[H]$, $\br'_\ell,\br_\ell'' \in[0,B]$ and $\brho_\ell', \brho_\ell''\in[e^{-2B/\beta},e^{2 B/\beta}]$ (by \cref{thm:rejection_density}); 
\end{itemize}
to conclude that with probability at least $1-\delta'$, 
\begin{align}
 &\sum_{(x_h,a_h)\in \cC_h}\sum_{z_h\in \cD(x_h,a_h)} \xi(x_h,a_h,z_h) \cdot (\fhat(x_h,a_h)- f_\star(x_h,a_h))\nn \\
 & \leq N\sum_{(x_h,a_h)\in \cC_h} \E[\xi(x_h,a_h,\z_h(x_h,a_h))] \cdot (\fhat(x_h,a_h)- f_\star(x_h,a_h)) \nn \\ & \quad + \frac{N}{144 B^2 H^2}\sum_{(x_h,a_h)\in \cC_h}\E\left[\left(\xi(x_h,a_h,\z_h(x_h,a_h))^2\cdot (\fhat(x_h,a_h)- f_\star(x_h,a_h))\right)^2\right] \nn \\
 & \quad  + 100 B^2   H^2 d\log(3 N/\delta')+  60  H^2 B^2 |\cC_h|,\nn \\
 & \leq 6 H B N \sum_{(x_h,a_h)\in \cC_h} |\E[\xi(x_h,a_h,\z_h(x_h,a_h))]|  + \frac{1}{4}\tnorm{\fhat -f_\star}^2  + 100 B^2   H^2 d\log(3 N/\delta')+  60  H^2 B^2 |\cC_h|,\nn \\
 \intertext{and so by \eqref{eq:concrete} and \cref{lem:multiset} (and that $\cC_h$ is a multiset satisfying $\cC_h = \bigcup_{i\in{n}} \{(x_i,a_i),(x_i,\afrak)\}$)}
 &  \leq \frac{1}{4}\tnorm{\fhat -f_\star}^2 + 100 B^2   H^2 d\log(3 N/\delta')+  60  H^2 B^2 |\cC_h| \nn \\
 & \quad + 288 N H B  \beta \sum_{(x_h,a_h)\in \cC_h}\sum_{\ell=h+1}^H\E_{\pihat_{\theta}}\left[\mathbb{I}\{M \geq  4 \Ccond(\pibar_{\ell,\theta} \mid \x_\ell)^2 \} \cdot\Ccond(\pibar_{\ell,\theta}\mid \x_\ell) \cdot  \sqrt{\frac{2}{M}}\mid  \x_h =x_h, \a_h=a_h \right] \nn \\ 
  & \quad +    384|\cC_h| N  H^2 B^2 \tilde\delta + 48 |\cC_h| N  H^2 B\beta\log (4 M \log (4 \tilde\delta^{-1})) \tilde\delta \nn  \\
 & \quad + 288  N H B^2 \sum_{(x_h,a_h)\in \cC_h}\sum_{\ell=h+1}^H\P_{\pihat_\theta}\left[M < 4\Ccond(\pibar_{\ell,\theta}\mid \x_\ell)^2\mid\x_h=x_h,\a_h=a_h\right]. \label{eq:later}
\end{align}
Using this together with the expression of Term I in \eqref{eq:twoterms}, we have that with probability at least $1-\delta'$,  
\begin{align}
	&\text{Term I}\nn \\ & \leq  800 B^2   H^2 d\log(3 N/\delta')+  480  H^2 B^2 |\cC_h| \nn \\
	& \quad + 2304 N H B  \beta \sum_{(x_h,a_h)\in \cC_h}\sum_{\ell=h+1}^H\E_{\pihat_{\theta}}\left[\mathbb{I}\{M \geq  4 \Ccond(\pibar_{\ell,\theta} \mid \x_\ell)^2 \} \cdot\Ccond(\pibar_{\ell,\theta}\mid \x_\ell) \cdot  \sqrt{\frac{2}{M}}\mid  \x_h =x_h, \a_h=a_h \right] \nn \\ 
	 & \quad +    3072|\cC_h| N  H^2 B^2 \tilde\delta + 2304 |\cC_h| N  H^2 B\beta\log (4 M \log (4 \tilde\delta^{-1})) \tilde\delta \nn  \\
	& \quad + 2304  N H B^2 \sum_{(x_h,a_h)\in \cC_h}\sum_{\ell=h+1}^H\P_{\pihat_\theta}\left[M < 4\Ccond(\pibar_{\ell,\theta}\mid \x_\ell)^2\mid\x_h=x_h,\a_h=a_h\right].  \label{eq:term1}
	\end{align}
	\paragraphi{Bounding Term II} To bound the second term in \eqref{eq:twoterms}, which reflects the misspecification level in the regression problem, we need to bound $b(x_h,a_h) = Q^{\pihat_\theta}_{h,\beta}(x_h,a_h)-Q^{\pihat_\theta}_{h,\beta}(x_h,\fraka) - Q^\star_{h,\beta}(x_h,a_h)+Q^\star_{h,\beta}(x_h,\fraka)$ for $(x_h,a_h)\in \cC_h$. 
	By the performance difference lemma (\cref{rem:general}) and \cref{lem:wonky}, we have that for any $(x_h,a_h)\in \cX \times \cA$:
\begin{align}
&\left|	Q^\star_{h,\beta}(x_h,a_h)- Q^{\pihat_{\theta}}_{h,\beta}(x_h,a_h) \right| \nn \\
& \leq \left| \sum_{\ell=h+1}^H \E_{\pihat_{\theta}}\left[\sum_{a\in \cA} \pistar_{\ell,\beta}(a\mid \x_\ell)\cdot \left(Q^{\star}_{\ell,\beta}(\x_\ell,a)- \beta \cdot \log \frac{\pistar_{\ell,\beta}(a\mid \x_\ell)}{\pi_{\ell,\refe}(a\mid \x_\ell)} \right) \mid \x_h = x_h ,\a_h = a_h\right]\right.\nn \\
& \quad   \left. - \sum_{\ell=h+1}^H \E_{\pihat_{\theta}}\left[\sum_{a\in \cA} \pihat_{\ell,\theta}(a\mid \x_\ell)\cdot \left( Q_{\ell,\beta}^{\star}(\x_\ell,a) - \beta \cdot \log \frac{\pihat_{\ell,\theta}(a\mid \x_\ell)}{\pi_{\ell,\refe}(a\mid \x_\ell)} \right) \mid \x_h = x_h, \a_h =a_h\right]\right|,\nn \\
& \leq \left| \sum_{\ell=h+1}^H \E_{\pihat_{\theta}}\left[\sum_{a\in \cA} \pistar_{\ell,\beta}(a\mid \x_\ell)\cdot \left(Q^{\star}_{\ell,\beta}(\x_\ell,a)- \beta \cdot \log \frac{\pistar_{\ell,\beta}(a\mid \x_\ell)}{\pi_{\ell,\refe}(a\mid \x_\ell)} \right) \mid \x_h = x_h ,\a_h = a_h\right]\right.\nn \\
& \quad   \left. - \sum_{\ell=h+1}^H \E_{\pihat_{\theta}}\left[\sum_{a\in \cA} \pihat_{\ell,\theta}(a\mid \x_\ell)\cdot \left( Q_{\ell,\beta}^{\star}(\x_\ell,a) - \beta \cdot \log \frac{\pibar_{\ell,\theta}(a\mid \x_\ell)}{\pi_{\ell,\refe}(a\mid \x_\ell)} \right) \mid \x_h = x_h, \a_h =a_h\right]\right|\nn \\
& \quad +\sum_{\ell=h+1}^H  \beta\cdot \E_{\pihat_\theta}\left[\kl{\pihat_{\ell,\theta}(\cdot \mid \x_\ell)}{\pibar_{\ell,\theta}(\cdot \mid \x_\ell)} \mid \x_h =x_h, \a_h =a_h\right]. \label{eq:goback0}
\end{align}
Now, by \cref{lem:innerKL_first} (stated and proven in the sequel), we can bound the KL term in \eqref{eq:goback0} as follows: for all $\ell\in [H]$ and $(x_h,a_h)\in \cX \times \cA$:
\begin{align}
	&\E_{\pihat_{\theta}}\left[\kl{\pihat_{\ell,\theta}(\cdot \mid \x_\ell)}{\pibar_{\ell,\theta}(\cdot \mid \x_\ell)}\mid \x_h = x_h, \a_h =a_h\right]\nn \\ &\leq 4\prn*{\frac{\Rmax}{\beta} +\log(4 M \log(4 \tilde\delta^{-1}))} \tilde\delta  \nn \\
	& \quad + \E_{\pihat_\theta}\left[\mathbb{I}\{M <\Ccond(\pibar_{\ell,\theta}\mid \x_\ell)\} \cdot \kl{\pihat_{\ell,\theta}(\cdot \mid \x_\ell)}{\pibar_{\ell,\theta}(\cdot \mid \x_\ell)} \mid \x_h =x_h, \a_h =a_h\right],\nn \\
	& \leq 4\prn*{\frac{\Rmax}{\beta} +\log(4 M \log(4 \tilde\delta^{-1}))} \tilde\delta \nn \\
	& \quad  + \frac{\Rmax}{\beta} \log(4 M \log(4 \tilde\delta^{-1})) \cdot \P_{\pihat_\theta}\left[ M < 4\Ccond(\pibar_{\ell,\theta}\mid \x_\ell) \mid \x_h =x_h, \a_h =a_h\right].\nn
\end{align}
Now, since $M\geq 1$, we have that $M< 4 \Ccond(\pibar_{\ell,\theta}\mid \x_\ell)$ only if $M < 16 \Ccond(\pibar_{\ell,\theta}\mid \x_\ell)^2$, and so for all $\ell\in[h+1\ldots H]$ and $(x_h,a_h)\in \cX\times \cA$:
\begin{align}
	\P_{\pihat_\theta}\left[M < 4\Ccond(\pibar_{\ell,\theta}\mid \x_\ell)\mid\x_h=x_h,\a_h=a_h\right] \leq \P_{\pihat_\theta}\left[M < 16\Ccond(\pibar_{\ell,\theta}\mid \x_\ell)^2\mid\x_h=x_h,\a_h=a_h\right]. \label{eq:met} 
\end{align}
Therefore, we have 
\begin{align}
	&\E_{\pihat_{\theta}}\left[\kl{\pihat_{\ell,\theta}(\cdot \mid \x_\ell)}{\pibar_{\ell,\theta}(\cdot \mid \x_\ell)}\mid \x_h = x_h, \a_h =a_h\right]\nn \\ 
	& \leq 4\prn*{\frac{\Rmax}{\beta} +\log(4 M \log(4 \tilde\delta^{-1}))} \tilde\delta \nn \\
	& \quad  + \frac{\Rmax}{\beta} \log(4 M \log(4 \tilde\delta^{-1})) \cdot \P_{\pihat_\theta}\left[ M < 16\Ccond(\pibar_{\ell,\theta}\mid \x_\ell)^2 \mid \x_h =x_h, \a_h =a_h\right].\label{eq:klterm0} 
\end{align}  
It remains to bound the absolute value term on the right-hand side of \eqref{eq:goback0}. As a starting point, note that for all $\ell\in[h+1 \ldotst H]$, \[\left|Q_{\ell,\beta}^{\star}(\cdot,\cdot) - \beta \cdot \log \frac{\pibar_{\ell,\theta}(\cdot\mid \cdot)}{\pi_{\ell,\refe}(\cdot \mid \cdot)}\right| \leq  H B +2H\beta \sup_{(x,a)\in \cX \times \cA}\left|\log \frac{\pibar_{\ell,\theta}(a \mid x)}{\pi_{\ell,\refe}(a\mid x)}\right| \leq 5 HB,\] since $e^{-2B/\beta}\leq \frac{\pibar_{\ell,\theta'}(\cdot \mid \cdot)}{\pi_{\ell,\refe}(\cdot\mid \cdot)}\leq e^{2B/\beta}$ for all $\theta'_\ell\in \bbB(B)$. 
Thus, by \cref{lem:rejection_tv_average_new}, we have that for all $\ell \in [h+1\ldotst H]$ and $(x_h,a_h)\in \cX \times \cA$:
\begin{align}
&\left|\E_{\pihat_{\theta}}\left[\sum_{a\in \cA} \pihat_{\ell,\theta}(a\mid \x_\ell)\cdot \left( Q_{\ell,\beta}^{\star}(\x_\ell,a) - \beta \cdot \log \frac{\pibar_{\ell,\theta}(a\mid \x_\ell)}{\pi_{\ell,\refe}(a\mid \x_\ell)} \right) \mid \x_h = x_h, \a_h =a_h\right]\right.\nn \\
& \quad  \left. -  \E_{\pihat_{\theta}}\left[\sum_{a\in \cA} \pibar_{\ell,\theta}(a\mid \x_\ell)\cdot \left(Q_{\ell,\beta}^{\star}(\x_\ell,a) - \beta \cdot \log \frac{\pibar_{\ell,\theta}(a\mid \x_\ell)}{\pi_{\ell,\refe}(a\mid \x_\ell)} \right) \mid \x_h = x_h, \a_h =a_h\right] \right|\nn \\
& \leq 5 H B\tilde\delta + 5 HB\cdot  \P_{\pihat_\theta}\left[ M < 4\Ccond(\pibar_{\ell,\theta}\mid \x_\ell) \mid \x_h =x_h, \a_h =a_h\right], \nn \\
& \leq5 H B\tilde\delta + 5 HB\cdot  \P_{\pihat_\theta}\left[ M < 16\Ccond(\pibar_{\ell,\theta}\mid \x_\ell)^2 \mid \x_h =x_h, \a_h =a_h\right], \label{eq:offday0}
\end{align}
where the last inequality follows by \eqref{eq:met}.
On the other hand, by Jensen's inequality and the triangle inequality, we have for all $(x_h,a_h)\in \cX \times \cA$:
\begin{align}
& \left| \sum_{\ell=h+1}^H \E_{\pihat_{\theta}}\left[\sum_{a\in \cA} \pistar_{\ell,\beta}(a\mid \x_\ell)\cdot \left(Q^{\star}_{\ell,\beta}(\x_\ell,a)- \beta \cdot \log \frac{\pistar_{\ell,\beta}(a\mid \x_\ell)}{\pi_{\ell,\refe}(a\mid \x_\ell)} \right) \mid \x_h = x_h ,\a_h = a_h\right]\right.\nn \\
& \quad   \left. - \sum_{\ell=h+1}^H \E_{\pihat_{\theta}}\left[\sum_{a\in \cA} \pibar_{\ell,\theta}(a\mid \x_\ell)\cdot \left( Q_{\ell,\beta}^{\star}(\x_\ell,a) - \beta \cdot \log \frac{\pibar_{\ell,\theta}(a\mid \x_\ell)}{\pi_{\ell,\refe}(a\mid \x_\ell)} \right) \mid \x_h = x_h, \a_h =a_h\right]\right|\nn \\
& \leq \sum_{\ell=h+1}^H \E_{\pihat_{\theta}}\left[ \left| \sum_{a\in \cA} \pistar_{\ell,\beta}(a\mid \x_\ell)\cdot \left(Q^{\star}_{\ell,\beta}(\x_\ell,a)- \beta \cdot \log \frac{\pistar_{\ell,\beta}(a\mid \x_\ell)}{\pi_{\ell,\refe}(a\mid \x_\ell)} \right) \right. \right.\nn \\
& \quad  \qquad \left. \left. - \sum_{a\in \cA} \pibar_{\ell,\theta}(a\mid \x_\ell)\cdot \left( Q_{\ell,\beta}^{\star}(\x_\ell,a) - \beta \cdot \log \frac{\pibar_{\ell,\theta}(a\mid \x_\ell)}{\pi_{\ell,\refe}(a\mid \x_\ell)} \right) \right|\   \mid \x_h = x_h, \a_h =a_h\right], \nn \\
\shortintertext{and so by \cref{lem:truncation}, we have for $B_{\ell,\theta}$ as in the lemma statement:}
& \leq  \beta\sum_{\ell=h+1}^H \E_{\pihat_{\theta}}\left[\kl{\pibar_{\ell,\theta}(\cdot \mid \x_\ell)}{\pibar^\star_{\ell,\beta}(\cdot \mid \x_\ell)} \mid \x_h=x_h,\a_h = a_h \right]\nn \\
& \quad + 2 \Rmax \sum_{\ell=h+1}^H \E_{\pihat_{\theta}}\left[ B_{\ell,\theta}(\x_\ell)\mid \x_h=x_h,\a_h=a_h \right]. \nn 
\end{align}
Combining this with \eqref{eq:goback0}, \eqref{eq:klterm0}, and \eqref{eq:offday0}, we get that for all $(x_h,a_h)\in \cX \times \cA$: %
\begin{align}
& |b(x_h,a_h)|\nn \\
& \leq 2 H\big(4 \Rmax+ 4\beta\log(4 M \log(4 \tilde\delta^{-1})) + 5 H B\big) \cdot \tilde\delta   \nn \\
& \quad  + \left(2 \Rmax \log(4 M \log(4 \tilde\delta^{-1})) + 10 HB \right) \sum_{a\in \{a_h,\afrak\}}\sum_{\ell=h+1}^H \P_{\pihat_\theta}\left[ M < 16\Ccond(\pibar_{\ell,\theta}\mid \x_\ell)^2 \mid \x_h =x_h, \a_h =a\right] \nn \\
& \quad + 2\beta \sum_{a\in \{a_h,\afrak\}}\sum_{\ell=h+1}^H \E_{\pihat_{\theta}}\left[\kl{\pibar_{\ell,\theta}(\cdot \mid \x_\ell)}{\pibar^\star_{\ell,\beta}(\cdot \mid \x_\ell)} \mid \x_h=x_h,\a_h = a \right] \nn \\
& \quad + 4 \Rmax \sum_{a\in \{a_h,\afrak\}} \sum_{\ell=h+1}^H \E_{\pihat_{\theta}}\left[ B_{\ell,\theta}(\x_\ell)\mid \x_h=x_h,\a_h=a \right].
\end{align}
Thus, using Jensen's inequality and \cref{lem:multiset} (together with the fact that $\cC_h$ is multiset satisfying $\cC_h = \bigcup_{i\in[n]} \{(x_i,a_i),(x_i,\fraka)\}$), we have 
\begin{align}
	& \sum_{(x_h,a_h)\in \cC_h}\sum_{z_h\in \cD(x_h,a_h) }  b(x_h,a_h)^2  \nn \\
& = 20 N |\cC_h| \cdot \big(12\Rmax^2+ 48 \beta^2 \log(4 M \log(4 \tilde\delta^{-1}))^2 + 75 H^2 B^2\big) \cdot \tilde\delta^2  \nn \\
& \quad  + 6 H N \cdot \big(8 \Rmax^2 \log(4 M \log(4 \tilde\delta^{-1}))^2 + 200 H^2B^2 \big) \sum_{(x_h,a_h)\in \cC_h} \sum_{\ell=h+1}^H \P_{\pihat_\theta}\left[ M < 16\Ccond(\pibar_{\ell,\theta}\mid \x_\ell)^2 \mid \x_h =x_h, \a_h =a_h\right]^2 \nn \\
& \quad + 120 N H\beta^2\sum_{(x_h,a_h)\in \cC_h}\sum_{\ell=h+1}^H \E_{\pihat_{\theta}}\left[\kl{\pibar_{\ell,\theta}(\cdot \mid \x_\ell)}{\pibar^\star_{\ell,\beta}(\cdot \mid \x_\ell)}^2 \mid \x_h=x_h,\a_h = a_h \right] \nn \\
& \quad + 240 N H  \Rmax^2 \sum_{(x_h,a_h)\in \cC_h}\sum_{\ell=h+1}^H \E_{\pihat_{\theta}}\left[ B_{\ell,\theta}(\x_\ell)^2\mid \x_h=x_h,\a_h=a_h \right]. \label{eq:all}
\end{align}

\paragraph{Putting everything together}
	Combining \eqref{eq:all} with \eqref{eq:term1} and \eqref{eq:twoterms}, we get that with probability at least $1-\delta'$, 
\begin{align}
	&\tnorm{\fhat - f_\star}^2 \nn \\
	& \leq  C_1  + N C_2 \tilde \delta \nn \\
	& \quad  +N C_3 \sum_{(x_h,a_h)\in \cC_h}\sum_{\ell=h+1}^H \P_{\pihat_\theta}\left[ M < 16\Ccond(\pibar_{\ell,\theta}\mid \x_\ell)^2 \mid \x_h =x_h, \a_h =a_h\right] \nn \\
	& \quad + 2304 N H B \beta \sum_{(x_h,a_h)\in \cC_h}\sum_{\ell=h+1}^H \E_{\pihat_{\theta}}\left[\mathbb{I}\{M \geq  4 \Ccond(\pibar_{\ell,\theta} \mid \x_\ell)^2 \} \cdot\Ccond(\pibar_{\ell,\theta}\mid \x_\ell) \cdot  \sqrt{\frac{2}{M}}\mid  \x_h =x_h, \a_h=a_h \right] \nn \\
& \quad + 19200 N H\beta^2\sum_{(x_h,a_h)\in \cC_h}\sum_{\ell=h+1}^H \E_{\pihat_{\theta}}\left[\kl{\pibar_{\ell,\theta}(\cdot \mid \x_\ell)}{\pibar^\star_{\ell,\beta}(\cdot \mid \x_\ell)}^2 \mid \x_h=x_h,\a_h = a_h \right] \nn \\
& \quad + 7680 N H \Rmax^2  \sum_{(x_h,a_h)\in \cC_h}\sum_{\ell=h+1}^H \E_{\pihat_{\theta}}\left[ B_{\ell,\theta}(\x_\ell)^2\mid \x_h=x_h,\a_h=a_h \right],
\end{align}
where \begin{gather}
	C_1\coloneqq 6400 B^2   H^2 d\log(3 N/\delta')+  3840  H^2 B^2 |\cC_h|, \nn \\
C_2 \coloneqq  19200 |\cC_h| \cdot \big(12\Rmax^2+ 48 \beta^2 \log(4 M \log(4 \tilde\delta^{-1}))^2 + 85 H^2 B^2\big) + 3072 |\cC_h|   H^2 B\beta\log (4 M \log (4 \tilde\delta^{-1})),\nn \\
	C_3 \coloneqq 16 H \cdot \big(8 \Rmax^2 \log(4 M \log(4 \tilde\delta^{-1}))^2 + 200 H^2B^2 \big)  + 3072   H B^2.\nn 
\end{gather} 
 Combining this with the fact that 
\begin{align}
	\|\thetahat_h -\theta^\star_{h,\beta}\|^2_{\Sigma_h} & = \lambda \|\thetahat_h -\theta^\star_{h,\beta}\|^2 + \frac{1}{N} \tnorm{\fhat - f_\star}^2, \quad (\text{by definition of $\Sigma_h$})	\nn \\
	& \leq 4\lambda B^2 + \frac{1}{N} \tnorm{\fhat - f_\star}^2, \quad \text{(by \cref{assum:linear} and $\thetahat_h\in \bbB(B)$)},\nn 
\end{align}
we obtain the desired result. It remains to prove \cref{lem:xibound} and \cref{lem:innerKL_first}.

\end{proof}

\subsection{Helper Lemmas for \texttt{FitValue} Guarantee}
\label{sec:helpfit}
\begin{lemma}
	\label{lem:xibound}
	 Consider the setting of \cref{lem:fitval_pre} and the notation in its proof. Let $\theta_{h+1:H}\in \reals^{d\cdot (H-h)}$ be as \cref{lem:fitval_pre}. Fix $(x_h,a_h)\in \cC_h$, and let $\z_h(x_h,a_h)$ be the random variable in \eqref{eq:zh} in the proof of \cref{lem:fitval_pre}. Then, the function $\xi$ in \eqref{eq:xi} satisfies
	  \begin{align}
		& \left|\E[\xi(x_{h},a_{h},\z_h(x_h,a_h))]\right|\nn \\
		& \leq   \beta\sum_{\ell=h+1}^H \sum_{a\in \{a_h,\afrak\}} \E_{\pihat_{\theta}}\left[\mathbb{I}\{M \geq  4 \Ccond(\pibar_{\ell,\theta} \mid \x_\ell)^2 \} \cdot\Ccond(\pibar_{\ell,\theta}\mid \x_\ell) \cdot  \sqrt{\frac{2}{M}}\mid  \x_h=x_h, \a_h=a \right] \nn \\
		 & \quad + 16 H B \tilde\delta +8 H\beta\log (4 M \log (4 \tilde\delta^{-1})) \tilde\delta \\
		& \quad + 4  B \sum_{\ell=h+1}^H \sum_{a\in \{a_h,\afrak\}}\P_{\pihat_\theta}\left[M < 4\Ccond(\pibar_{\ell,\theta}\mid \x_\ell)^2\mid\x_h=x,\a_h=a\right]. \nn \end{align}
	\end{lemma}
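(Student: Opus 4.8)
The plan is to unfold the definition of $\xi(x_h, a_h, z_h) = z_h - Q^{\pihat_\theta}_{h,\beta}(x_h,a_h) + Q^{\pihat_\theta}_{h,\beta}(x_h,\fraka)$, where $z_h = \z_h(x_h,a_h)$ is the random variable defined in \eqref{eq:zh}, and compute its expectation. Since $\z_h(x_h,a_h)$ is a difference of two (conditionally) independent rollout returns $\by'_h$ and $\by''_h$ starting from $(x_h, a_h)$ and $(x_h,\fraka)$ respectively, and the returns use the estimated density ratios $\brho'_\ell, \brho''_\ell$ from \softmaxsample{} in place of the exact ratios $\frac{\pibar_{\ell,\theta}(\by_\ell\mid\x_\ell)}{\pi_{\ell,\refe}(\by_\ell\mid\x_\ell)}$, the expectation of $\by'_h$ differs from $Q^{\pihat_\theta}_{h,\beta}(x_h, a_h)$ by exactly the accumulated errors in the $\log\brho$ terms (plus the fact that the action distribution $\pihat_{\ell,\theta}$ used by \softmaxsample{} is only approximately $\pibar_{\ell,\theta}$). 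The key observation is that by the tower rule, $\E[\z_h(x_h,a_h)]$ equals the difference of the \emph{true} regularized $Q$-values under the policy $\pihat_\theta$ that \softmaxsample{} actually induces, so $\E[\xi]$ reduces entirely to controlling (i) the per-step density-ratio estimation error of \softmaxsample{}, and (ii) the discrepancy between $\pihat_{\ell,\theta}$ and $\pibar_{\ell,\theta}$.

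First I would fix $a \in \{a_h, \fraka\}$ and analyze the single rollout $(\x_\ell, \a_\ell, \br_\ell, \brho_\ell)_{\ell \ge h}$ started from $(x_h, a)$. Writing $\by_h = \br_h + \sum_{\ell=h+1}^H (\br_\ell - \beta\log\brho_\ell)$, I would compare $\E[\by_h \mid \x_h = x_h, \a_h = a]$ to $Q^{\pihat_\theta}_{h,\beta}(x_h, a)$. Expanding $Q^{\pihat_\theta}_{h,\beta}$ via its definition (\cref{def:qfunc}) as $\br_h + \E_{\pihat_\theta}[\sum_{\ell>h}(\br_\ell - \beta\log\frac{\pihat_{\ell,\theta}(\a_\ell\mid\x_\ell)}{\pi_{\ell,\refe}(\a_\ell\mid\x_\ell)})]$, the difference is $\beta \sum_{\ell>h} \E_{\pihat_\theta}[\log\frac{\pihat_{\ell,\theta}(\a_\ell\mid\x_\ell)}{\pi_{\ell,\refe}(\a_\ell\mid\x_\ell)} - \log\brho_\ell]$. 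Each inner term $\E_{\a_\ell\sim\pihat_{\ell,\theta}(\cdot\mid\x_\ell), \brho_\ell}[\log\brho_\ell - \log\frac{\pihat_{\ell,\theta}(\a_\ell\mid\x_\ell)}{\pi_{\ell,\refe}(\a_\ell\mid\x_\ell)} \mid \x_\ell]$ is exactly the quantity bounded in \cref{cor:rejection_density} (which applies with $f = \inner{\varphibar_\ell(\x_\ell,\cdot)}{\theta_\ell}$, $\Cinf = \Ccond(\pibar_{\ell,\theta}\mid\x_\ell)$, and the threshold condition $M \ge 4\Cinf^2$), giving a per-step bound of $\Cinf\sqrt{2/M} + 4(\frac{2B}{\beta} + \log(4M\log(4\tilde\delta^{-1})))\tilde\delta$ whenever $M \ge 4\Ccond(\pibar_{\ell,\theta}\mid\x_\ell)^2$, and a crude bound (using $\log\brho_\ell \in [-2B/\beta, 2B/\beta]$ from \cref{thm:rejection_density} plus the analogous bound on the true log-ratio) of $O(B/\beta)$ otherwise, i.e.\ on the event $\{M < 4\Ccond(\pibar_{\ell,\theta}\mid\x_\ell)^2\}$. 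Multiplying back by $\beta$, summing over $\ell$, taking expectation over the rollout $\pihat_\theta$, and accounting for the $\tilde\delta$-probability failure event of \softmaxsample{} (on which we again use the $O(HB)$ crude bound on $|\by_h|$ and $|Q^{\pihat_\theta}_{h,\beta}|$) gives the claimed per-$a$ bound; summing the two terms $a = a_h$ and $a = \fraka$ and using the triangle inequality $|\E[\xi]| \le |\E[\by'_h] - Q^{\pihat_\theta}_{h,\beta}(x_h,a_h)| + |\E[\by''_h] - Q^{\pihat_\theta}_{h,\beta}(x_h,\fraka)|$ yields the statement.

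The main obstacle will be bookkeeping the various failure modes of \softmaxsample{} cleanly: the algorithm returns a valid sample from $\pibar_{\ell,\theta}$ only on an event of probability $\ge 1 - \tilde\delta$ \emph{and} only when $M \ge 4\Ccond(\pibar_{\ell,\theta}\mid\x_\ell)$, the density-ratio estimate $\brho_\ell$ is accurate only when $M \ge 4\Ccond(\pibar_{\ell,\theta}\mid\x_\ell)^2$, and these events depend on the random state $\x_\ell$ encountered along the rollout. I would handle this by splitting, for each layer $\ell$, according to whether $M \ge 4\Ccond(\pibar_{\ell,\theta}\mid\x_\ell)^2$: on the good event I invoke \cref{cor:rejection_density} directly; on the bad event I bound the single-step contribution by $O(B)$ in absolute value (after the $\beta$ from $\beta\log\brho_\ell$ cancels the $1/\beta$ in $\log\brho_\ell$) and absorb it into the $\sum_\ell \P_{\pihat_\theta}[M < 4\Ccond(\pibar_{\ell,\theta}\mid\x_\ell)^2 \mid \x_h, \a_h]$ term — noting, as the lemma statement's use of the indicator $\mathbb{I}\{M \ge 4\Ccond(\pibar_{\ell,\theta}\mid\x_\ell)^2\}$ suggests, that this is why that indicator appears in the first (square-root) term. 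A secondary subtlety is that the distribution $\pihat_{\ell,\theta}$ governing the \emph{future} rollout is itself only $\tilde\delta$-close to $\pibar_{\ell,\theta}$; but since we ultimately only need a bound on $|\E[\xi]|$ and the integrands are uniformly bounded by $O(HB)$, a total-variation argument converts this into an additive $O(HB)\tilde\delta$ error per layer, which is subsumed by the stated constants. Everything else is routine: linearity of expectation, the tower property, and collecting constants.
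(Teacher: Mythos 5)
Your proposal is correct and follows essentially the same route as the paper's proof: express $\E[\xi]$ as the difference between the rollout return and $Q^{\pihat_\theta}_{h,\beta}$, reduce to the per-layer gap between $\E[\log\brho_\ell]$ and $\E[\log\frac{\pihat_{\ell,\theta}(\a_\ell\mid\x_\ell)}{\pi_{\ell,\refe}(\a_\ell\mid\x_\ell)}]$, split on the event $\{M \ge 4\Ccond(\pibar_{\ell,\theta}\mid\x_\ell)^2\}$, invoke \cref{cor:rejection_density} on the good event and a crude $O(B/\beta)$ bound on the bad event, then sum over $\ell$ and the two anchor actions. The only superfluous element is your final "secondary subtlety" about a separate total-variation correction for $\pihat_{\ell,\theta}$ vs.\ $\pibar_{\ell,\theta}$: since $Q^{\pihat_\theta}_{h,\beta}$ is defined with respect to $\pihat_\theta$ itself, \cref{cor:rejection_density} already compares $\brho_\ell$ against the log-ratio of the actual sampled policy, so no extra argument is needed.
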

\begin{proof}[Proof of \cref{lem:xibound}]
	Let $(x_h,a_h)$ be fixed, as in the lemma statement. In addition to $\theta_{h+1:H}$ as in the lemma statement, fix $\theta_{1:h}\in \reals^{d h}$. Let $(\bx_1,\ba_1,\brho_1,\br_1), \dots, (\bx_H, \ba_H,\brho_H,
	\br_h)$ be the sequence of random variables generated via the process $(\ba_h, \brho_h) = \softmaxsample_{\beta,M,\tilde\delta}(\inner{\varphibar_h(\x_h,\cdot)}{\theta_h} \midsem \x_h,\piref), \br_h \sim
	\rstar_h(\bx_h,\ba_h), \bx_{h+1} \sim P_h(\cdot\mid\bx_h,\ba_h)$,
	initialized from $\bx_1 \sim P_0(\cdot\mid\emptyset)$ (we use
	$\bx_{H+1}$ to denote a terminal state with zero
	reward). We write $\bbP_{\pihat_\theta}\brk*{\cdot}$ and $\En_{\pihat_\theta}\brk*{\cdot}$
	to denote the law and expectation under this process.
	
	With this observe that $\xi(x_h,a_h,\z_h(x_h,a_h))$ satisfies
	\begin{align}
		\E[\xi(x_h,a_h,\z_h(x_h,a_h))] &= \E_{\pihat_\theta}\left[\br_h  + \sum_{\ell=h+1}^H \left(\br_\ell - \beta \log \brho_\ell\right) \mid \x_h=x_h, \a_h=a_h\right] - Q_{h,\beta}^{\pihat_\theta}(x_h,a_h),\nn\\ 
& \quad - \E_{\pihat_\theta}\left[\br_h  + \sum_{\ell=h+1}^H \left(\br_\ell - \beta \log \brho_\ell\right) \mid \x_h=x_h, \a_h=\afrak \right] + Q^{\pihat_\theta}_{h,\beta}(x_h,\afrak).\label{eq:realization}
	\end{align}
Thus, to prove the claim, we will bound the absolute differences 
\begin{align}
	\left|\E_{\pihat_\theta}\left[\br_h  + \sum_{\ell=h+1}^H \left(\br_\ell - \beta \log \brho_\ell\right) \mid \x_h=x_h, \a_h=a\right] - Q_{h,\beta}^{\pihat_\theta}(x_h,a)\right|, \nn
\end{align}
for $a\in \{a_h,\afrak\}$, and then apply the triangle inequality.

	For all $\ell\in[h+1 \ldotst H]$ and $a\in \cA$, we can write:
	\begin{align}
	 \E_{\pihat_\theta}\left[ \log \brho_\ell    \mid \x_h=x_h, \a_h =a\right] 
	&=\E_{\pihat_\theta}\left[ \mathbb{I}\{M \geq 4 \Ccond(\pibar_{\ell,\theta} \mid \x_\ell)^2 \} \cdot  \log \brho_\ell    \mid \x_h=x_h, \a_h =a\right] \nn \\
	& \quad + \E_{\pihat_\theta}\left[ \mathbb{I}\{M < 4 \Ccond(\pibar_{\ell,\theta} \mid \x_\ell)^2 \} \cdot\log \brho_\ell   \mid \x_h=x_h, \a_h =a\right]. \label{eq:tts}
	\end{align}
	Now, by \cref{cor:rejection_density} (guarantee of \softmaxsample), for all $\ell\in [h+1\ldotst H]$ there exists $\zeta_\ell:   \cA \rightarrow \reals$ such that for all $a\in \cA$ \begin{align}
 |\zeta_\ell(a)| &\leq  \E_{\pihat_{\theta}}\left[\mathbb{I}\{M \geq  4 \Ccond(\pibar_{\ell,\theta} \mid \x_\ell)^2 \} \cdot \Ccond(\pibar_{\ell,\theta}\mid \x_\ell) \cdot   \sqrt{\frac{2}{M}}\mid  \x_h=x_h, \a_h=a \right]  \nn \\
	& \quad + \left(\frac{8 B}{\beta} + 4 \log(4 M \log(4 \tilde\delta^{-1}))\right) \cdot \tilde \delta,\label{eq:zeta} \end{align}
	 and 
	 \begin{align}
	&  \E_{\pihat_\theta}\left[ \mathbb{I}\{M \geq 4 \Ccond(\pibar_{\ell,\theta} \mid \x_\ell)^2 \} \cdot \log \brho_\ell  \mid \x_h=x_h, \a_h =a\right] \nn \\
	& = \E_{\pihat_\theta}\left[ \mathbb{I}\{M \geq 4 \Ccond(\pibar_{\ell,\theta} \mid \x_\ell)^2 \} \cdot \log \frac{\pihat_{\ell,\theta}(\a_\ell \mid \x_\ell)}{\pi_{\ell, \refe}(\a_\ell\mid \x_\ell)}    \mid \x_h=x_h, \a_h =a\right] + \zeta_\ell(a).\nn 
	 \end{align}
	 Plugging this into \eqref{eq:tts}, we get that for all $a\in  \cA$:
	\begin{align}
	& \E_{\pihat_\theta}\left[ \log \brho_\ell   \mid \x_h=x_h, \a_h =a\right]\nn \\
	& = \E_{\pihat_\theta}\left[ \mathbb{I}\{M \geq 4 \Ccond(\pibar_{\ell,\theta} \mid \x_\ell)^2 \} \cdot \log \frac{\pihat_{\ell,\theta}(\a_\ell \mid \x_\ell)}{\pi_{\ell, \refe}(\a_\ell\mid \x_\ell)}    \mid \x_h=x_h, \a_h =a\right]  +  \zeta_\ell(a)\nn \\
	& \quad + \E_{\pihat_\theta}\left[ \mathbb{I}\{M < 4 \Ccond(\pibar_{\ell,\theta} \mid \x_\ell)^2 \} \cdot \log \brho_\ell  \mid \x_h=x_h, \a_h =a\right] ,\nn \\
	& = \E_{\pihat_\theta}\left[\log \frac{\pihat_{\ell,\theta}(\a_\ell \mid \x_\ell)}{\pi_{\ell, \refe}(\a_\ell\mid \x_\ell)}     \mid \x_h=x_h, \a_h =a\right]\nn \\
	& \quad - \E_{\pihat_\theta}\left[ \mathbb{I}\{M < 4 \Ccond(\pibar_{\ell,\theta} \mid \x_\ell)^2 \} \cdot \log \frac{\pihat_{\ell,\theta}(\a_\ell \mid \x_\ell)}{\pi_{\ell, \refe}(\a_\ell\mid \x_\ell)}     \mid \x_h=x_h, \a_h =a\right]  +  \zeta_\ell(a)\nn \\
	& \quad + \E_{\pihat_\theta}\left[ \mathbb{I}\{M < 4 \Ccond(\pibar_{\ell,\theta} \mid \x_\ell)^2 \} \cdot  \log \brho_\ell   \mid \x_h=x_h, \a_h =a\right] .\nn
	\end{align}	
	Thus, rearranging and using that $\brho_\ell, \frac{\pihat_{\ell,\theta}(a \mid x)}{\pi_{\ell, \refe}(a\mid x)} \in [e^{-2B/\beta},e^{2B/\beta}]$, for all $\ell\in [h+1\ldotst H]$ and $(x,a)\in \cX\times \cA$, we get that for all $\ell\in[h+1\ldotst H]$ and $a\in  \cA$:
	\begin{align}
		 \left| \E_{\pihat_\theta}\left[ \log \brho_\ell  - \log \frac{\pihat_{\ell,\theta}(\a_\ell \mid \x_\ell)}{\pi_{\ell, \refe}(\a_\ell\mid \x_\ell)} \mid \x_h=x_h,\a_h =a\right] \right| & \leq \frac{4 B}{\beta} \P_{\pihat_\theta}\left[M < 4 \Ccond(\pibar_{\ell,\theta} \mid \x_\ell)^2\mid \x_h = x_h, \a_h = a\right] \nn \\
		& \quad + |\zeta_\ell(a)|.\nn
	\end{align}
	Using this with \eqref{eq:realization} and the triangle inequality, we get that
	\begin{align}
	|\E[\xi(x_{h},a_{h},\z_h(x_h,a_h))]|& \leq 4 B  \sum_{\ell = h+1}^H \P_{\pihat_\theta}\left[M < 4 \Ccond(\pibar_{\ell,\theta} \mid \x_\ell)^2 \mid \x_h= x_h, \a_h = a_h\right] \nn \\
	& \quad + 4 B  \sum_{\ell = h+1}^H \P_{\pihat_\theta}\left[M < 4 \Ccond(\pibar_{\ell,\theta} \mid \x_\ell)^2 \mid \x_h= x_h, \a_h = \afrak\right] \nn \\
	& \quad + H \beta \max_{\ell\in [h+1\ldotst H]}|\zeta_\ell(a_h)|+ H \beta \max_{\ell\in [h+1\ldotst H]}|\zeta_\ell(\afrak)|.\nn 
	\end{align}
	Substituting the bound on $\zeta_\ell$ in \eqref{eq:zeta} completes the proof.
	\end{proof}

	\begin{lemma}
        \label{lem:innerKL_first}
    Let $h\in[0\ldotst H]$ be given. Under the setting of \cref{lem:fitval_pre} and the notation in its proof, we have that for all $\ell\in [h+1\ldotst H]$ and $(x_h,a_h)\in \cX \times \cA$:
    \begin{align}
        &\E_{\pihat_{\theta}}\left[\kl{\pihat_{\ell,\theta}(\cdot \mid \x_\ell)}{\pibar_{\ell,\theta}(\cdot \mid \x_\ell)}\mid \x_h = x_h, \a_h =a_h\right]\nn \\ 
		& \leq 4\prn*{\frac{\Rmax}{\beta} +\log(4 M \log(4 \tilde\delta^{-1}))} \tilde\delta \nn \\
		& \quad  + \frac{\Rmax}{\beta} \log(4 M \log(4 \tilde\delta^{-1})) \cdot \P_{\pihat_\theta}\left[ M < 4\Ccond(\pibar_{\ell,\theta}\mid \x_\ell) \mid \x_h =x_h, \a_h =a_h\right].\nn 
    \end{align}
    \end{lemma}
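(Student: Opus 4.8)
The plan is to reduce \cref{lem:innerKL_first} to the rejection-sampling KL bound \cref{lem:rejection_kl}, applied pointwise at the intermediate state $\x_\ell$, and then average over the trajectory law induced by $\pihat_\theta$ started from $(x_h,a_h)$. Fix $(x_h,a_h)$ and a reachable state $x_\ell$ at layer $\ell$, and write $f_\ell(x_\ell,\cdot):=\inner{\varphibar_\ell(x_\ell,\cdot)}{\theta_\ell}$. By construction, $\pihat_{\ell,\theta}(\cdot\mid x_\ell)$ is the response law of $\softmaxsample_{\beta,M,\tilde\delta}(f_\ell(x_\ell,\cdot)\midsem x_\ell,\piref)$ (\cref{alg:rejection_density}), whose target distribution is exactly $\pibar_{\ell,\theta}(\cdot\mid x_\ell)\propto\pi_{\ell,\refe}(\cdot\mid x_\ell)e^{\beta^{-1}f_\ell(x_\ell,\cdot)}$. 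The key observation is that for this instance the density-ratio parameter $\Cinf$ from \cref{thm:rejection,lem:rejection_kl} is \emph{exactly} $\Ccond(\pibar_{\ell,\theta}\mid x_\ell)=\sup_a \pibar_{\ell,\theta}(a\mid x_\ell)/\pi_{\ell,\refe}(a\mid x_\ell)$, and that $|f_\ell(x_\ell,\cdot)|\le\Rmax$ (using $\|\varphibar_\ell(x_\ell,\cdot)\|\le\|\varphi_\ell(x_\ell,\cdot)\|\le 2$, $\|\theta_\ell\|\le B$, and the boundedness conventions of this section).

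Next I would split on the event $\{M\ge 4\Ccond(\pibar_{\ell,\theta}\mid\x_\ell)\}$, which is random since $\x_\ell$ is. On this event the hypothesis of \cref{lem:rejection_kl} holds with $\Cinf=\Ccond(\pibar_{\ell,\theta}\mid\x_\ell)$, yielding
\[
\Dkl{\pihat_{\ell,\theta}(\cdot\mid\x_\ell)}{\pibar_{\ell,\theta}(\cdot\mid\x_\ell)}\le 4\Big(\tfrac{\Rmax}{\beta}+\log N\Big)\tilde\delta,\qquad N:=4M\log(4\tilde\delta^{-1}),
\]
and since $\log N=\log\!\big(4M\log(4\tilde\delta^{-1})\big)$ this is a deterministic bound contributing exactly the first term of the claim. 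On the complementary event, where \cref{lem:rejection_kl} does not apply, I would fall back on the unconditional property of any $N$-round rejection sampler, $\pihat_{\ell,\theta}(a\mid\x_\ell)\le 2N\,\pi_{\ell,\refe}(a\mid\x_\ell)$ for all $a$ (as in the proof of \cref{lem:rejection_kl}), combined with $\pibar_{\ell,\theta}(a\mid\x_\ell)\ge e^{-2\Rmax/\beta}\pi_{\ell,\refe}(a\mid\x_\ell)$ (from $|f_\ell|\le\Rmax$), to get a crude pointwise bound $\Dkl{\pihat_{\ell,\theta}(\cdot\mid\x_\ell)}{\pibar_{\ell,\theta}(\cdot\mid\x_\ell)}\lesssim \tfrac{\Rmax}{\beta}+\log N$. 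Using $\Rmax\gtrsim\beta$ and the very generous lower bounds on $M$ and $\tilde\delta^{-1}$ from \cref{app:params}, this is at most $\tfrac{\Rmax}{\beta}\log\!\big(4M\log(4\tilde\delta^{-1})\big)$; multiplying by $\indic\{M<4\Ccond(\pibar_{\ell,\theta}\mid\x_\ell)\}$ and taking $\E_{\pihat_\theta}[\,\cdot\mid\x_h=x_h,\a_h=a_h]$ gives the second term. Adding the two cases and taking the outer expectation over $\x_\ell$ (no further concentration is needed, since every bound is deterministic once $\x_\ell$ is fixed) yields the lemma.

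The only genuinely delicate point is the bad-event estimate: there the clean rejection-sampling guarantee is unavailable, so the argument must rely on the worst-case $O(\log N)$ density blow-up of the sampler and then absorb the resulting additive constant into the $\tfrac{\Rmax}{\beta}\log(\cdot)$ prefactor, which is precisely where the specific parameter settings of \mtalg are invoked. Everything else is a direct application of \cref{lem:rejection_kl} together with routine bookkeeping of the conditioning on $\x_\ell$.
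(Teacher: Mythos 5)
Your proposal is correct and follows essentially the same route as the paper: split on the event $\{M\ge 4\Ccond(\pibar_{\ell,\theta}\mid\x_\ell)\}$, apply \cref{lem:rejection_kl} on the good event to get the $4(\Rmax/\beta+\log N)\tilde\delta$ term, and on the bad event use the worst-case density-ratio bound for the $N$-round sampler (the paper cites $\pihat_{\ell,\theta}/\pibar_{\ell,\theta}\le Ne^{\Rmax/\beta}$ directly from \cref{lem:rejection_kl}, where you rederive it via $\piref$) to bound the KL by $O(\Rmax/\beta+\log N)$ and absorb it into the $\frac{\Rmax}{\beta}\log N$ prefactor. The only cosmetic difference is that the paper performs this absorption without invoking the parameter settings of \mtalg, relying only on $\Rmax\ge\beta$ and $\log N\ge 2$; this does not change the argument.
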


	\begin{proof}[Proof of \cref{lem:innerKL_first}]
		We have for all $\ell\in[h+1\ldotst H]$ and $(x_h,a_h)\in \cX \times \cA$:
			\begin{align}
			& \E_{\pihat_\theta}\left[\kl{\pihat_{\ell,\theta}(\cdot \mid \x_\ell)}{\pibar_{\ell,\theta}(\cdot \mid \x_\ell)} \mid \x_h =x_h, \a_h =a_h\right] \nn \\
			& \leq \E_{\pihat_\theta}\left[\mathbb{I}\{M \geq 4\Ccond(\pibar_{\ell,\theta}\mid \x_\ell)\} \cdot \kl{\pihat_{\ell,\theta}(\cdot \mid \x_\ell)}{\pibar_{\ell,\theta}(\cdot \mid \x_\ell)} \mid \x_h =x_h, \a_h =a_h\right]\nn \\
			& \quad + \E_{\pihat_\theta}\left[\mathbb{I}\{M < 4\Ccond(\pibar_{\ell,\theta}\mid \x_\ell)\} \cdot \kl{\pihat_{\ell,\theta}(\cdot \mid \x_\ell)}{\pibar_{\ell,\theta}(\cdot \mid \x_\ell)} \mid \x_h =x_h, \a_h =a_h\right]. \label{eq:early0} 
			\end{align}
			Now, by \cref{lem:rejection_kl}, we have that for all $x\in \cX$ and $\ell\in [h+1 \ldotst H]$, $\frac{\pihat_{\ell,\theta}(\cdot\mid x)}{\pibar_{\ell,\theta}(\cdot\mid x)} \leq 4 M e^{\Rmax/\beta} \log (4\tilde\delta^{-1})$. Combining this with \eqref{eq:early0} and using \cref{lem:rejection_kl}, we get that for all $\ell \in[h+1 \ldotst H]$ and $(x_h,a_h)\in \cX \times \cA$:\loose
			\begin{align}
			& \E_{\pihat_\theta}\left[\kl{\pihat_{\ell,\theta}(\cdot \mid \x_\ell)}{\pibar_{\ell,\theta}(\cdot \mid \x_\ell)} \mid \x_h =x_h, \a_h =a_h\right] \nn \\
			& \leq 4\prn*{\frac{\Rmax}{\beta} +\log(4 M \log(4 \tilde\delta^{-1}))} \tilde\delta  \nn \\
			& \quad + \E_{\pihat_\theta}\left[\mathbb{I}\{M <4\Ccond(\pibar_{\ell,\theta}\mid \x_\ell)\} \cdot \kl{\pihat_{\ell,\theta}(\cdot \mid \x_\ell)}{\pibar_{\ell,\theta}(\cdot \mid \x_\ell)} \mid \x_h =x_h, \a_h =a_h\right],\nn \\
			& \leq 4\prn*{\frac{\Rmax}{\beta} +\log(4 M \log(4 \tilde\delta^{-1}))} \tilde\delta \nn \\
			& \quad  + \frac{\Rmax}{\beta} \log(4 M \log(4 \tilde\delta^{-1})) \cdot \P_{\pihat_\theta}\left[ M < 4 \Ccond(\pibar_{\ell,\theta}\mid \x_\ell) \mid \x_h =x_h, \a_h =a_h\right]. \label{eq:art}
			\end{align}
This completes the proof.
			\end{proof}

\subsection{Guarantee of \texttt{FitValue} for \mtalg}	
\label{sec:guaranteefit}
\begin{lemma}
	\label{lem:fitval}
	Let $\beta, \delta, \veps\in(0,1)$ and $\piref$ be given and suppose that \cref{assum:linear} holds with $B>0$. Consider a call to $\mtalg(\beta, \delta, \veps, \piref)$ (\cref{alg:ops-dp}) and let $(\lambda,\nu)$ and $\Trounds$ be as in \mtalg{}. Then, there is an event $\cE^\reg$ of probability at least $1-\delta/4$ under which for all $t\in [\Trounds]$ and $h\in[H]$, the variables in \cref{alg:ops-dp} satisfy:
\begin{align}
	& \|\theta\ind{t}_h -\theta^\star_{h,\beta}\|^2_{\Sigma_h\ind{t}}\nn \\
& \leq 4 \lambda B^2 + \frac{C_1}{\Nreg}  +  C_2 \deltarej  + C_3 \sum_{(x_h,a_h)\in \cC_h\ind{t}} \sum_{\ell=h+1}^H \P_{\pihat\ind{t}}\left[ \Mrej < 16\Ccond(\pibar\ind{t}_{\ell}\mid \x_\ell)^2 \mid \x_h =x_h, \a_h =a_h\right] \nn \\
& \quad + 2304  H B \beta \sum_{(x_h,a_h)\in \cC_h\ind{t}}\sum_{\ell=h+1}^H \E_{\pihat\ind{t}}\left[\min\left(1,\Ccond(\pibar\ind{t}_{\ell}\mid \x_\ell) \cdot  \sqrt{\frac{2}{\Mrej}} \right)\mid  \x_h =x_h, \a_h=a_h \right] \nn \\
& \quad + 19200  H\beta^2\sum_{(x_h,a_h)\in \cC_h\ind{t}}\sum_{\ell=h+1}^H \E_{\pihat\ind{t}}\left[\kl{\pibar\ind{t}_{\ell}(\cdot \mid \x_\ell)}{\pibar^\star_{\ell,\beta}(\cdot \mid \x_\ell)}^2 \mid \x_h=x_h,\a_h = a_h \right] \nn \\
& \quad + 7680  H \Rmax^2 \sum_{(x_h,a_h)\in \cC_h\int{t}}\sum_{\ell=h+1}^H \E_{\pihat\ind{t}}\left[ B\ind{t}_{\ell}(\x_\ell)^2\mid \x_h=x_h,\a_h=a_h \right], \label{eq:fitvalue_bound}
\end{align}
where $\pihat\ind{t}$ is as in \cref{alg:ops-dp};
\begin{align}
\pibar\ind{t}_{h}(\cdot\mid x) &\propto \pi_{h,\refe}(\cdot\mid x) \cdot e^{\varphibar\ind{t}_h(x,\cdot)^\top \theta_h\ind{t}/\beta}; \label{eq:truncatedpolicies1} \\ \pibar^{t,\star}_{h,\beta}(\cdot\mid x) &\propto \pi_{h,\refe}(\cdot\mid x) \cdot e^{\varphibar\ind{t}_h(x,\cdot)^\top \theta_{h,\beta}^\star/\beta}; \label{eq:truncatedpolicies2}\\
 \varphibar\ind{t}_{h}(x,\cdot) &\coloneqq \varphi_h(x,\cdot) \cdot \mathbb{I}\left\{ \|\varphi_h(x,\cdot)\|^2_{(\Sigma\ind{t}_h)^{-1}}  \leq \nu^2\right\};\nn \\
 \varphi_h(\cdot, \cdot)&\coloneqq \phi_h(\cdot,\cdot)- \phi_h(\cdot, \fraka),
\end{align} 
with $\fraka$ as in \cref{alg:ops-dp}; and
\arxiv{
    \begin{gather}
		C_1\coloneqq 6400 B^2   H^2 d\log(3 N/\delta')+  3840  H^2 B^2 \Trounds, \nn \\
		C_2 \coloneqq  19200 \Trounds  \big(12\Rmax^2+ 48 \beta^2 \log(4 \Mrej\log(4 \deltarej^{-1}))^2 + 85 H^2 B^2\big) + 3072 \Trounds   H^2 B\beta\log (4 \Mrej\log (4 \deltarej^{-1})),\nn \\
			C_3 \coloneqq 16 H \cdot \big(8 \Rmax^2 \log(4 \Mrej\log(4 \deltarej^{-1}))^2 + 200 H^2B^2 \big)  + 3072   H B^2, \nn\\ 
			\shortintertext{and for $x\in \cX$:}
    B\ind{t}_{\ell}(x)\coloneqq  \min \left( 
        1, \max_{\pi \in \{\pibar\ind{t}_{\ell},\pibar^{t,\star}_{\ell,\beta}, \pi^\star_{\ell,\beta}\}} \Ccond(\pi\mid x) \cdot \P_{\a \sim \pi_{\ell,\refe}(\cdot \mid x)}\left[\|\varphi_\ell(x,\a)\|^2_{(\Sigma_\ell\ind{t})^{-1}}>\nu^2\right]\right).\nn 
        \end{gather}
}
\end{lemma}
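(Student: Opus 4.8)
The plan is to deduce \cref{lem:fitval} from the standalone guarantee \cref{lem:fitval_pre} for a single call to \texttt{FitValue}, via a union bound over the $H\Trounds$ calls that \mtalg{} makes. I would first dispose of the trivial case $t=1$: at initialization $\cC\ind{1}_h=\emptyset$ for every $h\in[H]$, so $\texttt{FitValue}_h$ returns $\theta\ind{1}_h=0$ and $\Sigma\ind{1}_h=\lambda I$, whence $\|\theta\ind{1}_h-\theta^\star_{h,\beta}\|^2_{\Sigma\ind{1}_h}=\lambda\|\theta^\star_{h,\beta}\|^2\le\lambda B^2\le 4\lambda B^2$ by \cref{assum:linear}, while every remaining term on the right-hand side of the claimed bound is a sum over the empty set. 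So the bound holds unconditionally at $t=1$.

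For $t\ge 2$ and $h\in[H]$, I would condition on the $\sigma$-algebra $\cF\ind{t}_h$ generated by all randomness used by \mtalg{} before the call to $\texttt{FitValue}_h$ in iteration $t$: this includes iterations $1,\dots,t-1$ (which determine $\cC\ind{t}_{1:H}$ and $\Sigma\ind{t}_{1:H}$) and the dynamic-programming steps for layers $H,H-1,\dots,h+1$ of iteration $t$ (which determine $\theta\ind{t}_{h+1:H}$ and the associated truncated policies). Conditioned on $\cF\ind{t}_h$, the inputs to $\texttt{FitValue}_h$ are deterministic and its internal rollouts are fresh and independent of those inputs, so \cref{lem:fitval_pre} applies conditionally. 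Its hypotheses hold: by \cref{line:setinit,line:updatePsi} of \cref{alg:ops-dp} we have $\cC\ind{t}_h=\bigcup_{i=1}^{t-1}\{(x_h\ind{i},a_h\ind{i}),(x_h\ind{i},\fraka)\}$, a multiset of exactly the ``anchor-paired'' form required with $n=t-1\ge 1$; each $\theta\ind{t}_\ell$ ($\ell>h$) is an empirical risk minimizer over $\bbB(B)$ inside a prior \texttt{FitValue} call and hence lies in $\bbB(B)$; \cref{assum:linear} is assumed; and the numerical inputs match the settings $N=\Nreg$, $M=\Mrej$, $\tilde\delta=\deltarej$, $\lambda$ of \sssref{app:params}. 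I would also observe that the matrix $\Sigma_h$ in \cref{lem:fitval_pre} equals $\Sigma\ind{t}_h$ from \cref{eq:SigmaMat}, since \texttt{FitValue} collects exactly $\Nreg$ rollouts per pair of $\cC\ind{t}_h$, so $\lambda I+\tfrac1{\Nreg}\sum_{\cD}\varphi_h(x_h,a_h)\varphi_h(x_h,a_h)^\top=\lambda I+\sum_{(x_h,a_h)\in\cC\ind{t}_h}\varphi_h(x_h,a_h)\varphi_h(x_h,a_h)^\top$. Invoking \cref{lem:fitval_pre} with $\delta'=\delta/(4H\Trounds)$ then yields, with $\cF\ind{t}_h$-conditional probability at least $1-\delta/(4H\Trounds)$, the \cref{lem:fitval_pre} bound with $\pi_\theta$ instantiated as $\pihat\ind{t}$, $\pibar_{\ell,\theta}$ as $\pibar\ind{t}_\ell$, $\pibar^\star_{\ell,\beta}$ as $\pibar^{t,\star}_{\ell,\beta}$, and $B_{\ell,\theta}$ as $B\ind{t}_\ell$ (the truncation in each of which uses $\Sigma\ind{t}_\ell$).

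Next I would union-bound and tidy up. Applying the sequential union bound \cref{lem:unionbound} with $\cB_1=\mtalg$ (running $\Trounds$ iterations), $\cB_2=\texttt{FitValue}$ ($H$ calls per iteration), and per-call failure probability $\delta/(4H\Trounds)$ produces an event $\cE^\reg$ of probability at least $1-\delta/4$ on which the \cref{lem:fitval_pre} bound holds for every $t\in[\Trounds]$ and $h\in[H]$. To match the stated form of the bound I would then: (i) use $|\cC\ind{t}_h|\le 2\Trounds$ and absorb the resulting constant factors into $C_1,C_2$; and (ii) upper-bound the term $\mathbb{I}\{\Mrej\ge 4\Ccond(\pibar\ind{t}_\ell\mid\x_\ell)^2\}\cdot\Ccond(\pibar\ind{t}_\ell\mid\x_\ell)\cdot\sqrt{2/\Mrej}$ by $\min\big(1,\Ccond(\pibar\ind{t}_\ell\mid\x_\ell)\cdot\sqrt{2/\Mrej}\big)$, which is valid because when $\Mrej\ge 4\Ccond^2$ one has $\Ccond\cdot\sqrt{2/\Mrej}\le 1/\sqrt 2<1$ (so the two sides agree), while when $\Mrej<4\Ccond^2$ the indicator side is $0$.

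The analytic heavy lifting --- the regression analysis, the propagation of the rejection-sampling/truncation errors, and in particular the appearance of the \emph{squared} KL divergence $\kl{\pibar\ind{t}_\ell(\cdot\mid\x_\ell)}{\pibar^{t,\star}_{\ell,\beta}(\cdot\mid\x_\ell)}^2$ on the right-hand side (crucial later for benign error propagation across layers) --- is entirely contained in \cref{lem:fitval_pre}, which we take as given; the present lemma is essentially bookkeeping. The only points that need genuine care are (a) the conditioning argument that lets us treat the random inputs $(\cC\ind{t}_h,\theta\ind{t}_{h+1:H},\Sigma\ind{t}_{h+1:H})$ as fixed when invoking \cref{lem:fitval_pre}, which is legitimate precisely because \texttt{FitValue}'s rollouts are drawn afresh and independently of its inputs; and (b) correctly matching the generic objects $\pibar_{\ell,\theta}$, $\Sigma_\ell$, $B_{\ell,\theta}$ of \cref{lem:fitval_pre} to the iteration-indexed objects $\pibar\ind{t}_\ell$, $\Sigma\ind{t}_\ell$, $B\ind{t}_\ell$ of \mtalg. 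I expect (b) to be the main place where an error could creep in --- it is routine but it is easy to get a truncation matrix or a policy superscript wrong.
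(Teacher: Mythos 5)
Your proposal is correct and matches the paper's own (very brief) proof: both instantiate \cref{lem:fitval_pre} with $\delta'=\delta/(4H\Trounds)$ and the parameter substitutions $(N,M,\tilde\delta,\lambda)=(\Nreg,\Mrej,\deltarej,\lambda)$, then apply the sequential union bound \cref{lem:unionbound} over $t\in[\Trounds]$ and $h\in[H]$. Your explicit treatment of the degenerate case $t=1$ (where $\cC\ind{1}_h=\emptyset$ violates the $n\ge 1$ precondition but the bound holds trivially since $\theta\ind{1}_h=0$ and $\Sigma\ind{1}_h=\lambda I$) is a detail the paper silently skips, and is welcome.
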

\begin{proof}[\pfref{lem:fitval}]
  Note that from \cref{line:updatePsi} of \cref{alg:ops-dp}, the set $\cC_h\ind{t}$ is a multiset of the form $\cC = \bigcup_{i\in{n}} \{(x_i,a_i),(x_i,\afrak)\}$, and thus satisfies the precondition of \cref{lem:fitval_pre}. The result of the lemma thus follows from \cref{lem:fitval_pre} with \[(\theta_{h+1:H}, \cC_h,\Sigma_{h+1:H},\tilde\delta,M,\delta',N)=(\theta_{1:H}\ind{t}, \cC_h\ind{t},\Sigma\ind{t}_{h+1:H},\deltarej,\Mrej,\delta/(4H\Trounds),\Nreg),\] and \cref{lem:unionbound} (essentially the union bound over $t\in[\Trounds]$ and $h\in[H]$). 
\end{proof}

\newpage

\section{Proof of \creftitle{thm:main}}
\label{sec:proof_main}

In this section, we provide the proof of the main guarantee for \mtalg{}. Before presenting the proof, we refine the guarantees of \uncertsa and \fitval{} by incorporating the parameter choices from \mtalg and combining the guarantees across layers $h \in [H]$. The final guarantees for \uncertsa and \fitval{} are stated in \cref{lem:test} and \cref{lem:main}, respectively, after which we proceed to prove \cref{thm:main}.

\begin{lemma}
    \label{lem:test}
    Let $\beta,\veps,\delta\in(0,1)$, $B>0$, and $\piref$ be given and consider a call to $\mtalg(\beta, \delta, \veps, B,\piref)$ (\cref{alg:ops-dp}). Let $(\nu,\Trounds)$ and $(\vepslip, \uveps, \cE^\spann)$ be as in \cref{alg:ops-dp} and \cref{lem:uncertaintraj}, respectively. Finally, let $\cJ^\spann \coloneqq \{t \in [\Trounds]:  \|\varphi_h(x\ind{t}_h,a\ind{t}_h)\|^2_{(\Sigma\ind{t}_{h})^{-1}} \leq \nu^2/4, \forall h\in[H]\}$, where $(x\ind{t}_h,a\ind{t}_h, \Sigma\ind{t}_h, \varphi_h)$ are as in \cref{alg:ops-dp} when the algorithm terminates. Then we have $\cJ^\spann \not= \emptyset$, and under the event $\cE^\spann$, we have that for all $\tauind \in \cJ^\spann$, the variables in \cref{alg:ops-dp} satisfy:
    \begin{itemize} 
    \item For all $h\in[H]$, $\ell\in[0\ldotst h-1]$, and $(x_\ell,a_\ell)\in \cC\ind{\tauind}_{\ell}$,
    \begin{align}
        \P_{\pihat\ind{ \tauind}_{\ell+1:h}} \left[\|\varphi_h(\x_h,\a_h)\|^2_{(\Sigma\ind{ \tauind}_{h})^{-1}}  > \nu^2 \mid \x_\ell=x_\ell, \a_\ell=a_\ell \right] \leq \vepslip; \label{eq:firstineq}
    \end{align}
	\item There exists $\cX\ind{\tauind}_{h,\spann}\subseteq \cX$ such that for all $\ell\in[0\ldotst h-1]$ and $(x_\ell,a_\ell)\in \cC\ind{\tauind}_\ell$, $\P_{\pihat\ind{ \tauind}}[\x_h\in \cX\ind{\tauind}_{h,\spann} \mid \x_\ell=x_\ell, \a_\ell=a_\ell]\geq 1 - \veps_\spann$, and for all $x_h\in \cX\ind{\tauind}_{h,\spann}$:
    \begin{align}
		\P_{\a\sim \pi_{h,\refe}(\cdot \mid x_h)} \left[\|\varphi_h(x_h,\a)\|^2_{(\Sigma\ind{ \tauind}_{h})^{-1}}  > \nu^2 \right] \leq \uveps.\label{eq:secondineq}
    \end{align}
\end{itemize}
    \end{lemma}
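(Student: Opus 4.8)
The statement has two parts: \textbf{(a)} the index set $\cJ^\spann$ is nonempty, and \textbf{(b)} for every $\tauind \in \cJ^\spann$, on the event $\cE^\spann$, the bounds \eqref{eq:firstineq} and \eqref{eq:secondineq} hold. The plan is to handle (a) by a deterministic elliptic-potential / pigeonhole argument (no concentration event is needed), and then to obtain (b) almost immediately from \cref{lem:uncertaintraj} by observing that membership of $\tauind$ in $\cJ^\spann$ collapses the quantity $\nu^2 \vee \prn*{2\nrm*{\varphi_h(x\ind{\tauind}_h,a\ind{\tauind}_h)}^2_{(\Sigma\ind{\tauind}_h)^{-1}}}$ appearing in that lemma down to exactly $\nu^2$.

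For (a), I would fix $h\in[H]$ and look at the vectors $v_t \coloneqq \varphi_h(x\ind{t}_h,a\ind{t}_h)$ for $t\in[\Trounds]$, where $(x\ind{t}_h,a\ind{t}_h)$ is the pair returned by the call to $\uncertsa_h$ in \cref{line:designdir} at iteration $t$. Tracking the updates of $\Sigma_h$ in \cref{alg:ops-dp} (the call in \cref{line:designdir}, then the rank-one update following \cref{line:updatePsi}), the matrix passed to $\uncertsa_h$ at iteration $t$ is exactly $\Sigma\ind{t}_h = \lambda I + \sum_{i<t} v_i v_i^\top$, with $\Sigma\ind{t+1}_h = \Sigma\ind{t}_h + v_t v_t^\top$. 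Since $\nrm*{\phi_h(\cdot,\cdot)}\leq 1$, we have $\nrm*{v_t}\leq 2$, so after the harmless rescaling $v_t\mapsto v_t/2$, $\lambda\mapsto\lambda/4$ (which leaves every $\nrm*{v_t}^2_{(\Sigma\ind{t}_h)^{-1}}$ unchanged) \cref{lem:elliptic_potential} gives
\[
\sum_{t=1}^{\Trounds} \min\crl*{\nrm*{v_t}^2_{(\Sigma\ind{t}_h)^{-1}},\,1} \leq O\prn*{d\log\prn*{1+\lambda^{-1}\Trounds/d}}.
\]
Because $\nu=1/2$, so $\nu^2/4<1$, each iteration $t$ with $\nrm*{v_t}^2_{(\Sigma\ind{t}_h)^{-1}}>\nu^2/4$ contributes at least $\nu^2/4$ to this sum; hence there are at most $O(d\nu^{-2}\log(1+\lambda^{-1}\Trounds/d))$ such iterations, and a union bound over $h\in[H]$ shows that at most $O(dH\nu^{-2}\log(1+\lambda^{-1}\Trounds/d))$ iterations $t\in[\Trounds]$ violate the defining condition of $\cJ^\spann$. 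Plugging in the parameter choices from \cref{app:params} ($\lambda=\veps_\reg^2/(\frakc B^2)$, $\Trounds = dH^2\frakc$, with $\frakc$ a sufficiently large polylogarithm in all problem parameters), the logarithmic factor is itself polylogarithmic, and choosing $\frakc$ large enough forces $O(dH\nu^{-2}\log(1+\lambda^{-1}\Trounds/d)) < \Trounds$, so some $t\in[\Trounds]$ lies in $\cJ^\spann$.

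For (b), I would invoke \cref{lem:uncertaintraj}: on $\cE^\spann$, for every $t\in[\Trounds]$ and $h\in[H]$, the pair $(x\ind{t}_h,a\ind{t}_h)$ returned by $\uncertsa_h$ satisfies both bulleted properties, but with $\nu^2 \vee \prn*{2\nrm*{\varphi_h(x\ind{t}_h,a\ind{t}_h)}^2_{(\Sigma\ind{t}_h)^{-1}}}$ in place of $\nu^2$, and with the corresponding set $\cX\ind{t}_{h,\spann}$ in the second bullet. Now specialize to $t=\tauind\in\cJ^\spann$: by definition of $\cJ^\spann$, $\nrm*{\varphi_h(x\ind{\tauind}_h,a\ind{\tauind}_h)}^2_{(\Sigma\ind{\tauind}_h)^{-1}}\leq\nu^2/4$ for all $h\in[H]$, hence $2\nrm*{\varphi_h(x\ind{\tauind}_h,a\ind{\tauind}_h)}^2_{(\Sigma\ind{\tauind}_h)^{-1}}\leq\nu^2/2<\nu^2$, so the maximum equals $\nu^2$. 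Substituting and taking $\cX\ind{\tauind}_{h,\spann}$ to be the set supplied by \cref{lem:uncertaintraj} then yields \eqref{eq:firstineq} and \eqref{eq:secondineq} verbatim. This lemma is mostly bookkeeping, so I do not expect a serious obstacle; the step that requires the most care is part (a) --- specifically, reading off the exact update order of $\Sigma_h$ in \cref{alg:ops-dp} so that $\Sigma\ind{t}_h$ genuinely has the elliptic-potential form $\lambda I + \sum_{i<t}v_iv_i^\top$, checking that the bound $\nrm*{v_t}\leq 2$ only costs a constant factor inside the logarithm, and verifying that the polylogarithmic choice of $\frakc$ makes the number of "uncertain" rounds strictly smaller than $\Trounds$.
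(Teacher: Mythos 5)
Your proposal is correct and follows essentially the same route as the paper: an elliptic-potential pigeonhole argument for $\cJ^\spann\neq\emptyset$ (the paper averages the first-power potential $\sum_{t,h}1\wedge\nrm{u\ind{t}_h}_{(\Sigma\ind{t}_h)^{-1}}$ over $t$ to locate a single good round, whereas you count bad rounds per layer via the squared potential and union over $h$ --- an interchangeable variant with the same parameter requirements), followed by substituting the definition of $\cJ^\spann$ into \cref{lem:uncertaintraj} so that $\nu^2\vee\prn[\big]{2\nrm{\varphi_h(x\ind{\tauind}_h,a\ind{\tauind}_h)}^2_{(\Sigma\ind{\tauind}_h)^{-1}}}$ collapses to $\nu^2$. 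Your reading of the $\Sigma_h$ update order and the handling of $\nrm{\varphi_h}\leq 2$ are both consistent with the paper.
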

\begin{proof}[\pfref{lem:test}]
	We start by proving that $\cJ^\spann \not=\emptyset$.
Let $(x\ind{t}_h, a\ind{t}_h)$ be as in \cref{alg:ops-dp} and define 
\[
\forall s\in[\Trounds],\quad u\ind{s}_h\coloneqq \varphi_h(x\ind{s}_h,a\ind{s}_h) \quad \text{and}\quad  \forall t\in[\Trounds], \forall h \in[H],\quad  U\ind{t}_h \coloneqq \sum_{s=1}^{t-1} u\ind{s}_h (u\ind{s}_h)^\top.
\]
Note that $\Sigma_h\ind{t} = \lambda I + U\ind{t}_h$, for all $h\in[H]$ and $t\in[\Trounds]$.  By \cref{lem:elliptic_potential}, we have that: 
   \begin{align}
	\sum_{t\in[\Trounds]}  \sum_{h\in[H]}  1 \wedge \|u\ind{t}_h\|_{(\lambda I + U\ind{t}_h)^{-1}}  \leq  H \sqrt{2\Trounds d \log (1+ \Trounds/\lambda)}.  \label{eq:bodd}
   \end{align}
  Therefore, there exists an $\tauind \in [\Trounds]$ such that for all $h\in[H]$:
   \begin{align}
    1 \wedge \|u\ind{ \tauind}_h\|_{(\lambda I + U\ind{ \tauind}_h)^{-1}} &  \leq  \frac{1}{\Trounds}\sum_{t\in[\Trounds]}  \sum_{h'\in[H]} \|u\ind{t}_{h'}\|_{(\lambda I + U\ind{t}_{h'})^{-1}},\nn \\
     &\leq  \frac{H \sqrt{2\Trounds d \log (1+ \Trounds/\lambda)}}{\Trounds}, \nn \\
    &  \leq  \frac{\nu}{2}, \label{eq:lookpre}
   \end{align}
   where the last inequality follows from the fact that $\Trounds\geq 8\nu^{-4} d H^2 \log (1+\Trounds/\lambda)$ (\sssref{app:params}). Since $\nu<1$ (\sssref{app:params}), \eqref{eq:lookpre} implies that: 
   \begin{align}
   \forall h\in[H],\quad \|u\ind{ \tauind}_h\|_{(\lambda I + U\ind{ \tauind}_h)^{-1}} \leq \frac{\nu}{2}. \label{eq:look}
   \end{align}
   Using the definitions of $u\ind{j}_h$ and $U\ind{j}_h$, this shows that $ \|\varphi_h(x_h\ind{\tauind}, a_h\ind{ \tauind})\|^2_{(\Sigma\ind{ \tauind}_h)^{-1}}\leq \nu^2/4$, for all $h\in[H]$, which implies that $\cJ^\spann\not=\emptyset$.

   We now prove \eqref{eq:firstineq} and \eqref{eq:secondineq} under $\cE^\spann$. Fix $j\in \cJ^\spann$ and condition on $\cE^\spann$.
    Using \cref{lem:uncertaintraj} (and the conditioning on $\cE^{\spann}$) and the definition of $\cJ^\spann$, we have that for all $h\in[H]$, $\ell \in [0\ldotst h-1]$, and $(x_\ell,a_\ell)\in \cC\ind{ \tauind}_\ell$:
\begin{align}
&\P_{\pihat\ind{ \tauind}_{\ell+1:h}} \left[\|\varphi_h(\x_h,\a_h)\|^2_{(\Sigma\ind{ \tauind}_{h})^{-1}}  > \nu^2 \mid \x_\ell=x_\ell, \a_\ell=a_\ell \right]\nn \\
& =  \P_{\pihat\ind{ \tauind}_{\ell+1:h}} \left[\|\varphi_h(\x_h,\a_h)\|^2_{(\Sigma\ind{ \tauind}_{h})^{-1}}  > \nu^2\vee \left(2 \|u\ind{ \tauind}_h\|^2_{(\lambda I + U\ind{ \tauind}_h)^{-1}} \right)  \mid \x_\ell=x_\ell, \a_\ell=a_\ell \right],  \nn \\
& = \P_{\pihat\ind{ \tauind}_{\ell+1:h}} \left[\|\varphi_h(\x_h,\a_h)\|^2_{(\Sigma\ind{ \tauind}_{h})^{-1}}  > \nu^2\vee \left(2 \|\varphi_h(x_h\ind{ \tauind}, a_h\ind{ \tauind})\|^2_{(\Sigma\ind{\tauind}_h)^{-1}} \right)  \mid \x_\ell=x_\ell, \a_\ell=a_\ell \right], \nn \\
& \leq \vepslip.\nn 
\end{align}
Similarity, using \cref{lem:uncertaintraj} and the definition of $\cJ^\spann$ once more, we have that there exists $\cX\ind{\tauind}_{h,\spann}\subseteq \cX$ such that for all $\ell\in[0\ldotst h-1]$ and $(x_\ell,a_\ell)\in \cC\ind{\tauind}_\ell$, $\P_{\pibar\ind{\tauind}}[\x_h\in \cX\ind{\tauind}_{h,\spann} \mid \x_\ell=x_\ell, \a_\ell=a_\ell]\geq 1 - \uveps$, and for all $x_h\in \cX\ind{\tauind}_{h,\spann}$:
\begin{align}
	& \P_{\a\sim \pi_{h,\refe}(\cdot \mid x_h)} \left[\|\varphi_h(x_h,\a)\|^2_{(\Sigma\ind{ \tauind}_{h})^{-1}}  > \nu^2 \right]\nn \\
	& \leq \P_{\a\sim \pi_{h,\refe}(\cdot \mid x_h)} \left[\|\varphi_h(x_h,\a)\|^2_{(\Sigma\ind{ \tauind}_{h})^{-1}}  > \nu^2 \vee \left(2 \|\varphi_h(x_h\ind{ \tauind}, a_h\ind{ \tauind})\|^2_{(\Sigma\ind{ \tauind}_h)^{-1}} \right) \right],\nn \\ 
	&\leq \uveps.\nn 
\end{align}
This completes the proof. 
\end{proof}

\begin{lemma}[Estimation error]
    \label{lem:main}
    Let $\beta, \veps,\delta\in(0,1)$, $B>0$, and $\piref$ be such that $\veps\leq \beta^2/4$ and suppose that \cref{assum:linear} holds. Consider a call to $\mtalg(\beta, \delta, \veps,\piref)$ (\cref{alg:ops-dp}), and let $\cE^\reg$, $\cE^\spann$, and $\cJ^\spann$ be as in \cref{lem:fitval}, \cref{lem:uncertaintraj}, and \cref{lem:test}, respectively. Then, under the event $\cE^\reg \cap \cE^\spann$, we have that for all $\tauind \in \cJ^\spann$ and $h\in[H]$, the parameter vector $\theta\ind{ \tauind}_h$ in \cref{alg:ops-dp} satisfies
\begin{align}
	\|\theta\ind{ \tauind}_h-\theta^\star_{h,\beta} \|_{\Sigma_h\ind{ \tauind}}^2 \leq \veps^2_\reg,\nn 
\end{align}
where $\Sigma_h\ind{\tauind}$ is as in \cref{alg:ops-dp} at iteration $j$ and $\veps^2_\reg \coloneqq \veps$.
\end{lemma}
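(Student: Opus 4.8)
The plan is to prove \cref{lem:main} by downward induction on the layer $h\in\{H,H-1,\dots,1\}$, working throughout on the event $\cE^\reg\cap\cE^\spann$ and with the inductive hypothesis that $\|\theta\ind{\tauind}_\ell-\theta^\star_{\ell,\beta}\|^2_{\Sigma_\ell\ind{\tauind}}\leq\veps^2_\reg$ for all $\ell\in[h+1\ldotst H]$ and all $\tauind\in\cJ^\spann$ (the set $\cJ^\spann$ being nonempty by \cref{lem:test}). The workhorse is \cref{lem:fitval}, which bounds the layer-$h$ error $\|\theta\ind{\tauind}_h-\theta^\star_{h,\beta}\|^2_{\Sigma_h\ind{\tauind}}$ by a sum of four types of terms: (i) ``regularization/noise'' terms $4\lambda B^2+C_1/\Nreg+C_2\deltarej$; (ii) rejection-sampling terms built from $\P_{\pihat\ind{\tauind}}[\Mrej<16\,\Ccond(\pibar\ind{\tauind}_\ell\mid\x_\ell)^2]$ and $\En_{\pihat\ind{\tauind}}[\min(1,\Ccond(\pibar\ind{\tauind}_\ell\mid\x_\ell)\sqrt{2/\Mrej})]$; (iii) a truncation-probability term $\En_{\pihat\ind{\tauind}}[B\ind{\tauind}_\ell(\x_\ell)^2]$; and (iv) the squared-KL ``misspecification feedback'' term $19200\,H\beta^2\sum_{(x_h,a_h)\in\cC_h\ind{\tauind}}\sum_{\ell>h}\En_{\pihat\ind{\tauind}}[\kl{\pibar\ind{\tauind}_\ell(\cdot\mid\x_\ell)}{\pibar^{\tauind,\star}_{\ell,\beta}(\cdot\mid\x_\ell)}^2\mid\x_h=x_h,\a_h=a_h]$. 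The aim is to show the total is at most $\veps^2_\reg$; the base case $h=H$ is immediate because the $\sum_{\ell>h}$ terms vanish and only (i) remains, which is $\ll\veps^2_\reg$ by the parameter schedule of \sssref{app:params}.

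For the inductive step I would first dispatch the non-feedback terms. Terms (i) are $\ll\veps^2_\reg$ by construction (large $\frakc$). For (ii) and (iii), the key ingredient is a density-ratio bound for the layer-$\ell$ truncated policies analogous to \cref{lem:truncated_density}: on any state $x$ at which the $\Sigma_\ell\ind{\tauind}$-truncation discards little $\pi^\star_{\ell,\beta}$-mass, the inductive hypothesis --- which gives $\|\theta\ind{\tauind}_\ell-\theta^\star_{\ell,\beta}\|_{\Sigma_\ell\ind{\tauind}}\leq\veps_\reg\leq\beta$ using $\nu=1/2$ and $\veps\leq\beta^2/4$ --- upgrades to $\Ccond(\pibar\ind{\tauind}_\ell\mid x)\lesssim\Ccond(\pistar_\beta)$, and likewise for $\pibar^{\tauind,\star}_{\ell,\beta}$. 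By \cref{lem:test} (via \cref{lem:uncertaintraj}), for $\tauind\in\cJ^\spann$ the $\pihat\ind{\tauind}$-rollouts from the core-set pairs $(x_h,a_h)\in\cC_h\ind{\tauind}$ stay inside this well-covered region with probability at least $1-\vepslip$, and reach states whose $\pi_\refe$-truncation probability is at most $\uveps$. Since $\Mrej=\Omega(\Ccond(\pistar_\beta)^2\Trounds^2H^2B^4\frakc/\veps^2_\reg)\gg\Ccond(\pistar_\beta)^2$ and $\vepslip,\uveps=O(\veps^2_\reg/(\Trounds H^4B^4\Ccond(\pistar_\beta)\frakc))$, the contributions of (ii) and (iii) --- each carrying an $|\cC_h\ind{\tauind}|\cdot\sum_\ell\leq 2\Trounds H$ prefactor --- are each bounded by (say) $\veps^2_\reg/10$; on the ``bad'' states not covered by \cref{lem:test}, the integrands are $\le 1$ and the reaching probability $\vepslip$ is itself $\ll\veps^2_\reg$.

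The main obstacle is term (iv), which is where the hypothesis $\veps\leq\beta^2/4$ is essential and where ``benign error propagation'' must be established. I would bound $\kl{\pibar\ind{\tauind}_\ell(\cdot\mid x)}{\pibar^{\tauind,\star}_{\ell,\beta}(\cdot\mid x)}\lesssim\beta^{-2}\,\En_{\pibar^{\tauind,\star}_{\ell,\beta}(\cdot\mid x)}[\langle\theta\ind{\tauind}_\ell-\theta^\star_{\ell,\beta},\varphibar\ind{\tauind}_\ell(x,\cdot)\rangle^2]$ via the truncated-softmax regret identity (\cref{lem:kl_regret}, as used to obtain \cref{eq:beta-exp-lin}), which is valid because $|\langle\theta\ind{\tauind}_\ell-\theta^\star_{\ell,\beta},\varphibar\ind{\tauind}_\ell(x,\cdot)\rangle|\leq\nu\|\theta\ind{\tauind}_\ell-\theta^\star_{\ell,\beta}\|_{\Sigma_\ell\ind{\tauind}}\leq\nu\veps_\reg\leq\beta$. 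Since this integrand is pointwise at most $\nu^2\veps^2_\reg$ (Cauchy--Schwarz, the deterministic truncation bound $\|\varphibar\ind{\tauind}_\ell(\cdot,\cdot)\|_{(\Sigma_\ell\ind{\tauind})^{-1}}\leq\nu$, and the inductive hypothesis), squaring gives $\kl{\pibar\ind{\tauind}_\ell}{\pibar^{\tauind,\star}_{\ell,\beta}}^2\lesssim\beta^{-4}\nu^2\veps^2_\reg\cdot\En_{\pibar^{\tauind,\star}_{\ell,\beta}}[\langle\cdot\rangle^2]$. Pushing the remaining $\pibar^{\tauind,\star}_{\ell,\beta}$-average and the rollout law of $\x_\ell$ through Hölder against $\Sigma_\ell\ind{\tauind}$ --- after using \cref{thm:rejection_density} to absorb the $\pihat\ind{\tauind}_\ell\approx\pibar\ind{\tauind}_\ell$ total-variation gap into $\deltarej$, exactly in the spirit of \cref{eq:estimation_error_bound} --- yields $\En_{\pihat\ind{\tauind}}[\kl^2\mid x_h,a_h]\lesssim\beta^{-4}\nu^4\veps^4_\reg$. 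The decisive point is that this is \emph{quadratic} in the future-layer error budget $\veps^2_\reg$, so the hypothesis $\veps=\veps^2_\reg\leq\beta^2/4$ trades one power of $\veps^2_\reg$ for $\beta^2$, damping the feedback and keeping it higher-order rather than amplifying it across the $H$ layers. Summing the four contributions then closes the induction. I expect the delicate part to be precisely this last accounting --- verifying that the quadratic gain, together with the explicit choices of $\frakc,\Mrej,\Nreg,\Nspann,\Nspanb,\deltarej$, actually dominates the $H\beta^2\cdot|\cC_h\ind{\tauind}|\cdot H$ and residual $\mathrm{poly}(d,H,\Trounds)$ prefactors in term (iv) --- which is the ``benign error propagation'' mechanism flagged in the discussion following \cref{thm:main}.
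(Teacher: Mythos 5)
Your proposal is correct and follows essentially the same route as the paper's proof: backward induction on $h$ with \cref{lem:fitval} as the workhorse, the density-ratio upgrade $\Ccond(\pibar\ind{\tauind}_\ell\mid x)\lesssim \Ccond(\pistar_\beta)$ on the well-covered states supplied by \cref{lem:test} (the paper packages this as \cref{lem:helperest} via \cref{lem:coverability}), and the decisive observation that the squared-KL feedback term is quartic in $\veps_\reg$ so that $\veps\leq\beta^2/4$ damps it to higher order (the paper's \cref{lem:KLbound} plays the role of your truncated-softmax KL identity). The only cosmetic differences are that the paper anchors the induction at $\ell=H+1$ with the convention $\theta\ind{\tauind}_{H+1}=\theta^\star_{H+1,\beta}=0$ and states the one-step KL bound with the expectation under $\pibar\ind{\tauind}_\ell$ rather than $\pibar^{\tauind,\star}_{\ell,\beta}$.
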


As we will see shortly in the proof of \cref{lem:main}, the reason the result holds and why we do not encounter compounding errors from future layers $\ell > h$ is due to the use of KL-regularization. The regularization ensures that errors from subsequent layers are raised to the fourth power, resulting in favorable error propagation across layers.

\begin{proof}[\pfref{lem:main}] Let $(\vepslip, \uveps)$ be as in \cref{lem:uncertaintraj}. In this proof, we condition on $\cE^\spann\cap \cE^\reg$. and fix $\tauind\in \cJ^\spann$. We proceed via backward induction over $\ell=H+1,\dots,1$ to show that
	\begin{align}
		\|\theta\ind{ \tauind}_\ell - \theta^\star_{\ell,\beta}\|_{\Sigma_\ell\ind{ \tauind}}^2 \leq \veps^2_\reg. \label{eq:target}
	\end{align} 
	The base case holds trivially by the convention that $\theta\ind{\tauind}_{H+1}=\theta^\star_{H+1,\beta}= 0$. Let $h\in[H]$ and suppose that \eqref{eq:target} holds for $\ell=h+1$. We show that it holds for $\ell=h$. 

	By \cref{lem:fitval} and the conditioning on $\cE^\reg$, we have that (with $\pibar\ind{\tauind}_\ell$, $\pibar^{\tauind,\star}_{\ell,\beta}, C_1,C_2,C_3$, and $B_\ell\ind{ \tauind}$ as in \cref{lem:fitval})
	\begin{align}
	& \|\theta\ind{ \tauind}_h -\theta^\star_{h,\beta}\|^2_{\Sigma_h\ind{ \tauind}}\nn \\
	& \leq 4 \lambda B^2 + \frac{C_1}{\Nreg}  +  C_2 \deltarej  + C_3 \sum_{(x_h,a_h)\in \cC_h\ind{\tauind}}\sum_{\ell=h+1}^H \P_{\pihat\ind{\tauind}}\left[ \Mrej < 16\Ccond(\pibar\ind{\tauind}_{\ell}\mid \x_\ell)^2 \mid \x_h =x_h, \a_h =a_h\right] \nn \\
& \quad + 2304  H B \beta \sum_{(x_h,a_h)\in \cC\ind{\tauind}_h}\sum_{\ell=h+1}^H \E_{\pihat\ind{\tauind}}\left[  \min \left(1,\Ccond(\pibar\ind{\tauind}_{\ell}\mid \x_\ell) \cdot  \sqrt{\frac{2}{\Mrej}} \right) \mid  \x_h =x_h, \a_h=a_h \right] \nn \\
& \quad + 19200  H\beta^2\sum_{(x_h,a_h)\in \cC\ind{\tauind}_h}\sum_{\ell=h+1}^H \E_{\pihat\ind{\tauind}}\left[\kl{\pibar\ind{\tauind}_{\ell}(\cdot \mid \x_\ell)}{\pibar^\star_{\ell,\beta}(\cdot \mid \x_\ell)}^2 \mid \x_h=x_h,\a_h = a_h \right] \nn \\
& \quad + 7680  H \Rmax^2\sum_{(x_h,a_h)\in \cC\ind{\tauind}_h}\sum_{\ell=h+1}^H \E_{\pihat\ind{\tauind}}\left[ B\ind{\tauind}_{\ell}(\x_\ell)^2\mid \x_h=x_h,\a_h=a_h \right]. \label{eq:rhs}
	\end{align}
        We start by bounding the KL term on the right-hand side of \eqref{eq:rhs}, then bound the terms involving 
        $(B\ind{\tauind}_\ell(\x_\ell))$ and $\Ccond$, which correspond to distribution shift. 
\begin{align}
    B\ind{\tauind}_{\ell}(x)&\coloneqq  
        \max_{\pi \in \{\pibar\ind{\tauind}_{\ell},\pibar^{\tauind,\star}_{\ell,\beta}, \pi^\star_{\ell,\beta}\}} \min \left(1,\Ccond(\pi\mid x) \cdot \P_{\a \sim \pi_{\ell,\refe}(\cdot \mid x)}\left[\|\varphi_\ell(x,\a)\|^2_{(\Sigma_\ell\ind{\tauind})^{-1}}>\nu^2\right]\right),\nn
\end{align} 
for all $x\in \cX$.
We call $(B\ind{j}_\ell)$ \emph{distribution shift} terms because they reflect the event in which the algorithm is surprised by a new direction in feature space when the rollout policy changes.
\paragraphi{Bounding the KL term} 
By the induction hypothesis, \cref{lem:KLbound} applied to each $\ell>h$ (the precondition $\|\thetastar_{\ell,\beta}- \theta\ind{\tauind}_\ell\|\leq \beta/\nu$ of \cref{lem:KLbound} is satisfied thanks to the induction hypothesis and $\veps_\reg=\veps^{1/2}\leq \beta/2$), and Jensen's inequality, we have that for all $(x_h,a_h)\in \cC\ind{ \tauind}_h$ and $\ell \in [h+1 \ldotst H]$: 
\begin{align}
	& \E_{\pihat\ind{ \tauind}}\left[\kl{\pibar\ind{\tauind}_{\ell}(\cdot \mid \x_\ell)}{\pibar^{\tauind,\star}_{\ell,\beta}(\cdot \mid \x_\ell)}^2 \mid \x_h=x_h, \a_h=a_h\right] \nn \\
	& \leq \E_{\pihat\ind{\tauind}}\left[ \left(\beta^{-1}\sum_{a\in \cA} \pibar\ind{\tauind}_{\ell}(a \mid \x_\ell)\cdot  \|\varphibar\ind{\tauind}_\ell(\x_\ell,a)\|^2_{(\Sigma_\ell\ind{ \tauind})^{-1}} \right)^2  \mid \x_h=x_h, \a_h=a_h\right]   \|\theta^\star_{\ell,\beta} - \theta\ind{\tauind}_\ell\|^4_{\Sigma\ind{\tauind}_\ell}, \label{eq:toplugging}
\end{align}
where $\varphibar\ind{\tauind}$ is as in \cref{lem:fitval}. Now, by \cref{lem:rejection_tv_average_new}, we have that for any $x\in \cX$ and $\ell \in [h+1 \ldotst H]$: 
\begin{align}
& \left|\sum_{a\in \cA} \pibar\ind{\tauind}_{\ell}(a \mid x)\cdot  \|\varphibar\ind{\tauind}_\ell(x,a)\|^2_{(\Sigma_\ell\ind{ \tauind})^{-1}}  - \sum_{a\in \cA} \pihat\ind{\tauind}_{\ell}(a \mid x)\cdot  \|\varphibar\ind{\tauind}_\ell(x,a)\|^2_{(\Sigma_\ell\ind{ \tauind})^{-1}} \right|\nn \\
& \leq  \frac{4}{\lambda}\deltarej + \frac{4}{\lambda}  \cdot \mathbb{I}\left\{ \Mrej < 4\Ccond(\pibar\ind{\tauind}_{\ell}\mid x)\right\}.
\end{align}
 Plugging this into \eqref{eq:toplugging} and using Jensen's inequality, we get that for all $(x_h,a_h)\in \cC\ind{ \tauind}_h$ and $\ell \in [h+1 \ldotst H]$:
    \begin{align} 
& \E_{\pihat\ind{ \tauind}}\left[\kl{\pibar\ind{\tauind}_{\ell}(\cdot \mid \x_\ell)}{\pibar^{\tauind,\star}_{\ell,\beta}(\cdot \mid \x_\ell)}^2 \mid \x_h=x_h, \a_h=a_h\right] \nn \\
	& \leq  \frac{4}{\beta^2}\cdot  \E_{\pihat\ind{\tauind}}\left[\|\varphibar\ind{\tauind}_\ell(\x_\ell,\a_\ell)\|^4_{(\Sigma_\ell\ind{ \tauind})^{-1}} \mid \x_h=x_h, \a_h=a_h\right]    \cdot \|\theta^\star_{\ell,\beta} - \theta\ind{\tauind}_\ell\|^4_{\Sigma\ind{\tauind}_\ell}\nn \\
    & \quad + \left(\frac{32 \deltarej^2}{\lambda^2\beta^2} +  \frac{32}{\beta^2 \lambda^2}\P_{\pihat\ind{\tauind}}\left[\Mrej < 4\Ccond(\pibar\ind{\tauind}_{\ell}\mid \x_\ell)\mid \x_h =x_h, \a_h =a_h\right] \right)  \cdot \|\theta^\star_{\ell,\beta} - \theta\ind{\tauind}_\ell\|^4_{\Sigma\ind{\tauind}_\ell},\nn \\
	& \leq \frac{\nu^4}{\beta^2} + \frac{16}{\lambda^{2}\beta^2} \cdot \P_{\pihat\ind{\tauind}}\left[\| \varphi_\ell(\x_\ell,\a_\ell)\|^2_{(\Sigma\ind{\tauind}_\ell)^{-1}}>\nu^2  \mid \x_h=x_h, \a_h=a_h\right] \cdot  \|\theta\ind{\tauind}_\ell - \theta^\star_{\ell,\beta}\|^4_{\Sigma\ind{ \tauind}_\ell} \nn \\
    & \quad +  \left(\frac{32 \deltarej^2}{\lambda^2\beta^2} +  \frac{32}{\beta^2 \lambda^2}\P_{\pihat\ind{\tauind}}\left[\Mrej < 4\Ccond(\pibar\ind{\tauind}_{\ell}\mid \x_\ell)\mid \x_h =x_h, \a_h =a_h\right] \right)  \cdot \|\theta^\star_{\ell,\beta} - \theta\ind{\tauind}_\ell\|^4_{\Sigma\ind{\tauind}_\ell},
\end{align}
where the last inequality follows by the fact that  $\sigma_{\min}(\Sigma\ind{ \tauind}_\ell) \geq \lambda$ and $\|\varphi_\ell(x,a)\|\leq 2$, for all $\ell\in[H],(x,a)\in \cX\times \cA$ (follows by \cref{assum:linear}). And so, by \cref{lem:test} (in particular \eqref{eq:firstineq}) and the induction hypothesis again (to bound $\|\theta^\star_{\ell,\beta} - \theta\ind{\tauind}_\ell\|^4_{\Sigma\ind{\tauind}_\ell}$), we have that for all $(x_h,a_h)\in \cC\ind{ \tauind}_h$ and $\ell \in [h+1 \ldotst H]$: 
\begin{align}
	  &\E_{\pibar\ind{ \tauind}}\left[\kl{\pibar\ind{ \tauind}_{\ell}(\cdot \mid \x_\ell)}{\pibar^{\tauind,\star}_{\ell,\beta}(\cdot \mid \x_\ell)}^2 \mid \x_h=x_h, \a_h=a_h\right] \nn \\ 
& \leq  \left(\lambda^2 \nu^4 + 16 \vepslip +32 \deltarej^2 +  32\P_{\pihat\ind{\tauind}}\left[\Mrej < 4\Ccond(\pibar\ind{\tauind}_{\ell}\mid \x_\ell)\mid \x_h =x_h, \a_h =a_h\right]\right) \cdot \frac{\veps_\reg^4}{\beta^2 \lambda^2}. \label{eq:klbound}
\end{align}
We now bound the ``distribution shift'' terms $(B\ind{\tauind}_\ell)$ appearing in \eqref{eq:klbound} and on the right-hand side of \eqref{eq:rhs}.  
\paragraphi{Bounding the distribution shift terms} Let $\cX\ind{\tauindl}_{\ell,\spann}$ and $(\vepslip, \uveps)$ be as in \cref{lem:test} and \cref{lem:uncertaintraj}, respectively. By \cref{lem:test}, we have that for all $\ell\in[h+1 \ldotst H]$ and $(x_h,a_h) \in \cC_h\ind{ \tauind}$, 
\begin{gather}
   \P_{\pihat\ind{ \tauind}}\left[\x_\ell \in \cX\ind{\tauind}_{\ell,\spann} \mid \x_h=x_h, \a_{h}=a_{h}\right] \geq 1- \veps_\spann, \label{eq:spann}
   \intertext{and for all $x\in \cX\ind{\tauind}_{\ell,\spann}$:}
   \P_{\a\sim \pi_{\ell,\refe}(\cdot \mid x)} \left[\|\varphi_\ell(x,\a)\|^2_{(\Sigma\ind{ \tauind}_{\ell})^{-1}} > \nu^2\right] \leq \uveps \leq  \frac{1}{4 \Ccond (\pistar_{\ell,\beta})}, 
   \label{eq:ellip1}
\end{gather} 
where the last inequality follows by the fact that $\Nspanb \geq 4 \Ccond(\pistar_\beta)$ (see parameter choices in \cref{alg:ops-dp}). On the other hand, by \cref{lem:helperest} (stated and proven in the sequel), we have that for all $\ell\in [h+1 \ldotst H]$ and $(x,a)\in \cX\ind{\tauind}_{\ell,\spann}\times \cA$: 
\begin{align}
    \frac{\pibar\ind{ \tauind}_\ell(a\mid x)}{\pi_{\ell,\refe}(a\mid x)} \vee \frac{\pibar^{\tauind,\star}_{\ell,\beta}(a\mid x)}{\pi_{\ell,\refe}(a\mid x)}& \leq    2 e^{\frac{4\nu }{\beta}\|\theta\ind{ \tauind}_\ell - \theta^\star_{\ell,\beta}\|_{\Sigma\ind{ \tauind}_\ell}}  \cdot  \frac{\pistar_{\ell,\beta}(a\mid x)}{\pi_{\ell,\refe}(a\mid x)}, \nn \\
& \leq 2 e  \frac{\pistar_{\ell,\beta}(a\mid x)}{\pi_{\ell,\refe}(a\mid x)},  \label{eq:soak} 
 \end{align}
 where the last step follows by the induction hypothesis and the fact that $\veps_\reg \leq \beta/(4\nu)$.
 Therefore, we have that for all $\ell\in[h+1\ldotst H]$, $x\in \cX\ind{\tauind}_{\ell,\spann}$, and $\pi \in\{\pibar\ind{ \tauind}_{\ell}, \pibar^{\tauind,\star}_{\ell,\beta}\}$:
 \begin{align} \Ccond(\pi\mid x) \leq 2 e \Ccond (\pistar_{\ell,\beta}). \label{eq:ccov}
 \end{align}
Thus, combining \eqref{eq:ccov} and \eqref{eq:ellip1}, we get that for all $x\in \cX\ind{\tauind}_{\ell,\spann}$:
\begin{align}
 B\ind{ \tauind}_\ell(x)  &= \max_{\pi \in \{\pibar\ind{\tauind}_{\ell},\pibar^{\tauind,\star}_{\ell,\beta}, \pi^\star_{\ell,\beta}\}}\min \left(1,  \Ccond(\pi\mid x) \cdot \E_{\a \sim \pi_{\ell,\refe}(\cdot \mid x)}\left[\|\varphi_\ell(x,\a)\|^2_{(\Sigma_\ell\ind{\tauind})^{-1}}>\nu\right] \right),\nn \\
& \leq \min \left(1, 2 e \Ccond(\pistar_{\ell,\beta}\mid x) \cdot \E_{\a \sim \pi_{\ell,\refe}(\cdot \mid x)}\left[\|\varphi_\ell(x,\a)\|^2_{(\Sigma_\ell\ind{\tauind})^{-1}}>\nu\right] \right),\nn \\
& \leq  2 e \Ccond(\pistar_{\ell,\beta}) \cdot \uveps,\nn 
\end{align}
where the last step follows by \eqref{eq:ellip1} and that $\uveps \leq 1$. Thus, using \eqref{eq:spann}, we have that for all $\ell\in[h+1\ldotst H]$ and $(x_h,a_h)\in \cC\ind{ \tauind}_h$:
\begin{align}
& \E_{\pihat\ind{ \tauind}} \left[ B\ind{ \tauind}_\ell(\x_\ell) \mid \x_h=x_h, \a_{h}=a_{h}\right]\nn \\
& \leq \E_{\pihat\ind{ \tauind}} \left[\mathbb{I}\{\x_\ell\in \cX\ind{\tauind}_{\ell,\spann} \} \cdot B\ind{ \tauind}_\ell(\x_\ell) \mid \x_h=x_h, \a_{h}=a_{h}\right]\nn \\
& \quad +  \E_{\pihat\ind{ \tauind}} \left[\mathbb{I}\{\x_\ell\not\in \cX\ind{\tauind}_{\ell,\spann} \} \cdot B\ind{\tauind}_\ell(\x_\ell) \mid \x_h=x_h, \a_{h}=a_{h}\right],\nn \\
& \leq (1+ 2 e \Ccond(\pistar_{\ell,\beta}))\cdot\uveps.\label{eq:lastone} 
\end{align}
Now, by \eqref{eq:soak} and the fact that $\Mrej \geq  32 e (1\vee \Ccond(\pistar_{\ell,\beta})^2)$ (see parameter choice in \sssref{app:params}), we have that for all $\ell\in [h+1\ldotst H]$: if $x \in \cX\ind{\tauind}_{\ell,\spann}$ then \[\Mrej \geq (4\Ccond(\pibar\ind{j}_\ell \mid x)) \vee (16\Ccond(\pibar\ind{j}_\ell \mid x)^2)\] and so by \eqref{eq:ellip1}, we have for all $(x_h,a_h)\in \cC\ind{\tauind}_h$ and $\ell\in [h+1\ldotst H]$:
\begin{align}
    1 - \veps_\spann  & \leq  \P_{\pihat\ind{ \tauind}}\left[\x_\ell \in \cX\ind{\tauind}_{\ell,\spann} \mid \x_h=x_h, \a_{h}=a_{h}\right], \nn \\
   &   \leq  \P_{\pihat\ind{ \tauind}}\left[\Mrej \geq  (4\Ccond(\pibar\ind{j}_\ell \mid \x_\ell)) \vee (16\Ccond(\pibar\ind{j}_\ell \mid \x_\ell)^2) \mid \x_h =x_h, \a_h =a_h\right]. \nn 
\end{align}
This implies that for all $\ell\in [h+1\ldotst H]$ and  $(x_h,a_h)\in \cC\ind{\tauind}_h$:
\begin{align}
    \P_{\pihat\ind{ \tauind}}\left[\Mrej <  4\Ccond(\pibar\ind{\tauind}_{\ell}\mid \x_\ell)\mid \x_h =x_h, \a_h =a_h\right] & \leq \veps_\spann,  \label{eq:ccov5}\\
    \shortintertext{and}
    \P_{\pihat\ind{ \tauind}}\left[\Mrej <  16\Ccond(\pibar\ind{\tauind}_{\ell}\mid \x_\ell)^2\mid \x_h =x_h, \a_h =a_h\right] & \leq \veps_\spann. \label{eq:ccov2}
\end{align}
Finally, we have that for all $\ell\in [h+1\ldotst H]$ and $(x_h,a_h)\in \cC\ind{\tauind}_h$:
\begin{align}
   &  \E_{\pihat\ind{\tauind}}\left[  \min \left(1,\Ccond(\pibar\ind{\tauind}_{\ell}\mid \x_\ell) \cdot \sqrt{\frac{2}{\Mrej}} \right) \mid  \x_h =x_h, \a_h=a_h \right]\nn \\
& \leq \E_{\pihat\ind{\tauind}}\left[ \mathbb{I}\{\x_\ell\in \cX\ind{j}_{\ell,\spann}\}\cdot  \min \left(1,\Ccond(\pibar\ind{\tauind}_{\ell}\mid \x_\ell) \cdot \sqrt{\frac{2}{\Mrej}} \right) \mid  \x_h =x_h, \a_h=a_h \right]\nn \\
& \quad + \E_{\pihat\ind{\tauind}}\left[ \mathbb{I}\{\x_\ell\not\in \cX\ind{j}_{\ell,\spann}\}\cdot  \min \left(1,\Ccond(\pibar\ind{\tauind}_{\ell}\mid \x_\ell) \cdot \sqrt{\frac{2}{\Mrej}} \right) \mid  \x_h =x_h, \a_h=a_h \right], \nn \\
\intertext{and so by \eqref{eq:soak} and \eqref{eq:spann}:}
& \leq 2 e\Ccond(\pistar_{\ell,\beta}) \cdot \sqrt{\frac{2}{\Mrej}} + \P_{\pihat\ind{\tauind}}\left[ \x_\ell\not\in \cX\ind{j}_{\ell,\spann} \mid  \x_h =x_h, \a_h=a_h \right], \nn \\
& \leq 2 e\Ccond(\pistar_{\ell,\beta}) \cdot \sqrt{\frac{2}{\Mrej}} + \veps_\spann. \label{eq:ccov1}
\end{align}
\paragraphi{Putting it all together}
Combining \eqref{eq:ccov1}, \eqref{eq:ccov5}, \eqref{eq:ccov2}, and \eqref{eq:lastone} with \eqref{eq:klbound} and \eqref{eq:rhs} and using that $|\cC\ind{\tauind}_h|\leq 2 \Trounds$, we get 
	\begin{align}
	 & \|\theta\ind{ \tauind}_h -\theta^\star_{h,\beta}\|^2_{\Sigma_h\ind{ \tauind}}\nn \\
	& \leq 4 \lambda B^2 + \frac{C_1}{\Nreg}  +  C_2 \deltarej  + C_3 H \Trounds \veps_\spann \nn \\
    & \quad + 1536 e  H^2 B \beta \Trounds \left(\max_{\ell\in [H]} \Ccond(\pistar_{\ell,\beta}) \cdot \sqrt{\frac{2}{\Mrej}} + \veps_\spann\right) \nn \\
    & \quad + 3200  H^2\beta^2\Trounds \left(\lambda^2 \nu^4 + 16 \vepslip +32 \deltarej^2 +  32\veps_\spann\right) \cdot \frac{\veps_\reg^4}{\beta^2 \lambda^2} \nn \\
    & \quad + 7680  H^2 \Rmax^2 \Trounds (1+ 2 e \Ccond(\pistar_{\ell,\beta}))^2\cdot\uveps^2,\nn\\
& \leq \veps^2_\reg,\nn 
\end{align}
where the last inequality follows by the parameter choices in \cref{alg:ops-dp}. This completes the induction and implies the desired result.
\end{proof}

\begin{lemma}[Helper lemma for estimation error]
    \label{lem:helperest}
    Let $h\in[0\ldotst H]$ be given. Consider the setting of \cref{lem:main} and let $\tauind \in \cJ\ind{\spann}$ with $\cJ^{\spann}$ as in \cref{lem:test}. Further, let $\cX\ind{\tauind}_{\ell,\spann}$ and $(\vepslip,\uveps)$ be as in \cref{lem:test} and \cref{lem:uncertaintraj}, respectively. Then, we have that for all $\ell\in [h+1 \ldotst H]$ and $(x,a)\in \cX\ind{\tauind}_{\ell,\spann}\times \cA$: 
    \begin{align}
    \frac{\pibar\ind{\tauind}_\ell(a\mid x)}{\pi_{\ell,\refe}(a\mid x)} \vee \frac{\pibar^{\tauind,\star}_{\ell,\beta}(a\mid x)}{\pi_{\ell,\refe}(a\mid x)} \leq    2 e^{\frac{4\nu }{\beta}\|\theta\ind{ \tauind}_\ell - \theta^\star_{\ell,\beta}\|_{\Sigma\ind{ \tauind}_\ell}}  \cdot  \frac{\pistar_{\ell,\beta}(a\mid x)}{\pi_{\ell,\refe}(a\mid x)}.\nn 
    \end{align}
\end{lemma}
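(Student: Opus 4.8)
The plan is to adapt the single-turn density-ratio bound of \cref{lem:truncated_density} to the fixed-anchor setting. Fix $\ell\in[h+1\ldotst H]$ and $x\in\cX\ind{\tauind}_{\ell,\spann}$, and work throughout with the relative features $\varphi_\ell(x,\cdot)=\phi_\ell(x,\cdot)-\phi_\ell(x,\fraka)$. First I would note that, since $\phi_\ell(x,\fraka)$ is action-independent, \cref{assum:linear} can be rewritten as $\pistar_{\ell,\beta}(a\mid x)\propto\pi_{\ell,\refe}(a\mid x)\,e^{\beta^{-1}\inner{\varphi_\ell(x,a)}{\theta^\star_{\ell,\beta}}}$, so that all three policies $\pistar_{\ell,\beta}$, $\pibar^{\tauind,\star}_{\ell,\beta}$, $\pibar\ind{\tauind}_\ell$ are exponential tilts of $\pi_{\ell,\refe}$ by (possibly truncated) relative features, differing only in which feature ($\varphi_\ell$ versus $\varphibar\ind{\tauind}_\ell$) and which parameter ($\theta^\star_{\ell,\beta}$ versus $\theta\ind{\tauind}_\ell$) they use. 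Abbreviating $Z^\star$, $Z_{\bar\theta^\star}$, $Z_{\bar\theta}$ for the respective normalizers, all density ratios against $\pi_{\ell,\refe}$ become ratios of exponentials and normalizers.

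The core estimate is the lower bound $Z_{\bar\theta^\star}\geq\tfrac12 Z^\star$. This follows by dropping from $Z_{\bar\theta^\star}$ all actions on which truncation fires (their exponents are $0$, hence contribute nonnegatively) and then rewriting the dropped mass under $\pistar_{\ell,\beta}$: since $\pi_{\ell,\refe}(a\mid x)\,e^{\beta^{-1}\inner{\varphi_\ell(x,a)}{\theta^\star_{\ell,\beta}}}=Z^\star\,\pistar_{\ell,\beta}(a\mid x)$, the dropped mass equals $Z^\star\cdot\P_{\a\sim\pistar_{\ell,\beta}(\cdot\mid x)}[\|\varphi_\ell(x,\a)\|^2_{(\Sigma\ind{\tauind}_\ell)^{-1}}>\nu^2]$. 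A change of measure bounds this probability by $\Ccond(\pistar_{\ell,\beta})\cdot\uveps$, and the spanner guarantee \eqref{eq:secondineq} of \cref{lem:test} together with the parameter choice for $\Nspanb$ (which ensures $\uveps\leq 1/(4\Ccond(\pistar_\beta))$, as already used at \eqref{eq:ellip1} in the proof of \cref{lem:main}) makes this at most $\tfrac14$. Given $Z_{\bar\theta^\star}\geq\tfrac12 Z^\star$, I would do a two-case pointwise comparison for each action $a$: if $\|\varphi_\ell(x,a)\|^2_{(\Sigma\ind{\tauind}_\ell)^{-1}}\leq\nu^2$ then $\varphibar\ind{\tauind}_\ell(x,a)=\varphi_\ell(x,a)$ and $\pibar^{\tauind,\star}_{\ell,\beta}(a\mid x)/\pi_{\ell,\refe}(a\mid x)=e^{\beta^{-1}\inner{\varphi_\ell(x,a)}{\theta^\star_{\ell,\beta}}}/Z_{\bar\theta^\star}\leq 2\,\pistar_{\ell,\beta}(a\mid x)/\pi_{\ell,\refe}(a\mid x)$; if truncation fires then $\varphibar\ind{\tauind}_\ell(x,a)=0$ and the ratio is $1/Z_{\bar\theta^\star}\leq 2/Z^\star=2\,\pistar_{\ell,\beta}(\fraka\mid x)/\pi_{\ell,\refe}(\fraka\mid x)$. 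In both cases $\pibar^{\tauind,\star}_{\ell,\beta}(a\mid x)/\pi_{\ell,\refe}(a\mid x)\leq 2\max\{\pistar_{\ell,\beta}(a\mid x),\,\pistar_{\ell,\beta}(\fraka\mid x)\,\pi_{\ell,\refe}(a\mid x)/\pi_{\ell,\refe}(\fraka\mid x)\}/\pi_{\ell,\refe}(a\mid x)$, which collapses to $2\,\pistar_{\ell,\beta}(a\mid x)/\pi_{\ell,\refe}(a\mid x)$ precisely when $\inner{\varphi_\ell(x,a)}{\theta^\star_{\ell,\beta}}=\Qstar_{\ell,\beta}(x,a)-\Qstar_{\ell,\beta}(x,\fraka)\geq 0$ (via \eqref{eq:value_difference}), and in general is already enough for the downstream use in \cref{lem:main}, where only the supremum over $a$ is taken.

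The final step transfers from $\theta^\star_{\ell,\beta}$ to $\theta\ind{\tauind}_\ell$: for every $a$, $|\inner{\varphibar\ind{\tauind}_\ell(x,a)}{\theta\ind{\tauind}_\ell-\theta^\star_{\ell,\beta}}|\leq\|\varphibar\ind{\tauind}_\ell(x,a)\|_{(\Sigma\ind{\tauind}_\ell)^{-1}}\,\|\theta\ind{\tauind}_\ell-\theta^\star_{\ell,\beta}\|_{\Sigma\ind{\tauind}_\ell}\leq\nu\,\|\theta\ind{\tauind}_\ell-\theta^\star_{\ell,\beta}\|_{\Sigma\ind{\tauind}_\ell}$, since the truncated feature always has $(\Sigma\ind{\tauind}_\ell)^{-1}$-norm at most $\nu$. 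Hence both the numerator and the normalizer of $\pibar\ind{\tauind}_\ell$ change by at most a factor $e^{\nu\|\theta\ind{\tauind}_\ell-\theta^\star_{\ell,\beta}\|_{\Sigma\ind{\tauind}_\ell}/\beta}$ relative to $\pibar^{\tauind,\star}_{\ell,\beta}$, giving $\pibar\ind{\tauind}_\ell(a\mid x)/\pi_{\ell,\refe}(a\mid x)\leq e^{2\nu\|\theta\ind{\tauind}_\ell-\theta^\star_{\ell,\beta}\|_{\Sigma\ind{\tauind}_\ell}/\beta}\,\pibar^{\tauind,\star}_{\ell,\beta}(a\mid x)/\pi_{\ell,\refe}(a\mid x)$. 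Combining this with the previous step and absorbing the constant $2$ from the $\pibar^{\tauind,\star}$ bound (costing only another factor $e^{\nu\|\cdot\|/\beta}\geq 1$, comfortably inside $e^{4\nu\|\cdot\|/\beta}$) yields the claimed bound. The main obstacle is exactly the truncated-action case in the pointwise comparison: there $\pibar^{\tauind,\star}_{\ell,\beta}$ retains roughly $\pi_{\ell,\refe}$-sized mass while $\pistar_{\ell,\beta}$ may assign much less, so the comparison cannot be made purely in terms of the pointwise ratio $\pistar_{\ell,\beta}(a\mid x)/\pi_{\ell,\refe}(a\mid x)$ without the $\pistar_{\ell,\beta}(\fraka\mid x)$ term. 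This is resolved by the combination of the spanner guarantee (which forces $Z_{\bar\theta^\star}\geq\tfrac12 Z^\star$) and the fact that the anchor contributes $e^{0}=1$ to $Z^\star$; it is harmless because the estimate is invoked downstream only after a supremum over $a$, yielding $\Ccond(\pibar\ind{\tauind}_\ell\mid x)\vee\Ccond(\pibar^{\tauind,\star}_{\ell,\beta}\mid x)\leq 2e\,\Ccond(\pistar_{\ell,\beta})$ in \cref{lem:main}.
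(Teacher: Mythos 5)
Your argument is, in substance, the paper's: the paper routes both density ratios through \cref{lem:coverability}, whose proof is exactly your computation — restrict the normalizer to untruncated actions, change measure to $\pistar_{\ell,\beta}$ so that the dropped mass is $\P_{\a\sim\pistar_{\ell,\beta}(\cdot\mid x)}[\|\varphi_\ell(x,\a)\|^2_{(\Sigma\ind{\tauind}_\ell)^{-1}}>\nu^2]\le\Ccond(\pistar_{\ell,\beta}\mid x)\cdot\uveps\le 1/4$ via \cref{lem:test}, and transfer $\theta^\star_{\ell,\beta}\to\theta\ind{\tauind}_\ell$ by Cauchy--Schwarz on the truncated features at cost $e^{4\nu\|\theta\ind{\tauind}_\ell-\theta^\star_{\ell,\beta}\|_{\Sigma\ind{\tauind}_\ell}/\beta}$ — and your two-stage factorization ($\pistar_{\ell,\beta}\to\pibar^{\tauind,\star}_{\ell,\beta}\to\pibar\ind{\tauind}_\ell$) versus the paper's single application of \cref{lem:coverability} to each policy is a cosmetic difference. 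The substantive point you flag is correct: for a truncated action $a$ with $\langle\varphi_\ell(x,a),\theta^\star_{\ell,\beta}\rangle<0$, the comparison only yields a bound by twice the ratio at the anchor, $2\,\pistar_{\ell,\beta}(\fraka\mid x)/\pi_{\ell,\refe}(\fraka\mid x)$, not the stated pointwise bound in terms of $\pistar_{\ell,\beta}(a\mid x)/\pi_{\ell,\refe}(a\mid x)$; the paper's proof of \cref{lem:coverability} elides this in its final equality, which silently identifies $e^{\beta^{-1}\varphibar_h(x,a)^\top\theta^\star_{h,\beta}}$ with $e^{\beta^{-1}\varphi_h(x,a)^\top\theta^\star_{h,\beta}}$ (and the truncated normalizer with the untruncated one). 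As you observe, the weaker conclusion $\le 2e^{4\nu\|\cdot\|/\beta}\max\bigl\{\pistar_{\ell,\beta}(a\mid x)/\pi_{\ell,\refe}(a\mid x),\,\pistar_{\ell,\beta}(\fraka\mid x)/\pi_{\ell,\refe}(\fraka\mid x)\bigr\}$ is all that \cref{lem:main} and the proof of \cref{thm:main} consume (only $\Ccond(\cdot\mid x)$, i.e.\ the supremum over $a$, is ever used), so nothing downstream is affected.
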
 
    \begin{proof}[Proof of \cref{lem:helperest}]
    By \cref{lem:test}, we have that for all $\ell\in[h+1 \ldotst H]$ and $(x_h,a_h) \in \cC_h\ind{\tauind}$, 
    \begin{gather}
       \P_{\pihat\ind{\tauind}}\left[\x_\ell \in \cX\ind{\tauind}_{\ell,\spann} \mid \x_h=x_h, \a_{h}=a_{h}\right] \geq 1- \veps_\spann, \label{eq:spann0}
    \intertext{and for all $x\in \cX\ind{\tauind}_{\ell,\spann}$:}
    \P_{\a\sim \pi_{\ell,\refe}(\cdot \mid x)} \left[\|\varphi_\ell(x,\a)\|^2_{(\Sigma\ind{ \tauind}_{\ell})^{-1}} > \nu^2\right] \leq \uveps \leq  \frac{1}{4 \Ccond (\pistar_{\ell,\beta})}, 
    \label{eq:ellip100}
    \end{gather} 
    where the last inequality follows by the fact that $\Nspanb \geq 4 \Ccond(\pistar_\beta)$ (see parameter choices in \cref{alg:ops-dp}). 
    Now, by \cref{lem:coverability} and \cref{lem:test}, we have that for all $\ell\in[h+1\ldotst H]$ and $(x,a)\in \cX\ind{\tauind}_{\ell,\spann} \times \cA$:
 \begin{align} 
& \frac{\pibar\ind{ \tauind}_\ell(a\mid x)}{\pi_{\ell,\refe}(a\mid x)} \vee \frac{\pibar^{\tauind,\star}_{\ell,\beta}(a\mid x)}{\pi_{\ell,\refe}(a\mid x)}\nn \\ & \leq \min \left\{\frac{e^{\frac{4\nu}{\beta}\|\theta\ind{ \tauind}_\ell - \theta^\star_{\ell,\beta}\|_{\Sigma\ind{\tauind}_\ell}}}{1- \Ccond(\pistar_{\ell,\beta}) \cdot \P_{\a\sim \pi_{\ell,\refe}(\cdot \mid x)}\left[\|\varphi_\ell(x,\a)\|^2_{(\Sigma_\ell\ind{ \tauind})^{-1}}>\nu^2\right] }  \cdot  \frac{\pistar_{\ell,\beta}(a\mid x)}{\pi_{\ell,\refe}(a\mid x)},\ e^{2 B /\beta}\right\}, \nn \\
& \leq 2 e^{\frac{4\nu }{\beta}\|\theta\ind{ \tauind}_\ell - \theta^\star_{\ell,\beta}\|_{\Sigma\ind{ \tauind}_\ell}}  \cdot  \frac{\pistar_{\ell,\beta}(a\mid x)}{\pi_{\ell,\refe}(a\mid x)},
 \end{align}
 where the last step follows by \eqref{eq:ellip100}.
\end{proof}

\begin{proof}[Proof of \cref{thm:main}]
    In this proof, we let $\cE^\reg$, $\cE^\spann$, and $\cJ^\spann$ be as in \cref{lem:fitval}, \cref{lem:uncertaintraj}, and \cref{lem:test}, respectively, and we condition on $\cE^\reg \cap \cE^\spann$. Further, let $\tauindl\in\brk{T}$ be the index \cref{alg:ops-dp} for which the algorithm returns $\pihat\ind{j}$, and note that $\tauindl\in\cJ^\spann$. 
    
    By the performance difference lemma (\cref{lem:perform}) and \cref{lem:wonky}, the policy $\pihat\ind{\tauindl}$ satisfies
    \begin{align}
       & J_\beta(\pistar_\beta) - J_\beta(\pihat\ind{\tauindl}_{1:H}) \nn \\ 
       & = \sum_{h=1}^H \E_{\pihat\ind{\tauindl}}\left[\sum_{a\in \cA} \pistar_{h,\beta}(a\mid \x_h)\cdot \left(\Qstar_{h,\beta}(\x_h,a)- \beta \cdot \log \frac{\pistar_{h,\beta}(a\mid \x_h)}{\pi_{h,\refe}(a\mid \x_h)} \right)\right]\nn \\
        & \quad  - \sum_{h=1}^H \E_{\pihat\ind{\tauindl}}\left[\sum_{a\in \cA} \pihat\ind{\tauindl}_h(a\mid \x_h)\cdot \left( \Qstar_{h,\beta}(\x_h,a) - \beta \cdot \log \frac{\pihat\ind{\tauindl}_h(a\mid \x_h)}{\pi_{h,\refe}(a\mid \x_h)} \right) \right],\nn \\
        & = \sum_{h=1}^H \E_{\pihat\ind{\tauindl}}\left[\sum_{a\in \cA} \pistar_{h,\beta}(a\mid \x_h)\cdot \left(\Qstar_{h,\beta}(\x_h,a)- \beta \cdot \log \frac{\pistar_{h,\beta}(a\mid \x_h)}{\pi_{h,\refe}(a\mid \x_h)} \right)\right]\nn \\
        & \quad  - \sum_{h=1}^H \E_{\pihat\ind{\tauindl}}\left[\sum_{a\in \cA} \pihat\ind{\tauindl}_h(a\mid \x_h)\cdot \left( \Qstar_{h,\beta}(\x_h,a) - \beta \cdot \log \frac{\pibar\ind{\tauindl}_h(a\mid \x_h)}{\pi_{h,\refe}(a\mid \x_h)} \right) \right]\nn \\
        & \quad +\sum_{\ell=1}^H  \beta\cdot \E_{\pihat\ind{\tauindl}}\left[\kl{\pihat\ind{\tauindl}_\ell(\cdot \mid \x_\ell)}{\pibar\ind{\tauindl}_\ell(\cdot \mid \x_\ell)}\right], \label{eq:goback}
    \end{align}
    where $\pibar\ind{\tauindl}$ is as in \cref{lem:fitval}. Now by \cref{lem:innerKL_first} (instantiated with $h =0$, $\pihat_\theta= \pihat\ind{j}$, and $\pibar_\theta= \pibar\ind{j}$), we can bound the KL term in \eqref{eq:goback} as follows: for all $\ell\in [H]$,
    \begin{align}
    & \E_{\pihat\ind{\tauindl}}\left[\kl{\pihat\ind{\tauindl}_\ell(\cdot \mid \x_\ell)}{\pibar\ind{\tauindl}_\ell(\cdot \mid \x_\ell)}\right]\nn \\ &\leq  4\prn*{\frac{\Rmax}{\beta} +\log(4 \Mrej \log(4 \deltarej^{-1}))} \deltarej \nn \\
        & \quad  + \frac{\Rmax}{\beta} \log(4 \Mrej \log(4 \deltarej^{-1})) \cdot \P_{\pihat\ind{j}}\left[ \Mrej < 4\Ccond(\pibar_{\ell}\ind{\tauind}\mid \x_\ell) \right]. \label{eq:klterm_pre}
        \end{align}
    Now, note that for all $\ell\in[H]$, \[\left|Q_{\ell,\beta}^{\star}(\cdot,\cdot) - \beta \cdot \log \frac{\pibar\ind{\tauindl}_\ell(\cdot\mid \cdot)}{\pi_{\ell,\refe}(\cdot \mid \cdot)}\right| \leq  BH + 2 H \beta \sup_{(x,a)\in \cX\times \cA}\left|\log \frac{\pibar_{\ell,\theta}(a \mid x)}{\pi_{\ell,\refe}(a\mid x)}\right| \leq 5BH,\] since $e^{-2 B/\beta}\leq\frac{\pibar_{\ell,\theta}(\cdot \mid \cdot)}{\pi_{\ell,\refe}(\cdot\mid \cdot)}\leq e^{2B/\beta}$ for all $\theta_\ell\in \bbB(B)$. Thus, by \cref{lem:rejection_tv_average_new}, we have that for all $\ell \in [H]$:
\begin{align}
&\left|\E_{\pihat\ind{\tauindl}}\left[\sum_{a\in \cA} \pihat\ind{\tauindl}_\ell(a\mid \x_\ell)\cdot \left( Q_{\ell,\beta}^{\star}(\x_\ell,a) - \beta \cdot \log \frac{\pibar\ind{\tauindl}_\ell(a\mid \x_\ell)}{\pi_{\ell,\refe}(a\mid \x_\ell)} \right) \right]\right.\nn \\
& \quad  \left. -  \E_{\pihat\ind{\tauindl}}\left[\sum_{a\in \cA} \pibar\ind{\tauindl}_\ell(a\mid \x_\ell)\cdot \left(Q_{\ell,\beta}^{\star}(\x_\ell,a) - \beta \cdot \log \frac{\pibar\ind{\tauindl}_\ell(a\mid \x_\ell)}{\pi_{\ell,\refe}(a\mid \x_\ell)} \right) \right] \right|\nn \\
& \leq 5 B H \deltarej + 5 BH \cdot \P_{\pihat\ind{\tauindl}}\left[ \Mrej <  4 \Ccond(\pibar\ind{\tauind}_\ell\mid x)  \right].\label{eq:offday}
\end{align}
Combining this with \eqref{eq:goback} and \eqref{eq:klterm_pre}, we get that
\begin{align}
   & J_\beta(\pistar_\beta)- J_\beta(\pihat\ind{\tauindl}_{1:H})\nn \\
  & \leq   \sum_{h=1}^H \E_{\pihat\ind{\tauindl}}\left[\sum_{a\in \cA} \pistar_{h,\beta}(a\mid \x_h)\cdot \left(\Qstar_{h,\beta}(\x_h,a)- \beta \cdot \log \frac{\pistar_{h,\beta}(a\mid \x_h)}{\pi_{h,\refe}(a\mid \x_h)} \right)\right]\nn \\
        & \quad  - \sum_{h=1}^H \E_{\pihat\ind{\tauindl}}\left[\sum_{a\in \cA} \pibar\ind{\tauindl}_h(a\mid \x_h)\cdot \left( \Qstar_{h,\beta}(\x_h,a) - \beta \cdot \log \frac{\pibar\ind{\tauindl}_h(a\mid \x_h)}{\pi_{h,\refe}(a\mid \x_h)} \right) \right] \nn \\
        & \quad +  4 H\prn*{\Rmax + 5 H B/4 + \beta \log(4 \Mrej \log(4 \deltarej^{-1}))}\cdot  \deltarej \nn \\
        & \quad  +  B \left( \log(4 \Mrej \log(4 \deltarej^{-1})) + 5 H  \right) \sum_{\ell=1}^H \P_{\pihat\ind{j}}\left[ \Mrej < 4\Ccond(\pibar_{\ell}\ind{\tauind}\mid \x_\ell)\right] ,
        \intertext{and so by \cref{lem:truncation}, we have
        }
        & \leq \beta \sum_{\ell=1}^H \E_{\pihat\ind{\tauindl}}\left[\kl{\pibar\ind{\tauindl}_{\ell}(\cdot \mid \x_\ell)}{\pibar^{\tauindl,\star}_{\ell,\beta}(\cdot \mid \x_\ell)} \right] \nn \\
        & \quad +  4 H\prn*{\Rmax + 5 H B/4 + \beta \log(4 \Mrej \log(4 \deltarej^{-1}))} \cdot \deltarej \nn \\
        & \quad  +  B \left( \log(4 \Mrej \log(4 \deltarej^{-1})) + 5 H  \right) \sum_{\ell=1}^H \P_{\pihat\ind{j}}\left[ \Mrej < 4\Ccond(\pibar_{\ell}\ind{\tauind}\mid \x_\ell)\right], \label{eq:stepone}
    \end{align} 
    where $\pibar^{\tauindl,\star}_{\ell,\beta}$ is as in \cref{lem:fitval}.
    We start with bounding the KL terms in \cref{eq:stepone}.

\paragraphi{Bounding the KL term} 
We now bound the KL term in \eqref{eq:stepone}. The argument closely follows the proof of \cref{lem:main}, where we bounded the expectation of the squared KL term. The key difference here is that \eqref{eq:stepone} does not include a squared term.
By \cref{lem:KLbound} and \cref{lem:main} (which implies that $\|\theta^\star_{\ell,\beta} - \theta\ind{\tauindl}_\ell\|_{\Sigma\ind{\tauindl}_\ell}\leq\beta/\nu$---the precondition of \cref{lem:KLbound}), we have that for all $\ell \in [H]$: 
\begin{align}
    & \E_{\pihat\ind{ \tauindl}}\left[\kl{\pibar\ind{\tauindl}_{\ell}(\cdot \mid \x_\ell)}{\pibar^{\tauindl,\star}_{\ell,\beta}(\cdot \mid \x_\ell)}\right] \nn \\
	& \leq \E_{\pihat\ind{\tauindl}}\left[ \beta^{-1}\sum_{a\in \cA} \pibar\ind{ \tauindl}_{\ell}(a \mid \x_\ell)\cdot  \|\varphibar\ind{\tauindl}_\ell(\x_\ell,a)\|^2_{(\Sigma_\ell\ind{ \tauindl})^{-1}} \right]  \cdot \|\theta^\star_{\ell,\beta} - \theta\ind{\tauindl}_\ell\|^2_{\Sigma\ind{\tauindl}_\ell}, \label{eq:toplugging0} 
\end{align}
where $\varphibar\ind{\tauindl}$ is as in \cref{lem:fitval}. Now by \cref{lem:rejection_tv_average_new}, we have that for any $x\in \cX$ and $\ell \in [H]$: 
    \begin{align}
    & \left|\sum_{a\in \cA} \pibar\ind{\tauindl}_{\ell}(a \mid x)\cdot  \|\varphibar\ind{\tauindl}_\ell(x,a)\|^2_{(\Sigma_\ell\ind{ \tauindl})^{-1}}  - \sum_{a\in \cA} \pihat\ind{\tauindl}_{\ell}(a \mid x)\cdot  \|\varphibar\ind{\tauindl}_\ell(x,a)\|^2_{(\Sigma_\ell\ind{ \tauindl})^{-1}} \right|\nn \\
    & \leq \frac{4}{\lambda} \deltarej + \frac{4}{\lambda}  \cdot \mathbb{I}\left\{ \Mrej < 4\Ccond(\pibar\ind{\tauind}_{\ell}\mid x)\right\}.\nn 
    \end{align}
    Plugging this into \eqref{eq:toplugging0} and using Jensen inequality, we get that for all $\ell \in [H]$:
        \begin{align} 
    & \E_{\pihat\ind{ \tauindl}}\left[\kl{\pibar\ind{\tauindl}_{\ell}(\cdot \mid \x_\ell)}{\pibar^{\tauindl,\star}_{\ell,\beta}(\cdot \mid \x_\ell)} \right] \nn \\
        & \leq  \frac{1}{\beta}\cdot  \E_{\pihat\ind{\tauindl}}\left[\|\varphibar\ind{\tauindl}_\ell(\x_\ell,\a_\ell)\|^2_{(\Sigma_\ell\ind{ \tauindl})^{-1}}\right]    \cdot \|\theta^\star_{\ell,\beta} - \theta\ind{\tauindl}_\ell\|^2_{\Sigma\ind{\tauindl}_\ell}\nn \\
        & \quad + \left(\frac{4 \deltarej}{\lambda\beta} +  \frac{4}{\beta \lambda}\P_{\pihat\ind{\tauindl}}\left[\Mrej < 4\Ccond(\pibar\ind{\tauind}_{\ell}\mid \x_\ell)\right] \right)  \cdot \|\theta^\star_{\ell,\beta} - \theta\ind{\tauindl}_\ell\|^2_{\Sigma\ind{\tauindl}_\ell},\nn \\
        & \leq \frac{\nu^2}{\beta} + \frac{4}{\lambda \beta} \cdot \P_{\pihat\ind{\tauind}}\left[\| \varphi_\ell(\x_\ell,\a_\ell)\|^2_{(\Sigma\ind{\tauind}_\ell)^{-1}}>\nu^2  \right] \cdot  \|\theta\ind{\tauindl}_\ell - \theta^\star_{\ell,\beta}\|^2_{\Sigma\ind{ \tauindl}_\ell} \nn \\
        & \quad + \left(\frac{4 \deltarej}{\lambda\beta} +  \frac{4}{\beta \lambda}\P_{\pihat\ind{\tauindl}}\left[\Mrej < 4\Ccond(\pibar\ind{\tauind}_{\ell}\mid \x_\ell)\right] \right)  \cdot \|\theta^\star_{\ell,\beta} - \theta\ind{\tauindl}_\ell\|^2_{\Sigma\ind{\tauindl}_\ell},\nn 
    \end{align}
    where the last inequality follows by the fact that $\sigma_{\min}(\Sigma\ind{ \tauindl}_\ell) \geq \lambda$, and $\|\varphi_\ell(x,a)\|\leq 2$, for all $\ell\in[H],(x,a)\in \cX\times \cA$ (follows by \cref{assum:linear}).
    And so, by \cref{lem:test} and \cref{lem:main}, we have that for all $\ell \in [H]$:
    \begin{align}
          &\E_{\pibar\ind{ \tauindl}}\left[\kl{\pibar\ind{ \tauindl}_{\ell}(\cdot \mid \x_\ell)}{\pibar^{\tauindl,\star}_{\ell,\beta}(\cdot \mid \x_\ell)}\right] \nn \\ 
    & \leq  \left(\lambda \nu^2 + 4 \vepslip +4 \deltarej +  4\P_{\pihat\ind{\tauindl}}\left[\Mrej < 4\Ccond(\pibar\ind{\tauind}_{\ell}\mid \x_\ell)\right]\right) \cdot \frac{\veps_\reg^2}{\beta \lambda}. \label{eq:klbound0}
    \end{align}
    We now bound the distribution shift terms $(\P_{\pihat\ind{\tauindl}}\left[\Mrej < 4\Ccond(\pibar\ind{\tauind}_{\ell}\mid \cdot)\right])_\ell$ in \eqref{eq:klbound0} and on the right-hand side of \eqref{eq:stepone}.

    \paragraphi{Bounding the distribution shift terms} Let $\cX\ind{\tauindl}_{\ell,\spann}$ and $(\vepslip, \uveps)$ be as in \cref{lem:test} and \cref{lem:uncertaintraj}, respectively. By \cref{lem:test}, we have that for all $\ell\in[H]$ and $(x_0,a_0)\in \cX \times \cA$, 
\begin{gather}
    \P_{\pihat\ind{ \tauindl}}\left[\x_\ell \in \cX\ind{\tauindl}_{\ell,\spann}\right]=  \P_{\pihat\ind{ \tauindl}}\left[\x_\ell \in \cX\ind{\tauindl}_{\ell,\spann} \mid \x_0=x_0,\a_0=a_0\right] \geq 1- \veps_\spann, \label{eq:spann0}
\shortintertext{where $\uveps$ is as in \cref{lem:uncertaintraj}, and for all $x\in \cX\ind{\tauindl}_{\ell,\spann}$:}
   \P_{\a\sim \pi_{\ell,\refe}(\cdot \mid x)} \left[\|\varphi_\ell(x,\a)\|^2_{(\Sigma\ind{ \tauindl}_{\ell})^{-1}} > \nu^2\right] \leq \uveps \leq  \frac{1}{{4\Ccond (\pistar_{\ell,\beta})}}, \label{eq:ellip10}
\end{gather} 
where the last inequality follows by the fact that $\Nspanb \geq 4 \Ccond(\pistar_\beta)$ (see parameter choices in \cref{alg:ops-dp}). On the other hand, by \cref{lem:helperest}, we have that for all $\ell\in [H]$ and $(x,a)\in \cX\ind{\tauind}_{\ell,\spann}\times \cA$: 
\begin{align}
    \frac{\pibar\ind{ \tauind}_\ell(a\mid x)}{\pi_{\ell,\refe}(a\mid x)} \vee \frac{\pibar^{\tauind,\star}_{\ell,\beta}(a\mid x)}{\pi_{\ell,\refe}(a\mid x)}& \leq    2 e^{\frac{4\nu }{\beta}\|\theta\ind{ \tauind}_\ell - \theta^\star_{\ell,\beta}\|_{\Sigma\ind{ \tauind}_\ell}}  \cdot  \frac{\pistar_{\ell,\beta}(a\mid x)}{\pi_{\ell,\refe}(a\mid x)}, \nn \\
& \leq 2 e  \frac{\pistar_{\ell,\beta}(a\mid x)}{\pi_{\ell,\refe}(a\mid x)},  \label{eq:soak2} 
 \end{align}
 where the last step follows by \cref{lem:main} and the fact that $\veps_\reg \leq \beta/(4\nu)$ (see choice of $\nu$ in \cref{alg:ops-dp}).
 Therefore, we have that for all $\ell\in[H]$, $x\in \cX\ind{\tauindl}_{\ell,\spann}$, and $\pi \in\{\pibar\ind{ \tauindl}_{\ell}, \pibar^{\tauindl,\star}_{\ell,\beta}\}$:
 \begin{align}
\Ccond(\pi\mid x) \leq 2 e \Ccond (\pistar_{\ell,\beta}). \label{eq:ccov0}
 \end{align}
Now, by \eqref{eq:soak2} and the fact that $\Mrej \geq  8 e  \Ccond(\pistar_{\ell,\beta})$ (see \sssref{app:params}), we have that for all $\ell\in [H]$: $x \in \cX\ind{\tauind}_{\ell,\spann}$ only if $\Mrej \geq 4\Ccond(\pibar\ind{j}_\ell \mid x)$ and so by \eqref{eq:ellip1}, we have for all $\ell\in [H]$:
\begin{align}
    1 - \veps_\spann  & \leq  \P_{\pihat\ind{ \tauind}}\left[\x_\ell \in \cX\ind{\tauind}_{\ell,\spann} \right]   \leq  \P_{\pihat\ind{ \tauind}}\left[\Mrej \geq  4\Ccond(\pibar\ind{j}_\ell \mid \x_\ell) \right]. \nn 
\end{align}
This implies that for all $\ell\in [H]$:
\begin{align}
    \P_{\pihat\ind{ \tauind}}\left[\Mrej <  4\Ccond(\pibar\ind{\tauind}_{\ell}\mid \x_\ell)\right] & \leq \veps_\spann.  \label{eq:ccov00}
\end{align}
\paragraph{Putting it all together}
Combining \eqref{eq:ccov00} with \eqref{eq:klbound0} and \eqref{eq:stepone}, we get that
\begin{align}
    & J_\beta(\pistar_\beta) - J_\beta(\pihat\ind{\tauindl}_{1:H}) \nn \\
    & \leq  \sum_{\ell=1}^H\left(\lambda \nu^2 + 4 \vepslip +4 \deltarej +  4\P_{\pihat\ind{\tauindl}}\left[\Mrej < 4\Ccond(\pibar\ind{\tauind}_{\ell}\mid \x_\ell)\right]\right) \cdot \frac{\veps_\reg^2}{\lambda} \nn \\
    & \quad +  4 H\prn*{\Rmax + 5 H B/4 + \beta \log(4 \Mrej \log(4 \deltarej^{-1}))} \cdot \deltarej \nn \\
    & \quad  +  B \left( \log(4 \Mrej \log(4 \deltarej^{-1})) + 5 H  \right) \sum_{\ell=1}^H \P_{\pihat\ind{j}}\left[ \Mrej < 4\Ccond(\pibar_{\ell}\ind{\tauind}\mid \x_\ell)\right], \nn  \\
    &  \leq  H \cdot \left(\lambda \nu^2 + 4 \vepslip +4 \deltarej +  4\veps_\spann \right) \cdot \frac{\veps_\reg^2}{\lambda} \nn \\
    & \quad +  4 H\prn*{\Rmax + 5 H B/4 + \beta \log(4 \Mrej \log(4 \deltarej^{-1}))} \cdot \deltarej \nn \\
    & \quad  +  B \left( \log(4 \Mrej \log(4 \deltarej^{-1})) + 5 H  \right) H \veps_\spann, \nn \\
    & \leq \veps,\label{eq:try}
\end{align}
where the last inequality follows by the choice of parameters in \cref{alg:ops-dp}.
Combining this with the fact that $\P[\cE^\spann\cap \cE^\reg]\geq 1 -\delta$ (by \cref{lem:fitval} and \cref{lem:test} and the union bound) completes the proof. 
\end{proof}

 \newpage
\section{Technical Lemmas for Multi-Turn Exploration}
\label{sec:technical_rl}
In this section, we present and prove the technical results required for the proofs in the Multi-turn setting. Some of the statements provided here are generalizations of those in \cref{app:algorithms}, originally formulated for the contextual bandit setting.
\begin{lemma}[KL bound for truncated softmax policies]
	\label{lem:KLbound}
	Let $h\in[H]$, $B,\nu>0$, $(x,\fraka)\in \cX\times \cA$, $\theta_h \in \bbB(B)$, and $\Sigma_h \in \reals^{d\times d}$ be given, and let $\varphi_h(x,\cdot)\coloneqq \phi_h(x,\cdot)- \phi_h(x,\fraka)$. Further, define $\pibar_{h,\theta}(\cdot\mid x) \propto \pi_{h,\refe}(\cdot\mid x) \cdot e^{\beta^{-1}\varphibar_h(x,\cdot)^\top \theta}$ and $\pibar^\star_{h,\beta}(\cdot\mid x) \propto \pi_{h,\refe}(\cdot\mid x) \cdot e^{\beta^{-1} \varphibar_h(x,\cdot)^\top \theta^\star_{h,\beta}}$, where $\varphibar_{h}(x,\cdot) \coloneqq \varphi_h(x,\cdot) \cdot \mathbb{I}\left\{\|\varphi_h(x,\cdot)\|^2_{\Sigma^{-1}_h} \leq \nu^2\right\}$. If $\|\theta^\star_{h,\beta}- \theta_h\|_{\Sigma_h}\leq \beta/\nu$, then we have that
	\begin{align}
		\kl{\pibar_{h,\theta}(\cdot \mid x)}{ \pibar^\star_{h,\beta} (\cdot \mid x)} \leq  \beta^{-1}\E_{\a \sim \pibar_{h,\theta}(\cdot \mid x)}\left[ \|\varphibar_h(x,\a)\|^2_{\Sigma_h^{-1}}\right] \cdot \|\theta^\star_{h,\beta} - \theta_h\|^2_{\Sigma_h}.\nn 
	\end{align}
\end{lemma}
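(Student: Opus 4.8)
The plan is to compute the KL divergence between the two truncated softmax policies directly, exploiting the fact that both are exponential-family (Gibbs) distributions built on the same base measure $\pi_{h,\refe}(\cdot\mid x)$ with the same truncated feature map $\varphibar_h(x,\cdot)$. Writing $\psi(a) \coloneqq \varphibar_h(x,a)$ and $\Delta \coloneqq \theta_h - \theta^\star_{h,\beta}$, we have $\pibar_{h,\theta}(a\mid x) = \pi_{h,\refe}(a\mid x) e^{\beta^{-1}\inner{\psi(a)}{\theta_h}} / Z(\theta_h)$ and similarly for $\pibar^\star_{h,\beta}$, where $Z(\vartheta) = \sum_{a'} \pi_{h,\refe}(a'\mid x) e^{\beta^{-1}\inner{\psi(a')}{\vartheta}}$. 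Plugging into the definition of KL gives
\[
\kl{\pibar_{h,\theta}(\cdot\mid x)}{\pibar^\star_{h,\beta}(\cdot\mid x)} = \beta^{-1}\E_{\a\sim\pibar_{h,\theta}(\cdot\mid x)}\brk*{\inner{\psi(\a)}{\Delta}} - \log\frac{Z(\theta_h)}{Z(\theta^\star_{h,\beta})}.
\]
The first step is therefore to establish this identity by a one-line calculation, then bound the log-partition-ratio term from below.

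The key estimate is a lower bound on $\log Z(\theta_h) - \log Z(\theta^\star_{h,\beta})$. I would write $Z(\theta_h)/Z(\theta^\star_{h,\beta}) = \E_{\a\sim\pibar^\star_{h,\beta}(\cdot\mid x)}[e^{\beta^{-1}\inner{\psi(\a)}{\Delta}}]$ and then use the inequality $e^z \geq 1 + z$ to get $\log(Z(\theta_h)/Z(\theta^\star_{h,\beta})) \geq \beta^{-1}\E_{\a\sim\pibar^\star_{h,\beta}}[\inner{\psi(\a)}{\Delta}]$. Substituting into the KL identity yields
\[
\kl{\pibar_{h,\theta}(\cdot\mid x)}{\pibar^\star_{h,\beta}(\cdot\mid x)} \leq \beta^{-1}\prn*{\E_{\a\sim\pibar_{h,\theta}(\cdot\mid x)}\brk*{\inner{\psi(\a)}{\Delta}} - \E_{\a\sim\pibar^\star_{h,\beta}(\cdot\mid x)}\brk*{\inner{\psi(\a)}{\Delta}}}.
\]
Now the right-hand side is a difference of expectations of the \emph{same} linear functional $a \mapsto \inner{\psi(a)}{\Delta}$ under the two policies, so I would bound it by $\beta^{-1} \Dtv{\pibar_{h,\theta}(\cdot\mid x)}{\pibar^\star_{h,\beta}(\cdot\mid x)} \cdot 2\sup_a|\inner{\psi(a)}{\Delta}|$, and further by Cauchy--Schwarz $|\inner{\psi(a)}{\Delta}| \leq \|\psi(a)\|_{\Sigma_h^{-1}}\|\Delta\|_{\Sigma_h} \leq \nu\cdot(\beta/\nu) = \beta$ using the truncation (so $\|\psi(a)\|_{\Sigma_h^{-1}}\leq\nu$) together with the hypothesis $\|\Delta\|_{\Sigma_h}\leq\beta/\nu$. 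This is not quite the target form, though, which has $\E_{\a\sim\pibar_{h,\theta}}[\|\psi(\a)\|^2_{\Sigma_h^{-1}}]$ rather than a TV term; so a slightly different route is needed.

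The cleaner route — and the one I expect to actually use — is to note that, since both policies are Gibbs measures with the same features, the map $\vartheta \mapsto \log Z(\vartheta)$ is convex, hence $\log Z(\theta^\star_{h,\beta}) \geq \log Z(\theta_h) + \beta^{-1}\inner{\E_{\a\sim\pibar_{h,\theta}(\cdot\mid x)}[\psi(\a)]}{\theta^\star_{h,\beta}-\theta_h}$ (the gradient of $\log Z$ at $\theta_h$ is $\beta^{-1}\E_{\pibar_{h,\theta}}[\psi]$). Rearranging gives $\log(Z(\theta_h)/Z(\theta^\star_{h,\beta})) \geq -\beta^{-1}\inner{\E_{\pibar_{h,\theta}}[\psi(\a)]}{\Delta} = \beta^{-1}\E_{\pibar_{h,\theta}}[\inner{\psi(\a)}{\theta^\star_{h,\beta}-\theta_h}]$; substituting into the KL identity collapses the linear terms exactly and we instead pick up the \emph{second-order} remainder. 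Concretely, one writes $\log Z(\theta^\star_{h,\beta}) - \log Z(\theta_h) - \beta^{-1}\inner{\E_{\pibar_{h,\theta}}[\psi]}{\Delta}$ as an integral of the Hessian of $\log Z$, which is $\beta^{-2}$ times a covariance matrix of $\psi$ under an interpolating Gibbs measure, and bounds the resulting quadratic form $\|\Delta\|^2$-style by $\beta^{-2}\sup\|\psi(a)\|_{\Sigma_h^{-1}}^2\|\Delta\|_{\Sigma_h}^2$; combined with the truncation this gives exactly the claimed bound $\beta^{-1}\E_{\a\sim\pibar_{h,\theta}(\cdot\mid x)}[\|\varphibar_h(x,\a)\|^2_{\Sigma_h^{-1}}]\cdot\|\theta^\star_{h,\beta}-\theta_h\|^2_{\Sigma_h}$ once one is careful to express the variance bound in terms of the expectation under $\pibar_{h,\theta}$ specifically (this may use that the expected squared feature norm is what controls the curvature, together with the hypothesis to absorb constants). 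The main obstacle is this last bookkeeping step: getting the curvature/remainder term to land with the expectation taken under $\pibar_{h,\theta}$ (not under an interpolating measure or under $\pi_{h,\refe}$), and making sure the constant is exactly $1$ rather than, say, $2$ — this likely requires the quantitative input $\|\theta^\star_{h,\beta}-\theta_h\|_{\Sigma_h}\leq\beta/\nu$ to control how far the interpolating measures drift from $\pibar_{h,\theta}$, analogous to the $e^z \leq 1+z+z^2$ trick used in the proof of \cref{lem:truncated_regret}.
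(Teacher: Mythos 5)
Your starting point coincides with the paper's: both policies are Gibbs measures over the same base $\pi_{h,\refe}(\cdot\mid x)$ with the same truncated feature map, so $\kl{\pibar_{h,\theta}(\cdot\mid x)}{\pibar^\star_{h,\beta}(\cdot\mid x)}$ decomposes into a linear term plus a log-partition ratio, and the content of the lemma is that the first-order parts cancel while the second-order remainder is controlled by $\|\varphibar_h(x,\a)\|_{\Sigma_h^{-1}}\leq\nu$ together with $\|\theta_h-\theta^\star_{h,\beta}\|_{\Sigma_h}\leq\beta/\nu$, which via H\"older give $|\langle\varphibar_h(x,\a),\theta_h-\theta^\star_{h,\beta}\rangle|\leq\beta$. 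Where you diverge is the second-order step. The paper writes $\bar Z_{h,\theta^\star_{h,\beta}}(x)/\bar Z_{h,\theta_h}(x)=\E_{\a\sim\pibar_{h,\theta}(\cdot\mid x)}\brk{\exp(\beta^{-1}\langle\varphibar_h(x,\a),\theta^\star_{h,\beta}-\theta_h\rangle)}$ and applies $e^z\leq 1+z+z^2$ (valid since $|z|\leq 1$ by the H\"older bound) followed by $\log(1+u)\leq u$; the linear term cancels exactly and the quadratic term is already an expectation under $\pibar_{h,\theta}$, so no change of measure is needed. Your integral-of-the-Hessian route is sound in principle but strictly lossier: the Hessian of the log-partition function at $\theta_h+t(\theta^\star_{h,\beta}-\theta_h)$ is $\beta^{-2}$ times a covariance under the \emph{interpolating} Gibbs measure, and converting that to an expectation under $\pibar_{h,\theta}$ via the density ratio (which the same H\"older estimate bounds by $e^{2t}\leq e^2$) necessarily costs a universal constant factor. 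So that route proves the lemma only up to constants, not with the constant stated — precisely the bookkeeping obstacle you flag. The fix is the one you gesture at in your last sentence: drop the Taylor-remainder formulation and apply $e^z\leq 1+z+z^2$ directly inside the moment generating function under $\pibar_{h,\theta}$, which is exactly what the paper does.

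Two further points. First, your displayed convexity inequality is reversed: convexity of $\log Z$ gives $\log\prn{Z(\theta_h)/Z(\theta^\star_{h,\beta})}\leq\beta^{-1}\langle\E_{\a\sim\pibar_{h,\theta}(\cdot\mid x)}[\varphibar_h(x,\a)],\theta_h-\theta^\star_{h,\beta}\rangle$, not $\geq$. This does not derail your plan, since what you actually invoke afterwards is the exact second-order identity (the KL here is the Bregman divergence of $\log Z$), but as written that line is false. Second, on your worry about "making sure the constant is exactly $1$": the paper's own first display equates the KL divergence with $\beta\log(\bar Z_{h,\theta^\star_{h,\beta}}/\bar Z_{h,\theta_h})+\E_{\a\sim\pibar_{h,\theta}(\cdot\mid x)}[\varphibar_h(x,\a)^\top(\theta_h-\theta^\star_{h,\beta})]$, which is $\beta$ times the KL divergence; read literally, the paper's argument therefore yields $\kl{\pibar_{h,\theta}(\cdot\mid x)}{\pibar^\star_{h,\beta}(\cdot\mid x)}\leq\beta^{-2}\E_{\a\sim\pibar_{h,\theta}(\cdot\mid x)}\brk{\langle\varphibar_h(x,\a),\theta_h-\theta^\star_{h,\beta}\rangle^2}$ rather than the stated $\beta^{-1}$ bound. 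You should not expect either route to land on the stated constant without accounting for that extra factor of $\beta$.
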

\begin{proof}[\pfref{lem:KLbound}]
	Let $h\in[H]$, $B,\nu>0$, $(x,\fraka)\in \cX\times \cA$, $\theta_h \in \bbB(B)$, and $\Sigma_h \in \reals^{d\times d}$ be as in the lemma statement. We have for $\widebar{Z}_{h,\theta}(x) \coloneqq \E_{\a \sim \pi_{h,\refe}(\cdot \mid x)} \left[\exp(\beta^{-1} \varphibar_h(x,\a)^\top \theta) \right]$:
	\begin{align}
		&\kl{\pibar_{h,\theta}(\cdot \mid x)}{ \pibar^\star_{h,\beta} (\cdot \mid x)}\nn \\ & = \beta \log \frac{\widebar{Z}_{h,\theta^\star_{h,\beta}}(x)}{\widebar{Z}_{h,\theta_{h}}(x)} +  \E_{\a\sim \pibar_{h,\theta}(\cdot \mid x)}\left[\varphibar_h(x,\a)^\top (\theta_h -  \theta^\star_{h,\beta})\right],\nn \\
		& = \beta \log \left(\E_{\a \sim \pibar_{h,\theta}(\cdot \mid x)} \left[\exp(\beta^{-1} \varphibar_h(x,\a)^\top (\theta^\star_{h,\beta} - \theta_h)) \right]  \right) +  \E_{\a\sim \pibar_{h,\theta}(\cdot \mid x)}\left[\varphibar_h(x,\a)^\top (\theta_h -  \theta^\star_{h,\beta})\right]. \label{eq:desiredres}
	\end{align}
	Now, by H\"older's inequality, we have that 
	\begin{align}
		\left|\inner{\varphibar_h(x,\a)}{\theta^\star_{h,\beta} - \theta_h}\right| &  \leq \|\varphibar_h(x,\a)\|_{\Sigma_h^{-1}} \cdot \|\theta^\star_{h,\beta} - \theta_h\|_{\Sigma_h}, \nn \\
		& \leq \nu \cdot \|\theta^\star_{h,\beta} - \theta_h\|_{\Sigma_h}, \nn \\
		& \leq \beta, \label{eq:less}
	\end{align}
	where the last inequality follows by the assumption made on $\|\theta^\star_{h,\beta} - \theta_h\|_{\Sigma_h}$. Combining \eqref{eq:less} with the fact that $e^x \leq 1 + x + x^2$, for all $x\leq 1$, we get that 
	\begin{align}
	&\beta \log \left(\E_{\a \sim \pibar_{h,\theta}(\cdot \mid x)} \left[\exp(\beta^{-1} \varphibar_h(x,\a)^\top (\theta^\star_{h,\beta} - \theta_h)) \right] \right) \nn \\
	& \leq  \E_{\a \sim \pibar_{h,\theta}(\cdot \mid x)} \left[ \inner{\varphibar_h(x,\a)}{\theta^\star_{h,\beta} - \theta_h} \right] + \beta^{-1}\E_{\a \sim \pibar_{h,\theta}(\cdot \mid x)} \left[ \inner{\varphibar_h(x,\a)}{\theta^\star_{h,\beta} - \theta_h}^2 \right], \nn \\
	& \leq \E_{\a \sim \pibar_{h,\theta}(\cdot \mid x)} \left[ \inner{\varphibar_h(x,\a)}{\theta^\star_{h,\beta} - \theta_h} \right] + \beta^{-1}\E_{\a \sim \pibar_{h,\theta}(\cdot \mid x)}\left[ \|\varphibar_h(x,\a)\|^2_{\Sigma_h^{-1}}\right] \cdot \|\theta^\star_{h,\beta} - \theta_h\|^2_{\Sigma_h}.\nn 
	\end{align}
	Plugging this in to \cref{eq:desiredres} gives the desired result.
\end{proof}
\begin{lemma}[Density ratio bound]
	\label{lem:coverability}
	Suppose \cref{assum:linear} holds and let $h\in[H]$, $B,\nu>0$, $(x,\fraka)\in \cX\times \cA$, $\theta_h \in \bbB(B)$, and $\Sigma_h \in \reals^{d\times d}$ be given and let $\varphi_h(x,\cdot)\coloneqq \phi_h(x,\cdot)- \phi_h(x,\fraka)$. Further, define $\pibar_{h,\theta}(\cdot\mid x) \propto \pi_{h,\refe}(\cdot\mid x) \cdot e^{\varphibar_h(x,\cdot)^\top \theta/\beta},$ where $\varphibar_{h}(x,\cdot) \coloneqq \varphi_h(x,\cdot) \cdot \mathbb{I}\left\{\|\varphi_h(x,\cdot)\|^2_{\Sigma^{-1}_h} \leq \nu^2\right\}$. Then,
	\arxiv{
	\begin{align}
	\forall a\in \cA,\quad \frac{\pibar_{h,\theta}(a\mid x)}{\pi_{h,\refe}(a\mid x)} \leq \min \left(\frac{\pistar_{h,\beta}(a\mid x)}{\pi_{h,\refe}(a\mid x)}\cdot \frac{e^{\frac{4\nu}{\beta} \|\theta_h - \theta^\star_{h,\beta}\|_{\Sigma_h}}}{1 - \Ccond(\pistar_{h,\beta}\mid x) \cdot p_\texttt{surprise}(x)} ,\ e^{2B/\beta}\right),\nn 
	\end{align}
	}
	where $\pi^\star_{h,\beta}$ and $\theta^\star_{h,\beta}$ are as in \cref{def:inition} and \cref{assum:linear}, respectively, and \[p_\texttt{surprise}(x)\coloneqq \P_{\a\sim \pi_{h,\refe}(\cdot \mid x)} \left[\|\varphibar_h(x,\a)\|^2_{\Sigma_h^{-1}} > \nu^2 \right].\]
\end{lemma}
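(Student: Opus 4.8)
The plan is to adapt the proof of \cref{lem:truncated_density} to the autoregressive setting, with the fixed anchor action $\fraka$ playing the role of the sampled response $y'$. Fix $x$ and write $\frac{\pibar_{h,\theta}(a\mid x)}{\pi_{h,\refe}(a\mid x)} = \frac{\exp(\beta^{-1}\varphibar_h(x,a)^\top\theta_h)}{\widebar{Z}_{h,\theta_h}(x)}$, where $\widebar{Z}_{h,\theta_h}(x) = \En_{\a\sim\pi_{h,\refe}(\cdot\mid x)}[\exp(\beta^{-1}\varphibar_h(x,\a)^\top\theta_h)]$. The $e^{2B/\beta}$ branch of the $\min$ follows from the generic softmax density-ratio bound \cref{lem:softmax_rmax} applied with $f(x,\cdot) = \varphibar_h(x,\cdot)^\top\theta_h$, using $\|\phi_h\|\le 1$, $\|\theta_h\|\le B$, and $\varphibar_h(x,\fraka)=0$; this branch is what absorbs the truncated actions, and the remaining work is the first ($\pistar_{h,\beta}$-dependent) bound.

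The workhorse observation is that truncation controls $\theta_h$-vs-$\thetastar$ discrepancies in covered directions: whenever $\|\varphi_h(x,a)\|_{\Sigma_h^{-1}}\le\nu$ we have $\varphibar_h(x,a)=\varphi_h(x,a)$ and, by Cauchy--Schwarz in the $\Sigma_h$ geometry, $|\varphi_h(x,a)^\top(\theta_h-\theta^\star_{h,\beta})|\le\nu\|\theta_h-\theta^\star_{h,\beta}\|_{\Sigma_h}$, while $\varphibar_h(x,a)=0$ otherwise. First I would use this to lower bound $\widebar{Z}_{h,\theta_h}(x)$: restrict the defining expectation to the untruncated event, replace $\theta_h$ by $\theta^\star_{h,\beta}$ up to the multiplicative factor $e^{-\nu\|\theta_h-\theta^\star_{h,\beta}\|_{\Sigma_h}/\beta}$, factor out the anchor shift $e^{-\beta^{-1}\phi_h(x,\fraka)^\top\theta^\star_{h,\beta}}$, and recognize the surviving expectation as $Z^\star_{h,\beta}(x)\cdot\bbP_{\a\sim\pistar_{h,\beta}(\cdot\mid x)}[\|\varphi_h(x,\a)\|_{\Sigma_h^{-1}}\le\nu]$, where $Z^\star_{h,\beta}(x)$ is the normalization constant of $\pistar_{h,\beta}$ (this step uses \cref{assum:linear}, so that $\varphi_h(x,\a)^\top\theta^\star_{h,\beta}+\phi_h(x,\fraka)^\top\theta^\star_{h,\beta}=\phi_h(x,\a)^\top\theta^\star_{h,\beta}$). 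A change of measure from $\pistar_{h,\beta}$ to $\pi_{h,\refe}$, bounding the density ratio by $\Ccond(\pistar_{h,\beta}\mid x)$, turns $\bbP_{\a\sim\pistar_{h,\beta}}[\|\varphi_h(x,\a)\|_{\Sigma_h^{-1}}>\nu]$ into $\Ccond(\pistar_{h,\beta}\mid x)\cdot p_\texttt{surprise}(x)$, yielding $\widebar{Z}_{h,\theta_h}(x)\ge e^{-\nu\|\theta_h-\theta^\star_{h,\beta}\|_{\Sigma_h}/\beta}\,e^{-\beta^{-1}\phi_h(x,\fraka)^\top\theta^\star_{h,\beta}}\,Z^\star_{h,\beta}(x)\,(1-\Ccond(\pistar_{h,\beta}\mid x)p_\texttt{surprise}(x))$.

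Then I would upper bound the numerator: for untruncated $a$, $\varphi_h(x,a)^\top\theta_h=\varphi_h(x,a)^\top\theta^\star_{h,\beta}+\varphi_h(x,a)^\top(\theta_h-\theta^\star_{h,\beta})$, and the truncation inequality bounds the correction, so the numerator is at most $e^{\nu\|\theta_h-\theta^\star_{h,\beta}\|_{\Sigma_h}/\beta}\exp(\beta^{-1}\varphi_h(x,a)^\top\theta^\star_{h,\beta})$; for truncated $a$ the numerator equals $1$ and one simply invokes the $e^{2B/\beta}$ bound. Dividing the untruncated numerator bound by the lower bound on $\widebar{Z}_{h,\theta_h}(x)$, the anchor shifts recombine as $\exp(\beta^{-1}\varphi_h(x,a)^\top\theta^\star_{h,\beta})\cdot e^{\beta^{-1}\phi_h(x,\fraka)^\top\theta^\star_{h,\beta}}=\exp(\beta^{-1}\phi_h(x,a)^\top\theta^\star_{h,\beta})$, and dividing by $Z^\star_{h,\beta}(x)$ recovers $\pistar_{h,\beta}(a\mid x)/\pi_{h,\refe}(a\mid x)$; the two truncation factors multiply to $e^{2\nu\|\theta_h-\theta^\star_{h,\beta}\|_{\Sigma_h}/\beta}\le e^{4\nu\|\theta_h-\theta^\star_{h,\beta}\|_{\Sigma_h}/\beta}$, which, together with the denominator factor $1-\Ccond(\pistar_{h,\beta}\mid x)p_\texttt{surprise}(x)$, gives the first branch of the $\min$.

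I expect the main obstacle to be the bookkeeping in the denominator lower bound: correctly peeling off the anchor normalization $e^{-\beta^{-1}\phi_h(x,\fraka)^\top\theta^\star_{h,\beta}}$ so that the surviving expectation is exactly a truncation probability under $\pistar_{h,\beta}$, and then justifying the $\pistar_{h,\beta}\!\to\!\pi_{h,\refe}$ change of measure (which is what forces $\Ccond(\pistar_{h,\beta}\mid x)$ to appear, rather than a smaller quantity). A secondary subtlety---analogous to the average-vs-uniform spanner distinction in \cref{rem:spanner}---is that truncation only controls covered directions, so truncated actions must be routed through the crude $e^{2B/\beta}$ branch; this is what the $\min$ encodes, and one should verify (via \cref{lem:softmax_rmax}) that this branch is always a valid bound. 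This is exactly the form in which the result is consumed in \cref{lem:helperest}, where $p_\texttt{surprise}(x)$ is further controlled by the spanner guarantee so that $1-\Ccond(\pistar_{h,\beta}\mid x)p_\texttt{surprise}(x)\ge 1/2$.
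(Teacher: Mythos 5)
Your approach is essentially the paper's: lower bound the normalization constant by restricting to the event $\|\varphi_h(x,\a)\|_{\Sigma_h^{-1}}\le\nu$, swap $\theta_h$ for $\theta^\star_{h,\beta}$ via Cauchy--Schwarz in the $\Sigma_h$ geometry at a multiplicative cost of $e^{O(\nu\|\theta_h-\theta^\star_{h,\beta}\|_{\Sigma_h}/\beta)}$, identify the surviving expectation with $Z^\star_{h,\beta}(x)\cdot\bbP_{\a\sim\pistar_{h,\beta}}[\text{untruncated}]$ using \cref{assum:linear}, and change measure to $\pi_{h,\refe}$ via $\Ccond(\pistar_{h,\beta}\mid x)$ to produce $1-\Ccond(\pistar_{h,\beta}\mid x)\,p_{\texttt{surprise}}(x)$. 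The only cosmetic difference is that the paper folds the numerator into the denominator first (writing the ratio as $1/\E_{\a}[\exp(\beta^{-1}\langle\varphibar_h(x,\a)-\varphibar_h(x,a),\theta_h\rangle)]$ and getting $4\nu$ from the sum of the two feature norms), whereas you handle numerator and denominator separately and obtain $2\nu\le 4\nu$; your version is, if anything, more careful about distinguishing $\varphibar_h$ from $\varphi_h$ and about which normalization constant ($Z^\star$ in terms of $\phi_h$, versus the truncated one) appears.

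One substantive point deserves flagging. The stated bound is a $\min$, so the $\pistar_{h,\beta}$-dependent branch must hold for \emph{every} $a$, including truncated ones; you explicitly route truncated $a$ only through the $e^{2B/\beta}$ branch, so your argument does not literally establish the first branch for those actions. Be aware, however, that the paper's proof does not cleanly establish it either: its final identification of $e^{\beta^{-1}\varphibar_h(x,a)^\top\theta^\star_{h,\beta}}/Z_{\theta^\star_{h,\beta}}$ with $\pistar_{h,\beta}(a\mid x)/\pi_{h,\refe}(a\mid x)$ is exact only when $\varphibar_h(x,a)=\varphi_h(x,a)$, i.e., when $a$ is untruncated (otherwise that quantity is the density ratio of the \emph{truncated} optimal policy). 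Since the downstream use in \cref{lem:helperest} takes a supremum over $a$ to control $\Ccond(\pibar_{h,\theta}\mid x)$, this is a real subtlety, but it is one your write-up shares with (and makes more visible than) the paper's, not a new gap you have introduced. Your justification of the $e^{2B/\beta}$ branch via \cref{lem:softmax_rmax} and the observation $\varphibar_h(x,\fraka)=0$ is correct and more explicit than the paper's.
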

\begin{proof}[\pfref{lem:coverability}]
Let $h\in[H]$, $B,\nu>0$, $(x,\fraka)\in \cX \times \cA$, $\theta_h \in \bbB(B)$, and $\Sigma_h \in \reals^{d\times d}$ be as in the lemma statement, and fix $a\in \cA$. We have 
\begin{align} 
\frac{\pibar_{h,\theta}(a\mid x)}{\pi_{h,\refe}(a\mid x)} = \frac{\exp (\beta^{-1}\varphibar_h(x,a)^\top \theta_h) }{Z_{\theta_h}},	\nn 
\end{align}
where $Z_{\theta} \coloneqq \E_{\a \sim \pi_{h,\refe}(\cdot \mid x)} \left[\exp(\beta^{-1} \varphibar_h(x,\a)^\top \theta)\right]$, for $\theta \in \reals^d$. Therefore, we have that  
\begin{align}
	\frac{\pibar_{h,\theta}(a\mid x)}{\pi_{h,\refe}(a\mid x)}= \frac{1}{\E_{\a \sim \pi_{h,\refe}(\cdot\mid x)} \left[\exp(\beta^{-1} \inner{\varphibar_h(x,\a)- \varphibar_h(x,a)}{\theta_h})\right]}. \label{eq:checkpt}
\end{align}
On the other hand, we have
\begin{align}
&	\E_{\a \sim \pi_{h,\refe}(\cdot\mid x)} \left[\exp(\beta^{-1} \inner{\varphibar_h(x,\a)- \varphibar_h(x,a)}{\theta_h})\right]\nn \\
 &\geq  \E_{\a\sim \pi_{h,\refe}(\cdot \mid  x)} \left[\exp(\beta^{-1} \inner{\varphibar_h(x,\a)- \varphibar_h(x,a)}{\theta_h}) \cdot \mathbb{I}\left\{\|\varphibar_h(x,\a)\|^2_{\Sigma_h^{-1}} \leq \nu^2 \right\}\right], \nn
 \intertext{and so by H\"older's inequality, we have}
 \arxiv{
 & \geq    \E_{\a\sim \pi_{h,\refe}(\cdot \mid  x)} \left[e^{-\frac{2}{\beta} \|\theta_h - \theta^\star_{h,\beta}\|_{\Sigma_h}\cdot \|\varphibar_h(x,\a)- \|\varphibar_h(x,a)\|_{\Sigma_h^{-1}}} e^{\beta^{-1} \inner{\varphibar_h(x,\a)- \varphibar_h(x,a)}{\theta^\star_{h,\beta}}} \cdot \mathbb{I}\left\{\|\varphibar_h(x,\a)\|^2_{\Sigma_h^{-1}} \leq \nu^2 \right\}\right],\nn \\
 & \geq    \E_{\a\sim \pi_{h,\refe}(\cdot \mid  x)} \left[e^{-\frac{2}{\beta} \|\theta_h - \theta^\star_{h,\beta}\|_{\Sigma_h}\cdot \left(\|\varphibar_h(x,\a)\|_{\Sigma_h^{-1}}+\varphibar_h(x,a)\|_{\Sigma_h^{-1}} \right)} e^{\beta^{-1} \inner{\varphibar_h(x,\a)- \varphibar_h(x,a)}{\theta^\star_{h,\beta}}} \cdot \mathbb{I}\left\{\|\varphibar_h(x,\a)\|^2_{\Sigma_h^{-1}} \leq \nu^2 \right\}\right],\nn \\
 }
 & \geq   \frac{e^{- \frac{4\nu}{\beta} \cdot \|\theta_h - \theta^\star_{h,\beta}\|_{\Sigma_h}}}{e^{\beta^{-1} \varphibar_h(x,a)^\top \theta^\star_{h,\beta}}}
  \left( \E_{\a\sim \pi_{h,\refe}(\cdot \mid  x)} \left[e^{\beta^{-1} \inner{\varphibar_h(x,\a)}{\theta^\star_{h,\beta}}} \cdot \mathbb{I}\left\{\|\varphibar_h(x,\a)\|^2_{\Sigma_h^{-1}} \leq \nu^2 \right\}\right]\right),\nn \\
 & =  \frac{e^{- \frac{4\nu}{\beta} \cdot \|\theta_h - \theta^\star_{h,\beta}\|_{\Sigma_h}}}{e^{\beta^{-1} \varphibar_h(x,a)^\top \theta^\star_{h,\beta}}}
  \left(Z_{{\theta^\star_{h,\beta}}} - \E_{\a\sim \pi_{h,\refe}(\cdot \mid  x)} \left[e^{\beta^{-1} \inner{\varphibar_h(x,\a)}{\theta^\star_{h,\beta}}} \cdot \mathbb{I}\left\{\|\varphibar_h(x,\a)\|^2_{\Sigma_h^{-1}} > \nu^2 \right\}\right]\right),\nn \\
  & = \frac{Z_{{\theta^\star_{h,\beta}}} \cdot e^{- \frac{4\nu}{\beta} \cdot \|\theta_h - \theta^\star_{h,\beta}\|_{\Sigma_h}}}{e^{\beta^{-1} \varphibar_h(x,a)^\top \theta^\star_{h,\beta}}}
  \left(1 - \E_{\a\sim \pi_{h,\refe}(\cdot \mid  x)} \left[ \frac{e^{\beta^{-1} \inner{\varphibar_h(x,\a)}{\theta^\star_{h,\beta}}}}{Z_{{\theta^\star_{h,\beta}}}} \cdot \mathbb{I}\left\{\|\varphibar_h(x,\a)\|^2_{\Sigma_h^{-1}} > \nu^2 \right\}\right]\right),\nn \\
  & = \frac{Z_{{\theta^\star_{h,\beta}}} \cdot e^{- \frac{4\nu}{\beta} \cdot \|\theta_h - \theta^\star_{h,\beta}\|_{\Sigma_h}}}{e^{\beta^{-1} \varphibar_h(x,a)^\top \theta^\star_{h,\beta}}}
  \left(1 - \P_{\a\sim \pistar_{h,\beta}(\cdot \mid  x)} \left[\|\varphibar_h(x,\a)\|^2_{\Sigma_h^{-1}} > \nu^2 \right]\right), \nn \\
  \intertext{and we can further lower bound by}
  & \geq  \frac{Z_{{\theta^\star_{h,\beta}}} \cdot e^{- \frac{4\nu}{\beta} \cdot \|\theta_h - \theta^\star_{h,\beta}\|_{\Sigma_h}}}{e^{\beta^{-1} \varphibar_h(x,a)^\top \theta^\star_{h,\beta}}}
  \left(1 - \Ccond(\pistar_{h,\beta}\mid x) \cdot \P_{\a\sim \pi_{h,\refe}(\cdot \mid  x)} \left[\|\varphibar_h(x,\a)\|^2_{\Sigma_h^{-1}} > \nu^2 \right]  \right), \nn \\
& = \frac{\pi_{h,\refe}(a\mid x)\cdot e^{- \frac{4\nu}{\beta} \cdot \|\theta_h - \theta^\star_{h,\beta}\|_{\Sigma_h}}}{\pistar_{h,\beta}(a \mid x)}
\left(1 - \Ccond(\pistar_{h,\beta}\mid x) \cdot \P_{\a\sim \pi_{h,\refe}(\cdot \mid  x)} \left[\|\varphibar_h(x,\a)\|^2_{\Sigma_h^{-1}} > \nu^2 \right]\right),\nn 
\end{align}
where the last equality follows by \cref{assum:linear} and \cref{lem:wonky}.
Using this with \eqref{eq:checkpt} and the fact that $\pibar_{h,\theta}(a\mid x)/\pi_{h,\refe}(a\mid x)$ is at most $e^{2B/\beta}$, we get the desired result. 
\end{proof}
\begin{lemma}[Two-sided bound for truncated policies]
	\label{lem:truncation}
Suppose \cref{assum:linear} holds and let $h\in[H]$, $\nu>0$, $(x,\fraka)\in \cX\times \cA$, $\theta_h \in \reals^d$, and $\Sigma_h \in \reals^{d\times d}$ be given and let $\varphi_h(\cdot,\cdot)\coloneqq \phi_h(\cdot,\cdot)- \phi_h(\cdot,\fraka)$. Let $\pi^\star_{h,\beta}$ and $\theta^\star_{h,\beta}$ are as in \cref{def:inition} and \cref{assum:linear}, respectively. Further, define 
\begin{align}
	\pibar_{h,\theta}(\cdot\mid x) \propto \pi_{h,\refe}(\cdot\mid x) \cdot e^{\varphibar_h(x,\cdot)^\top \theta/\beta},\quad  \text{and} \quad \pibar^\star_{h,\beta}(\cdot\mid x) \propto \pi_{h,\refe}(\cdot\mid x) \cdot e^{\varphibar_h(x,\cdot)^\top \theta_{h,\beta}^\star/\beta}, \label{eq:truncatedpolicies}\end{align} 
    where $\varphibar_{h}(x,\cdot) \coloneqq \varphi_h(x,\cdot) \cdot \mathbb{I}\left\{ \|\varphi_h(x,\cdot)\|^2_{\Sigma^{-1}_h}  \leq \nu^2\right\}$. Then, we have
\arxiv{
\begin{align}
	&\left|\sum_{a\in \cA} \pistar_{h,\beta}(a\mid x)\cdot \left(Q^{\star}_{h,\beta}(x,a)- \beta \cdot \log \frac{\pistar_{h,\beta}(a\mid x)}{\pi_{h,\refe}(a\mid x)} \right)-\sum_{a\in \cA} \pibar_{h,\theta}(a\mid x)\cdot \left( Q_{h,\beta}^{\star}(x,a) - \beta \cdot \log \frac{\pibar_{h,\theta}(a\mid x)}{\pi_{h,\refe}(a\mid x)} \right) \right|\nn \\
& \leq\beta \kl{\pibar_{h,\theta}(\cdot \mid x)}{\pibar^\star_{h,\beta}(\cdot \mid x)}\nn \\
& \quad + 2 \Rmax \max_{\pi \in \{\pibar_{h,\theta},\pibar^\star_{h,\beta}, \pi^\star_{h,\beta}\}}\min\left(1, \Ccond(\pi \mid x) \cdot \P_{\a\sim \pi_{h,\refe}(\cdot \mid x)}\left[\|\varphi_h(x,\a)\|^2_{\Sigma_h^{-1}}>\nu^2\right]\right).\nn 
\end{align}
}
\end{lemma}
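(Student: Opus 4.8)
\textbf{Proof proposal for \cref{lem:truncation}.}

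The plan is to expand the difference between the two KL-regularized ``one-step'' value expressions as a sum of three pieces: (i) a genuine KL-divergence term between the truncated policy $\pibar_{h,\theta}$ and the truncated optimal policy $\pibar^\star_{h,\beta}$, plus (ii) an error term coming from replacing $\pibar^\star_{h,\beta}$ by the \emph{untruncated} optimal policy $\pistar_{h,\beta}$, plus (iii) an error term from replacing the untruncated features by the truncated ones inside the reward. First I would invoke the variational characterization of the softmax/Gibbs policy: for any function $g:\cA\to\reals$, the quantity $\max_{\pi}\crl{\sum_a \pi(a)g(x,a) - \beta\Dkl{\pi(\cdot\mid{}x)}{\pirefh(\cdot\mid{}x)}}$ is attained by $\pi(a\mid{}x)\propto\pirefh(a\mid{}x)e^{g(x,a)/\beta}$, with optimal value $\beta\log Z$. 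Applying this with $g=Q^\star_{h,\beta}(x,\cdot)$ (optimizer $\pistar_{h,\beta}$ by \cref{lem:wonky}) and with $g(x,\cdot)=\inner{\varphibar_h(x,\cdot)}{\theta^\star_{h,\beta}}$ (optimizer $\pibar^\star_{h,\beta}$) lets me rewrite the first and last sum in the statement respectively; the difference between the two optimal values is controlled by the $L^\infty$ discrepancy $\abs{Q^\star_{h,\beta}(x,a)-\inner{\varphibar_h(x,a)}{\theta^\star_{h,\beta}}}$, which is nonzero \emph{only} on actions where $\nrm{\varphi_h(x,a)}_{\Sigma_h^{-1}}>\nu$ (by \cref{eq:value_difference} and \cref{lem:reward_difference}, on the untruncated set $Q^\star_{h,\beta}(x,a)-Q^\star_{h,\beta}(x,\fraka)=\inner{\varphi_h(x,a)}{\theta^\star_{h,\beta}}=\inner{\varphibar_h(x,a)}{\theta^\star_{h,\beta}}$, and the $\fraka$-shift is constant in $a$ so it cancels inside the $\pi$-weighted difference).

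Next, for the remaining gap $\sum_a \pibar^\star_{h,\beta}(a\mid x)\prn{\cdot} - \sum_a \pibar_{h,\theta}(a\mid x)\prn{\cdot}$, the standard manipulation (as in \cref{lem:kl_regret}, which expresses exactly this kind of difference of KL-regularized values of two Gibbs policies with a \emph{common} potential as a KL between them) gives that this equals $\beta\Dkl{\pibar_{h,\theta}(\cdot\mid{}x)}{\pibar^\star_{h,\beta}(\cdot\mid{}x)}$ up to another $L^\infty$ error term of the form $\abs{Q^\star_{h,\beta}(x,a)-\inner{\varphibar_h(x,a)}{\theta^\star_{h,\beta}}}$, again supported only on the truncated set. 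The truncation-set error contributions are then bounded by $\Rmax$ times the mass that each of the three relevant policies $\pibar_{h,\theta},\pibar^\star_{h,\beta},\pistar_{h,\beta}$ places on $\crl{a:\nrm{\varphi_h(x,a)}_{\Sigma_h^{-1}}>\nu}$; each such mass is at most $\Ccond(\pi\mid x)\cdot\bbP_{\a\sim\pirefh(\cdot\mid x)}\brk{\nrm{\varphi_h(x,\a)}_{\Sigma_h^{-1}}>\nu^2}$ by the definition of conditional coverage (\cref{def:condcoverability}), and trivially at most $1$; taking the max over the three policies and collecting a factor of $2$ yields the stated second term. (Here one uses that $\abs{Q^\star_{h,\beta}(x,a)}\leq\Rmax$ on the relevant horizon-normalized scale, or more precisely that the integrand $Q^\star_{h,\beta}(x,a)-\beta\log\frac{\pi(a\mid x)}{\pirefh(a\mid x)}$ is $O(\Rmax)$ — this is where the reward-boundedness assumption $\tri*{\thetastar,\phi(x,y)-\phi(x,y')}\in[-\Rmax,\Rmax]$ enters, together with $\abs{\log(\pibar_{h,\theta}/\pirefh)}\le\nu\nrm{\theta}_{\Sigma}/\beta$.)

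The main obstacle, I expect, is bookkeeping the three separate ``truncation leakage'' terms and confirming they all collapse into the single $\max_{\pi\in\crl{\pibar_{h,\theta},\pibar^\star_{h,\beta},\pistar_{h,\beta}}}$ expression with the right constant, rather than a sum of three distinct coverage-weighted probabilities — i.e. being careful that the error from swapping $\pistar_{h,\beta}\to\pibar^\star_{h,\beta}$ in the value, the error from swapping features inside the $\pibar^\star_{h,\beta}$-expectation, and the error inside the KL-to-KL reduction each individually fit under one of the three $\pi$-weighted truncation probabilities. A secondary subtlety is making sure the $\fraka$-anchor shift genuinely cancels: since $\phi_h(x,\fraka)$ is subtracted uniformly across all actions, $\inner{\varphibar_h(x,a)}{\theta}$ differs from the ``natural'' truncated $Q$ only by an $a$-independent constant \emph{on the untruncated set}, but the truncation indicator depends on $\varphi_h=\phi_h(\cdot,\cdot)-\phi_h(\cdot,\fraka)$ and not on $\phi_h$ alone, so one must verify the indicator sets are the intended ones; this is immediate from the definitions but worth stating explicitly. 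Everything else is elementary convexity (Jensen / $e^x\le 1+x+x^2$ as in \cref{lem:KLbound}) and Hölder, so no deep new idea is needed beyond the variational identity for Gibbs policies.
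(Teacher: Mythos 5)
Your proposal is correct and follows essentially the same route as the paper's proof: both arguments sandwich the first value expression between the truncated objective evaluated at $\pibar^\star_{h,\beta}$ via Gibbs-optimality (paying a feature-truncation error weighted by $\pistar_{h,\beta}$ in one direction and by $\pibar^\star_{h,\beta}$ in the other), swap $\varphi_h$ for $\varphibar_h$ inside the $\pibar_{h,\theta}$-expectation (paying a $\pibar_{h,\theta}$-weighted error), identify the remaining difference of truncated objectives exactly as $\beta\kl{\pibar_{h,\theta}(\cdot\mid x)}{\pibar^\star_{h,\beta}(\cdot\mid x)}$, and convert each policy-weighted truncation mass to $\Ccond(\pi\mid x)\cdot\bbP_{\a\sim\pi_{h,\refe}}[\cdot]$. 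The only imprecision is your phrase that the gap between the two maxima is ``controlled by the $L^\infty$ discrepancy'' --- taken literally that would be too crude (it would trigger whenever any action is truncated); what is actually needed, and what your subsequent collapse into the three policy-weighted probabilities implicitly does, is to evaluate the discrepancy under the specific maximizing policies, exactly as in the paper's Eqs.~\eqref{eq:second} and \eqref{eq:third}.
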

\begin{proof}[\pfref{lem:truncation}]
	Let $h\in[H]$, $\nu>0$, $(x,\fraka)\in \cX\times \cA $, $\theta_h \in \reals^d$, and $\Sigma_h \in \reals^{d\times d}$ be as in the lemma statement. We define $\wtilde{Q}^\star_{h,\beta}(\cdot,\cdot)\coloneqq Q^\star_{h,\beta}(\cdot,\cdot)- Q^\star_{h,\beta}(\cdot,\fraka)$. Note that by \cref{assum:linear}, we have that for all $(x',a')\in \cX \times \cA$:
	\begin{align} 
		\wtilde{Q}^\star_{h,\beta}(x',a') = \varphi_h(x',a')^\top \theta_{h,\beta}^\star,\nn 
	\end{align}
	where $\varphi_h$ is as in the lemma statement.
Now, observe that by the definition of $\varphibar_h$, we have for any $\pi\in \Pi$,
\begin{align}
 &\left|	\E_{\pi}\left[\varphi_h(\x_h,\a_h)^\top \theta_{h,\beta}^\star\mid \x_h = x \right] - \E_{\pi}\left[\varphibar_h(\x_h,\a_h)^\top \theta_{h,\beta}^\star \mid \x_h =x\right]\right|\nn \\&  \leq R_{\max} \cdot  \P_{\pi}\left[\|\varphi_h(\x_h,\a_h)\|^2_{\Sigma_h^{-1}}>\nu^2\mid \x_h = x\right]. \label{eq:key} 
\end{align}
Instantiating this with $\pi = \pibar_{h,\theta}$ and using \cref{assum:linear}, we get that 
\begin{align}
	&\left| \sum_{a\in \cA} \pibar_{h,\theta}(a\mid x)\cdot \wtilde{Q}_{h,\beta}^{\star}(x,a) - \E_{\pibar_{h,\theta}}\left[\varphibar_h(\x_h,\a_h)^\top \theta_{h,\beta}^\star \mid \x_h =x\right]\right|\nn \\& \leq R_{\max} \cdot \P_{\pibar_{h,\theta}}\left[\|\varphi_h(\x_h,\a_h)\|^2_{\Sigma_h^{-1}}>\nu^2\mid \x_h = x\right], \nn \\
	& \leq R_{\max} \cdot  \min\left(1, \Ccond(\pibar_{h,\theta}\mid x) \cdot  \P_{\pi_\refe}\left[\|\varphi_h(\x_h,\a_h)\|^2_{\Sigma_h^{-1}}>\nu^2\mid \x_h = x\right]\right).\label{eq:first}
\end{align}
Now, instantiating \eqref{eq:key} with $\pi=\pistar_{h,\beta}$ and using \cref{assum:linear}, we get that
\begin{align}
	&\sum_{a\in \cA} \pistar_{h,\beta}(a\mid x)\cdot \left(\wtilde{Q}_{h,\beta}^{\star}(x,a) - \beta \log \frac{\pistar_{h,\beta}(a\mid x)}{\pi_{h,\refe}(a\mid x)} \right)\nn \\ &\leq  \E_{\pi^\star_{h,\beta}}\left[\varphibar_h(\x_h,\a_h)^\top \theta_{h,\beta}^\star \mid \x_h = x\right] - \beta \kl{\pistar_{h,\beta}(\cdot \mid x)}{\pi_{h,\refe}(\cdot \mid x)} \nn \\
	& \qquad +   R_{\max} \cdot \P_{\pistar_{h,\beta}} \left[\|\varphi_h(\x_h,\a_h)\|^2_{\Sigma_h^{-1}}>\nu^2\mid \x_h = x\right], \nn \\
	& \leq \E_{\pi^\star_{h,\beta}}\left[\varphibar_h(\x_h,\a_h)^\top \theta_{h,\beta}^\star \mid \x_h = x\right] - \beta \kl{\pistar_{h,\beta}(\cdot \mid x)}{\pi_{h,\refe}(\cdot \mid x)}\nn \\
	& \qquad + R_{\max} \cdot \min \left(1,\Ccond(\pi_{h,\beta}^\star\mid x) \cdot  \P_{\pi_\refe}\left[\|\varphi_h(\x_h,\a_h)\|^2_{\Sigma_h^{-1}}>\nu^2\mid \x_h = x\right]\right),\nn \\
	& \leq \E_{\pibar^\star_{h,\beta}}\left[\varphibar_h(\x_h,\a_h)^\top \theta_{h,\beta}^\star \mid \x_h = x\right] - \beta \kl{\pibar^\star_{h,\beta}(\cdot \mid x)}{\pi_{h,\refe}(\cdot \mid x)} \nn \\
	& \qquad  + R_{\max} \cdot \min \left(1,\Ccond(\pi_{h,\beta}^\star\mid x) \cdot  \P_{\pi_\refe}\left[\|\varphi_h(\x_h,\a_h)\|^2_{\Sigma_h^{-1}}>\nu^2\mid \x_h = x\right]\right), \label{eq:second}
\end{align}
where in the last step we used the fact that \begin{align}\pibar^\star_{h,\beta}(\cdot \mid x)\in \argmax_{\pi\in  \Delta(\cA)} \left\{\sum_{a\in \cA}\pi(a)\cdot \varphibar_{h}(x,a)^\top \theta^\star_{h,\beta}- \beta \kl{\pi(\cdot)}{\pi_{h,\refe}(\cdot \mid x)}\right\},
\label{eq:maximizer}\end{align} which follows from the definition of $\pibar^\star_{h,\beta}$. Using \eqref{eq:key} with \cref{assum:linear} again, we have that for all $\pi\in \Pi$:
\begin{align}
	&\sum_{a\in \cA} \pi(a\mid x)\cdot \left(\wtilde{Q}_{h,\beta}^{\star}(x,a) - \beta \log \frac{\pi(a\mid x)}{\pi_{h,\refe}(a\mid x)} \right)\nn \\  &\geq  \E_{\pi}\left[\varphibar_h(\x_h,\a_h)^\top \theta_{h,\beta}^\star \mid \x_h = x\right]- \beta \kl{\pi(\cdot \mid x)}{\pi_{h,\refe}(\cdot \mid x)} \nn \\ & \quad  - R_{\max} \cdot \P_{\pi} \left[\|\varphi_h(\x_h,\a_h)\|^2_{\Sigma_h^{-1}}>\nu^2\mid \x_h = x\right].\nn 
\end{align}
Thus, taking the maximum over $\pi$ on both sides and using the definition of $\pistar_{h,\beta}$ we get 
\begin{align}
	&\sum_{a\in \cA} \pistar_{h,\beta}(a\mid x)\cdot \left(\wtilde{Q}_{h,\beta}^{\star}(x,a) - \beta \log \frac{\pistar_{h,\beta}(a\mid x)}{\pi_{h,\refe}(a\mid x)} \right)\nn \\ & \geq \max_{\pi} \left\{ \E_{\pi}\left[\varphibar_h(\x_h,\a_h)^\top \theta_{h,\beta}^\star \mid \x_h = x\right]- \beta \kl{\pi(\cdot \mid x)}{\pi_{h,\refe}(\cdot \mid x)} \right. \nn \\
	&\qquad\qquad  \left. - R_{\max} \cdot \P_{\pi} \left[\|\varphi_h(\x_h,\a_h)\|^2_{\Sigma_h^{-1}}>\nu^2\mid \x_h = x\right]\right\}, \nn \\
	& \geq \E_{\pibar^\star_{h,\beta}}\left[\varphibar_h(\x_h,\a_h)^\top \theta_{h,\beta}^\star \mid \x_h = x\right]- \beta \kl{\pibar^\star_{h,\beta}(\cdot \mid x)}{\pi_{h,\refe}(\cdot \mid x)} \nn \\
	&\qquad \qquad  - R_{\max} \cdot \P_{\pibar^\star_{h,\beta}} \left[\|\varphi_h(\x_h,\a_h)\|^2_{\Sigma_h^{-1}}>\nu^2\mid \x_h = x\right], \nn \\
	& \geq \E_{\pibar^\star_{h,\beta}}\left[\varphibar_h(\x_h,\a_h)^\top \theta_{h,\beta}^\star \mid \x_h = x\right]- \beta \kl{\pibar^\star_{h,\beta}(\cdot \mid x)}{\pi_{h,\refe}(\cdot \mid x)} \nn \\
	& \quad  - R_{\max} \cdot \min \left(1,\Ccond(\pibar_{h,\beta}^\star\mid x) \cdot  \P_{\pi_\refe}\left[\|\varphi_h(\x_h,\a_h)\|^2_{\Sigma_h^{-1}}>\nu^2\mid \x_h = x\right]\right). \label{eq:third}
\end{align}
By combining \eqref{eq:first}, \eqref{eq:second}, and \eqref{eq:third}, we get that
\begin{align}
	&\left|\begin{array}{l} \sum_{a\in \cA} \pistar_{h,\beta}(a\mid x)\cdot \left(Q^{\star}_{h,\beta}(x,a)- \beta \cdot \log \frac{\pistar_{h,\beta}(a\mid x)}{\pi_{h,\refe}(a\mid x)} \right)\nn \\ -\sum_{a\in \cA} \pibar_{h,\theta}(a\mid x)\cdot \left( Q_{h,\beta}^{\star}(x,a) - \beta \cdot \log \frac{\pibar_{h,\theta}(a\mid x)}{\pi_{h,\refe}(a\mid x)} \right)\end{array} \right|\nn \\
	&=\left|\begin{array}{l}\sum_{a\in \cA} \pistar_{h,\beta}(a\mid x)\cdot \left(\wtilde{Q}^{\star}_{h,\beta}(x,a)- \beta \cdot \log \frac{\pistar_{h,\beta}(a\mid x)}{\pi_{h,\refe}(a\mid x)} \right) \nn \\
        -\sum_{a\in \cA} \pibar_{h,\theta}(a\mid x)\cdot \left( \wtilde{Q}_{h,\beta}^{\star}(x,a) - \beta \cdot \log \frac{\pibar_{h,\theta}(a\mid x)}{\pi_{h,\refe}(a\mid x)} \right)\end{array}\right|\nn \\
	& \leq \Big|\E_{\pibar^\star_{h,\beta}}\left[\varphibar_h(\x_h,\a_h)^\top \theta_{h,\beta}^\star \mid \x_h = x\right]- \beta \kl{\pibar^\star_{h,\beta}(\cdot \mid x)}{\pi_{h,\refe}(\cdot \mid x)} \nn \\
	& \quad - \E_{\pibar_{h,\theta}}\left[\varphibar_h(\x_t,\a_h)^\top \theta_{h,\beta}^\star\mid \x_h =x \right]+  \beta \kl{\pibar_{h,\theta}(\cdot \mid x)}{\pi_{h,\refe}(\cdot \mid x)}\Big| \nn \\
	& \quad  + 2 \max_{\pi \in \{\pibar_{h,\theta},\pibar^\star_{h,\beta}, \pi^\star_{h,\beta}\}}R_{\max} \min \left(1, \Ccond(\pi \mid x) \cdot \P_{\pi_\refe}\left[\|\varphi_h(\x_h,\a_h)\|^2_{\Sigma_h^{-1}}>\nu^2\mid \x_h = x\right]\right),\nn \\
        & = \E_{\pibar^\star_{h,\beta}}\left[\varphibar_h(\x_h,\a_h)^\top \theta_{h,\beta}^\star \mid \x_h = x\right]- \beta \kl{\pibar^\star_{h,\beta}(\cdot \mid x)}{\pi_{h,\refe}(\cdot \mid x)} \nn \\
	& \quad - \E_{\pibar_{h,\theta}}\left[\varphibar_h(\x_t,\a_h)^\top \theta_{h,\beta}^\star\mid \x_h =x \right]+  \beta \kl{\pibar_{h,\theta}(\cdot \mid x)}{\pi_{h,\refe}(\cdot \mid x)} \nn \\
	& \quad  + 2 \max_{\pi \in \{\pibar_{h,\theta},\pibar^\star_{h,\beta}, \pi^\star_{h,\beta}\}}R_{\max} \min \left(1, \Ccond(\pi \mid x) \cdot \P_{\pi_\refe}\left[\|\varphi_h(\x_h,\a_h)\|^2_{\Sigma_h^{-1}}>\nu^2\mid \x_h = x\right]\right),\nn \\
	& = \beta \kl{\pibar_{h,\theta}(\cdot \mid x)}{\pibar^\star_{h,\beta}(\cdot \mid x)} \nn \\
    & \quad + 2 \max_{\pi \in \{\pibar_{h,\theta},\pibar^\star_{h,\beta}, \pi^\star_{h,\beta}\}}R_{\max}\min \left(1, \Ccond(\pi \mid x) \cdot \P_{\pi_\refe}\left[\|\varphi_h(\x_h,\a_h)\|^2_{\Sigma_h^{-1}}>\nu^2\mid \x_h = x\right]\right),\nn 
\end{align}
where the second-to-last step follows by \eqref{eq:maximizer}, and the last step is a standard manipulation of KL-divergence. %
\end{proof}
\begin{lemma}
	\label{lem:wonky}
The state-action value functions $(Q^\star_{h,\beta})$ and policies $(\pistar_{h,\beta})$ in \cref{def:inition} satisfy: for all $h\in[H]$ and $(x,a)\in \cX\times \cA$,
\begin{gather}\sup_{\pi_{h+1:H}\colon \cX \rightarrow \Delta(\cA)} Q^{\pi}_{h,\beta}(x,a) =Q^{\pistar_{\beta}}_{h,\beta}(x,a)= Q^{\star}_{h,\beta}(x,a);\nn 
	\shortintertext{and} 
\pistar_{h,\beta}(\cdot \mid x)\in \argmax_{\pi \in  \Delta(\cA)} \sum_{a'\in \cA} \pi(a') \cdot \left( Q^{\star}_{h,\beta}(x,a') - \beta \log \frac{\pi(a')}{\pi_{h,\refe}(a'\mid x)} \right), \label{eq:thanks}
\end{gather}
where $Q^\pi_h$ is as in \cref{def:qfunc}.
\end{lemma}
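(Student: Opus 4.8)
The plan is to prove \cref{lem:wonky} by the standard dynamic-programming argument for KL-regularized MDPs, the only non-routine ingredient being the Gibbs (Donsker--Varadhan) variational principle, which I would record first as a one-step lemma: for any bounded $g\colon\cA\to\RR$ and any $q\in\Delta(\cA)$,
\[
  \max_{p\in\Delta(\cA)}\crl*{\sum_{a\in\cA}p(a)g(a)-\beta\sum_{a\in\cA}p(a)\log\tfrac{p(a)}{q(a)}}=\beta\log\sum_{a\in\cA}q(a)e^{g(a)/\beta},
\]
with the maximum attained uniquely at $p^{\star}(a)\propto q(a)e^{g(a)/\beta}$. This follows from the identity that the left-hand objective equals $\beta\log\sum_{a}q(a)e^{g(a)/\beta}-\beta\Dkl{p}{p^{\star}}$ (``completing the square'') together with nonnegativity of KL-divergence; boundedness of $g$ keeps the normalizer finite.

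Next I would prove by backward induction on $h=H+1,H,\dots,1$ the two statements: (i) $Q^{\star}_{h,\beta}(x,a)=\sup_{\pi_{h+1:H}}Q^{\pi}_{h,\beta}(x,a)$ for all $(x,a)$; and (ii) this supremum is attained by every $\pi$ whose layer-$\ell$ components for $\ell>h$ are the softmax policies $\pistarbh$ with $\pistar_{\ell,\beta}(\cdot\mid x')\propto\pi_{\ell,\refe}(\cdot\mid x')\,e^{\beta^{-1}Q^{\star}_{\ell,\beta}(x',\cdot)}$. The base case $h=H+1$ is immediate since $Q^{\pi}_{H+1,\beta}\equiv 0\equiv Q^{\star}_{H+1,\beta}$. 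For the inductive step I would first unroll \cref{def:qfunc} by one layer to obtain the Bellman consistency recursion
\[
  Q^{\pi}_{h,\beta}(x,a)=\rstar_h(x,a)+\En_{\x'\sim P_h(\cdot\mid x,a)}\brk*{\sum_{a'\in\cA}\pi_{h+1}(a'\mid\x')\prn*{Q^{\pi}_{h+1,\beta}(\x',a')-\beta\log\tfrac{\pi_{h+1}(a'\mid\x')}{\pi_{h+1,\refe}(a'\mid\x')}}}.
\]
Since $Q^{\pi}_{h+1,\beta}(x',a')$ is functionally independent of $\pi_{h+1}$ and of $(x,a)$, the supremum over $\pi_{h+1:H}$ decouples: applying the induction hypothesis pointwise in $(x',a')$ replaces $Q^{\pi}_{h+1,\beta}$ by $Q^{\star}_{h+1,\beta}$ via the single maximizer $\pistar_{h+2:H,\beta}$ (valid simultaneously for all $(x',a')$), and then the Gibbs lemma with $g=Q^{\star}_{h+1,\beta}(x',\cdot)$, $q=\pi_{h+1,\refe}(\cdot\mid x')$ evaluates the inner maximization over $\pi_{h+1}(\cdot\mid x')$ to $V_{Q^{\star}_{h+1,\beta}}(x')$, attained uniquely at $\pistar_{h+1,\beta}(\cdot\mid x')$. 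Hence $\sup_{\pi_{h+1:H}}Q^{\pi}_{h,\beta}(x,a)=\rstar_h(x,a)+\En_{\x'\sim P_h(\cdot\mid x,a)}[V_{Q^{\star}_{h+1,\beta}}(\x')]=Q^{\star}_{h,\beta}(x,a)$ by \cref{def:inition}, and the sup is attained by $\pistar_{h+1:H,\beta}$, which is (i) and (ii) at layer $h$.

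Taking $h=1$ and unrolling \cref{def:qfunc} once more at the initial-state layer gives $J_\beta(\pi)=\En_{\x_1\sim P_0}[\sum_{a}\pi_1(a\mid\x_1)(Q^{\pi}_{1,\beta}(\x_1,a)-\beta\log\tfrac{\pi_1(a\mid\x_1)}{\pi_{1,\refe}(a\mid\x_1)})]$; maximizing over $\pi_{2:H}$ and then $\pi_1$ exactly as above shows that the optimal KL-regularized policy $\pistarb=\argmax_\pi J_\beta(\pi)$ coincides, on the support of $\piref$, with the softmax policies in (ii), so $Q^{\pistarb}_{h,\beta}=Q^{\star}_{h,\beta}$ --- the first display of the lemma, which also establishes \cref{eq:softmax_qstar}. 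Finally, \cref{eq:thanks} is just the Gibbs lemma applied once more with $g=Q^{\star}_{h,\beta}(x,\cdot)$ and $q=\pi_{h,\refe}(\cdot\mid x)$, whose unique maximizer is $\pistarbh(\cdot\mid x)$. I expect the only delicate point to be the bookkeeping in the inductive step --- justifying that the two suprema (over $\pi_{h+1}$ and over $\pi_{h+2:H}$) can be interchanged and the latter taken pointwise, which rests precisely on $Q^{\pi}_{h+1,\beta}$ being independent of $\pi_{h+1}$ and of $(x,a)$ --- together with the routine verification of the Bellman recursion from \cref{def:qfunc}; nonnegativity and boundedness of $\rstar$ keep all quantities finite so no integrability issues arise.
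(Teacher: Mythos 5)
Your proposal is correct and follows essentially the same route as the paper's proof: backward induction over layers combined with the one-step Gibbs/softmax variational identity (which the paper invokes implicitly when it evaluates $\max_{\pi_h}\sum_{a'}\pi_h(a'\mid \x_h)\bigl(Q^{\star}_{h,\beta}(\x_h,a')-\beta\log\tfrac{\pi_h(a'\mid\x_h)}{\pi_{h,\refe}(a'\mid\x_h)}\bigr)$ as the log-sum-exp normalizer, and states \eqref{eq:thanks} as "a direct calculation"). The only differences are presentational — you isolate the Donsker--Varadhan step as a standalone lemma and are slightly more explicit about the decoupling of the suprema — neither of which changes the argument.
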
 
\begin{proof}[\pfref{lem:wonky}]
We prove the result via backward induction over $\ell = H+1, \dots, 1$ that 
and $(x,a)\in \cX\times \cA$,
\begin{gather}\sup_{\pi_{\ell+1:H}\colon \cX \rightarrow \Delta(\cA)} Q^{\pi}_{\ell,\beta}(x,a) =Q^{\pistar_{\beta}}_{\ell,\beta}(x,a)= Q^{\star}_{\ell,\beta}(x,a); \label{eq:result1}
	\shortintertext{and} 
\pistar_{\ell,\beta}(\cdot \mid x)\in \argmax_{\pi \in  \Delta(\cA)} \sum_{a'\in \cA} \pi(a') \cdot \left( Q^{\star}_{\ell,\beta}(x,a') - \beta \log \frac{\pi(a')}{\pi_{\ell,\refe}(a'\mid x)} \right). \label{eq:result2}
\end{gather}
The base case $\ell=H+1$ is immediate. Now, suppose that \eqref{eq:result1} and \eqref{eq:result2} hold for $\ell=h\in[2\ldotst H+1]$. We show that they hold for $\ell=h-1$. Fix $(x,a)\in \cX\times \cA$. For any $\pi:\cX \rightarrow \Delta(\cA)$, we have \loose
\begin{align}
&Q^{\pi}_{h-1,\beta}(x,a)\nn \\ & = \rstar_{h-1}(x,a) + \E_{\pi}\left[\sum_{\ell=h}^H \rstar_\ell(\x_\ell,\a_\ell)-\beta \sum_{\ell=h}^H \log \frac{\pi_\ell(\a_\ell \mid \x_\ell)}{\pi_{\ell,\refe}(\a_\ell \mid \x_\ell)} \mid \x_{h-1} = x,\a_{h-1} = a\right], \nn \\
& = \rstar_{h-1}(x,a) + \E_{\pi}\left[Q^{\pi}_{h,\beta}(\x_h,\a_h) - \beta \log \frac{\pi_h(\a_h\mid \x_h)}{\pi_{h,\refe}(\a_h\mid \x_h)} \mid \x_{h-1} = x,\a_{h-1} = a\right], \nn \\
& =  \rstar_{h-1}(x,a)\nn\\
& \quad  +\E\left[ \sum_{a'\in \cA} \pi_h(a'\mid \x_h) \left( Q^{\pi}_{h,\beta}(\x_h,a') - \beta \log \frac{\pi_h(a'\mid \x_h)}{\pi_{h,\refe}(a'\mid \x_h)}\right) \mid \x_{h-1} = x,\a_{h-1} = a\right].\nn 
\end{align}
Therefore, we have 
\begin{align}
& \max_{\pi_{h:H}: \cX \rightarrow \Delta(\cA)} Q^{\pi}_{h-1,\beta}(x,a) = 
\rstar_{h-1}(x,a)\nn \\
&\quad +\E\left[ \max_{\pi_h:\cX\rightarrow \Delta(\cA)} \sum_{a'\in \cA} \pi_h(a'\mid \x_h) \left(  \max_{\pi_{h+1:H}: \cX \rightarrow \Delta(\cA)} Q^{\pi}_{h,\beta}(\x_h,a') - \beta \log \frac{\pi_h(a'\mid \x_h)}{\pi_{h,\refe}(a'\mid \x_h)}\right) \mid \x_{h-1} = x,\a_{h-1} = a\right],\nn 
\intertext{and so using the induction assumption (in particular \eqref{eq:result1} for $\ell=h$), we get}
&= 
\rstar_{h-1}(x,a)\nn \\
& \quad  +\E\left[ \max_{\pi_h:\cX\rightarrow \Delta(\cA)} \sum_{a'\in \cA} \pi_h(a'\mid \x_h) \left( Q^{\star}_{h,\beta}(\x_h,a') - \beta \log \frac{\pi_h(a'\mid \x_h)}{\pi_{h,\refe}(a'\mid \x_h)}\right) \mid \x_{h-1} = x,\a_{h-1} = a\right], \nn \\
& = \rstar_{h-1}(x,a) +\E\left[\sum_{a'\in \cA} \pi_{h,\refe}(a'\mid \x_h)\cdot e^{Q^{\star}_{h,\beta}(\x_h,a')/\beta} \mid \x_{h-1} = x,\a_{h-1} = a\right], \nn \\
& = \rstar_{h-1}(x,a) + \cT_{h,\beta}[Q^\star_{h,\beta}](x,a),\nn \\
& = Q^{\star}_{h,\beta}(x,a).
\end{align}
This shows \eqref{eq:result1} for $\ell =h-1$. Finally, \eqref{eq:result2} for $\ell=h-1$ follows from a direct calculation.
\end{proof}

\begin{lemma}[Performance difference lemma for KL-regularized regret]
	\label{lem:perform}
For all $\pi_{1:H}, \pi'_{1:H}\subset \{\pi \colon \cX \rightarrow \Delta(\cA)\}$, we have 
\begin{align}
	J_\beta(\pi) - J_\beta(\pi') & = \sum_{h=1}^H \E_{\pi'}\left[\sum_{a\in \cA} \pi_h(a\mid \x_h)\cdot \left(Q^{\pi}_{h,\beta}(\x_h,a)- \beta \cdot \log \frac{\pi_h(a\mid \x_h)}{\pi_{h,\refe}(a\mid \x_h)} \right)\right]\nn \\
	& \quad  - \sum_{h=1}^H \E_{\pi'}\left[\sum_{a\in \cA} \pi'_h(a\mid \x_h)\cdot \left( Q_{h,\beta}^{\pi}(\x_h,a) - \beta \cdot \log \frac{\pi'_h(a\mid \x_h)}{\pi_{h,\refe}(a\mid \x_h)} \right) \right], \label{eq:performed}
\end{align}
where $J_\beta (\pi) \coloneqq \sum_{h=1}^H\E_{\pi}\left[\rstar_h(\x_h,\a_h)\right] -  \beta \kl{\pi}{\pi_\refe}$.
\end{lemma}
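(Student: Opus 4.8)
The plan is to prove the identity by a telescoping argument, mirroring the classical performance difference lemma but carrying the KL penalty through the recursion. First I would introduce the KL-regularized value function
\[
V_{h,\beta}^{\pi}(x) \coloneqq \sum_{a\in\cA}\pi_h(a\mid x)\left(Q_{h,\beta}^{\pi}(x,a) - \beta\log\tfrac{\pi_h(a\mid x)}{\pi_{h,\refe}(a\mid x)}\right),
\]
with the convention $V_{H+1,\beta}^{\pi} \equiv 0$. From \cref{def:qfunc} and conditioning on $\x_{h+1}$, one checks the one-step recursion $Q_{h,\beta}^{\pi}(x,a) = \rstar_h(x,a) + \E_{x'\sim P_h(\cdot\mid x,a)}[V_{h+1,\beta}^{\pi}(x')]$; unrolling this together with the chain-rule identity $\kl{\pi}{\pi_\refe} = \E_{\pi}[\sum_{h}\log\tfrac{\pi_h(\a_h\mid\x_h)}{\pi_{h,\refe}(\a_h\mid\x_h)}]$ (which follows from \cref{eq:kl}) yields $J_\beta(\pi) = \E_{\x_1\sim P_0}[V_{1,\beta}^{\pi}(\x_1)]$.

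Next, since the initial-state distribution $P_0(\cdot\mid\emptyset)$ does not depend on the policy, I would rewrite $J_\beta(\pi) = \E_{\pi'}[V_{1,\beta}^{\pi}(\x_1)]$, where the expectation is over a trajectory generated by $\pi'$, and telescope:
\[
\E_{\pi'}\left[V_{1,\beta}^{\pi}(\x_1)\right] = \sum_{h=1}^H \E_{\pi'}\left[V_{h,\beta}^{\pi}(\x_h) - V_{h+1,\beta}^{\pi}(\x_{h+1})\right].
\]
Using the tower property and the one-step recursion for $Q_{h,\beta}^{\pi}$, one has $\E_{\pi'}[V_{h+1,\beta}^{\pi}(\x_{h+1})] = \E_{\pi'}[\E_{\x_{h+1}\sim P_h(\cdot\mid\x_h,\a_h)}V_{h+1,\beta}^{\pi}(\x_{h+1})] = \E_{\pi'}[Q_{h,\beta}^{\pi}(\x_h,\a_h) - \rstar_h(\x_h,\a_h)]$, so that
\[
J_\beta(\pi) = \sum_{h=1}^H \E_{\pi'}\left[V_{h,\beta}^{\pi}(\x_h) - Q_{h,\beta}^{\pi}(\x_h,\a_h) + \rstar_h(\x_h,\a_h)\right].
\]

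Finally, I would subtract $J_\beta(\pi') = \E_{\pi'}\left[\sum_{h}\left(\rstar_h(\x_h,\a_h) - \beta\log\tfrac{\pi'_h(\a_h\mid\x_h)}{\pi_{h,\refe}(\a_h\mid\x_h)}\right)\right]$; the reward terms cancel term-by-term in $h$, leaving
\[
J_\beta(\pi) - J_\beta(\pi') = \sum_{h=1}^H \E_{\pi'}\left[V_{h,\beta}^{\pi}(\x_h) - Q_{h,\beta}^{\pi}(\x_h,\a_h) + \beta\log\tfrac{\pi'_h(\a_h\mid\x_h)}{\pi_{h,\refe}(\a_h\mid\x_h)}\right].
\]
Because $\a_h\sim\pi'_h(\cdot\mid\x_h)$ conditionally on $\x_h$, the last two terms equal $-\sum_{a}\pi'_h(a\mid\x_h)\left(Q_{h,\beta}^{\pi}(\x_h,a) - \beta\log\tfrac{\pi'_h(a\mid\x_h)}{\pi_{h,\refe}(a\mid\x_h)}\right)$ in expectation, while $\E_{\pi'}[V_{h,\beta}^{\pi}(\x_h)]$ is by definition $\E_{\pi'}[\sum_{a}\pi_h(a\mid\x_h)(Q_{h,\beta}^{\pi}(\x_h,a) - \beta\log\tfrac{\pi_h(a\mid\x_h)}{\pi_{h,\refe}(a\mid\x_h)})]$; substituting gives exactly \cref{eq:performed}. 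The argument is entirely routine; the only point requiring care is bookkeeping the KL log-ratio contributed by $J_\beta(\pi')$ and recognizing that, under $\E_{\pi'}$, it combines with $Q_{h,\beta}^{\pi}(\x_h,\a_h)$ to form the $Q^{\pi}$-value of $\pi'$ at layer $h$ rather than producing spurious cross terms.
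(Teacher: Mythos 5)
Your proof is correct. Every step checks out against the paper's definitions: the one-step recursion $Q_{h,\beta}^{\pi}(x,a) = \rstar_h(x,a) + \E_{x'\sim P_h(\cdot\mid x,a)}[V_{h+1,\beta}^{\pi}(x')]$ is consistent with \cref{def:qfunc} (whose KL sum starts at $\ell=h+1$, so the step-$h$ log-ratio correctly lives in $V_{h,\beta}^{\pi}$ rather than in $Q_{h,\beta}^{\pi}$), the identity $J_\beta(\pi)=\E[V_{1,\beta}^{\pi}(\x_1)]$ follows from the chain rule for the KL in \cref{eq:kl}, and the final bookkeeping of the $\pi'$ log-ratio term is handled correctly.

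Your route is, however, genuinely different from the paper's. The paper does not prove \cref{lem:perform} directly: it first establishes the more general conditional statement \cref{rem:general} (performance difference starting from an arbitrary layer $h$ and state $x$), by a forward induction over $j=h,\dots,H+1$ using hybrid policies $\pi'\circ_j\pi$ that switch from $\pi'$ to $\pi$ at layer $j$, and then obtains \cref{lem:perform} as the corollary $h=1$ after averaging over $\x_1\sim\rho$. You instead prove the unconditional statement directly via the classical value-function telescoping identity $\E_{\pi'}[V_{1,\beta}^{\pi}(\x_1)]=\sum_h\E_{\pi'}[V_{h,\beta}^{\pi}(\x_h)-V_{h+1,\beta}^{\pi}(\x_{h+1})]$. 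For the statement as posed your argument is shorter and arguably cleaner; the trade-off is that the paper actually needs the conditional version \cref{rem:general} elsewhere (e.g., in the proof of \cref{lem:fitval_pre}, where the performance difference is applied conditionally on $(\x_h,\a_h)=(x_h,a_h)$), whereas your derivation as written only yields the marginal version. Your telescoping argument would extend to the conditional setting with only cosmetic changes (start the telescope at layer $h$ conditioned on $\x_h=x$), so this is a difference in generality of what is proved, not a gap in the proof of the stated lemma.
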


\begin{proof}[\pfref{lem:perform}]
  Corollary of \cref{rem:general} with $h=1$, taking expectation over $\x_1\sim\rho$.\loose

\end{proof}

\begin{lemma}[Performance difference lemma (generalized version)]
	\label{rem:general}
For all $h\in[H]$, $x\in \cX$, and $\pi_{h:H}, \pi'_{h:H}\subset \{\pi \colon \cX \rightarrow \Delta(\cA)\}$:
\begin{align}
	& \E_{\pi}\left[\sum_{\ell=h}^H \rstar_\ell(\x_\ell,\a_\ell)-\beta \sum_{\ell=h}^H \log \frac{\pi_\ell(\a_\ell \mid \x_\ell)}{\pi_{\ell,\refe}(\a_\ell \mid \x_\ell)} \mid \x_h = x\right] \nn \\
	& \quad - \E_{\pi'}\left[\sum_{\ell=h}^H \rstar_\ell(\x_\ell,\a_\ell)-\beta \sum_{\ell=h}^H \log \frac{\pi'_\ell(\a_\ell \mid \x_\ell)}{\pi_{\ell,\refe}(\a_\ell \mid \x_\ell)} \mid \x_h = x\right]\nn \\
	& = \sum_{\ell=h}^H \E_{\pi'}\left[\sum_{a\in \cA} \pi_\ell(a\mid \x_\ell)\cdot \left(Q^{\pi}_{\ell,\beta}(\x_\ell,a)- \beta \cdot \log \frac{\pi_\ell(a\mid \x_\ell)}{\pi_{\ell,\refe}(a\mid \x_\ell)} \right) \mid \x_h = x\right]\nn \\
	& \quad  - \sum_{\ell=h}^H \E_{\pi'}\left[\sum_{a\in \cA} \pi'_\ell(a\mid \x_\ell)\cdot \left( Q_{\ell,\beta}^{\pi}(\x_\ell,a) - \beta \cdot \log \frac{\pi'_\ell(a\mid \x_\ell)}{\pi_{\ell,\refe}(a\mid \x_\ell)} \right) \mid \x_h = x\right].\nn
\end{align}
\end{lemma}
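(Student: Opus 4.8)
The plan is to prove this by a telescoping argument over the layers $\ell = h,\dots,H$, the standard route for performance difference lemmas, adapted to the KL-regularized value functions of \cref{def:qfunc}. First I would introduce the regularized value function $V^{\pi}_{\ell,\beta}(x) := \sum_{a\in\cA}\pi_\ell(a\mid x)\bigl(Q^{\pi}_{\ell,\beta}(x,a) - \beta\log\tfrac{\pi_\ell(a\mid x)}{\pi_{\ell,\refe}(a\mid x)}\bigr)$, with the convention $V^{\pi}_{H+1,\beta}\equiv V^{\pi'}_{H+1,\beta}\equiv 0$, and observe directly from \cref{def:qfunc} that $V^{\pi}_{\ell,\beta}(x) = \En_{\pi}\bigl[\sum_{m=\ell}^{H}\rstar_m(\x_m,\a_m) - \beta\sum_{m=\ell}^{H}\log\tfrac{\pi_m(\a_m\mid\x_m)}{\pi_{m,\refe}(\a_m\mid\x_m)}\mid \x_\ell = x\bigr]$ (and similarly with $\pi'$). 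Consequently the left-hand side of the claimed identity is exactly $g_h(x) := V^{\pi}_{h,\beta}(x) - V^{\pi'}_{h,\beta}(x)$, and the right-hand side is $\sum_{\ell=h}^{H}\En_{\pi'}[A_\ell(\x_\ell)\mid \x_h = x]$, where $A_\ell(x)$ is the per-layer bracket, which I would rewrite using the identity above as $A_\ell(x) = V^{\pi}_{\ell,\beta}(x) - \sum_{a\in\cA}\pi'_\ell(a\mid x)\bigl(Q^{\pi}_{\ell,\beta}(x,a) - \beta\log\tfrac{\pi'_\ell(a\mid x)}{\pi_{\ell,\refe}(a\mid x)}\bigr)$.

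Next I would record the one-step recursions. Directly from \cref{def:qfunc}, $Q^{\pi}_{\ell,\beta}(x,a) = \rstar_\ell(x,a) + \En\bigl[V^{\pi}_{\ell+1,\beta}(\x_{\ell+1})\mid \x_\ell = x, \a_\ell = a\bigr]$, and likewise $V^{\pi'}_{\ell,\beta}(x) = \sum_{a\in\cA}\pi'_\ell(a\mid x)\bigl(\rstar_\ell(x,a) - \beta\log\tfrac{\pi'_\ell(a\mid x)}{\pi_{\ell,\refe}(a\mid x)}\bigr) + \En_{\pi'}\bigl[V^{\pi'}_{\ell+1,\beta}(\x_{\ell+1})\mid \x_\ell = x\bigr]$. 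Substituting the $Q^{\pi}$ recursion into the rewritten expression for $A_\ell(x)$, and then using the $V^{\pi'}$ recursion to re-express the resulting $\rstar_\ell$-plus-log terms, yields the key cancellation $A_\ell(x) = g_\ell(x) - \En_{\pi'}\bigl[g_{\ell+1}(\x_{\ell+1})\mid \x_\ell = x\bigr]$. This is the only step involving actual algebra — matching the reward and log-likelihood contributions of $\pi'$ so that they collapse into $V^{\pi'}_{\ell,\beta}(x)$ — but it is entirely routine.

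Finally I would take $\En_{\pi'}[\,\cdot\mid\x_h = x]$ of the identity $A_\ell(\x_\ell) = g_\ell(\x_\ell) - \En_{\pi'}[g_{\ell+1}(\x_{\ell+1})\mid\x_\ell]$, apply the tower rule to get $\En_{\pi'}\bigl[\En_{\pi'}[g_{\ell+1}(\x_{\ell+1})\mid\x_\ell]\mid\x_h = x\bigr] = \En_{\pi'}[g_{\ell+1}(\x_{\ell+1})\mid\x_h = x]$, and sum over $\ell = h,\dots,H$. The sum telescopes to $\En_{\pi'}[g_h(\x_h)\mid\x_h = x] - \En_{\pi'}[g_{H+1}(\x_{H+1})\mid\x_h = x] = g_h(x)$, using $g_{H+1}\equiv 0$ and that conditioning on $\x_h = x$ fixes $g_h(\x_h) = g_h(x)$. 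Comparing with the identification of both sides from the first paragraph completes the proof. There is no genuine obstacle; the only points requiring care are the conditioning convention from the preliminaries and keeping the log-likelihood bookkeeping straight so that the $V^{\pi'}$ recursion absorbs the stray reward and regularizer terms cleanly.
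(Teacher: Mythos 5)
Your proof is correct. It is the classical ``advantage decomposition'' form of the performance difference lemma: define the KL-regularized value $V^{\pi}_{\ell,\beta}$, verify the Bellman recursions $Q^{\pi}_{\ell,\beta}(x,a)=\rstar_\ell(x,a)+\En\brk[\big]{V^{\pi}_{\ell+1,\beta}(\x_{\ell+1})\mid\x_\ell=x,\a_\ell=a}$ and the analogous one for $V^{\pi'}_{\ell,\beta}$, show that the per-layer bracket equals $g_\ell(x)-\En_{\pi'}\brk[\big]{g_{\ell+1}(\x_{\ell+1})\mid\x_\ell=x}$ with $g_\ell=V^{\pi}_{\ell,\beta}-V^{\pi'}_{\ell,\beta}$, and telescope under $\En_{\pi'}[\cdot\mid\x_h=x]$ via the tower rule. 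The paper instead organizes the same telescoping through \emph{hybrid (stitched) policies} $\pi'\circ_j\pi$ that follow $\pi'$ up to layer $j-1$ and $\pi$ thereafter, running an induction over the switching index $j$ and working with the shifted value $\widebar{Q}^{\pi}_{\ell,\beta}(x,a)=Q^{\pi}_{\ell,\beta}(x,a)-\beta\log\frac{\pi_\ell(a\mid x)}{\pi_{\ell,\refe}(a\mid x)}$; at each step of the induction the two hybrid policies differ only in the action taken at layer $j$, which produces exactly your per-layer bracket. The two arguments are equivalent in content; yours is somewhat more compact because the Bellman recursion and tower rule absorb the bookkeeping that the paper handles explicitly through the chain of stitched policies, while the paper's version makes the policy-switching interpretation of each summand more transparent. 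The only points requiring care in your route --- that the next-state distribution in the $Q$-recursion is policy-independent, and the conditioning convention $\x_h=x$ --- are handled correctly.
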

\begin{proof}[\pfref{rem:general}] First, for any $\pi_{1:H}\subset \{\pi \colon \cX \rightarrow \Delta(\cA)\}$, $(x,a)\in \cX \times \cA$, $h\in[H]$, define 
	\begin{gather}
		\arxiv{
		r^{\pi}_h(x,a)  \coloneqq 
		 \rstar_h(x,a) - \beta \log \frac{\pi_h(a \mid x)}{\pi_{h,\refe}(a \mid x)},  \quad \text{and} \quad \widebar{Q}_{h,\beta}^{\pi}(x,a) = \E_{\pi}\left[\sum_{\ell=h}^H r^\pi_\ell(\x_\ell,\a_\ell)\mid  \x_h = x,\a_h =a \right]} 
		 \nn 
		\shortintertext{and note that}
		Q^{\pi}_{h,\beta}(x,a) =  \widebar{Q}_{h,\beta}^\pi(x,a) + \beta \log \frac{\pi_h(a\mid x)}{\pi_{h,\refe}(a\mid x)}. \label{eq:Qfun0}
	\end{gather}
	We need to show that for all $h\in[H]$, $x\in \cX$, and $\pi_{h:H}, \pi'_{h:H}\subset \{\pi \colon \cX \rightarrow \Delta(\cA)\}$:
	\begin{align}
		& \E_{\pi'\circ_h \pi}\left[\sum_{\ell=h}^H r^\pi_\ell(\x_\ell,\a_\ell)\mid \x_h = x\right] - \E_{\pi'}\left[\sum_{\ell=h}^H r^{\pi'}_\ell(\x_\ell,\a_\ell) \mid \x_h = x\right]\nn \\
		& 	 = \sum_{\ell=h}^H \E_{\pi'}\left[\sum_{a\in \cA} \pi_\ell(a\mid \x_\ell)\cdot \left(Q^{\pi}_{\ell,\beta}(\x_\ell,a)- \beta \cdot \log \frac{\pi_\ell(a\mid \x_\ell)}{\pi_{\ell,\refe}(a\mid \x_\ell)} \right) \mid \x_h = x\right]\nn \\
		& \quad  - \sum_{\ell=h}^H \E_{\pi'}\left[\sum_{a\in \cA} \pi'_\ell(a\mid \x_\ell)\cdot \left( Q_{\ell,\beta}^{\pi}(\x_\ell,a) - \beta \cdot \log \frac{\pi'_\ell(a\mid \x_\ell)}{\pi_{\ell,\refe}(a\mid \x_\ell)} \right) \mid \x_h = x\right].\nn
	\end{align}
	Fix $h\in [H]$, $x\in \cX$, and $\pi_{1:H}, \pi'_{1:H}\subset \{\pi \colon \cX \rightarrow \Delta(\cA)\}$.
	We now show via induction over $j=h,\dots,H+1$ that
	\begin{align}
		&\E_{\pi'\circ_h \pi}\left[\sum_{\ell=h}^H r^\pi_\ell(\x_\ell,\a_\ell)\mid \x_h = x\right] - \E_{\pi'}\left[\sum_{\ell=h}^H r^{\pi'}_\ell(\x_\ell,\a_\ell) \mid \x_h = x\right]\nn \\
 & = \sum_{\ell = h}^{j-1} \E_{\pi'}\left[\sum_{a\in \cA} \pi_\ell(a\mid \x_\ell)\cdot \widebar{Q}^{\pi}_{\ell,\beta}(\x_\ell,a) - \sum_{a\in \cA} \pi'_\ell(a\mid \x_\ell)\cdot \widebar{Q}_{\ell,\beta}^{\pi}(\x_\ell,a)\mid \x_h = x\right]\nn \\
		& \quad + \beta \sum_{\ell = h}^{j-1} \E_{\pi'}\left[\log \frac{\pi'_\ell(\a_\ell \mid \x_\ell)}{\pi_\ell(\a_\ell \mid \x_\ell)}\mid \x_h = x\right] \nn \\
		& \quad  + \E_{\pi'\circ_j \pi}\left[\sum_{\ell=j}^H r^{\pi}_\ell(\x_\ell,\a_\ell)\mid \x_h = x\right]- \E_{\pi'}\left[\sum_{\ell=j}^H r^{\pi'}_\ell(\x_\ell,\a_\ell)\mid \x_h = x\right]. \label{eq:toinduct0}
	\end{align}
	The base case $j=h$ is trivial. Now assume that \eqref{eq:toinduct0} holds for $j\in\crl*{h,\ldots,H+1}$; we show\arxiv{ that} it holds for $j+1$:\loose
	\begin{align}
		& \E_{\pi'\circ_j \pi}\left[\sum_{\ell=j}^H r^\pi_\ell(\x_\ell,\a_\ell)\mid \x_h = x\right] - \E_{\pi'}\left[\sum_{\ell=j}^H r^{\pi'}_\ell(\x_\ell,\a_\ell)\mid \x_h = x\right]\nn \\
		& =  \E_{\pi' \circ_j \pi}\left[\sum_{\ell=j}^H r^\pi_\ell(\x_\ell,\a_\ell)\mid \x_h = x\right] - \E_{\pi'\circ_{j+1} \pi}\left[r^{\pi'}_\ell(\x_\ell,\a_\ell)+ \sum_{\ell=j+1}^H r^{\pi}_\ell(\x_\ell,\a_\ell)\mid \x_h = x\right] \nn \\
		& \quad + \E_{\pi'\circ_{j+1} \pi}\left[r^{\pi'}_\ell(\x_\ell,\a_\ell)+ \sum_{\ell=j+1}^H r^{\pi}_\ell(\x_\ell,\a_\ell)\mid \x_h = x\right] -  \E_{\pi'}\left[\sum_{\ell=j}^H r^{\pi'}_\ell(\x_\ell,\a_\ell)\mid \x_h = x\right],\nn \\
		& =  \E_{\pi' \circ_j \pi}\left[\sum_{\ell=j}^H r^\pi_\ell(\x_\ell,\a_\ell)\mid \x_h = x\right] - \E_{\pi'\circ_{j+1} \pi}\left[r^{\pi'}_\ell(\x_\ell,\a_\ell)+ \sum_{\ell=j+1}^H r^{\pi}_\ell(\x_\ell,\a_\ell)\mid \x_h = x\right] \nn \\
		& \quad + \E_{\pi'\circ_j \pi}\left[\sum_{\ell=j+1}^H r^{\pi}_\ell(\x_\ell,\a_\ell)\mid \x_h = x\right] -  \E_{\pi'}\left[\sum_{\ell=j+1}^H r^{\pi'}_\ell(\x_\ell,\a_\ell)\mid \x_h = x\right],\nn \\
		& =  \E_{\pi' \circ_j \pi}\left[\sum_{\ell=j}^H r^\pi_\ell(\x_\ell,\a_\ell)\mid \x_h = x\right] - \E_{\pi'\circ_{j+1} \pi}\left[ \sum_{\ell=j}^H r^{\pi}_\ell(\x_\ell,\a_\ell)\mid \x_h = x\right] \nn \\
		& \quad + \E_{\pi'}\left[r^{\pi}_\ell(\x_\ell,\a_\ell)-r^{\pi'}_\ell(\x_\ell,\a_\ell) \mid \x_h = x\right] \nn\\
		& \quad + \E_{\pi'\circ_{j+1} \pi}\left[\sum_{\ell=j+1}^H r^{\pi}_\ell(\x_\ell,\a_\ell)\mid \x_h = x\right] -  \E_{\pi'}\left[\sum_{\ell=j+1}^H r^{\pi'}_\ell(\x_\ell,\a_\ell)\mid \x_h = x\right],\nn \\
		& =  \E_{\pi' \circ_j \pi}\left[\sum_{a\in \cA} \pi_j(a\mid \x_\ell)\cdot \widebar{Q}^{\pi}_{j,\beta}(\x_j,a) - \sum_{a\in \cA} \pi'_j(a\mid \x_j)\cdot \widebar{Q}_{j,\beta}^{\pi}(\x_j,a)\mid \x_h = x\right] \nn \\
		& \quad + \beta \E_{\pi'}\left[\log \frac{\pi'_j(\a_j \mid \x_j)}{\pi_j(\a_j \mid \x_j)}\mid \x_h = x\right] \nn\\
		& \quad + \E_{\pi'\circ_j \pi}\left[\sum_{\ell=j+1}^H r^{\pi}_\ell(\x_\ell,\a_\ell)\mid \x_h = x\right] -  \E_{\pi'}\left[\sum_{\ell=j+1}^H r^{\pi'}_\ell(\x_\ell,\a_\ell)\mid \x_h = x\right].
	\end{align}
	This shows that \eqref{eq:toinduct0} holds with $j$ replaced by $j+1$ and completes the induction. Instantiating \eqref{eq:toinduct0} with $\ell=H+1$ shows that
	\begin{align}
		& \E_{\pi'\circ_h \pi}\left[\sum_{\ell=h}^H r^\pi_\ell(\x_\ell,\a_\ell)\mid \x_h = x\right] - \E_{\pi'}\left[\sum_{\ell=h}^H r^{\pi'}_\ell(\x_\ell,\a_\ell) \mid \x_h = x\right]\nn\\ & = \sum_{\ell=h}^{H} \E_{\pi'}\left[\sum_{a\in \cA} \pi_\ell(a\mid \x_\ell)\cdot \widebar{Q}^{\pi}_{\ell,\beta}(\x_\ell,a) - \sum_{a\in \cA} \pi'_\ell(a\mid \x_\ell)\cdot \widebar{Q}_{\ell,\beta}^{\pi}(\x_\ell,a)+ \beta \log \frac{\pi'_\ell(\a_\ell \mid \x_\ell)}{\pi_\ell(\a_\ell \mid \x_\ell)}\mid \x_h = x\right].\nn 
	\end{align}
	Combining this with \eqref{eq:Qfun0} implies the desired result. 
	\end{proof}

	\begin{lemma}
		\label{lem:multiset}
		Let $\cC\subset \cX \times \cA$ be a multiset of the form \begin{align}\cC = \bigcup_{i\in [N]} \{(x_i,a_i), (x_i,\afrak)\}, \label{eq:multiset}
		\end{align} for $N\geq 1$. 
		Then, for any non-negative $f\colon \cX\times \cA \rightarrow \reals$, we have
		\begin{align}
		\sum_{(x ,a )\in  \cC} f(x ,a ) + \sum_{(x ,a )\in  \cC} f(x ,\mathfrak{a}) \leq 3  \sum_{(x ,a )\in  \cC} f(x ,a ). \label{eq:multiset_lemma}
		\end{align}
	\end{lemma}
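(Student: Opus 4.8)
The plan is to unfold both sides of the claimed inequality \eqref{eq:multiset_lemma} using the explicit decomposition \eqref{eq:multiset} of $\cC$ as a multiset, thereby reducing the statement to an elementary arithmetic inequality indexed by $[N]$. The whole proof amounts to careful bookkeeping of multiplicities plus one application of non-negativity of $f$.

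First I would record the two identities obtained by summing over the $2N$ elements of $\cC$, counted with multiplicity. Since $\cC = \bigcup_{i\in[N]}\{(x_i,a_i),(x_i,\mathfrak{a})\}$, we have
\[
\sum_{(x,a)\in\cC} f(x,a) = \sum_{i=1}^N \bigl(f(x_i,a_i) + f(x_i,\mathfrak{a})\bigr),
\]
and, because the function $(x,a)\mapsto f(x,\mathfrak{a})$ ignores the second coordinate, each index $i$ contributes the term $f(x_i,\mathfrak{a})$ twice (once for the copy $(x_i,a_i)$ and once for the copy $(x_i,\mathfrak{a})$), so
\[
\sum_{(x,a)\in\cC} f(x,\mathfrak{a}) = 2\sum_{i=1}^N f(x_i,\mathfrak{a}).
\]

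The second (and only nontrivial) step is: by non-negativity of $f$, $f(x_i,\mathfrak{a}) \le f(x_i,a_i) + f(x_i,\mathfrak{a})$ for every $i$, whence
\[
\sum_{(x,a)\in\cC} f(x,\mathfrak{a}) = 2\sum_{i=1}^N f(x_i,\mathfrak{a}) \le 2\sum_{i=1}^N \bigl(f(x_i,a_i)+f(x_i,\mathfrak{a})\bigr) = 2\sum_{(x,a)\in\cC} f(x,a).
\]
Adding $\sum_{(x,a)\in\cC} f(x,a)$ to both sides of this inequality gives exactly \eqref{eq:multiset_lemma}.

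I do not expect a genuine obstacle here: the statement is purely combinatorial and the constant $3$ is not tight—the argument in fact delivers the slightly stronger bound $\sum_{(x,a)\in\cC}f(x,a) + \sum_{(x,a)\in\cC}f(x,\mathfrak{a}) \le \sum_{(x,a)\in\cC}f(x,a) + 2\sum_{(x,a)\in\cC}f(x,a)$, and could be sharpened further given lower bounds on the $f(x_i,a_i)$ terms. The one place warranting a moment of care is the multiplicity count in the second identity above, i.e.\ making explicit that $f(x_i,\mathfrak{a})$ appears twice in $\sum_{(x,a)\in\cC}f(x,\mathfrak{a})$ for each index $i$; once that is stated cleanly, the inequality is immediate.
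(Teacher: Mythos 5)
Your proof is correct and takes essentially the same route as the paper: both arguments establish the intermediate bound $\sum_{(x,a)\in\cC}f(x,\mathfrak{a})\leq 2\sum_{(x,a)\in\cC}f(x,a)$ (you via the explicit index decomposition over $i\in[N]$, the paper via splitting the sum over $a=\mathfrak{a}$ versus $a\neq\mathfrak{a}$) and then add $\sum_{(x,a)\in\cC}f(x,a)$ to both sides. Your multiplicity bookkeeping is clean and the argument is complete.
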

\begin{proof}[\pfref{lem:multiset}]
Because $\cC$ is a multiset satisfying \eqref{eq:multiset} and $f$ is non-negative, we have
\begin{align}
\sum_{(x ,a )\in  \cC} f(x ,a ) &  \geq \sum_{(x ,a )\in  \cC: a \neq \mathfrak{a}} f(x ,a ) + \sum_{(x ,a )\in  \cC: a \neq \mathfrak{a}} f(x ,\mathfrak{a}), \nonumber\\
& \geq \sum_{(x ,a )\in  \cC: a \neq \mathfrak{a}} f(x ,\mathfrak{a}).
\end{align} 
On the other hand, we also have that 
\begin{align}
\sum_{(x ,a )\in  \cC} f(x ,a ) &  = \sum_{(x ,a )\in  \cC: a \neq \mathfrak{a}} f(x ,a ) + \sum_{(x ,a )\in  \cC: a = \mathfrak{a}} f(x ,\mathfrak{a}), \nonumber\\
& \geq \sum_{(x ,a )\in  \cC: a = \mathfrak{a}} f(x ,\mathfrak{a}).
\end{align}
Combining (1) and (2) implies that 
\begin{align}
\sum_{(x ,a )\in  \cC} f(x ,\mathfrak{a})&=\sum_{(x ,a )\in  \cC: a = \mathfrak{a}} f(x ,\mathfrak{a}) + \sum_{(x ,a )\in  \cC: a \neq  \mathfrak{a}} f(x ,\mathfrak{a}),\nonumber \\
& \leq 2\sum_{(x ,a )\in  \cC} f(x ,a ),\nonumber
\end{align}
which implies \eqref{eq:multiset_lemma} after adding $\sum_{(x ,a )\in  \cC} f(x ,a )$ on both sides.
\end{proof}

\end{document}